\newcommand*{\bigO}{\scalebox{2}{\ensuremath{\mathcal{O}}}}
\definecolor{clemson-orange}{RGB}{234,106,32}
\definecolor{chicago-maroon}{RGB}{128,0,0}
\definecolor{cincinnati-red}{RGB}{190,0,0}
\definecolor{soft-cyan}{RGB}{68,85,90}
\definecolor{firebrick}{RGB}{178,34,34}
\definecolor{crimson}{RGB}{220,20,60}
\definecolor{jaam}{rgb}{0.45,0.0,0.45}
\declaretheoremstyle[spaceabove=10pt,spacebelow=10pt]{mystyle}
\theoremstyle{mystyle}
\newtheorem{theorem}{Theorem}[section]
\newtheorem{lemma}[theorem]{Lemma}
\newtheorem{corollary}[theorem]{Corollary}
\newtheorem{prop}[theorem]{Proposition}
\newtheorem{claim}{Claim}
\newtheorem{definition}{Definition}
\newtheorem*{remark}{Remark}
\newtheorem{example}[theorem]{Example}
\newif\ifsolutions \solutionstrue
\newcommand*{\cventry}[7][.25em]{}
\DeclareMathOperator*{\Tr}{Tr}
\DeclareMathOperator*{\argmin}{argmin}
\DeclareMathOperator*{\verts}{vert}
\DeclareMathOperator*{\relu}{{\textrm{ReLU}}}
\newcommand{\pwl}{\textrm{PWL}}
\newcommand{\poly}{\textrm{poly}}
\newcommand{\comp}{\textrm{comp}}
\newcommand{\abs}[2][]{#1\lvert#2 #1\rvert}
\newcommand{\norm}[2][]{#1\lVert #2 #1\rVert}
\def\ve#1{\mathchoice{\mbox{\boldmath$\displaystyle\bf#1$}}
{\mbox{\boldmath$\textstyle\bf#1$}}
{\mbox{\boldmath$\scriptstyle\bf#1$}}
{\mbox{\boldmath$\scriptscriptstyle\bf#1$}}}
\newcommand{\x}{{\ve x}}
\newcommand{\y}{{\ve y}}
\newcommand{\z}{{\ve z}}
\renewcommand{\v}{{\ve v}}
\newcommand{\g}{{\ve g}}
\newcommand{\e}{{\ve e}}
\renewcommand{\u}{{\ve u}}
\renewcommand{\a}{{\ve a}}
\renewcommand{\c}{{\ve c}}
\newcommand{\0}{{\ve 0}}
\newcommand{\m}{{\ve m}}
\newcommand{\w}{{\ve w}}
\renewcommand{\b}{{\ve b}}
\newcommand{\rr}{{\ve r}}
\newcommand{\A}{\textrm{A}}
\newcommand{\B}{\textrm{B}}
\newcommand{\W}{\textrm{W}}
\newcommand{\s}{\textrm{s}}
\newcommand{\E}{\mathbb{E}}
\newcommand{\F}{\mathcal{F}}
\newcommand{\bb}{\mathbb}
\newcommand{\R}{\bb R}
\newcommand{\Z}{{\bb Z}}
\newcommand{\N}{{\bb N}}
\renewcommand{\P}{\mathcal{P}}
\begin{document}
\frontmatter 



\pagestyle{headings}


\renewcommand{\thechapter}{\arabic{chapter}}
\renewcommand{\mdtChapapp}{}{}
\renewcommand{\chapteralign}{\raggedleft}
\renewcommand{\chapterfont}{\color{black}\huge}
\renewcommand{\chapterinbetweenskip}{}{\vspace*{0pt}}
\renewcommand{\chapterprefixfont}{\fontsize{56pt}{0pt}\selectfont\color{mdtRed}}

\begin{titlepage}

\begin{center}


{\huge \bfseries A STUDY \\ OF \\ THE MATHEMATICS OF DEEP LEARNING   \par}\vspace{0.4cm} 
\HRule \\[4cm] 
 
 
\begin{minipage}[t]{0.4\textwidth}
\begin{flushleft} \large
\emph{Author: }\\
\href{https://sites.google.com/view/anirbit/home}{Anirbit Mukherjee} 

\end{flushleft}
\end{minipage}
\begin{minipage}[t]{0.4\textwidth}
\begin{flushright} \large
\emph{Supervisor:} \\
\href{http://www.ams.jhu.edu/~abasu9/}{Amitabh Basu} 
\end{flushright}
\end{minipage}\\[3cm]
 
\vfill

\large \textit{A dissertation submitted to Johns Hopkins University in conformity with the requirements for the degree of Doctor of Philosophy.}\\[0.3cm] 
\textit{Baltimore, Maryland}\\[0.5cm]
July, $2020$ \\[1cm] 
 
\vfill

{ \copyright $\text{ }2020$ Anirbit Mukherjee}\\[1cm] 
All Rights Reserved

\vfill
\end{center}
\end{titlepage}


\vspace*{0.30\textheight}

\begin{center}
\itshape {\bf \color{jaam} \fontsize{25}{20} This thesis is dedicated to my mother Dr. Suranjana Sur (``Mame").} 
~\\ ~\\ ~\\
{\color{violet} Long before I had started learning arithmetic in school, my mother got me a geometry box and taught me how to construct various angles using a compass. Years before I had started formally studying any of the natural sciences, she created artificial clouds inside the home and taught me about condensation and designed experiments with plants to teach me how they do respiration. This thesis began with the scientific knowledge that my mother imparted regularly to me from right since I was a kid.}
\end{center}





\chapter{Thesis Abstract}

"Deep Learning"/"Deep Neural Nets" is a technological marvel that is now increasingly deployed at the cutting-edge of artificial intelligence tasks. This ongoing revolution can be said to have been ignited by the iconic 2012 paper from the University of Toronto titled ``ImageNet Classification with Deep Convolutional Neural Networks'' by Alex Krizhevsky, Ilya Sutskever and Geoffrey E. Hinton. This paper showed that deep nets can be used to classify images into meaningful categories with almost human-like accuracies! As of $2020$ this approach continues to produce unprecedented performance for an ever widening variety of novel purposes ranging from playing chess to self-driving cars to  experimental astrophysics and high-energy physics. But this new found astonishing success of deep neural nets in the last few years has been hinged on an enormous amount of heuristics and it has turned out to be extremely challenging to be mathematically rigorously explainable. In this thesis we take several steps towards building strong theoretical foundations for these new paradigms of deep-learning. 

Our proofs here can be broadly grouped into three categories,

\begin{itemize}
    \item {\bf Understanding Neural Function Spaces} We show new circuit complexity theorems for deep neural functions over real and Boolean inputs and prove classification theorems about these function spaces which in turn lead to exact algorithms for empirical risk minimization for depth $2$ $\relu$ nets. 
    
    We also motivate a measure of complexity of neural functions and leverage techniques from polytope geometry to constructively establish the existence of high-complexity neural functions. 
    
    \item {\bf Understanding Deep Learning Algorithms} 
    We give fast iterative stochastic algorithms which can learn near optimal approximations of the true parameters of a $\relu$ gate in the realizable setting. (There are improved versions of this result available in our papers \cite{mukherjee2020guarantees,mukherjee2020study} which are not included in the thesis.) 
    
    We also establish the first ever (a) mathematical control on the behaviour of noisy gradient descent on a $\relu$ gate and (b) proofs of convergence of stochastic and deterministic versions of the widely used adaptive gradient deep-learning algorithms, RMSProp and ADAM. This study also includes a first-of-its-kind detailed empirical study of the hyper-parameter values and neural net architectures when these modern algorithms have a significant advantage over classical acceleration based methods.
    
    \item {\bf Understanding The Risk Of (Stochastic) Neural Nets} We push forward the emergent technology of PAC-Bayesian bounds for  the risk of stochastic neural nets to get bounds which are not only empirically smaller than contemporary theories but also demonstrate smaller rates of growth w.r.t increase in width and depth of the net in experimental tests. These critically depend on our novel theorems proving noise resilience of nets. 
    
    This work also includes an experimental investigation of the geometric properties of the path in weight space that is traced out by the net during the training. This leads us to uncover certain seemingly uniform and surprising geometric properties of this process which can potentially be leveraged into better bounds in future.
\end{itemize}

\newpage 
\thispagestyle{plain} 
\vspace*{0.40\textheight}

\begin{center}
\enquote{\itshape \color{violet} \fontsize{20}{15}Study hard what interests you the most in the most undisciplined, irreverent, and original manner possible.}
\end{center}

\bigbreak

\begin{center}
{\bf - Richard Feynman}
\end{center}

\tableofcontents 

\clearpage

\thispagestyle{plain} 
\vspace*{0.35\textheight}

\begin{center}
{\bf \color{violet} \fontsize{20}{15}Special Thanks!}
\end{center}

\bigbreak

\begin{center}
{\rm A lot of debt is owed to the following people who have at various points given me critical help with setting up the LaTeX style files and subsequent editing, my sister Shubhalaxmi Mukherjee (IISER, Pune), Zachary Lubberts (J.H.U) and Monosij Mondal (UPenn). }
\end{center}

\listoftables 

\listoffigures 


\chapter{\vspace{10pt} An Informal Introduction to Deep Learning}\label{chapinf}


We who speak Bengali owe an infinite debt to the legendary Satyajit Ray and it goes well beyond him having given us the timeless movies that he created. Growing up in a typical Bengali household full of books (on almost every conceivable subject!), maybe for many of us our first idea of ``artifical intelligence" can be traced to the friendly humanoids like Robu and Bidhushekhar which were created by Professor Shonku in the famous series of stories penned by Satyajit Ray. In retrospect it was indeed lucky that Professor Shonku happened in our lives much before we encountered the more ominous view of robots as was made famous by Isaac Asimov's ``Three Laws of Robotics". Ofcourse even today in $2020$ we are still nowhere close to what was imagined in Satyajit Ray's fiction but something has dramatically changed in the last $5{-}6$ years. In this chapter we will try to get a feel of this ongoing revolution while keeping the technical aspects low enough to be accessible within the scope of high-school science. We should note at the very outset that opinions remains widely divided about how we should perceive this recent upsurge and much of what we present here is obviously heavily coloured by the technical parts of the thesis that will follow this chapter. 

Maybe many of the readers have probably heard of the recent spectacular successes of ``machines" called AlphaZero in being able to play games like chess at unprecedented levels of proficiency. These successes have revealed structures and possible strategies about the game of chess which had never been seen before! But these forms of artificial intelligence (unfortunately!) do not look like Professor Shonku's robots. Turns out that anthropomorphism isnt of any particular advantage if we limit our notions of intelligence to such abilities as required to play difficult strategy games like chess or poker or being able to create new paintings which mimic the style of Vincent van Gogh or being able to fluently translate between multiple languages. In the last couple of years suddenly these have become possible to do in an automated way because of our new found ability to computationally leverage the power of what are called ``Deep Neural Networks" or DNNs or ``neural nets" or ``neural networks" or sometimes just ``nets". The myriad of ways in which we can ``train" a DNN to perform human-like tasks are collectively called ``Deep Learning" 

It can be somewhat tedious to install on one's home computer the (freely available) softwares like TensorFlow or PyTorch and get a hands-on feel for the advanced applications of neural nets that were mentioned above. The developers of these softwares continue to make progress to make the installation processes increasingly easy so that more people can put this evolving technology to use. Such efforts have led to the creation of platforms like ``Google Colab" where one can write codes to run small neural nets without having to install the full softwares. For immediate motivation let's see this incredibly beautiful (and mind-bogglingly surprising!) demonstration that is easily available on this website, \url{https://thispersondoesnotexist.com/}. Every time we refresh this page we will be shown a seemingly human photograph (which sometimes might have minor defects), just that this photograph is completely artificially generated by a neural network! In a sense this person is purely the net's imagination and he/she does not actually exist! So how did the net manage to ``draw" such realistic human faces? This mechanism is still highly ill-understood and our best efforts at making sense of this involves the branch of mathematics called ``Optimal Transport". This is the same field of research for which Cedric Villani got the Fields Medal in $2010$. This esoteric mathematical idea of optimal transport has mysterious ramifications in the world of neural nets and we have possibly only barely scratched the surface of this interface. 

Though applications like the one described above about artificial generation of human-like faces are the cutting-edge of applied research in neural nets, these are not the commonly used tests for theory. There are more standardized artificial intelligence tasks on which we have decades of benchmarks of performance and new techniques are often compared on those. One such task is of classifying images into meaningful categories when the neural net (or in general any candidate ``machine") is input a high-dimensional vector representing the image. For comparison recall that its at about $9$ months of age that a human baby first starts being able to match daily life objects to their photographs. But the nets are no match for babies! Babies can recognize a banana the next time even after having seen just a single banana once. Unfortunately our best nets still need to see a lot of bananas before learning to categorize it correctly when shown a new one! This human-machine gap is deeply mysterious and an emerging direction of research. 

There are two common datasets of images which are used for this test namely the ``CIFAR"  (Canadian Institute For Advanced Research) database and the MNIST (``Modified National Institute of Standards and Technology") database.  CIFAR dataset was created in $2009$ by Alex Krizhevsky, Vinod Nair, and Geoffrey Hinton. It contains millions of low resolution images grouped into thousands of categories like birds, aeroplanes, cars etc. The task of the trained machine is to correctly predict the category when a randomly picked image from this set is input to the machine. In the figure below we have shown a sample of the MNIST dataset which contains images of hand-written digits from $0$ to $9$ and the task of the trained machine is to recognize the number correctly when shown a randomly picked handwritten digit from the set. This was introduced and explored in the seminal paper from $1998$ called, ``Gradient-Based Learning Applied to Document Recognition" written by some of the biggest stalwarts in the field, Y. LeCun, L. Bottou, Y. Bengio and P. Haffner. And even today we continue to use MNIST as a baseline for testing theory about classification tasks.

\begin{figure}[h] 
\centering
\includegraphics[scale=0.30]{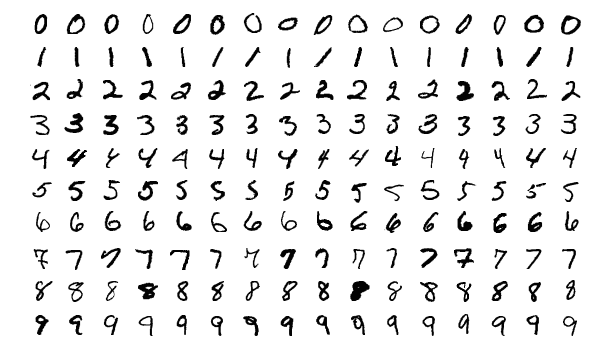}
\caption*{A small part of the famous MNIST database}
\end{figure}

Its worth pointing out that CIFAR is widely considered to be a much more difficult test than MNIST. There are fundamental questions about being able to mathematically justify this difference in difficulty and theory of this kind is still not fully developed. This brings us to a deep mystery that we hardly understand as to when is an artificial intelligence task easy and when is it difficult! Research is only beginning in this direction. 

Now that we have seen some cutting-edge applications and methods of testing artificial intelligence let us focus on understanding the specific implementation of DNNs that we are interested in. 

 DNNs have existed in some form or the other since the $1958$ work by a psychologist at the Cornell University named Frank Rosenblatt, who then called his idea ``Perceptron". Many might say that our current way of thinking about neural nets comes from the famous $1986$ paper titled {\tt ``Learning representations by back-propagating errors''} by David Rumelhart, Ronald Williams and Geoffrey Hinton. Geoffrey E. Hinton is often credited to be the pioneer of deep learning and interestingly Hinton too did his undergraduate studies in psychology! Its worth noting that despite the ideas having been there since decades, till very recently we could never actually get nets to do anything surprising in practice. This so-called ``A.I winter" was finally ended in part by the recent dramatic developments in computer hardware. The ongoing artificial intelligence revolution can be said to have been ignited by the iconic $2012$ paper from University of Toronto titled {\tt ``ImageNet Classification with Deep Convolutional Neural Networks''} by Alex Krizhevsky, Ilya Sutskever and Geoffrey E. Hinton. This showed that deep nets can be used to classify images into meaningful categories with almost human-like accuracies! Now lets try to understand what is the precise mathematical description of a DNN! 
 
 DNNs are what could be called ``mathematical circuits". These can be thought of as a certain peculiar class of functions which are defined via diagrams which look like circuits. In school we get very familiar with the physics of electrical circuits - which carry electrical potential. Mathematical circuits are similar but instead their imagined wires carry algebraic instructions to multiply or add numbers to the input to the wire. We know that electrical circuits can be augmented to do more useful things by embedding inside them ``non-linear" components like resistors, capacitors and inductors. There are called non-linear because the voltage drop across them is not a linear function of the current passing through them. Similarly these DNNs have embedded inside them gates each of which is designed to implement a certain ``activation function" which is typically a non-linear function mapping the real line to itself. The following diagram represents an example of such a single gate and is thus one of the most elementary possible examples of a neural net. 

\begin{figure}[htbp!]
\centering
\includegraphics[scale=0.20]{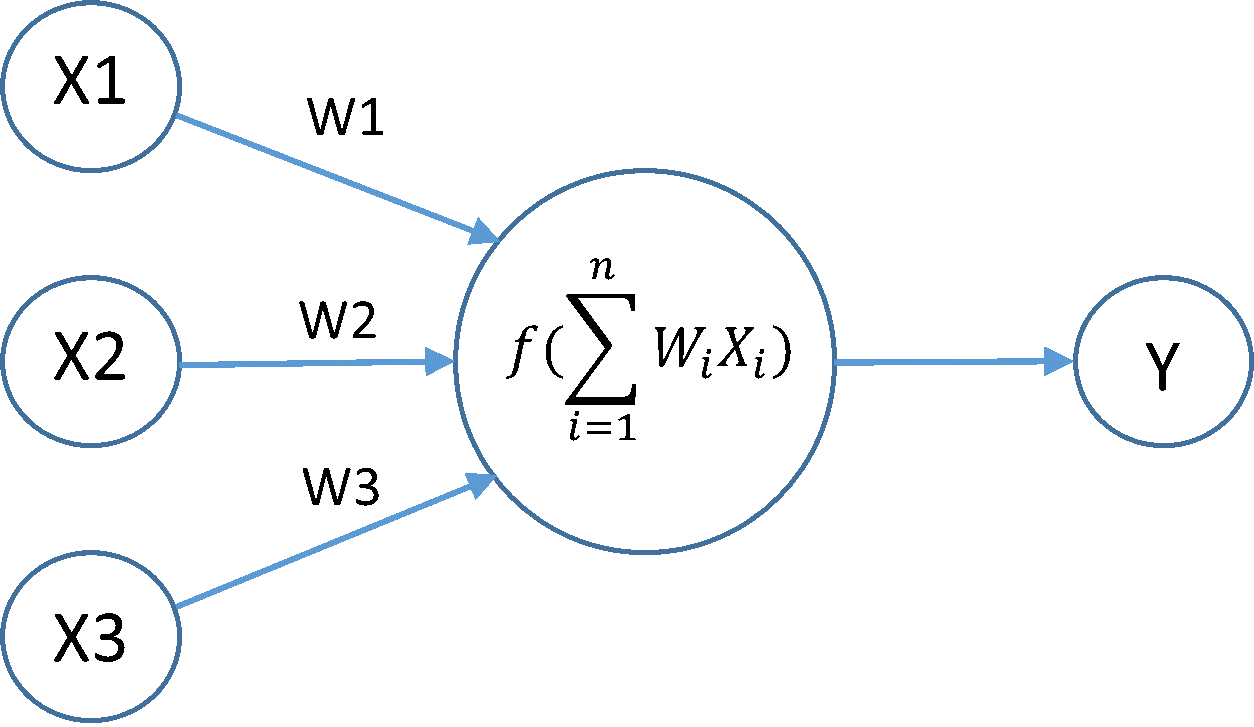}
\end{figure}

The above neural gate will be said to use $f : \R \rightarrow \R$ as the activation function, to create a map/function which takes as input any $3-$dimensional vector $(x_1,x_2,x_3)$ and gives as output the real number, $f(w_1x_1+w_2x_2+w_3x_3)$. We think of the activation function at the gate  $f$, to be getting as input the linear sum, $\sum_{i=1}^3 w_i x_i$. These three real parameters above, $w_1,w_2$ and $w_3$ are called the `weights'. Usually there are many wires coming out of the gate  (instead of the single $Y$ in the above) and in that case the gate is defined to pass on the same value to all of them.

As of today almost all implementations of DNNs use the ``Rectified Linear Unit (ReLU)''

\begin{align*}
\text{ReLU} : \R &\rightarrow \R \\
x &\mapsto \max \{0,x\}
\end{align*}

To develop more intuition lets use the building blocks above to construct a net with a few more gates  which actually computes a familiar useful function. 

\begin{figure}[h]
\centering
\includegraphics[scale=0.68]{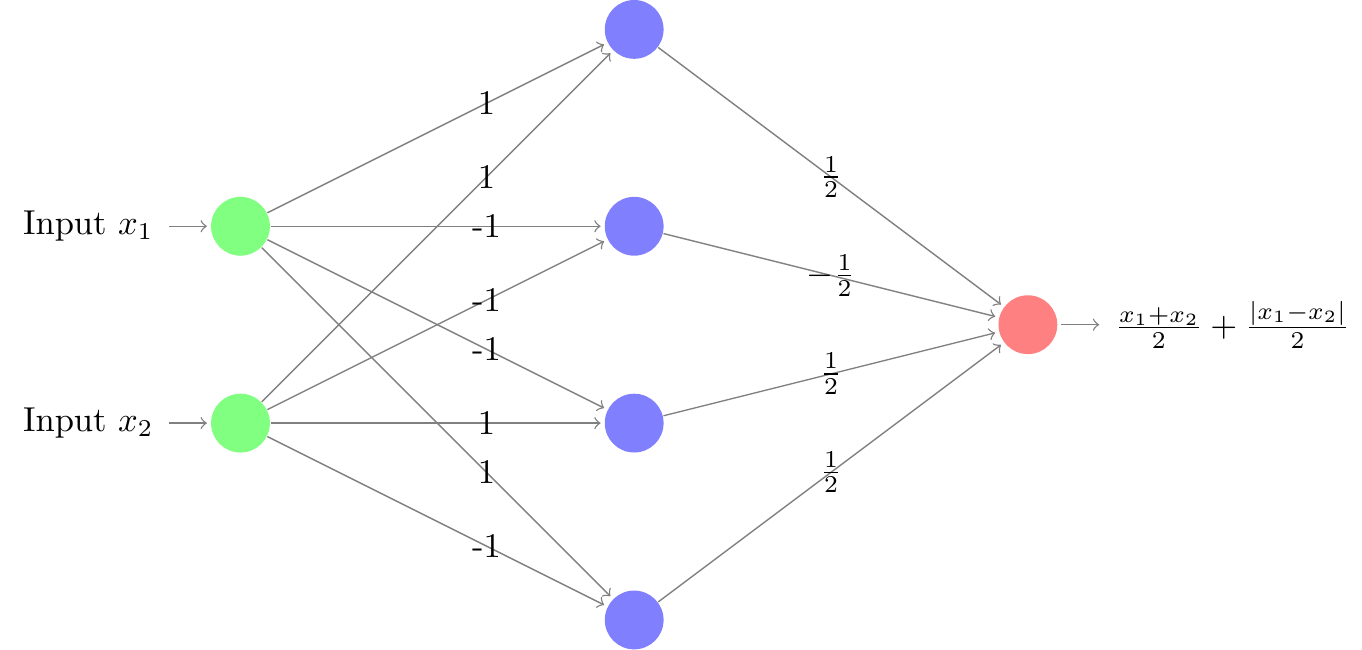}
\end{figure}

In the above we see an example of a ``1-DNN'' i.e. a DNN with one layer of gates indicated in blue. The above neural network would be said to be of size $4$ since it has $4$ activation gates. Lets assume that the activation function at these blue gates is the ReLU function defined above. Then we claim that the above circuit is computing the $\R^2 \rightarrow \R$ function given as $(x_1,x_2) \mapsto \max \{x_1,x_2\}$.  But how do we convince ourselves that this is indeed what is happening? We can start with realizing that the top most blue activation gate is getting as input the number $x_1 + x_2$ and this can be inferred from the weights on its two incoming edges. (recall how the single gate was defined to operate in the diagram on the previous page) Further one can read off from the weight on the outgoing edge of this top most ReLU gate that it is passing on to the red output gate the number $\frac{1}{2}\max \{0, x_1 + x_2 \}$. Thus if we carefully follow the computations happening on each edge and gate then we can conclude that the mathematical circuit/neural net above is indeed computing the maximum of its two inputs. Let's see a specific example for check : say $x_1 = 2$ and $x_2 = 5$. Then the blue ReLU gates are getting $7,-7,3$ and $-3$ as inputs respectively, from the top most gate to the bottom most. These gates are then passing on to the red output gate the numbers, $\frac{7}{2}, -\frac{1}{2} \max \{0,-7\} = 0, \frac{1}{2} \max \{0,3\} = \frac{3}{2}$ and $\frac{1}{2}\max \{0,-3\}=0$. Finally the red output gate is adding these up to give as output $\frac{7}{2} + \frac{3}{2} = 5$ which is indeed $\max\{2,5\}$. 

Later in the thesis in Chapter \ref{chapfunc} we will prove that to compute the maximum of $n$ numbers atmost $\log(n)$ layers of activation are sufficient. In that same chapter we will study another useful class of neural functions which have been very important in theory building. For some real numbers $w >0$ and $a >0$ consider the function, $f(x) = \max \Big \{ 0, \frac{1}{w} - \frac{1}{w^2}\times \vert x - a\vert \Big \}$. Its easy to see that this is zero everywhere except on the interval $[a-w,a+w]$ where it rises up as a triangle peaked at $x =a$. Now that we have seen the max function example above, one can try to solve the fun puzzle of writing down a neural net with a single layer of activations which can represent this ``triangle wave" function. With some more tricks we will see in  Chapter \ref{chapfunc} that one can try to create nets which will represent a wave form with multiple triangles. Interestingly its still unclear as to what is the appropriate analogue of these waves in high-dimensions! 

These nets which compute the maximum of their input numbers and the triangle waves above often form the building blocks of how we think about the more complicated functions that the nets can compute. In general questions about the representation power of a specific circuit/network design can be extremely difficult to answer and in some cases the answers have required the use of very sophisticated mathematics as we will see in Chapter \ref{chapfunc}. At the core of trying to explain cutting-edge mind-boggling experiments cited earlier like \url{https://thispersondoesnotexist.com/}, there lies in effect more advanced forms these kinds of questions about the function space of nets. 

To see more advanced ideas we need to think more generally in terms of diagrams or ``architectures" : an example of which is given below. (For readers familiar with graph theory one can imagine the underlying diagram to be that of a directed acyclic graph where all edges are pointing to the right.)  

\begin{figure}[h] 
\centering
\includegraphics[scale=0.25]{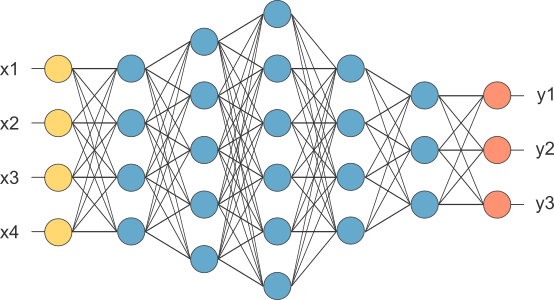}
\end{figure}

Unlike the previous two examples, in the above circuit no ``weights'' have been assigned to the edges of the above graph. So one should think of this diagram as representing the entire set of all the $\R^4 \rightarrow \R^3$ functions which can be computed by the above architecture for a *fixed* choice of ``activation functions'' (like, ReLU as defined above) at each of the {\color{blue}blue} nodes and for all possible values of weights/real-numbers that can be assigned to the edges. The $4$ yellow nodes are where a $4-$ dimensional input vector will go in as input and the $3$ orange nodes are where the $3-$dimensional output vector will come out. Unlike the previous diagrams where every edge carried a single real number, in general every edge can be assigned two real numbers/weights/parameters say $(a,b)$. It is to be understood as specifying that when that edge gets a real number say $x$ as input on its left end then it will give the number $ax +b$ as the output on its right end. Thus the total number of parameters specifying a neural function can be at most twice the number of edges. Some of the largest nets in operation today (called ``AmoebaNet-D") have $600$ million parameters. For comparison recall that the human brain has about $60{-}80$ billion neurons and each of them have about $10^4$ synaptic connections to other neurons. This might motivate one to say that these nets which hope to model intelligence are still quite small in comparison to the human brain! 

The above diagram would be said to represent a class of neural functions with $5$ ``layers" of activations, the blue nodes. We recall that the function we had seen earlier, $f(x_1,x) = \max \{x_1,x_2 \}$ was such that it could be represented using just $1$ layer of activation. In this context it is worth pointing out that we still don't know if say the maximum of $5$ numbers can be computed using $2$ layers of gates or does it necessarily need $3$ such layers of gates!

Now that we have started to think in terms of architectures we end this chapter by pointing out that a very crucial unresolved issue here is to be able to understand, that of all the functions that can be well {\it approximated} by a chosen architecture, how many of them show rapid oscillations - like imagine the  triangle-wave that we saw earlier but with many (but finite) number of triangles. 

This was just the tip of the iceberg and there is a lot more to this story. Lets get a quick glimpse of some of that! In deep-learning we will often assume that the actual artificial intelligence task that one desires to accomplish can be reformulated as trying to minimize some real valued function which is often called the ``loss function".  Our ever increasing experience is that with enough ingenuity one can often write the correct loss function - which will capture the original question as a function which maps the space of functions of a neural architecture (and available data) to non-negative real numbers. And as one might expect we try to minimize the loss function over the space of weights of the net (and hence over the space of functions represented by the given architecture) by approximately moving along the local gradients of the loss function. Thus it is immensely critical that we choose the right architecture - and this is currently almost a form of art! Research is only beginning in this direction of finding systematic methods for making a good choice of architecture. Even after an architecture has been chosen we are faced with the massive question of actually doing this search through its space of functions/weights of the net to find the minimum of the loss.

In the description above we have so far hidden an immense complication which we now necessarily need to confront - that in actual practice information about this loss function is often only partially known! In general this is a very complicated question about searching for an optimal function in a function space while being guided only by a crude estimate of the true optimality criteria. This brings us to the vast field of ``stochastic optimization" - and we will see many provable avatars of this in this thesis. 

We hope that the appetite of the reader has been adequately whetted and at this point they might want to read some of the recently released books on deep-learning like these three freely available beautiful texts, which give a magnificent overview of this exciting new subject  \url{https://www.deeplearningbook.org/}, \url{d2l.ai} and \url{https://mjt.cs.illinois.edu/dlt/} .

\mainmatter 
\pagestyle{thesis} 


\chapter{\vspace{10pt} A Summary of the Results in This Thesis}\label{chapsum} 


Deep learning has brought about a paradigm shift in our quest for general artificial intelligence ~\citep{lecun2015deep}. Powered by concurrent technological advances neural nets have in recent times 
beaten all previous benchmarks in playing hard strategy games like chess and Go, \citep{silver2017mastering, silver2018general}  and have also radically pushed forward the technology towards self-driving cars \citep{fridman2017autonomous}. But on the other hand the methods employed to make deep learning practical remain highly mysterious and challenging to prove guarantees about. During my PhD. I have been extremely passionate about figuring out mathematically rigorous ways to understand deep-learning. We begin to give a summary of the results obtained by first setting up the mathematical notation needed to talk about nets.

\section{Defining Deep Neural Nets} 

The crucial component that goes into defining a neural net is the ``activation function", often denoted as $\sigma$. Historically the $\sigma$ that was in vogue at the beginning of the subject was the ``sigmoid function", $\R \ni x \mapsto \sigma(x) = \frac{1}{1+e^{-\lambda x}}$ for some $\lambda >0$.  But for almost all applications of neural nets today it seems that the most widely used activation function is the ``Rectified Linear Unit (ReLU)''
\begin{align*}
\text{ReLU} : \R &\rightarrow \R \\
x &\mapsto \max \{0,x\}
\end{align*}

In standard practice the notion of $\relu$ is overloaded to denote the following function operating entrywise, 
$\relu : \R^n \ni \x \mapsto  (\max\{0,x_1\}, \max\{0,x_2\}, \ldots, \max\{0,x_n\}) \in \R^n$

\begin{definition}\label{def:relu-dnn}[{\bf ReLU DNNs}] Given $k, w_0, w_1, w_2, \ldots, w_k, w_{k+1} \in \N$, one defines a {\it depth $k+1$} ``ReLU Deep Neural Net (DNN)" as the following function, 

\begin{equation}\label{eq:DNN-def}
\R ^{w_0} \ni \x \mapsto  f(\x) = \A_{k+1}\circ\relu \circ \A_k\circ \cdots \circ \A_2 \circ\relu \circ \A_1 \in \R^{\w_{k+1}}
\end{equation}

where $\A_i : \R^{w_{i-1}} \to \R^{w_i}$ for $i=1, \ldots, k+1$ is a set of $k+1$ affine transformations.  The positive integers $w_1,\ldots,w_k$ are said to specify the {\it widths} of the {\it hidden layers} or {\it layers of activation}. The number $\max\{w_1, \ldots, w_k\}$ is called the {\it width} of this ReLU DNN. The {\em size} of the ReLU DNN is defined to be the number of univariate activation gates used and that can be easily seen to be $w_1 + w_2 + \ldots + w_k$. 

Such a ReLU DNN is sometimes also called a $(k+1)$-layer ReLU DNN, and is said to have $k$ {\it hidden layers}. Number of layers or depth of the net can be seen to be measuring as the length of the shortest path from the input to the output of the directed acyclic graph that naturally represents such a neural network - of which we have already seen examples in the previous chapter on informal summary and we see another in Figure \ref{supp:auto} given below. 
\qed
\end{definition}



For any $(m,n)\in \N$, let ${\cal A}_m^n$ adenote the class of affine and affine transformations from $\R^m\to \R^n$, respectively. Thus we introduce a compact notation for the class of width specified DNNs as follows,  

\begin{definition}
We denote the class of $\R^{w_0}\to \R^{w_{k+1}}$ ReLU DNNs with $k$ hidden layers of widths $\{ w_i \}_{i=1}^{k}$ by ${\cal F}_{\{ w_i \}_{i=0}^{k+1}}$, i.e.

\begin{equation}
{\cal F}_{\{ w_i \}_{i=0}^{k+1}} := \{ \A_{k+1}\circ\relu \circ \A_k\circ \cdots \circ \A_2 \circ\relu \circ \A_1  \mid \A_i \in {\cal A}_{w_{i-1}}^{w_i} \forall i\in\{1,\ldots,k+1\} \} 
\end{equation}
\end{definition}

Corresponding to any affine transformation $\A_i$ above we will typically decompose its action via a linear transformation ({\it ``weight matrix"}) $\W_i$ and a vector $\b_i$ s.t $\x \mapsto \A_i \x = \W_i \x + \b_i$. This is particularly helpful in setting up the notation for a particular class of neural nets ``Autoencoders" that we shall often consider in this thesis and which we specify below.  


\subsection{Notation for a special class of Autoencoders}\label{def:autoencoder}

Let $\y \in \mathbb{R}^n$ be the input vector to the autoencoder, $\{\W_i\}_{i=1,..,\ell}$ denote the weight matrices of the net and $\{ \b_i \}_{i=1,..,2\ell}$ be the bias vectors. Then the output $\hat{\y} \in \mathbb{R}^n$ of the autoencoder (mapping $\R^n \rightarrow \R^n$) is defined as,

\[ \hat{\y} =  \W_1^\top \sigma(\dots \sigma(\W_{\ell-1}^\top\sigma(\W_\ell^\top \a + \b_{\ell+1}) + \b_{\ell+2}) \dots) + \b_{2\ell} \]

\[ where \]

\[ \a = \sigma(\W_\ell \sigma(\dots \sigma(\W_2 \sigma(\W_1\y + \b_1) + \b_2) \dots) + \b_\ell) \]

This defines an autoencoder with $2\ell - 1$ hidden layers using the $\ell$ weight matrices and the $2\ell$ bias vectors defined above. The particular symmetry that has been imposed among the layers leading up to the $\a$ and those that act on $\a$ is what leads to this arrangement being called {\it ``weight tied"}. Such autoencoders are a fairly standard setup that have been used in previous work \citep{arpit2015regularized, baldi2012autoencoders, kuchaiev2017training, vincent2010stacked}. 

A special case of the above that we shall focus on is when $\ell = 1$ i.e its a weight tied autoencoder of depth $2$ and $\b_2 =0$. We shall use $\W = \W_1$, $\b_1 = \epsilon \in \R^h$ where $h$ is the width of the net and the number of activation units used.  Denoting the output of the hidden layer of activations as $\rr \in \R^h$ we have for this case, 

\begin{align}\label{eqn:autoencoder}
\hat{\y} = \W^T \rr \text{ where } \rr = \relu \left( \W \y - \epsilon \right)
\end{align}

We shall define the columns of $\W^\top$ (rows of $\W$) as $\{\W_i\}_{i=1}^h$. A pictorial representation of such a depth $2$ autoencoder is as given in Figure \ref{supp:auto}.   

\begin{figure}[h]
\centering
\includegraphics[scale = 0.3]{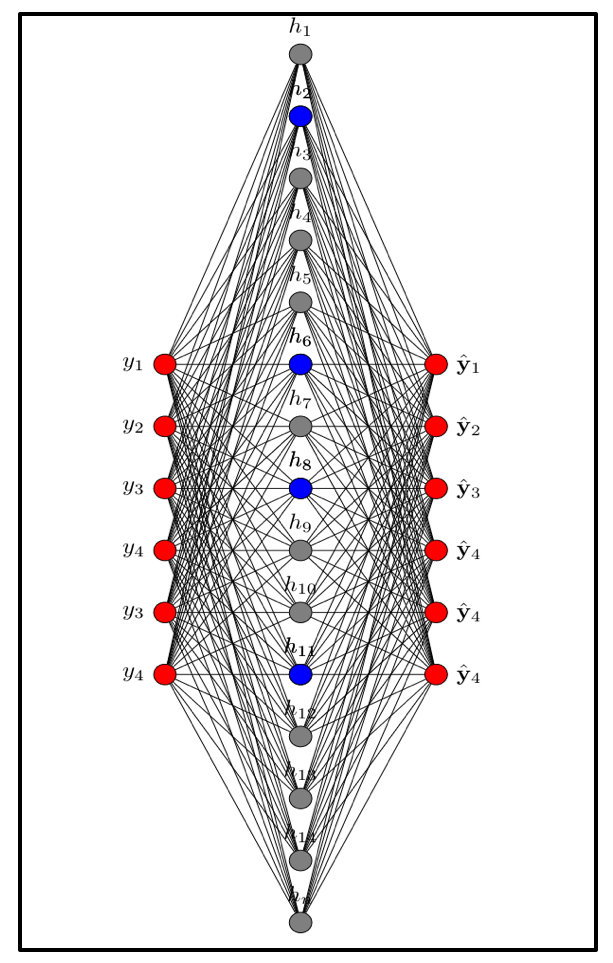}
\caption{The above is the circuit representation of a depth $2$, width $15$ autoencoder mapping, $\R^4 \ni y \mapsto \hat{y} \in \R^4$}\label{supp:auto}
\end{figure}



Deep learning, refers to a suite of computational techniques that have been developed recently for training DNNs. It started with the work of~\cite{hinton2006fast} (deep belief networks) and ~\cite{salakhutdinov2009deep} (deep Boltzmann machines) which gave empirical evidence that if deep architectures are initialized properly (for instance, using unsupervised pre-training), then we can find good solutions in a reasonable amount of runtime. This work was soon followed by a series of early successes of deep learning at significantly improving the state-of-the-art AI systems in speech recognition, image classification and natural language processing based on deep neural nets~\citep{hinton2012deep,dahl2013improving,krizhevsky2012imagenet,le2013building,sutskever2014sequence}. While there is less of evidence now that pre-training actually helps, several other solutions have since been put forth to address the issue of efficiently training DNNs. These include heuristics such as dropouts~\citep{srivastava2014dropout}, but also considering alternate deep architectures such as convolutional neural networks~\citep{sermanet2014overfeat}


One of the fascinating aspects of trying to build a theory for deep-learning is that if we view this project through the lens of optimization theory then its setup is essentially opposite to  how theory of optimization is studied in standard textbooks and courses. Typically one starts off with a well defined optimization problem (like Conic Programming) and then one studies the properties of its optima and the algorithmic aspects of solving it. But deep-learning has developed entirely on the sturdy shoulders of thousands of highly innovative experimenters who have caused this artificial intelligence revolution by developing a vast array of mysterious heuristics which work to get the neural net to perform tasks which would be "human like". To give an obvious example : there is no unambiguous way to quantify the fact that state-of-the-art GAN outputs look like realistic images but this is exactly the criteria we would want to use to judge whether a GAN has been trained well or not! As a subject deep-learning is predominantly defined by wildly successful algorithmic heuristics and often it's entirely unclear as to how the obviously wonderful performance of the trained net can be described as finding good solutions of some optimization problem! For some of the most exotic applications of neural nets, the debates continue to happen about what is the right optimization problem whose solution would correctly capture the success of the net.  

But if we can agree about the ``loss function (say $\ell$)" to be used then at least for the most ordinary use cases the challenge of modern deep learning can be abstracted out as a particularly hard case of usual ``learning theory". In such benign situations we can focus on wanting to solve the following function optimization/``risk minimization" question,
\begin{align}\label{Def:risk}
\min_{\mathbf{N} \in {\cal N}} \mathbb{E}_{\z \in {\cal D}} [\ell (\mathbf{N}, \z)]
\end{align}
where $\ell$ is some lower-bounded non-negative function, members of ${\cal N}$ are continuous piecewise linear functions representable by some chosen neural net architecture and  we only have sample access to the distribution ${\cal D}$. This reduces to the ``empirical risk minimization" question when this ${\cal D}$ is an uniform distribution on a finite set of points.
In the light of the previous discussion, the research results presented in this thesis can be seen to be focused on the following $3$ critical aspects, (a) understanding mathematical properties of the neural function spaces on which this risk minimization is being attempted (Section \ref{space}),  (b) proving guarantees about algorithms which can be used to approximately solve this question of neural risk minimization (Section \ref{alg}) and (c) understanding the structure of the nets which are solutions to such risk minimization questions (Section \ref{pac})

 
\section {Understanding the space of neural functions}\label{space}
 In Chapter \ref{chapfunc} we provide $3$ main kinds of insights about the nature of neural functions. Firstly, we extend the recently published results in \cite{telgarsky2016benefits} to show that for every $k \in \Z^+$ there exists a {\it continuum} of hard functions which require $O(k^3)$ size to represent at depths $1+k^2$ but will require $\Omega(k^k)$ (super-exponential in depth) size to approximate at depth $1+k$. We also show that a kind of polytopes, called ``zonotopes" have a natural relationship to neural nets i.e the ReLU nets can represent the gauge function of zonotopes and this in turn gives us an explicit construction of a {\it continuum} of ReLU functions with the largest number of affine pieces for large classes of architectures. 
 
Secondly, we were intrigued by the question of finding non-trivial upperbounds on the run-time of algorithms which can find the {\it exact global minima} of empirical risk. We show how a collection of convex programming subroutines can be used to get algorithms for exact empirical risk minimization in depth $2$ which run in $\poly(\text{data})$ time at a fixed depth. Such faster-than-brute-force exact optimization algorithms remain unknown for higher depths. (Recently there has been a very interesting complexity theoretic paper from Berkeley, \cite{manurangsi2018computational} which builds further on this algorithm of ours.) 

Lastly, we also investigate depth hierarchy theorems for $\relu$ nets (ending in a ``Linear Threshold Function" (LTF) gate which maps $\R \ni y \mapsto -1+2{\mathbf 1}_{y\geq 0} \in \R$) trying to compute Boolean functions. Many of the key results in this direction were achieved by extending to ReLU nets a method of random restrictions recently developed by Daniel Kane and Ryan Williams. This line of investigation has thrown up a lot of puzzling open questions about whether or not ReLU nets are more efficient at representing Boolean functions than usual Boolean circuits.

\section {Landscape of neural nets and deep-learning algorithms}\label{alg}

This theme is what can be said to be the mainstay of this thesis and it spans across $3$ chapters.  

In Chapter \ref{chaptrain_new} we show $2$ kinds of insights about training a $\relu$ gate. {\it Firstly} we give a very simple iterative stochastic algorithm to recover the underlying parameter $\w_*$ of the $\relu$ gate when realizable data allowed to be sampled online is of the form $(\x, \max \{ 0, \w_*^\top\x \} )$. Compared to all previous such attempts the distributional condition we use is very mild, which essentially just captures the intuition that enough of our samples have to be such that $\w_*^\top\x >0$.

{\it Secondly} we give an argument which establishes a first-of-its-kind mathematical control on the behaviour of gradient descent (with deliberate injection of noise) on the squared loss function of a single $\relu$ gate. It is to be noted that this argument doesn't need any distributional assumption beyond realizability of the labels and thus it makes us optimistic that this is a potentially interesting step towards explaining the success of this ubiquitously used heuristic. The key idea here is that of ``coupling" which shows that from the iterates of noise injected gradient descent on the squared loss of a $\relu$ gate one can create a discrete super-martingale.  




In Chapter \ref{chapaut} we focus on autoencoders and make progress about explaining their success. We were particularly inspired by the experimental works of Brendan Frey and Alireza Makhzani. We checked that actually an off-the-shelf {\rm RMSProp} algorithm very easily do reasonably good autoencoding on MNIST even at depth $2$. This piqued our interest to understand this better and we analyzed the landscape of the autoencoder under the usual sparse-coding generative model. Via a very elaborate analysis we are able to estimate the value of the gradient of the squared loss on depth $2$ autoencoders whose input/output dimension is the same as that of the observed vectors in sparse-coding and the width of the network is the same as the sparse-code dimension. 

This intricate analysis leads to the insight that the norm of this gradient decreases in a small neighbourhood of the original (unknown) dictionary as the sparse-code dimension increases. Such a proof of asymptotic criticality around the dictionary takes a step towards explaining why neural nets should be able to do dictionary learning. Works like \cite{nguyen2019dynamics} have recently built on top of our analysis framework to show trainability proofs for autoencoders.

In Chapter \ref{chapadam} we focus on understanding the specific  adaptive gradient algorithms, {\rm RMSProp} and {\rm ADAM}, which are implemented widely across almost all deep-learning tasks and are known to be the state-of-the-art in almost every application. We give the first ever proofs that (deterministic) RMSPRop and (deterministic) ADAM converge to criticality for smooth objectives without the assumption of convexity. We also motivate a class of first order moment constrained oracles in the presence of which we can show the first ever proof of convergence of stochastic RMSProp with no convexity assumptions and at the same speed as SGD on convex functions. 

We emphasize that this is particularly exciting in the context of recent results \cite{reddi2018convergence} which have shown that under the same setting of constant hyperparameter values ADAM used as an online optimizer cannot always get asymptotically zero average regret. We also shown extensive experiments on VGG-9 running on CIFAR-10 and across various sizes of autoencoders running on MNIST that ADAM's performance gets a consistent and curious boost (and thus it outperforms its competitors) when its $\beta_1$ (the parameter that controls the influence of the history of grdients on the current update) is pushed closer to $1$ than its usual settings. 




\section{Estimating the risk function of neural nets}\label{pac}

The long standing open-question in deep-learning is to be able to theoretically explain as to when neural nets which are massively over-parameterized happen to be high-quality solutions of the risk minimization problem defined in \ref{Def:risk} - even when they fit the training data arbitrarily accurately. In recent times it has been increasingly realized that good risk bounds possibly necessarily need to depend on the training algorithm as well as the training data. The currently available methods to bound the risk function have been beautifully reviewed in this paper \citep{audibert2007combining}. Here the authors have clubbed the techniques into primarly four categories, $(1)$ ``Supremum Bounds" (like generic chaining, Dudley integral, Rademacher complexity), $(2)$ ```Variance Localized Bounds", $(3)$ ``Data-Dependent Bounds" and $(4)$ ``Algorithm Dependent Complexity". The last category includes PAC-Bayes bounds which have risen to prominence in recent times and is the crux of our most recently completed work described in Chapter \ref{chapPAC} 

Rademacher complexity based bounds like \cite{golowich2018size} and \cite{bartlett2017spectrally} fail to give non-vacuous bounds when evaluated on the gigantic neural nets used in practice. In the PAC-Bayesian framework we slightly move away from trying to bound the risk and instead we try to bound an instance of ``stochastic risk" which can be thought of as allowing for the neural net's weights/parameters to be noisy. This can be argued to be the most  natural quantity to bound given that all successful neural training algorithms are stochastic and hence the trained net obtained from it is essentially a sample from a distribution on the neural function space induced by the training algorithm. By doing this shift in viewpoint, recent works like \cite{dziugaite2017computing} and \cite{zhou2018non} have shown for the very first time that PAC-Bayesian bounds can give non-trivial risk bounds for practical neural nets.  But the above bounds are ``computational" in the sense that obtaining them requires an algorithmic search over a certain parametric space of distributions. These experiments strongly motivate our current work seeking rigorous theoretical exploration of the power of PAC-Bayesian technology in explaining the learning ability of neural nets. 

Previous PAC-Bayes bounds have used data dependent priors on the geometric mean of the spectral norms of the layer matrices to try to track the distance in the parameter space of the trained net from a fixed point in that space. In our work the first key idea we initiate is to track the distance of the trained net from its initialization by looking at two independent quantities (a) a non-compact parameter : the change from initialization of the norm of the vector of weights of the net (i.e the sum of Frobenius norms of the layer matrices for a net without bias weights) and (b) a compact part : the angular deflection of this vector of weights from initialization to the end of training.  In this work we instantiate an elaborate mechanism of putting a two indexed grid of priors which can simultaneously be sensitive to both the above properties of the neural net training process. 

Our second key idea is to realize that in the PAC-Bayesian framework one can leverage more out of the angle parameter by also simultaneously training a cluster of nets which are initialized close to the original net. Because of this use of clusters, compared to previous bounds our dependency on the distance from initialization is not only more intricate but we are also able to get more sensitive to the average case behaviour. 

Compared to previous theories in this direction, \citep{neyshabur2017pac} we build into the formalism a larger number of data-dependent (and hence tunable) parameters.  As a consequence we get a risk bound on nets  which is empirically not only seen to be tighter than \cite{neyshabur2017pac} but also has better/lower ``rates" of dependency on the neural architectural parameters like depth and width.

We emphasize that the aforesaid ability to leverage the use of clusters of nets in tandem is critically hinged on us being able to prove methods of creating multi-parameter families of mixture of Gaussian distributions such that the given neural function remains stable when its weights are perturbed by noise sampled from these distributions. There are potentially far reaching implications of such theorems because of the intricate arguments made in recent times which motivate why finding provable compression algorithms for any class of nets is tied to being able to prove the existence of noise distributions to which this same class is resilient.

We go on to demonstrate two kinds of insights in our experiments. Over synthetic data and standard tests like CIFAR-10 we show that our bound performs consistently better than existing PAC-Bayesian bounds. Next we show in the experiments that the two parameters said above have a lot more structure than what theory is currently capable of leveraging. We observe in our experiments that the $2-$norm of the weight vector described above always undergoes a slight dilation during the training. We also demonstrate that the angular deflection is predominantly determined by the underlying data-set/data-distribution and is only very slightly affected by the architecture of the net. 

Current wisdom in the field suggests that observations like above about systematic behaviours of neural net training can potentially be leveraged into increasingly creative risk bounds for nets. Thus these experiments pave the way for our continuing exploration of even better bounds which can eventually lead to principled methods of choosing the right net to use for a given artificial intelligence task at hand.



\chapter{\vspace{10pt} Exploring the Space of Neural Functions}\label{chapfunc}

\section{Introduction}

Neural networks with a single hidden layer of finite size can approximate any continuous function on a compact subset of $\R^n$ arbitrary well. This universal approximation result was first given for sigmoidal activation function in ~\cite{cybenko1989approximation}, and later generalized by Hornik to an arbitrary bounded and non-constant activation function~\citep{hornik1991approximation} (and in turn it applied to $\relu$ nets as well). Furthermore, neural networks ending in a {\rm LTF} gate have finite VC dimension (depending polynomially on the number of edges in the network), 
and therefore, are PAC (Probably Approximately Correct) learnable using a sample of size that is polynomial in the size of the networks~\citep{anthony2009neural}. 
However, neural networks based methods were shown to be computationally hard to learn~\citep{anthony2009neural} and had mixed empirical success. Consequently, DNNs fell out of favor by the late 90s.
~\\ \\
In this chapter, we formally study deep neural networks with rectified linear units; we refer to these deep architectures as ReLU DNNs. Our work is inspired by these recent attempts to understand the reason behind the successes of deep learning, both in terms of the structure of the functions represented by DNNs, \citep{T1,T2,KW,OS}, as well as efforts which have tried to understand the non-convex nature of the training problem of DNNs better \citep{K,RH}.
Our investigation of the function space represented by ReLU DNNs also takes inspiration from the classical theory of circuit complexity; we refer the reader to \cite{AB,Shp,Jun,Sap,All} for various surveys of this deep and fascinating field. 
In particular, our gap results are inspired by results like the ones by Hastad~\cite{hastad1986almost}, Razborov~\cite{razborov1987lower} and Smolensky~\cite{smolensky1987algebraic} which show a strict separation of complexity classes. We make progress towards similar statements with deep neural nets with ReLU activation. 

\subsection{Notation and Definitions}

\begin{definition}\label{def:PWL}[Piecewise linear functions] We say a function $f\colon \R^n\to \R$ is {\em continuous piecewise linear (PWL)} if there exists a {\em finite} set of polyhedra whose union is $\R^n$, and $f$ is affine linear over each polyhedron (note that the definition automatically implies continuity of the function because the affine regions are closed and cover $\R^n$, and affine functions are continuous). 
The {\em number of pieces of $f$} is the number of maximal connected subsets of $\R^n$ over which $f$ is affine linear (which is finite).
\end{definition}

Many of our important statements will be phrased in terms of the following simplex.

\begin{definition} Let $M > 0$ be any positive real number and $p \geq 1$ be any natural number. Define the following set: 
$$\Delta^p_{M} := \{\x \in \R^p: 0 < \x_1 < \x_2 < \ldots < \x_p < M\}.$$
\end{definition}

\section {Exact characterization of function class represented by ReLU DNNs}\label{sec:exact-character}

One of the main advantages of DNNs is their representational ability. 
In this section, we give an exact characterization of the functions representable by ReLU DNNs. Moreover, we show how structural properties of ReLU DNNs, specifically their depth and width, affects their expressive power. It is clear from definition that any function from $\R^n \to \R$ represented by a ReLU DNN is a continuous piecewise linear (PWL) function. In what follows, we show that the converse is also true, that is any PWL function is representable by a ReLU DNN. In particular, the following theorem establishes a one-to-one correspondence between the class of ReLU DNNs and PWL functions. 

\begin{theorem}\label{cor:all-pwl-are-dnn} Every $\R^n \to \R$ ReLU DNN represents a piecewise linear function, and every piecewise linear function $\R^n \to \R$ can be represented by a ReLU DNN with at most $\lceil \log_2(n+1) \rceil + 1$ depth.
\end{theorem}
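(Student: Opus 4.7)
The forward direction is straightforward and should be the warm-up. Starting from the recursive definition $f = \A_{k+1}\circ\relu\circ\A_k\circ\cdots\circ\A_1$, I would induct on depth: affine maps are clearly PWL, componentwise ReLU is PWL, and the composition of two PWL functions is again PWL (since intersecting the polyhedral subdivisions of the input and pulling back the output subdivision through the affine pieces of the inner function yields a finite polyhedral refinement on which the composition is affine). This shows every ReLU DNN represents a PWL function, with no depth bookkeeping needed.

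The hard direction is the converse, and the whole game is to get the depth bound $\lceil \log_2(n+1)\rceil+1$ rather than something like $O(\log p)$ where $p$ is the number of linear pieces. The plan is to invoke a Wang--Sun style lattice representation theorem for continuous PWL functions on $\R^n$: every continuous PWL $f\colon\R^n\to\R$ can be written as a signed sum
\[
f(\x) \;=\; \sum_{i=1}^{p} s_i \max_{j\in S_i} \ell_{ij}(\x),
\]
where each $s_i\in\{-1,+1\}$, each $\ell_{ij}$ is affine, and crucially $|S_i|\le n+1$. The $n+1$ bound is the key structural ingredient coming from the fact that a full-dimensional simplex in $\R^n$ has $n+1$ facet-defining affine functions, and it is what forces the final depth to scale with $\log(n+1)$ and not with the number of pieces of $f$.

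Given such a representation, the construction of the DNN is modular. First observe the two-argument identity $\max(a,b)=\tfrac{1}{2}\bigl((a+b)+\relu(a-b)+\relu(b-a)\bigr)$, which realizes a binary max with a single ReLU layer sandwiched between two affine maps. Iterating this in a balanced binary tree computes $\max$ of $n+1$ arguments using $\lceil\log_2(n+1)\rceil$ layers of ReLU activations. I would then build, in parallel, one such subnetwork for each of the $p$ inner maxes in the Wang--Sun expansion, pad the shallower subtrees with identity layers (implementable as $x=\relu(x)-\relu(-x)$) so that all subnetworks share a common depth $\lceil\log_2(n+1)\rceil$, and aggregate their outputs with the signs $s_i$ using a single final affine transformation $\A_{k+1}$. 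Counting affine maps as in Definition~\ref{def:relu-dnn} gives total depth $\lceil\log_2(n+1)\rceil+1$.

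The main obstacle is step (a): establishing, or cleanly citing, the Wang--Sun lattice representation with the sharp $n+1$ bound on the cardinality of each max. The rest is essentially a circuit-construction exercise. A secondary technical point will be verifying that the identity-padding trick does not inflate the width unacceptably and that simultaneously computing $p$ maxes in parallel respects the width notation of the preceding definitions; these are routine but must be checked to make the statement literally about \emph{depth} hold as written.
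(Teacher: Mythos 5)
Your proposal matches the paper's proof essentially step for step: the forward direction by closure of PWL functions under affine maps, ReLU, and composition; the converse by invoking the Wang--Sun lattice representation $f = \sum_j s_j \max_{i\in S_j}\ell_i$ with $|S_j|\le n+1$ (Theorem~1 of \cite{wang2005generalization}), computing each inner max via the identity $\max\{x,y\}=\frac{x+y}{2}+\frac{|x-y|}{2}$ in a balanced binary tree of depth $\lceil\log_2(n+1)\rceil$, running the $p$ subnetworks in parallel, and aggregating with a final affine layer. The identity-padding detail you flag is handled implicitly by Lemma~\ref{lem:max-DNN} and the addition lemma in the paper, so the two proofs are the same argument.
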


{\bf{Proof Sketch: }}
It is clear that any function represented by a ReLU DNN is a PWL function. To see the converse, we first note that any PWL function can be represented as a linear combination of piecewise linear convex functions. More formally, by Theorem~1 in~\citep{wang2005generalization},
for every piecewise linear function $f :\R^n \to \R$, there exists a finite set of affine linear functions $\ell_1, \ldots, \ell_k$ and subsets $S_1, \ldots, S_p \subseteq \{1, \ldots, k\}$ (not necessarily disjoint) where each $S_i$ is of cardinality at most $n+1$, such that \begin{equation}\label{eq:hinged-hyperplane}f = \sum_{j = 1}^p s_j \bigg(\max_{i \in S_j} \ell_i\bigg),\end{equation} where $s_j \in \{-1,+1\}$ for all $j=1, \ldots, p$. 
Since a function of the form $\max_{i \in S_j} \ell_i$ is a piecewise linear convex function with at most $n+1$ pieces (because $|S_j| \leq n+1$), Equation~(\ref{eq:hinged-hyperplane}) says that any continuous piecewise linear function (not necessarily convex) can be obtained as a linear combination of piecewise linear convex functions each of which has at most $n+1$ affine pieces.  Furthermore, Lemmas~\ref{lem:comp-DNN}, \ref{lem:add-DNN} and~\ref{lem:max-DNN} in the Appendix, show that composition, addition, and pointwise maximum of PWL functions are also representable by ReLU DNNs. In particular, in Lemma~\ref{lem:max-DNN} we note that $\max\{ x,y \}=\frac{x+y}{2}+\frac{|x-y|}{2}$ is implementable by a two layer ReLU network and use this construction in an inductive manner to show that maximum of $n+1$ numbers can be computed using a ReLU DNN with depth at most $\lceil \log_2(n+1) \rceil$.

~\\ 
While Theorem~\ref{cor:all-pwl-are-dnn} gives an upper bound on the depth of the networks needed to represent all continuous piecewise linear functions on $\R^n$, it does not give any tight bounds on the {\em size} of the networks that are needed to represent a given piecewise linear function. For $n =1$, we give tight bounds on size as follows:

\begin{theorem}\label{thm:1-dim-2-layer} Given any piecewise linear function $\R \to \R$ with $p$ pieces there exists a 2-layer DNN with at most $p$ nodes that can represent $f$. Moreover, any 2-layer DNN that represents $f$ has size at least $p-1$.
\end{theorem}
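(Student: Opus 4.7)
The plan is to prove the two halves of the theorem separately by carefully tracking breakpoints and slopes of the PWL function.

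For the upper bound, I would give an explicit construction. Let $f: \R \to \R$ be a continuous PWL function with $p$ maximal affine pieces having slopes $m_0, m_1, \ldots, m_{p-1}$ (indexed left to right), with breakpoints $b_1 < b_2 < \cdots < b_{p-1}$. I would propose the 2-layer ansatz
$$g(x) = f(b_1) + \sum_{i=1}^{p-1}(m_i - m_{i-1})\,\relu(x - b_i)\; -\; m_0\,\relu(b_1 - x),$$
and verify $g \equiv f$ piece-by-piece: on $(-\infty, b_1)$ only the last term is active, contributing slope $m_0$; on each $(b_j, b_{j+1})$ the first $j$ forward-hinges combine to slope $m_0 + \sum_{i \le j}(m_i - m_{i-1}) = m_j$; and the value $g(b_1) = f(b_1)$ together with continuity of both functions forces equality of the constant terms on every piece. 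This uses at most $p$ ReLU gates (only $p-1$ when $m_0 = 0$), readily realised by choosing $\A_1 \in \mathcal{A}_1^p$ and $\A_2 \in \mathcal{A}_p^1$.

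For the lower bound, I would analyse what a 2-layer ReLU DNN can compute. Any such net with $N$ hidden units has the form $F(x) = \sum_{i=1}^{N} a_i\,\relu(w_i x + b_i) + c$. Each summand is affine (hence differentiable) except possibly at the single point $x_i = -b_i/w_i$ when $w_i \neq 0$, where it contributes a jump of size $a_i w_i$ to $F'$. Thus $F'$ is a piecewise constant function with at most $N$ discontinuities (fewer if some gates share a breakpoint, have $w_i = 0$, or cancel). On the other hand, because the definition of ``pieces'' uses \emph{maximal} affine connected subsets, consecutive pieces of $f$ must have distinct slopes, so $f'$ has exactly $p - 1$ jumps. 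Equating counts yields $N \geq p - 1$.

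The main obstacle is a conceptual subtlety in the upper bound: one might optimistically hope that $p - 1$ gates always suffice, and indeed they do when $m_0 = 0$, but in the general case a sign/subset-sum constraint prevents $p - 1$ forward-hinges alone from matching both the $p - 1$ slope changes \emph{and} the asymptotic leftmost slope; the backward-hinge term $-m_0\,\relu(b_1 - x)$ in the ansatz is precisely the single boundary-adjustment gate that restores this degree of freedom. The lower bound, by contrast, is essentially immediate once the ``derivative step-function'' viewpoint is articulated, and the only cases worth checking (coinciding breakpoints, zero coefficients, cancellations) all strengthen rather than weaken the counting inequality.
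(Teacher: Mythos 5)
Your lower bound is correct and is essentially the paper's argument (Lemma~\ref{lem:size-bounds-0} with $k=1$): each hidden ReLU unit introduces at most one derivative discontinuity, and a $\R \to \R$ PWL function with $p$ maximal pieces has exactly $p-1$ slope changes, so any $2$-layer representation needs at least $p-1$ gates.

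The upper bound as written contains a concrete error. Fix any $j \geq 1$ and $x \in (b_j, b_{j+1})$. There the backward hinge $-m_0\,\relu(b_1 - x)$ vanishes (since $x > b_1$), and the active forward hinges contribute slope $\sum_{i=1}^{j}(m_i - m_{i-1}) = m_j - m_0$, \emph{not} $m_j$. Your slope verification inserts an extra ``$m_0\,+$'' into that telescoping sum with no source term in the ansatz; what the formula actually produces is $g(x) = f(x) + m_0(x - b_1)$ for all $x > b_1$, which equals $f$ only when $m_0 = 0$. The fix is small but necessary: fold the missing linear contribution $m_0\,\relu(x - b_1)$ into the coefficient of the first forward hinge, i.e.\
\[ g(x) = f(b_1) + m_1\,\relu(x - b_1) + \sum_{i=2}^{p-1}(m_i - m_{i-1})\,\relu(x - b_i) - m_0\,\relu(b_1 - x), \]
which still uses $p$ gates, and when $m_0 = 0$ the backward hinge can be dropped, recovering the $p-1$ special case (Corollary~\ref{cor:tighter-bound-2-layer}). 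Once corrected, your construction is a genuinely cleaner route than the paper's: the paper writes $f$ as one ``right flap'' anchored at the last breakpoint plus $m-1$ ``left flaps'' at $a_1, \ldots, a_{m-1}$ and then solves a lower-triangular linear system for the flap slopes, whereas your approach gives the gate coefficients in closed form as slope differences with no system to solve.
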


Finally, the main result of this section follows from Theorem~\ref{cor:all-pwl-are-dnn}, and well-known facts that the piecewise linear functions are dense in the family of compactly supported continuous functions and the family of compactly supported continuous functions are dense in $L^q(\R^n)$~\citep{royden}). 
Recall that $L^q(\R^n)$ is the space of Lebesgue integrable functions $f$ such that $\int |f|^q d\mu < \infty$, where $\mu$ is the Lebesgue measure on $\R^n$ (see Royden~\cite{royden}).

\begin{theorem}\label{thm:Lpapprox}
Every function in $L^q(\R^n),\ (1 \leq q \leq \infty)$ can be arbitrarily well-approximated in the $L^q$ norm (which for a function $f$ is given by $\vert \vert f \vert \vert _q = (\int \vert f \vert^q)^{1/q}$) by a ReLU DNN function with at most $\lceil \log_2(n+1)\rceil$ hidden layers. Moreover, for $n=1$, any such $L^q$ function can be arbitrarily well-approximated by a 2-layer DNN, with tight bounds on the size of such a DNN in terms of the approximation.
\end{theorem}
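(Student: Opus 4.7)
The plan is to chain two density statements and then invoke Theorem \ref{cor:all-pwl-are-dnn} as a black box. First, I would use the classical fact from real analysis that the space $C_c(\R^n)$ of compactly supported continuous functions is dense in $L^q(\R^n)$ for $1 \leq q < \infty$; the case $q=\infty$ requires a separate interpretation (density holds in the sense of essentially bounded functions that can be approximated on compact sets up to sets of arbitrarily small measure, or by restricting attention to a bounded domain as suggested by the $L^q$ norm notation in the theorem statement). So given $f \in L^q(\R^n)$ and $\epsilon > 0$, pick $g \in C_c(\R^n)$ with $\|f-g\|_q < \epsilon/2$.

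Second, I would approximate $g$ by a PWL function $h$ with $\|g - h\|_q < \epsilon/2$. For this, choose a compact set $K$ containing $\mathrm{supp}(g)$ in its interior, take a simplicial triangulation of $K$ with mesh size $\delta$, define $h$ on $K$ to be the affine interpolant of $g$ at the vertices of the triangulation, and extend $h$ by zero outside $K$ (with an affine collar to keep $h$ continuous). Since $g$ is uniformly continuous on $K$, for $\delta$ small enough the $L^\infty$ (hence $L^q$, since $K$ has finite measure) error between $g$ and $h$ is less than $\epsilon/(2\,\mu(K)^{1/q})$. By construction, $h$ is continuous piecewise linear in the sense of Definition \ref{def:PWL}, so Theorem \ref{cor:all-pwl-are-dnn} produces a ReLU DNN of depth at most $\lceil \log_2(n+1)\rceil + 1$ (i.e.\ at most $\lceil \log_2(n+1)\rceil$ hidden layers) that represents $h$ exactly. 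The triangle inequality then gives $\|f - h\|_q < \epsilon$, proving the first assertion.

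For the sharpened $n=1$ statement, I would replace the triangulation step by a partition of a compact interval $[-M,M] \supset \mathrm{supp}(g)$ into subintervals, linearly interpolating $g$ at the endpoints. If the resulting PWL $h$ has $p$ pieces, Theorem \ref{thm:1-dim-2-layer} produces a 2-layer DNN of size at most $p$ representing $h$ exactly, giving the upper bound. For the matching lower bound, I would argue contrapositively: any 2-layer ReLU DNN of size $s$ represents some PWL function with at most $s+1$ pieces, and among all PWL functions with at most $s+1$ pieces there is a best $L^q$-approximant to $f$; if the achievable error is $\eta(s)$, then no 2-layer DNN of size $s$ can do better than $\eta(s)$. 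Combining these directions yields tight bounds expressing the minimal size as the smallest number of pieces of a PWL $L^q$-$\epsilon$-approximant of $f$.

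The main obstacle I anticipate is the $q = \infty$ case: $C_c(\R^n)$ is not dense in $L^\infty(\R^n)$ under the essential-supremum norm, so the theorem as stated can only be correct for $f$ that are themselves approximable by compactly supported continuous functions in $\|\cdot\|_\infty$ (for instance, bounded uniformly continuous functions that vanish at infinity, or functions defined on a bounded domain). I would address this by stating the $q = \infty$ case with this qualification, since all other steps of the proof (triangulation, PWL-to-DNN conversion) carry through uniformly on any fixed compact set. A secondary difficulty is making the lower bound in the 1D case genuinely tight rather than merely order-optimal; this requires carefully matching the extremal PWL approximant with the maximum-piece function expressible by a 2-layer ReLU DNN of a given size, which is exactly the content of Theorem \ref{thm:1-dim-2-layer}.
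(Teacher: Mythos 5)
Your proof follows essentially the same path as the paper's own argument: density of $C_c(\R^n)$ in $L^q(\R^n)$, density of continuous PWL functions in $C_c(\R^n)$ (the paper cites both as well-known facts from Royden without the triangulation details you supply), then Theorem~\ref{cor:all-pwl-are-dnn} to realize the PWL approximant exactly by a ReLU DNN with $\lceil\log_2(n+1)\rceil$ hidden layers, with Theorem~\ref{thm:1-dim-2-layer} supplying the tight size bound in the $n=1$ case. Your caveat about $q=\infty$ is well-taken and in fact exposes a gap in the paper's own phrasing: $C_c(\R^n)$ is not dense in $L^\infty(\R^n)$ under the essential-supremum norm, and neither is the class of PWL functions with finitely many pieces (such a function is eventually affine, so it cannot uniformly approximate, e.g., a bounded oscillatory function like $\sin$), so the theorem as stated needs exactly the restriction to a bounded domain or to the class $C_0$ that you propose.
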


\newpage 
Proofs of Theorems~\ref{thm:1-dim-2-layer} and~\ref{thm:Lpapprox} are provided in Appendix~\ref{sec:thm-1-rep}. We would like to remark that a weaker version of Theorem~\ref{cor:all-pwl-are-dnn} was observed in~\cite[Proposition 4.1]{goodfellow2013maxout} (with no bound on the depth), along with a universal approximation theorem~\citep[Theorem 4.3]{goodfellow2013maxout} similar to Theorem~\ref{thm:Lpapprox}. The authors of~\cite{goodfellow2013maxout} also used a previous result of Wang~\citep{wang2004general} for obtaining their result. 
In a subsequent work Boris Hanin \citep{hanin2017universal} has, among other things, found a width and depth upper bound  for ReLU net representation of positive PWL functions on $[0,1]^n$. The width upperbound is n+3 for general positive PWL functions and $n+1$ for convex positive PWL functions. For convex positive PWL functions his depth upper bound is sharp if we disallow dead ReLUs.

\section{Benefits of Depth}
Success of deep learning has been largely attributed to the depth of the networks, i.e. number of successive affine transformations followed by nonlinearities, which is shown to be extracting hierarchical features from the data. In contrast, traditional machine learning frameworks including support vector machines, generalized linear models, and kernel machines can be seen as instances of shallow networks, where a linear transformation acts on a single layer of nonlinear feature extraction. In this section, we explore the importance of depth in ReLU DNNs. In particular, in Section~\ref{sec:gap-results}, we provide a smoothly parametrized family of $\R\to\R$ ``hard'' functions representable by ReLU DNNs, which requires exponentially larger size for a shallower network to represent. Furthermore, in Section~\ref{sec:zon}, we construct a continuum of $\R^n\to\R$ ``hard'' functions representable by ReLU DNNs, which to the best of our knowledge is the first explicit construction of ReLU DNN functions whose number of affine pieces grows exponentially with input dimension. 


\subsection{Circuit lower bounds for $\R \rightarrow \R$ ReLU DNNs}\label{sec:gap-results}
In this section, we are only concerned about $\R\to\R$ ReLU DNNs, i.e. both input and output dimensions are equal to one. The following theorem shows the depth-size trade-off in this setting.

\begin{theorem}\label{thm:high-to-low} For every pair of natural numbers $k \geq 1$, $w \geq 2$, there exists a family of ``hard" functions representable by a $\R \to \R$ $(k+1)$-layer ReLU DNN of width $w$ such that if it is also representable by a $(k'+1)$-layer ReLU DNN for any $k' \leq k$, then this $(k'+1)$-layer ReLU DNN has size at least $\frac12k'w^{\frac{k}{k'}}-1$.
\end{theorem}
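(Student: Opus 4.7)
The plan is to follow the general Telgarsky-style benefits-of-depth template by combining (a) an explicit family of ``hard'' $\R\to\R$ functions that are cheap to represent in depth $k+1$ and width $w$ but have very many affine pieces, with (b) a sharp upper bound on the number of affine pieces that any $(k'+1)$-layer $\R\to\R$ ReLU DNN of total size $s$ can produce. Forcing these two piece counts to be consistent will then yield the required lower bound on $s$.

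First I would construct the hard family by iterating a width-$w$ sawtooth. Fix $w\ge 2$ and let $h_w\colon[0,1]\to[0,1]$ be the continuous PWL ``sawtooth'' whose graph oscillates between $0$ and $1$ over the $w$ equal subintervals of $[0,1]$; this $h_w$ has $w$ affine pieces and can be written as a linear combination of $w$ ReLU gates, i.e.\ a single hidden layer of width $w$. Because each of the $w$ teeth of $h_w$ is mapped surjectively onto $[0,1]$, composition multiplies the piece count by exactly $w$ at every step, so $h_w^{(k)}$ has exactly $w^k$ affine pieces on $[0,1]$. Crucially, $h_w^{(k)}$ can be realized by a ReLU DNN of depth $k+1$ and width exactly $w$: between consecutive hidden layers the linear read-out of the current copy of $h_w$ (a weighted sum of $w$ ReLU outputs) is fused with the input affine map of the next copy into a single rank-one-plus-bias affine map, so one never needs a separate ``output'' layer between successive applications. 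To get the continuum of hard functions promised by the statement I would perturb the tooth slopes and breakpoints of $h_w$ within an open neighborhood of the canonical choice; transversality then guarantees that the piece count $w^k$ of the iterated composition is preserved on an open set of parameters.

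Next I would prove the following piece-count upper bound by induction on depth: every $\R\to\R$ ReLU DNN with $k'$ hidden layers of widths $w_1,\ldots,w_{k'}$ computes a PWL function with at most $\prod_{i=1}^{k'}(w_i+1)$ maximal affine pieces. The base case $k'=1$ is the standard fact that a linear combination of $w_1$ ReLUs of affine functions of $x$ has at most $w_1+1$ pieces. The inductive step is a local-to-global argument: on any one of the at most $\prod_{i\le j}(w_i+1)$ pieces of the common refinement after layer $j$, the vector of layer-$j$ outputs is affine in $x$, so the pre-activation of every layer-$(j+1)$ ReLU is affine on that piece and hence has at most one zero crossing inside it; the $w_{j+1}$ gates in layer $j+1$ together subdivide each such piece into at most $w_{j+1}+1$ sub-pieces, multiplying the total piece count by a factor $(w_{j+1}+1)$. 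The final affine output layer creates no new breakpoints.

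Finally I would stitch the two bounds together. If a $(k'+1)$-layer ReLU DNN of total size $s=\sum_{i=1}^{k'}w_i$ represents $h_w^{(k)}$, then its piece count must be at least $w^k$, so by the AM--GM inequality
\[
w^{k}\;\le\;\prod_{i=1}^{k'}(w_i+1)\;\le\;\Bigl(\frac{s}{k'}+1\Bigr)^{k'},
\]
which on taking $k'$-th roots and solving gives $s\ge k'(w^{k/k'}-1)$, and this is at least $\tfrac12 k'w^{k/k'}-1$ whenever $w\ge 2$, matching the claimed bound. The main obstacle I anticipate is not the piece-counting induction, which is standard once set up, but the careful bookkeeping needed to verify that (i) the iterated sawtooth $h_w^{(k)}$ really fits into depth $k+1$ with width $w$ (rather than the naive depth $2k$ one would get by stacking depth-$2$ blocks), which requires exploiting the read-out/re-read-in fusion between consecutive layers, and (ii) the exact piece count of $h_w^{(k)}$ equals $w^k$ and is stable under small parameter perturbations, delivering the continuous family of hard functions the theorem asserts.
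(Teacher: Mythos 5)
Your proposal is correct and follows essentially the same strategy as the paper: an iterated width-$w$ sawtooth realized in depth $k+1$ by fusing the read-out of one copy with the affine input map of the next (this is exactly Lemma~\ref{lem:comp-DNN} in the paper's appendix), a piece-count upper bound for bounded-depth $\R\to\R$ ReLU nets, and then AM--GM. The one place you diverge is the auxiliary piece-count lemma: you prove that a net with hidden widths $w_1,\ldots,w_{k'}$ computes a PWL function with at most $\prod_{i=1}^{k'}(w_i+1)$ pieces via a common-refinement argument, whereas the paper's Lemma~\ref{lem:size-bounds-0} establishes the looser bound $2^{k'-1}(w_1+1)w_2\cdots w_{k'}$ by arguing that each subsequent ReLU activation at most doubles the piece count and then takes a crude affine-combination bound. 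Since $w_i+1\le 2w_i$, your bound is strictly sharper, and after AM--GM it yields $s\ge k'\bigl(w^{k/k'}-1\bigr)$, which dominates the paper's stated $\tfrac12 k'w^{k/k'}-1$ whenever $w\ge 2$ and $k\ge k'$; both therefore suffice for the theorem. The paper also takes more care than you do about the exact piece bookkeeping (it uses the functions $h_\a$ of Definition~\ref{def:hardfunc}, which have a flat left tail so that Corollary~\ref{cor:tighter-bound-2-layer} gives a size-$w$ realization, and Lemma~\ref{lem:construct-hard} tracks the piece count of the composition) and about parametrizing the hard family by simplices $\Delta^{w-1}_M$ so that the continuum claim of Theorem~\ref{thm:RELU_DNN_hard} is explicit rather than resting on a transversality hand-wave; but for the bare statement of Theorem~\ref{thm:high-to-low}, which only asserts existence of a hard family, your construction suffices.
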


\noindent In fact our family of hard functions described above has a very intricate structure as stated below.

\begin{theorem}\label{thm:RELU_DNN_hard}
For every $k \geq 1$, $w \geq 2$, every member of the family of hard functions in Theorem~\ref{thm:high-to-low} has $w^k$ pieces and this family can be parametrized by \begin{equation}\label{eq:torus-simplex}\bigcup_{M > 0}\underbrace{(\Delta^{w-1}_M \times \Delta^{w-1}_M\times \ldots \times \Delta^{w-1}_M)}_{k \textrm{ times}},\end{equation}
i.e., for every point in the set above, there exists a distinct function with the stated properties. 
\end{theorem}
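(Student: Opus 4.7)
The plan is to exhibit each hard function as a $k$-fold composition of basic ``sawtooth'' blocks, with one block contributed by each hidden layer. Fix $M>0$ and to each tuple $\mathbf{b}=(b_1,\dots,b_{w-1})\in\Delta^{w-1}_M$ I associate the continuous piecewise linear function $s_{\mathbf{b}}:\R\to\R$ that on $[0,M]$ linearly interpolates between the knots $(0,0),(b_1,M),(b_2,0),(b_3,M),\ldots$ (alternating between $0$ and $M$), and is zero outside $[0,M]$. By Theorem~\ref{thm:1-dim-2-layer}, since $s_{\mathbf{b}}$ has exactly $w$ affine pieces, it is representable by a 2-layer ReLU DNN with exactly $w$ hidden nodes; moreover, each of its $w$ affine pieces maps its sub-interval surjectively onto $[0,M]$.

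Given any $(\mathbf{b}^{(1)},\dots,\mathbf{b}^{(k)})\in(\Delta^{w-1}_M)^k$ I would then define
\[
f_{\mathbf{b}^{(1)},\dots,\mathbf{b}^{(k)}} \;:=\; s_{\mathbf{b}^{(k)}}\circ s_{\mathbf{b}^{(k-1)}}\circ\cdots\circ s_{\mathbf{b}^{(1)}}.
\]
By the composition lemma for ReLU DNNs (Lemma~\ref{lem:comp-DNN}), concatenating the $k$ width-$w$ blocks produces a $(k+1)$-layer width-$w$ ReLU DNN computing $f_{\mathbf{b}^{(1)},\dots,\mathbf{b}^{(k)}}$, matching the architecture of the hard family in Theorem~\ref{thm:high-to-low}. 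I would then prove by induction on $k$ that this function has exactly $w^k$ affine pieces, using the following piece-multiplication lemma: if $g,h:[0,M]\to[0,M]$ are continuous piecewise linear with $p$ and $q$ pieces, respectively, and every piece of each is an affine bijection onto $[0,M]$, then $h\circ g$ has exactly $pq$ pieces and every such piece is again an affine bijection onto $[0,M]$. The reason is that on each of the $p$ open pieces of $g$, the map $g$ is a linear bijection onto $[0,M]$, so the $q-1$ interior breakpoints of $h$ pull back to $q-1$ new breakpoints inside that piece, subdividing it into $q$ linear segments each mapped by $h\circ g$ onto all of $[0,M]$. Iterating this $k-1$ times starting from $s_{\mathbf{b}^{(1)}}$ yields the claimed $w^k$ count and preserves the surjectivity property needed to continue the induction.

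Finally, injectivity of the parametrization must be verified: distinct tuples in $(\Delta^{w-1}_M)^k$ yield distinct functions. I would do this recursively by peeling off the innermost sawtooth first. Between any two consecutive level-$1$ breakpoints $b_i^{(1)}, b_{i+1}^{(1)}$, the function $f_{\mathbf{b}^{(1)},\dots,\mathbf{b}^{(k)}}$ is an affinely reparametrized copy of the outer composition $s_{\mathbf{b}^{(k)}}\circ\cdots\circ s_{\mathbf{b}^{(2)}}$ on $[0,M]$, so the level-$1$ breakpoints are the $w-1$ interior points where $f$ exhibits a global ``restart'' of the same sub-pattern across $w$ consecutive intervals. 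Once the level-$1$ breakpoints are extracted, $\mathbf{b}^{(1)}$ is determined; restricting to $[0,b_1^{(1)}]$ and rescaling affinely onto $[0,M]$ recovers the outer composition, and one recurses to extract $\mathbf{b}^{(2)},\dots,\mathbf{b}^{(k)}$ in turn.

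The main obstacle will be making this recursive peel-off rigorous, that is, arguing that level-$1$ breakpoints are the unique $w-1$ interior points at which the described repetition pattern with the correct multiplicity occurs, without any ``accidental'' coincidences arising from breakpoints introduced by deeper levels. The key technical fact needed is that every piece of the outer composition $s_{\mathbf{b}^{(k)}}\circ\cdots\circ s_{\mathbf{b}^{(2)}}$ is a bijection onto $[0,M]$ (which is exactly what the induction of the previous paragraph supplies), so the pattern repeats \emph{identically} across level-$1$ cells and only across those, and no lower-level group of $w$ cells can reproduce the same aggregate pattern because its domain spans only a proper sub-interval of $[0,M]$ under the pullback.
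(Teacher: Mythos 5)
Your approach is essentially the paper's: compose sawtooth building blocks parametrized by $\Delta^{w-1}_M$, realize each block by a width-$w$ two-layer net, stack them via Lemma~\ref{lem:comp-DNN} to get a depth-$(k+1)$ width-$w$ net, and multiply piece counts level by level. The two proofs agree on the overall plan, and your piece-multiplication lemma (each sub-piece is an affine bijection onto $[0,M]$, and this property is preserved under composition) is exactly the right induction invariant behind the paper's terse ``simple induction on $k$'' in Lemma~\ref{lem:construct-hard}. Your recursive peel-off argument for injectivity is also the natural route to the ``distinct function'' clause, which the paper does not spell out at all; it is a genuine addition, and you are right that the delicate point is excluding accidental coincidences at deeper levels, which the surjectivity invariant is what rules out.

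However, the specific block you use is off in a way that matters for the architecture. You declare $s_\b$ to be zero outside $[0,M]$, but then $s_\b$ is not even continuous for odd $w$: the alternating knot sequence $(0,0),(b_1,M),(b_2,0),\ldots$ lands on the value $M$ at $x=M$ precisely when $w$ is odd, contradicting $s_\b \equiv 0$ on $[M,\infty)$. For even $w$ it is continuous, but viewed as a function $\R\to\R$ it has $w+2$ pieces (two extra flat tails), so Theorem~\ref{thm:1-dim-2-layer} gives a bound of $w+2$ hidden nodes, and the flat-piece saving in Corollary~\ref{cor:tighter-bound-2-layer} (which saves exactly one node when one side is flat) only brings it to $w+1$, not the $w$ required to match the hard family of Theorem~\ref{thm:high-to-low}. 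The paper's $h_\a$ of Definition~\ref{def:hardfunc} avoids both problems: it is flat on $(-\infty,0]$ and \emph{extends the last interior segment linearly} past $M$, giving a continuous PWL function with exactly $w+1$ pieces and a flat leftmost piece, hence width exactly $w$ by the corollary. Replace your zero extension on the right by the paper's linear continuation and everything else in your argument goes through unchanged, since all of the composition, piece-counting and peel-off work takes place on $[0,M]$, where the two blocks coincide.
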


\noindent The following is an immediate corollary of Theorem~\ref{thm:high-to-low} by choosing the parameters carefully.

\begin{corollary}\label{cor:special-case-1} For every $k \in \N$ and $\epsilon > 0$, there is a family of functions defined on the real line such that every function $f$ from this family can be represented by a $(k^{1+\epsilon}) + 1$-layer DNN with size $k^{2+\epsilon}$ and if $f$ is represented by a $k+1$-layer DNN, then this DNN must have size at least $\frac12k\cdot k^{k^\epsilon} - 1$. Moreover, this family can be parametrized as,  $\cup_{M>0}\Delta^{k^{2+\epsilon}-1}_M$.
\end{corollary}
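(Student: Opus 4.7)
The plan is to specialize Theorems~\ref{thm:high-to-low} and~\ref{thm:RELU_DNN_hard} to the choice $(k, w) \mapsto (\lceil k^{1+\epsilon}\rceil, k)$; once this substitution is made, the corollary reduces to elementary arithmetic. Throughout I assume $k \geq 2$ so that the hypothesis $w \geq 2$ of Theorem~\ref{thm:high-to-low} is satisfied; the case $k = 1$ is trivial.

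First, I would apply Theorem~\ref{thm:high-to-low} with depth index $K := k^{1+\epsilon}$ (rounded up to an integer if needed) and width $W := k$. The theorem then produces a family of $\R \to \R$ functions representable by a $(K+1) = (k^{1+\epsilon}+1)$-layer ReLU DNN of width $W = k$. The total size of such a DNN is the number of hidden ReLU units, namely $K \cdot W = k^{1+\epsilon} \cdot k = k^{2+\epsilon}$, which matches the claimed upper bound on the representation size.

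Second, for the lower bound at depth $k+1$ I would apply the hardness half of the same theorem with $k' := k$. This is admissible because $k = k' \leq K = k^{1+\epsilon}$ whenever $\epsilon \geq 0$ and $k \geq 1$. The bound $\tfrac{1}{2} k' W^{K/k'} - 1$ then evaluates to $\tfrac{1}{2} k \cdot k^{k^{1+\epsilon}/k} - 1 = \tfrac{1}{2} k \cdot k^{k^{\epsilon}} - 1$, exactly the quantity asserted.

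Third, for the parametrization I would invoke Theorem~\ref{thm:RELU_DNN_hard} under the same substitution; it realizes the family as being indexed by $\bigcup_{M > 0}(\Delta^{k-1}_M)^{k^{1+\epsilon}}$. To match the statement of the corollary, $\bigcup_{M > 0}\Delta^{k^{2+\epsilon}-1}_M$, one would embed the product of $k^{1+\epsilon}$ blocks of $(k-1)$ strictly-ordered coordinates into a single larger simplex by translating each block by a sufficiently large additive shift so that the global strict-ordering condition is preserved. The main obstacle is really just this last identification, since the product simplex has dimension $(k-1)k^{1+\epsilon} = k^{2+\epsilon} - k^{1+\epsilon}$, which is strictly smaller than $k^{2+\epsilon}-1$; one therefore needs to verify that the concatenation-with-shift map is continuous and injective, so that the conclusion of Theorem~\ref{thm:RELU_DNN_hard} that distinct parameter points yield distinct hard functions survives the embedding. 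Everything else is direct substitution into previously-established theorems and routine manipulation of the tower $k^{k^{\epsilon}}$.
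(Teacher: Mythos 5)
Your substitution for the depth, size, and lower bound is exactly the intended ``immediate'' instantiation of Theorem~\ref{thm:high-to-low}: with $K = k^{1+\epsilon}$ and $W = k$, the representing net has size $KW = k^{2+\epsilon}$, and specialising the lower bound to $k' = k$ gives $\tfrac{1}{2}k\cdot k^{k^{1+\epsilon}/k}-1 = \tfrac{1}{2}k\cdot k^{k^\epsilon}-1$. That part matches the paper and is fine.

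The parametrisation step you call ``routine'' is, however, unfixable as you set it up. Theorem~\ref{thm:RELU_DNN_hard} parametrises the family by $\bigcup_{M>0}(\Delta^{k-1}_M)^{k^{1+\epsilon}}$, a set of dimension $(k-1)k^{1+\epsilon} = k^{2+\epsilon}-k^{1+\epsilon}$, whereas the corollary writes $\bigcup_{M>0}\Delta^{k^{2+\epsilon}-1}_M$, of dimension $k^{2+\epsilon}-1$; for $k\ge 2$ these differ. In this chapter ``parametrised by $X$'' means every point of $X$ indexes a distinct hard function --- Theorem~\ref{thm:RELU_DNN_hard} states this explicitly --- so it requires an injection \emph{from} $X$ \emph{onto} the family, which cannot exist from a higher-dimensional set onto a lower-dimensional one. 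Your concatenation-with-shift map runs in the opposite direction: it injects the product $(\Delta^{k-1}_M)^{k^{1+\epsilon}}$ into a single ordered simplex whose natural dimension is again $(k-1)k^{1+\epsilon}$, and further inclusion into $\Delta^{k^{2+\epsilon}-1}_{M'}$ lands the family in a proper measure-zero subset, giving an embedding, not a parametrisation. The honest conclusion is that the displayed parametrisation in the corollary is a misprint, and the faithful statement, inherited directly from Theorem~\ref{thm:RELU_DNN_hard}, is that the family is parametrised by $\bigcup_{M>0}(\Delta^{k-1}_M)^{k^{1+\epsilon}}$. You correctly flagged the dimension mismatch, but then treated it as a wrinkle one could smooth over with an embedding argument, when in fact no continuous map can make the stated claim literally true.
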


\noindent A particularly illuminative special case is obtained by setting $\epsilon = 1$ in Corollary~\ref{cor:special-case-1}: 
\begin{corollary}\label{cor:special-case-2} For every natural number $k \in \N$, there is a family of functions parameterized by the set $\cup_{M>0}\Delta^{k^3-1}_M$ such that any $f$ from this family can be represented by a $k^2 + 1$-layer DNN with $k^3$ nodes, and every $k+1$-layer DNN that represents $f$ needs at least $\frac12k^{k + 1} - 1$ nodes.
\end{corollary}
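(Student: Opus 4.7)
The plan is to obtain this corollary as an immediate instantiation of Theorem~\ref{thm:high-to-low} together with Theorem~\ref{thm:RELU_DNN_hard}, since Corollary~\ref{cor:special-case-2} is precisely the special case $\epsilon = 1$ of Corollary~\ref{cor:special-case-1}. All the work is in choosing the two free parameters of Theorem~\ref{thm:high-to-low} correctly and then reading off the three claims.

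First I would apply Theorem~\ref{thm:high-to-low} with its depth index set to $K := k^2$ and its width set to $w := k$. This produces a family of $\R \to \R$ hard functions that is representable by a $(K+1)$-layer $= (k^2 + 1)$-layer ReLU DNN of width $k$. Since the size of such a DNN equals the sum of the hidden-layer widths, it is at most $K \cdot w = k^2 \cdot k = k^3$, matching the claimed upper construction. This also matches the depth claimed in the corollary.

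Next I would invoke the lower-bound half of Theorem~\ref{thm:high-to-low}: for any $k' \le K$, a $(k'+1)$-layer ReLU DNN representing any function in this family must have size at least $\tfrac{1}{2} k' w^{K/k'} - 1$. Choosing $k' := k$ (which satisfies $k' \le K = k^2$ for all $k \ge 1$) and recalling $w = k$, the bound becomes
\[
\tfrac{1}{2}\, k \cdot k^{k^2/k} - 1 \;=\; \tfrac{1}{2}\, k \cdot k^{k} - 1 \;=\; \tfrac{1}{2}\, k^{k+1} - 1,
\]
which is exactly the lower-bound size claimed for $(k+1)$-layer representations. Note that the integer algebra $K/k' = k^2/k = k$ works out cleanly here, which is why the case $\epsilon = 1$ of Corollary~\ref{cor:special-case-1} is especially clean.

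Finally, for the parameterization claim, I would appeal to Theorem~\ref{thm:RELU_DNN_hard}, which states that the hard family under the above choice of $K$ and $w$ is indexed by $\bigcup_{M>0}\bigl(\Delta^{w-1}_{M}\bigr)^{K} = \bigcup_{M>0}\bigl(\Delta^{k-1}_{M}\bigr)^{k^2}$, with distinct parameters giving distinct functions. The only non-mechanical step is identifying this Cartesian product description with the single simplex $\bigcup_{M>0}\Delta^{k^3 - 1}_{M}$ appearing in the statement; I would do this by rescaling each of the $k^2$ copies of $\Delta^{k-1}_M$ into disjoint subintervals of $(0,M)$ and concatenating coordinates, so that the resulting $k^2(k-1)$-dimensional image sits inside $\Delta^{k^3-1}_{M}$ while preserving the distinctness of the induced functions. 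I expect this parameterization bookkeeping to be the main (but purely combinatorial) obstacle; the substantive lower- and upper-bound content of the corollary is entirely inherited from Theorem~\ref{thm:high-to-low}.
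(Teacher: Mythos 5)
Your proposal is correct and tracks the paper's own derivation: the paper obtains this corollary simply by setting $\epsilon = 1$ in Corollary~\ref{cor:special-case-1}, which itself instantiates Theorem~\ref{thm:high-to-low} (and Theorem~\ref{thm:RELU_DNN_hard}) with the theorem's depth-index equal to $k^{1+\epsilon}$, width $w=k$, and $k'=k$; your direct instantiation with $K=k^2$, $w=k$, $k'=k$ is exactly that derivation unrolled, and your algebra $\tfrac12 k\cdot k^{k^2/k}-1=\tfrac12 k^{k+1}-1$ is the same computation. The dimensional mismatch you flag between $\bigcup_{M>0}\bigl(\Delta^{k-1}_M\bigr)^{k^2}$ (dimension $k^3-k^2$) and $\bigcup_{M>0}\Delta^{k^3-1}_M$ (dimension $k^3-1$) is real, but it already sits in the paper's own statements of Corollaries~\ref{cor:special-case-1} and~\ref{cor:special-case-2} as written, so it is an imprecision you are inheriting rather than introducing; your rescaling/concatenation workaround gives an injection into $\Delta^{k^3-1}_M$ (not a surjection), which is the most that can be salvaged and is consistent with how the paper actually uses the parameterization.
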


Towards proving the above two theorems we first need the following definition and lemma, 

\begin{definition}\label{def:hardfunc} 
For $p \in \N$ and $\a \in \Delta^p_M$, we define a function $h_{\a}:\R \to \R$ which is piecewise linear over the segments $(-\infty, 0], [0, \a_1], [\a_1, \a_2], \ldots, [\a_p,M], [M, +\infty)$ defined as follows: $h_{\a}(x) = 0$ for all $x \leq 0$, $h_{\a}(\a_i) = M(i \mod 2)$, and $h_{\a}(M) = M-h_{\a}(\a_p)$ and for $x \geq M$, $h_\a(x)$ is a linear continuation of the piece over the interval $[\a_p,M]$. Note that the function has $p+2$ pieces, with the leftmost piece having slope $0$. Furthermore, for $\a^1, \ldots, \a^k \in \Delta^p_M$, we denote the composition of the functions $h_{\a^1}, h_{\a^2}, \ldots, h_{\a^k}$ by $$ H_{\a^1, \ldots, \a^k} := h_{\a^k} \circ h_{\a^{k-1}}\circ \ldots \circ h_{\a^1}.$$
\end{definition}

\begin{lemma}\label{lem:construct-hard} For any $M >0$, $p \in \N$, $k \in \N$ and $\a^1, \ldots, \a^k \in \Delta^p_M$, if we compose the functions $h_{\a^1}, h_{\a^2}, \ldots, h_{\a^k}$ the resulting function is a piecewise linear function with at most $(p+1)^k + 2$ pieces, i.e., $$ H_{\a^1, \ldots, \a^k} := h_{\a^k} \circ h_{\a^{k-1}}\circ \ldots \circ h_{\a^1}$$ is piecewise linear with at most $(p+1)^k + 2$ pieces, with $(p+1)^k$ of these pieces in the range $[0,M]$ (see Figure~\ref{fig:sawtoothcomposition}). Moreover, in each piece in the range $[0,M]$, the function is affine with minimum value $0$ and maximum value $M$.
\end{lemma}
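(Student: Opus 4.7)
The plan is to prove the lemma by induction on $k$, maintaining a strong structural invariant on the composition $G_k := H_{\a^1,\ldots,\a^k}$ treated separately on the three regions $(-\infty,0]$, $[0,M]$, and $[M,\infty)$. The invariant I would carry along is: (i) $G_k \equiv 0$ on $(-\infty,0]$; (ii) on $[0,M]$, the function $G_k$ has at most $(p+1)^k$ affine pieces, each of which is an affine bijection of its subinterval onto $[0,M]$ (so each attains minimum value $0$ and maximum value $M$); (iii) on $[M,\infty)$, $G_k$ is a single affine piece whose image is contained in $(-\infty,0]\cup[M,\infty)$, with $G_k(M)\in\{0,M\}$. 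Summing the piece counts over the three regions then gives the claimed bound of $(p+1)^k+2$ pieces in total, with $(p+1)^k$ lying in $[0,M]$.

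The base case $k=1$ reads directly off Definition~\ref{def:hardfunc}: the breakpoints $0 < \a_1 < \cdots < \a_p < M$ partition $[0,M]$ into $p+1$ subintervals on which $h_{\a^1}$ alternates between the endpoint values $0$ and $M$, so each restriction is an affine bijection onto $[0,M]$; a short parity check on $p$ confirms that $h_{\a^1}(M) \in \{0,M\}$ and that the linear continuation on $[M,\infty)$ lies in $(-\infty,0]$ when $p$ is odd and in $[M,\infty)$ when $p$ is even, hence outside $(0,M)$.

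For the inductive step I would write $G_k = h_{\a^k}\circ G_{k-1}$ and verify (i)--(iii) region by region. Part (i) is immediate since $h_{\a^k}(0)=0$. For (ii), each of the (inductively) $(p+1)^{k-1}$ pieces of $G_{k-1}|_{[0,M]}$ is an affine bijection onto $[0,M]$; composing each such bijection with $h_{\a^k}|_{[0,M]}$ (which itself consists of $p+1$ affine bijections onto $[0,M]$) refines it into exactly $p+1$ new affine bijections onto $[0,M]$, yielding $(p+1)^k$ pieces in total. The main obstacle is part (iii): a priori the image of $G_{k-1}$ on $[M,\infty)$ could enter $(0,M)$ and then get chopped into many new pieces by $h_{\a^k}$, breaking the invariant. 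The clean resolution is exactly what the inductive hypothesis delivers: the image of $G_{k-1}|_{[M,\infty)}$ sits inside $(-\infty,0]\cup[M,\infty)$, on which $h_{\a^k}$ is either identically $0$ (on the left half-line) or its single affine continuation piece (on $[M,\infty)$). The composition is therefore again a single affine piece, and the same parity-of-$p$ analysis as in the base case shows that its image once more avoids $(0,M)$, closing the induction.
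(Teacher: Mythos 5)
Your proposal is correct and takes the same route the paper indicates (the paper's proof is just ``Simple induction on $k$''); you supply the natural strengthened invariant -- vanishing on $(-\infty,0]$, affine bijections onto $[0,M]$ on each interior piece, and a single outer ray whose image avoids $(0,M)$ -- and correctly identify that the key point is showing the outer ray stays outside $(0,M)$ so that $h_{\a^k}$ does not chop it further. The parity check is what closes that case, exactly as you say.
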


\begin{proof} 
Simple induction on $k$.
\end{proof}

\begin{proof}[Proof of Theorem~\ref{thm:RELU_DNN_hard}] Given $k \geq 1$ and $w \geq 2$, choose any point $$ (\a^1, \ldots, \a^k) \in \bigcup_{M > 0}\underbrace{(\Delta^{w-1}_M \times \Delta^{w-1}_M\times \ldots \times \Delta^{w-1}_M)}_{k \textrm{ times}}.$$ 

By Definition~\ref{def:hardfunc}, each $h_{\a^i}$, $i=1, \ldots, k$ is a piecewise linear function with $w+1$ pieces and the leftmost piece having slope $0$. Thus, by Corollary~\ref{cor:tighter-bound-2-layer}, each $h_{\a^i}$, $i=1, \ldots, k$ can be represented by a 2-layer ReLU DNN with size $w$. Using Lemma~\ref{lem:comp-DNN}, $H_{\a^1, \ldots, \a^k}$ can be represented by a $k+1$ layer DNN with size $wk$; in fact, each hidden layer has exactly $w$ nodes.
\end{proof}

\begin{proof}[Proof of Theorem~\ref{thm:high-to-low}] Follows from Theorem~\ref{thm:RELU_DNN_hard} and Lemma~\ref{lem:size-bounds}.
\end{proof}

\begin{figure}[h]
\includegraphics[width=1\textwidth]{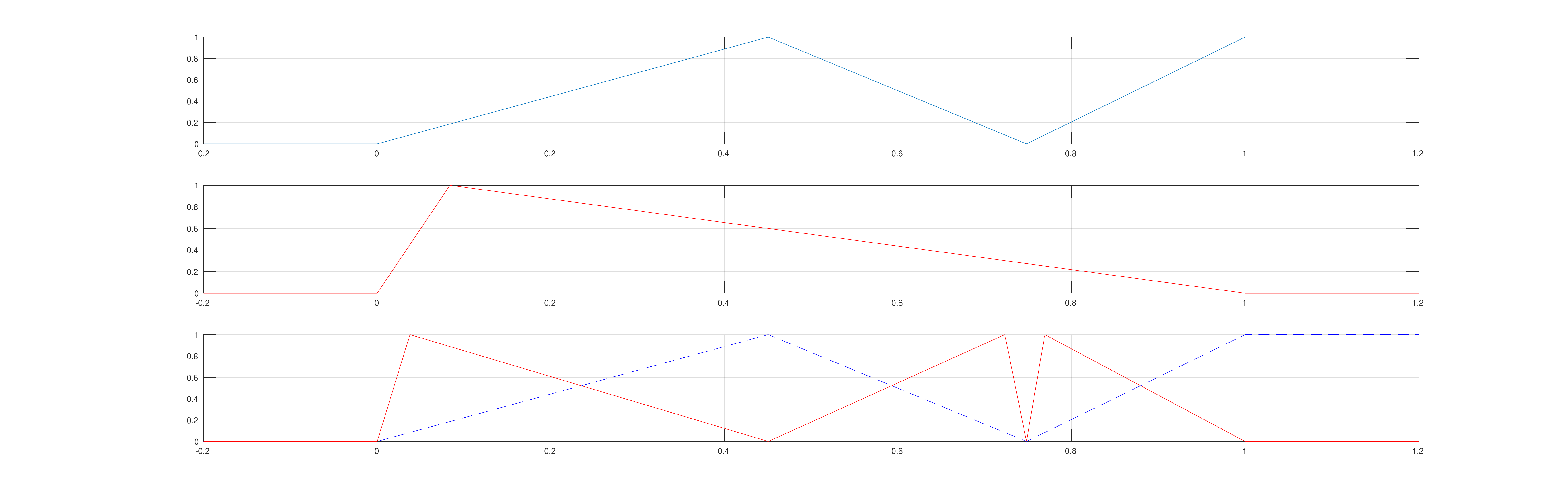}
\caption{Top: $h_{\a^1}$ with $\a^1\in \Delta_1^2$ with 3 pieces in the range $[0,1]$. Middle: $h_{\a^2}$ with $\a^2 \in \Delta_1^1$ with 2 pieces in the range $[0,1]$. Bottom: $H_{\a^1,\a^2}=h_{\a^2} \circ h_{\a^1}$ with $2\cdot 3 = 6$ pieces in the range $[0,1]$. The dotted line in the bottom panel corresponds to the function in the top panel. It shows that for every piece of the dotted graph, there is a full copy of the graph in the middle panel.}\label{fig:sawtoothcomposition}
\end{figure}

We can also get hardness of approximation versions of Theorem~\ref{thm:high-to-low} and Corollaries~\ref{cor:special-case-1} and~\ref{cor:special-case-2}, with the same gaps (upto constant terms), using the following theorem.

\begin{theorem}\label{thm:approximation} For every $k \geq 1$, $w \geq 2$, there exists a function $f_{k,w}$ that can be represented by a $(k+1)$-layer ReLU DNN with $w$ nodes in each layer, such that for all $\delta > 0$ and $k' \leq k$ the following holds:
$$\inf_{g \in \mathcal{G}_{k',\delta}}\int_{x=0}^{1}{\vert f_{k,w}(x) - g(x) \vert dx} > \delta,$$ where $\mathcal{G}_{k',\delta}$ is the family of functions representable by ReLU DNNs with depth at most $k'+1$, and size at most $k'\frac{w^{k/k'}(1 -4 \delta)^{1/k'}}{2^{1+1/k'}}$.
\end{theorem}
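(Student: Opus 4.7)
The plan is to take $f_{k,w}$ to be the ``equi-spaced'' sawtooth function $H_{\a^1,\ldots,\a^k}$ of Definition~\ref{def:hardfunc} with $M=1$ and every $\a^i\in\Delta^{w-1}_1$ equal to $(1/w,2/w,\ldots,(w-1)/w)$. A straightforward induction on $k$, using the fact that each $h_{\a^i}$ is linear with slope $\pm w$ on each of its $w$ pieces in $[0,1]$, shows that $f_{k,w}$ restricted to $[0,1]$ is the standard $w^k$-tooth sawtooth: it consists of $w^k$ monotone affine pieces of equal length $1/w^k$ taking the values $0$ and $1$ alternately at the $w^k+1$ breakpoints $j/w^k$. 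By Theorem~\ref{thm:RELU_DNN_hard} this $f_{k,w}$ is representable by a $(k+1)$-layer ReLU DNN with $w$ nodes in each layer, which establishes the ``upper'' half of the claim.

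For the lower bound, group consecutive pairs of pieces of $f_{k,w}$ into $\lfloor w^k/2\rfloor$ disjoint ``bumps'', each a triangle of base $2/w^k$ and height $1$ supported on an interval of the form $[2i/w^k,\,2(i+1)/w^k]$. A short piecewise integration (reducible by affine rescaling to the unit-base, unit-height case, and then by a symmetry/convexity argument to a one-variable minimization over constants $c\in[0,1]$ of $c^2-c+\tfrac12$) shows that the minimum of $\int|T-\ell|$ over all affine $\ell$ on a single bump is exactly $(2/w^k)\cdot\tfrac14=1/(2w^k)$, attained by the constant $\ell\equiv 1/2$. Now suppose $g\in\mathcal{G}_{k',\delta}$ has $N'$ affine pieces on $[0,1]$; then $g$ has at most $N'-1$ internal breakpoints, so on at least $w^k/2-(N'-1)$ of the bumps $g$ is affine throughout, and summing the per-bump lower bounds gives
\begin{equation*}
\int_0^1\lvert f_{k,w}(x)-g(x)\rvert\,dx \;\geq\; \bigl(w^k/2-N'+1\bigr)\cdot\frac{1}{2w^k} \;=\; \frac14-\frac{N'-1}{2w^k}.
\end{equation*}

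It remains to convert the size bound on $g$ into a bound on $N'$. By Lemma~\ref{lem:size-bounds} (the same piece-count bound already used in the proof of Theorem~\ref{thm:high-to-low}), combined with the AM--GM-optimum distribution of a fixed total size $s$ across the $k'$ hidden layer widths $w_1,\ldots,w_{k'}$, a depth-$(k'+1)$ ReLU DNN of size $s$ represents a PWL function with at most $(2s/k')^{k'}$ pieces. Plugging in the hypothesis $s\leq k'w^{k/k'}(1-4\delta)^{1/k'}/2^{1+1/k'}$ yields
\begin{equation*}
N' \;\leq\; (2s/k')^{k'} \;=\; \frac{w^k(1-4\delta)}{2},
\end{equation*}
and substituting this into the previous display produces $\int_0^1\lvert f_{k,w}-g\rvert\,dx \geq \delta+1/(2w^k) > \delta$, as required. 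The main technical step is the per-bump $L^1$ lower bound $1/(2w^k)$: the constant $\tfrac14$ that comes out of the unit-triangle calculation is precisely what forces the factor $(1-4\delta)$ in the allowed size, so any weaker constant here would yield a correspondingly weaker size threshold. The remaining work, namely matching the AM--GM-optimal piece count against the prescribed $s$, is then routine bookkeeping and accounts for the otherwise peculiar exponent $2^{1+1/k'}$ appearing in the statement.
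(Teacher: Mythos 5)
Your proof is correct and takes essentially the same approach as the paper: the same equi-spaced sawtooth $f_{k,w}=H_{\a,\ldots,\a}$ with $\a=(1/w,\ldots,(w-1)/w)$, the same per-triangle $L^1$ lower bound of $1/(2w^k)$ attained by the constant $1/2$, the same counting argument against the comparator's breakpoints, and the same invocation of Lemma~\ref{lem:size-bounds} to convert a size budget into a piece-count bound. The only cosmetic difference is that you drop the floor in "$\lfloor w^k/2\rfloor$ bumps," which harmlessly inflates your intermediate slack by $1/(4w^k)$ but does not affect the conclusion.
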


The depth-size trade-off results in Theorems~\ref{thm:high-to-low}, and~\ref{thm:approximation} extend and improve Telgarsky's theorems from~\citep{T1,T2} in the following three ways: 

\begin{itemize} 
\item[(i)] If we use our Theorem \ref{thm:approximation} to the pair of neural nets considered by Telgarsky in Theorem $1.1$ in \cite{T2} which are at depths $k^3$ (of size also scaling as $k^3$) and $k$ then for this purpose of approximation in the $\ell_1-$norm we would get a size lower bound for the shallower net which scales as $\Omega(2^{k^2})$ which is exponentially (in depth) larger than the lower bound of $\Omega(2^k)$ that Telgarsky can get for this scenario. 

\item[(ii)] Telgarsky's family of hard functions is parameterized by a single natural number $k$. In contrast, we show that for every {\em pair} of natural numbers $w$ and $k$, and a point from the set in equation~\ref{eq:torus-simplex}, there exists a ``hard'' function which to be represented by a depth $k'$ network would need a size of at least $w^{\frac{k}{k'}}k'$. With the extra flexibility of choosing the parameter $w$, for the purpose of showing gaps in representation ability of deep nets we can shows size lower bounds which are \emph {super}-exponential in depth as explained in Corollaries~\ref{cor:special-case-1} and~\ref{cor:special-case-2}.

\item [(iii)]  A characteristic feature of the ``hard'' functions in Boolean circuit complexity is that they are usually a countable family of functions and not a ``smooth'' family of hard functions. In fact, in the last section of~\cite{T1}, Telgarsky states this as a ``weakness'' of the state-of-the-art results on ``hard'' functions for both Boolean circuit complexity and neural nets research.  In contrast, we provide a smoothly parameterized family of ``hard'' functions in Section~\ref{sec:gap-results} (parametrized by the set in equation~\ref{eq:torus-simplex}). Such a continuum of hard functions wasn't demonstrated before this work. 
\end{itemize}

We point out that Telgarsky's results in~\citep{T2} apply to deep neural nets with a host of different activation functions, whereas, our results are specifically for neural nets with rectified linear units. In this sense, Telgarsky's results from~\citep{T2} are more general than our results in this paper, but with weaker gap guarantees. Eldan-Shamir~\citep{OS,eldan2016power} show that there exists an $\R^n\to \R$ function that can be represented by a 3-layer DNN, that takes exponential in $n$ number of nodes to be approximated to within some constant by a 2-layer DNN. While their results are not immediately comparable with Telgarsky's or our results, it is an interesting open question to extend their results to a constant depth hierarchy statement analogous to the recent result of Rossman et al~\citep{rossman2015average}. We also note that in last few years, there has been much effort in the community to show size lowerbounds on ReLU DNNs trying to approximate various classes of functions which are themselves not necessarily exactly representable by ReLU DNNs~\citep{yarotsky2016error,liang2016deep,safran2017depth}.

\begin{proof}[Proof of Theorem~\ref{thm:approximation}] 
Given $k \geq 1$ and $w\geq 2$ define $q:=w^k$ and $s_q := \underbrace{h_\a\circ h_\a \circ \ldots \circ h_\a}_{k \textrm{ times}}$ where $\a = (\frac1w, \frac2w, \ldots, \frac{w-1}{w}) \in \Delta_1^{q-1}$. 
Thus, $s_q$ is representable by a ReLU DNN of width $w+1$ and depth $k+1$ by Lemma~\ref{lem:comp-DNN}. In what follows, we want to give a lower bound on the $\ell^1$ distance of $s_q$ from any continuous $p$-piecewise linear comparator $g_p: \R \to \R$. The function $s_q$ contains $\lfloor \frac{q}{2} \rfloor$ triangles of width $\frac{2}{q}$ and unit height. A $p$-piecewise linear function has $p-1$ breakpoints in the interval $[0,1]$. So that in at least $\lfloor \frac{w^k}{2}\rfloor  - (p-1)$ triangles, $g_p$ has to be affine. In the following we demonstrate that inside any triangle of $s_q$, any affine function will incur an $\ell^1$ error of at least $\frac{1}{2w^k}$.
\begin{align*}
\int_{x=\frac{2i}{w^k}}^{\frac{2i+2}{w^k}}&{|s_q(x) - g_p(x)| dx} = \int_{x=0}^{\frac{2}{w^k}}{\left|s_q(x) - (y_1 + (x-0)\cdot \frac{y_2-y_1}{\frac{2}{w^k}-0}) \right| dx}\\
 &= \int_{x=0}^{\frac{1}{w^k}}{\left| xw^k - y_1 - \frac{w^k x}{2} (y_2-y_1) \right| dx}+\int_{x=\frac{1}{w^k}}^{\frac{2}{w^k}}{\left| 2-xw^k - y_1 - \frac{w^k x}{2} (y_2-y_1) \right| dx}\\
 &= \frac{1}{w^k}\int_{z=0}^{1}{\left| z - y_1 - \frac{z}{2} (y_2-y_1) \right| dz}+ \frac{1}{w^k}\int_{z=1}^{2}{\left| 2-z - y_1 - \frac{z}{2} (y_2-y_1) \right| dz}\\
 &= \frac{1}{w^k}\left(-3 + y_1 + \frac{2 y_1^2}{2 + y_1 - y_2} + y_2 + \frac{2 (-2 + y_1)^2}{2 - y_1 + y_2}\right)
\end{align*}
The above integral attains its minimum of $\frac{1}{2w^k}$ at $y_1=y_2=\frac{1}{2}$.
 Putting together, $$\| s_{w^k} - g_p \|_1 \geq \left(\lfloor \frac{w^k}{2} \rfloor - (p-1)\right)\cdot \frac{1}{2w^k} \geq \frac{w^k - 1 -2(p-1)}{4w^k} = \frac{1}{4} - \frac{2p - 1}{4w^k}$$ Thus, for any $\delta > 0$, $$p \leq \frac{w^k -4w^k \delta +1}{2} \implies 2p-1 \leq (\frac{1}{4} - \delta)4w^k \implies \frac{1}{4} - \frac{2p-1}{4w^k} \geq \delta \implies \| s_{w^k} - g_p \|_1 \geq \delta.$$ The result now follows from Lemma~\ref{lem:size-bounds}.
\end{proof}

\subsection{A continuum of hard functions for $\R^n \to \R$ for $n\geq 2$} \label{sec:zon}

One measure of complexity of a family of $\R^n\to \R$ ``hard'' functions represented by ReLU DNNs is the asymptotics of the number of pieces as a function of dimension $n$, depth $k+1$ and size $s$ of the ReLU DNNs.  More precisely, suppose one has a family $\mathcal{H}$ of functions such that for every $n,k,w \in \N$ the family contains at least one $\R^n\to \R$ function representable by a ReLU DNN with depth at most $k+1$ and maximum width at most $w$. The following definition formalizes a notion of complexity for such a $\mathcal{H}$. 

\begin{definition}[$\comp_{\F}(n,k,w)$]
The measure $\comp_{\F}(n,k,w)$ is defined as the maximum number of pieces (see Definition~\ref{def:PWL}) of a $\R^n\to \R$ function from $\F$ that can be represented by a ReLU DNN with depth at most $k+1$ and maximum width at most $w$.
\end{definition}

Similar measures have been studied in previous works~\citep{Mon,Pas,raghu2016expressive}. {The best known families $\F$ are the ones from Theorem~4 of~\citep{Mon} and a mild generalization of Theorem~$1.1$ of~\cite{T2} to $k$ layers of ReLU activations with width $w$; these constructions achieve 
$\bigg(\lfloor(\frac{w}{n})\rfloor\bigg)^{(k-1)n}(\sum_{j=0}^{n}  {w \choose j})$}and $\comp_{\F}(n,k,s) = O(w^k)$, respectively. At the end of this section we would explain the precise sense in which we improve on these numbers. An analysis of this complexity measure is done using integer programming techniques in~\cite{serra2017bounding}.

\begin{definition} Let $\b^1, \ldots, \b^m \in \R^n$. The zonotope formed by $\b^1, \ldots, \b^m \in \R^n$ is defined as $$Z(\b^1, \ldots, \b^m):= \{\lambda_1\b^1 + \ldots + \lambda_m\b^m: -1 \leq \lambda_i \leq 1, \;\; i =1, \ldots, m\}.$$
The set of vertices of $Z(\b^1, \ldots, \b^m)$ will be denoted by $\verts(Z(\b^1, \ldots, \b^m))$. The {\em support function} $\gamma_{Z(\b^1, \ldots, \b^m)}:\R^n \to \R$ associated with the zonotope $Z(\b^1, \ldots, \b^m)$ is defined as $$\gamma_{Z(\b^1, \ldots, \b^m)}(\rr) = \max_{\x \in Z(\b^1, \ldots, \b^m)} \langle \rr, \x \rangle.$$
\end{definition}
\vspace{-3mm}
The following results are well-known in the theory of zonotopes~\citep{ziegler1995lectures}.

\begin{theorem}\label{thm:zonotope-struct} The following are all true.
\begin{enumerate}
\item { $|\verts(Z(\b^1, \ldots, \b^m))| \leq \sum_{i=0}^{n-1}{m-1 \choose i}$.} The set of $(\b^1, \ldots, \b^m) \in \R^n \times \ldots \times \R^n$ such that this {\em does not} hold at equality is a 0 measure set. 
\item $\gamma_{Z(\b^1, \ldots, \b^m)}(\rr) = \max_{\x \in Z(\b^1, \ldots, \b^m)} \langle \rr, \x \rangle = \max_{\x \in \verts(Z(\b^1, \ldots, \b^m))} \langle \rr, \x \rangle,$ and $\gamma_{Z(\b^1, \ldots, \b^m)}$ is therefore a piecewise linear function with $|\verts(Z(\b^1, \ldots, \b^m))|$ pieces.
\item $\gamma_{Z(\b^1, \ldots, \b^m)}(\rr) = |\langle \rr, \b^1\rangle| + \ldots + |\langle \rr, \b^m\rangle|$.
\end{enumerate}
\end{theorem}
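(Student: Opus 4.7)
All three parts follow from viewing $Z(\b^1,\ldots,\b^m)$ as a Minkowski sum of line segments and then invoking standard convex analysis together with the combinatorics of central hyperplane arrangements. The plan is to dispatch (3) first, then (2), and finally (1), which is the combinatorially delicate one.

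For part (3), I would observe that $Z(\b^1,\ldots,\b^m) = \sum_{i=1}^{m} [-\b^i,\b^i]$ as a Minkowski sum. The support function is additive under Minkowski sums, i.e. $\gamma_{A+B}(\rr) = \gamma_A(\rr)+\gamma_B(\rr)$, which is immediate from the definition of $\gamma$. For a single segment I compute $\gamma_{[-\b,\b]}(\rr) = \max_{\lambda \in [-1,1]} \lambda \langle \rr,\b\rangle = |\langle \rr,\b\rangle|$. Summing over $i=1,\ldots,m$ gives the stated formula. Part (2) is then essentially a restatement of the standard fact that a linear functional over a polytope attains its maximum on the vertex set, which reduces the outer $\max$ in $\gamma_Z$ to a max over finitely many vertex-indexed linear functionals $\rr \mapsto \langle \rr,\v\rangle$; since a pointwise maximum of finitely many affine functions is piecewise linear with pieces indexed (generically) by the active index, $\gamma_Z$ has at most $|\verts(Z)|$ linear pieces.

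For part (1), I would establish a bijection between $\verts(Z)$ and the full-dimensional chambers of the central hyperplane arrangement $\mathcal{A} = \{H_i\}_{i=1}^m$ in $\R^n$, where $H_i := \{\rr \in \R^n : \langle \b^i,\rr\rangle = 0\}$. Concretely, each vertex $\v$ of $Z$ is the unique maximizer of some linear functional $\rr$ over $Z$, and by the Minkowski-sum structure this maximizer has the explicit form $\v = \sum_{i=1}^m \mathrm{sign}(\langle \rr,\b^i\rangle)\b^i$. Hence vertices of $Z$ biject with sign patterns $(\epsilon_1,\ldots,\epsilon_m) \in \{-1,+1\}^m$ that are realized as $\rr$ ranges over $\R^n \setminus \bigcup_i H_i$, and these are in turn in one-to-one correspondence with the chambers of $\mathcal{A}$. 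The chamber count for a central arrangement in general position in $\R^n$ is given by the classical formula $\sum_{i=0}^{n-1}\binom{m-1}{i}$ (up to the symmetry $\rr \mapsto -\rr$, which pairs antipodal chambers corresponding to antipodal vertices of the centrally symmetric polytope $Z$); this is provable by a deletion-restriction recursion on $\mathcal{A}$, or equivalently by Zaslavsky's formula applied to the intersection poset, and under general position the recursion collapses to the binomial identity. Finally, "general position" here means that every $\leq n$-subset of $\{\b^1,\ldots,\b^m\}$ is linearly independent, which is the complement of a finite union of proper algebraic subvarieties of $(\R^n)^m$ and hence has Lebesgue measure zero.

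The main obstacle is the chamber count of the central arrangement; the bijection itself is routine and the measure-zero statement is a standard algebraic-geometric observation. The counting argument is where the binomial coefficients enter, and I would carry it out either by induction on $m$ using deletion-restriction (removing one hyperplane $H_m$ and relating the chambers of $\mathcal{A}$ to those of $\mathcal{A}\setminus\{H_m\}$ and of the induced arrangement on $H_m$) or by a direct application of Zaslavsky's theorem to the Möbius function of the lattice of flats; both are detailed in Ziegler's treatment of zonotopes, which is the canonical reference cited.
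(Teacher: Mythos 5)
The paper offers no proof of this theorem; it is stated as a standard fact about zonotopes with a citation to Ziegler's \emph{Lectures on Polytopes}, so there is nothing in the paper to compare your argument against line by line. Your treatment of parts (2) and (3) is correct: additivity of the support function under Minkowski sums handles (3), and the reduction of $\gamma_Z$ to a max of finitely many vertex-indexed linear functionals handles (2).

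Part (1) has a genuine gap, and the issue is exactly the clause you added ``up to the symmetry $\rr \mapsto -\rr$''. The vertex-to-chamber bijection is right: a generic direction $\rr$ picks out the unique vertex $\sum_i \mathrm{sign}(\langle \rr, \b^i\rangle)\b^i$, and so vertices of $Z$ correspond bijectively to chambers of the central arrangement $\{\b^i \cdot \rr = 0\}$. But the number of chambers of a generic central arrangement of $m$ hyperplanes in $\R^n$ is $2\sum_{i=0}^{n-1}\binom{m-1}{i}$ (check $n=2$: $m$ concurrent lines cut the plane into $2m$ sectors, while $\sum_{i=0}^{1}\binom{m-1}{i}=m$; the corresponding $2m$-gon zonotope does have $2m$ vertices). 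Quotienting by the antipodal map identifies pairs of chambers, but the two chambers of each pair map to two \emph{distinct} antipodal vertices of the centrally symmetric polytope $Z$, so this quotient does not halve the vertex count. Your argument therefore proves $|\verts(Z)| = 2\sum_{i=0}^{n-1}\binom{m-1}{i}$ generically, not $\sum_{i=0}^{n-1}\binom{m-1}{i}$ as stated. The theorem as printed in the paper appears to be missing this factor of $2$ as well (the formula in Ziegler does carry the factor of $2$), so your ``up to symmetry'' hedge is papering over a discrepancy between the chamber count you correctly identified and the bound you were asked to prove, rather than resolving it.
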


\begin{definition}[extremal zonotope set]\label{extremal} The set $S(n,m)$ will denote the set of $(\b^1, \ldots, \b^m) \in \R^n \times \ldots \times \R^n$ such that 
{ $|\verts(Z(\b^1, \ldots, \b^m))| = \sum_{i=0}^{n-1}{m-1 \choose i}$.} $S(n,m)$ is the so-called ``extremal zonotope set'', which is a subset of $\R^{nm}$, whose complement has zero Lebesgue measure in $\R^{nm}$.
\end{definition}

\begin{lemma}\label{lem:zonotope-DNN} Given any $\b^1, \ldots, \b^m \in \R^n$, there exists a 2-layer ReLU DNN with size $2m$ which represents the function $\gamma_{Z(\b^1, \ldots, \b^m)}(\rr)$.
\end{lemma}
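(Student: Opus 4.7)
The plan is to use the explicit formula for the support function of a zonotope from Theorem~\ref{thm:zonotope-struct}, part~3, which states that
\[
\gamma_{Z(\b^1,\ldots,\b^m)}(\rr) \;=\; |\langle \rr,\b^1\rangle| + \cdots + |\langle \rr,\b^m\rangle|.
\]
This reduces the representability question to expressing each absolute value $|\langle \rr,\b^i\rangle|$ using ReLU gates and then summing the resulting $m$ terms.

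The key identity I will use is $|t| = \relu(t) + \relu(-t)$ for $t \in \R$. Applying this to the linear functional $t = \langle \rr,\b^i\rangle$, I get
\[
|\langle \rr,\b^i\rangle| \;=\; \relu(\langle \rr,\b^i\rangle) + \relu(\langle \rr,-\b^i\rangle),
\]
which shows that each summand is computable by two ReLU gates whose pre-activations are the affine functions $\rr \mapsto \langle \rr,\b^i\rangle$ and $\rr \mapsto -\langle \rr,\b^i\rangle$ (with zero biases). Doing this for every $i \in \{1,\ldots,m\}$ accounts for exactly $2m$ ReLU gates in a single hidden layer.

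Concretely, I would define the 2-layer ReLU DNN as follows: the first affine transformation $\A_1 \in \mathcal{A}_n^{2m}$ sends $\rr \in \R^n$ to the $2m$-dimensional vector $(\langle \rr,\b^1\rangle,\langle \rr,-\b^1\rangle,\ldots,\langle \rr,\b^m\rangle,\langle \rr,-\b^m\rangle)$, which is realized by the weight matrix whose rows are $\b^1,-\b^1,\ldots,\b^m,-\b^m$ and with bias $\0$. Applying $\relu$ coordinatewise and then composing with the output affine transformation $\A_2 \in \mathcal{A}_{2m}^1$ given by the all-ones row vector and zero bias produces $\sum_{i=1}^m \bigl(\relu(\langle \rr,\b^i\rangle) + \relu(-\langle \rr,\b^i\rangle)\bigr) = \sum_{i=1}^m |\langle \rr,\b^i\rangle|$, which by Theorem~\ref{thm:zonotope-struct}(3) equals $\gamma_{Z(\b^1,\ldots,\b^m)}(\rr)$.

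There is no real obstacle here: once the identity $\gamma_{Z(\b^1,\ldots,\b^m)}(\rr) = \sum_i |\langle \rr,\b^i\rangle|$ from Theorem~\ref{thm:zonotope-struct} is invoked, the construction is immediate from the ReLU decomposition of the absolute value and has exactly $2m$ activation gates in the single hidden layer, matching the claimed size bound.
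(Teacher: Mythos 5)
Your proof is correct and follows essentially the same route as the paper: both invoke part 3 of Theorem~\ref{thm:zonotope-struct} to reduce $\gamma_{Z(\b^1,\ldots,\b^m)}$ to $\sum_i |\langle \rr,\b^i\rangle|$, and both decompose each absolute value into two ReLU gates (the paper writes it as $\max\{t,-t\}$, you write it as $\relu(t)+\relu(-t)$, which is the same thing). Your version is slightly more explicit in spelling out the affine maps $\A_1$ and $\A_2$, but there is no substantive difference.
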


\begin{proof}[Proof of Lemma~\ref{lem:zonotope-DNN}] By Theorem~\ref{thm:zonotope-struct}(part 3.), $\gamma_{Z(\b^1, \ldots, \b^m)}(\rr) = |\langle \rr, \b^1\rangle| + \ldots + |\langle \rr, \b^m\rangle|$. It suffices to observe $$|\langle \rr, \b^1\rangle| + \ldots + |\langle \rr, \b^m\rangle| = \max\{\langle \rr, \b^1\rangle, -\langle \rr, \b^1\rangle\} + \ldots + \max\{\langle \rr, \b^m\rangle, -\langle \rr, \b^m\rangle\}.$$
\end{proof}

\bigskip
\begin{figure}[t!]
\begin{subfigure}{.33\textwidth}
  \centering
  \includegraphics[width=.8\linewidth]{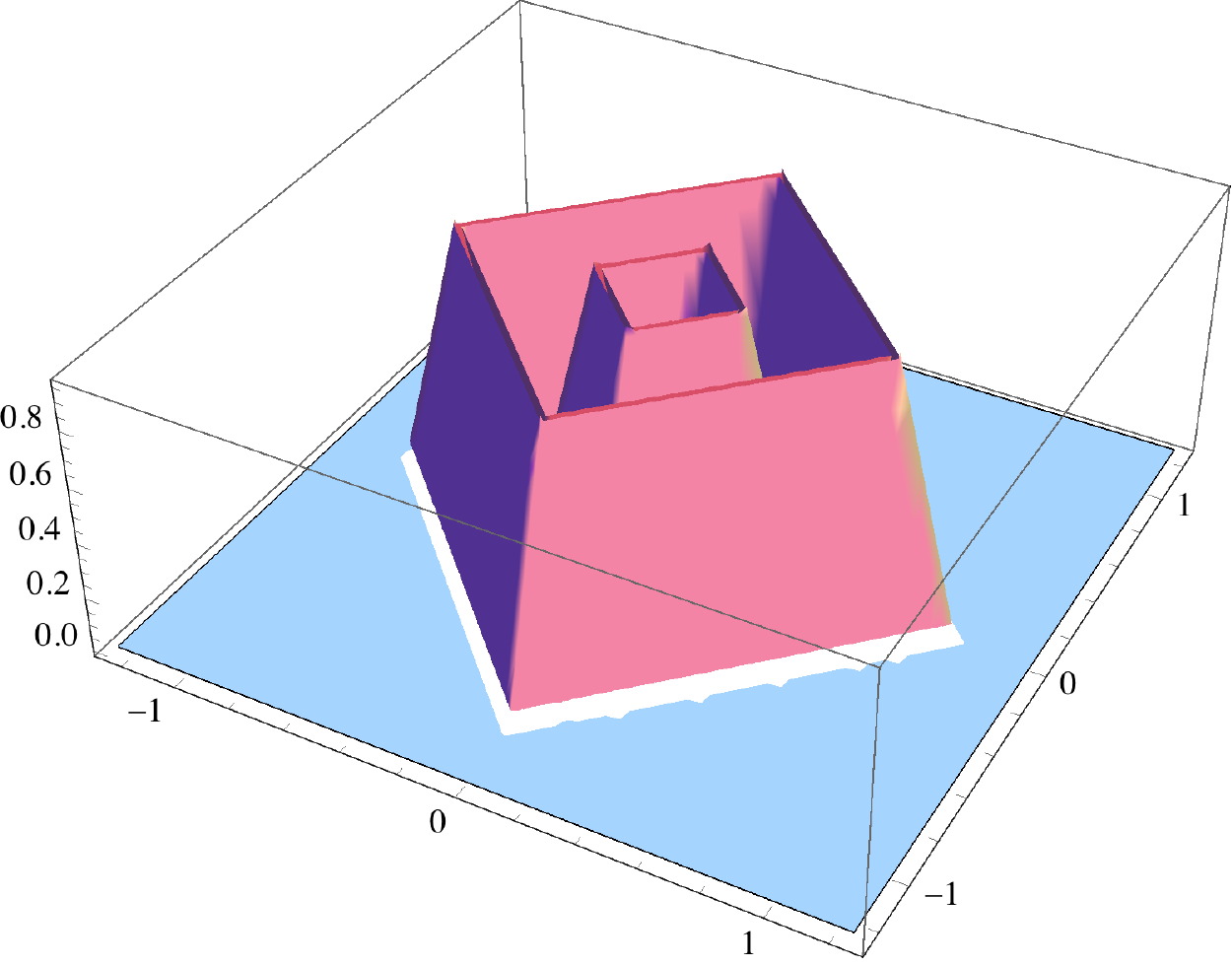}
  \caption{$H_{\frac{1}{2},\frac{1}{2}} \circ N_{\ell_1}$}
  \label{fig:sfig1}
\end{subfigure}%
\begin{subfigure}{.33\textwidth}
  \centering
  \includegraphics[width=.8\linewidth]{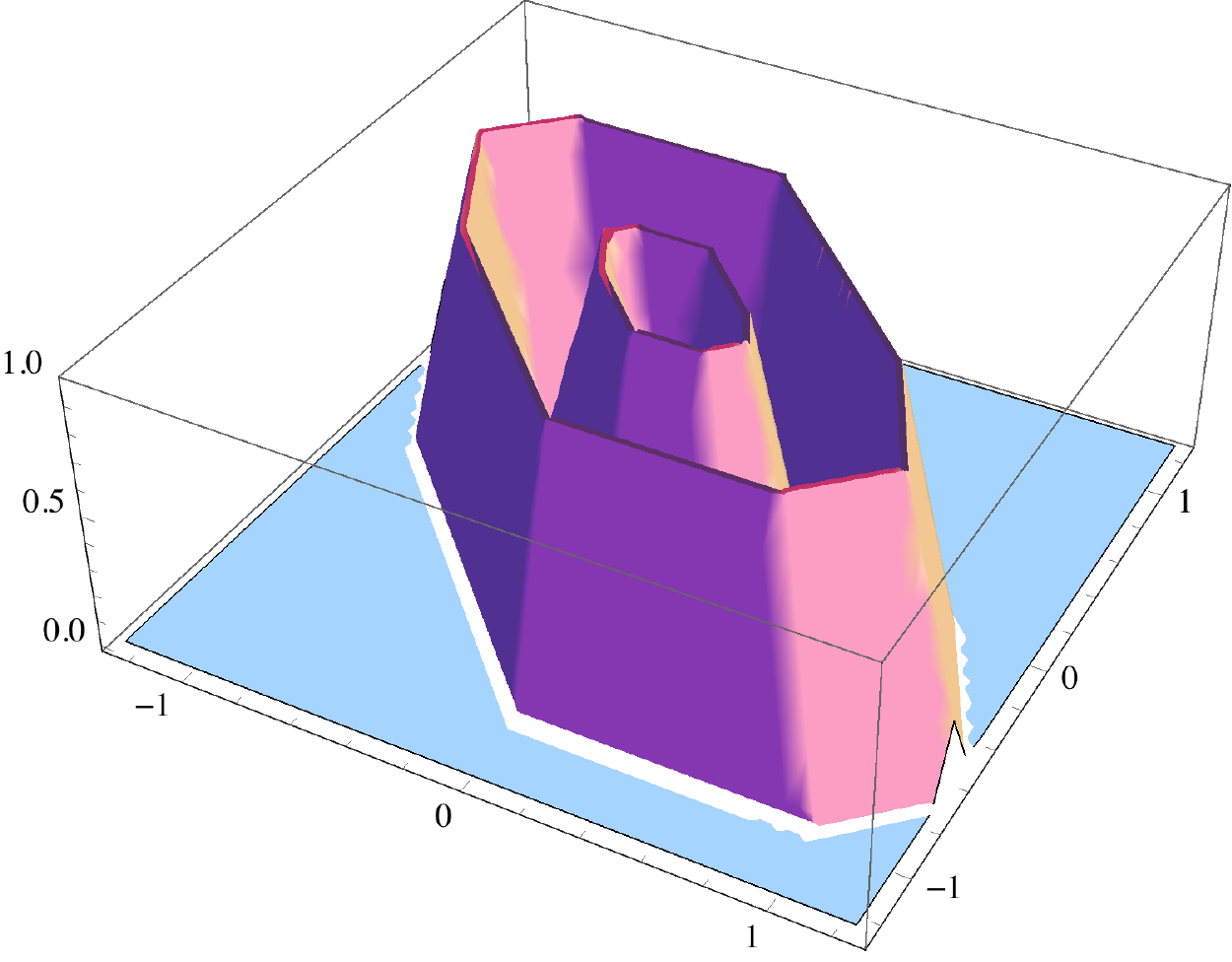}
  \caption{$H_{\frac{1}{2},\frac{1}{2}} \circ \gamma_{Z(\b^1,\b^2,\b^3,\b^4)}$}
  \label{fig:sfig2}
\end{subfigure}
\begin{subfigure}{.33\textwidth}
  \centering
  \includegraphics[width=.8\linewidth]{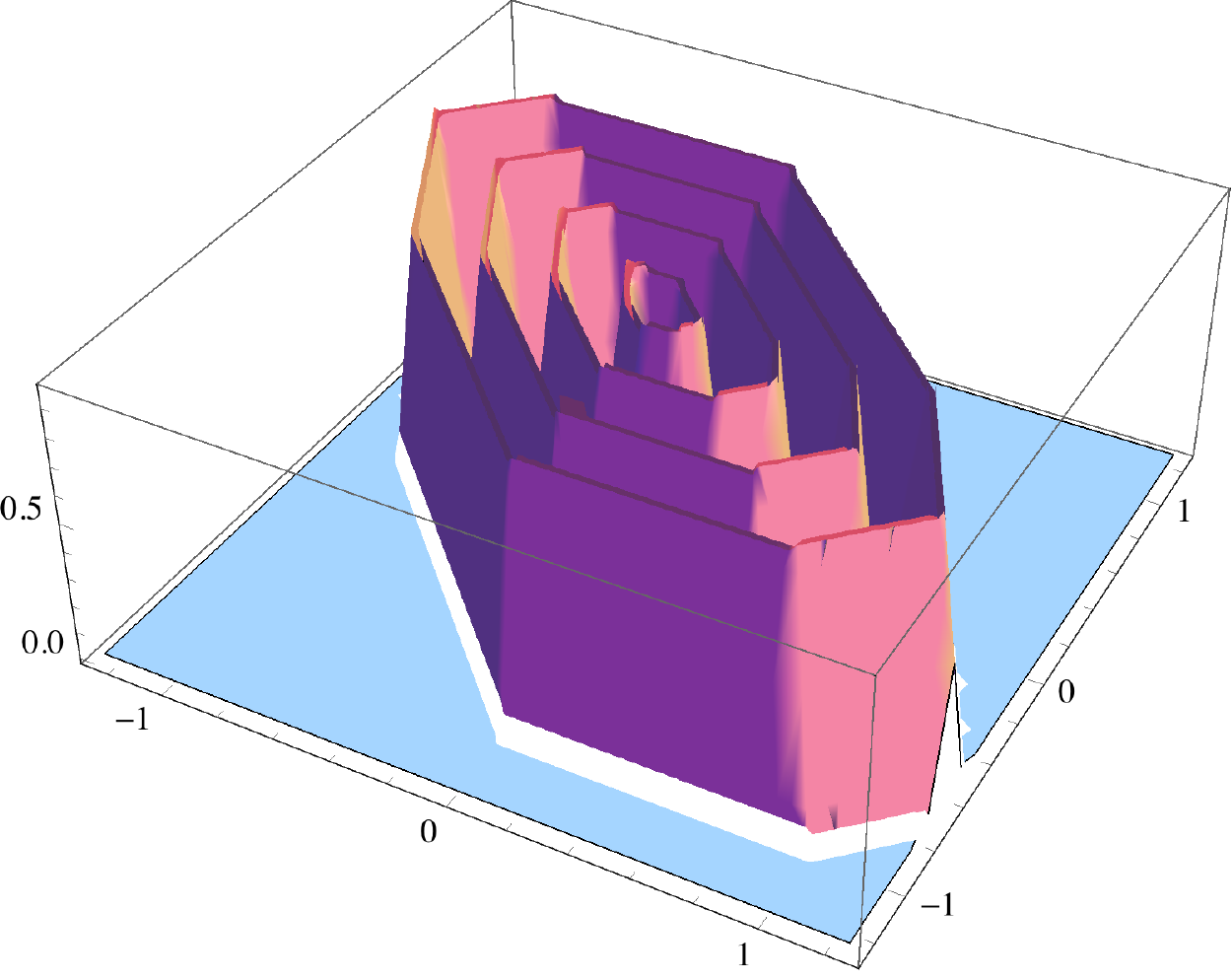}
  \caption{$H_{\frac{1}{2},\frac{1}{2},\frac{1}{2}} \circ \gamma_{Z(\b^1,\b^2,\b^3,\b^4)}$}
  \label{fig:sfig3}
\end{subfigure}
\caption{We fix the $\a$ vectors for a two hidden layer $\R \rightarrow \R$ hard function as $\a ^1 = \a ^2 = (\frac{1}{2}) \in \Delta ^1_1$ Left: A specific hard function induced by $\ell_1$ norm: $\operatorname{ZONOTOPE}_{2,2,2}^{2}[\a ^1, \a ^2 , \b ^1, \b ^2]$ where $\b ^1 = (0,1)$ and $\b ^2 = (1,0)$. Note that in this case the function can be seen as a composition of $H_{\a ^1, \a ^2}$ with $\ell_1$-norm $N_{\ell_1}(x):=\|x\|_1 = \gamma_{Z\left((0,1),(1,0)\right)}$. Middle: A typical hard function $\operatorname{ZONOTOPE}_{2,2,4}^{2}[\a ^1 , \a ^2,\c ^1,\c ^2,\c ^3,\c ^4]$ with generators $\c ^1 = (\frac{1}{4},\frac{1}{2}), \c ^2=(-\frac{1}{2},0), \c ^3=(0,-\frac{1}{4})$ and $\c ^4=(-\frac{1}{4},-\frac{1}{4})$. Note how increasing the number of zonotope generators makes the function more complex. Right: A \textit{harder} function from $\operatorname{ZONOTOPE}_{3,2,4}^{2}$ family with the same set of generators $\c_1,\c_2,\c_3,c_4$ but one more hidden layer $(k=3)$. Note how increasing the depth make the function more complex. (For illustrative purposes we plot only the part of the function which lies above zero.)}
\label{fig:fig}
\end{figure}


\begin{prop}\label{prop:zonotope-main} Given any tuple $(\b^1, \ldots, \b^m) \in S(n,m)$ and any point $$(\a^1, \ldots, \a^k) \in \bigcup_{M > 0}\underbrace{(\Delta^{w-1}_M \times \Delta^{w-1}_M\times \ldots \times \Delta^{w-1}_M)}_{k \textrm{ times}}, \vspace{-1mm}$$ the function $\operatorname{ZONOTOPE}_{k,w,m}^{n}[\a^1, \ldots, \a^k, \b^1, \ldots, \b^m] := H_{\a^1, \ldots, \a^k}\circ\gamma_{Z(\b^1, \ldots, \b^m)}$ has $(m-1)^{n-1}w^k$ pieces and it can be represented by a $k+2$ layer ReLU DNN with size $2m + wk$.
\end{prop}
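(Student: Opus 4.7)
I would split the statement into its two independent assertions: the architectural bound (depth $k{+}2$, size $2m+wk$) and the piece count $(m-1)^{n-1}w^k$. The architectural bound is a bookkeeping step that recombines two constructions already in hand. By Lemma~\ref{lem:zonotope-DNN}, the support function $\gamma_{Z(\b^1,\ldots,\b^m)}$ is realizable as a $2$-layer ReLU DNN of size $2m$. By the construction used in the proof of Theorem~\ref{thm:RELU_DNN_hard}, $H_{\a^1,\ldots,\a^k}$ is realizable as a $(k{+}1)$-layer ReLU DNN with $w$ nodes per hidden layer, for total size $wk$. Since $\operatorname{ZONOTOPE}_{k,w,m}^{n}[\cdots]$ is by definition the composition $H_{\a^1,\ldots,\a^k}\circ\gamma_{Z(\b^1,\ldots,\b^m)}$, I invoke the composition lemma (Lemma~\ref{lem:comp-DNN}) to glue the two nets along their single-dimensional interface and read off depth $2+(k{+}1)-1 = k{+}2$ and total size $2m+wk$.

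For the piece count, the strategy is to lift the $1$-dimensional sawtooth structure of $H_{\a^1,\ldots,\a^k}$ through the piecewise-affine structure of $\gamma_Z$. By Theorem~\ref{thm:zonotope-struct}(2), the extremality assumption $(\b^1,\ldots,\b^m)\in S(n,m)$ guarantees that $\gamma_Z$ is a piecewise-linear function that partitions $\R^n$ into polyhedral cones $\{C_\v\}_{\v\in\verts(Z)}$, with $\gamma_Z(\rr)=\langle\rr,\v\rangle$ on $C_\v$. On each cone, the level sets of $\gamma_Z$ are parallel hyperplane slices, and the preimage of $[0,M]$ is a polyhedral slab. By Lemma~\ref{lem:construct-hard}, $H_{\a^1,\ldots,\a^k}$ has $w^k$ affine pieces over $[0,M]$, alternately reaching $0$ and $M$, so on each slab this subdivides $C_\v$ into $w^k$ parallel strips each of which becomes a distinct affine piece of $H\circ\gamma_Z$ (the gradients on these strips are scalar multiples of $\v$ by the alternating slopes $\pm w^k/\text{(diff)}$ coming from Lemma~\ref{lem:construct-hard}). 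Multiplying, each cone contributes $w^k$ pieces, and summing the contributions over the $(m-1)^{n-1}$ cones provided by the extremal-zonotope combinatorics gives the claimed $(m-1)^{n-1}w^k$ pieces.

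\textbf{Anticipated main obstacle.} The subtle step is confirming that pieces remain distinct both \emph{within} a single cone (i.e.\ that the $w^k$ strips inside $C_\v$ are realized as genuinely different affine functions of $\rr$) and \emph{across} neighbouring cones (i.e.\ that a strip inside $C_\v$ and an adjacent strip inside $C_{\v'}$ are not accidentally affinely identical). Within a cone the alternating positive/negative slope structure built into $h_\a$ and propagated through composition (Lemma~\ref{lem:construct-hard}) prevents consecutive strips from sharing a gradient; across cones, distinct vertices $\v\neq\v'$ of $Z$ give linearly independent gradient directions for the restriction of $\gamma_Z$, so the composed gradients $(\text{slope of }H\text{ on its current piece})\cdot\v$ versus the same scalar times $\v'$ cannot coincide. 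Once this non-collapse argument is in place the piece count multiplies cleanly and the proposition follows by combining it with the size/depth accounting above.
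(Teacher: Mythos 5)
Your architectural bookkeeping is exactly the paper's argument: glue the size-$2m$, $2$-layer net for $\gamma_{Z}$ from Lemma~\ref{lem:zonotope-DNN} to the $(k{+}1)$-layer, size-$wk$ net for $H_{\a^1,\ldots,\a^k}$ via Lemma~\ref{lem:comp-DNN}, yielding depth $k+2$ and size $2m+wk$. That part is fine.

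The piece-count step has a concrete gap. You justify the final multiplication by asserting that the ``extremal-zonotope combinatorics'' provides $(m-1)^{n-1}$ cones. But the combinatorics you can actually invoke — Theorem~\ref{thm:zonotope-struct}(1) together with Definition~\ref{extremal} — gives $|\verts(Z(\b^1,\ldots,\b^m))| = \sum_{i=0}^{n-1}\binom{m-1}{i}$ on $S(n,m)$, and hence by Theorem~\ref{thm:zonotope-struct}(2) exactly $\sum_{i=0}^{n-1}\binom{m-1}{i}$ linear pieces (equivalently, cones) for $\gamma_{Z}$. This is not $(m-1)^{n-1}$ (take $n=2$: the sum is $m$, not $m-1$). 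The paper's own proof of this proposition counts pieces as $\left(\sum_{i=0}^{n-1}\binom{m-1}{i}\right) w^k$, and the downstream statement Theorem~\ref{thm:high-to-low-n}(i) carries the binomial sum, not $(m-1)^{n-1}$. So the figure $(m-1)^{n-1}$ in the proposition statement looks like a typo that you inherited; you cannot derive it from the lemma you cite, and you should either replace it with the binomial sum or observe the discrepancy rather than present $(m-1)^{n-1}$ as a consequence of the zonotope combinatorics.

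One smaller point on your non-collapse paragraph (which is a worthwhile addition since the paper skips it entirely): the claim that distinct vertices of $Z$ ``give linearly independent gradient directions'' is too strong — distinct vertices of a centrally symmetric polytope can certainly be parallel (e.g.\ $\v$ and $-\v$), and then a slope $s$ of $H$ on one cone and a slope $-s$ on the antipodal cone would produce the same composed gradient $s\v$. The argument survives for a different reason: pieces in non-adjacent cones are automatically distinct under Definition~\ref{def:PWL} (maximal \emph{connected} affine regions), so the only thing to rule out is two adjacent cones sharing the same affine function across their common facet, and that is where a genericity argument about slopes and the facet normal is needed — not linear independence of the vertex directions themselves.
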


\begin{proof} [Proof of Proposition~\ref{prop:zonotope-main}] The fact that $\operatorname{ZONOTOPE}_{k,w,m}^{n}[\a^1, \ldots, \a^k, \b^1, \ldots, \b^m]$ can be represented by a $k+2$ layer ReLU DNN with size $2m + wk$ follows from Lemmas~\ref{lem:zonotope-DNN} and~\ref{lem:comp-DNN}. The number of pieces follows from the fact that $\gamma_{Z(\b^1, \ldots, \b^m)}$ has $\sum_{i=0}^{n-1}{m-1 \choose i}$ distinct linear pieces by parts 1. and 2. of Theorem~\ref{thm:zonotope-struct}, and $H_{\a^1, \ldots, \a^k}$ has $w^k$ pieces by Lemma~\ref{lem:construct-hard}.
\end{proof}

Finally, we are ready to state the main result of this section.

\begin{theorem}\label{thm:high-to-low-n} For every tuple of natural numbers $n,k,m \geq 1$ and $w \geq 2$, there exists a family of $\R^n\to \R$ functions, which we call $\operatorname{ZONOTOPE}^n_{k,w,m}$ with the following properties:
\begin{enumerate}
\item[(i)] Every $f \in \operatorname{ZONOTOPE}^n_{k,w,m}$ is representable by a ReLU DNN of depth $k+2$ and size $2m + wk$, and has  {$\left ( \sum_{i=0}^{n-1}{m-1 \choose i} \right ) w^k$ pieces}.
\item[(ii)] Consider any $f \in \operatorname{ZONOTOPE}^n_{k,w,m}$. If $f$ is represented by a $(k'+1)$-layer DNN for any $k' \leq k$, then this $(k'+1)$-layer DNN has size at least $\max\left\{\frac12(k'w^{\frac{k}{k'n}})\cdot(m-1)^{(1-\frac{1}{n})\frac{1}{k'}} - 1\;\;, \;\;\frac{w^{\frac{k}{k'}}}{n^{1/k'}}k'\right\} \vspace{-2mm}$. 
\item[(iii)] The family $\operatorname{ZONOTOPE}^n_{k,w,m}$ is in one-to-one correspondence with $$S(n,m)\times \bigcup_{M > 0}\underbrace{(\Delta^{w-1}_M \times \Delta^{w-1}_M\times \ldots \times \Delta^{w-1}_M)}_{k \textrm{ times}}. \vspace{-3mm}$$
\end{enumerate}
\end{theorem}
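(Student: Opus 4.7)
Parts (i) and (iii) should follow essentially from the apparatus developed just above the theorem. For (i), I would apply Proposition~\ref{prop:zonotope-main}, which already gives the desired depth $k+2$ and size $2m+wk$ representation; the piece count then follows from the same composition argument, except that I would track $\sum_{i=0}^{n-1}\binom{m-1}{i}$ rather than the weaker $(m-1)^{n-1}$ for the zonotope contribution, invoking Theorem~\ref{thm:zonotope-struct}(1,2) together with $(\b^1,\ldots,\b^m)\in S(n,m)$ to get the exact vertex count, and Lemma~\ref{lem:construct-hard} to multiply in the factor $w^k$ from $H_{\a^1,\ldots,\a^k}$. For (iii), the parametrization is defined by the construction itself, so the content is verifying that distinct parameter tuples (modulo the obvious sign and ordering symmetries implicit in the zonotope definition) yield distinct functions; this I would do by first recovering $\gamma_Z$ from the asymptotic slope profile of $f$ at $\|\rr\|\to\infty$, using positive homogeneity of $\gamma_Z$ so that the outermost $H_{\a^1,\ldots,\a^k}$ eventually acts linearly, and then recovering the $\a$'s by restricting $f$ to a ray on which $\gamma_Z$ is linear, reducing $f$ to a dilated copy of $H_{\a^1,\ldots,\a^k}$.

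Part (ii) is the main work, and the plan is to prove two independent lower bounds on the size $s$ of any $(k'+1)$-layer representation of $f\in\operatorname{ZONOTOPE}^n_{k,w,m}$ and take the maximum. Bound A comes from a global region-counting argument in $\R^n$: Lemma~\ref{lem:size-bounds} gives that a $(k'+1)$-layer ReLU DNN on $\R^n$ of size $s$ produces at most roughly $(s/k')^{k'n}$ maximal linear regions, so matching the piece count of $f$ forces $(s/k')^{k'n}\geq c_n(m-1)^{n-1} w^k$, which rearranges to $s\geq \tfrac{1}{2}k'w^{k/(k'n)}(m-1)^{(1-1/n)/k'}-1$. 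Bound B comes from a one-dimensional restriction: since $\gamma_{Z(\b^1,\ldots,\b^m)}$ is positively homogeneous, the restriction $t\mapsto f(t\v)$ for any direction $\v$ with $\gamma_Z(\v)>0$ collapses $f$ to $t\mapsto H_{\a^1,\ldots,\a^k}(\gamma_Z(\v)\cdot t)$, a function with $w^k$ pieces by Lemma~\ref{lem:construct-hard}. The ambient $(k'+1)$-layer DNN restricted to this line is itself a $(k'+1)$-layer ReLU DNN on $\R$ whose per-layer width is at most that of the ambient net, and the one-dimensional case of Lemma~\ref{lem:size-bounds}, together with bookkeeping of the factor of $n$ lost when the $n$ input coordinates project to a scalar direction, yields $s\geq (w^{k/k'}/n^{1/k'})k'$. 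Taking the max of A and B gives the claimed bound.

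The main obstacle is purely quantitative and lies in part (ii). For bound A, the exact constants $\tfrac{1}{2}$ and $-1$ require careful invocation of Lemma~\ref{lem:size-bounds} together with a clean lower bound $\sum_{i=0}^{n-1}\binom{m-1}{i}\geq c_n(m-1)^{n-1}$ whose constant $c_n$ must be absorbed cleanly. For bound B, the delicate ingredient is the $n^{1/k'}$ factor, which I would handle by a direct combinatorial argument showing that a size-$s$ depth-$(k'+1)$ net on $\R^n$ realizes at most $n^{k'}(s/k')^{k'}$ pieces along any line, reflecting that each layer can process up to $n$ input directions simultaneously before collapsing to the one-dimensional slice. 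Once A and B are in hand, the $\max\{\cdot,\cdot\}$ in the statement is natural, since A dominates for large $m$ while B handles the $m\to 1$ regime where the zonotope degenerates and the $H_{\a^1,\ldots,\a^k}$ factor alone drives the lower bound.
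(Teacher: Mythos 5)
Your overall strategy — invoke Proposition~\ref{prop:zonotope-main} for representability and piece count, then a region-count / Lemma~\ref{lem:size-bounds} argument for the size lower bound — is the same as the paper's (whose proof of this theorem is a one-line citation of precisely those two facts). Parts (i) and (iii) of your plan are fine; the injectivity check you sketch for (iii), via the asymptotic slope profile and ray restriction, is more than the paper provides and is a reasonable way to close that step. For (ii) Bound A you are right that the $\R\to\R$ Lemma~\ref{lem:size-bounds} must be replaced by its multivariate analogue bounding pieces by roughly $(2s/k')^{nk'}$; the paper does not state or prove that analogue, so this would have to be supplied, but the idea is the intended one.

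The real gap is in your Bound B. Restricting a depth-$(k'+1)$, size-$s$ ReLU DNN on $\R^n$ to a line $t\mapsto \x_0 + t\v$ turns every pre-activation into an affine function of $t$, so the restriction is itself a depth-$(k'+1)$, size-$\leq s$ ReLU DNN on $\R$ with the same hidden widths. Lemma~\ref{lem:size-bounds} then bounds its pieces by $(2s/k')^{k'}$ with no dependence on $n$ at all; there is no \emph{``factor of $n$ lost when the input coordinates project to a scalar direction.''} The auxiliary claim you propose to prove — that a size-$s$ net on $\R^n$ has at most $n^{k'}(s/k')^{k'}$ pieces along a line — is both unnecessary and not a bound you can obtain (the correct bound is smaller), and even if granted it yields $s\geq k'w^{k/k'}/n$, not the stated $k'w^{k/k'}/n^{1/k'}$. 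The direct line restriction gives $s\geq \tfrac12 k' w^{k/k'}-1$, which does not literally produce the $n^{1/k'}$ factor in the theorem (and in fact strictly improves on that term once $n>2^{k'}$), so your proposed mechanism for the $n^{1/k'}$ does not work and this part of the argument needs to be reconsidered.
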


\begin{proof}[Proof of Theorem~\ref{thm:high-to-low-n}] Follows from Proposition~\ref{prop:zonotope-main} (and invoking Lemma \ref{lem:size-bounds} to get the size lowerbounds).
\end{proof}

\paragraph{Comparison to the results in~\citep{Mon}}~\\

\emph{Firstly} we note that the construction in~\cite{Mon} requires all the hidden layers to have width at least as big as the input dimensionality $n$. In contrast, we do not impose such restrictions and the network size in our construction is independent of the input dimensionality. Thus our result probes networks with bottleneck architectures whose complexity cant be seen from their result. 

\emph{Secondly}, in terms of our complexity measure, there seem to be regimes where our bound does better. One such regime, for example, is when $n \leq w < 2n$ and $k \in \Omega(\frac{n}{\log(n)})$, by setting in our construction $m < n$.

\emph{Thirdly}, it is not clear to us whether the construction in~\cite{Mon} gives a smoothly parameterized family of functions other than by introducing small perturbations of the construction in their paper. In contrast, we have a smoothly parameterized family which is in one-to-one correspondence with a well-understood manifold like the higher-dimensional torus.

\section {Training 2-layer $\mathbb{R}^n \rightarrow \mathbb{R}$ ReLU DNNs to global optimality}\label{sec:training}

In this section we consider the following empirical risk minimization problem. 
Given $D$ data points $(x_i, y_i) \in \R^n \times \R$, $i =1, \ldots, D$, find the function $f$ represented by 2-layer $\R^n\to \R$ ReLU DNNs of width $w$, that minimizes the following optimization problem,

\vspace{-2mm}
\begin{equation}\label{eq:train}
\min_{f\in {\cal F}_{\{ n,w,1 \}}}{\frac1D \sum_{i=1}^D\ell(f(x_i), y_i)} \quad \equiv \min_{T_1 \in {\cal A}_n^w, \ T_2 \in {\cal L}_w^1}{\frac1D \sum_{i=1}^D \ell\big(\;T_2(\sigma(T_1(x_i))), y_i\;\big)}
\end{equation}

where $\ell:\R \times \R \to \R$ is a convex {\em loss function} (common loss functions are the squared loss, $\ell(y,y') = (y-y')^2$, and the hinge loss function given by $\ell(y,y')=\max\{0,1-yy'\}$). Our main result of this section gives an algorithm to solve the above empirical risk minimization problem to global optimality.

\begin{theorem}\label{thm:optimal-training}
There exists an algorithm to find a global optimum of Problem~\ref{eq:train} in time \\ $O(2^w(D)^{nw}\poly(D,n,w))$. Note that the running time $O(2^w(D)^{nw}\poly(D,n,w))$ is polynomial in the data size $D$ for fixed $n,w$. 
\end{theorem}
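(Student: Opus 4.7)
The plan is to partition the parameter space of a width-$w$, depth-$2$ ReLU DNN into $O(2^w D^{nw})$ regions so that on each region the empirical risk reduces to a tractable convex program. The structure is as follows.

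\emph{Step 1 (reparameterization using positive homogeneity).} Write any $f \in {\cal F}_{\{n,w,1\}}$ as $f(x) = c_0 + \sum_{j=1}^w c_j \max\{0, a_j^\top x + b_j\}$. Since $\relu$ is positively homogeneous, I absorb $|c_j|$ into $(a_j, b_j)$ and keep only $s_j := \mathrm{sign}(c_j) \in \{-1,+1\}$, obtaining the equivalent representation
\[
f(x) = c_0 + \sum_{j=1}^w s_j \max\{0, \tilde a_j^\top x + \tilde b_j\}.
\]
I then enumerate over all $2^w$ sign vectors $s \in \{-1,+1\}^w$.

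\emph{Step 2 (enumerating activation patterns).} For a fixed parameter setting, each gate $j$ determines an activation pattern $S_j \subseteq \{1,\ldots,D\}$, namely $S_j = \{i : \tilde a_j^\top x_i + \tilde b_j \ge 0\}$. The realizable patterns of gate $j$ are in bijection with the full-dimensional cells of the arrangement of the $D$ homogeneous hyperplanes $\{(\tilde a, \tilde b) : \tilde a^\top x_i + \tilde b = 0\}$ inside the $(n+1)$-dimensional parameter space (modulo positive rescaling). By Warren's theorem / standard hyperplane arrangement counts, the number of such patterns is $O(D^n)$ per gate and all of them can be listed in time $O(D^n \cdot \poly(n,D))$ (for instance by reverse search over the arrangement, or by enumerating all $\le n{+}1$ element subsets of the data and exhibiting a representative separating affine form). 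Across all $w$ gates this gives at most $O(D^{nw})$ joint activation patterns $(S_1,\ldots,S_w)$.

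\emph{Step 3 (convex subproblem on each region).} Fix a sign pattern $s$ and a joint activation pattern $(S_1,\ldots,S_w)$. Restrict the parameters $(\tilde a_j, \tilde b_j)_{j=1}^w, c_0$ to the polyhedron
\[
P_{s,S} := \big\{ \tilde a_j^\top x_i + \tilde b_j \ge 0 \text{ for } i \in S_j,\ \tilde a_j^\top x_i + \tilde b_j \le 0 \text{ for } i \notin S_j,\ \forall j\big\},
\]
which is defined by $wD$ linear inequalities. On $P_{s,S}$ the ReLU simplifies on each data point, so
\[
f(x_i) = c_0 + \sum_{j \,:\, i \in S_j} s_j\bigl(\tilde a_j^\top x_i + \tilde b_j\bigr),
\]
which is an affine function of the decision variables. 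Consequently the training objective $\frac{1}{D}\sum_{i=1}^D \ell(f(x_i), y_i)$ is a sum of compositions of the convex loss $\ell$ with affine functions, hence a convex function on the polyhedron $P_{s,S}$. Thus each subproblem is a convex program in $O(nw)$ variables with $O(wD)$ linear constraints, and can be solved to global optimality in $\poly(D,n,w)$ time (e.g.\ by the ellipsoid method, or by an interior point method if $\ell$ is smooth; for the squared loss this is a QP).

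\emph{Step 4 (algorithm and total runtime).} Enumerate over all $2^w$ sign patterns, over all $O(D^{nw})$ activation patterns, solve the convex subproblem on each (a run that is infeasible is discarded), and return the overall minimizer. Correctness is immediate: the global optimum of \eqref{eq:train} lies in $P_{s,S}$ for some choice of $(s,S)$, and on that region the minimum of the original objective coincides with the minimum of the convex subproblem. The total running time is $O(2^w D^{nw} \poly(D,n,w))$, proving the claim.

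\emph{Main obstacle.} The routine calculations are standard; the delicate step is Step 2, where one must bound \emph{and} efficiently enumerate the activation patterns. Bounding is classical (Warren / Sauer-Shelah on $n{+}1$ parameters per gate), and enumeration via arrangement algorithms meets the stated $D^{nw}$ budget; care is needed that the joint patterns realized by actual parameter vectors are a subset of the product of per-gate patterns, so no realizability issue inflates the count beyond $D^{nw}$.
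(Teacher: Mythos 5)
Your proof is correct and follows essentially the same route as the paper: absorb the magnitudes of the top-layer weights into the hidden units via positive homogeneity of $\relu$, enumerate $2^w$ sign vectors and the $O(D^n)$ per-gate hyperplane partitions of the data (hence $O(D^{nw})$ joint activation patterns), and solve a constrained convex program on each piece. The only superficial differences are your inclusion of an output bias $c_0$ (harmless here, since $T_2$ is taken linear in the paper's setup) and the paper's explicit note that the inequality $2\binom{D}{n}\le D^n$ requires $n\ge 2$, with the $n=1$ case handled separately via Theorem~\ref{thm:1-dim-2-layer}.
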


\noindent{\bf Proof Sketch:} 
Before giving the full proof of Theorem~\ref{thm:optimal-training} below here we first provide a sketch of it. When the empirical risk minimization problem is viewed as an optimization problem in the space of weights of the ReLU DNN, it is a nonconvex, quadratic problem. However, one can instead search over the space of functions representable by 2-layer DNNs by writing them in the form similar to~\eqref{eq:hinged-hyperplane}. This breaks the problem into two parts: a combinatorial search and then a convex problem that is essentially linear regression with linear inequality constraints. This enables us to guarantee global optimality.

 \begin{algorithm}
   \caption{Empirical Risk Minimization\label{alg:erm}}
    \begin{algorithmic}[1]
{\small \tt 
      \Function{ERM}{${\cal D}$}\Comment{\textrm{Where ${\cal D}=\{ (x_i,y_i) \}_{i=1}^D \subset \R^n\times \R$}}

        \State ${\cal S}=\{ +1, -1 \}^w$\Comment{\textrm{All possible instantiations of top layer weights}}
        \State ${\cal P}^i = \{ (P_+^i, P_-^i ) \}, \ i=1,\ldots,w$\Comment{\textrm{All possible partitions of data into two parts}}
        \State ${\cal P} = {\cal P}^1\times {\cal P}^2 \times \cdots \times {\cal P}^w$
        \State $\text{count} = 1$\Comment{\textrm{Counter}}
        \For{$s \in {\cal S}$}
            \For{$\{(P_+^i, P_-^i )\}_{i=1}^w \in {\cal P}$}
               \State $\text{loss(count)} = \left\{\min_{\tilde{a},\tilde{b}}{\sum_{j=1}^D \sum_{i: j \in P^i_+} \ell(y_j,s_i(\tilde a^i\cdot x_j + \tilde b_i))}{\begin{array}{c}
\tilde a^i\cdot x_j + \tilde b_i \leq 0\quad \forall j \in P^i_-\\
\tilde a^i\cdot x_j + \tilde b_i \geq 0\quad \forall j \in P^i_+
\end{array}}\right.$
\State $\text{count} ++$
        	\EndFor
            \State $\text{OPT}=\argmin{\text{loss(count)}}$
        \EndFor
        \State {return $\{ \tilde\a \}, \{ \tilde\b \}, s$ corresponding to OPT's iterate}
       \EndFunction
}
\end{algorithmic}
\end{algorithm}

Let $T_1(x) = Ax + b$ and $T_2(y) = a'\cdot y$ for $A \in \R^{w\times n}$ and $b,a' \in \R^w$. If we denote the $i$-th row of the matrix $A$ by $a^i$, and write $b_i, a'_i$ to denote the $i$-th coordinates of the vectors $b,a'$ respectively, due to homogeneity of ReLU gates, the network output can be represented as $$f(x) = \sum_{i=1}^w a'_i\max\{0,a^i\cdot x + b_i\}=\sum_{i=1}^w s_i\max\{0,\tilde a^i\cdot x + \tilde b_i\}.$$ where $\tilde a^i \in \R^n$, $\tilde b_i \in \R$ and $\s_i \in \{-1,+1\}$ for all $i=1, \ldots, w$. 

For any hidden node $i \in \{ 1 \ldots, w\}$, the pair $(\tilde a^i, \tilde b_i)$ induces a partition ${\cal P}^i:=(P^i_+, P^i_-)$ on the dataset, given by $P^i_- = \{j : \tilde{a}^i \cdot x_j + \tilde{b_i} \leq 0 \}$ and $P^i_+ = \{1,\ldots,D  \} \backslash P^i_-$. Algorithm~\ref{alg:erm} proceeds by generating all combinations of the partitions ${\cal P}^i$ as well as the top layer weights $s \in \{+1,-1\}^w$, and minimizing the loss $\sum_{j=1}^D \sum_{i: j \in P^i_+} \ell(s_i(\tilde a^i\cdot x_j + \tilde b_i), y_j)$ subject to the constraints $\tilde a^i\cdot x_j + \tilde b_i \leq 0\quad \forall j \in P^i_-$ and $\tilde a^i\cdot x_j + \tilde b_i \geq 0\quad \forall j \in P^i_+$ which are imposed for all $i=1, \ldots, w$,which is a convex program. 




\begin{proof}[Proof of Theorem~\ref{thm:optimal-training}] 
Let $\ell:\R\to \R$ be any convex loss function, and let $(x_1, y_1), \ldots, (x_D, y_D) \in \R^n\times \R$ be the given $D$ data points. As stated in~\eqref{eq:train}, the problem requires us to find an affine transformation $T_1:\R^n \to \R^w$ and a linear transformation $T_2:\R^w \to \R$, so as to minimize the empirical loss as stated in~\eqref{eq:train}. Note that $T_1$ is given by a matrix $A \in \R^{w\times n}$ and a vector $b\in \R^w$ so that $T(x) = Ax + b$ for all $x\in \R^n$. Similarly, $T_2$ can be represented by a vector $a' \in \R^w$ such that $T_2(y) = a'\cdot y$ for all $y\in \R^w$. If we denote the $i$-th row of the matrix $A$ by $a^i$, and write $b_i, a'_i$ to denote the $i$-th coordinates of the vectors $b,a'$ respectively, we can write the function represented by this network as $$f(x) = \sum_{i=1}^w a'_i\max\{0,a^i\cdot x + b_i\} = \sum_{i=1}^w \operatorname{sgn}(a'_i)\max\{0,(|a'_i|a^i)\cdot x + |a'_i|b_i\}.$$ 

In other words, the family of functions over which we are searching is of the form \begin{equation}\label{eq:form-train}f(x) =\sum_{i=1}^w s_i\max\{0,\tilde a^i\cdot x + \tilde b_i\}\end{equation}
where $\tilde a^i \in \R^n$, $b_i \in \R$ and $\s_i \in \{-1,+1\}$ for all $i=1, \ldots, w$.

We now make the following observation. For a given data point $(x_j, y_j)$ if $\tilde a^i\cdot x_j + \tilde b_i \leq 0$, then the $i$-th term of~\eqref{eq:form-train} does not contribute to the loss function for this data point $(x_j,y_j)$. Thus, for every data point $(x_j, y_j)$, there exists a set $S_j \subseteq \{1, \ldots, w\}$ such that $f(x_j) = \sum_{i \in S_j}s_i (\tilde a^i\cdot x_j + \tilde b_i)$. In particular, if we are given the set $S_j$ for $(x_j,y_j)$, then the expression on the right hand side of~\eqref{eq:form-train} reduces to a linear function of $\tilde a^i, \tilde b_i$. For any fixed $i \in \{1, \ldots, w\}$, these sets $S_j$ induce a partition of the data set into two parts. In particular, we define $P^i_+ := \{j : i \in S_j\}$ and $P^i_- := \{1, \ldots, D\} \setminus P^i_+$. Observe now that this partition is also induced by the hyperplane given by $\tilde a^i, \tilde b_i$: $P^i_+ = \{j : \tilde a^i\cdot x_j + \tilde b_i > 0 \}$ and $P^i_+ = \{j : \tilde a^i\cdot x_j + \tilde b_i \leq 0 \}$. Our strategy will be to {\em guess} the partitions $P^i_+, P^i_-$ for each $i=1, \ldots, w$, and then do linear regression with the constraint that regression's decision variables $\tilde a^i, \tilde b_i$ induce the guessed partition.

More formally, the algorithm does the following. For each $i=1, \ldots, w$, the algorithm guesses a partition of the data set $(x_j,y_j)$, $j=1, \ldots, D$ by a hyperplane. Let us label the partitions as follows $(P^i_+, P^i_-)$, $i=1, \ldots, w$. So, for each $i=1, \ldots, w$, $P^i_+ \cup P^i_- = \{1, \ldots, D\}$, $P^i_+$ and $P^i_-$ are disjoint, and there exists a vector $c \in \R^n$ and a real number $\delta$ such that $P^i_- = \{j : c\cdot x_j + \delta \leq 0 \}$ and $P^i_+ = \{j : c\cdot x_j + \delta > 0 \}$. Further, for each $i=1,\ldots, w$ the algorithm selects a vector $s$ in $\{+1,-1\}^w$. 

For a fixed selection of partitions $(P^i_+, P^i_-)$, $i=1, \ldots, w$ and a vector $s$ in $\{+1,-1\}^w$, the algorithm solves the following convex optimization problem with decision variables $\tilde a^i \in \R^n$, $\tilde b_i \in\R$ for $i=1, \ldots, w$ (thus, we have a total of $(n+1)\cdot w$ decision variables). The feasible region of the optimization is given by the constraints
\begin{equation}\label{eq:constraints-1}
\begin{array}{c}
\tilde a^i\cdot x_j + \tilde b_i \leq 0\quad \forall j \in P^i_-\\
\tilde a^i\cdot x_j + \tilde b_i \geq 0\quad \forall j \in P^i_+
\end{array}
\end{equation}
which are imposed for all $i=1, \ldots, w$. Thus, we have a total of $D\cdot w$ constraints. Subject to these constraints we minimize the objective $\sum_{j=1}^D \sum_{i: j \in P^i_+} \ell(s_i(\tilde a^i\cdot x_j + \tilde b_i), y_j).$
Assuming the loss function $\ell$ is a convex function in the first argument, the above objective is a convex function. Thus, we have to minize a convex objective subject to the linear inequality constraints from~\eqref{eq:constraints-1}. 

We finally have to count how many possible partitions $(P^i_+,P^i_-)$ and vectors $s$ the algorithm has to search through. It is well-known~\citep{matousek2002lectures} that the total number of possible hyperplane partitions of a set of size $D$ in $\R^n$ is at most $2{D\choose n} \leq D^n$ whenever $n \geq 2$. Thus with a guess for each $i=1, \ldots, w$, we have a total of at most $D^{nw}$ partitions. There are $2^w$ vectors $s$ in $\{-1, +1\}^w$. This gives us a total of $2^wD^{nw}$ guesses for the partitions $(P^i_+,P^i_-)$ and vectors $s$. For each such guess, we have a convex optimization problem with $(n+1)\cdot w$ decision variables and $D\cdot w$ constraints, which can be solved in time $\poly(D,n,w)$. Putting everything together, we have the running time claimed in the statement.

The above argument holds only for $n\geq 2$, since we used the inequality $2{D\choose n} \leq D^n$ which only holds for $n\geq 2$. For $n=1$, a similar algorithm can be designed, but one which uses the characterization achieved in Theorem~\ref{thm:1-dim-2-layer}.

Let $\ell:\R\to \R$ be any convex loss function, and let $(x_1, y_1), \ldots, (x_D, y_D) \in \R^2$ be the given $D$ data points. Using Theorem~\ref{thm:1-dim-2-layer}, to solve problem~\eqref{eq:train} it suffices to find a $\R \to \R$ piecewise linear function $f$ with $w$ pieces that minimizes the total loss. In other words, the optimization problem~\eqref{eq:train} is equivalent to the problem \begin{equation}\label{eq:transformed}\min\left\{\sum_{i=1}^D \ell(f(x_i), y_i): f \textrm{ is piecewise linear with }w\text{ pieces}\right\}.\end{equation}

We now use the observation that fitting piecewise linear functions to minimize loss is just a step away from linear regression, which is a special case where the function is contrained to have exactly one affine linear piece. Our algorithm will first guess the optimal partition of the data points such that all points in the same class of the partition correspond to the same affine piece of $f$, and then do linear regression in each class of the partition. Alternatively, one can think of this as guessing the interval $(x_i, x_{i+1})$ of data points where the $w-1$ breakpoints of the piecewise linear function will lie, and then doing linear regression between the breakpoints.

More formally, we parametrize piecewise linear functions with $w$ pieces by the $w$ slope-intercept values $(a_1, b_1), \ldots, (a_2, b_2), \ldots, (a_w,b_w)$ of the $w$ different pieces. This means that between breakpoints $j$ and $j+1$, $1 \leq j \leq w-2$, the function is given by $f(x) = a_{j+1}x + b_{j+1},$ and the first and last pieces are $a_1x+b_1$ and $a_wx +b_w$, respectively. 

Define $\mathcal{I}$ to be the set of all $(w-1)$-tuples $(i_1, \ldots, i_{w-1})$ of natural numbers such that $1 \leq i_1 \leq \ldots \leq i_{w-1} \leq D$. Given a fixed tuple $I =(i_1, \ldots, i_{w-1}) \in \mathcal{I}$, we wish to search through all piecewise linear functions whose breakpoints, in order, appear in the intervals $(x_{i_1}, x_{i_1+1}), (x_{i_2}, x_{i_2+1})$, $\ldots, (x_{i_{w-1}}, x_{i_{w-1}+1})$.  Define also $\mathcal{S} = \{-1,1\}^{w-1}$. Any $S\in \mathcal{S}$ will have the following interpretation: if $S_j = 1$ then $a_j \leq a_{j+1}$, and if $S_j = -1$ then $a_j \geq a_{j+1}$. Now for every $I \in \mathcal{I}$ and $S \in \mathcal{S}$, requiring a piecewise linear function that respects the conditions imposed by $I$ and $S$ is easily seen to be equivalent to imposing the following linear inequalities on the parameters $(a_1, b_1), \ldots, (a_2, b_2), \ldots, (a_w,b_w)$:

\begin{equation}\label{eq:constraints}
\begin{array}{r}
S_j(b_{j+1}-b_j - (a_{j} - a_{j+1})x_{i_j}) \geq 0\\
S_j(b_{j+1}-b_j - (a_{j} - a_{j+1})x_{i_{j}+1}) \leq 0\\ 
S_j(a_{j+1} - a_j) \geq 0
\end{array}
\end{equation}
Let the set of piecewise linear functions whose breakpoints satisfy the above be denoted by $\pwl^1_{I,S}$ for $I \in \mathcal{I}, S \in \mathcal{S}$.

Given a particular $I \in \mathcal{I}$, we define $$\begin{array}{lc} D_1 := \{x_i: i \leq i_1\}, & \\ D_j := \{x_i: i_{j-1}< i \leq i_1\} & j=2, \ldots, w-1, \\ D_w := \{x_i: i > i_{w-1}\}& \end{array}.$$
Observe that \begin{equation}\label{eq:fixed-I}\min\{\sum_{i=1}^D \ell(f(x_i) - y_i): f\in \pwl^1_{I,S}\} = \min\{\sum_{j=1}^{w}\bigg(\sum_{i \in D_j} \ell(a_j\cdot x_i + b_j - y_i)\bigg): (a_j,b_j) \textrm{ satisfy~\eqref{eq:constraints}}\}\end{equation}

The right hand side of the above equation is the problem of minimizing a convex objective subject to linear constraints. Now, to solve~\eqref{eq:transformed}, we need to simply solve the problem~\eqref{eq:fixed-I} for all $I \in \mathcal{I}, S \in \mathcal{S}$ and pick the minimum. Since $|\mathcal{I}| = {D\choose w} = O(D^w)$ and $|\mathcal{S}| = 2^{w-1}$ we need to solve $O(2^w\cdot D^w)$ convex optimization problems, each taking time $O(\poly(D))$. Therefore, the total running time is $O((2D)^w\poly(D))$.
\end{proof}

\subsection{Discussion on the complexity of solving ERM on deep-nets}
The running time of the algorithm (Algorithm~\ref{alg:erm}) that we gave above to find the exact global minima of a two layer ReLU-DNN is exponential in the input dimension $n$ and the number of hidden nodes $w$. The exponential dependence on $n$ can not be removed unless $P=NP$; see ~\cite{shalev2014understanding,blum1992training,dasgupta1995complexity,dey2018approximation}. However, we are not aware of any complexity results which would rule out the possibility of an algorithm which trains to global optimality in time that is polynomial in the data size and/or the number of hidden nodes, assuming that the input dimension is a fixed constant.
Resolving this dependence on network size would be another step towards clarifying the theoretical complexity of training ReLU DNNs and is a good open question for future research, in our opinion. Thus our training result of solving the ERM on depth $2$ nets in  time polynomial in the number of data points is a step towards resolving this gap in the complexity literature. 

A related result for {\em improperly} learning ReLUs has been recently obtained in ~\cite{goel2016reliably}. In contrast, our algorithm returns a ReLU DNN from the class being learned. Another difference is that their result considers the notion of {\em reliable learning} as opposed to the empirical risk minimization objective considered in~\eqref{eq:train} for which we give a quick definition below, 

\begin{definition}
Suppose distribution ${\cal D}$ is supported on $X \times [0,1]$. For $[0,1] \subseteq Y'$ let $h : X \rightarrow Y'$ be some function and let $\ell : Y' \times [0,1] \rightarrow \R^+$ be a loss function. The we define two notions of expected loss, 

\begin{align*}
L_{=0}(h, {\cal D}) &= \mathbb{P}_{(\x,y) \sim {\cal D}} [h(\x) \neq 0 \text{ and } y =0 ]\\
L_{>0}(h, {\cal D}) &= \mathbb{E}_{(\x,y) \sim {\cal D}} [\ell (h(\x),y). {\mathfrak 1}_{y >0} ]
\end{align*}
We say that a concept class ${\cal C} \subseteq [0,1]^X$ is ``reliably agnostically learnable with respect to a loss function, $\ell : Y' \times [0,1] \rightarrow \mathbb{R}^+$'' (where $[0,1] \subseteq Y'$) if for every $\epsilon, \delta >0$ there exists a learning algorithm which satisfies the following :
~\\ \\
That $\forall$ distributions ${\cal D}$ over $X \times [0,1]$ given access to examples drawn from ${\cal D}$, the algorithm outputs a hypothesis $h : X \rightarrow Y$ such that, 

\[ L_{=0}(h, {\cal D}) \leq \epsilon \text{ and } L_{>0}(h, {\cal D}) \leq \epsilon + \min_{c \in {\cal C}'({\cal D})} L_{>0}(c) \]
where, 
\[ {\cal C}'({\cal D}) = \{ c \in {\cal C} \vert L_{=0}(c,{\cal D}) = 0 \}\]
Further if if $X \subseteq \mathbb{R}^n$ and $s$ is a parameter that captures the representation complexity (i.e description length) of concepts in $c \in {\cal C}$ then we say that ${\cal C}$ is ``efficiently reliably agnostically learnable to error $\epsilon$" if the running time of the above algorithm that is supposed to exist is $poly(n,s,\frac 1 \delta)$. 
\end{definition}

Asking or $L_{=0}(h, {\cal D})$ to be low captures mathematically the idea of trying to minimize the rate of ``false positives''.

Perhaps a big breakthrough would be to get optimal training algorithms for DNNs with two or more hidden layers and this seems like a substantially harder nut to crack. We end this discussion by pointing out some recent progress towards that which has been made in \cite{boob2018complexity}.  


\section{Understanding neural functions over Boolean inputs}

The classic paper ~\cite{maass1997bounds}, established complexity results for the entire class of functions represented by circuits where the gates can come from a very general family while the inputs are restricted to discrete domains. This is complemented by papers that study a very specific family of gates such as the sigmoid gate or the LTF gate ($\R \ni y \mapsto \mathbf{1}_{y \geq 0}$) ~\citep{impagliazzo1997size}, ~\citep{siu1994rational,sherstov2007powering,krause1994computational}, ~\citep{buhrman2007computation,sherstov2009separating,razborov2010sign, bun2016improved}. Many associated results can also be found in these reviews like ~\cite{lee2009lower} and  \cite{razborov1992small}. Recent circuit complexity results in~\cite{kane2016super},  ~\cite{tamaki2016satisfiability}, ~\cite{chen2016average}, ~\cite{kabanets2017polynomial} stand out as significant improvements over known lower (and upper) bounds on circuit complexity with threshold gates. The results of ~\cite{maass1997bounds} also show that very general families of neural networks can be converted into circuits with only LTF gates with at most a constant factor blow up in depth and polynomial blow up in size of the circuits.
~\\ \\
Some of the  prior results which apply to general gates, such as the ones in~\cite{maass1997bounds}, also apply to ReLU gates, because those results apply to gates that compute a piecewise polynomial function (ReLU is a piecewise linear function with only two pieces). However, as witnessed by results on LTF gates, one can usually make much stronger claims about specific classes of gates. The main focus of this work is to study circuits computing Boolean functions mapping $\{-1,1\}^m \rightarrow \{-1,1\}$ which use ReLU gates in their intermediate layers, and have an LTF gate at the output node (to ensure that the output is in $\{-1,1\}$). We remark that using an LTF gate at the output node while allowing more general analog gates in the intermediate nodes is a standard practice when studying the Boolean complexity of analog gates (see, for example,~\cite{maass1997bounds}).
~\\ \\
Other than~\cite{williams2018limits}, we are not aware of an analysis of lower bounds for ReLU circuits when applied to only Boolean inputs. In contrast, there has been recent work on the analysis of such circuits when viewed as a function from $\R^n$ to $\R$ (i.e., allowing real inputs and output). From ~\cite{eldan2016power} and  ~\cite{daniely2017depth} (with restrictions on the domain and the weights) we know of (super-)exponential lowerbounds on the size of Sum-of-ReLU circuits for certain easy Sum-of-ReLU-of-ReLU functions . 
Depth v/s size tradeoffs 
for such circuits have recently also been studied in ~\cite{telgarsky2016benefits,hanin2017universal,liang2016deep, yarotsky2016error,safran2016depth} and in this chapter so far. But to the best of our knowledge no lowerbounds scaling exponentially with the dimension are known for analog deep neural networks of depths more than $2$.
~\\ \\
In what follows, the {\em depth} of a circuit will be the length of the longest path from the output node to an input variable, and the {\em size} of a circuit will be the total number of gates in the circuit. We will also use the notation {\em Sum-of-ReLU} to refer to circuits whose inputs feed into a single layer of ReLU gates, whose outputs are combined into a weighted sum to give the final output. Similarly, Sum-of-ReLU-of-ReLU denotes the circuit with depth 3, where the output node is a simple weighted sum, and the intermediate gates are all ReLU gates in the two ``hidden" layers. We analogously define Sum-of-LTF, LTF-of-LTF, LTF-of-ReLU, LTF-of-LTF-of-LTF, LTF-of-ReLU-of-ReLU and so on. We will also use the notation LTF-of-(ReLU)$^k$ for a circuit of the form LTF-of-ReLU-of-RELU-$\ldots$-ReLU with $k\geq 1$ levels of ReLU gates.

\subsection{Statement and discussion of our results over Boolean inputs} 

\paragraph{Boolean v/s real inputs.} We begin our study with the following observation which shows that ReLU circuits have markedly different behaviour when the inputs are restricted to be Boolean, as opposed to arbitrary real inputs. Since AND and OR gates can both be implemented by ReLU gates, it follows that {\em any} Boolean function can be implemented by a ReLU-of-ReLU circuit. In fact, it is not hard to show something slightly stronger:

\begin{lemma}
Any function $f : \{-1,1\}^n \rightarrow \mathbb{R}$ can be implemented by a Sum-of-ReLU circuit using at most $\min\{2^n,\sum_{\hat{f}(S) \neq 0} \vert S \vert\}$ number of ReLU gates, where $\hat f(S)$ denotes the Fourier coefficient of $f$ for the set $S \subseteq \{1, \ldots, n\}$.
\end{lemma}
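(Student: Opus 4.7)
The plan is to prove the two upper bounds $2^n$ and $\sum_{\hat{f}(S)\neq 0}|S|$ separately and then take their minimum; both constructions will be depth-$2$ Sum-of-ReLU circuits, so the only content is counting ReLU gates. For the $2^n$ bound I would use one-hot indicators on the cube: for each $a\in\{-1,1\}^n$, the gate $g_a(x):=\relu(\langle a,x\rangle -(n-1))$ evaluates to $1$ at $x=a$ (since $\langle a,a\rangle = n$) and to $0$ at every other point of the cube (since $\langle a,x\rangle = n - 2d_H(a,x) \leq n-2$ when $x\neq a$, where $d_H$ is Hamming distance). Therefore $f(x) = \sum_{a\in\{-1,1\}^n} f(a)\,g_a(x)$ realises $f$ as a Sum-of-ReLU circuit of size $2^n$.

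For the Fourier bound I would start from $f(x) = \sum_{S\subseteq[n]} \hat{f}(S)\chi_S(x)$ with $\chi_S(x)=\prod_{i\in S}x_i$, and argue that each parity $\chi_S$ with $|S|\geq 1$ is realisable by a Sum-of-ReLU sub-circuit of size $|S|$; the $S=\emptyset$ term contributes the constant $\hat{f}(\emptyset)$, which is absorbed into the output bias and needs zero ReLU gates. The key observation is that on $\{-1,1\}^n$ the value of $\chi_S(x)$ depends only on the single real quantity $y:=\sum_{i\in S}x_i$, which ranges over the $|S|+1$ points $\{-|S|,-|S|+2,\ldots,|S|\}$, and on these points $\chi_S$ alternates between $\pm 1$ (writing $y=2p-|S|$ for $p$ the number of $+1$ coordinates in $S$, one has $\chi_S = (-1)^{|S|-p}$, which flips sign whenever $p$ increases by $1$). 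Linearly interpolating between consecutive required $(y,\chi_S)$ values, and then extending the two outermost interior pieces as unbounded linear rays, defines a continuous univariate PWL function with exactly $|S|$ affine pieces that agrees with $\chi_S$ on every reachable value of $y$. Theorem~\ref{thm:1-dim-2-layer} then says this function is representable by a $2$-layer ReLU DNN of size $|S|$; pre-composing with the affine map $x\mapsto\sum_{i\in S}x_i$ costs no extra gates and yields a Sum-of-ReLU circuit for $\chi_S$ with $|S|$ ReLU gates. Linearly combining these circuits with coefficients $\hat{f}(S)$ gives a Sum-of-ReLU representation of $f$ with $\sum_{\hat{f}(S)\neq 0}|S|$ gates in total.

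The main obstacle, in my view, is the piece-counting step: it is important that the natural piecewise-linear extrapolation really produces $|S|$ and not $|S|+2$ pieces, so that the bound given by Theorem~\ref{thm:1-dim-2-layer} matches the claimed $|S|$. This reduces to the observation that we are free to define the two semi-infinite tails as exact continuations of the adjacent interior affine pieces, so by construction no additional pieces arise. Everything else — the existence and uniqueness of the Fourier expansion over the cube, the application of Theorem~\ref{thm:1-dim-2-layer}, and the fact that Sum-of-ReLU circuits are closed under pre-composition by affine maps and under linear combination of outputs at no extra ReLU cost — is routine.
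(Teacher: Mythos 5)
Your plan is structurally the same as the paper's: the $2^n$ bound via one-hot ReLU indicators $\relu(\langle a,x\rangle-(n-1))$ at each hypercube vertex, and the Fourier bound via expanding $f$ in parities and realising each $\chi_S$ by a univariate piecewise-linear lift pre-composed with $y=\sum_{i\in S}x_i$. But your parity lift is actually tighter than the paper's own Appendix~\ref{sec:Parity-ReLU}: there the lift of parity on $k$ bits is flattened to constants on both tails, giving $k+2$ pieces and hence (via Corollary~\ref{cor:tighter-bound-2-layer}) $k+1$ gates --- one more per nonempty $S$ than the lemma's stated $\sum_{\hat{f}(S)\neq 0}|S|$. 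Your ray-extrapolated lift has exactly $|S|$ pieces, so Theorem~\ref{thm:1-dim-2-layer} yields $|S|$ gates and matches the lemma exactly; you are in effect repairing a small off-by-one discrepancy between the lemma statement and its supporting appendix.

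One edge case needs an explicit patch: for $|S|=1$ your lift degenerates to a single non-constant affine piece, and Theorem~\ref{thm:1-dim-2-layer} cannot literally be invoked for a one-piece line, since a single ReLU gate always has a kink and cannot equal a non-constant line on all of $\R$. Since agreement is only needed on the two reachable values $y\in\{-1,1\}$, replace the lift by a two-piece function with one flat tail --- e.g.\ $\phi(y)=2\relu\big(\tfrac{y+1}{2}\big)-1$ --- which Corollary~\ref{cor:tighter-bound-2-layer} realises with a single ReLU gate. With that patch the $|S|$-gate count holds for every $|S|\geq 1$, and the rest of the argument (indicators, pre-composition by an affine map at no gate cost, linear combination of Sum-of-ReLU circuits) is sound.
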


The Lemma follows by observing that the indicator functions of each vertex of the Boolean hypercube $\{-1,1\}^n$ can be implemented by a single ReLU gate, and the parity function on $k$ variables can be implemented by $k$ ReLU gates (see Appendix~\ref{sec:Parity-ReLU}). Thus, if one does not restrict the size of the circuit, then Sum-of-ReLU circuits can represent any pseudo-Boolean function. In contrast, we will now show that if one allows real inputs, then there exist functions with just 2 inputs (i.e., $n=2$) which cannot be represented by any Sum-of-ReLU circuit, no matter how large.

\begin{prop}\label{thm:max-0-x-y} The function $\max\{0,x_1,x_2\}$ cannot be computed by any Sum-of-ReLU circuit, no matter how many ReLU gates are used. It can be computed by a Sum-of-ReLU-of-ReLU circuit.
\end{prop}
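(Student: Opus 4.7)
The plan is to prove the two assertions separately: the upper bound via a short explicit construction, and the lower bound via a structural analysis of the non-differentiability locus of any Sum-of-ReLU function.

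For the \emph{upper bound}, I would invoke Theorem~\ref{cor:all-pwl-are-dnn} with $n=2$ to see that every PWL function on $\R^2$ is representable by a ReLU DNN of depth at most $\lceil \log_2 3 \rceil + 1 = 3$, and exhibit an explicit depth-$3$ circuit matching the Sum-of-ReLU-of-ReLU template. A concrete instance comes from the identity $\max\{0, x_1, x_2\} = \relu\bigl(x_2 + \relu(x_1 - x_2)\bigr)$: the first hidden layer computes three gates $\relu(x_1 - x_2)$, $\relu(x_2)$, $\relu(-x_2)$ (so that $x_2 = \relu(x_2) - \relu(-x_2)$ is available to the next layer); the second hidden layer has a single gate $\relu\bigl(\relu(x_2) - \relu(-x_2) + \relu(x_1 - x_2)\bigr)$, which, read out through the summation node with weight $1$, is the desired function.

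For the \emph{lower bound}, suppose for contradiction that $f(x) := \max\{0, x_1, x_2\} = \sum_{i=1}^{N} c_i\, \relu(\ell_i(x))$ for some affine functionals $\ell_i$ and reals $c_i$. The core step is a structural observation about the non-smoothness of any Sum-of-ReLU. Group the gates by their kink lines $L_i := \{\ell_i = 0\}$; for each distinct kink line $L$, fix some nonzero affine $\ell_L$ vanishing on $L$ and rewrite the combined contribution of all gates with that kink line as $\beta_L\, \relu(\ell_L) + \gamma_L\, \relu(-\ell_L)$ for appropriate scalars $\beta_L, \gamma_L$. A direct computation gives gradient $\beta_L \nabla \ell_L$ on the side $\ell_L > 0$ and $-\gamma_L \nabla \ell_L$ on the side $\ell_L < 0$, so the gradient jump across $L$ equals the \emph{constant} vector $(\beta_L + \gamma_L)\nabla \ell_L$ at every generic point of $L$. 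Hence either (i) $\beta_L + \gamma_L = 0$ and the contribution is globally affine (so $L$ is not a crease for these gates), or (ii) $\beta_L + \gamma_L \neq 0$ and $f$ is non-differentiable at every point of $L$ outside the finitely many intersections with other kink lines.

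I would then apply this dichotomy to the specific line $L := \{x_1 = x_2\}$. Note that $f$ fails to be differentiable on the ray $\{(t,t) : t > 0\}$, where the gradient jumps from $(1,0)$ to $(0,1)$, whereas $f \equiv 0$ in a neighborhood of every point of the opposite half $\{(t,t) : t < 0\}$, so $f$ is smooth there. In case (ii), non-differentiability of $f$ would extend along almost all of $L$, contradicting smoothness on $\{(t,t) : t < 0\}$. In case (i), gates with kink line $L$ contribute only an affine function, so the crease along $\{(t,t): t>0\}$ would have to be produced by gates whose kink lines are different from $L$; but any such other kink line meets the open ray $\{(t,t): t>0\}$ in at most one point, and finitely many of them can account for non-differentiability at only finitely many points, contradicting the fact that \emph{every} point of the ray is a crease of $f$. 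The only real subtlety is handling points where several kink lines intersect in the structural observation above, but since there are only finitely many such points, the argument can safely avoid them by choosing a generic test point on the relevant half-line.
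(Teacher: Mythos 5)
Your proof is correct and takes essentially the same route as the paper's: for the lower bound, both arguments exploit the fact that the non-differentiability locus of any Sum-of-ReLU is a union of full lines (up to finitely many intersection points), whereas $\max\{0,x_1,x_2\}$ has a crease along only half of the line $\{x_1=x_2\}$; for the upper bound, both invoke the depth-$\lceil\log_2(n+1)\rceil+1$ representation theorem. One small remark: the paper's appendix states flatly that the non-differentiability set of a Sum-of-ReLU is ``precisely'' the union of the $w$ kink lines, which as written ignores possible cancellations; your dichotomy (case (i) affine vs.\ case (ii) non-differentiable along almost all of $L$) patches this gap and is the cleaner version of the argument, and your explicit depth-$3$ circuit via $\max\{0,x_1,x_2\}=\relu\bigl(x_2+\relu(x_1-x_2)\bigr)$ is a nice concrete supplement to the paper's appeal to Theorem~\ref{cor:all-pwl-are-dnn}.
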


The first part of the above proposition (the impossibility result) is proved in Appendix~\ref{sec:proof-max-0-x-y}. The second part follows from Lemma \ref{cor:all-pwl-are-dnn}, which stated that any $\R^n \to \R$ function that can be implemented by a circuit of ReLU gates, can always be implemented with at most $\lceil \log(n+1) \rceil$ layers of ReLU gates (with a weighted Sum to give the final output). 

\paragraph{Restricting to Boolean inputs.} From this point on, we will focus entirely on the situation where the inputs to the circuits are restricted to $\{-1,1\}$. One motivation behind our results is the desire to understand the strength of the ReLU gates vis-a-vis LTF gates. It is not hard to see that any circuit with LTF gates can be simulated by a circuit with ReLU gates with at most a constant blow-up in size (because a single LTF gate can be simulated by 2 ReLU gates when the inputs are a discrete set -- see Appendix~\ref{sec:LTF-ReLU}). The question is whether ReLU gates can do significantly better than LTF gates in terms of depth and/or size.
~\\ \\
A quick observation is that Sum-of-ReLU circuits can be linearly (in the dimension $n$) smaller than Sum-of-LTF circuits. More precisely, 

\begin{prop} The function $f:\{-1,1\}^n \to \R$ given by $f(x) =\sum_{i=1}^n 2^i \big(\frac{1+x_i}{2}\big)$ can be implemented by a Sum-of-ReLU circuit with 2 ReLU gates, and any Sum-of-LTF that implements $f$ needs $\Omega(n)$ gates.
\end{prop}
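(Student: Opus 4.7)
The plan is to handle the upper and lower bounds separately. On the Boolean cube, $f$ is actually the restriction of an affine function: writing $L(x) := \sum_{i=1}^n 2^{i-1}x_i$, one has $f(x) = L(x) + (2^n - 1)$, and since $|L(x)| \le 2^n - 1$ for every $x \in \{-1,1\}^n$, the quantity $L(x) + (2^n - 1)$ is nonnegative on the cube. Therefore $\relu(L(x) + 2^n - 1) = f(x)$ on $\{-1,1\}^n$, giving a Sum-of-ReLU representation with a single nontrivial gate. If the outer weighted sum in the definition of Sum-of-ReLU is not allowed a free additive bias, I would add a second ReLU gate $\relu(-(L(x) + 2^n - 1))$, which is identically zero on the cube, to obtain a representation with exactly two ReLU gates.

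For the $\Omega(n)$ lower bound, the plan is a simple counting argument on the image. The first step is to note that $f$ is injective on $\{-1,1\}^n$: indeed $f(x) = \sum_{i : x_i = 1} 2^i$, so $f(x)/2$ is precisely the integer whose binary expansion has $i$-th bit equal to $\tfrac{1+x_i}{2}$. Hence $f$ takes $2^n$ distinct values on the cube. The second step is the observation that any Sum-of-LTF expression $f(x) = c_0 + \sum_{j=1}^k c_j \cdot \text{LTF}(w_j \cdot x + b_j)$ has image of size at most $2^k$, because the $k$-tuple $(\text{LTF}_1(x), \ldots, \text{LTF}_k(x)) \in \{-1,1\}^k$ can attain at most $2^k$ values as $x$ ranges over the cube, and $f(x)$ is a fixed function of this tuple. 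Combining the two bounds gives $2^k \ge 2^n$, i.e.\ $k \ge n$.

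I do not anticipate a serious obstacle: the upper bound is an explicit construction, and the lower bound is the standard counting bound on the image size of a weighted sum of two-valued gates. The only minor care needed is in fixing the exact convention for ``Sum-of-LTF'' (in particular whether a constant bias is absorbed into the weighted output sum or into an always-on LTF gate, using that $\text{LTF}(1) = 1$), but neither convention changes the count by more than an additive constant, so the $\Omega(n)$ conclusion is unaffected.
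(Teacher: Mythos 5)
Your proof is correct and follows essentially the same route as the paper: the upper bound uses the fact that $f$ is affine on the cube and therefore expressible with two ReLU gates (the paper cites the general fact that any affine function is a difference of two ReLUs; your observation that one ReLU already suffices up to an output bias, since $f$ is nonnegative on the cube, is a small but inessential refinement), and the lower bound is the same counting argument — a Sum-of-LTF with $k$ gates has image of size at most $2^k$, while $f$ is a bijection onto $2^n$ values.
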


The above result follows from the following two facts: 1) any linear function is implementable by 2 ReLU gates, and 2) any Sum-of-LTF circuit with $w$ LTF gates gives a piecewise constant function that takes at most $2^w$ different values. Since $f$ takes $2^n$ different values (it evaluates every vertex of the Boolean hypercube to the corresponding natural number expressed in binary), we need $w \geq n$ gates.

In the context of these preliminary results, we now state our main contributions. 
For the next result we recall the definition of the Andreev function ~\citep{andreev1987one} which has previously many times been used to prove computational lower bounds ~\citep{paterson1993shrinkage,impagliazzo1988decision,impagliazzo2012pseudorandomness}.

\begin{definition}[{\bf Andreev's function}]\label{Andreev}
The Andreev's function is the following mapping, 

\begin{align*}
A_n : \{0,1\}^{\lfloor \frac {n}{2} \rfloor} \times \{0,1\}^{ \lfloor \log (\frac{n}{2} ) \rfloor \times \lfloor \frac{n}{2\lfloor \log (\frac {n}{2}) \rfloor} \rfloor} &\longrightarrow \{0,1\} \\
(\x, [a_{ij}]) &\longmapsto x_{\text{bin} \left( z([a_{ij}])  \right )}
\end{align*}
where $z([a_{ij}]) = \{ (\sum_{j=1}^{\lfloor \frac{n}{2 \lfloor \log (\frac {n}{2}) \rfloor} \rfloor} a_{ij}) \mod 2 \}_{i=1,2,..,\lfloor \log (\frac {n}{2} ) \rfloor }$ is the binary string constructed by noting down the odd/even parity of each of the row sums in the matrix $[a_{ij}]$ and ``bin" is the function that gives the decimal number that can be represented by its input bit string.
\end{definition}

 ~\cite{kane2016super} have recently established the first super linear lower bounds for approximating the Andreev function using LTF-of-LTF circuits. In the following theorem we show that their techniques can be adapted to also establish an almost linear lower bound on the size of LTF-of-ReLU circuits approximating this Andreev function with no restriction on the weights $\w, b$ for each gate.

\begin{theorem}\label{thm:andreev-LTF-ReLU}
For any $\delta \in (0,\frac{1}{2})$, there exists $N(\delta) \in \N$ such that for all $n \geq N(\delta)$ and $\epsilon > \sqrt{\frac{2\log^{\frac 2 {2-\delta}}(n)}{n}}$, any LFT-of-ReLU circuit on $n$ bits that matches the Andreev function on $n-$bits for at least $1/2 + \epsilon$ fraction of the inputs, has size $\Omega(\epsilon^{2(1-\delta)}n^{1-\delta})$. 
\end{theorem}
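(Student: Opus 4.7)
The plan is to follow the random restriction methodology of Kane--Williams, adapted to handle $\relu$ gates. The key structural observation is that under a suitable random restriction of the $a$-variables, (i) each $\relu$ gate will, with sufficiently high probability, collapse to either a constant or an affine function of the surviving variables, and (ii) the Andreev function itself will reduce to reading off one of the $\x$-coordinates at an essentially uniformly random index.

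First, I would set up the random restriction carefully. Keep all of the $\x$-variables free, and for each row of the $a$-matrix keep exactly one randomly chosen variable free while fixing the others to independent uniform bits. Two useful consequences follow: the row-parity $z_i$ after restriction depends only on the single surviving variable in row $i$ (shifted by a random constant), so the index $z([a_{ij}])$ ranges uniformly over $\{0,1\}^{\lfloor \log(n/2)\rfloor}$; and the restricted Andreev function is essentially $x_{\mathrm{bin}(z)}$ for a uniformly random $z$. Thus after restriction the target function is a uniformly random index into $\x$, which is extremely hard to approximate in a pointwise sense.

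Second, I would analyze the effect of the restriction on each individual $\relu$ gate. A gate of the form $\relu(\w\cdot\x + \u\cdot\a + b)$ has its argument shifted by the random quantity $\u_{\text{fixed}}\cdot\a_{\text{fixed}}$, a sum of independent $\pm 1$ variables scaled by the corresponding weights. Standard anti-concentration (Berry--Esseen, as used in Kane--Williams) shows that, unless the weights on the surviving coordinates are comparable in magnitude to $\|\u_{\text{fixed}}\|_2$, the restricted affine argument has a constant sign on the entire restricted Boolean cube. In that case the $\relu$ gate becomes either identically $0$ or simply its affine argument. A careful union bound across the $s$ many $\relu$ gates gives that, for $s$ polynomially smaller than $n^{1-\delta}$, \emph{every} gate simplifies to an affine function with probability bounded away from zero. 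The $\delta$ in the exponent tracks the anti-concentration slack needed to make this union bound work.

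Third, once every $\relu$ gate in the hidden layer has been shown to be affine on the restricted hypercube, the top LTF is an LTF over an affine combination of affine functions, i.e.\ a single LTF over the surviving coordinates $(\x,\a_{\text{free}})$. I then invoke the classical correlation bound for LTFs against the ``random-index'' function: for any LTF $g$, averaging over the random index $I$ produced by the restricted Andreev function gives
\[
\Pr_{\x,\a_{\text{free}}}\!\left[g(\x,\a_{\text{free}}) = x_{I(\a_{\text{free}})}\right] \;\leq\; \tfrac{1}{2} + O\!\left(\sqrt{\tfrac{\log^{2/(2-\delta)} n}{n}}\right),
\]
which follows by noting that an LTF has non-negligible correlation with at most a polylogarithmic number of $\x$-coordinates. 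Combining this with step two produces a contradiction with the hypothesis that the circuit matches $A_n$ on a $1/2+\epsilon$ fraction of inputs once $\epsilon$ exceeds the stated threshold and $s$ is smaller than $c\,\epsilon^{2(1-\delta)} n^{1-\delta}$.

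The main obstacle will be step two: unlike an LTF, a $\relu$ remains a genuinely real-valued, nonlinear object even when its threshold is not ``obviously'' crossed. However, because the downstream gate is an LTF, only the linear \emph{value} matters, and a $\relu$ that is uniformly on (respectively off) on the restricted cube contributes an affine (respectively zero) summand to the LTF's argument. The genuinely problematic event, where the $\relu$ is on for some surviving assignments and off for others, happens precisely in the ``tight'' anti-concentration window, and tuning the surviving-variable fraction controls the probability of this event at the price of the $\delta$ slack appearing in the theorem's exponent.
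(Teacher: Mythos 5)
Your high-level plan (random restrictions in the style of Kane--Williams, plus the observation that a ReLU gate either dies or becomes affine once its argument is sign-definite) matches the spirit of the paper's proof. However, there are two genuine gaps that, as written, your argument cannot overcome. First, your restriction keeps all of $\x$ free, whereas the paper's restriction $\rho(\x^*)$ deterministically fixes the entire $\x$-block to a specific value $\x^*$ (chosen as the truth table of a ``hard'' function $f^* \in F^*$) and leaves free only one $a$-variable per row, so the restricted circuit lives on merely $\lfloor \log(n/2) \rfloor$ bits. With $\x$ free, a gate $\relu(\w\cdot\x+\u\cdot\a+b)$ whose $\w$-weight dominates $\|\u_{\text{fixed}}\|_2$ will not become sign-definite over the surviving $(\x,\a_{\text{free}})$-cube no matter what the random shift does, so your anti-concentration step fails in general; the paper's fixing of $\x$ sidesteps this because $\w\cdot\x^*$ is a constant and the only randomness that matters is the $a$-restriction, so the Kane--Williams Lemma 1.1 gives a collapse probability of $O(\log n/\sqrt{n})$ per gate regardless of how weight is allocated to $\x$.

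Second, even granting the per-gate collapse probability, the union bound ``every gate simplifies'' fails probability $\approx s\cdot\log n/\sqrt n$, which is only bounded away from $1$ when $s=O(\sqrt{n}/\log n)$; that would only deliver a $\sqrt{n}$-type lower bound, whereas the theorem needs $\Omega(\epsilon^{2(1-\delta)}n^{1-\delta})$ which for small $\delta$ approaches $n$. The paper deliberately avoids requiring all gates to collapse: it uses Markov's inequality on the expected number of surviving gates to argue that w.h.p.\ the restricted circuit has at most $s(n,\epsilon)$ gates for an $s$ which is polynomially smaller than the original size $w$, and then invokes the Maass-type counting bound (that LTF-of-ReLU circuits of size $s$ on $O(\log n)$ bits represent at most $2^{\poly(s\log n)}$ Boolean functions) to argue that such a small restricted circuit cannot approximate a random truth table $\x^*$ on $\lfloor\log(n/2)\rfloor$ bits. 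Your final step---a per-LTF correlation bound against the random-index function---is a different (and unproven in this generality) closing argument; but even if supplied it would not close the gap created by the all-gates-collapse union bound. The fix is to replace ``all gates collapse'' with ``the expected number of surviving gates is small and apply Markov,'' fix $\x$ to $\x^*$ so the collapse criterion only involves $\a$, and close with a counting argument rather than a correlation bound.
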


It is well known that proving lower bounds without restrictions on the weights is much more challenging even in the context of LTF circuits. In fact, the recent results in ~\cite{kane2016super} are the first superlinear lower bounds for LTF circuits with no restrictions on the weights. With restrictions on some or all the weights, e.g., assuming $\poly(n)$ bounds on the weights (typically termed the ``small weight asssumption") in certain layers, exponential lower bounds have been established for LTF circuits ~\citep{hajnal1987threshold,impagliazzo1997size,sherstov2009separating,sherstov2011unbounded}. Our next results are of this flavor: under certain kinds of weight restrictions, we prove exponential size lower bounds on the size of LTF-of-(ReLU)$^{d-1}$ circuits. {\em We emphasize that our weight restrictions are assumed only on the bottom layer (closest to the input). The other layers can have gates with unbounded weights.} Nevertheless, our weight restrictions are somewhat unconventional. 

\begin{definition}\label{def:ordering-cone} ({\bf The polyhedral cones $P_{m,\sigma}$}) Let $m \in \N$ and $\sigma$ be any permutation of $\{1, \ldots, 2^m\}$. Let us also consider an arbitrary sequencing $\{\x^1, \ldots, \x^{2^m}\}$ of the vertices of the hypercube $\{-1,1\}^m$. Define the following polyhedral cone, $$P_{m,\sigma} := \{\a \in \R^m: \langle \a, \x^{\sigma(1)} \rangle \leq \langle \a, \x^{\sigma(2)} \rangle \leq \ldots \langle \a, \x^{\sigma(2^m)} \rangle\}.$$
In words, $P_{m,\sigma}$ is the set of all linear objectives that order the vertices of the $m$-dimensional hypercube in the order specified by $\sigma$. \qed \end{definition}

\begin{definition}\label{def:weight-restriction}({\bf Our weight restriction condition})  Below, we shall be considering circuits on $2m$ inputs which come partitioned into two blocks $(\x, \y)$ so that $\x, \y \in \{-1,1\}^m$. The weight restriction we impose is that there exist permutations $\sigma_1$ and $\sigma_2$ of $\{1, \ldots, 2^m\}$ such that for \emph{each ReLU gate in the bottom layer} mapping as, $(\x,\y) \mapsto \max \{0,b+\langle \w_1,\x \rangle + \langle \w_2,\y \rangle \}$, for some bias value of $b$ and weight vectors $\w_1$ and $\w_2$, satisfy the following two conditions, {\bf (1)} $\w_i \in P_{m,\sigma_i}$ for $i=1,2$ (see Definition~\ref{def:ordering-cone})and {\bf (2)} all weights are integers with magnitude bounded by some $W>0$. 
~\\ \\
We emphasize the existence of a single $\sigma$ defining a single polyhedral cone $P_{m,\sigma}$ which contains all the weight vectors corresponding to $\x$ in the bottom most layer of the net (similarly for all the weights corresponding to $\y$). But the two cones, one for $\x$ and one for $\y$, are allowed to be different. \qed
\end{definition}
\begin{remark}One can see that $\R^m$ is a disjoint union of the different face-sharing polyhedral cones $P_{m,\sigma}$ obtained for different $\sigma \in S_{2m}$. Thus part (1) of the above weight restriction is equivalent to asking all the weight vectors in the bottom layer of the net corresponding to $\x$ part of the input to lie in any one of these special cones (and similarly for the $\y$ part of the input).
\end{remark}

Let OMB is the ODD-MAX-BIT function which is a $\pm 1$ threshold gate which evaluates to $-1$ on say a $n-$bit input $\x$ if $\sum_{i=1}^n (-1)^{i+1}2^i(1+x_i) \geq \frac{1}{2}$. We will prove our lower bounds against the function proposed by Arkadev Chattopadhyay and Nikhil Mande in ~\cite{chattopadhyay2017weights},
\begin{align}\label{AN}
g : {\rm OMB}_n^0 \circ {\rm OR}_{n^{\frac 1 3} + \log n } \circ {\rm XOR}_2 : \{-1,1\}^{2(n^{\frac 4 3} +  n \log n)} \rightarrow \{-1,1\}
\end{align}
which we will refer to as the Chattopadhyay-Mande function in the remainder of the paper. Here we use the notation from ~\cite{chattopadhyay2017weights} whereby if $p_m$ and $q_n$ are two Boolean functions taking $m$ and $n$ bits respectively for input, we denote a composition of them as, $p_m \circ q_n : \{-1,1\}^{mn} \rightarrow \{-1,1\}$. Here its understood that the input implicitly comes grouped into $m$ blocks of size $n$ on each of which $q$ acts and $p_m$ acts on the $m-$tuple of outputs of these $q_n$ functions. 
~\\ \\
We show the following exponential lowerbound against this Chattopadhyay-Mande function.
\begin{theorem}\label{deepReLU}
Let $m, d, W\in \N$. Any depth $d$ LTF-of-(ReLU)$^{d-1}$ circuits on $2m$ bits such that the weights in the bottom layer are restricted as per Definition~\ref{def:weight-restriction} that implements the Chattopadhyay-Mande function on $2m$ bits will require a circuit size of,

\[ \Omega \left ( (d-1) \left [ \frac{2^{m^{\frac 1 8}}}{mW} \right ]^{\frac 1 {(d-1)}} \right ).\]

Consequently, one obtains the same size lower bounds for circuits with only LTF gates of depth $d$.
\end{theorem}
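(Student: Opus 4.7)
The plan is to reduce Theorem~\ref{deepReLU} to a known exponential lower bound on depth-$d$ LTF circuits that respect the same bottom-layer weight restriction, by simulating each ReLU gate with a small bundle of LTF gates. The cone and integrality conditions in Definition~\ref{def:weight-restriction} are precisely what let this simulation be carried out cheaply and, crucially, preserve the bottom-layer restriction after conversion.

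First I would observe that the pre-activation of any bottom-layer ReLU gate is an integer of magnitude at most $O(mW)$, since the weights are integers bounded by $W$ and the inputs lie in $\{-1,1\}^{2m}$. Using the identity $\max\{0,p\} = \sum_{k=1}^{V}\mathbf{1}[p \ge k]$ with $V = O(mW)$, each such ReLU can be rewritten as a weighted sum of $O(mW)$ LTF gates that share the original weight vectors $(\w_1,\w_2)$ and differ only in their biases. In particular, the cone memberships $\w_i \in P_{m,\sigma_i}$ and the magnitude bound $W$ are automatically preserved, so the resulting bottom-layer LTFs still satisfy Definition~\ref{def:weight-restriction}. I would then push this substitution upward: a ReLU at depth $\ell > 1$ feeds on a linear combination of layer-$(\ell-1)$ outputs, which by induction are weighted sums of LTF gates, so its pre-activation is still a bounded integer on every Boolean assignment and the same telescoping identity rewrites it as a bundle of LTFs at depth $\ell$. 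Iterating through the $d-1$ ReLU layers converts the original LTF-of-(ReLU)$^{d-1}$ circuit of size $s$ into a depth-$d$ LTF circuit of size at most $s\cdot (mW)^{d-1}$ whose bottom layer still obeys the Definition~\ref{def:weight-restriction} restriction.

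Finally I would invoke the Chattopadhyay--Mande exponential lower bound on such depth-$d$, bottom-layer-restricted LTF circuits computing the function $g$ of~(\ref{AN}); that bound is of order $2^{m^{1/8}}$. Inverting the size blow-up $s\cdot (mW)^{d-1} \gtrsim 2^{m^{1/8}}$ yields the asserted $s = \Omega\bigl((d-1)[2^{m^{1/8}}/(mW)]^{1/(d-1)}\bigr)$, which is automatically also a lower bound for pure depth-$d$ LTF circuits via the trivial simulation of any LTF gate by a pair of ReLUs and an LTF.

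The hard part will be the propagation step: I need to confirm that flattening the nested telescoping sums into a genuine depth-$d$ LTF circuit does not secretly increase the depth, and that the integer pre-activation bounds $V_\ell$ propagate multiplicatively as $V_\ell = O(s_{\ell-1}\cdot W_\ell\cdot V_{\ell-1})$, where $W_\ell$ bounds the layer-$\ell$ weights, so that the overall blow-up stays at the advertised $(mW)^{d-1}$. The Definition~\ref{def:weight-restriction} restriction is essential here for two reasons: it anchors the initial value bound $V_1 = O(mW)$, and it is exactly the restriction the Chattopadhyay--Mande argument assumes on its bottom layer, so without it the telescoping would yield an LTF circuit outside the class covered by the reference lower bound and the reduction would fail.
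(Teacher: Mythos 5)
Your plan is a genuine departure from the paper's argument, which works directly with sign-rank: the paper proves (Lemma~\ref{ReLU-circuit-rank}) that the $2^m\times 2^m$ matrices computed by restricted LTF-of-(ReLU)$^{d-1}$ circuits admit a coarse contiguous block structure, uses this to bound the sign-rank of the circuit's output by $O\bigl((\prod_k w_k)^2(mW)^2\bigr)$, and then contrasts this with the $2^{\Omega(m^{1/4})}$ sign-rank of the Chattopadhyay--Mande function before applying AM--GM. Your route instead tries to simulate ReLU gates by LTF gates and reduce to an existing LTF lower bound. Unfortunately there are two gaps that I do not see how to close.

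\textbf{Gap 1: the telescoping identity does not propagate past the bottom layer.} The identity $\max\{0,p\}=\sum_{k=1}^{V}\mathbf{1}[p\ge k]$ is exact only when $p$ is a bounded integer. On the bottom layer this is fine, because Definition~\ref{def:weight-restriction} forces integer weights of magnitude at most $W$ and the inputs are $\pm 1$, so $p\in\mathbb{Z}$ with $|p|=O(mW)$. But the restriction in Definition~\ref{def:weight-restriction} is deliberately imposed \emph{only} on the bottom layer; the paper explicitly allows all higher layers to have unrestricted real weights. Consequently a ReLU gate at depth $\ell\ge 2$ receives a real linear combination (with arbitrary real coefficients) of the outputs of the layer below. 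Its pre-activation is in general a non-integer real number, and $\sum_{k}\mathbf{1}[p\ge k]$ computes $\lfloor p\rfloor$, not $\max\{0,p\}$. Your inductive step, ``its pre-activation is still a bounded integer on every Boolean assignment,'' is where the argument breaks. Without integrality there is no exact finite-bundle LTF simulation of a ReLU gate of the kind you need, and the size blow-up $(mW)^{d-1}$ is not controlled.

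\textbf{Gap 2: the target of the reduction does not exist as a prior result.} You propose to ``invoke the Chattopadhyay--Mande exponential lower bound on such depth-$d$, bottom-layer-restricted LTF circuits.'' No such bound is available to invoke. Chattopadhyay--Mande supply a \emph{sign-rank} lower bound for the function $g$ (Theorem~\ref{CMTheorem}), together with a depth-$2$ separation of LTF-of-MAJ from LTF-of-LTF; they do not give a lower bound against depth-$d$ LTF circuits under the cone-plus-integrality bottom-layer restriction of Definition~\ref{def:weight-restriction}. The paper's Theorem~\ref{deepReLU} in fact \emph{establishes} such an LTF lower bound (``Consequently\ldots'') as a corollary of the ReLU bound, so a reduction in your proposed direction would be circular: you would be reducing to exactly the statement that still needs to be proven. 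The sign-rank route sidesteps both issues at once, because the block-structure argument in Lemma~\ref{ReLU-circuit-rank} is indifferent to the real-valued weights at the higher layers and works natively for ReLU gates, without simulating them away.
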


\remark Note that this is an exponential in dimension size lowerbound for even super-polynomially growing bottom layer weights (and additional constraints as per Definition~\ref{def:weight-restriction}) and upto depths scaling as $d = O(m^\xi)$ for any $\xi < \frac{1}{8}$.

We note that the Chattopadhyay-Mande function can be represented by an $O(m)$ size LTF-of-LTF circuit with no restrictions on weights (see Theorem~\ref{CMTheorem} below). In light of this fact, Theorem~\ref{deepReLU} is somewhat surprising as it shows that for the purpose of representing Boolean functions a deep ReLU circuit (ending in a LTF) gate can get exponentially weakened when just its bottom layer weights are restricted as per Definition~\ref{def:weight-restriction}, even if the integers are allowed to be super-polynomially large. Moreover, the lower bounds also hold of LTF circuits of arbitrary depth $d$, under the same weight restrictions on the bottom layer. We are unaware of any exponential lower bounds on LTF circuits of arbitrary depth under any kind of weight restrictions.



We will use the method of sign-rank to obtain the exponential lowerbounds in Theorems~\ref{deepReLU}.
The {\emph sign-rank} of a real matrix $A$ with all non-zero entries is the least rank of a matrix $B$ of the same dimension with all non-zero entries such that for each entry $(i,j)$, $sign(B_{ij}) = sign(A_{ij})$. For a Boolean function $f$ mapping, $f : \{-1,1\}^m \times \{-1,1\}^m \rightarrow \{-1,1\}$ one defines the ``sign-rank of f" as the sign-rank of the $2^m \times 2^m$ dimensional matrix $[f(\x,\y)]_{\x ,\y \in \{-1,1\}^m}$. This notion of a sign-rank has been used to great effect in diverse fields from communication complexity to circuit complexity to learning theory. Explicit matrices with a high sign-rank were not known till the breakthrough work by Forster, \cite{forster2002linear}. Forster et. al. showed elegant use of this complexity measure to show exponential lowerbounds against LTF-of-MAJ circuits in \cite{forster2001relations}. Lot of the previous literature about sign-rank has been reviewed in the book ~\cite{lokam2009complexity}. Most recently ~\cite{chattopadhyay2017weights} have proven a strict containment of LTF-of-MAJ in LTF-of-LTF.  The following theorem statement is a combination of their Theorem $5.2$ and intermediate steps in their Corollary $1.2$, 

\begin{theorem}[{\bf Chattopadhyay-Mande ($2017$)}]\label{CMTheorem}
~\\
The Chattopadhyay-Mande function $g$ in equation \ref{AN} can be represented by a linear sized LTF-of-LTF circuit and $\text{sign-rank}(g) \geq 2^{\frac{n^{\frac 1 3}}{81} - 3}$
\end{theorem}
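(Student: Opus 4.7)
The statement has two independent parts: a linear-size LTF-of-LTF upper bound, and a $2^{\Omega(n^{1/3})}$ lower bound on the sign-rank of $g$. I would prove them separately.

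For the upper bound I would construct the circuit as follows. First observe that $\text{OMB}_n$ is itself a single LTF realized by the alternating exponential weights $\pm 2^i$ from its definition. For each XOR$_2$ on the pair $(x_{ij}, y_{ij})$, place two bottom-layer LTFs $u_{ij} = \text{sign}(x_{ij}+y_{ij}-1)$ and $v_{ij} = \text{sign}(-x_{ij}-y_{ij}-1)$ that detect the two agreement configurations $x_{ij}=y_{ij}=+1$ and $x_{ij}=y_{ij}=-1$. The block-sum $S_i := \sum_j (u_{ij}+v_{ij})$ then equals $0$ if every pair in block $i$ agrees (so the OR of XOR$_2$'s is FALSE, i.e.\ $c_i = +1$) and is at most $-2$ otherwise, so $S_i$ encodes the value of the $i$-th OR up to sign. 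A single top LTF of the form $\text{sign}\bigl(\sum_i (-1)^{i+1} W_i\, S_i + C\bigr)$, with weights $W_i$ growing geometrically fast enough that block $i$'s contribution dominates all lower-indexed blocks, then faithfully implements OMB's decision rule. The total gate count is $2nk + 1 = O(m)$.

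For the sign-rank lower bound I would apply Sherstov's pattern-matrix framework, exploiting the bottom XOR$_2$ layer. Setting $F := \text{OMB}_n \circ \text{OR}_k$ on $nk$ bits (with $k = n^{1/3} + \log n$), the identity $g(x,y) = F(x \oplus y)$ shows that the communication matrix of $g$ is precisely the XOR-pattern matrix of $F$. Sherstov's sign-rank lifting for XOR-pattern matrices gives $\text{sign-rank}(g) \geq 2^{\Omega(d)}$, where $d$ is an appropriate positive-sensitivity complexity of $F$, typically its threshold degree or one-sided approximate degree. It therefore suffices to lower bound $d$ for the composition $\text{OMB}_n \circ \text{OR}_{n^{1/3}+\log n}$ by at least $n^{1/3}/81$. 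I would do this by Sherstov's dual block composition: combine a Paturi-type dual witness for OR$_k$ certifying threshold degree $\Omega(\sqrt k)$ with a dual witness for OMB$_n$ (OMB is itself a single LTF and thus has threshold degree $1$, but under composition with the inner OR blocks the hidden bits that OMB needs in order to ``read off'' a position cannot be extracted by any polynomial of degree $o(n^{1/3})$). Form the tensorized dual measure $\mu(z) = \sum_c \mu_{\text{OMB}}(c) \prod_i \mu_{\text{OR}}(z^{(i)} \mid c_i)$, verify $\sum_z \mu(z) F(z) > 0$ (correlation) and $\sum_z \mu(z) p(z) = 0$ for every polynomial $p$ of degree $< n^{1/3}/81$ (orthogonality), and plug into the sign-rank lifting to obtain $\text{sign-rank}(g) \geq 2^{n^{1/3}/81 - 3}$.

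The main obstacle, I expect, is the quantitative dual-block composition step: producing a dual witness that simultaneously certifies the large positive-sensitivity complexity of $\text{OMB} \circ \text{OR}$ and tracks the constant $1/81$ through the composition. OMB's apparent simplicity (threshold degree $1$) is misleading; its true hardness under pattern-matrix lifting comes from its very large \emph{weight} rather than its degree, so the dual witness for OMB must be built to capture a weight-versus-degree trade-off in the spirit of Beigel's rational-approximation lower bound. Keeping the $\ell_1$-mass of the intermediate witnesses under control, and ensuring that the orthogonality condition survives the block composition all the way up to degree $n^{1/3}/81$, is the delicate bookkeeping step in which every constant in the final exponent is fixed. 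Once this threshold-degree lower bound is in hand, the pattern-matrix sign-rank lifting and the explicit LTF-of-LTF construction are both essentially black-box.
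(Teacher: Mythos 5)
The paper does not prove this statement: immediately before Theorem~\ref{CMTheorem} the text reads ``The following theorem statement is a combination of their Theorem~$5.2$ and intermediate steps in their Corollary~$1.2$,'' so the result is imported wholesale from \cite{chattopadhyay2017weights} and there is no in-paper argument to compare yours against.

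On the merits of your reconstruction: the upper bound is fine in outline (two bottom LTFs per XOR$_2$ to encode agreement, and a single top LTF with geometrically growing block weights to simulate OMB on the integer-valued block sums $S_i \in \{0,-2,\dots,-2k\}$), though you should double-check that the weight gaps $W_i$ are chosen to dominate the full range of lower-indexed $S_j$'s and not just a $\pm 1$ value. The lower bound as you have phrased it has a real gap. You propose to show that ``the threshold degree or one-sided approximate degree'' of $F = \mathrm{OMB}_n \circ \mathrm{OR}_{k}$ is $\ge n^{1/3}/81$ and then lift to sign-rank via a pattern-matrix theorem, but $F$ does \emph{not} have threshold degree anywhere near $n^{1/3}$: OMB is itself a single LTF (threshold degree $1$), and an LTF composed with $\mathrm{OR}_k$ can be sign-represented by a polynomial of degree far below $n^{1/3}$ by paying in coefficient size. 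You actually identify the correct resolution yourself in your ``main obstacle'' paragraph: the hardness of OMB lives in its \emph{weight}, not its degree (Beigel's theorem), and Chattopadhyay--Mande's sign-rank lower bound goes through a degree--weight tradeoff rather than a pure threshold-degree argument. Concretely, their route (Theorem~5.2 of \cite{chattopadhyay2017weights}) builds a dual object certifying that any matrix sign-representing $g$ of small rank would yield a low-degree, low-weight PTF for OMB, contradicting Beigel; this uses a Forster-type sign-rank bound applied to a smooth dual for $\mathrm{OR}$, not the vanilla XOR-pattern-matrix lifting from one-sided approximate degree. So the skeleton you sketch (Paturi-type dual for OR, Beigel-style weight obstacle for OMB, block composition) is the right intuition, but the pivot quantity must be the degree--weight complexity of OMB, not the threshold degree of $\mathrm{OMB}\circ\mathrm{OR}$, and the lifting theorem must be the one that consumes a smooth dual with a margin/weight parameter rather than a plain threshold-degree lower bound.
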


In Appendix \ref{app:signranksmall} we will prove our Theorem \ref{deepReLU} by showing a small upper bound on the sign-rank of LTF-of-(ReLU)$^{d-1}$ circuits which have their bottom most layer's weight restricted as given in Definition \ref{def:weight-restriction}.
 
\section{Lower bounds for LTF-of-ReLU against the Andreev function (Proof of Theorem~\ref{thm:andreev-LTF-ReLU})}

We will use the classic ``method of random restrictions" ~\citep{subbotovskaya1961realizations,stad1998shrinkage,hastad1986almost,yao1985separating,rossman2008constant} to show a lowerbound for weight unrestricted LTF-of-ReLU circuits for representing the Andreev function. The basic philosophy of this method is to take any arbitrary LTF-of-ReLU circuit which supposedly matches the Andreev function on a large fraction of the inputs and to randomly fix the values on some of its input coordinates and also do the same fixing on the same coordinates of the input to the Andreev function. Then we show that upon doing this restriction the Andreev function collapses to an arbitrary Boolean function on the remaining inputs (what it collapses to depends on what values were fixed on its inputs that got restricted).  But on the other hand we show that the LTF-of-ReLU collapses to a circuit which is of such a small size that with high-probability it cannot possibly approximate a randomly chosen Boolean function on the remaining inputs. This contradiction leads to a lowerbound. 

There are two important concepts towards implementing the above idea. First one is about being able to precisely define as to when can a ReLU gate upon a partial restriction of its inputs be considered to be removable from the circuit. Once this notion is clarified it will automatically turn out that doing random restrictions on ReLU is the same as doing random restriction on a LTF gate as was recently done in ~\cite{kane2016super}. And secondly it needs to be true that at any fixed size, LTF-of-ReLU circuits cannot represent too many of all the Boolean functions possible at the same input dimension.  For this very specific case of LTF-of-ReLU circuits where ReLU gates necessarily have a fan-out of $1$, Theorem 2.1 in ~\cite{maass1997bounds} applies and we have from there that LTF-of-ReLU circuits over $n-$bits with $w$ ReLU gates can represent at most $N = 2^{O((wn + w + w+1 +1)^2\log (wn + w + w+1 +1))} = 2^{O((wn+2w+2)^2\log (wn+2w+2))}$ number of Boolean functions. We note that slightly departing from the usual convention with neural networks here in this work by Wolfgaang Mass he allows for direct wires from the input nodes to the output LTF gate. This flexibility ties in nicely with how we want to define a ReLU gate to be becoming removable under the random restrictions that we use.

\paragraph{Random Boolean functions vs any circuit class}

In everything that follows all samplings being done (denoted as $\sim$) are to be understood as sampling from an uniform distribution unless otherwise specified. Firstly we note this well-known lemma, 

\begin{claim}
Let $f : \{-1,1\}^n \rightarrow \{-1,1\}$ be any given Boolean function. Then the following is true, 
\[ \mathbb{P}_{g \sim \{ \{-1,1\}^n \rightarrow \{-1,1\} \}} \left [ \mathbb{P}_{\x \sim \{-1,1\}^n} [ f(\x) = g(\x) ] \geq \frac {1}{2} + \epsilon \right ] \leq e^{-2^{n+1}\epsilon^2} \]
\end{claim}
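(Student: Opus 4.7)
The plan is to recognize this as a standard Hoeffding-type concentration bound applied to $2^n$ i.i.d.\ coin flips. Since $g$ is drawn uniformly from the $2^{2^n}$ Boolean functions on $n$ bits, the values $\{g(\x)\}_{\x \in \{-1,1\}^n}$ are mutually independent and each uniform on $\{-1,1\}$. For any fixed $f$, independence and uniformity imply that $Z_\x := \mathbf{1}[f(\x) = g(\x)]$ are i.i.d.\ Bernoulli$(1/2)$ random variables indexed by $\x \in \{-1,1\}^n$.

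Next, I would rewrite the inner probability as the empirical mean
\[
\mathbb{P}_{\x \sim \{-1,1\}^n}[f(\x) = g(\x)] \;=\; \frac{1}{2^n}\sum_{\x \in \{-1,1\}^n} Z_\x,
\]
so the outer event becomes $\{\frac{1}{2^n}\sum_\x Z_\x \geq \frac{1}{2} + \epsilon\}$, i.e.\ the deviation of an average of $N = 2^n$ independent Bernoulli$(1/2)$ variables from its mean.

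Finally I would invoke Hoeffding's inequality: for i.i.d.\ $Z_1,\ldots,Z_N \in [0,1]$ with mean $\mu$,
\[
\mathbb{P}\!\left[\frac{1}{N}\sum_{i=1}^N Z_i \geq \mu + \epsilon\right] \leq e^{-2N\epsilon^2}.
\]
Substituting $\mu = 1/2$ and $N = 2^n$ yields exactly the stated bound $e^{-2^{n+1}\epsilon^2}$. There is no real obstacle here; the only thing to be careful about is articulating clearly why the randomness of $g$ (rather than of $\x$) is what supplies the independent Bernoulli$(1/2)$ structure, since $f$ is an arbitrary fixed function and the probability in $\x$ is what is being concentrated upon, viewed as a random variable in $g$.
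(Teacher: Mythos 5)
Your proof is correct, and it is the standard argument the paper is implicitly invoking when it labels this a ``well-known lemma'' without supplying a proof. You correctly identify that the randomness supplying the concentration comes from $g$: a uniformly random Boolean function on $\{-1,1\}^n$ is exactly a vector of $2^n$ i.i.d.\ unbiased bits, so for fixed $f$ the indicators $Z_\x = \mathbf{1}[f(\x)=g(\x)]$ are i.i.d.\ Bernoulli$(1/2)$, and the inner quantity $\mathbb{P}_{\x}[f(\x)=g(\x)]$ is precisely the empirical mean $\frac{1}{2^n}\sum_\x Z_\x$. Hoeffding's one-sided inequality with $N=2^n$ and mean $1/2$ then gives exactly $e^{-2\cdot 2^n \epsilon^2} = e^{-2^{n+1}\epsilon^2}$. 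There is no gap; the only place a careless reader could stumble is exactly the point you flag explicitly — distinguishing the two layers of probability and noting that the inner $\mathbb{P}_\x$ is a deterministic function of $g$ being treated as a random variable — and you handle it correctly.
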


From the above it follows that if $N$ is the total number of functions in any circuit class (whose members be called $C$) then we have by union bound,

\begin{align}\label{randomapprox}
\mathbb{P}_{g \sim \{ \{-1,1\}^n \rightarrow \{-1,1\} \}} \left [ \exists C \text{ s.t } \mathbb{P}_{\x \sim \{-1,1\}^n} [ C(\x) = g(\x) ] \geq \frac {1}{2} + \epsilon \right ] \leq N e^{-2^{n+1}\epsilon^2}
\end{align}

Equipped with these basics we are now ready to begin the proof of the lowerbound against weight unrestricted LTF-of-ReLU circuits, 

\begin{proof}[Proof of Theorem~\ref{thm:andreev-LTF-ReLU}]
\begin{definition}
Let $D$ denote arbitrary LTF-of-ReLU circuits over $\lfloor \log (\frac{n}{2} ) \rfloor$ bits.
\end{definition}
 
 For some $\frac{\epsilon}{3} \leq \frac{1}{2}$ and a size function denoted as $s(n,\epsilon)$ we use equation \ref{randomapprox} , the definition of $D$ above and the upperbound given earlier for the number of LTF-of-ReLU functions at a fixed circuit size (now used for circuits on $\lfloor \log (\frac{n}{2} ) \rfloor$  bits) to get, 
\begin{align*}
\mathbb{P}_{\substack{f \sim \{0,1\}^{\lfloor \log (\frac{n}{2} ) \rfloor} \rightarrow \{0,1\} }} & \Bigg [\forall D \text{ s.t } \vert D \vert \leq s(n,\epsilon)\text{ }\vert \mathbb{P}_{ \y \sim \{0,1\}^{\lfloor \log (\frac{n}{2} ) \rfloor}} [ f(\y) = D(\y) ]  \leq \Big (\frac {1}{2} + \frac {\epsilon}{3} \Big ) \Bigg ]\\
&\geq 1 - 2^{O(s^2\log^2(\frac{n}{2})\log(\log(\frac{n}{2})s))}e^{-\left (\frac{\epsilon^2}{9} \right)2^{1+\lfloor \log \left ( \frac{n}{2} \right ) \rfloor}}\\
&\geq 1 - 2^{O(s^2k^2\log(ks))}e^{-\left (\frac{2\epsilon^2}{9} \right)2^{k}} \geq 1 - e^{O(s^2k^2\log(ks))-\left (\frac{2\epsilon^2}{9} \right)2^{k}}
\end{align*}

whereby in the last inequality above we have assumed that $n=2^{k+1}$. This assumption is legitimate because we want to estimate certain large $n$ asymptotics. Now for some $\theta >0$ if for large $n$ we choose, $\epsilon > \sqrt{\frac{2\log^{2+\theta} (\frac{n}{2})}{n}}$ and $s = s(n,\epsilon) \leq O(\frac{\epsilon^{\frac{2}{2+\theta}}n^{\frac{1}{2+\theta}}}{2^{\frac{1}{2+\theta}}\log (\frac{n}{2})})$ then we have,


\begin{align}\label{smaass}
\nonumber \mathbb{P}_{\substack{f \sim \{0,1\}^{\lfloor \log (\frac{n}{2} ) \rfloor} \rightarrow \{0,1\} }} & \Bigg [\forall D \text{ s.t } \vert D \vert \leq s(n,\epsilon)\text{ }\vert \mathbb{P}_{ \y \sim \{0,1\}^{\lfloor \log (\frac{n}{2} ) \rfloor}} [ f(\y) = D(\y) ]  \leq \Big (\frac {1}{2} + \frac {\epsilon}{3} \Big ) \Bigg ]\\
&\geq  1 - \frac{\epsilon}{3} 
\end{align}

\begin{definition}[$\mathbf{ F^*}$]\label{def:F*}
Let $F^*$ be the subset of all these $f$ above for which the above event is true i.e\\
\[ F^* :=  \bigg \{ f : \{0,1\}^{\lfloor \log (\frac{n}{2} ) \rfloor} \rightarrow \{0,1\}  \mid \forall D \text{ s.t } \vert D \vert \leq s(n,\epsilon)\text{ }\vert \mathbb{P}_{ \y \sim \{0,1\}^{\lfloor \log (\frac{n}{2} ) \rfloor}} [ f(\y) = D(\y) ]  \bigg \}  \]
\end{definition}

Now we recall the definition of the Andreev function in equation \ref{Andreev} for the following definition and the claim, 

\begin{definition}[]
Let $\rho$ be a choice of a ``restriction" whereby one is fixing all the input bits of $A_n$ except $1$ bit in each row of the matrix $a$. So the restricted function (call it $A_n \vert _{\rho}$) computes a function of the form,

\[ A_n \vert_{\rho} : \{0,1\}^{ \lfloor \log (\frac{n}{2} ) \rfloor } \rightarrow \{0,1\}\]
\end{definition}

Note that we shall henceforth be implicitly fixing a bijection mapping, \\ $\{ 0,1\}^n \rightarrow \{0,1\}^{\lfloor \frac {n}{2} \rfloor} \times \{0,1\}^{ \lfloor \log (\frac{n}{2} ) \rfloor \times \lfloor \frac{n}{2\lfloor \log (\frac {n}{2}) \rfloor} \rfloor}$ and hence for any function $C : \{0,1\}^n \rightarrow \{0,1\}$, it would be meaningful to talk of $C \vert_\rho$. From the definitions of $A_n$ and $\rho$ above, the following is immediate, 

\begin{claim}
The truth table of $A_n \vert _\rho$ is the $\x$ string in the input to $A_n$ that gets fixed by $\rho$. Thus we observe that if $\rho$ is chosen uniformly at random then $A_n \vert _\rho$ is a $\lfloor \log (\frac{n}{2} ) \rfloor$ bit Boolean function chosen uniformly at random. 
\end{claim}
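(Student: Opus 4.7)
My plan is to unpack the restriction $\rho$ carefully and track what remains of the Andreev function. The restriction $\rho$ by definition fixes every bit of the $\x$-string (all $\lfloor n/2 \rfloor$ bits) and every entry of the matrix $[a_{ij}]$ except for one free bit in each of the $k := \lfloor \log(n/2) \rfloor$ rows. Denote the $k$ surviving bits by $\y = (y_1,\ldots,y_k)$. Let $c_i \in \{0,1\}$ be the parity of the fixed entries in row $i$ (so $c_i$ depends only on $\rho$). Then after restriction, the $i$-th bit of $z([a_{ij}])$ is exactly $c_i \oplus y_i$, and consequently
\[ A_n\vert_\rho(\y) \;=\; x_{\mathrm{bin}(c_1 \oplus y_1,\,\ldots,\,c_k \oplus y_k)}, \]
where the $x_j$'s are now fixed constants (prescribed by $\rho$).

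Next I would observe that the map $\y \mapsto c \oplus \y$ is a bijection on $\{0,1\}^k$, so $\y \mapsto \mathrm{bin}(c \oplus \y)$ is a bijection onto $\{0,1,\ldots,2^k-1\}$. Hence the length-$2^k$ truth table of $A_n\vert_\rho$, read off in the natural ordering of $\y$, is precisely a \emph{permutation} (determined by $\rho$) of the first $2^k$ bits of the $\x$-string that got fixed. This justifies the informal statement that the truth table ``is'' the $\x$-string fixed by $\rho$.

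For the second sentence, I would argue as follows. Sample $\rho$ uniformly at random from all valid restrictions of the stated form. The randomness in $\rho$ splits into two independent pieces: (a) the choice of which coordinate in each row of $[a_{ij}]$ remains free together with the values assigned to the other matrix entries, and (b) the values assigned to the $\x$-string. Piece (b) is uniform on $\{0,1\}^{\lfloor n/2 \rfloor}$ by construction, so in particular the first $2^k$ coordinates of $\x$ are uniform on $\{0,1\}^{2^k}$ and independent of piece (a). Since the truth table of $A_n\vert_\rho$ is obtained from these $2^k$ bits by applying a bijection that depends only on piece (a), the truth table is itself uniform on $\{0,1\}^{2^k}$. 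Equivalently, $A_n\vert_\rho$ is a uniformly random Boolean function on $k = \lfloor \log(n/2) \rfloor$ bits, completing the claim.

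I do not anticipate a serious obstacle: the only technical point is being careful about indexing (the Andreev function indexes into $\x$ using a $k$-bit integer, and one must check $2^k \le \lfloor n/2 \rfloor$ so the indexing is valid), and keeping track of which coordinates of $\rho$ are responsible for the bijective relabelling versus the random bits that populate the truth table.
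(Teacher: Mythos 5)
Your proof is correct, and it supplies an actual argument where the paper simply declares the claim to be immediate and provides none. In fact your write-up is \emph{more} careful than the paper's statement: the truth table of $A_n\vert_\rho$, read in the natural ordering of the $k = \lfloor\log(n/2)\rfloor$ free bits $\y$, is not literally the fixed $\x$-string but rather the composition $\y \mapsto x_{\mathrm{bin}(c\oplus\y)}$, i.e.\ a relabelling of the first $2^k$ coordinates of $\x$ by the fixed affine shift $c$ that $\rho$ determines. You correctly identify the two ingredients that make the uniformity conclusion go through anyway: (i) $\y\mapsto c\oplus\y$ is a bijection, so the truth table is a bijective rearrangement of the relevant $2^k$ bits of $\x$; and (ii) under a uniformly random $\rho$, the shift $c$ and the choice of free coordinates (piece (a)) are independent of the fixed $\x$-bits (piece (b)), and piece (b) is uniform, so applying an independently chosen bijection preserves uniformity. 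Your sanity check that $2^k \le \lfloor n/2 \rfloor$ is also the right thing to verify, and it holds since $k = \lfloor\log(n/2)\rfloor$ implies $2^k \le n/2$ and $2^k$ is an integer.

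One remark that is not a gap in your proof but is worth flagging given how the claim is used downstream: the paper later asserts that $A_n\vert_{\rho(\x^*)}$ ``specifies the same function as $f^*$,'' which silently drops the $c$-shift. Your analysis shows the restricted function is $f^*$ precomposed with $\y\mapsto c\oplus\y$. This discrepancy is harmless in context because an XOR-by-constant relabelling of the inputs is an affine map that any LTF-of-ReLU circuit can absorb into the bottom-layer biases without changing its size, so the defining property of the set $F^*$ is invariant under such relabellings --- but your version makes explicit the bookkeeping the paper elides.
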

 
Let $f^*$ be any arbitrary member of $F^*$. Let $\x^* \in \{0,1\}^{\lfloor \frac {n}{2} \rfloor}$ be the truth-table of $f^*$. Let $\rho(\x^*)$ be restrictions on the input of $A_n$ which fix the $\x$ part of its input to $\x^*$. So when we are sampling restrictions uniformly at random from the restrictions of the type $\rho(\x^*)$ these different instances differ in which bit of each row of the matrix $a$ (of the input to $A_n$) they left unfixed and to what values did they fix the other entries of $a$. Let $C$ be a $n$ bit LTF-of-ReLU Boolean circuit of size say $w(n,\epsilon)$. Thus under a restriction of the type $\rho(\x^*)$ both $C$ and $A_n$ are $\lfloor \log (\frac{n}{2} ) \rfloor$ bit Boolean functions.

Now we note that a ReLU gate over $n$ bits upon a random restriction becomes redundant (and hence removable) iff its linear argument either reduces to a non-positive definite function or a positive definite function. In the former case the gate is computing the constant function zero and in the later case it is computing a linear function which can be simply implemented by introducing wires connecting the inputs directly to the output LTF gate. Thus in both the cases the resultant function no more needs the ReLU gate for it to be computed.  (We note that such direct wires from the input to the output gate were allowed in how the counting was done of the total number of LTF-of-ReLU Boolean functions at a fixed circuit size.) Combining both the cases we note that the conditions for collapse (in this sense) of a ReLU gate is identical to that of the conditions of collapse for a LTF gate for which ~\cite{kane2016super} in their Lemma $1.1$ had proven the following, 

\begin{lemma}[{\bf Lemma $1.1$ of ~\cite{kane2016super}}]
Let $f: \{0,1\}^n \rightarrow \{0,1\}$ be a linear
threshold function. Let ${\cal P}$ be a partition of $[n]$ into parts of equal size, and let ${\cal R}_{\cal P}$ be the distribution on restrictions $\rho : [n] \rightarrow \{0,1,*\}$ that randomly fixes all but one element of each part of ${\cal P}$. Then we have, 
\[ \mathbb{P}_{\rho \sim {\cal R}_{\cal P}} \left [ f \text{ is not forced to a constant by }\rho \right ] = O \left ( \frac{\vert {\cal P} \vert }{\sqrt{n}}  \right ) \]
\end{lemma}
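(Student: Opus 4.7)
The plan is to express the event ``$f$ is not forced to a constant under $\rho$'' as an anti-concentration event for a random weighted sum, and then invoke an Erd\H{o}s--Littlewood--Offord type bound. First I would write $f(x) = \mathbf{1}[\langle w, x\rangle \geq \theta]$ for some $w \in \R^n$ and $\theta \in \R$. Writing $j_i \in P_i$ for the single free coordinate in part $P_i$ and letting $T := \sum_{\ell \notin \{j_1,\ldots,j_k\}} w_\ell x_\ell$ be the contribution of the fixed coordinates, the restricted function is $\mathbf{1}\big[\sum_{i=1}^{k} w_{j_i} y_i + T \geq \theta\big]$. Its argument ranges, over $y \in \{0,1\}^k$, in the interval $[T + m, T + M]$ with $m = \sum_i \min(0, w_{j_i})$ and $M = \sum_i \max(0, w_{j_i})$. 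Therefore $f|_\rho$ is non-constant exactly when
\[
\theta - T \in [m, M], \qquad \text{an interval of length } L := \sum_{i=1}^{k} |w_{j_i}|.
\]

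Next I would condition on the $k$-tuple $(j_1,\ldots,j_k)$ of free indices and regard the bound as a statement about the random fixed values alone. After the standard affine reparametrisation from $\{0,1\}$ to $\{-1,+1\}$, the scalar $T$ becomes, up to an additive constant, a signed sum $\sum_{\ell \in F} \varepsilon_\ell w_\ell$ where $F$ has size $n - k$ and $\varepsilon_\ell$ are i.i.d.\ Rademacher. The target interval has length $L$ depending only on the conditioned-on $w_{j_i}$'s. I would then invoke a Littlewood--Offord/Erd\H{o}s-style anti-concentration estimate: any signed sum of $N$ real numbers $\{a_\ell\}$ lands in an interval of length $L$ with probability $O(L/(\lambda \sqrt{N}))$, where $\lambda$ is a suitable measure of the ``typical size'' of the $a_\ell$'s. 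Plugging in $N = n-k = \Theta(n)$ (since each part has the same size) and $L = \sum_i |w_{j_i}|$, and then averaging over the choice of free indices $j_i$ (using that each coordinate is equally likely to be the free one in its part) I expect the right-hand side to collapse to $O(k/\sqrt{n})$.

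The main obstacle is that the classical Erd\H{o}s inequality requires the coefficients to be comparable in magnitude ($|a_\ell| \geq 1$), whereas the weights $w_\ell$ of an arbitrary LTF can be wildly unbalanced. To handle this I would bucket the weights by dyadic magnitude classes and apply anti-concentration only within the dominant bucket, absorbing the other buckets into the interval length; alternatively one can invoke Hal\'asz's inequality which gives such a bound directly for general real weights. A second subtlety is the two-stage randomness: the interval length $L$ itself depends on the random choice of free indices $j_i$, so the cleanest argument first bounds $\Pr[\theta - T \in [m,M] \mid j_1,\ldots,j_k]$ by a quantity depending linearly on $L$, and then takes expectation, using linearity to convert $\E[\sum_i |w_{j_i}|]$ into an average of $|w_\ell|$ over each part. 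The factor $k$ in the numerator of the final bound is exactly this count of parts contributing to $L$, while $\sqrt{n}$ comes from the denominator of the anti-concentration estimate, which is why the bound scales as $O(|\mathcal{P}|/\sqrt{n})$ rather than, say, $O(|\mathcal{P}|^{3/2}/\sqrt{n})$ that a naive union bound across parts would yield.
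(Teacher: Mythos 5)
You are proving a result that the paper itself does not prove: Lemma 1.1 is quoted verbatim from Kane and Williams (2016) and used as a black box, so there is no "paper proof" to compare against. I will therefore assess your sketch on its own terms.

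Your framing is right: $f|_\rho$ is non-constant iff $\theta - T$ lands in the random interval $[m,M]$ of length $L = \sum_i |w_{j_i}|$, and after conditioning on the free indices $j_1,\ldots,j_k$ the event is an anti-concentration question for the Rademacher-type sum $T$. Your observation that $\E[L] = \tfrac{k}{n}\|w\|_1$ by averaging over each part is also correct. The gap is in the anti-concentration step and the way it interacts with this averaging. You want a bound of the shape $\Pr[T\in I_L] \leq CL/(\lambda\sqrt{N})$ for a single ``typical scale'' $\lambda$, and then to take expectations. But no fixed $\lambda$ makes both halves of that argument go through. If $\lambda$ is the median weight, the bound can be badly wrong in one direction: take $n/2$ weights equal to $\sqrt{n}$ and $n/2$ equal to $1$; then $\E\big[\min(1,CL/(\lambda\sqrt{N}))\big]$, even after the cap-at-one and the subadditivity of $\min(1,\cdot)$, evaluates to $\Theta(k)$ because every heavy coordinate contributes $\Theta(1)$ to the per-part average, whereas the true probability is $O(k/\sqrt{n})$ (since $T$ has anti-concentration $O(1/\sqrt{n})$ at the larger scale $\sqrt{n}$, so covering $I_L$ by $O(k)$ windows of width $\sqrt{n}$ already finishes). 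If instead you pick $\lambda$ near the large weights, the opposite failure occurs: with one weight $w_1 = n^2$ and the rest $1$, the anti-concentration of $T$ at that scale is terrible, $\E[L]$ is dominated by the rare event that the huge weight is free, and the correct accounting requires treating the ``huge weight free'' and ``huge weight fixed'' cases separately rather than averaging a single linear bound. Similarly, for geometrically decaying weights the anti-concentration of $T$ in a window of width $L$ is $\Theta(L)$ with no $\sqrt{n}$ gain at all, and the final $O(k/\sqrt{n})$ bound only survives because $\E[L]$ is simultaneously tiny — a coincidence your sketch does not exploit.

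The upshot is that the scale at which you apply Erd\H{o}s--Littlewood--Offord (or Rogozin/Hal\'asz) must be chosen adaptively as a function of the weight profile and of the realized free indices, and the subsequent averaging must use a sublinear (roughly concave or subadditive) dependence on the free weights rather than the linear $\E[L]$ you invoke. Your closing remark that a part-by-part union bound would give $O(k^{3/2}/\sqrt{n})$ is a good self-check; in fact the natural ``take $t = \max_i|w_{j_i}|$ and apply Rogozin'' variant also lands on $k^{3/2}/\sqrt{n}$, which shows how delicate getting the sharp $k/\sqrt{n}$ is. Your gestures toward dyadic bucketing and Hal\'asz are pointing in a workable direction, but as stated they do not yet specify which bucket is ``dominant'' (there may be none), how to absorb the other buckets (enlarging the interval is the wrong direction), or why the resulting expectation is $O(k/\sqrt{n})$ rather than $O(k)$ or $O(k^{3/2}/\sqrt{n})$. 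That is the missing heart of the proof.
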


In our context the above implies, 

\[ \mathbb{P}_{\rho(\x^*)} [  \text{ReLU}\vert_{\rho(\x^*)} \text {is removable }] \geq \eta \]
where $\eta = 1 - O(\frac{\log n}{\sqrt{n}})$

~\\ \newline
The above definition of $\eta$ implies, 
\begin{align}\label{useofeta}
\nonumber &\mathbb{P}_{\rho(\x^*)} [ \text { A $n-$bit  \text{ReLU} is \emph {not} forced to a constant }] \leq 1 - \eta\\
\nonumber \implies &\mathbb{E}_{\rho(\x^*)} [ \text { Number of  \text{ReLU}s of C \emph {not} forced to a constant }] \leq w(n,\epsilon)(1 - \eta)\\
\nonumber \implies &\mathbb{P}_{\rho(\x^*)} [\text { Number of  \text{ReLU}s of C \emph {not} forced to a constant } > s(n,\epsilon) ]\\
\nonumber &\leq \frac {\mathbb{E}_{\rho(\x^*)} [ \text { Number of  \text{ReLU}s of C \emph {not} forced to a constant }]}{s(n,\epsilon)}\\
\nonumber \implies &\mathbb{P}_{\rho(\x^*)} [\text { Number of  \text{ReLU}s of C \emph {not} forced to a constant } \geq s(n,\epsilon) ] \leq \frac {w(n,\epsilon)(1-\eta)}{s(n,\epsilon)}\\
\nonumber \implies &\mathbb{P}_{\rho(\x^*)} [\text { Size of } C \vert _{\rho(\x^*)} \leq s(n,\epsilon) ] \geq 1 - \frac {w(n,\epsilon)(1-\eta)}{s(n,\epsilon)}\\
\end{align}

Now we compare with the definitions of $\epsilon$ and $f^*$ to observe that (a) with probability at least $1 - \frac {w(n,\epsilon)(1-\eta)}{s(n,\epsilon)}$, $C \vert _{\rho(\x^*)}$ is a circuit of the type called ``$D$" in the event in equation \ref{smaass} and (b) by definition of the Andreev function it follows that $A_n\vert_{\rho(\x^*)}$ has its truth table given by $\x^*$ and hence it specifies the same function as $f^* \in F^*$. Hence $\forall \x^* \text{ and } \rho(\x^*)$ we can read off from  equation \ref{smaass}, 

\begin{align}\label{Anisrandom}
\mathbb{P}_{\y \sim \{0,1\}^{\lfloor \log (\frac{n}{2} ) \rfloor}} [C\vert _{\rho(\x^*)}(\y) = A_n \vert _{\rho(\x^*)}(\y) \vert  \text{ Size of } C\vert _{\rho(\x^*)} \leq s(n,\epsilon) ] \leq \frac {1}{2} + \frac {\epsilon}{3}
\end{align}


Recalling Definition \ref{def:F*}, the equation \ref{smaass} can be written as, 
\begin{align}\label{useofs}
\mathbb{P}_{f \sim \{0,1\}^{\lfloor \log (\frac{n}{2} ) \rfloor} \rightarrow \{0,1\} } [ f \in F^* ]  \geq 1- \frac {\epsilon}{3}
\end{align}

\begin{claim}{Circuits $C$ have low correlation with the Andreev function}
\begin{align*}
\mathbb{P}_{\z \sim \{0,1\}^n}[C(z) = A_n(z)] \leq \frac {\epsilon}{3} + \frac {w(n,\epsilon)(1-\eta)}{s(n,\epsilon)} + \frac {1}{2} + \frac {\epsilon}{3}\\
\end{align*}
\end{claim}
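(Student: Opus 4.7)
The plan is to express the overall agreement probability $\mathbb{P}_{\z \sim \{0,1\}^n}[C(\z) = A_n(\z)]$ as an iterated expectation over the random-restriction process that has already been set up, and then control it by splitting on two bad events. First I would write $\z \in \{0,1\}^n$ as the pair $(\x^*, \rho(\x^*))$ where $\x^* \sim \{0,1\}^{\lfloor n/2 \rfloor}$ is uniform and $\rho(\x^*)$ is a uniformly random restriction that fixes the $\x$-block to $\x^*$, fixes all but one coordinate in each row of the $a$-block, and leaves the remaining $\lfloor \log(n/2)\rfloor$ free bits $\y$ to be sampled uniformly. Under this reparametrization
\begin{equation*}
\mathbb{P}_{\z}[C(\z)=A_n(\z)] \;=\; \mathbb{E}_{\x^*}\,\mathbb{E}_{\rho(\x^*)}\,\mathbb{P}_{\y}\!\left[C\vert_{\rho(\x^*)}(\y) \;=\; A_n\vert_{\rho(\x^*)}(\y)\right].
\end{equation*}

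Next, I would introduce the two ``good events'' whose failure probabilities have already been bounded above: (i) $E_1 := \{A_n\vert_{\rho(\x^*)} \in F^*\}$, and (ii) $E_2 := \{|C\vert_{\rho(\x^*)}| \leq s(n,\epsilon)\}$. The key observation, used earlier around equation~\ref{Anisrandom}, is that because the truth-table of $A_n\vert_{\rho(\x^*)}$ is literally $\x^*$, the event $E_1$ depends only on $\x^*$ and has probability at least $1-\epsilon/3$ by equation~\ref{useofs}. Meanwhile, $E_2$ is a statement about random restrictions collapsing ReLU gates, which has already been reduced to the LTF case (via the removability argument for ReLU gates) and thus inherits the Markov bound of equation~\ref{useofeta}, giving $\mathbb{P}[\neg E_2] \leq w(n,\epsilon)(1-\eta)/s(n,\epsilon)$.

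The last step is to split the iterated expectation along $E_1 \cap E_2$ and its complement. On $E_1 \cap E_2$, the restricted circuit $C\vert_{\rho(\x^*)}$ is a LTF-of-ReLU circuit on $\lfloor \log(n/2)\rfloor$ bits of size at most $s(n,\epsilon)$, and the restricted Andreev function is a member of $F^*$; hence equation~\ref{Anisrandom} applies and the inner probability is at most $\tfrac{1}{2} + \tfrac{\epsilon}{3}$. On the complement we use the trivial bound $\mathbb{P}_\y[\,\cdot\,] \leq 1$ and a union bound $\mathbb{P}[\neg E_1 \cup \neg E_2] \leq \tfrac{\epsilon}{3} + \tfrac{w(n,\epsilon)(1-\eta)}{s(n,\epsilon)}$. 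Adding the two contributions yields exactly
\begin{equation*}
\mathbb{P}_{\z}[C(\z)=A_n(\z)] \;\leq\; \frac{\epsilon}{3} + \frac{w(n,\epsilon)(1-\eta)}{s(n,\epsilon)} + \frac{1}{2} + \frac{\epsilon}{3},
\end{equation*}
which is the claim.

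This proof is essentially a bookkeeping exercise once the three ingredients (random restriction collapses ReLU to a small circuit, Andreev's truth-table trick makes the restricted function uniformly random, and random Boolean functions are hard to approximate by small circuit classes) are in place. The only subtle point, which I would highlight explicitly, is justifying that $E_1$ and $E_2$ play well together under the joint expectation: $E_1$ is measurable with respect to $\x^*$ only, while $E_2$ depends on the full $\rho(\x^*)$, but both probability bounds hold marginally, so a straight union bound on their complements suffices. I do not anticipate any genuine obstacle beyond cleanly reparametrizing the uniform distribution on $\{0,1\}^n$ through the restriction process and invoking the previously proven statements.
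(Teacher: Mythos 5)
Your proposal is correct and follows essentially the same route as the paper: both decompose $\z$ into the restriction process $(\x^*,\rho(\x^*),\y)$, both use equation~\ref{useofs} for the event that the restricted Andreev function lands in $F^*$, equation~\ref{useofeta} (Markov) for the event that the restricted circuit is small, and equation~\ref{Anisrandom} for the inner agreement bound. The only cosmetic difference is that you bundle the two bad events into a single union bound up front, whereas the paper peels off $\tilde{f}\notin F^*$ first and then conditions on the circuit size within the remaining event; the two bookkeeping schemes yield identical terms.
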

\begin{proof}
We think of sampling a $z \sim \{0,1\}^n$ as a two step process of first sampling a $\tilde{f}$, a $\lfloor \log (\frac n 2) \rfloor$ bit Boolean function and fixing the first $\lfloor \frac n 2 \rfloor$ bits of $z$ to be the truth-table of $\tilde{f}$ and then we randomly assign values to the remaining $\lfloor \frac n 2 \rfloor$ bits of $z$. Call these later $\lfloor \frac n 2 \rfloor$ bit string to be $\x_{other}$. 
\begin{align*}
\mathbb{P}_{\z \sim \{0,1\}^n}[C(z) = A_n(z)] 
&= \mathbb{P}_{\z \sim \{0,1\}^n}[(C(z) = A_n(z))\cap(\tilde{f} \in F^*)]+ \mathbb{P}_{\z \sim \{0,1\}^n}[(C(z) = A_n(z))\cap(\tilde{f} \notin F^*)]\\
&= \mathbb{P}_{\z \sim \{0,1\}^n}[(C(z) = A_n(z))\mid (\tilde{f} \in F^*)]\mathbb{P}_{\z \sim \{0,1\}^n} [\tilde{f} \in F^* ]\\
&+ \mathbb{P}_{\z \sim \{0,1\}^n}[(C(z) = A_n(z))\cap(\tilde{f} \notin F^*)]\\
&\leq \mathbb{P}_{\z \sim \{0,1\}^n}[(C(z) = A_n(z))\mid (\tilde{f} \in F^*)] + \mathbb{P}_{\z \sim \{0,1\}^n}[\tilde{f} \notin F^*]\\
&\leq \mathbb{P}_{\z \sim \{0,1\}^n}[(C(z) = A_n(z))\mid (\tilde{f} \in F^*)] + \frac \epsilon 3\\
\end{align*}
In the last line above we have invoked equation \ref{useofs}. Now we note that sampling the $n$ bit string $z$ such that $\tilde{f} \in F^*$ is the same as doing a random restriction of the type $\rho(\tilde{f})$ and then randomly picking a  $\lfloor \log (\frac n 2) \rfloor$ bit string say $\y$. So we can rewrite the last inequality as, 

\begin{align*}
\mathbb{P}_{\z \sim \{0,1\}^n}[C(z) = A_n(z)]&\leq \mathbb{P}_{(\rho(\tilde{f}),\y)}[C(\rho(\tilde{f}),\y) = A_n(\rho(\tilde{f}),\y)] + \frac \epsilon 3\\
&\leq \mathbb{E}_{(\rho(\tilde{f}),\y)}[\mathfrak{1}_{C(\rho(\tilde{f}),\y) = A_n(\rho(\tilde{f}),\y)} \mid (\tilde{f} \in F^*)] + \frac \epsilon 3\\
&\leq \mathbb{E}_{(\rho(\tilde{f}),\y)}[\mathfrak{1}_{C(\rho(\tilde{f}),\y) = A_n(\rho(\tilde{f}),\y)}\mathfrak{1}_{\text{Size of } C\vert_{\rho(\tilde{f})} < s(n,\epsilon)} \mid (\tilde{f} \in F^*)]\\
&+ \mathbb{E}_{(\rho(\tilde{f}),\y)}[\mathfrak{1}_{C(\rho(\tilde{f}),\y) = A_n(\rho(\tilde{f}),\y)}\mathfrak{1}_{\text{Size of } C\vert_{\rho(\tilde{f})} \geq s(n,\epsilon)}\mid (\tilde{f} \in F^*)] + \frac \epsilon 3\\
&\leq \mathbb{P}_{(\rho(\tilde{f}),\y)}[C(\rho(\tilde{f}),\y) = A_n(\rho(\tilde{f}),\y) \mid \left ( (\text{Size of } C\vert_{\rho(\tilde{f})} < s(n,\epsilon)) \cap  (\tilde{f} \in F^*) \right )]\\
&+\mathbb{P}_{(\rho(\tilde{f}),\y)}[\text{Size of } C\vert_{\rho(\tilde{f})} \geq s(n,\epsilon)\mid (\tilde{f} \in F^*)] + \frac \epsilon 3\\
&\leq \left ( \frac 1 2 + \frac \epsilon 3\right ) + \frac {w(n,\epsilon)(1-\eta)}{s(n,\epsilon)} + \frac \epsilon 3
\end{align*}
In the last step above we have used equations \ref{Anisrandom} and \ref{useofeta}. 
\end{proof}

So after putting back the values of $\eta$ and the largest scaling of $s(n,\epsilon)$ that we can have (from equation \ref{smaass}), the upperbound on the above probability becomes, 
\begin{align*}
\frac{1}{2} + \frac{2\epsilon}{3} + O \Bigg ( \frac{w(n,\epsilon)\log (n) }{\sqrt{n} (\frac{\epsilon^{\frac{2}{2+\theta}}n^{\frac{1}{2+\theta}}}{2^{\frac{1}{2+\theta}}\log (\frac{n}{2})})} \Bigg ) 
\end{align*}
~\\
Thus the probability is upperbounded by $\frac{1}{2} + \epsilon$ as long as $w(n,\epsilon) = O \Bigg ( \frac{\epsilon^{1+\frac{2}{2+\theta}} n^{\frac{1}{2} + \frac{1}{2+\theta}} \log \Big ( \frac{n}{2} \Big ) }{\log (n) }\Bigg )$

Stated as a lowerbound we have that if a LTF-of-ReLU has to match the $n-$bit Andreev function on more than $\frac{1}{2}+\epsilon$ fraction of the inputs for $\epsilon > \sqrt{\frac{2\log^{2+\theta} (\frac{n}{2})}{n}}$ for some $\theta >0$ (asymptotically this is like having a constant $\epsilon$) then the LTF-of-ReLU needs to be of size $\Omega(\epsilon^{\frac{4+\theta}{2+\theta}}n^{\frac{1}{2} + \frac{1}{2+\theta}})$. Now we define $\delta \in (0,\frac 1 2)$ such that $\delta = \frac {\theta}{2(2+\theta)}$ and that gives the form of the almost linear lowerbound as stated in the theorem. 
\end{proof}

\begin{subappendices}

\chapter*{\vspace{10pt} Appendix To Chapter \ref{chapfunc}}\label{app:func} 

\section{Expressing piecewise linear functions using ReLU DNNs}\label{sec:thm-1-rep}

\begin{proof}[Proof of Theorem \ref{thm:1-dim-2-layer}]

Any continuous piecewise linear function $\mathbb{R} \rightarrow \mathbb{R}$ which has $m$ pieces can be specified by three pieces of information, $(1)$ $s_L$ the slope of the left most piece, $(2)$ the coordinates of the non-differentiable points specified by a $(m-1)-$tuple $\{ (a_i,b_i) \}_{i=1}^{m-1}$ (indexed from left to right) and $(3)$ $s_R$ the slope of the rightmost piece. A tuple $(s_L, s_R, (a_1,b_1), \ldots, (a_{m-1},b_{m-1})$ uniquely specifies a $m$ piecewise linear function from $\mathbb{R} \rightarrow \mathbb{R}$ and vice versa. Given such a tuple, we construct a $2$-layer DNN which computes the same piecewise linear function.

One notes that for any $a, r \in \mathbb{R}$, the function  
\begin{eqnarray}\label{eq:right-flap}
f(x) = \begin{cases} 
      0 & x\leq a \\
      r(x-a) & x>a
   \end{cases}
\end{eqnarray}
is equal to $\operatorname{sgn}(r)\max\{\vert r \vert (x-a),0\}$, which can be implemented by a 2-layer ReLU DNN with size 1. Similarly, any function of the form, 
\begin{eqnarray}\label{eq:left-flap}
g(x) = \begin{cases} 
      t(x-a) & x \leq a \\
      0 & x>a
   \end{cases}
\end{eqnarray}
is equal to $-\operatorname{sgn}(t)\max\{ -\vert t \vert (x-a), 0\}$, which can be implemented by a 2-layer ReLU DNN with size 1. The parameters $r,t$ will be called the {\em slopes} of the function, and $a$ will be called the {\em breakpoint} of the function.

If we can write the given piecewise linear function as a sum of $m$ functions of the form~\eqref{eq:right-flap} and~\eqref{eq:left-flap}, then by Lemma~\ref{lem:add-DNN} we would be done.
It turns out that such a decomposition of any $p$ piece PWL function $h:\R\to \R$ as a sum of $p$ flaps can always be arranged where the breakpoints of the $p$ flaps all are all contained in the $p-1$ breakpoints of $h$. First, observe that adding a constant to a function does not change the complexity of the ReLU DNN expressing it, since this corresponds to a bias on the output node. Thus, we will assume that the value of $h$ at the last break point $a_{m-1}$ is $b_{m-1} = 0$.

We now use a single function $f$ of the form~\eqref{eq:right-flap} with slope $r$ and breakpoint $a = a_{m-1}$, and $m-1$ functions $g_1, \ldots, g_{m-1}$ of the form~\eqref{eq:left-flap} with slopes $t_1, \ldots, t_{m-1}$ and breakpoints  $a_1, \ldots, a_{m-1}$, respectively.

Thus, we wish to express $h = f + g_1 + \ldots + g_{m-1}$.
Such a decomposition of $h$ would be valid if we can find values for $r, t_1, \ldots, t_{m-1}$ such that $(1)$ the slope of the above sum is $= s_L$ for $x < a_1$, $(2)$ the slope of the above sum is $=s_R$ for $x>a_{m-1}$, and $(3)$ for each $i \in \{1,2,3,..,m-1\}$ we have $b_i = f(a_i) + g_1(a_i) + \ldots + g_{m-1}(a_i)$.

The above corresponds to asking for the existence of a solution to the following set of simultaneous linear equations in $r, t_1, \ldots, t_{m-1}$: 

$$s_R = r,\;\;s_L = t_1 + t_2 + \ldots + t_{m-1},\;\; b_i = \sum_{j=i+1}^{m-1}t_j (a_{j-1} - a_j) \textrm{ for all } i=1, \ldots, m-2$$

It is easy to verify that the above set of simultaneous linear equations has a unique solution. Indeed, $r$ must equal $s_R$, and then one can solve for $t_1, \ldots, t_{m-1}$ starting from the last equation $b_{m-2} = t_{m-1}(a_{m-2} - a_{m-1})$ and then back substitute to compute $t_{m-2}, t_{m-3}, \ldots, t_1$.

The lower bound of $p-1$ on the size for any $2$-layer ReLU DNN that expresses a $p$ piece function follows from Lemma~\ref{lem:size-bounds}.
\end{proof}

One can do better in terms of size when the rightmost piece of the given function is flat, i.e., $s_R = 0$.  In this case $r =0$, which means that $f= 0$; thus, the decomposition of $h$ above is of size $p-1$. A similar construction can be done when $s_L = 0$. This gives the following statement which will be useful for constructing our forthcoming hard functions. 

\begin{corollary}\label{cor:tighter-bound-2-layer} If the rightmost or leftmost piece of a $\R \to \R$ piecewise linear function has $0$ slope, then we can compute such a $p$ piece function using a $2$-layer DNN with size $p-1$. 
\end{corollary}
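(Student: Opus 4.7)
The plan is to reuse the decomposition constructed inside the proof of Theorem~\ref{thm:1-dim-2-layer} and simply observe that one of the flap summands vanishes under the hypothesis, so it need not be represented by any ReLU gate at all.

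Specifically, first I would recall the construction: given a $p$-piece PWL function $h$ with breakpoints $a_1 < \ldots < a_{p-1}$ and leftmost/rightmost slopes $s_L, s_R$, the proof of Theorem~\ref{thm:1-dim-2-layer} writes $h = f + g_1 + \ldots + g_{p-1}$ where $f$ is a right-flap of the form \eqref{eq:right-flap} with slope $r = s_R$ and breakpoint $a_{p-1}$, and each $g_i$ is a left-flap of the form \eqref{eq:left-flap} with breakpoint $a_i$. Under the assumption $s_R = 0$ the unique solution of the linear system forces $r = 0$, so $f \equiv 0$ and the decomposition reduces to $h = g_1 + \ldots + g_{p-1}$. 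Each left-flap $g_i$ is implementable by a $2$-layer ReLU DNN of size $1$ (as observed just after~\eqref{eq:left-flap}), so applying Lemma~\ref{lem:add-DNN} to sum the $p-1$ flaps yields a $2$-layer ReLU DNN of size exactly $p-1$ representing $h$.

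For the symmetric case $s_L = 0$, I would run the same construction but with the roles of left and right reversed: use one left-flap of the form \eqref{eq:left-flap} anchored at $a_1$ together with $p-1$ right-flaps of the form \eqref{eq:right-flap} with breakpoints $a_1, \ldots, a_{p-1}$. The analogous linear system has a unique solution which forces the slope of the single left-flap to equal $s_L = 0$, so that summand vanishes and again $p-1$ flaps suffice, giving a $2$-layer ReLU DNN of size $p-1$ by Lemma~\ref{lem:add-DNN}.

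There is essentially no obstacle here: the argument is a bookkeeping corollary of the construction already carried out in Theorem~\ref{thm:1-dim-2-layer}, and the only thing one must verify is that when $s_R = 0$ (respectively $s_L = 0$) the uniquely determined slope of the corresponding flap is forced to $0$, which is immediate from back-substitution in the triangular linear system for $r, t_1, \ldots, t_{p-1}$ appearing in that proof.
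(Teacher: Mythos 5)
Your proposal is correct and mirrors the paper's own argument: the paper observes that $s_R = 0$ forces $r = 0$ and hence $f \equiv 0$ in the decomposition built inside the proof of Theorem~\ref{thm:1-dim-2-layer}, and remarks that the $s_L = 0$ case follows from a symmetric construction, which you have simply spelled out explicitly.
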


\begin{proof}[Proof of theorem~\ref{thm:Lpapprox}]
Since any piecewise linear function $\R^n \to \R$ is representable by a ReLU DNN by Corollary~\ref{cor:all-pwl-are-dnn}, the proof simply follows from the fact that the family of continuous piecewise linear functions is dense in any $L^p(\R^n)$ space, for $1\leq p \leq \infty$.
\end{proof}

Now we will collect some straightforward observations that will be used often in constructing complex neural functions starting from simple ones. The following operations preserve the property of being representable by a ReLU DNN.

\begin{lemma}\label{lem:comp-DNN} [Function Composition] If $f_1 : \R^{d} \to \R^{m}$ is represented by a $d,m$ ReLU DNN with depth $k_1 + 1$ and size $s_1$, and $f_2 : \R^{m} \to \R^{n}$ is represented by an $m,n$ ReLU DNN with depth $k_2+1$ and size $s_2$, then $f_2 \circ f_1$ can be represented by a $d,n$ ReLU DNN with depth $k_1 + k_2 + 1$ and size $s_1 + s_2$.
\end{lemma}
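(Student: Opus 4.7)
The plan is to simply chain together the two given ReLU DNN representations and observe that the junction point can be collapsed to a single affine layer, so that no extra activation gate is introduced.

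First, I would unfold the definitions using Definition~\ref{def:relu-dnn}. Write
\[
f_1 = \A_{k_1+1}\circ\relu\circ \A_{k_1}\circ\cdots\circ\A_2\circ\relu\circ \A_1,
\]
where each $\A_i$ is affine and the $k_1$ interior $\relu$ layers together use $s_1$ univariate activation gates. Analogously write
\[
f_2 = \B_{k_2+1}\circ\relu\circ \B_{k_2}\circ\cdots\circ\B_2\circ\relu\circ \B_1,
\]
using $s_2$ activation gates across $k_2$ interior $\relu$ layers. Composition gives
\[
f_2\circ f_1 = \B_{k_2+1}\circ\relu\circ\B_{k_2}\circ\cdots\circ\relu\circ\B_1\circ \A_{k_1+1}\circ\relu\circ \A_{k_1}\circ\cdots\circ\relu\circ\A_1.
\]

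The key observation is that $\B_1\circ \A_{k_1+1}$ is a composition of two affine maps $\R^{w_{k_1}^{(1)}}\to\R^m\to \R^{w_1^{(2)}}$ (where I use superscripts to disambiguate the widths of the two nets), and such a composition is itself an affine map. So I would define $\C := \B_1\circ \A_{k_1+1}$ and rewrite the expression as
\[
f_2\circ f_1 = \B_{k_2+1}\circ\relu\circ\cdots\circ\relu\circ \B_2\circ\relu\circ \C\circ\relu\circ \A_{k_1}\circ\cdots\circ\relu\circ\A_1.
\]
This is now exactly in the form required by Definition~\ref{def:relu-dnn}: a sequence of affine maps interleaved with $\relu$'s, starting and ending with affine maps. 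Counting the interior $\relu$ blocks, there are exactly $k_1 + k_2$ of them, so the depth is $k_1 + k_2 + 1$, as claimed. Counting the univariate activation gates, each interior $\relu$ of the original $f_1$ and $f_2$ nets appears unchanged, giving exactly $s_1 + s_2$ total gates.

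There is really no obstacle here; the only pitfall to be careful about is that one must check the domain/codomain dimensions match at the junction (the codomain of $\A_{k_1+1}$ is $\R^m$, which equals the domain of $\B_1$, precisely because $f_1$ outputs into $\R^m$ and $f_2$ takes inputs in $\R^m$), so the affine map $\C$ is well-defined as a map $\R^{w_{k_1}^{(1)}} \to \R^{w_1^{(2)}}$. With that verified, the statement follows immediately by the definition of a ReLU DNN.
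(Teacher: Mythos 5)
Your proof is correct and follows essentially the same route as the paper's (which simply observes that a composition of affine transformations is affine and appeals to the definition in~\eqref{eq:DNN-def}); you have merely spelled out the bookkeeping of collapsing $\B_1 \circ \A_{k_1+1}$ into a single affine map and counting the resulting layers and gates.
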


\begin{proof} 
Follows from~\eqref{eq:DNN-def} and  the fact that a composition of affine transformations is another affine transformation.
\end{proof}

\begin{lemma}\label{lem:add-DNN} [Function Addition] If $f_1 : \R^{n} \to \R^{m}$ is represented by a $n,m$ ReLU DNN with depth $k + 1$ and size $s_1$, and $f_2 : \R^{n} \to \R^{m}$ is represented by a $n,m$ ReLU DNN with depth $k+1$ and size $s_2$, then $f_1 + f_2$ can be represented by a $n,m$ ReLU DNN with depth $k + 1$ and size $s_1 + s_2$.
\end{lemma}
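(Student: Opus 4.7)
The plan is a straightforward parallel construction: place the two ReLU DNNs side by side on a common input and combine their outputs by summation in the final affine layer. Since both networks already share the same depth $k+1$, we can align their layers and build a single new DNN whose hidden widths are the sums of the corresponding hidden widths of the two original networks.

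More concretely, suppose $f_1 = \A^{(1)}_{k+1}\circ \relu \circ \cdots \circ \relu \circ \A^{(1)}_1$ with hidden widths $w^{(1)}_1,\ldots,w^{(1)}_k$, and analogously $f_2$ uses $\A^{(2)}_i$ with widths $w^{(2)}_1,\ldots,w^{(2)}_k$. Write $\A^{(j)}_i(\x) = \W^{(j)}_i\x + \b^{(j)}_i$. I would construct the combined DNN $\A_{k+1}\circ\relu\circ\cdots\circ\relu\circ\A_1$ as follows. The first affine map stacks the two input transformations vertically:
\[
\A_1(\x) = \begin{pmatrix}\W^{(1)}_1 \\ \W^{(2)}_1\end{pmatrix}\x + \begin{pmatrix}\b^{(1)}_1 \\ \b^{(2)}_1\end{pmatrix},
\]
producing a vector in $\R^{w^{(1)}_1+w^{(2)}_1}$ whose upper block is the pre-activation of $f_1$'s first hidden layer and whose lower block is that of $f_2$'s first hidden layer. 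For intermediate layers $i = 2,\ldots,k$, I would take block-diagonal weight matrices so that the two subnetworks evolve independently:
\[
\A_i(\z) = \begin{pmatrix}\W^{(1)}_i & 0 \\ 0 & \W^{(2)}_i\end{pmatrix}\z + \begin{pmatrix}\b^{(1)}_i \\ \b^{(2)}_i\end{pmatrix}.
\]
Since $\relu$ acts entrywise, applying it to the stacked vector is the same as applying it separately to the two blocks, so by induction on $i$ the combined network's hidden activations at layer $i$ are exactly the concatenation of those of the two original networks.

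For the output layer I would concatenate the two output weight matrices horizontally and sum the two bias vectors, yielding
\[
\A_{k+1}(\z) = \begin{pmatrix}\W^{(1)}_{k+1} & \W^{(2)}_{k+1}\end{pmatrix}\z + \b^{(1)}_{k+1} + \b^{(2)}_{k+1},
\]
which, when fed the concatenated activations from the previous layer, outputs $f_1(\x) + f_2(\x)$. Each map $\A_i$ is affine (a direct-sum of affine maps is affine), so the construction is a valid ReLU DNN in the sense of Definition~\ref{def:relu-dnn}. The depth is still $k+1$, and the total number of hidden activation gates is $\sum_{i=1}^{k}(w^{(1)}_i+w^{(2)}_i) = s_1+s_2$, as required.

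There is no real obstacle here; the only thing to be careful about is that the first affine transformation must be a vertical stack (because both subnetworks receive the same input $\x$) while intermediate ones must be block-diagonal (to keep the subnetworks from mixing), and the last one must be a horizontal concatenation with added biases (to actually perform the sum). Once that pattern is written down correctly the verification is just unrolling the definition of composition of affine maps together with the entrywise action of $\relu$.
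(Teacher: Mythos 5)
Your construction is exactly the "put the two ReLU DNNs in parallel and combine the outputs" argument the paper uses; you have merely spelled out the block-diagonal and stacking structure explicitly, which the paper leaves to the reader. The proof is correct and matches the paper's approach.
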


\begin{proof}
We simply put the two ReLU DNNs in parallel and combine the appropriate coordinates of the outputs.
\end{proof}

\begin{lemma}\label{lem:max-DNN} [Taking maximums/minimums] Let $f_1, \ldots, f_m : \R^{n} \to \R$ be functions that can each be represented by $\R^n \to \R$ ReLU DNNs with depths $k_i + 1$ and size $s_i$, $i=1, \ldots, m$.  Then the function $f:\R^n \to \R$ defined as $f(\x) := \max\{f_1(\x), \ldots, f_m(\x)\}$ can be represented by a ReLU DNN of depth at most $\max\{k_1, \ldots, k_m\} + \log(m) + 1$ and size at most $s_1 + \ldots s_m + 4(2m-1)$. Similarly, the function $g(\x) := \min\{f_1(\x), \ldots, f_m(\x)\}$ can be represented by a ReLU DNN of depth at most $\max\{k_1, \ldots, k_m\} + \lceil\log(m)\rceil + 1$ and size at most $s_1 + \ldots s_m + 4(2m-1)$.
\end{lemma}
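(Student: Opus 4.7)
The plan is to build a balanced binary tournament tree that repeatedly applies a binary max gadget, after first equalizing the depths of the input subnetworks. The atomic building block is the identity
\[
\max\{x,y\} \;=\; \tfrac12(x+y) + \tfrac12\bigl(\relu(x-y)+\relu(y-x)\bigr),
\]
which exhibits $\max:\R^2\to\R$ as a $2$-layer $\relu$ DNN whose single hidden layer uses $2$ $\relu$ gates; in particular, the final output of this gadget is an affine combination of $\relu$ outputs, so it can be fed directly into a subsequent hidden layer without any loss.

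To combine the $f_i$'s, I would first equalize depths. Set $K := \max_i k_i$. For each $f_i$ with $k_i<K$, I append $K-k_i$ ``identity hidden layers'' built from $z = \relu(z)-\relu(-z)$; each such layer costs $2$ $\relu$ gates. Now every padded $f_i$ is a ReLU DNN of depth exactly $K+1$, and they can be placed side-by-side in a single network via Lemma~\ref{lem:add-DNN}. I then stack the pairwise-max gadget above in a balanced binary tournament: writing $m_0=m$ and $m_j=\lceil m_{j-1}/2\rceil$, round $j$ takes $m_{j-1}$ values and outputs $m_j$ values (with any odd singleton carried over via the $2$-gate identity). After $\lceil\log_2 m\rceil$ rounds the single surviving value is $\max\{f_1,\dots,f_m\}$, and a geometric-series count shows the total gate cost of the tournament is $O(m)$, comfortably inside $4(2m-1)$. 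Composing via Lemma~\ref{lem:comp-DNN} gives a ReLU DNN of depth $K+\lceil\log_2 m\rceil+1$, matching the statement. The min case follows immediately from
\[
\min\{f_1,\dots,f_m\} = -\max\{-f_1,\dots,-f_m\},
\]
because sign flips are absorbed into the affine transformations $\A_i$ and therefore alter neither depth nor size.

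The main obstacle I foresee is making the padding cost fit inside $4(2m-1)$ in a \emph{uniform} way when the $k_i$'s are very unequal: a naive depth-equalization by identity layers uses $2\sum_i(K-k_i)$ gates, which scales with $K$ rather than with $m$. To get the stated constant I expect to need an ``as-soon-as-ready'' tournament that, instead of waiting for all $f_i$'s to reach layer $K$, pairs each $f_i$ with another candidate as soon as both are available, so that $\relu$ gates devoted to propagation do double duty as $\relu$ gates of a pairwise-max gadget. Formalizing this interleaving and carefully charging every $\relu$ gate either to an $s_i$ or to one of the $2m-1$ nodes of the tournament tree (each of which uses at most $4$ gates) is the step where I expect all the nontrivial bookkeeping to happen; everything else reduces to routine applications of the composition and addition lemmas already proved in the appendix.
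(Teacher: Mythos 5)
Your plan is structurally the same argument the paper uses: the paper proves this by induction on $m$, forming $g_1 := \max$ of the first $\lfloor m/2\rfloor$ functions and $g_2 := \max$ of the rest, then composing $(g_1,g_2)$ with a two-input gadget $T(x,y)=\max\{x,y\}$ via Lemma~\ref{lem:comp-DNN}; that is precisely your balanced tournament tree read top-down. Still, two things need fixing. First, the binary $\max$ gadget costs $4$ $\relu$ gates, not $2$. In the identity $\max\{x,y\}=\tfrac12(x+y)+\tfrac12\bigl(\relu(x-y)+\relu(y-x)\bigr)$ the term $\tfrac12(x+y)$ is an affine function of the raw inputs, \emph{not} of $\relu$ outputs; under Definition~\ref{def:relu-dnn} there are no skip connections, so in a $2$-layer net the output layer sees only $\relu$ activations, and $x+y$ must itself be pushed through the hidden layer as $\relu(x+y)-\relu(-x-y)$, two extra gates. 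This is exactly the $4$-gate circuit of Figure~\ref{fig:max-comp}. With $4$ gates per gadget and $m-1$ internal tree nodes, $4(m-1)\le 4(2m-1)$ still fits the stated budget, but your claim that the gadget has $2$ gates and that its output ``is an affine combination of $\relu$ outputs'' is incorrect.

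Second, your concern about depth padding is legitimate, and in fact it applies to the paper's own proof as well: forming $G=(g_1,g_2)$ via Lemma~\ref{lem:add-DNN} silently assumes $g_1$ and $g_2$ are at the same depth, and when the $k_i$ differ the identity-propagation gates are not covered by $4(2m-1)$. However, the ``as-soon-as-ready'' scheduling you propose cannot repair this in general. Already for $m=2$ with $f_1$ affine ($k_1=0$) and $f_2$ of depth $k_2+1$, the value $f_1(\x)$ is only available at layer $1$, there is no partner to pair it with until $f_2$ is ready at layer $k_2+1$, and $f_1(\x)$ must physically be carried through roughly $k_2$ hidden layers to reach the final $\max$, costing $\Theta(k_2)$ gates regardless of scheduling. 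The honest resolution is not a cleverer tournament but a restatement: the size bound as written holds when all $k_i$ are equal (the only way the lemma is actually used, namely on families of affine functions with $k_i=0$ inside Theorem~\ref{cor:all-pwl-are-dnn}), and in the unequal-depth case one should add a padding term of order $2\sum_i(\max_j k_j - k_i)$ to the size.
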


\begin{proof} We prove this by induction on $m$. The base case $m=1$ is trivial. For $m\geq 2$, consider $g_1 := \max\{f_1, \ldots, f_{\lfloor \frac{m}{2}\rfloor}\}$ and $g_2 := \max\{f_{\lfloor \frac{m}{2} \rfloor+1}, \ldots, f_m\}$. By the induction hypothesis (since $\lfloor \frac{m}{2}\rfloor, \lceil \frac{m}{2}\rceil < m$ when $m \geq 2$),  $g_1$ and $g_2$ can be  represented by  ReLU DNNs of depths at most $\max\{k_1, \ldots, k_{\lfloor \frac{m}{2}\rfloor}\} + \lceil\log(\lfloor \frac{m}{2}\rfloor)\rceil + 1$ and $\max\{k_{\lfloor \frac{m}{2} \rfloor+1}, \ldots, k_m\} + \lceil\log(\lceil \frac{m}{2}\rceil)\rceil + 1$ respectively, and sizes at most $s_1 + \ldots s_{\lfloor \frac{m}{2}\rfloor} + 4(2\lfloor \frac{m}{2}\rfloor - 1)$ and  $s_{\lfloor \frac{m}{2} \rfloor+1} + \ldots + s_m + 4(2\lfloor \frac{m}{2}\rfloor-1)$, respectively.  Therefore, the function $G: \R^n\to \R^2$ given by  $G(\x) = (g_1(\x), g_2(\x))$ can be implemented by a ReLU DNN with depth at most $\max\{k_1, \ldots, k_m\} + \lceil\log(\lceil \frac{m}{2}\rceil)\rceil + 1$ and size at most $s_1 + \ldots + s_m + 4(2m-2)$.

We now show how to represent the function $T:\R^2\to \R$ defined as $T(x,y) = \max\{x, y\} = \frac{x+y}{2} + \frac{|x-y|}{2}$ by a 2-layer ReLU DNN with size 4 -- see Figure~\ref{fig:max-comp}. The result now follows from the fact that $f = T\circ G$ and Lemma~\ref{lem:comp-DNN}.
\end{proof}

\begin{figure}[h!]
\centering
\includegraphics[width=.7\textwidth]{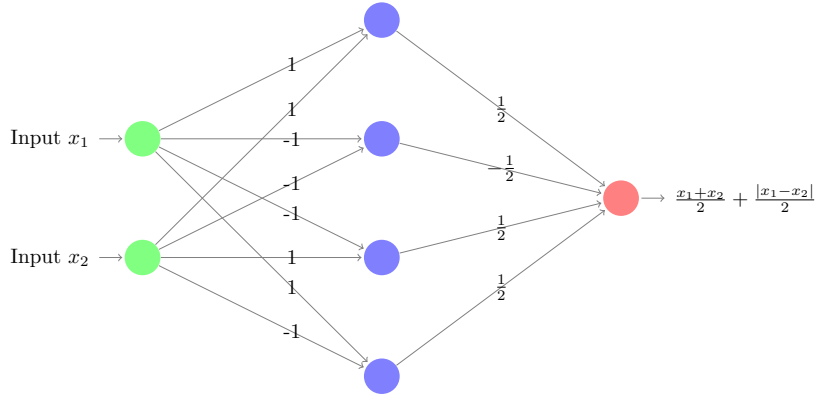}
\caption{A 2-layer ReLU DNN computing $\max \{ x_1,x_2 \} = \frac{x_1+x_2}{2}+\frac{|x_1-x_2|}{2}$}\label{fig:max-comp} 
\end{figure}


\begin{lemma}\label{lem:affine-DNN} Any affine transformation $T:\R^n \to \R^m$ is representable by a 2-layer ReLU DNN of size $2m$.
\end{lemma}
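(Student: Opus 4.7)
The plan is to exploit the elementary identity $z = \relu(z) - \relu(-z)$, valid for every $z \in \R$, which lets us reconstruct any real number as a signed sum of two ReLU outputs. This identity is exactly what is needed because the affine transformation is a linear combination of $m$ real-valued affine functionals, and each such functional can be passed through a pair of ReLU gates (one fed the functional, the other its negation) and then recovered by subtracting the two outputs in the final affine layer.

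Concretely, I would write $T(x) = Ax + b$ with rows $a_1, \ldots, a_m$ of $A$ and entries $b_1, \ldots, b_m$ of $b$, so that $T(x)_i = \langle a_i, x\rangle + b_i$. Then I would define $A_1 : \R^n \to \R^{2m}$ to be the affine map whose $(2i-1)$-th coordinate is $\langle a_i, x\rangle + b_i$ and whose $(2i)$-th coordinate is $-\langle a_i, x\rangle - b_i$, for $i = 1, \ldots, m$. Applying $\relu$ coordinatewise produces a vector in $\R^{2m}$ whose consecutive pairs are $\bigl(\relu(\langle a_i, x\rangle + b_i),\; \relu(-\langle a_i, x\rangle - b_i)\bigr)$. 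The second affine map $A_2 : \R^{2m} \to \R^m$ is the linear map sending such a vector to the $m$-tuple of pairwise differences; by the identity above, its $i$-th output equals $\langle a_i, x\rangle + b_i = T(x)_i$. Hence $A_2 \circ \relu \circ A_1 = T$, and by Definition~\ref{def:relu-dnn} this is a 2-layer ReLU DNN of size $w_1 = 2m$.

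There is essentially no obstacle here: the construction is transparent once one notices the $\relu(z) - \relu(-z)$ decomposition, and the size count $2m$ is tight with this scheme since each output coordinate gets its own independent pair of gates. The only point that requires a word of care is checking that the construction conforms to the notational conventions of Definition~\ref{def:relu-dnn}, namely that $A_1 \in \mathcal{A}_n^{2m}$ and $A_2 \in \mathcal{A}_{2m}^m$, which is immediate because both maps are affine (in fact $A_2$ is linear). No separate treatment of the bias vector $b$ is required, since biases are absorbed into the affine map $A_1$ at the first layer.
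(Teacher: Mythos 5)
Your construction is correct and rests on exactly the same identity the paper uses: the paper writes $T = (I\circ\sigma\circ T) + (-I\circ\sigma\circ (-T))$, which is precisely $z = \relu(z) - \relu(-z)$ applied coordinatewise to $T(x)$, and then cites Lemma~\ref{lem:add-DNN} to combine the two pieces rather than writing out the affine maps $A_1, A_2$ explicitly as you do. The substance is the same; yours is just the unpacked version.
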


\begin{proof} Simply use the fact that $T = (I\circ\sigma\circ T) + (-I\circ\sigma\circ (-T))$, and the right hand side can be represented by a 2-layer ReLU DNN of size $2m$ using Lemma~\ref{lem:add-DNN}.
\end{proof}

\begin{lemma}\label{lem:size-bounds-0} Let $f:\R \to \R$ be a function represented by a $\R \to \R$ ReLU DNN with depth $k+1$ and widths $w_1, \ldots, w_k$ of the $k$ hidden layers. Then $f$ is a PWL function with at most  $2^{k-1}\cdot(w_1+1)\cdot w_2 \cdot \ldots \cdot w_k$ pieces.
\end{lemma}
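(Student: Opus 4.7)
The plan is to track, inductively in layer depth $i$, the maximum number $P_i$ of pieces in a \emph{common refinement} of the $w_i$ PWL functions $\R \to \R$ appearing at the output of the $i$-th post-ReLU layer, with the convention $P_0 = 1$ for the scalar input coordinate.

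First I would record two elementary observations. (i) An affine combination of functions that share a common refinement with $P$ pieces is itself PWL with at most $P$ pieces in that same refinement; hence applying the affine map $\A_{i+1}$ does not subdivide any piece. (ii) If $g:\R \to \R$ is PWL with $P$ pieces, then $\max\{0, g\}$ introduces at most one new breakpoint per piece (a zero-crossing in the interior, if any), i.e.\ at most $P$ new breakpoints. Combining (i) and (ii), the pre-activation of layer $i+1$ has at most $P_i$ pieces in the inherited common refinement, and each of the $w_{i+1}$ parallel ReLU gates adds at most $P_i$ new breakpoints, so
$$P_{i+1} \leq (w_{i+1} + 1)\, P_i.$$

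Unwinding the recursion yields $P_k \leq \prod_{i=1}^{k}(w_i+1)$, and since the output $f(x)$ is a single affine combination of the $w_k$ functions at the last hidden layer, observation (i) gives that $f$ has at most $P_k$ pieces. To match the form stated in the lemma I would then apply the trivial inequality $w_i + 1 \leq 2 w_i$ (valid for $w_i \geq 1$) on the indices $i \geq 2$:
$$\prod_{i=1}^{k}(w_i+1) = (w_1+1)\prod_{i=2}^{k}(w_i+1) \;\leq\; 2^{k-1}(w_1+1)\, w_2 \cdots w_k.$$

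The main subtlety, and the step I would take care with, is observation (ii): each maximal affine piece of $g$ either sits entirely above zero (unchanged by ReLU), sits entirely below zero (sent to $0$), or crosses zero in its interior exactly once (splitting into a zero-piece and a nonzero-piece), so no other new breakpoints appear. The reason to track the common refinement rather than bound pieces per neuron independently is precisely to avoid an unnecessary factor of $2$ at each layer, which would weaken the exponent from $k-1$ to $k$ and miss the $(w_1+1)$ improvement that arises because the input has only one piece at layer $0$.
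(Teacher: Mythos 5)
Your proof is correct, and it actually establishes a strictly tighter bound than the lemma before relaxing it at the end. The paper's proof is also an induction over depth, but its bookkeeping is coarser: at each inductive step it applies the hypothesis to the pre-activation of each node in the new layer separately, doubles the piece count when that node's ReLU is applied, and then multiplies by $w_{k+1}$ when the $w_{k+1}$ post-activation outputs are affinely recombined --- giving a per-layer factor of $2w_{k+1}$ for every layer after the first and hence the $2^{k-1}$ in front. You instead carry the \emph{common refinement} of the whole layer's outputs forward: since all $w_{i+1}$ pre-activations already share the at most $P_i-1$ old breakpoints, these are counted once, and ReLU adds at most $P_i$ new zero-crossings per neuron, giving a per-layer factor of $w_{i+1}+1$ and the intermediate bound $\prod_{i=1}^k(w_i+1)$. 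The paper only reaps the common-refinement benefit at the very first layer (where the input has a single piece, so $(w_1+1)$ appears automatically) and forfeits it afterward. What the paper's route buys is that it lands on the exact stated form $2^{k-1}(w_1+1)w_2\cdots w_k$ without extra work; you reach it via the harmless relaxation $w_i+1\le 2w_i$ for $i\ge 2$. If the goal were to record the sharpest statement, your $\prod_{i=1}^k(w_i+1)$ would be the preferable version of the lemma.
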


\begin{proof}
\begin{figure}[h!]
\includegraphics[width=1\textwidth]{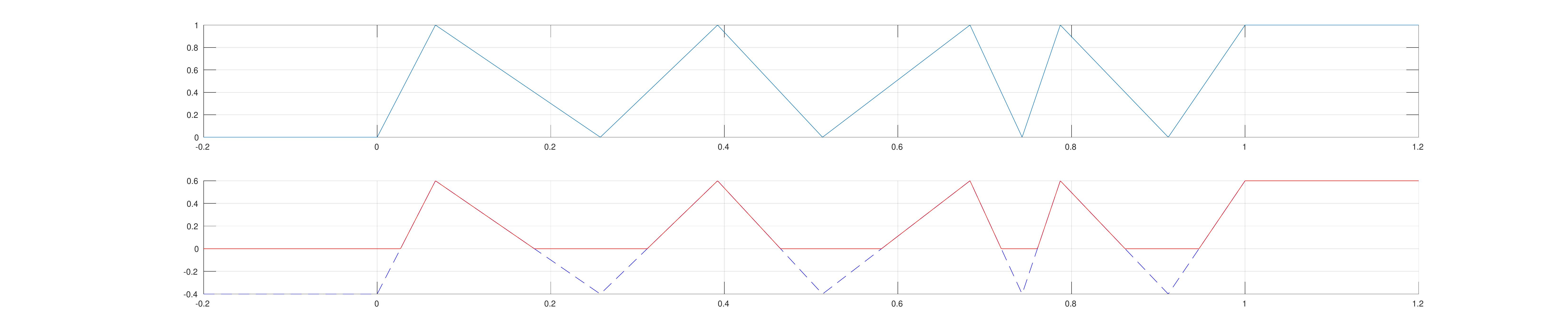}
\caption{The number of pieces increasing after activation. If the blue function is $f$, then the red function $g = \max\{0,f+b\}$ has at most twice the number of pieces as $f$ for any bias $b\in \R$.}\label{fig:maxpieces} 
\end{figure}
We prove this by induction on $k$. The base case is $k=1$, i.e, we have a 2-layer ReLU DNN. Since every activation node can produce at most one breakpoint in the piecewise linear function, we can get at most $w_1$ breakpoints, i.e., $w_1 + 1$ pieces.

Now for the induction step, assume that for some $k\geq 1$, any $\R \to \R$ ReLU DNN with depth $k+1$ and widths $w_1, \ldots, w_k$ of the $k$ hidden layers produces at most  $2^{k-1}\cdot(w_1+1)\cdot w_2 \cdot \ldots \cdot w_k$ pieces. 

Consider any $\R \to \R$ ReLU DNN with depth $k+2$ and widths $w_1, \ldots, w_{k+1}$ of the $k+1$ hidden layers. Observe that the input to any node in the last layer is the output of a $\R \to \R$ ReLU DNN with depth $k+1$ and widths $w_1, \ldots, w_k$. By the induction hypothesis, the input to this node in the last layer is a piecewise linear function $f$ with at most $2^{k-1}\cdot(w_1+1)\cdot w_2 \cdot \ldots \cdot w_k$ pieces. When we apply the activation, the new function $g(x) = \max\{0,f(x)\}$, which is the output of this node, may have at most twice the number of pieces as $f$, because each original piece may be intersected by the $x$-axis; see Figure~\ref{fig:maxpieces}. Thus, after going through the layer, we take an affine combination of $w_{k+1}$ functions, each with at most $2\cdot(2^{k-1}\cdot(w_1+1)\cdot w_2 \cdot \ldots \cdot w_k)$ pieces. In all, we can therefore get at most $2\cdot(2^{k-1}\cdot(w_1+1)\cdot w_2 \cdot \ldots \cdot w_k)\cdot w_{k+1}$ pieces, which is equal to $2^{k}\cdot(w_1+1)\cdot w_2 \cdot \ldots \cdot w_k\cdot w_{k+1},$ and the induction step is completed.
\end{proof}

Lemma~\ref{lem:size-bounds-0} has the following consequence about the depth and size tradeoffs for expressing functions with agiven number of pieces.

\begin{lemma}\label{lem:size-bounds} Let $f:\R\to \R$ be a piecewise linear function with $p$ pieces. If $f$ is represented by a ReLU DNN with depth $k+1$, then it must have size at least $\frac12kp^{1/k} - 1$. Conversely, any piecewise linear function $f$ that is represented by a ReLU DNN of depth $k+1$ and size at most $s$, can have at most $(\frac{2s}{k})^{k}$ pieces.
\end{lemma}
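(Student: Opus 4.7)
The plan is to deduce Lemma~\ref{lem:size-bounds} as an immediate corollary of Lemma~\ref{lem:size-bounds-0} via the AM--GM inequality applied to the widths. Recall that Lemma~\ref{lem:size-bounds-0} gives the piece-count bound $p \le 2^{k-1}(w_1+1)\,w_2 \cdots w_k$ for any depth $k+1$ ReLU DNN with hidden widths $w_1, \ldots, w_k$, while the size is $s = w_1 + w_2 + \cdots + w_k$. Both halves of the lemma will fall out of optimizing this product under the linear constraint on $s$.

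For the lower bound on size, I would apply AM--GM to the $k$ nonnegative quantities $(w_1+1), w_2, \ldots, w_k$, whose sum is exactly $s+1$:
$$(w_1+1)\,w_2 \cdots w_k \;\le\; \left(\frac{s+1}{k}\right)^{k}.$$
Plugging this into Lemma~\ref{lem:size-bounds-0} gives $p \le 2^{k-1}\bigl(\tfrac{s+1}{k}\bigr)^{k}$. Taking $k$-th roots and using $2^{(k-1)/k} \le 2$ yields $p^{1/k} \le 2(s+1)/k$, which rearranges to exactly $s \ge \tfrac{k}{2}\, p^{1/k} - 1$, the first claim.

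For the converse (upper bound on pieces), I would use the slightly tighter estimate $w_1 + 1 \le 2w_1$, valid whenever $w_1 \ge 1$. This upgrades the piece bound to $p \le 2^{k}\, w_1 w_2 \cdots w_k$, after which a direct AM--GM on $w_1, \ldots, w_k$ (summing to $s$) gives $w_1 w_2 \cdots w_k \le (s/k)^{k}$, hence $p \le (2s/k)^{k}$ as stated.

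The only minor obstacle I anticipate is the degenerate case $w_1 = 0$ in the second half, where the inequality $w_1 + 1 \le 2w_1$ fails. But if $w_1 = 0$, the first hidden layer produces no breakpoints: the input is simply an affine function feeding into the second activation layer, so the network collapses to a depth-$k$ ReLU DNN on a one-dimensional affine input, and the argument applies inductively (the base case being Theorem~\ref{thm:1-dim-2-layer}) to yield the desired bound with room to spare. Both bounds are therefore derived from essentially one line of AM--GM once Lemma~\ref{lem:size-bounds-0} is in hand.
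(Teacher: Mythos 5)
Your proof is correct and follows essentially the same approach as the paper: both halves are one-line applications of AM--GM on top of Lemma~\ref{lem:size-bounds-0}, and the algebra you spell out matches what the paper sketches tersely. One remark: the degenerate case $w_1 = 0$ that you flag never actually arises, since Definition~\ref{def:relu-dnn} takes the widths $w_1, \ldots, w_k \in \N$ as positive integers, so the estimate $w_1 + 1 \le 2w_1$ holds unconditionally and your proposed inductive fallback is unnecessary.
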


\begin{proof}
Let widths of the $k$ hidden layers be $w_1, \ldots, w_k$. By Lemma~\ref{lem:size-bounds-0}, we must have \begin{equation}\label{eq:piece-bound}2^{k-1}\cdot(w_1+1)\cdot w_2 \cdot \ldots \cdot w_k \geq p.\end{equation}
By the AM-GM inequality, minimizing the size $w_1 + w_2 + \ldots + w_k$ subject to \eqref{eq:piece-bound}, means setting $w_1 + 1 = w_2 = \ldots = w_k$. This implies that $w_1 + 1 = w_2 = \ldots = w_k \geq \frac12p^{1/k}$. The first statement follows. The second statement follows using the AM-GM inequality again, this time with a restriction on $w_1+ w_2 + \ldots + w_k$.
\end{proof}

\section{Proof of Proposition~\ref{thm:max-0-x-y}}\label{sec:proof-max-0-x-y}

We first observe that the set of points where $\max\{0,x_1,x_2\}$ is not differentiable is precisely the union of the three half-lines (or rays) $\{(x_1,x_2): x_1=x_2, x_1\geq 0\} \cup \{(0,x_2): x_2 \leq 0\} \cup \{(x_1,0): x_1 \leq 0\}$. 
On the other hand, consider any Sum-of-ReLU circuit, which can be expressed as a function of the form
$$f(x) = \sum_{i=1}^w c_i\max\{0,\langle a^i, x \rangle + b_i\},$$ where $w\in \N$ is the number of ReLU gates in the ciruit, and $a^i \in \R^2$, $b_i, c_i \in \R$ for all $i=1, \ldots, w$. This implies that $f(x)$ is piecewise linear and the set of points where $f(x)$ is not differentiable is {\em precisely} the union of the $w$ lines $\langle a^i, x \rangle + b_i = 0$, $i=1, \ldots, w$. 
Since a union of lines cannot equal the union of the three half-lines $\{(x_1,x_2): x_1=x_2, x_1\geq 0\} \cup \{(0,x_2): x_2 \leq 0\} \cup \{(x_1,0): x_1 \leq 0\}$, we obtain the consequence that $\max\{0,x_1,x_2\}$ cannot be represented by a Sum-of-ReLU circuit, no matter how many ReLU gates are used.

\section{Simulating an LTF gate by a ReLU gate}\label{sec:LTF-ReLU}

\begin{claim}\label{LLbyLR}
Any LTF gate $\{-1,1\}^n \rightarrow \{-1,1\}$ can be simulated by a Sum-of-ReLU circuit with at most $2$ ReLU gates. 
\end{claim}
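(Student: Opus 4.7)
The plan is to exploit the fact that the input domain $\{-1,1\}^n$ is finite. An LTF gate computes $x \mapsto \operatorname{sgn}(\langle \w,\x\rangle + b)$ for some $\w \in \R^n, b\in \R$ (with the convention $\operatorname{sgn}(y)=+1$ iff $y\geq 0$). First I would observe that as $\x$ ranges over the $2^n$ Boolean vectors, the real number $y(\x) := \langle \w,\x\rangle + b$ takes only finitely many values. Hence the set $Y := \{y(\x) : \x \in \{-1,1\}^n\} \subset \R$ is finite, and in particular the two sets $Y_- := Y \cap (-\infty,0)$ and $Y_+ := Y \cap [0,\infty)$ each contain a worst-case element: let $y_- := \max Y_-$ and $y_+ := \min Y_+$ when both sets are nonempty. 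If either set is empty the LTF is constant on $\{-1,1\}^n$ and is trivially representable by $0$ ReLU gates plus a bias, so I may assume $Y_-,Y_+$ are both nonempty, whence $y_- < 0 \leq y_+$ and the gap $\Delta := y_+ - y_- > 0$ is strictly positive.

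Next I would hand-craft the single-variable piecewise linear function
\[
  h(y) \;:=\; \tfrac{2}{\Delta}\,\relu(y - y_-) \;-\; \tfrac{2}{\Delta}\,\relu(y - y_+) \;-\; 1,
\]
which is a Sum-of-ReLU expression with exactly two ReLU gates (plus a constant bias absorbed into the output linear combination). A direct evaluation gives $h(y) = -1$ for every $y \leq y_-$ (both ReLU terms vanish), $h(y_+) = \tfrac{2}{\Delta}(y_+ - y_-) - 0 - 1 = +1$, and for $y \geq y_+$ the two ReLU slopes cancel so that $h(y) = \tfrac{2}{\Delta}(y_+ - y_-) - 1 = +1$. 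Crucially, since $Y \cap (y_-, y_+) = \emptyset$ by the definitions of $y_\pm$, I do not need $h$ to match the LTF on the interval $(y_-,y_+)$ — no Boolean input lands there.

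Finally I would compose: the circuit
\[
  C(\x) \;=\; \tfrac{2}{\Delta}\,\relu\bigl(\langle \w,\x\rangle + (b-y_-)\bigr) \;-\; \tfrac{2}{\Delta}\,\relu\bigl(\langle \w,\x\rangle + (b-y_+)\bigr) \;-\; 1
\]
is a Sum-of-ReLU circuit with $2$ ReLU gates, and by the previous paragraph $C(\x) = \operatorname{sgn}(y(\x))$ for every $\x \in \{-1,1\}^n$, matching the LTF gate. The only obstacle worth mentioning is the degenerate case when $Y_-$ or $Y_+$ is empty, which is handled at the outset, and the (non-)issue that the construction uses the numbers $y_\pm$ which depend on $\w,b$; since these are just real coefficients in the resulting circuit there is no further subtlety.
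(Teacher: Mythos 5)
Your construction is correct and is essentially the same as the paper's: both identify the two extremal values of $\langle\w,\x\rangle+b$ straddling zero, build a one-variable piecewise-linear ramp that is $-1$ below the negative one and $+1$ above the other (you hinge at $y_-$ and $y_+$, the paper at $-p$ and $0$), realize it with two ReLU gates, and precompose with the affine map. The only cosmetic differences are that you write out the two-ReLU formula explicitly rather than citing Theorem~\ref{thm:1-dim-2-layer}, and you handle the degenerate constant case explicitly.
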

\begin{proof}
Given a LTF gate $(2{\mathbf 1}_{\langle a, x\rangle + b \geq 0}-1)$ it separates the points in $\{-1,1\}^n$ into two subsets such that the plane $\langle a, x\rangle + b = 0$ is a separating hyperplane between the two sets. Let $-p<0$ be the value of the function $\langle a, x\rangle + b$ at  that hypercube vertex on the ``-1'' side which is closest to this separating plane. Now imagine a continuous piecewise linear function $f : \mathbb{R} \rightarrow \mathbb{R}$ such that $f(x) =-1$ for $x \leq -p$, $f(x) = 1$ for $x \geq 0$ and for $x \in (-p,0)$ $f$ is the straight line function connecting $(-p,-1)$ to $(0,1)$. It follows from Theorem \ref{thm:1-dim-2-layer} that this $f$ can be implemented by a $\mathbb{R} \rightarrow \mathbb{R}$ Sum-of-ReLU with at most $2$ ReLU gates hinged at the points $-p$ and $0$ on the domain. Because the affine transformation $\langle a, x\rangle + b$ can be implemented by the wires connecting the $n$ input nodes to the layer of ReLUs it follows that there exists a $\mathbb{R}^n \rightarrow \mathbb{R}$ Sum-of-ReLU with at most $2$ ReLU gates implementing the function $g(x) = f(\langle a, x\rangle + b) : \mathbb{R}^n \rightarrow \mathbb{R}$. Its clear that $g(\x) = \text{LTF}(\x)$ for all $\x \in \{-1,1\}^n$. 
\end{proof}

\section{PARITY on $k-$bits can be implemented by a $O(k)$ Sum-of-ReLU circuit}\label{sec:Parity-ReLU}

For this proof its convenient to think of the PARITY function as the following map, 

\begin{align}\label{parity}
\text{PARITY} : \{0,1\}^k &\rightarrow \{0,1\}\\
\x &\mapsto \left (\sum_{i=1}^k x_i \right )\mod 2
\end{align}

Its clear that that in the evaluation of the PARITY function as stated above the required sum over the coordinates of the input Boolean vector will take as value every integer in the set, $\{0,1,2,..,k\}$. The PARITY function can then be lifted to a $f :\R \rightarrow \R$ function such that, $f(y) = 0$ for all $y \leq 0$, $f(y) = y \mod 2$ for all $y \in {1,2,..,k}$, $f(y) = k \mod 2$ for all $y > k$ and for any $y \in (p,p+1)$ for $p \in \{0,1,..,k-1\}$ $f$ is the straight line function connecting the points, $(p, p \mod 2)$ and $(p +1, (p+1) \mod 2)$. Thus $f$ is a continuous piecewise linear function on $\R$ with $k+2$ linear pieces. Then it follows from Theorem \ref{thm:Lpapprox}   that this $f$ can be implemented by a $\mathbb{R} \rightarrow \mathbb{R}$ Sum-of-ReLU circuit with at most $k+1$ ReLU gates hinged at the points $\{0,1,2,..,k\}$ on the domain. The wires from the $k$ inputs of the ReLU gates can implement the linear function $\sum_{i=1}^k x_i$. Thus it follows that there exists a $\R^k \rightarrow \R$ Sum-of-ReLU circuit (say C) such that, $\text{C}(\x) = {\text PARITY}(\x)$ for all $\x \in \{0,1\}^k$.  

\section{Proof of Theorem \ref{deepReLU} (Proving smallness of the  sign-rank of LTF-of-(ReLU)$^{\mathbf d-1}$ with weight restrictions only on the bottom most layer)}\label{app:signranksmall} 

For a $\{-1,1\}^M \rightarrow \{-1,1\}$ LTF-of-ReLU circuit with any given weights on the network the inputs to the threshold function of the top LTF gate are some set of $2^M$ real numbers (one for each input). Over all these inputs let $p>0$ be the distance from $0$ of the largest negative number on which the LTF gate ever gets evaluated. Then by increasing the bias at this last LTF gate by a quantity less then $p$ we can ensure that no input to this LTF gate is $0$ while the entire circuit still computes the same Boolean function as originally. So we can assume without loss of generality that the input to the threshold function at the top LTF gate is never $0$. We also recall that the weights at the bottom most layer are constrained to be integers of magnitude at most $W>0$. 

~\\ 
Let this depth $d$ LTF-of-(ReLU)$^{d-1}$ circuit map $\{-1,1\}^{m}\times\{-1,1\}^{m} \rightarrow \{-1,1\}$. Let $\{w_{k}\}_{k=1}^{d-1}$ be the widths of the ReLU layers at depths indexed by increasing $k$ with increasing distance from the input. Thus, the output LTF gate gets $w_{d-1}$ inputs; the $j$-th input, for $j=1,2,..,w_{d-1}$, is the output of a circuit $C_j$ of depth $d-1$ composed of only ReLU gates. Let $f_j(\x,\y):\{-1,1\}^{m}\times\{-1,1\}^{m} \rightarrow \R$ be the pseudo-Boolean function implemented by $C_j$.

~\\
Thus the output of the overall LTF-of-(ReLU)$^{d-1}$ circuit is, 
\begin{align}\label{top} 
 f(\x,\y):=\text{LTF} \left [ \beta + \sum_{j=1}^{w_{d-1}} \alpha_j f_j(\x,\y)\right]
\end{align}

\begin{lemma}\label{ReLU-circuit-rank}
 Let $k \geq 1$ and $w_1, \ldots, w_k \geq 1$ be natural numbers.  Consider a family of depth $k+1$ circuits (say indexed by $i \in I$ for some index set $I$) with $2m$ inputs and a single output and consisting of only ReLU gates.  Let all of them have $w_j$ ReLU gates at depth $j$, with $j=1$ corresponding to the layer closest to the input (note that single output ReLU gate is not counted here). Moreover, let all of the circuits in the family have the same weights in all their layers except for the layer closest to the output. We restrict the inputs to $\{-1,1\}^m \times \{-1,1\}^m$ and let the $i^{th}$ circuit ($i\in I$) implement a pseudo-Boolean function $g_i: \{-1,1\}^m \times \{-1,1\}^m\to \R$. Assume that the weights of the $w_1$ ReLU gates in the layer closest to the input are restricted as per Definition~\ref{def:weight-restriction}. For every $i\in I$, define the $2^m\times 2^m$ matrix $G_i(\x,\y)$ whose rows are indexed by $\x \in \{-1,1\}^m$ and columns are indexed by $\y \in \{-1,1\}^m$ as follows: $$G_i(\x,\y) = g_i(\x,\y).$$ Then there exists a fixed way to order the rows and columns such that for each $G_i$ there exists a {\em contiguous} partitioning (which can depend on $i$) of its rows and columns into $O\big((\prod_{i=1}^k w_i)(mW)\big)$ blocks (thus, $G_i$ has $O\big((\prod_{i=1}^k w_i)^2(mW)^2\big)$ blocks), and within each block $G_i$ is constant valued.
\end{lemma}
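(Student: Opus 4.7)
The plan is to show that the weight restriction on the bottom layer alone forces a block-constant structure on $G_i$, with the row/column partition determined entirely by the (shared) bottom-layer weights and therefore in fact independent of $i$. Order the $2^m$ rows by the permutation $\sigma_1$ from Definition~\ref{def:weight-restriction} and the columns by $\sigma_2$. By the defining inequalities of the cones $P_{m,\sigma_1}$ and $P_{m,\sigma_2}$, every linear form $\x \mapsto \langle \w_1^{(r)}, \x \rangle$ appearing in a bottom-layer gate is non-decreasing along the $\sigma_1$-ordering of rows, and similarly each $\y \mapsto \langle \w_2^{(r)}, \y \rangle$ is non-decreasing along the $\sigma_2$-ordering of columns.

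The first step is a counting bound on each single coordinate. Because the entries of $\w_1^{(r)}$ are integers of magnitude at most $W$, the scalar $\langle \w_1^{(r)}, \x \rangle$ lies in the integer set $[-mW,mW] \cap \Z$, i.e.\ takes at most $2mW+1$ distinct values as $\x$ varies over $\{-1,1\}^m$. Combined with monotonicity in the $\sigma_1$-order, this means its level sets form $O(mW)$ contiguous intervals of rows. Taking the common refinement over the $w_1$ bottom-layer gates, the tuple $\bigl(\langle \w_1^{(r)}, \x\rangle\bigr)_{r=1}^{w_1}$, viewed as a function on the $\sigma_1$-ordered rows, is componentwise non-decreasing, so at each transition between consecutive rows at least one coordinate strictly increases; since each coordinate can increase at most $2mW$ times, the total number of distinct tuples realized along the ordering is at most $1 + 2mW \cdot w_1 = O(w_1 mW)$. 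The resulting joint level sets partition the rows into $O(w_1 mW)$ contiguous blocks, and an analogous argument gives $O(w_1 mW)$ contiguous column blocks.

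The second step is propagation up through the circuit. Fix any row-block $R$ and column-block $C$ of this partition. For every $r = 1,\ldots,w_1$ and every $(\x,\y) \in R \times C$, the pre-activation $b_r + \langle \w_1^{(r)}, \x \rangle + \langle \w_2^{(r)}, \y \rangle$ takes the same value, so every one of the $w_1$ bottom-layer ReLU outputs is constant on $R\times C$. Each subsequent layer, including the varying top layer which is the only place the index $i$ enters, is a deterministic function of these bottom-layer outputs, so $g_i(\x,\y)$ is constant on $R \times C$ regardless of which $i \in I$ is chosen. Since $w_1 \leq \prod_{j=1}^k w_j$, the block count $O(w_1 mW)$ is absorbed into the claimed bound $O\bigl((\prod_{j=1}^k w_j)(mW)\bigr)$.

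The only mildly delicate piece is the joint-refinement count for $w_1$ co-monotone integer sequences; I don't anticipate any real obstacle here, since the number of jumps of the joint tuple along the $\sigma_1$-order is bounded trivially by the sum of the jumps of the individual coordinates, and each of these is controlled by the integer-range restriction $\lvert \langle \w_1^{(r)}, \x \rangle\rvert \leq mW$ that is inherited directly from the weight constraints in Definition~\ref{def:weight-restriction}. Everything else in the argument is formal and uses only that the upper layers are deterministic maps of the bottom-layer output vector.
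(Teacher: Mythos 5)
Your proof is correct, and it takes a genuinely different and in fact sharper route than the paper. The paper argues by induction on the depth $k$: in the base case it observes that each bottom-layer gate, after reordering rows and columns by $\sigma_1, \sigma_2$, gives a matrix with $O(mW)$ contiguous row and column blocks, and then invokes Claim~\ref{claim:block-matrices} repeatedly, once per layer, so that the block count multiplies by $w_j$ at each level $j$, accumulating the factor $\prod_{j=1}^k w_j$. Your argument short-circuits the entire induction: you take the common refinement of the $w_1$ bottom-layer inner products (giving $O(w_1 mW)$ row blocks and $O(w_1 mW)$ column blocks by the co-monotonicity/integer-range counting argument), and then observe that since every pre-activation at depth $1$ is constant on each $R \times C$ block, and since the layered architecture makes every higher gate a pure function of the depth-$1$ outputs, the \emph{entire} circuit output is already constant on each block. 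No refinement at higher layers is ever needed. This yields a block count of $O(w_1 mW)$ that is independent of depth and of $w_2,\ldots,w_k$, which is strictly stronger than the stated $O\big((\prod_{i=1}^k w_i)(mW)\big)$; you also get, for free, that the partition does not depend on the circuit index $i$, whereas the lemma only asks for an $i$-dependent one. As an aside, propagating your tighter block count through the proof of Theorem~\ref{deepReLU} would upgrade the sign-rank upper bound to $O(w_1^2 (mW)^2)$ and hence yield a size lower bound on the bottom layer alone of the form $w_1 = \Omega\big(2^{m^{1/8}}/(mW)\big)$, which is stronger than the $(d-1)\big[2^{m^{1/8}}/(mW)\big]^{1/(d-1)}$ bound the paper derives via AM--GM from the looser lemma.
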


~\\
Before we prove the above lemma, let us see why it implies Theorem~\ref{deepReLU}. 
\begin{proof} {(of Theorem ~\ref{deepReLU})} Let $F_j(\x,\y)$ be the matrix obtained from the ReLU circuit outputs $f_j(\x,\y)$ from~\eqref{top}, and let $F(\x,\y)$ be the matrix obtained from $f(\x,\y)$. Let $J_{2^m\times 2^m}$ be the matrix of all ones. Then

\begin{align*}
\text{sign-rank}(F(\x,\y))
= &  \;\text{sign-rank} \left ( \text{sign} \left [ \beta J_{2^m\times 2^m} + \sum_{j=1}^{w_{d-1}} \alpha_j F_j(\x,\y) \right] \right )\\
\leq &\; \text{rank} \left ( \beta J_{2^m\times 2^m} + \sum_{j=1}^{w_{d-1}} \alpha_j F_j(\x,\y) \right )\\
\leq &\; 1 + \sum_{j=1}^{w_{d-1}}\text{rank}(F_j(\x,\y))\\
= & O\left (\left ( \prod_{k=1}^{d-1} w_k\right )^2 (mW)^2\right )
\end{align*}
where the first inequality follows from the definition of sign-rank, the second inequality follows from the subadditivty of rank and the last inequality is a consequence of using Lemma~\ref{ReLU-circuit-rank} at depth $k+1 = d-1$. Indeed, a matrix with block structure as in the conclusion of Lemma~\ref{ReLU-circuit-rank} has rank at most $O\big((\prod_{i=1}^k w_i)^2(mW)^2\big)$ by expressing it as a sum of these many matrices of rank one and using subaddivity of rank.
~\\
Now we recall that the Chattopadhyay-Mande function $g$ (which is linear sized depth $2$ LTF) on $2m = 2(n^{\frac{4}{3}}+n\log n)$ bits has sign-rank $\Omega(2^{\frac{n^{\frac{1}{3}}}{81}})$.  It follows that we can find a constant $C >1$ s.t for all large enough $n$ we have,  $C^4n^{\frac{4}{3}} \geq m$. Then we would have, $\text{sign-rank}(g) = \Omega(2^{ \frac{m^{\frac 1 4}}{81C} })$. From the above upper bound on the sign-rank of our bottom layer weight restricted LTF-of-(ReLU)$^{d-1}$ with widths $\{w_k\}_{k=1}^{d-1}$ it follows that for this to represent this Chattopadhyay-Mande function it would need, $\left (\left ( \prod_{k=1}^{d-1} w_k\right )^2 (mW)^2\right ) = \Omega(2^{\frac{m^{\frac{1}{4}}}{81 C}})$.  Hence it follows by the ``AM$\geq$GM" inequality that the size ($1+\sum_{k=1}^{d-1}w_i$) required for such  LTF-of-(ReLU)$^{d-1}$ circuits to represent the Chattopadhyay-Mande function is $\Omega \left ( (d-1) \left [ \frac{2^{m^{\frac 1 8}}}{mW} \right ]^{\frac 1 {(d-1)}} \right )$.
~\\ \\
The statement about LTF circuits is a straightforward consequence of the above result and Claim~\ref{LLbyLR} in Appendix~\ref{sec:LTF-ReLU} which says that any LTF gate can be simulated by 2 ReLU gates.
\end{proof} 

~\\
Towards proving Lemma~\ref{ReLU-circuit-rank} we first make the following observation, 

\begin{claim}\label{claim:block-matrices}
Let $w, M, D$ be fixed natural numbers. Let $A_1, \ldots, A_w$ be any $M\times M$ matrices such that there exists a fixed way to order the rows and columns for each of the $A_i$ such that they get partitioned contiguously into $D$ blocks (not necessarily equal in size) and this partitioning is such that $A_i$ is constant valued within each of the $D^2$ blocks. Then $A := A_1 + \ldots + A_w$ is an $M\times M$ matrix whose rows and columns can be partitioned contiguously into $w(D-1) + 1$ groups such that $A$ is constant valued within each block defined by this partition of the rows and columns.
\end{claim}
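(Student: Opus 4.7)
The plan is to identify a common refinement of the row and column partitions across all the $A_i$, and show that each $A_i$ (and hence the sum $A$) is constant on every block of this refinement. The key observation is that, by hypothesis, there is a single ordering of the rows and a single ordering of the columns that simultaneously serves to display each $A_i$ as being constant on its own contiguous block decomposition.

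First, I would fix this common ordering once and for all, and for each $i \in \{1,\ldots,w\}$ encode the row block structure of $A_i$ by its set of $D-1$ ``cut points'' $R_i \subseteq \{1, \ldots, M-1\}$ (the right endpoints of the first $D-1$ row-blocks), and similarly encode the column block structure by a set $C_i \subseteq \{1,\ldots,M-1\}$ of size $D-1$. The contiguous blocks of $A_i$ correspond exactly to pairs (row interval, column interval) determined by these cuts, and $A_i$ takes a constant value on each such block.

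Next, I would form the unions $R := \bigcup_{i=1}^w R_i$ and $C := \bigcup_{i=1}^w C_i$. Since $|R| \le w(D-1)$ and $|C| \le w(D-1)$, the cut points in $R$ partition the rows contiguously into at most $w(D-1) + 1$ groups, and similarly for columns. This partition is a common refinement of the individual row/column partitions of every $A_i$, so each $A_i$ is constant on every block of the refined grid; summing over $i$, the matrix $A$ is constant on each such block as well. This gives the desired partition into $w(D-1)+1$ row groups and $w(D-1)+1$ column groups on which $A$ is blockwise constant.

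I do not expect a genuine obstacle here: the entire argument is just a coarse-to-fine refinement of contiguous partitions and uses only additivity of ``constant on a common refinement.'' The only subtlety worth stating explicitly in the write-up is that the hypothesis gives us \emph{one} row-ordering and \emph{one} column-ordering that work simultaneously for all $A_i$ (the partitions themselves may differ, but the ordering does not), since without this, taking a union of cut points would be meaningless.
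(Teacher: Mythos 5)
Your proof is correct and follows essentially the same approach as the paper's: both encode each $A_i$'s contiguous block structure as a choice of $D-1$ cut lines among the $M-1$ possible, take the union over $i$ to get at most $w(D-1)$ cuts (hence $w(D-1)+1$ contiguous groups), and observe that additivity preserves blockwise constancy on the common refinement. Your explicit remark that the hypothesis supplies a single shared ordering of rows (and of columns) across all $A_i$ is the same point the paper flags as the crux of why the union of cut points is meaningful.
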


\begin{proof} The partition of the rows of $A_i$ into $D$ contiguous blocks is equivalent to a choice of $D-1$ lines out of $M-1$ lines. (Potentially a different set of $D-1$ lines for each $A_i$) But the guarantee that this partitioning is induced in each of the $A_i$ by the same ordering of the rows means that When we sum the matrices, the refined partition in the sum corresponds to some selection of $w(D-1)$ lines out of the $M-1$ lines. This gives us at most $w(D-1) + 1$ contiguous blocks among the rows of the sum matrix. The same argument holds for the columns.
\end{proof}

\begin{proof}[Proof of Lemma~\ref{ReLU-circuit-rank}]
We will prove this Lemma by induction on $k$. 

\paragraph{The base case of the induction $k=1$ (i.e depth $2$)}  A single ReLU gate in the bottom most layer of the net which receives a tuple of vectors $(\x,\y)$ as input gives as output the number, $\max\{0, \langle \a^1, \x\rangle + \langle \a^2,\y\rangle + b\}$, for some $\a^1, \a^2 \in \R^m$ and $b \in \R$. Since the entries of $\a^1, \a^2$ and $b$ are assumed to be integers bounded by $W >0$ and $\x, \y \in \{-1,1\}^m$, the terms $\langle \a^1, \x\rangle$ and $\langle \a^2,\y\rangle$ can each take at most $O(mW)$ different values. So we can arrange the rows and columns of the $2^m \times 2^m$ dimensional output matrix of this gate in increasing order of $\langle \a^1,\x\rangle$ and $\langle \a^2,\y\rangle$ and then partition the rows and columns contiguously according to these values. And we note that because of the weight restriction as in Definition \ref{def:weight-restriction} that applies to each of ReLU gates in the bottom most layer, the ordering in increasing value of the inner-products as said above induces the same ordering of the rows for each of these output matrices at the different ReLU gates. Similarly, the same ordering is induced on the columns (note that the orderings for the rows may be different from the ordering for the columns; what is important is that the rows have the same ordering across the family and similarly for the columns.)
~\\ \\
Now we notice that the structure of the output matrices of the ReLU gates of the bottom most layer as described above is what is assumed in Claim \ref{claim:block-matrices}.
~\\ \\ 
Thus if $\{G_p(\x,\y)\}_{p=1,..,w_1}$ are the output matrices at each of the ReLU gates in the bottom most layer, then for some $\a \in \R^{w_1}$ and $b \in \R$ at depth $2$ the output matrix of any of the ReLU gates is given by, $\max\{0, bJ_{2^m\times 2^m} + \sum_{i=1}^{w_{1}} a_i G_i(\x,\y)\}$  where $J_{2^m\times 2^m}$ is the matrix of all ones and the ``max'' is taken entrywise. Then the base case of the induction is settled by applying Claim \ref{claim:block-matrices} on this matrix,  $bJ_{2^m\times 2^m} + \sum_{i=1}^{w_{1}} a_iG_i(\x,\y)$ with $D=O(mW)$ and $w=w_1$.
~\\ \\
We further note that the computations happening at the depth $2$ ReLU gates obviously do not change the ordering of the rows and columns frozen in at depth $1$, i.e., in the $G_i$ matrices. But with different depth $2$ gates, i.e., different choices of the vectors $\a$ and the number $b$, because of the linearity (in $\a$ and $b$) of the operation of forming, $bJ_{2^m\times 2^m} + \sum_{i=1}^{w_{1}} a_iG_i(\x,\y)$ they all have the same contiguous pattern of constant valued submatrices. Thus the depth $2$ output matrices continue to satisfy the hypothesis of Claim \ref{claim:block-matrices}. 

~\\
To complete the induction step, we consider a family of ReLU circuits with depth $k+1$ corresponding to different choices of, $b \in \R$ and $\a \in \R^{w_k}$ which can be seen as computing $g(\x, \y) = \max\{0, b + \sum_{p=1}^{w_{k}} a_pg_p(\x,\y)\}$ where $\{g_p(\x,\y)\}_{p=1,..,w_k}$ is a family of ReLU circuits of depth $k$ who by induction satisfy the lemma. Thus the corresponding output matrices of these depth $k+1$ circuits satisfy, 

\[ G(\x, \y) = \max\{0, bJ_{2^m\times 2^m} + \sum_{p=1}^{w_{k}} a_pG_p(\x,\y)\} \] 

~\\
where $G_p$ is the matrix form of $g_p$. Thus, the induction hypothesis applied to depth $k$ would then tell us that the rows and columns of each matrix $G_p$ can be partitioned contiguously into $O\big((\prod_{i=1}^{k-1} w_i)(mW)\big)$ such that $G_p$ is constant valued within each block. Then, by Claim~\ref{claim:block-matrices}, the rows and columns of the matrix $bJ_{2^m\times 2^m} + \sum_{p=1}^{w_{k}} a_pG_p(\x,\y)$ can be partitioned into $O\big((\prod_{i=1}^k w_i)(mW)\big)$ contiguous blocks. Moreover, this ordering of the rows and colums does not vary across the different circuits in the family, because they all have the same weights in the bottom most layer. Hence, the same ordering works for all the circuits in the family.

\end{proof}

\end{subappendices}

\chapter{\vspace{10pt} Provable Training of a $\relu$ gate}\label{chaptrain_new} 

\section{A review of provable neural training}

In this chapter we will prove results about trainability of a $\relu$ gate under more general settings than hitherto known till date.  To the best of our knowledge about the state-of-the-art in deep-learning both empirical and population risk minimization questions are typically solvable in either of the following two mutually exclusive scenarios : {\it  Scenario $1$ : Semi-Realizable Data} i.e the data comes as tuples $\z = (\x,\y)$ with $\y$ being the noise corrupted output of a net (of known architecture) when given $\x$ as an input. And {\it Scenario $2$ : Semi-Agnostic Data} i.e data comes as tuples $\z = (\x,\y)$ with no obvious functional relationship between $\x$ and $\y$ but there could be geometrical or statistical assumptions about the $\x$ and $\y$. 

We note that its not very interesting to work in the fully agnostic setting as in that case training even a single ReLU gate can be SPN-hard as shown in \cite{goel2016reliably}  On the other hand the simplifications that happen for infinitely large networks have been discussed since \cite{neal1996priors} and this theme has had a recent resurgence in works like \cite{chizat2018global,jacot2018neural}. Eventually this lead to an explosion of literature getting linear time training of various kinds of neural nets when their width is a high degree polynomial in training set size, inverse accuracy and inverse confidence parameters (a very {\it unrealistic} regime),  \citep{lee2018deep,wu2019global,dugradient,su2019learning,kawaguchi2019gradient,huang2019dynamics,allen2019convergenceDNN,allen2019learning,allen2019convergenceRNN,du2018power,zou2018stochastic,zou2019improved,arora2019exact,arora2019harnessing,li2019enhanced,arora2019fine,lee2018deep}. The essential proximity of this regime to kernel methods have been thought of separately in works like  \cite{allen2019can,wei2019regularization} 

Even in the wake of this progress, it remains unclear as to how any of this can help establish rigorous guarantees about ``smaller" neural networks or more pertinently for constant size neural nets which is a regime closer to what is implemented in the real world. Thus motivated we can summarize what is open about training depth $2$ nets into the following two questions, 

\begin{enumerate}
    \item {\bf Question $1$} Can {\it any} algorithm train a ReLU gate to $\epsilon-$accuracy in $\poly({\rm input{-}dimension},\frac 1 \epsilon)$ time using  neither symmetry nor compact support assumptions on the distribution? 
    \begin{itemize}
        \item {\bf Question $1.5$} Can a single ReLU gate be trained using (Stochastic) Gradient Descent with (a) random/arbitrary initialization and (b) weakly constrained data distribution - at least allowing it to be non-Gaussian and preferably non-compactly supported?
    \end{itemize}
    \item {\bf Question $2$} Can a neural training algorithm work with the following naturally wanted properties being simultaneously true? 
    \begin{enumerate} 
    {\small 
    \item Nets of depth $2$ with a constant/small number of gates. 
    \item The training data instances (and maybe also the noise) would have non-Gaussian non-compactly supported distributions. 
    \item Less structural assumptions on the weight matrices than being of the single filter convolutional type.
    \item $\epsilon-$approximate answers be obtainable in at most $\poly({\rm input{-}dimension},\frac {1}{\epsilon})$ time.
    }
    \end{enumerate} 
\end{enumerate}

\section{A summary of our results}

We make progress on some of the above fronts by drawing inspiration from two distinct streams of literature and often generalizing and blending techniques from them. First of them are the different avatars of the iterative stochastic non-gradient ``Tron" algorithms analyzed in the past like, \cite{rosenblatt1958perceptron,pal1992multilayer,freund1999large,kakade2011efficient,klivans2017learning,goel2017learning,goel2018learning}. The second kind of historical precedence that we are motivated by are the different works which have shown how some of the desired theorems about gradient descent can be proven if designed noise is injected into the algorithm in judicious ways, \citep{raginsky2017non,xu2018global,zhang2017hitting,durmus2019analysis,lee2019online,jin2018local,mou2018generalization,li2019generalization}. Here we will be working with the simplest neural net which is just a single $\relu$ gate mapping $\R^n \ni \x \mapsto \max \{0,\w^\top \x \} \in \R$ for $\w \in \R^n$ being its weight. In here already the corresponding empirical or the population risk is neither convex nor smooth in how it depends on the weights. Thus to the best of our knowledge none of the convergence results among these provable noise assisted algorithms cited above can be directly applied to this case because these proofs crucially leverage either convexity or very strong smoothness assumptions on the optimization objective. 
\bigskip 
We show $3$ kinds results in this chapter. 

In Section \ref{sec:almostSGDReLU} we have shown a very simple iterative stochastic algorithm to recover the underlying parameter $\w_*$ of the $\relu$ gate when realizable data allowed to be sampled online is of the form $(\x, \max \{ 0, \w_*^\top\x \} )$. The distributional condition is very mild which essentially just captures the intuition that enough of our samples are such that $\w_*^\top\x >0$.
~\\ \\
Not only is our algorithm's run-time near-optimal, but to the best of our knowledge the previous attempts at this problem have solved this only for the Gaussian distribution
(\cite{soltanolkotabi2017learning,kalan2019fitting}). Some results like \cite{goel2018learning} included a solution to this above problem as a special case of their result while assuming that the the data distribution is having a p.d.f symmetric about the origin. Thus in contrast to all previous attempts our assumptions on the distribution are significantly milder.  
~\\ \\
In Section \ref{sec:GDNoiseReLU} we show the first-of-its-kind analysis of gradient descent on a $\relu$ gate albeit when assisted with the injection of certain kinds of noise. We assume that the labels in the data are realizable but we make no assumptions on the distribution of the domain.  We make progress by showing that such a noise assisted GD in such a situation has a ``diffusive" behaviour about the global minima i.e  after {\rm T} steps of the algorithm starting from {\it anywhere}, w.h.p {\it all} the steps of the algorithm have been within $\sqrt{T}$ distance of the global minima of the function.  The key idea here is that of ``coupling" which shows that from the iterates of noise injected gradient descent on the squared loss of a $\relu$ gate one can create a discrete bounded difference super-martingale. 

\remark We would like to emphasize to the reader that in such a distribution free regime as above,  {\it no} algorithm is expected to provably train. Also note that the result is parametric in the magnitude of the added noise and hence one can make the algorithm be arbitrarily close to being a pure gradient descent. 

~\\
In Section \ref{sec:GLMTron} we re-analyze a known algorithm called ``GLM-Tron" under more general conditions than previously to show how well it can do (empirical) risk minimization on any Lipschitz gate with Lipschitz constant $<2$ (in particular a $\relu$ gate) in the noisily realizable setting while no assumptions are being made on the distribution of the noise beyond their boundedness - hence the noise can be { ``adversarial"}. We also point out how the result can be improved under some  assumptions on the noise making it more benign. Note that in contrast to the training result in Section \ref{sec:almostSGDReLU} which used a stochastic algorithm, here we are using full-batch iterative updates to gain these extra abilities to deal with more general gates, (adversarial) noise and essentially no distributional assumptions on training data.

\section{Almost distribution free learning of a $\relu$ gate}\label{sec:almostSGDReLU}

If data, $\x$, is being sampled from a distribution ${\cal D}$ and the corresponding true labels are being generated from a $\relu$ gate as $\max \{0, \w_*^\top \x \}$ for some $\w_* \in \R^n$ unknown to us,  then the question of learning this $\relu$ gate in this realizable setting is essentially the task of trying to solve the following optimization problem while having only sample access to ${\cal D}$, $\min_{\w \in \R^n} \E_{\x \sim {\cal D}} \Big [ \Big ( \max \{0,\w^\top \x\} - \max \{0,\w_{*}^\top \x\} \Big )^2\Big]$

 In contrast to all previous work we show the following simple algorithm which solves this learning problem to arbitrarily good accuracy assuming only very mild conditions on ${\cal D}$. We leverage the simple intuition that if we can get to see enough labels $y = \max \{0, \w_*^\top \x \}$ where $y >0$ then $\w^*$ is just the answer to the linear regression problem on those samples. 

\begin{algorithm}[]
\caption{Modified SGD for a $\relu$ gate}
\label{dadushrelu}
\begin{algorithmic}[1]
\State {\bf Input:} Sampling access to a distribution ${\cal D}$ on $\R^n$.
\State {\bf Input:} Oracle access to the true labels  when queried with some $\x \in \R^n$  
\State {\bf Input:} An arbitrarily chosen starting point of $\w_1 \in \R^n$ and a constant $\alpha <0$
\For {$t = 1,\ldots$}
    \State Sample $\x_t \sim {\cal D}$ and query the oracle with it. 
    \State The oracle replies back with $y_t = \max \{0, \w_*^{\top}\x_t \}$
    \State Form the gradient-proxy, 
    \[\g_t := \alpha {\bf 1}_{y_t > 0}( y_t  - \w_t^\top \x_t)\x_t\]
    \State $\w_{t+1} := \w_t - \eta \g_t$
\EndFor 
\end{algorithmic}
\end{algorithm}

\begin{theorem}\label{thm:dadushrelu}
We assume that the the data distribution ${\cal D}$ is s.t $\mathbb{E}\Big[  \norm{\x}^4 \Big ]$ and the covariance matrix $\E \Big [ \x  \x^\top \Big ] $ exist. Suppose $\w_*$ is s.t $ \mathbb{E}\Big[ {\bf 1}_{\w_*^\top \x >0 } \x  \x^\top \Big]$ is positive definite. Then if Algorithm \ref{dadushrelu} is run with $\alpha <0$ and  $\eta = \frac{\lambda_{\min} \Big ( \mathbb{E}\Big[ {\bf 1}_{\w_*^\top \x >0 } \x  \x^\top \Big] \Big )}{\vert \alpha \vert \mathbb{E}\Big[
{\bf 1}_{\w_*^\top \x >0}  \cdot \norm{\x}^4 \Big ] }$ starting from starting from $\w_1 \in \R^n$ then for ${\rm T} = O \Big ( \log \frac{\norm{\w_1 - \w_*}^2}{\epsilon^2 \delta}  \Big)$ we would have, 

\[ \mathbb{P} \Big [  \norm{\w_{\rm T} - \w_*}^2   \leq \epsilon^2 \Big ] \geq 1 - \delta \]
\qedsymbol
\end{theorem}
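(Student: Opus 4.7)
The plan is to track the squared distance $\norm{\w_t - \w_*}^2$ and exhibit a geometric contraction in conditional expectation. Write $\e_t := \w_t - \w_*$. Realizability ($y_t = \max\{0,\w_*^\top\x_t\}$) means $y_t > 0$ iff $\w_*^\top \x_t > 0$, and in that case $y_t = \w_*^\top \x_t$. So the update rewrites as
\[
\e_{t+1} \;=\; \e_t - \eta\alpha\,\mathbf{1}_{\w_*^\top \x_t > 0}(\w_*^\top \x_t - \w_t^\top \x_t)\x_t \;=\; \bigl(\I - \eta|\alpha| M_t\bigr)\e_t,
\]
where $M_t := \mathbf{1}_{\w_*^\top \x_t > 0}\, \x_t \x_t^\top$ is a rank-one PSD matrix depending only on the fresh sample $\x_t$. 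The conceptual point is that realizability has collapsed the nonlinearity onto a deterministic function of $\x_t$ alone, making the one-step recursion genuinely linear in $\e_t$.

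Next, taking conditional expectations with $M := \E[M_t] = \E[\mathbf{1}_{\w_*^\top \x > 0}\x\x^\top]$,
\[
\E\!\bigl[\norm{\e_{t+1}}^2 \,\big|\, \e_t\bigr] \;=\; \norm{\e_t}^2 - 2\eta|\alpha|\,\e_t^\top M\e_t + \eta^2\alpha^2\,\e_t^\top \E[M_t^2]\,\e_t.
\]
Since $M_t^2 = \mathbf{1}_{\w_*^\top \x_t > 0}\norm{\x_t}^2\,\x_t\x_t^\top$, Cauchy--Schwarz gives $\e_t^\top \E[M_t^2]\e_t \leq Q\,\norm{\e_t}^2$ with $Q := \E[\mathbf{1}_{\w_*^\top \x > 0}\norm{\x}^4]$ (finite by the fourth-moment hypothesis); while the positive-definiteness hypothesis gives $\e_t^\top M\e_t \geq \lambda\norm{\e_t}^2$ with $\lambda := \lambda_{\min}(M) > 0$. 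Combining,
\[
\E\!\bigl[\norm{\e_{t+1}}^2 \,\big|\, \e_t\bigr] \;\leq\; \bigl(1 - 2\eta|\alpha|\lambda + \eta^2\alpha^2 Q\bigr)\,\norm{\e_t}^2.
\]

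The quadratic in $\eta|\alpha|$ on the right is minimized at $\eta|\alpha| = \lambda/Q$, which is exactly the step size prescribed in the theorem; the resulting contraction factor is $\rho := 1 - \lambda^2/Q < 1$. Iterating via the tower property then gives $\E[\norm{\e_T}^2] \leq \rho^{\,T-1}\norm{\e_1}^2$, and Markov's inequality yields
\[
\P\!\bigl[\norm{\e_T}^2 > \epsilon^2\bigr] \;\leq\; \rho^{\,T-1}\,\norm{\e_1}^2/\epsilon^2,
\]
so $T = O\!\bigl(\log(\norm{\w_1 - \w_*}^2/(\epsilon^2\delta))\bigr)$ forces the right-hand side below $\delta$, matching the claim. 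The only genuinely nontrivial step is the second moment bound on $\E[M_t^2]$ -- i.e.\ the Cauchy--Schwarz control that converts the fourth-moment hypothesis on $\norm{\x}$ into the scalar $Q$ -- since everything else is an essentially mechanical linear-algebra computation followed by Markov. Notably, nothing in this argument requires symmetry, compact support, or any other distributional assumption beyond existence of the relevant moments and nondegeneracy of $M$, which is what drives the ``almost distribution-free'' nature of the result.
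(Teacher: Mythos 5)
Your proof is correct and follows essentially the same route as the paper: bound the one-step conditional decrease of $\norm{\w_t-\w_*}^2$ using $\lambda_{\min}\big(\E[\mathbf{1}_{\w_*^\top\x>0}\x\x^\top]\big)$ for the linear term and Cauchy--Schwarz with the fourth moment for the quadratic term, obtain a geometric contraction at the prescribed step size, then finish with Markov. Your rank-one matrix recursion $\e_{t+1}=(\I-\eta|\alpha|M_t)\e_t$ is a cleaner way to package the identical computation; the paper additionally checks via a Jensen/trace argument that $Q \geq n^2\lambda^2$, which ensures the contraction factor $1-\lambda^2/Q$ actually lies in $(0,1)$ rather than being implicitly assumed.
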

~\\
It's clear that $\norm{\w_{\rm T} - \w_*}^2  \leq \epsilon^2 \implies \E_{\x \sim {\cal D}} \Big [ \Big ( \max \{0,\w_{\rm T}^\top \x\} - \max \{0,\w_{*}^\top \x\} \Big )^2\Big] \leq \epsilon^2 \E \Big [ \norm{\x}^2 \Big ]$ and hence Algorithm \ref{dadushrelu} is in effect approximately solving the risk minimization problem that we set out to solve. Also note that (a) the above convergence hold starting from arbitrary initialization $\w_1$, (b) the proof will establish along the way that the assumptions being made in the theorem are enough to ensure that the choice of $\eta$ above is strictly positive and (c) for ease of interpretation we can just set $\alpha = -1$ in the above and observe how closely the choice of $\g_t$ in Algorithm \ref{dadushrelu} resembles the stochastic gradient that is commonly used and is known to have great empirical success.

\begin{proof}[{\bf Proof of Theorem \ref{thm:dadushrelu}}]
Let the training data sampled till the iterate $t$ be $S_t = \{(x_1,y_1),\ldots,(x_t,y_t)\}$
From the algorithm we know that the weight vector update at $t$-th iteration is $\w_{t+1} = \w_t + \eta \g_t$. 
Thus,
\begin{align}\label{term0}
\nonumber &\norm{\w_{t+1}-\w_*}^2 
= \norm{\w_t-\eta\g_t - \w_*}^2\\
&= \norm{\w_t-\w_*}^2 + \eta^2\norm{\g_t}^2 - 2\eta\langle\w_t-\w_*,\g_t\rangle 
\end{align}

We overload the notation to also denote by $S_t$ the $\sigma-$algebra generated by the random variables $\mathbf{x}_1, ..., \mathbf{x}_t$. Conditioned on $S_{t-1}$, $\w_t$ is determined while $\w_{t+1}$ and $\g_t$ are random and dependent on the random choice of $\x_t$. 

\begin{align}\label{DBothTerms}
\nonumber \E_{(\x_t,y_t)} \Big [ \norm{\w_{t+1} - \w_*}^2 \mid S_{t-1} \Big ] &=   \E_{(\x_t,y_t)} \Big [ \norm{\w_t - \w_*}^2 \mid S_{t-1} \Big ]\\
    \nonumber &+   \underbrace{(-2\alpha \eta) \E_{(\x_t,y_t)} \Big [ \Big \langle \w_t - \w_* ,  {\bf 1}_{y_t >0} \Big (y_t - \w_t^\top\x_t \Big ) \x_t \Big \rangle \mid S_{t-1} \Big ]}_{\text{Term }1}\\
    &+ \underbrace{\eta^2 \E_{(\x_t,y_t)} \Big [ \norm{\g_t}^2 \mid S_{t-1} \Big ]}_{\text{Term } 2}    
\end{align}

Now we simplify the last two terms of the RHS above, starting from the rightmost, 

\begin{align}
\nonumber &\text{Term } 2 = \mathbb{E} \Bigg [ \norm{\eta \g_t}^2 \mid S_{t-1} \Bigg ] 
= \eta^2 \alpha^2 \mathbb{E}\Bigg[ {\bf 1}_{y_t >0} (y_t - {\w_t}^\top \x_t)^2 \cdot \norm{\x_t}^2  \mid S_{t-1} \Bigg ]\\
\nonumber & = \eta^2 \alpha^2  \cdot \mathbb{E}\Bigg[ {\bf 1}_{y_t >0}  \big(\max \{0, \w_*^{\top}\x_t \}  - \w_t^\top \x_t \big)^2 \cdot \norm{\x_t }^2 \mid S_{t-1}\Bigg]\\
\nonumber & \leq \eta^2\alpha^2 \mathbb{E}\Bigg[
{\bf 1}_{y_t >0}  \norm{\w_* -\w_t}^2\cdot \norm{\x_t}^4 \mid S_{t-1}\Bigg] \\
\nonumber & \leq \eta^2\alpha^2   \norm{\w_* -\w_t}^2 \times \mathbb{E}\Bigg [
  {\bf 1}_{\w_*^\top \x_t >0}  \cdot \norm{\x_t}^4 \Bigg ]
\end{align}

Note that in the above step the quantity, $\mathbb{E}\Big[ {\bf 1}_{\w_*^\top \x >0}  \cdot \norm{\x}^4 \Big ]$ is finite and it is easy to see why this is true given that $\forall ~\w_*, \mathbb{E}\Big[ {\bf 1}_{\w_*^\top \x >0}  \cdot \norm{\x}^4 \Big ] \leq \mathbb{E}\Big[  \norm{\x}^4 \Big ]$ and we recall that the quantity in this upperbound has been assumed to be finite in the hypothesis of the theorem.    

Now we simplify {\rm Term} $1$ to get, 

\begin{align}
\nonumber {\rm Term} 1& = -2\eta \alpha \mathbb{E}\Bigg[ {\bf 1}_{y_t >0} \Big (y_t - \w_t^\top \x_t \Big)\cdot (\w_t-\w_*)^\top  \x_t\bigg|S_{t-1}\Bigg]\\
\nonumber & = -2\eta \alpha \mathbb{E}\Bigg[ {\bf 1}_{y_t >0} \Big (\max \{0, \w_*^{\top}\x_t \}  - \w_t^\top \x_t \Big) \times  (\w_t-\w_*)^\top  \x_t \bigg| S_{t-1}\Bigg] \\
\nonumber & \leq  -2\eta \alpha \mathbb{E}\Bigg[ (\w_*  - \w_t  )^\top {\bf 1}_{y_t >0} \x_t  \x_t^\top (\w_t-\w_*)   \bigg| S_{t-1}\Bigg] \\
\nonumber & \leq -2 \eta \vert \alpha \vert \mathbb{E}\Bigg [ 
(\w_t - \w_* )^\top {\bf 1}_{y_t >0} \x_t  \x_t^\top (\w_t-\w_*)   \bigg| S_{t-1}\Bigg ] \\
& \leq -2 \eta \vert \alpha \vert \lambda_{\min} \Big ( \mathbb{E}\Bigg [ {\bf 1}_{\w_*^\top \x_t >0 >0} \x_t  \x_t^\top    \Bigg ] \Big ) \norm{\w_t - \w^*}^2 
\end{align} 

In the above step we invoked that the quantity $\mathbb{E}\Bigg [ {\bf 1}_{\w_*^\top \x_t >0 >0} \x_t  \x_t^\top  \Bigg ]$ exists and its easy to see why this is true given that $\forall ~\w_*, \mathbb{E}\Bigg [ {\bf 1}_{\w_*^\top \x >0 } \x  \x^\top  \Bigg ] \leq \mathbb{E}\Bigg [ \x  \x^\top  \Bigg ]$ and we recall that the covariance occurring of the distribution has been assumed to be finite in the hypothesis of the theorem.


We can combine both the upper bounds obtained above into the RHS of equation \eqref{DBothTerms} to get,  

\begin{align}
\nonumber &\E_{(\x_t,y_t)} \Big [ \norm{\w_{t+1} - \w_*}^2 \mid S_{t-1} \Big ]\\
&\leq   \Bigg (  1 -2 \eta \vert \alpha \vert \times \lambda_{\min} \Big ( \mathbb{E}\Big[ {\bf 1}_{\w_*^\top \x_t >0 } \x_t  \x_t^\top \Big] \Big ) + \eta^2\alpha^2  \times \mathbb{E}\Big[
  {\bf 1}_{\w_*^\top \x_t >0}  \cdot \norm{\x_t}^4  \Big ] \Bigg ) \norm{\w_t - \w_*}^2
\end{align} 

We note that the two expectations on the RHS are properties of the distribution of the data $\x$ i.e ${\cal D}$ and $\w_*$ and we make the notation explicitly reflect that. $\x_t$ is a random variable that is independent of $\w_t$ since $\x_t$ is independent of $\x_1,\ldots,\x_{t-1}$. Hence by taking total expectation of the above we have,

\begin{align} \label{eq:expConv}
\E \Big [ \norm{\w_{t+1} - \w_*}^2 \Big ] \leq  \Bigg (  1 -2 \eta \vert \alpha \vert \lambda_{\min} \Big ( \mathbb{E}\Big[ {\bf 1}_{\w_*^\top \x >0 } \x \x^\top \Big] \Big ) + \eta^2\alpha^2  \times \mathbb{E}\Big[
  {\bf 1}_{\w_*^\top \x >0}  \cdot \norm{\x}^4 \Big ] \Bigg ) \E \Big [  \norm{\w_t - \w_*}^2 \Big ] 
  \end{align} 

Now we see that for $X_t := \E \Big [  \norm{\w_t - \w_*}^2 \Big ]$ the above is a recursion of the form given in Lemma \ref{recurse} with $c_2 =0$, $C = \norm{\w_1 - \w_*}^2$, $\eta'=\eta \vert \alpha \vert$, $b = 2\lambda_{\min} \Big ( \mathbb{E}\Big[ {\bf 1}_{\w_*^\top \x >0 } \x  \x^\top \Big] \Big )$ and $c_1 = \mathbb{E}\Big[
{\bf 1}_{\w_*^\top \x >0}  \cdot \norm{\x}^4 \Big ]$  

Now we note the following inequality,  

\begin{align*}
    \mathbb{E}\Big[ {\bf 1}_{\w_*^\top \x >0}  \cdot \norm{\x}^4 \Big ] &=  \mathbb{E}\Big[ {\bf 1}_{\w_*^\top \x >0}  \cdot (\x^\top \x)^2  \Big ] = \mathbb{E}\Big[  \Big (( {\bf 1}_{\w_*^\top \x >0}^{\frac 1 4}\x)^\top ( {\bf 1}_{\w_*^\top \x >0}^{\frac 1 4}\x) \Big )^2  \Big ]\\
    &=  \mathbb{E}\Big[ \Big ( \Tr \Big ( ({\bf 1}_{\w_*^\top \x >0}^{\frac 1 4}\x)^\top ( {\bf 1}_{\w_*^\top \x >0}^{\frac 1 4}\x) \Big ) \Big )^2 \Big ]\\
    &= \mathbb{E}\Big[  \Big ( \Tr \Big ( {\bf 1}_{\w_*^\top \x >0}^{\frac 1 2}\x \x^\top  \Big ) \Big )^2 \Big ]\
\end{align*}

We note that the function $\R^{n \times n} \ni {\bf Y} \mapsto \Tr^2({\bf Y}) \in \R $ is convex and hence by Jensen's inequality we have,  

\begin{align*}
    \mathbb{E}\Big[ {\bf 1}_{\w_*^\top \x >0}  \cdot \norm{\x}^4 \Big ]  \geq \Tr^2(\mathbb{E} \Big[ {\bf 1}_{\w_*^\top \x >0}^{\frac 1 2}\x \x^\top \Big ] ) = \Big ( \sum_{i=1}^n \lambda_i \Big ( \mathbb{E}\Big[ {\bf 1}_{\w_*^\top \x >0 } \x  \x^\top \Big ] \Big )  \Big )^2 \geq n^2 \lambda_{\min}^2 \Big ( \mathbb{E}\Big[ {\bf 1}_{\w_*^\top \x >0 } \x  \x^\top \Big] \Big )
\end{align*}

In the above $\lambda_i$ indicates the $i^{th}$ largest eigenvalue of the PSD matrix in its argument. And in particular the above inequality implies that $c_1 > \frac{b^2}{4}$. Now we recall that the assumptions in the theorem which ensure that $b >0$ and hence now we have $\frac{b}{c_1} >0$ and hence the step-length prescribed in the theorem statement is strictly positive. 

 Thus by invoking the first case of Lemma \ref{recurse} we have that  for $\eta' = \eta \vert \alpha \vert = \frac{b}{2c_1}$ we have $\forall \epsilon' >0$, $ \E \Big [  \norm{\w_{\rm T} - \w_*}^2 \Big ] \leq \epsilon'^2 $ for ${\rm T} = O \Big ( \log \frac{\norm{\w_1 - \w_*}^2}{\epsilon'^2}  \Big)$

Thus given a $\epsilon >0, \delta \in (0,1)$ we choose $\epsilon'^2 = \epsilon^2 \delta$ and then by Markov inequality we have what we set out to prove, 
\[ \mathbb{P} \Big [  \norm{\w_{\rm T} - \w_*}^2 \Big ] \leq \epsilon^2 \Big ] \geq 1 - \delta \]
\end{proof}

\newpage 
\section{Dynamics of noise assisted gradient descent on a single {\rm ReLU} gate}\label{sec:GDNoiseReLU}

As noted earlier it remains a significant challenge to prove the convergence of SGD or GD for a $\relu$ gate except for Gaussian data distributions. Towards this open question, we draw inspiration from ideas in \cite{lee2019online} and we focus on analyzing a noise assisted version of gradient dynamics on a ReLU gate in the realizable case as given in Algorithm \ref{GLDReLUParam}. In this setting we will see that we have some non-trivial control on the behaviour of the iterates despite making no distributional assumptions about the training data beyond realizability.  

\begin{algorithm}[h!]
\caption{Noise Assisted Gradient Dynamics on a single ReLU gate (realizable data)}
\label{GLDReLUParam}
\begin{algorithmic}[1]
\State {\bf Input:} We assume being given a step-length sequence $\{ \eta_t\}_{t=1,2,\ldots}$ 
and $\{ (\x_i,y_i) \}_{i=1,\ldots,S}$ tuples where $y_i = f_{\w_*}(\x_i)$ for some $\w_* \in \R^n$ where $f_\w$ is s.t \[ \R^n \ni \x \mapsto  f_\w (\x)= \textrm{ReLU}(\w^\top \x) = \max \{0, \w^\top \x \} \in \R \]
\State Start at $\w_0$ 
\For {$t = 0,\ldots$}
    \State Choice of Sub-Gradient $:= \g_t = -\frac{1}{S} \sum_{i = 1}^S \mathbf{1}_{\w_t^\top \x_i \geq 0} \Big (y_i - f_{\w_t}(\x_i) \Big )  \x_i$
    \State $\w_{t+1} := \w_t - \eta_t (\g_t + \xi_{t, 1}) +  \sqrt{\eta_t} \xi_{t, 2}$\Comment {\tt  $\xi_{t, 1}$ is $0$ mean bounded random variable}
    \State \Comment {\tt  $\xi_{t, 2}$ is a $0$ mean random variable s.t $ \E \Big [ \norm{\xi_{t,2}}^2 \Big ] < n$}
\EndFor 
\end{algorithmic}
\end{algorithm}

Note that in the above algorithm the indicator functions occurring in the definition of $\g_t$ are for the condition $\w_t^\top \x_i \geq 0$ for the $i^{th}{-}$data point. Whereas for the $\g_t$ used in Algorithm \ref{dadushrelu} in the previous section the indicator was for the condition $y_t > 0$ and hence dependent on $\w_*$ rather than $\w_t$. 

\begin{theorem}\label{confine:GLDReLUParam}
 We analyze Algorithm \ref{GLDReLUParam} with constant step length $\eta_t = \eta$ 
 Let $C := \max_{i=1,\ldots,S} \norm{\x_i}$, $S_1 >0$ be s.t $\forall t=1,\ldots, \norm{\xi_{t,1}} \leq S_1$ and  $\{\xi_{t,2}\}_{t=1,\ldots}$ be mean $0$, i.i.d as say $\xi_2$ s.t $\E \Big [ \norm{\xi_{t,2}}^2 \Big ] < n, ~\forall t = 1,\ldots$ 

Then for any $i_{\max} \in \Z^+$, $\lambda >0$, $C_L \in (0, \sqrt{n}), ~0 < \eta < \frac{1}{C^2 \sqrt{2i_{\max}}}$ and $r_*^2 \geq \lambda + \norm{\w_0 - \w_*}^2 + i_{\max}\Big \{ 2\eta^2 (C^4 r_*^2 + S_1^2) + \eta n   \Big \}$ we have, 

\begin{align}
    \nonumber &\mathbb{P} \Bigg [  \exists i \in \{ 1,\ldots,i_{\max} \} \mid \norm{\w_i - \w_*} > r_* \Bigg ]\\
    \nonumber &\leq  i_{\max} \Bigg (  \mathbb{P} \Big [ \norm{\xi_2} > C_L \Big ] +  \exp \Bigg \{ - \frac{\lambda^2}{2i_{\max}} \times \frac{1}{2 \sqrt{\eta} C_L \Big ( r_* + \eta (C^2r_* + S_1 ) \Big ) + \eta \Big ( 2S_1r_* + n + 2C^2r_*^2 +  2\eta ( C^4r_*^2 + S_1^2 ) \Big )}  \Bigg \} \Bigg ) 
\end{align}

\end{theorem}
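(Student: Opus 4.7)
The plan is to exhibit $X_t := \norm{\w_t - \w_*}^2$ as a discrete-time process whose upward drift is controlled pathwise, and then apply a martingale concentration inequality combined with a union bound over time. The crucial non-smooth analytic input is a \emph{subgradient monotonicity} identity specific to ReLU in the realizable setting, which plays the role that strong convexity would play for a smooth loss.

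The first step I would carry out is the identity $\langle \w_t - \w_*, \g_t\rangle \geq 0$. Writing $\g_t = \frac{1}{S}\sum_i \mathbf{1}_{\w_t^\top \x_i \geq 0}(\relu(\w_t^\top \x_i) - \relu(\w_*^\top \x_i))\x_i$, this follows by a pointwise case analysis on the sign of $\w_*^\top \x_i$: when $\w_*^\top \x_i \geq 0$ the $i$-th summand contributes $((\w_t-\w_*)^\top \x_i)^2$, and when $\w_*^\top \x_i < 0$ it contributes $(\w_t^\top \x_i)((\w_t-\w_*)^\top \x_i)$, both terms being nonnegative. Along the way the $1$-Lipschitzness of ReLU and $\norm{\x_i}\leq C$ give $\norm{\g_t} \leq C^2\norm{\w_t - \w_*}$. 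Next, expanding $\w_{t+1}-\w_* = \w_t-\w_* - \eta(\g_t + \xi_{t,1}) + \sqrt{\eta}\xi_{t,2}$ and taking conditional expectation given $\mathcal{F}_t$, independence and zero-mean of $\xi_{t,1}, \xi_{t,2}$ eliminate the linear-in-noise terms, producing $\E[X_{t+1}-X_t\mid \mathcal{F}_t] \leq -2\eta\langle \w_t-\w_*,\g_t\rangle + \eta^2(\norm{\g_t}^2 + \E\norm{\xi_{t,1}}^2) + \eta\E\norm{\xi_{t,2}}^2$, which is nonpositive up to an $O(\eta)$ correction on $\{X_t \leq r_*^2\}$.

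Next I would introduce the good-noise event $G := \bigcap_{t=0}^{i_{\max}-1}\{\norm{\xi_{t,2}} \leq C_L\}$, with $\mathbb{P}[G^c] \leq i_{\max}\mathbb{P}[\norm{\xi_2} > C_L]$ by union bound, and the stopping time $\tau := \inf\{t : X_t > r_*^2\} \wedge i_{\max}$. On $G \cap \{t < \tau\}$, bounding each of the six pieces of the expansion of $X_{t+1}-X_t$ using $\norm{\w_t-\w_*}\leq r_*$, $\norm{\g_t}\leq C^2 r_*$, $\norm{\xi_{t,1}}\leq S_1$, and $\norm{\xi_{t,2}}\leq C_L$ produces $|X_{t+1}-X_t| \leq B$ with $B$ equal to the denominator appearing inside the exponential in the statement, namely $2\sqrt{\eta}\,C_L(r_* + \eta(C^2 r_* + S_1)) + \eta(2S_1 r_* + n + 2C^2 r_*^2 + 2\eta(C^4 r_*^2 + S_1^2))$.

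Finally I would couple the sequence with its natural martingale. Writing $X_{t\wedge\tau} - X_0 = \sum_{s < t\wedge\tau} D_s + \sum_{s < t\wedge\tau}(X_{s+1}-X_s - D_s)$ with $D_s := \E[X_{s+1}-X_s\mid \mathcal{F}_s]$, the first piece is bounded by $i_{\max}[2\eta^2(C^4 r_*^2 + S_1^2) + \eta n]$ (the factor of $2$ absorbed from a $\norm{\g_t+\xi_{t,1}}^2 \leq 2(\norm{\g_t}^2 + \norm{\xi_{t,1}}^2)$ bookkeeping), and the second piece is a stopped martingale whose increments are bounded by $B$ on $G$. For a fixed index $i \leq i_{\max}$, the hypothesis on $r_*^2$ forces $\{X_i > r_*^2\}\cap G$ to imply a deviation of the martingale piece by at least $\lambda$, which an Azuma--Hoeffding-type inequality controls by $\exp(-\lambda^2/(2i_{\max} B))$. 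A union bound over $i = 1,\ldots,i_{\max}$ together with $\mathbb{P}[G^c]$ then yields the claimed two-term bound. The main obstacle I expect is not the monotonicity of Step 1 (which is essentially one page of case analysis) but ensuring that the stopping-time truncation is patched with the martingale concentration so that the constant $B$ appears exactly as stated without incurring additional slack from naively conditioning on $G$, since conditioning can bias the distribution of $\xi_{t,2}$ away from being mean zero; working with the stopped martingale rather than the conditioned measure is the cleanest way to avoid this pitfall.
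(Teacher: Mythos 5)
Your outline matches the paper's skeleton in three of its four pillars: the pointwise monotonicity $\langle \w_t - \w_*, \g_t\rangle \geq 0$ (the paper's Lemma~\ref{corr}), the drift bound from conditional expectation, and the combination of an Azuma--Hoeffding concentration step with a union bound over time. You also correctly identify the dangerous step — conditioning on the good-noise event $G$ biases $\xi_{t,2}$ away from mean zero. However, the fix you propose does not close the gap. Azuma--Hoeffding requires the martingale increments to be bounded \emph{almost surely}, and your stopping time $\tau := \inf\{t : X_t > r_*^2\}\wedge i_{\max}$ is determined by the iterate position only; on $\{t < \tau\}$ the noise $\xi_{t,2}$ can still be arbitrarily large, so the stopped increment $X_{(t+1)\wedge\tau} - X_{t\wedge\tau}$ is not bounded by your constant $B$. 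You also cannot repair this by adding $\{\|\xi_{t,2}\| > C_L\}$ to the stopping rule, because that event is only observable after the step has been taken: the last increment before stopping may already have used the large noise. So the decomposition ``drift plus stopped martingale'' by itself does not produce the claimed $\exp(-\lambda^2/(2 i_{\max} B))$ term.

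What the paper does, and what your proposal is missing, is to \emph{modify the process itself} rather than only modify the event or the stopping rule. It radially clips the noise, $\xi'_{t,2} := \min\{C_L,\|\xi_{t,2}\|\}\,\xi_{t,2}/\|\xi_{t,2}\|$, and constructs a frozen auxiliary process $\w'_t$ that always uses the clipped noise. The centered quantity $z_t = \|\w'_t - \w_*\|^2 - t\cdot\{2\eta^2(C^4 r_*^2 + S_1^2) + \eta n\}$ then has increments \emph{unconditionally} bounded by the stated constant, and the hypothesis $C_L^2 < n$ is used precisely so that the $\eta(\|\xi'_{t,2}\|^2 - n)$ term gives a nonpositive conditional contribution, preserving the supermartingale property after clipping. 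Azuma--Hoeffding is applied to this auxiliary process, and the original process coincides with it on $G$, which is what makes the two-term bound $i_{\max}(\mathbb{P}[\|\xi_2\| > C_L] + \exp\{\cdots\})$ legitimate. Incorporating the noise clipping as a genuine change of process — not as a conditioning or a position-based stopping — is the missing ingredient needed to turn your outline into a correct proof.
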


\remark 
Thus for $\eta$ small enough and if $\mathbb{P} \Big [ \norm{\xi_2} > C_L \Big ]$ is small then with significant probability the noise assisted gradient dynamics on a single $\relu$ gate in its first $i_{\max}$ steps remains confined inside a ball around the true parameter of radius, 

\[ r_* \geq \sqrt { \frac{\lambda + \norm{\w_0 - \w_*}^2 + i_{\max}(\eta d + 2\eta^2 S_1^2)}{1 - 2i_{\max} \eta^2C^4} } \]

Larger the $\lambda >0$ we choose greater the (exponential) suppression in the probability that we get of finding the iterates outside the ball of radius $r_*$ around the origin whereby $r_*$ scales as $\sqrt{\lambda}$.

\bigskip 
Also we note that for the above two natural choices of the distribution for $\{ \xi_{t,2}, t=1,\ldots \}$ are (a)  $\{ \xi_{t,2} = 0, t=1,\ldots \}$   and  (b) $\{ (\xi_{t,2})_i  \sim {\cal N}(0,\sigma_i), i = 1, \ldots,n, t=1,\ldots \}$ where the $\{ \sigma_i, i = 1, \ldots,n \}$ can be chosen as follows : corresponding to this choice of distributions we invoke Equation $3.5$ from \cite{ledoux2013probability} to note that $\mathbb{P} \Big [ \norm{\xi_2} > C_L \Big ] \leq 4 e^{-\frac{C_L^2}{8 \times \E \Big [ \norm{\xi_2}^2 \Big ]}}$. Thus for the guarantee in the theorem to be non-trivial we need, $e^{-\frac{C_L^2}{8 \times \E \Big [ \norm{\xi_2}^2 \Big ]}} < \frac{1}{4i_{\max}}$. Now note that $\E \Big [ \norm{\xi_2}^2 \Big ] = \sum_{i=1}^n \sigma_i^2$ and hence the above condition puts a smallness constraint on the variances of the Gaussian noise depending on how large an $i_{\max}$ we want, $\sum_{i=1}^n \sigma_i^2 < \frac{C_L^2}{8 \log (4i_{\max})}$





\begin{proof}[{\bf Proof of Theorem \ref{confine:GLDReLUParam}}]  
~\\
For convenience we will use the notation, $\tilde{\g}_t := \g_t + \xi_{t,1}$. Suppose that at the $t^{th}$ iterate we have that $\norm{\w_t - \w_*}\leq r_*$. Given this we will get an upperbound on how far can $\w_{t+1}$ be from $\w_*$. Towards this we observe that, 
\begin{align}
    \nonumber \|\w_{t + 1} - \w_*\|^2 
    &= \|\w_t - \eta_t \tilde \g_t + \sqrt{\eta_t} \xi_{t, 2} - \w_*\|^2\\
    &= \|\w_t - \w_*\|^2 + \|-\eta_t \tilde \g_t + \sqrt{\eta_t} \xi_{t, 2}\|^2 + 2 \langle \w_t - \w_*, -\eta_t \tilde \g_t + \sqrt{\eta_t} \xi_{t, 2}\rangle \label{combine1}
 \end{align}

Expanding the second term above as, $\|-\eta_t \tilde \g_t + \sqrt{\eta_t} \xi_{t, 2}\|^2 
    = \eta^2_t \|\tilde \g_t\|^2 + \eta_t \|\xi_{t, 2}\|^2 - 2 \eta_t^{3/2} \langle \tilde \g_t, \xi_{t, 2} \rangle$
    
and combining into \ref{combine1}, we have, 

\begin{align}\label{stepdecrease1}
\nonumber  &\|\w_{t + 1} - \w_*\|^2 - \|\w_t - \w_*\|^2 \\
\nonumber &= \langle \xi_{t, 2}, -2 \eta_t^{3/2} \tilde \g_t + 2 \sqrt{\eta_t} \w_t - 2 \sqrt{\eta_t} \w_* \rangle + \eta_t \|\xi_{t, 2}\|^2 + \eta_t^2 \| \tilde \g_t \|^2 - 2 \eta_t \langle \w_t - \w_*, \tilde \g_t \rangle\\
\nonumber &= \langle \xi_{t, 2}, -2 \eta_t^{3/2} \tilde \g_t + 2 \sqrt{\eta_t} \w_t - 2 \sqrt{\eta_t} \w_* \rangle + \eta_t \|\xi_{t, 2}\|^2 + \eta_t^2 \| \tilde \g_t \|^2 - 2 \eta_t \langle \w_t - \w_*, \xi_{t, 1} \rangle\\
\nonumber &\quad - 2 \eta_t \langle \w_t - \w_*, \g_t \rangle\\
 &\leq -2\eta_t \langle \w_t - \w_* , \xi_{t,1} \rangle + \eta_t^2 \norm{\tilde \g_t}^2 + 2 \sqrt{\eta_t} \langle \xi_{t,2}, -\eta_t \tilde \g_t + \w_t - \w_* \rangle + \eta_t \norm{\xi_{t,2}}^2 
\end{align}

In the last line we have used the Lemma \ref{corr} which shows this critical fact that $\langle \w_t - \w_*, \g_t \rangle \geq 0$. 

Now we use the definition $\tilde \g_t = \g_t + \xi_{t, 1}$ on the $2^{nd}$ term in the RHS of equation \ref{stepdecrease1} to get, 

\begin{align}\label{stepdecrease2} 
\nonumber &\|\w_{t + 1} - \w_*\|^2 - \|\w_t - \w_*\|^2\\
\nonumber &\leq -2\eta_t \langle \w_t - \w_* , \xi_{t,1} \rangle + 2\eta_t^2 (\norm{\g_t}^2 + \norm{\xi_{t,1}}^2) + 2 \sqrt{\eta_t} \langle \xi_{t,2}, -\eta_t \tilde{\g_t} + \w_t - \w_* \rangle + \eta_t \norm{\xi_{t,2}}^2\\
\nonumber & \leq -2\eta_t \langle \w_t - \w_* , \xi_{t,1} \rangle + 2\eta_t^2 (\norm{\g_t}^2 + S_1 ^2) -  2 \sqrt{\eta_t} \langle \xi_{t,2} , \eta_t \tilde{\g}_t \rangle + 2 \sqrt{\eta_t} \langle \xi_{t,2}, \w_t - \w_* \rangle + \eta_t \norm{\xi_{t,2}}^2\\ 
&\leq 2\eta_t^2 (\norm{\g_t}^2 + S_1 ^2) + \eta_t n  + \Big [  -2\eta_t \langle \w_t - \w_* , \xi_{t,1} \rangle - 2 \sqrt{\eta_t} \langle \xi_{t,2} , \eta_t \tilde{\g}_t \rangle + 2 \sqrt{\eta_t} \langle \xi_{t,2}, \w_t - \w_* \rangle + \eta_t (\norm{\xi_{t,2}}^2 - n) \Big ] 
\end{align}


Now we will get a finite bound on $\norm{\g_t}^2$ by invoking the definition of $r_*$ and $C$ as follows,

\begin{align}\label{gradnorm}
\g_t = \frac{1}{S} \sum_{i = 1}^S \Big (y_i - {\rm ReLU}(\w_t^\top \x_i) \Big ) \mathbf{1}_{\w_t^\top \x_i \geq 0} (-\x_i)
\implies \norm{\g_t} \leq \frac{1}{S} \times C \sum_{i=1}^S \vert (\w_* - \w_t) ^\top \x_i \vert \leq C^2r_*   
\end{align}

Substituting this back into equation \ref{stepdecrease2} we have, 

\begin{align}\label{stepdecrease3}
\nonumber &\|\w_{t + 1} - \w_*\|^2 - \|\w_t - \w_*\|^2\\
\nonumber &\leq \Big \{ 2\eta_t^2 (C^4 r_*^2 + S_1^2) + \eta_t n   \Big \}\\ 
&+ \Big [  -2\eta_t \langle \w_t - \w_* , \xi_{t,1} \rangle - 2 \sqrt{\eta_t} \langle \xi_{t,2} , \eta_t \tilde{\g}_t \rangle + 2 \sqrt{\eta_t} \langle \xi_{t,2}, \w_t - \w_* \rangle + \eta_t (\norm{\xi_{t,2}}^2 - n) \Big ] 
\end{align}

We define $\w'_0 = \w_0$ and $\xi_{t,2}' = \min \Big \{ C_L, \norm{\xi_{t,2}} \Big \} \frac{\xi_{t,2}}{\norm{\xi_{t,2}}}$ and $C_L \in (0,\sqrt{n})$. 

Now we define a delayed stochastic process associated to the given algorithm, 

\[ \w'_{t+1} = \w'_t {\mathbf 1}_{\norm{\w'_t - \w_*} \geq r_*} + \Big (\w'_t - \eta_t \tilde{\g}_t + \sqrt{\eta_t}\xi_{t,2}' \Big ){\mathbf 1}_{\norm{\w'_t - \w_*} < r_*}  \]

In the above we note that whenever the primed iterate steps out of the $r_*$ ball it is made to stop. Associated to the above we define another stochastic process as follows, 

\begin{align}\label{def:z} 
z_t := \norm{\w_t' - \w_*}^2 - t\Big \{ 2\eta_t^2 (C^4 r_*^2 + S_1^2) + \eta_t n   \Big \}
\end{align} 

In Lemma \ref{bounded} we prove the crucial property that for $\eta_t = \eta >0$ a constant, the stochastic process $\{z_t\}_{t=0,1,\ldots}$ is a bounded difference process i.e $\vert z_{t+1} - z_t \vert \leq k$ for all $t=1,\ldots$ and 

\[ k = 2 \sqrt{\eta} C_L \Big ( r_* + \eta (C^2r_* + S_1 ) \Big ) + \eta \Big ( 2S_1r_* + n + 2C^2r_*^2 +  2\eta ( C^4r_*^2 + S_1^2 ) \Big ) \] 

Now note that $z_0$ is a constant since $\w_0$ is so. The proof of Lemma \ref{bounded} splits the analysis into two cases which we revisit again : in {\bf Case $1$} in there we have $z_{t+1} - z_t <0$ for $\eta_t = \eta >0$. And in {\bf Case $2$} therein  we take a conditional expectation of the RHS of equation \ref{diff01} w.r.t the sigma-algebra ${\cal F}_t$ generated by $\{ z_0,\ldots,z_t\}$. Then the first two terms will go to $0$ and the last term will give a negative contribution since $\norm{\xi'_{t,2}} \leq C_L$ and $C_L^2 < n$ by definition.
~\\ \\
Thus the stochastic process $z_0,\ldots$ satisfies the conditions of the concentration of measure Theorem \ref{AHSM} and thus we get that for any $\lambda >0$ and $t >0$ and $k$ as defined above, 

\[ \mathbb{P} \Big [ z_t - z_0 \geq \lambda \Big ] \leq e^{-\frac{\lambda^2}{2tk}}  \]

And explicitly the above is equivalent to,

\begin{align}\label{basic_prob}
    \nonumber &\mathbb{P} \Big [ \norm{\w_t' - \w_*}^2 - t\Big \{ 2\eta^2 (C^4 r_*^2 + S_1^2) + \eta n   \Big \} - \norm{\w_0 - \w_*}^2 \geq \lambda \Big ] \\ 
    &\leq \exp \Bigg \{ - \frac{\lambda^2}{2t} \times \frac{1}{2 \sqrt{\eta} C_L \Big ( r_* + \eta (C^2r_* + S_1 ) \Big ) + \eta \Big ( 2S_1r_* + n + 2C^2r_*^2 +  2\eta ( C^4r_*^2 + S_1^2 ) \Big )}  \Bigg \} 
\end{align}

The definition of $r_*$ given in the theorem statement is that it satisfies, $r_*^2 \geq \lambda + \norm{\w_0 - \w_*}^2 + t\Big \{ 2\eta^2 (C^4 r_*^2 + S_1^2) + \eta n   \Big \}$. 
Then the following is implied by equation \ref{basic_prob},

\begin{align}\label{escape1}
    \nonumber &\mathbb{P} \Big [ \norm{\w_t' - \w_*}^2 \geq r_*^2 \Big ] \\
    &\leq \exp \Bigg \{ - \frac{\lambda^2}{2t} \times \frac{1}{2 \sqrt{\eta} C_L \Big ( r_* + \eta (C^2r_* + S_1 ) \Big ) + \eta \Big ( 2S_1r_* + n + 2C^2r_*^2 +  2\eta ( C^4r_*^2 + S_1^2 ) \Big )}  \Bigg \} 
\end{align}

For the given positive integer $i_{\max}$ consider the event, 
\begin{align}\label{def:E}
E := \Big \{ \exists i \in \{ 1,\ldots,i_{\max} \} \mid \norm{\w_i - \w_*} > r_* \Big \}
\end{align}


Define the event $E_t := \{ \norm{\xi_{t,2}} > C_L \}$. Thus, if $E_t$ never happens, then the primed and the unprimed sequences both evolve the same unless  $\w'_t$ leaves the $r_*$ ball around $\w_*$ i.e., $\w'_t$ and $\w_t$ both leave the $r_*$ ball around $\w_*$.

The sample space can be written as a disjoint union of events $A := \cup_{t=1}^{i_{\max}} E_t$ and $B := \cap_{t=1}^{i_{\max}} E_t^c$. 

Let $L_t$ be the event that $t$ is the first time instant when $\w_t$ leaves the ball. Let $L'_t$ be the event that $t$ is the first time instant when $\w'_t$ leaves the ball. And we have argued above that when $B$ happens the two sequences evolve the same which in turn implies $B \cap L_t = B \cap L'_t$.  Thus we have, $\mathbb{P} \Big [ L_t \Big ] =  \mathbb{P} \Big [ L_t \cap A \Big ] +  \mathbb{P} \Big [ L_t \cap B \Big ] = \mathbb{P} \Big [ L_t \cap A \Big ] +  \mathbb{P} \Big [ L'_t \cap B \Big ]$

And combining this with $E$ defined in \ref{def:E} we have, 
\[ \mathbb{P} \Big [  E  \Big ] = \sum_{t=1}^{i_{\max}} \mathbb{P} \Big [ L_t \Big ] = \sum_{t=1}^{i_{\max}} \left ( \mathbb{P} \Big [ L_t \cap A \Big ] +  \mathbb{P} \Big [ L'_t \cap B \Big ] \right ) \leq \mathbb{P} \Big [ A \Big ] +  \sum_{t=1}^{i_{\max}} \mathbb{P} \Big [ L'_t \Big ] \leq \sum_{t=1}^{i_{\max}} \Big ( \mathbb{P} \Big [ E_t \Big ] + \mathbb{P} \Big [ L'_t \Big ] \Big ) \] 

The first equality and the first inequality above are true because $L_t$ are disjoint events.  

We further note that, $\mathbb{P} \Big [ L'_t \Big ] \leq \mathbb{P} \Big [ \norm{\w'_t - \w_*} > r_* \Big ]$

Hence combining the above two inequalities we have,  

\[ \mathbb{P} \Bigg [  \exists i \in \{ 1,\ldots,i_{\max} \} \mid \norm{\w_i - \w_*} > r_* \Bigg ] \leq \sum_{t=1}^{i_{\max}} \Big ( \mathbb{P} \Big [ \norm{\xi_{t,2}} > C_L \Big ] +  \mathbb{P} \Big [ \norm{\w'_t - \w_*} > r_* \Big ] \Big ) \] 

We invoke (a) the definition of the random variable $\xi_2$ and (b) equation \ref{escape1} on each of the summands in the RHS above and we can infer that, 

\begin{align}
    \nonumber &\mathbb{P} \Bigg [  \exists i \in \{ 1,\ldots,i_{\max} \} \mid \norm{\w_i - \w_*} > r_* \Bigg ]\\
    \nonumber &\leq  i_{\max} \Bigg (  \mathbb{P} \Big [ \norm{\xi_2} > C_L \Big ] +  \exp \Bigg \{ - \frac{\lambda^2}{2i_{\max}} \times \frac{1}{2 \sqrt{\eta} C_L \Big ( r_* + \eta (C^2r_* + S_1 ) \Big ) + \eta \Big ( 2S_1r_* + n + 2C^2r_*^2 +  2\eta ( C^4r_*^2 + S_1^2 ) \Big )}  \Bigg \} \Bigg ) 
\end{align}

This proves the theorem we wanted.
\end{proof} 

\newpage 
\begin{lemma}\label{corr}
$\langle \w_t - \w_*, \g_t \rangle \geq 0$
\end{lemma}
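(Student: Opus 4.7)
The plan is to expand the inner product using the definitions and show that, term by term in the sum defining $\g_t$, each contribution is non-negative. Concretely, since $y_i = \max\{0,\w_*^\top \x_i\}$, we write
\begin{equation*}
\langle \w_t - \w_*, \g_t \rangle \;=\; -\frac{1}{S}\sum_{i=1}^S \mathbf{1}_{\w_t^\top \x_i \geq 0}\,\bigl(\max\{0,\w_*^\top\x_i\} - \max\{0,\w_t^\top\x_i\}\bigr)\,(\w_t-\w_*)^\top \x_i.
\end{equation*}
The indicator immediately restricts attention to those indices $i$ for which $\w_t^\top \x_i \geq 0$, and on those indices $\max\{0,\w_t^\top\x_i\} = \w_t^\top\x_i$. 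So the proof reduces to showing that on the active set $\{i : \w_t^\top \x_i \geq 0\}$, each summand
\begin{equation*}
-\bigl(\max\{0,\w_*^\top \x_i\} - \w_t^\top\x_i\bigr)\,(\w_t - \w_*)^\top \x_i \;\geq\; 0.
\end{equation*}

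I would proceed by a two-case split on the sign of $\w_*^\top \x_i$ for each such active index. In the first case, $\w_*^\top \x_i \geq 0$, so $\max\{0,\w_*^\top \x_i\} = \w_*^\top \x_i$ and the summand collapses to $\bigl((\w_t - \w_*)^\top \x_i\bigr)^2 \geq 0$. In the second case, $\w_*^\top \x_i < 0$, so $\max\{0,\w_*^\top \x_i\} = 0$ and the summand becomes $(\w_t^\top \x_i)\bigl((\w_t - \w_*)^\top \x_i\bigr)$; the first factor is $\geq 0$ by the active-set condition, and the second factor equals $\w_t^\top\x_i - \w_*^\top\x_i \geq 0 - (\text{negative}) > 0$, so the product is non-negative. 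Summing non-negative contributions over the active set (and getting zero from the inactive indices via the indicator) yields $\langle \w_t - \w_*, \g_t \rangle \geq 0$.

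There is essentially no obstacle here; this is a short structural computation that exploits (i) realizability, (ii) the fact that the indicator in $\g_t$ is oriented to $\w_t$ (which lets us replace $\max\{0,\w_t^\top \x_i\}$ by the linear expression $\w_t^\top \x_i$ on exactly the set where the indicator is active), and (iii) the elementary inequality that whenever $\w_t^\top \x_i \geq 0 > \w_*^\top \x_i$, the shift $(\w_t - \w_*)^\top \x_i$ is a sum of two non-negative numbers. The only mild subtlety is the asymmetric role of $\w_t$ and $\w_*$ in the indicator, which is why the argument is pointwise in $i$ rather than coming from a global convexity fact; indeed the squared $\relu$ loss is not convex in $\w$, so one cannot simply invoke monotonicity of a subgradient.
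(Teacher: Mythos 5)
Your proof is correct. Both you and the paper begin by expanding $\langle \w_t - \w_*, \g_t \rangle$ into a sum over data indices and reducing to showing each term is nonnegative, but the closing argument differs. You first use the indicator to collapse $\max\{0,\w_t^\top \x_i\}$ into $\w_t^\top\x_i$ on the active set, then split into two cases on the sign of $\w_*^\top \x_i$ and check nonnegativity directly. The paper instead keeps both $\relu$ terms symbolic and invokes, in one stroke, the monotonicity-plus-$1$-Lipschitz inequality $(a-b)\bigl(\relu(a) - \relu(b)\bigr) \geq \bigl(\relu(a)-\relu(b)\bigr)^2$ for all $a,b \in \R$, which yields the slightly stronger lower bound
\[
\langle \w_t - \w_*, \g_t \rangle \;\geq\; \frac{1}{S}\sum_{i=1}^S \mathbf{1}_{\w_t^\top\x_i\geq 0}\bigl(\relu(\w_*^\top\x_i)-\relu(\w_t^\top\x_i)\bigr)^2.
\]
Your case split is more elementary and more explicit about where realizability and the orientation of the indicator enter; the paper's route is slicker and, notably, does not even use the indicator's restriction. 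The paper's stronger quantitative conclusion is not exploited downstream (only $\geq 0$ is cited in the proof of the confinement theorem), so for the purpose of the lemma both arguments do the same job. Your closing remark about the non-convexity of the squared $\relu$ loss is apt and correctly explains why the argument must be pointwise in $i$ rather than a global monotone-subgradient fact.
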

\begin{proof} 
We can obtain a (positive) lower bound on the inner product term $\langle \w_t - \w_*, \g_t \rangle$, 
\begin{align*}
    &\langle \w_t - \w_*, \g_t \rangle \\
    &= -\frac{1}{S} \sum_{i = 1}^S \Big \langle \w_t - \w_*, \Big(y_i - {\rm ReLU}(\w_t^\top \x_i) \Big) \mathbf{1}(\w_t^\top \x_i \geq 0) \x_i \Big \rangle\\
    &= -\frac{1}{S} \sum_{i = 1}^S \Big (\w_t^\top \x_i - \w^{*\top}\x_i \Big) \Big({\rm ReLU}(\w^{*\top}\x_i) - {\rm ReLU}(\w_t^\top \x_i) \Big) \mathbf{1}(\w_t^\top \x_i \geq 0)\\
    &= \frac{1}{S} \sum_{i = 1}^S \Big (\w^{*\top}\x_i - \w_t^\top \x_i \Big) \Big({\rm ReLU}(\w^{*\top}\x_i) - {\rm ReLU}(\w_t^\top \x_i) \Big) \mathbf{1}(\w_t^\top \x_i \geq 0)\\
    &\geq \frac{1}{S} \sum_{i = 1}^S \Big({\rm ReLU}(\w^{*\top}\x_i) - {\rm ReLU}(\w_t^\top \x_i) \Big)^2 \mathbf{1}(\w_t^\top \x_i \geq 0) \\
\end{align*}
\qed
\end{proof}

\begin{lemma}\label{bounded} 
If for all $t=0,\ldots$ we have $\eta_t = \eta$ a constant $>0$ then the stochastic process $\{z_t\}_{t=0,1,\ldots}$ defined in equation \ref{def:z} is a bounded difference stochastic process i.e there exists a constant $k >0$ s.t for all $t=1,\ldots$, $\vert z_{t+1} - z_t \vert \leq k$
\end{lemma}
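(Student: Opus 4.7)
The plan is to verify bounded differences by splitting on whether the primed process is frozen or steps at time $t$. By definition, $\w'_{t+1}=\w'_t$ when $\|\w'_t-\w_*\|\geq r_*$, so in that case $z_{t+1}-z_t=-\bigl\{2\eta^{2}(C^{4}r_*^{2}+S_{1}^{2})+\eta n\bigr\}$, which is already negative and trivially bounded in absolute value by the claimed $k$. This is the ``Case 1'' the author alludes to in the sketch following the lemma statement, and it requires no further work.

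For Case 2, where $\|\w'_t-\w_*\|<r_*$, the update is $\w'_{t+1}=\w'_t-\eta\tilde{\g}_t+\sqrt{\eta}\,\xi'_{t,2}$ and I would expand
\[
\|\w'_{t+1}-\w_*\|^{2}-\|\w'_t-\w_*\|^{2}=\|-\eta\tilde{\g}_t+\sqrt{\eta}\,\xi'_{t,2}\|^{2}+2\langle\w'_t-\w_*,\,-\eta\tilde{\g}_t+\sqrt{\eta}\,\xi'_{t,2}\rangle,
\]
and then bound each of the five resulting scalar terms (the two in the quadratic cross expansion of $\|\cdot\|^2$ and the three linear inner products) separately by Cauchy--Schwarz. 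The relevant deterministic bounds available are $\|\w'_t-\w_*\|<r_*$ by the freezing rule, $\|\g_t\|\leq C^{2}r_*$ (exactly the calculation already carried out in equation \eqref{gradnorm} of the proof of Theorem \ref{confine:GLDReLUParam} applied to the primed iterate in Case~2), $\|\xi_{t,1}\|\leq S_{1}$, and $\|\xi'_{t,2}\|\leq C_{L}$ by the truncation in the definition of $\xi'_{t,2}$. Using $\|\tilde{\g}_t\|\leq\|\g_t\|+\|\xi_{t,1}\|\leq C^{2}r_*+S_{1}$ and $(C^{2}r_*+S_{1})^{2}\leq 2(C^{4}r_*^{2}+S_{1}^{2})$ where needed, each term collects into one of the summands of the claimed $k$.

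The tallying I would carry out is: the pure displacement $\|\eta\tilde{\g}_t-\sqrt{\eta}\,\xi'_{t,2}\|^{2}$ contributes $2\eta^{2}(C^{4}r_*^{2}+S_{1}^{2})+\eta C_{L}^{2}+2\eta^{3/2}C_{L}(C^{2}r_*+S_{1})$; the term $2\eta\langle\w'_t-\w_*,\tilde{\g}_t\rangle$ contributes $2\eta\bigl(C^{2}r_*^{2}+S_{1}r_*\bigr)$; and $2\sqrt{\eta}\langle\w'_t-\w_*,\xi'_{t,2}\rangle$ contributes $2\sqrt{\eta}\,C_{L}r_*$. Subtracting the deterministic drift $\{2\eta^{2}(C^{4}r_*^{2}+S_{1}^{2})+\eta n\}$ and using $C_{L}^{2}\leq n$ to absorb $\eta C_{L}^{2}$ into $\eta n$ produces exactly the $k$ written in the paper after grouping the $\sqrt{\eta}C_{L}(r_*+\eta(C^{2}r_*+S_{1}))$ and $\eta(2S_{1}r_*+n+2C^{2}r_*^{2}+2\eta(C^{4}r_*^{2}+S_{1}^{2}))$ pieces. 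Since Case 1 only contributes a change of magnitude $2\eta^{2}(C^{4}r_*^{2}+S_{1}^{2})+\eta n$, which is dominated by the Case 2 bound, the same $k$ works in both cases.

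Nothing in this argument looks genuinely hard; the main bookkeeping subtlety I would watch for is that, in Case 2, $\w'_{t+1}$ itself need not lie in the ball of radius $r_*$ (a single step can push it out by as much as $\eta(C^{2}r_*+S_{1})+\sqrt{\eta}\,C_{L}$), so every appeal to $\|\cdot\|\leq r_*$ must be made on $\w'_t$, which is the vector in the ball, not on $\w'_{t+1}$. Once that discipline is maintained, the identity is a direct Cauchy--Schwarz computation and the lemma follows.
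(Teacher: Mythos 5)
Your proposal is correct and follows the paper's proof in every essential respect: the same case split on the freezing rule, the same $\|\cdot\|^2$ expansion for Case~2, the same Cauchy--Schwarz bounds on $\w'_t$ using the deterministic constants $r_*$, $C$, $S_1$, $C_L$, the same $C_L^2 \le n$ absorption, and the same observation that Case~1 is dominated. The only slip is in the final tallying narration: the paper's $k$ is read off from the $z_t - z_{t+1}$ direction (equation~\ref{diff2}), where the per-step drift $\{2\eta^2(C^4 r_*^2 + S_1^2) + \eta n\}$ is \emph{added} to the Cauchy--Schwarz total, whereas ``subtracting the drift'' as you describe gives the other direction $z_{t+1} - z_t$, which (after the $C_L^2 \le n$ cancellation and the paper's additional use of $\langle \w'_t - \w_*, \g_t\rangle \ge 0$) is strictly smaller -- so the bound you would actually obtain is if anything no larger than the paper's $k$, and the lemma still goes through.
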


\begin{proof}
We have 2 cases to consider. 

\paragraph{Case 1 : $\norm{\w_t' - \w_*} \geq r_*$}
~\\
\begin{align}\label{noshift}
z_{t+1} - z_t &= -(t+1)(2\eta_{t+1}^2(C^4r_*^2 + S_1^2 )+\eta_{t+1}n) + t(2\eta_t^2(C^4r_*^2 + S_1^2) + \eta_t n)\\
\nonumber &= 2(C^4r_*^2 + S_1^2)\Big \{  t \eta_t^2 - (t+1)\eta_{t+1}^2 \Big \} + n \Big \{ t\eta_t - (t+1)\eta_{t+1}\Big \} 
\end{align}


\paragraph{Case 2 : $\norm{\w_t' - \w_*} < r_*$}
~\\
Repeating the calculations as used to get equation \ref{stepdecrease3} but with $\xi'_{t,2}$ instead of $\xi_{t,2}$ we will get,  
\begin{align}\label{diff01}
    z_{t+1} - z_t &\leq -2\eta_t \langle \w_t - \w_* , \xi_{t,1} \rangle - 2 \sqrt{\eta_t} \langle \xi'_{t,2} , \eta_t \tilde{\g}_t \rangle + 2 \sqrt{\eta_t} \langle \xi_{t,2}', \w_t - \w_* \rangle + \eta_t (\norm{\xi_{t,2}'}^2 - n)
\end{align}

And by Cauchy-Schwartz the above implies,  
\begin{align}\label{diff1}    
    z_{t+1} - z_t \leq 2\eta_tS_1r_* + 2\sqrt{\eta_t}(r_* + \eta_t (C^2r_* + S_1) )C_L + \eta_t (C_L^2 - d)
\end{align}

Further repeating the calculations as used to get equation \ref{stepdecrease3} but with $\xi'_{t,2}$ instead of $\xi_{t,2}$ we will get,  

\begin{align*}
\| \w'_t - \w_*\|^2 - \|\w'_{t + 1} - \w_*\|^2 &\leq 2\eta_t r_* (\norm{\g_t} + \norm{\xi_{t,1}} ) + 2\sqrt{\eta_t} \Big \langle \xi'_{t,2}  , - \eta_t \tilde{\g}_t + (\w_t - \w_*)  \Big \rangle 
\end{align*}

In the above we invoke the definition of $S_1$ and equation \ref{gradnorm} to get, 

\begin{align}
    \nonumber \|\w'_t - \w_*\|^2 - \|\w'_{t + 1} - \w_*\|^2 &\leq 2\eta_t r_* (C^2r_* + S_1 ) + 2\sqrt{\eta_t} \Big \langle \xi'_{t,2}  , - \eta_t \tilde{\g}_t + (\w_t - \w_*)  \Big \rangle \\
    \nonumber &\leq 2\eta_t r_* (C^2r_* + S_1 ) + 2\sqrt{\eta_t} C_L (\eta_t (C^2r_* + S_1) + r_*  )
\end{align}

Hence we have,

\begin{align}\label{diff2}
    z_t - z_{t+1} &\leq 2\eta_t^2 (C^4 r_*^2 + S_1^2) + \eta_t n  + 2\eta_t r_* (C^2r_* + S_1 ) + 2\sqrt{\eta_t} C_L (\eta_t (C^2r_* + S_1) + r_*  )
\end{align}

Combining equations \ref{diff1} and \ref{diff2} we have,  

\begin{align}
    \nonumber \vert z_t - z_{t+1} \vert &\leq 2 \eta_t S_1 r_* + 2 \sqrt{\eta_t} C_L \Big ( r_* + \eta_t (C^2r_* + S_1 ) \Big )\\
     &+ \max \Big \{ \eta_t (C_L^2 - n), \eta_t \Big [ (n + 2C^2r_*^2) + 2\eta_t ( C^4r_*^2 + S_1^2 )\Big ]  \Big \} 
\end{align}

If we now invoke the that $\eta_t = \eta$, a positive constant then the above and the previous equation \ref{noshift} can be further combined to get for all $t=0,\ldots$, 

\begin{align}
    \nonumber \vert z_t - z_{t+1} \vert &\leq \max \Bigg \{ n \eta + 2(C^4r_*^2 + S_1^2) \eta^2 \\
\nonumber     &, 2 \eta S_1 r_* + 2 \sqrt{\eta} C_L \Big ( r_* + \eta (C^2r_* + S_1 ) \Big ) + \max \Big \{ \eta (C_L^2 - n), \eta \Big [ (n + 2C^2r_*^2) + 2\eta ( C^4r_*^2 + S_1^2 )\Big ]  \Big \} \Bigg \}\\ 
\nonumber     &\leq \max \Bigg \{ n \eta + 2(C^4r_*^2 + S_1^2) \eta^2 \\
\nonumber     &, 2 \sqrt{\eta} C_L \Big ( r_* + \eta (C^2r_* + S_1 ) \Big ) + \eta \Big ( 2S_1r_* + n + 2C^2r_*^2 +  2\eta ( C^4r_*^2 + S_1^2 ) \Big )  \Bigg \}\\
    &\leq  2 \sqrt{\eta} C_L \Big ( r_* + \eta (C^2r_* + S_1 ) \Big ) + \eta \Big ( 2S_1r_* + n + 2C^2r^2 +  2\eta ( C^4r_*^2 + S_1^2 ) \Big ) 
\end{align}

In the second inequality above we are using invoking our assumption that $C_L^2 < n$. And this proves the boundedness of the stochastic process $\{z_t\}$ as we set out to prove and a candidate $k$ is the RHS above. 
\end{proof}

\section{GLM-Tron converges on certain Lipschitz gates with no symmetry assumption on the data}\label{sec:GLMTron}

\begin{algorithm}[H]
\caption{GLM-Tron}
\label{GLMTron} 
\begin{algorithmic}[1]
\State {\bf Input:} $\{ (\x_i,y_i) \}_{i=1,\ldots,m}$ and an ``activation function"  $\sigma : \R \rightarrow \R$
\State $\w_1 =0$
\For {$t = 1,\ldots$}
    \State $\w_{t+1} := \w_t + \frac {1}{m} \sum_{i=1}^m \Big ( y_i - \sigma(\langle \w_t , \x_i\rangle) \Big)\x_i$ \Comment {\rm Define} $h_t(\x) := \sigma \Big ( \langle \w_t , \x\rangle \Big )$
\EndFor 
\end{algorithmic}
\end{algorithm}

First we state the following crucial lemma, 

\begin{lemma}\label{stepdecrease} 
Assume that for all $i = 1,\ldots,S$ $\norm{\x_i} \leq 1$ and in Algorithm \ref{GLMTron}, $\sigma$ is a $L-$Lipschitz non-decreasing function. Given any $\w$ and $W$ s.t at iteration $t$, we have $\norm{\w_t - \w} \leq W$, define $\eta >0$ s.t $\norm{\frac {1}{S} \sum_{i=1}^S \Big ( y_i - \sigma (\langle \w, \x_i \rangle) \Big ) \x_i} \leq \eta $. Then it follows that $\forall t = 1,2,\ldots$, 

\[  \norm{\w_{t+1} - \w}^2 \leq \norm{\w_t - \w}^2 -  \Big ( \frac {2}{L} - 1 \Big ) \tilde{L}_S(h_t) + \Big (\eta^2 + 2\eta W(L + 1) \Big ) \]

where we have defined, $\tilde{L}_S(h_t) := \frac {1}{S} \sum_{i=1}^S \Big (h_t(x_i) - \sigma (\langle  \w , \x_i\rangle) \Big)^2 = \frac {1}{S} \sum_{i=1}^S \Big ( \sigma (\langle \w_t, x_i\rangle ) - \sigma (\langle  \w , \x_i\rangle) \Big)^2$
\end{lemma}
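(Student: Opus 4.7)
The plan is to expand $\norm{\w_{t+1}-\w}^2 = \norm{\w_t - \w}^2 + 2\langle \w_{t+1}-\w_t,\, \w_t-\w\rangle + \norm{\w_{t+1}-\w_t}^2$ and to treat the step $\w_{t+1}-\w_t = \frac{1}{S}\sum_i(y_i - \sigma(\langle \w_t,\x_i\rangle))\x_i$ by splitting it, via the telescoping $y_i - \sigma(\langle\w_t,\x_i\rangle) = (y_i - \sigma(\langle\w,\x_i\rangle)) + (\sigma(\langle\w,\x_i\rangle) - \sigma(\langle\w_t,\x_i\rangle))$, into a ``noise'' piece $\mathbf{n} := \frac{1}{S}\sum_i(y_i - \sigma(\langle\w,\x_i\rangle))\x_i$ and a ``signal'' piece $\mathbf{s} := \frac{1}{S}\sum_i(\sigma(\langle\w,\x_i\rangle) - \sigma(\langle\w_t,\x_i\rangle))\x_i$. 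By hypothesis $\norm{\mathbf{n}}\le \eta$, and Cauchy--Schwarz together with $\norm{\w_t-\w}\le W$ immediately gives $\abs{\langle\mathbf{n},\w_t-\w\rangle}\le \eta W$.

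The key analytic step, and where the assumptions on $\sigma$ actually get spent, is the bound on $\langle\mathbf{s},\w_t-\w\rangle$. Writing $a_i := \langle\w_t,\x_i\rangle$ and $b_i := \langle\w,\x_i\rangle$, this inner product equals $-\frac{1}{S}\sum_i(\sigma(a_i)-\sigma(b_i))(a_i - b_i)$. Because $\sigma$ is non-decreasing the two factors $\sigma(a_i)-\sigma(b_i)$ and $a_i - b_i$ always carry the same sign, so their product is $\abs{\sigma(a_i)-\sigma(b_i)}\cdot \abs{a_i - b_i}$; the $L$-Lipschitz property then forces $\abs{a_i-b_i}\ge L^{-1}\abs{\sigma(a_i)-\sigma(b_i)}$, giving the pointwise inequality $(\sigma(a_i)-\sigma(b_i))(a_i-b_i) \ge L^{-1}(\sigma(a_i)-\sigma(b_i))^2$. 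Averaging over $i$ and negating yields $\langle\mathbf{s},\w_t-\w\rangle \le -L^{-1}\tilde L_S(h_t)$. This is the only step where both monotonicity and Lipschitzness are simultaneously needed, and it is what produces the $2/L$ coefficient on $\tilde L_S(h_t)$ in the final bound (and therefore the reason the resulting dissipative term is useful only when $L<2$).

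For the remaining term $\norm{\w_{t+1}-\w_t}^2 = \norm{\mathbf{n}}^2 + 2\langle\mathbf{n},\mathbf{s}\rangle + \norm{\mathbf{s}}^2$, I will control $\norm{\mathbf{s}}^2$ via Jensen (convexity of $\norm{\cdot}^2$) and $\norm{\x_i}\le 1$, getting $\norm{\mathbf{s}}^2 \le \frac{1}{S}\sum_i(\sigma(a_i)-\sigma(b_i))^2 = \tilde L_S(h_t)$. Independently, the $L$-Lipschitz bound $\abs{\sigma(a_i)-\sigma(b_i)}\le L\,\abs{\langle\x_i,\w_t-\w\rangle}\le LW$ yields $\norm{\mathbf{s}}\le LW$, and hence $2\langle\mathbf{n},\mathbf{s}\rangle \le 2\norm{\mathbf{n}}\norm{\mathbf{s}} \le 2\eta LW$. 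Adding up the contributions --- $2\eta W - \frac{2}{L}\tilde L_S(h_t)$ from the cross term and $\eta^2 + 2\eta LW + \tilde L_S(h_t)$ from the squared step --- and regrouping gives exactly $\norm{\w_{t+1}-\w}^2 \le \norm{\w_t-\w}^2 - (\tfrac{2}{L}-1)\tilde L_S(h_t) + \eta^2 + 2\eta W(L+1)$.

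I do not expect a real obstacle here: everything reduces to the monotone-plus-Lipschitz pointwise inequality above. The one bookkeeping care-point is being consistent with the sign after swapping $a_i\leftrightarrow b_i$, so that the cross-term contributes $-\frac{2}{L}\tilde L_S(h_t)$ (helpful) rather than $+\frac{2}{L}\tilde L_S(h_t)$, and being careful that $\norm{\mathbf{s}}$ is controlled two different ways (in $L^2$ average giving $\tilde L_S(h_t)$, and in supremum giving $LW$) in the two places it appears.
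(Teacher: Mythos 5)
Your proof is correct and follows essentially the same route as the paper's: the same decomposition of $y_i-\sigma(\langle\w_t,\x_i\rangle)$ into noise and signal pieces, the same monotone-plus-Lipschitz pointwise inequality to produce the $-\tfrac{2}{L}\tilde L_S(h_t)$ dissipative cross-term, and the same dual control of $\norm{\mathbf{s}}$ by Jensen (for the $\tilde L_S(h_t)$ contribution) and by Lipschitz-plus-$\norm{\x_i}\le 1$ (for the $LW$ contribution). The only cosmetic difference is that the paper expands $\norm{\w_t-\w}^2-\norm{\w_{t+1}-\w}^2$ while you expand $\norm{\w_{t+1}-\w}^2$ directly, which is algebraically the same thing.
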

\bigskip 
The above algorithm was introduced in \cite{kakade2011efficient} for bounded activations. Here we show the applicability of that idea for more general activations and also while having adversarial attacks on the labels. We give the proof of the above Lemma in Appendix \ref{app:glmtronstep}. Now we will see in the following theorem and its proof as to how the above Lemma leads to convergence of the ``effective-ERM", $\tilde{L}_S$ by GLM-Tron on a single gate. 

\begin{theorem}\label{GLMTronReLU} [{\bf GLM-Tron (Algorithm \ref{GLMTron}) solves the effective-ERM on a ReLU gate upto noise bound with minimal distributional assumptions}]
Assume that for all $i = 1,\ldots,S$ $\norm{\x_i} \leq 1$ and the label of the $i^{th}$ data point $y_i$ is generated as, $y_i = \sigma (\langle \w_*, \x_i \rangle ) + \xi_i$ s.t $\forall i, \vert \xi_i \vert \leq \theta$ for some $\theta >0$ and $\w_* \in \R^n$.  If $\sigma$ is a $L-$Lipschitz non-decreasing function for $L < 2$ then in at most $T = \frac {\norm{\w_*}}{\epsilon}$ GLM-Tron steps we would attain parameter value $\w_T$ s.t,  

\[ \tilde{L}_S(h_T) =  \frac {1}{S} \sum_{i=1}^S \Big ( \sigma(\langle \w_T, x_i\rangle ) - \sigma (\langle  \w_* , \x_i\rangle) \Big)^2 < \frac{L}{2-L} \Big ( \epsilon +  (\theta^2 + 2\theta W(L + 1) ) \Big ) \]
\qed 
\end{theorem}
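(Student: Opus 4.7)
The plan is to apply Lemma \ref{stepdecrease} with the choice $\w = \w_*$ and then telescope the resulting per-step inequality. The first step is to control the noise-term $\eta$ appearing in the hypothesis of Lemma \ref{stepdecrease}. Since the labels satisfy $y_i - \sigma(\langle \w_*, \x_i\rangle) = \xi_i$ with $|\xi_i| \leq \theta$ and $\|\x_i\|\leq 1$, the triangle inequality applied to $\frac{1}{S}\sum_i \xi_i \x_i$ immediately yields $\eta \leq \theta$. Substituting this into Lemma \ref{stepdecrease} gives the one-step recurrence
\begin{equation*}
\|\w_{t+1} - \w_*\|^2 \leq \|\w_t - \w_*\|^2 - \Bigl(\tfrac{2}{L}-1\Bigr)\tilde{L}_S(h_t) + \bigl(\theta^2 + 2\theta W(L+1)\bigr),
\end{equation*}
valid for every $t$ such that $\|\w_t - \w_*\| \leq W$, and noting that the coefficient $\tfrac{2}{L}-1$ is strictly positive since $L<2$.

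Next, I would establish by induction that a suitable uniform bound $W$ on $\|\w_t - \w_*\|$ persists throughout training. Starting from $\w_1 = 0$, we have $\|\w_1 - \w_*\| = \|\w_*\|$, so one natural attempt is $W = \|\w_*\| + c$ for an absolute constant $c$ chosen large enough that the additive slack $\theta^2 + 2\theta W(L+1)$ cannot push the iterate outside the ball of radius $W$ before the subtractive term $(\tfrac{2}{L}-1)\tilde{L}_S(h_t)$ kicks in. The induction step is straightforward whenever $\tilde L_S(h_t)$ is already small enough that the bound of the theorem holds (so we are done), and otherwise the one-step inequality forces $\|\w_{t+1} - \w_*\|^2 \leq \|\w_t - \w_*\|^2$, keeping us inside the ball.

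The third step is to telescope from $t=1$ to $t=T$. Using nonnegativity of $\|\w_{T+1}-\w_*\|^2$, we obtain
\begin{equation*}
\Bigl(\tfrac{2}{L}-1\Bigr)\sum_{t=1}^{T}\tilde{L}_S(h_t) \;\leq\; \|\w_*\|^2 + T\bigl(\theta^2 + 2\theta W(L+1)\bigr),
\end{equation*}
so by averaging (or picking the best iterate and relabeling it as $h_T$),
\begin{equation*}
\min_{1\leq t \leq T}\tilde{L}_S(h_t) \;\leq\; \tfrac{L}{2-L}\!\left(\tfrac{\|\w_*\|^2}{T} + \theta^2 + 2\theta W(L+1)\right).
\end{equation*}
Choosing $T$ large enough that $\|\w_*\|^2/T < \epsilon$ (so $T = \Theta(\|\w_*\|^2/\epsilon)$, matching the scaling in the theorem statement up to a reinterpretation of $\epsilon$) delivers the claimed bound.

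The main obstacle I anticipate is the inductive bookkeeping around $W$: the constant $W$ appears on the right-hand side of Lemma \ref{stepdecrease} multiplied by $\theta$, creating a mild self-referential loop between the allowed iterate radius and the per-step error. The cleanest way to break it is to fix $W$ once and for all as a function of $\|\w_*\|$ and $\theta$, verify that the inductive hypothesis $\|\w_t-\w_*\| \leq W$ is preserved whenever $\tilde L_S(h_t)$ exceeds the target, and handle the case when $\tilde L_S(h_t)$ is already below the target separately. Everything else—the $\eta \leq \theta$ bound, the telescoping, and the final averaging—is mechanical.
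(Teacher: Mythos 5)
Your proposal is correct and its skeleton is the same as the paper's: bound the noise term by $\eta \le \theta$ via the triangle inequality, plug $\w = \w_*$ into Lemma~\ref{stepdecrease}, and then run a potential argument on $\|\w_t - \w_*\|^2$. The one place you genuinely deviate is in how you finish. The paper argues contrapositively: so long as $\tilde L_S(h_t)$ exceeds the threshold $\tfrac{L}{2-L}(\epsilon + \theta^2 + 2\theta W(L+1))$, the one-step inequality forces a drop of at least $\epsilon$ in $\|\w_t - \w_*\|^2$, which automatically keeps the iterate inside the ball of radius $W = \|\w_*\|$, and since the potential starts at $\|\w_*\|^2$ this can happen at most $\|\w_*\|^2/\epsilon$ times. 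You instead telescope and average, producing a bound on $\min_t \tilde L_S(h_t)$. The two are equivalent in content, but the paper's version is cleaner on exactly the point you flagged as the ``main obstacle'': because it argues a \emph{decrease} at every active step, the inductive hypothesis $\|\w_t - \w_*\| \le W$ is self-sustaining with $W = \|\w_*\|$ (the initial distance), no slack $c$ needed. With your telescoping phrasing you are also silently invalidating the step-inequality at any $t$ where the loss has already dropped below threshold (the iterate may then leave the ball), so in the end you have to fall back on the paper's dichotomy anyway --- once you do, the telescoping is redundant. One more point in your favor: your derived scaling $T = \Theta(\|\w_*\|^2/\epsilon)$ is the one that actually comes out of the potential argument; the $T = \|\w_*\|/\epsilon$ in the theorem statement appears to be a typographical slip.
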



\remark {\it Firstly} Note that in the realizable setting i.e when $\theta =0$, the above theorem is giving an upperbound on the number of steps needed to solve the ERM on say a $\relu$ gate to $O(\epsilon)$ accuracy. {\it Secondly} observe that the above theorem does not force any distributional assumption on the $\xi_i$ beyond the assumption of its boundedness. Thus the noise could as well be chosen ``adversarially" upto the constraint on its norm.  

The above Theorem is proven in Appendix \ref{app:GLMTronReLU}. If we make some assumptions on the noise being somewhat benign then we can get the following. 

\begin{theorem}[{\bf Performance guarantees on the GLM-Tron (Algorithm \ref{GLMTron}) in solving the ERM problem with data labels being output of a $\relu$ gate corrupted by benign noise}]\label{GLMTronReLUNoise}
Assume that the noise random variables $\xi_i, i = 1,\ldots,S$ are identically distributed as a centered random variable say $\xi$. Then for $T = \frac{\norm{\w}}{\epsilon}$, we have the following guarantee on the (true) empirical risk after $T$ iterations of GLM-Tron (say $\tilde{L}_S(h_T)$), 

\[ \mathbb{E}_{\{ (\x_i , \xi_i)\}_{i=1,\ldots S}} \Big [ L_S(h_T) \Big ] \leq \E_{\xi} [\xi^2] + \frac{L}{2-L} \Big ( \epsilon +  (\theta^2 + 2\theta W(L + 1) ) \Big ) \]  
\qed 
\end{theorem}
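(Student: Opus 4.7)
The plan is to combine Theorem~\ref{GLMTronReLU} with an exact decomposition of the true empirical risk, and then argue that the cross term arising from this decomposition has non-positive expected value. First, observe that the bound of Theorem~\ref{GLMTronReLU} is proved under the purely deterministic hypothesis $|\xi_i|\leq \theta$, so it holds for every realization of $(\xi_1,\ldots,\xi_S)$. Hence for $T=\norm{\w_*}/\epsilon$,
\[ \tilde L_S(h_T) < \frac{L}{2-L}\Big(\epsilon + \theta^2 + 2\theta W(L+1)\Big) \]
deterministically, and taking expectation over the noise preserves this bound for $\E[\tilde L_S(h_T)]$.

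Next, writing $y_i - h_T(x_i) = \bigl(\sigma(\langle \w_*, x_i\rangle) - \sigma(\langle \w_T, x_i\rangle)\bigr) + \xi_i$ and squaring yields the exact identity
\[ L_S(h_T) \;=\; \tilde L_S(h_T) \;+\; \frac{1}{S}\sum_{i=1}^S \xi_i^2 \;+\; \frac{2}{S}\sum_{i=1}^S \xi_i\bigl(\sigma(\langle \w_*, x_i\rangle) - \sigma(\langle \w_T, x_i\rangle)\bigr). \]
Taking expectation, the middle sum contributes $\E_\xi[\xi^2]$ by identical distribution, and since $\sigma(\langle \w_*, x_i\rangle)$ is deterministic while $\E[\xi_i]=0$, the third term collapses to $-\tfrac{2}{S}\sum_i \E[\xi_i\,\sigma(\langle \w_T, x_i\rangle)]$. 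The proof therefore reduces to showing
\[ \sum_{i=1}^S \E\bigl[\xi_i\,\sigma(\langle \w_T, x_i\rangle)\bigr] \;\geq\; 0. \]

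The main obstacle is establishing this non-negativity. Intuitively it must hold because each GLM-Tron update injects the residual $y_i - \sigma(\langle \w_t, x_i\rangle)$ along $x_i$, so a positive $\xi_i$ pushes $\w_t$ in the direction of $+x_i$ and, since $\sigma$ is non-decreasing, increases $\sigma(\langle \w_T, x_i\rangle)$; but making this rigorous is delicate because $\w_T$ depends non-linearly on the entire noise vector and each update simultaneously uses all data points. I would attempt a leave-one-out coupling: let $\w_T^{(i)}$ denote the GLM-Tron iterate produced with $\xi_i$ replaced by $0$, so that $\w_T^{(i)}$ is independent of $\xi_i$ and hence $\E[\xi_i\,\sigma(\langle \w_T^{(i)}, x_i\rangle)]=0$. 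It then suffices to prove $\E[\xi_i(\sigma(\langle \w_T, x_i\rangle) - \sigma(\langle \w_T^{(i)}, x_i\rangle))]\geq 0$, which one would obtain by unrolling the GLM-Tron recursion and using monotonicity of $\sigma$ together with the Lipschitz condition $L<2$ (mirroring the step-wise descent analysis in Lemma~\ref{stepdecrease}) to track the sign of $\langle \w_t - \w_t^{(i)}, x_i\rangle$ along the trajectory.

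Assuming this coupling lemma goes through, combining gives $\E[L_S(h_T)] \leq \E[\tilde L_S(h_T)] + \E[\xi^2]$, which together with the first step yields the stated inequality. All other steps are direct consequences of Theorem~\ref{GLMTronReLU} and the algebraic identity above; the technical heart of the argument is the monotone coupling of the perturbed GLM-Tron trajectory.
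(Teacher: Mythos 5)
Your decomposition of $L_S(h_T)$ into $\tilde L_S(h_T)$, the average squared noise $\frac{1}{S}\sum_i\xi_i^2$, and a cross term matches the paper's exactly, and your first step (the deterministic bound from Theorem~\ref{GLMTronReLU} holds pathwise under $\vert\xi_i\vert\le\theta$ and hence survives taking expectation) is identical to what the paper does. Where you diverge is in handling the cross term. You correctly flag that $\w_T$ is a function of the entire noise vector, so $\E[\xi_i\,\sigma(\langle\w_T,\x_i\rangle)]$ need not vanish; the paper, by contrast, simply asserts that by unbiasedness of $\xi_i$ the cross term has zero expectation, which implicitly treats $\sigma(\langle\w_T,\x_i\rangle)$ as independent of $\xi_i$. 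Your scrutiny has located a genuine soft spot in the paper's own argument: that cross term is precisely an in-sample optimism (degrees-of-freedom) quantity, and it is generically positive rather than zero.

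That said, your own argument is also incomplete, and you acknowledge as much. You reduce the claim to showing $\sum_i\E[\xi_i\,\sigma(\langle\w_T,\x_i\rangle)]\ge 0$ and sketch a leave-one-out coupling, but the step you flag as ``delicate'' is where all the difficulty is concentrated. The full-batch GLM-Tron update draws contributions from all $S$ data points simultaneously, so even if $\langle\w_t-\w_t^{(i)},\x_i\rangle$ has the desired sign immediately after $\xi_i$ first enters, the subsequent residual corrections from other $(\x_j,y_j)$ can rotate the difference vector and flip the sign of its inner product with $\x_i$; monotonicity of $\sigma$ together with $L<2$ (the ingredients behind Lemma~\ref{stepdecrease}) does not by itself lock in the sign of $\langle\w_T-\w_T^{(i)},\x_i\rangle$ across the whole trajectory. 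The coupling also silently requires $\xi_i$ to be independent of $\x_i$ and of the remaining $(\x_j,\xi_j)$ in order to conclude $\E[\xi_i\,\sigma(\langle\w_T^{(i)},\x_i\rangle)]=0$, an independence assumption not present in the theorem's hypotheses. So your proposal is structurally sound and, unlike the paper, names the real obstacle, but the key coupling lemma remains unproven and is not obviously true as stated.
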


The above is proven in Appendix \ref{app:GLMTronReLUNoise}. Here we note a slight generalization of the above that can be easily read off from the above.

\begin{corollary}
Suppose that instead of assuming $\forall i = 1,\ldots,S$ $\vert \xi_i \vert \leq \theta$ we instead assume that the joint distribution of $\{ \xi_i \}_{i=1,\ldots,S}$ is s.t $\mathbb{P} \Big [ \vert \xi_i \vert \leq \theta ~\forall i \in \{1,\ldots,S \} \Big ] \geq 1 - \delta$ Then it would follow that the guarantee of the above Theorem \ref{GLMTronReLUNoise} still holds but now with probability $1-\delta$ over the noise distribution.  
\end{corollary}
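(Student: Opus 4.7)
The plan is to reduce the corollary to Theorem~\ref{GLMTronReLUNoise} by conditioning on the high-probability event that the realized noise obeys the bound $|\xi_i| \leq \theta$ for every sample. Introduce the event
\[
E \;:=\; \bigl\{\, |\xi_i| \leq \theta \text{ for all } i \in \{1,\ldots,S\} \,\bigr\},
\]
which by hypothesis satisfies $\mathbb{P}[E] \geq 1 - \delta$. The idea is to show that every step leading to Theorem~\ref{GLMTronReLUNoise} can be carried out pathwise on $E$, and then that the final expectation-based guarantee remains valid once expectations are taken conditional on $E$.

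First I would note that Lemma~\ref{stepdecrease} and Theorem~\ref{GLMTronReLU} are purely deterministic: their proofs bound the iterates of GLM-Tron using the inequality $|\xi_i| \leq \theta$ applied sample by sample (through the definition of $\eta$ in Lemma~\ref{stepdecrease}). Consequently, for any realization $\omega \in E$, the iterates $\{\w_t(\omega)\}_{t\geq 1}$ of GLM-Tron on that sample satisfy the same effective-ERM bound $\tilde{L}_S(h_T)(\omega) < \frac{L}{2-L}\bigl(\epsilon + \theta^2 + 2\theta W(L+1)\bigr)$ guaranteed by Theorem~\ref{GLMTronReLU}. This yields the key pathwise estimate valid on $E$.

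Next I would repeat the decomposition used to prove Theorem~\ref{GLMTronReLUNoise}, writing
\[
L_S(h_T) \;=\; \tilde{L}_S(h_T) \;+\; \frac{1}{S}\sum_{i=1}^S \xi_i^2 \;-\; \frac{2}{S}\sum_{i=1}^S \bigl(h_T(\x_i) - \sigma(\langle \w_*,\x_i\rangle)\bigr)\xi_i,
\]
and take the conditional expectation $\mathbb{E}[\,\cdot \mid E\,]$ throughout. On $E$, the first term is bounded by the pathwise estimate above; the second yields $\mathbb{E}[\xi_i^2 \mid E]$; and the cross term is treated exactly as in the original proof of Theorem~\ref{GLMTronReLUNoise}. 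Combining these gives the same bound as in Theorem~\ref{GLMTronReLUNoise}, now valid on an event of probability at least $1-\delta$.

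The main obstacle I anticipate is the cross term: since $h_T$ depends on the noise through the GLM-Tron iterations, the inner products $(h_T(\x_i) - \sigma(\langle \w_*,\x_i\rangle))\xi_i$ are not a sum of independent zero-mean quantities, and conditioning on $E$ (which couples all coordinates of $\xi$) may further distort their joint law. The original proof of Theorem~\ref{GLMTronReLUNoise} handles this term by exploiting that each $\xi_i$ is centered and independent of $\x_i$; under conditioning on the symmetric event $E$ the per-coordinate centering is preserved (each $\xi_i$ restricted to $\{|\xi_i|\leq \theta\}$ is still centered when its law was symmetric, and one can enlarge/replace $E$ by a symmetric envelope if necessary), so the same argument carries through. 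Verifying that this symmetry-preservation still lets the cross term be absorbed at the cost of only the stated slack is the technical crux.
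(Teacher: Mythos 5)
Your conditioning-on-$E$ strategy is the natural reading of the corollary and is certainly what the paper intends by ``easily read off'': the paper supplies no proof of its own, so the comparison here is against intent rather than against an explicit argument.

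That said, two points deserve flagging. First, the symmetry assumption you fall back on for the cross term is not available. The hypothesis of Theorem~\ref{GLMTronReLUNoise} only asks that $\xi$ be centered, not symmetric, and the corollary inherits that hypothesis; replacing $E$ by a ``symmetric envelope'' or assuming the law of $\xi_i$ is symmetric is an added assumption, not a repair. Second, and more fundamentally, the cross term $\tfrac{2}{S}\sum_i \xi_i\bigl(\sigma(\langle \w_T,\x_i\rangle)-\sigma(\langle \w_*,\x_i\rangle)\bigr)$ is already problematic in the \emph{unconditional} proof of Theorem~\ref{GLMTronReLUNoise}: since $\w_T$ is produced by GLM-Tron run on the noisy labels, $h_T$ is a function of the whole vector $(\xi_1,\dots,\xi_S)$, so $\mathbb{E}[\xi_i\,(h_T(\x_i)-\sigma(\langle\w_*,\x_i\rangle))]$ is not forced to vanish by $\mathbb{E}[\xi_i]=0$. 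You gesture at this (``$h_T$ depends on the noise through the GLM-Tron iterations'') but then attribute to the original proof an independence argument it does not actually make; the original proof simply asserts the term vanishes. In other words, the centering obstacle you identify is not introduced by the conditioning on $E$ --- it is already present before conditioning, and conditioning only makes it slightly worse.

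Given all this, the ``technical crux'' you leave open is not really the intended content of the corollary. The intended (and much lighter) argument is: on the event $E$, which has probability at least $1-\delta$, the noise is pathwise bounded by $\theta$, so the entirely deterministic Lemma~\ref{stepdecrease} and Theorem~\ref{GLMTronReLU} apply verbatim to give the bound on $\tilde L_S(h_T)$; then the same $L_S$-versus-$\tilde L_S$ bookkeeping as in the proof of Theorem~\ref{GLMTronReLUNoise} is applied conditionally on $E$, at the same level of rigor as the unconditional version (with $\mathbb{E}_\xi[\xi^2]$ replaced by the corresponding conditional second moment). Your proposal is correct in outline but overcomplicates the cross term and imports an assumption (symmetry) that the statement does not grant; if you want to be rigorous about that term, the place to be rigorous is Theorem~\ref{GLMTronReLUNoise} itself, not the corollary.
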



\newpage 
\section{Conclusion} 
In this chapter we have initiated a number of directions of investigation towards understanding the trainability of finite sized nets while making minimal assumptions about the distribution of the data. A lot of open questions emanate from here which await answers. Of them we would like to particularly emphasize the issue of seeking a generalization of the results of Section \ref{sec:almostSGDReLU} and Section \ref{sec:GDNoiseReLU} to single filter depth $2$ nets as given in Definition \ref{net} below, which in many ways can be said to be the next more complicated case to consider, 

\begin{definition}[{\bf Single Filter Neural Nets Of Depth $2$}]\label{net} 
Given a set of $k$ matrices $\A_i \in \R^{r\times n}$, a $\w \in \R^r$ and an activation function $\sigma : \R \rightarrow \R$ we call the following depth $2$, width $k$ neural net to be a ``single filter neural net" defined by the matrices $\A_1,\ldots,\A_k$
\[
\R^n \ni \x \mapsto f_{\w}(\x) = \frac{1}{k} \sum_{i=1}^k \sigma \Big ( \w^\top \A_i\x \Big ) \in \R 
\]
and where $\sigma$ is the ``Leaky-$\relu$" which maps as, $\R \ni y \mapsto \sigma(y) = y {\mathbf 1}_{y \geq 0} + \alpha y {\mathbf 1}_{y <0} $ for some $\alpha \geq 0$
\end{definition}
Note that the above class of nets includes any single $\relu$ gate for $\alpha =0,k=1, \A_1 = I_{n\times n}$ and it also includes any depth $2$ convolutional neural net with a single filter by setting the $\A_i's$ to be $0/1$ matrices such that each row has exactly one $1$ and each column has at most one $1$. 

We would like to point out that towards this goal it would be interesting to settle a critical intermediate problem which is to know whether the sequence of random variables generated by noisy gradient descent on a $\relu$ gate as given in Algorithm \ref{GLDReLUParam} have distributional convergence and if they do then to find the corresponding rate. 


\begin{subappendices}
 
\chapter*{\vspace{10pt} Appendix To Chapter \ref{chaptrain_new}}\label{app:train_new} %


\section{Proof of Lemma \ref{stepdecrease}}\label{app:glmtronstep}
\begin{proof}
We observe that, 
\begin{align}\label{main} 
\nonumber \norm{\w_t - \w}^2 - \norm{\w_{t+1} - \w}^2    &= \norm{\w_t - \w}^2 - \norm{\Big ( \w_{t} + \frac {1}{S} \sum_{i=1}^S \Big (y_i - \sigma (\langle \w_t , \x_i\rangle) \Big )\x_i \Big ) - \w}^2\\
\nonumber &= -\frac{2}{S} \sum_{i=1}^S \Big \langle \Big (y_i - \sigma (\langle \w_t , \x_i\rangle) \Big)\x_i, \w_t - \w \Big  \rangle - \norm{\frac{1}{S}\sum_{i=1}^S \Big ( y_i - \sigma (\langle \w_t , \x_i\rangle ) \Big )\x_i}^2\\
&= \frac{2}{S} \sum_{i=1}^S \Big ( y_i - \sigma (\langle \w_t , \x_i\rangle) \Big ) \Big (\langle \w,\x_i \rangle  - \langle \w_t , \x_i \rangle \Big ) - \norm{\frac{1}{S}\sum_{i=1}^S \Big (y_i - \sigma (\langle \w_t , \x_i\rangle) \Big ) \x_i}^2
\end{align}

Analyzing the first term in the RHS above we get, 

\begin{align*} 
&\frac{2}{S} \sum_{i=1}^S \Big ( y_i - \sigma (\langle \w_t , \x_i\rangle) \Big ) \Big (\langle \w,\x_i \rangle  - \langle \w_t , \x_i \rangle \Big )\\
&= \frac{2}{S} \sum_{i=1}^S \Big (y_i - \sigma (\langle \w, \x_i \rangle ) + \sigma  (\langle \w, \x_i \rangle) - \sigma (\langle \w_t , \x_i\rangle) \Big ) \Big (\langle \w,\x_i \rangle  - \langle \w_t , \x_i \rangle \Big )\\
&= \frac{2}{S} \sum_{i=1}^S \Big \langle \Big (y_i - \sigma (\langle \w, \x_i \rangle ) \Big )\x_i, \w - \w_t \Big \rangle 
+   \frac{2}{S} \sum_{i=1}^S \Big (\sigma (\langle \w, \x_i \rangle) - \sigma (\langle \w_t , \x_i\rangle) \Big ) \Big ( \langle \x_i, \w \rangle - \langle \x_i , \w_t \rangle \Big )\\
&\geq - 2\eta W + \frac{2}{S} \sum_{i=1}^S  \Big ( \sigma (\langle \w, \x_i \rangle)  - \sigma (\langle \w_t , \x_i\rangle) \Big ) \Big ( \langle \x_i, \w \rangle - \langle \x_i , \w_t \rangle \Big)
\end{align*} 

In the first term above we have invoked the definition of $\eta$ and $W$ given in the Lemma. Further since we are given that $\sigma$ is non-decreasing and $L-$Lipschitz, we have for the second term in the RHS above, 

$\frac{2}{S} \sum_{i=1}^S  \Big ( \sigma (\langle \w, \x_i \rangle)  - \sigma (\langle \w_t , \x_i\rangle) \Big ) \Big ( \langle \x_i, \w \rangle - \langle \x_i , \w_t \rangle \Big)  \geq \frac{2}{SL} \sum_{i=1}^S  \Big ( \sigma (\langle \w, \x_i \rangle) - \sigma (\langle \w_t , \x_i\rangle) \Big )^2 =: \frac{2}{L} \tilde{L}_S (h_t)$

Thus together we have, 

\begin{eqnarray}\label{main1}
\frac{2}{S} \sum_{i=1}^S \Big (y_i - \sigma (\langle \w_t , \x_i\rangle) \Big ) \Big (\langle \w,\x_i \rangle  - \langle \w_t , \x_i \rangle \Big ) \geq -2\eta W  + \frac{2}{L} \tilde{L}_S (h_t) 
\end{eqnarray}

Now we look at the second term in the RHS of equation \ref{main} and that gives us, 

\begin{align}\label{main2}
\nonumber &\norm{\frac{1}{S}\sum_{i=1}^S \Big (y_i - \sigma (\langle \w_t , \x_i\rangle) \Big )\x_i}^2 = \norm{\frac{1}{S}\sum_{i=1}^S \Big (y_i - \sigma (\langle \w , \x_i\rangle) + \sigma(\langle \w , \x_i\rangle) - \sigma(\langle \w_t , \x_i\rangle ) \Big )\x_i}^2\\
\nonumber &\leq \norm{\frac{1}{S}\sum_{i=1}^S \Big (y_i - \sigma (\langle \w , \x_i\rangle) \Big )\x_i}^2 + 2 \norm{\frac{1}{S}\sum_{i=1}^S \Big (y_i - \sigma(\langle \w , \x_i\rangle) \Big)\x_i} \times \norm{\frac{1}{S}\sum_{i=1}^S \Big (\sigma(\langle \w , \x_i\rangle) - \sigma(\langle \w_t , \x_i\rangle) \Big)\x_i}\\
\nonumber &+ \norm{\frac{1}{S}\sum_{i=1}^S \Big (\sigma(\langle \w , \x_i\rangle) - \sigma(\langle \w_t , \x_i\rangle) \Big )\x_i}^2\\
 &\leq \eta^2 + 2\eta  \norm{\frac{1}{S}\sum_{i=1}^S \Big (\sigma(\langle \w , \x_i\rangle) - \sigma(\langle \w_t , \x_i\rangle) \Big)\x_i} + \norm{\frac{1}{S}\sum_{i=1}^S \Big (\sigma(\langle \w , \x_i\rangle) - \sigma(\langle \w_t , \x_i\rangle) \Big )\x_i}^2
\end{align}

Now by Jensen's inequality we have, 

$\norm{\frac{1}{S}\sum_{i=1}^S \Big (\sigma(\langle \w , \x_i\rangle) - \sigma(\langle \w_t , \x_i\rangle) \Big)\x_i}^2 \leq \frac{1}{S}\sum_{i=1}^S \Big (\sigma(\langle \w , \x_i\rangle) - \sigma(\langle \w_t , \x_i\rangle) \Big )^2 = \tilde{L}_S(h_t)$

And we have from the definition of $L$ and $W$,

\[ \norm{\frac{1}{S}\sum_{i=1}^S \Big ( \sigma(\langle \w , \x_i\rangle) - \sigma(\langle \w_t , \x_i\rangle) \Big )\x_i} \leq \frac{L}{S}\sum_{i=1}^S \norm{ \w - \w_t} \leq L \times W  \]

Substituting the above two into the RHS of equation \ref{main2} we have,

\begin{align}\label{main3} 
\norm{\frac{1}{S}\sum_{i=1}^S \Big (y_i - \sigma (\langle \w_t , \x_i\rangle) \Big )\x_i}^2 \leq \eta^2 + 2\eta LW + \tilde{L}_S(h_t)
\end{align} 

Now we substitute equations \ref{main1} and \ref{main3} into equation \ref{main} to get, 

\[  \norm{\w_t - \w}^2 - \norm{\w_{t+1} - \w}^2  \geq \Big ( -2\eta W  + \frac{2}{L} \tilde{L}_S (h_t) \Big ) -(\eta^2 + 2\eta LW + \tilde{L}_S(h_t))    \] 

The above simplifies to the inequality we claimed in the lemma i.e,

\[ \norm{\w_{t+1} - \w}^2  \leq  \norm{\w_t - \w}^2 - \Big ( \frac 2 L -1  \Big )\tilde{L}_S (h_t) + \Big ( \eta^2 + 2\eta W (L +1 )  \Big )  \] 
\end{proof} 

\section{Proof of Theorem \ref{GLMTronReLU}}\label{app:GLMTronReLU} 

\begin{proof}
The equation defining the labels in the data-set i.e $y_i = \sigma (\langle \w_*, \x_i \rangle ) + \xi_i$ with $\vert \xi_i \vert \leq \theta$ along with our assumption that, $\norm{\x_i} \leq 1$ implies that , $\norm{\frac {1}{S} \sum_{i=1}^S \Big ( y_i - \sigma (\langle \w_*, \x_i \rangle) \Big )\x_i} \leq \theta$. Thus we can invoke the above Lemma \ref{stepdecrease} between the $t^{th}$ and the $t+1^{th}$ iterate with $\eta = \theta$ and $W$ as defined there to get,   

\[  \norm{\w_{t+1} - \w_*}^2 \leq \norm{\w_t - \w_*}^2 - \left [   \Big ( \frac {2}{L} - 1 \Big ) \tilde{L}_S(h_t) - (\theta^2 + 2\theta W(L + 1) ) \right ] \]

If $\tilde{L}_S(h_t) \geq \frac{L}{2-L} \Big ( \epsilon +  (\theta^2 + 2\theta W(L + 1) ) \Big )$  then, $\norm{\w_{t+1} - \w}^2 \leq \norm{\w_t - \w}^2 - \epsilon$.  
Thus if the above lowerbound on $\tilde{L}_s(h_t)$ holds in the $t^{th}$ step then at the start of the $(t+1)^{th}$ step we still satisfy, $\norm{\w_{t+1}- \w} \leq W$. Since the iterations start with $\w_1=0$, in the first step we can choose $W = \norm{\w_*}$. Thus in at most $\frac{\norm{\w}}{\epsilon}$ steps of the above kind we can have a decrease in distance of the iterate to $\w$.   

Thus in at most $T = \frac {\norm{\w}}{\epsilon}$ steps we have attained, 

\[ \tilde{L}_S(h_T) = \frac {1}{S} \sum_{i=1}^S \Big ( \sigma (\langle \w_T, x_i\rangle ) - \sigma (\langle  \w_* , \x_i\rangle) \Big)^2 < \frac{L}{2-L} \Big ( \epsilon +  (\theta^2 + 2\theta W(L + 1) ) \Big ) \]

And that proves the theorem we wanted.  
\qed
\end{proof}

\section{Proof of Theorem \ref{GLMTronReLUNoise}}\label{app:GLMTronReLUNoise}

\begin{proof}
Let the true empirical risk at the $T^{th}-$iterate be defined as, 

\[ L_S(h_T) = \frac {1}{S} \sum_{i=1}^S \Big (\sigma(\langle \w_T, x_i\rangle ) - \sigma (\langle  \w_* , \x_i\rangle) - \xi_i \Big )^2 \] 

Then it follows that,  

\begin{align*}
    &\tilde{L}_S(h_T) - L_S(h_T) = \frac {1}{S} \sum_{i=1}^S \Big ( \sigma (\langle \w_T, x_i\rangle ) - \sigma (\langle  \w_* , \x_i\rangle) \Big )^2 - \frac {1}{S} \sum_{i=1}^S \Big ( \sigma (\langle \w_T, x_i\rangle ) - \sigma(\langle  \w_* , \x_i\rangle) - \xi_i \Big)^2\\
    =& \frac {1}{S} \sum_{i=1}^S \xi_i \Big ( -\xi_i + 2 \sigma (\langle \w_T, x_i\rangle ) - 2 \sigma (\langle  \w_* , \x_i\rangle) \Big )
    = -\frac {1}{S} \sum_{i=1}^S \xi_i^2 +  \frac {2}{S} \sum_{i=1}^S \xi_i \Big ( \sigma (\langle \w_T, x_i\rangle ) - \sigma (\langle  \w_* , \x_i\rangle) \Big )
\end{align*}

By the assumption of $\xi_i$ being an unbiased noise the second term vanishes when we compute,\\ $\mathbb{E}_{\{ (\x_i , \xi_i)\}_{i=1,\ldots S}} \Big [ \tilde{L}_S(h_T) - L_S(h_T) \Big ]$ Thus we are led to,
\[ \mathbb{E}_{\{ (\x_i , \xi_i)\}_{i=1,\ldots S}} \Big [ \tilde{L}_S(h_T) - L_S(h_T) \Big ] = -\frac {1}{m}  \mathbb{E}_{\{ \xi_i\}_{i=1,\ldots S}} \Big [ \sum_{i=1}^m \xi_i^2 \Big ] = -\frac{1}{m} \sum_{i=1}^m \mathbb{E}_{\{ \xi_i\}} \Big [ \xi_i^2 \Big ] = - \E_{\xi} [\xi^2] \] 

For $T = \frac{\norm{\w}}{\epsilon}$, we invoke the upperbound on $\tilde{L}_S(h_T)$ from the previous theorem and we can combine it with the above to say, 

\[ \mathbb{E}_{\{ (\x_i , \xi_i)\}_{i=1,\ldots S}} \Big [ L_S(h_T) \Big ] \leq \E_{\xi} [\xi^2] + \frac{L}{2-L} \Big ( \epsilon +  (\theta^2 + 2\theta W(L + 1) ) \Big ) \] 

This proves the theorem we wanted. 
\qed
\end{proof}

\section{Reviewing a variant of the Azuma-Hoeffding Inequality}

\begin{theorem}\label{AHSM} 
Suppose we have a real valued discrete stochastic process given as, $\{X_0,X_i,\ldots\}$ and the following properties hold,

\begin{itemize}
    \item $X_0$ is a constant  
    \item (The bounded difference property) $\forall i =0,1,\ldots$ $\exists c_i >0$ s.t $\vert X_i - X_{i-1}  \vert \leq c_i$
    \item (The super-martingale property) $\forall i =0,1,\ldots$, $\E \Big [ X_i - X_{i-1} \mid {\cal F}_{i-1} \Big ] \leq 0$ with ${\cal F}_{i-1} = \sigma \Big ( \Big \{ X_0,\ldots,X_{i-1} \Big \} \Big )$
\end{itemize}

Then for any $\lambda >0$ and a positive integer $n$ we have the following concentration inequality, 

\[ \mathbb{P} \Big [ X_n - X_0 \geq \lambda  \Big ] \leq e^{-\frac{1}{2} \frac{\lambda^2}{\sum_{i=1}^n c_i^2} } \]
\end{theorem}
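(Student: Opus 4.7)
The plan is to execute the standard Chernoff-type argument for super-martingales with bounded increments, with the one subtlety being that we have a super-martingale (conditional mean $\leq 0$) rather than a true martingale.

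First I would introduce the increments $Y_i := X_i - X_{i-1}$ so that $X_n - X_0 = \sum_{i=1}^n Y_i$, with $|Y_i| \leq c_i$ almost surely and $\mu_i := \E[Y_i \mid \mathcal{F}_{i-1}] \leq 0$. The strategy is to bound the moment generating function $\E[e^{s(X_n - X_0)}]$ for any $s > 0$, and then optimize over $s$ after applying Markov's inequality in the form $\mathbb{P}[X_n - X_0 \geq \lambda] \leq e^{-s\lambda}\, \E[e^{s(X_n - X_0)}]$.

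The key step is to show that for every $s > 0$, $\E[e^{sY_i} \mid \mathcal{F}_{i-1}] \leq e^{s^2 c_i^2 / 2}$. I would obtain this by the convexity trick: since $e^{s(\cdot)}$ is convex and $Y_i \in [-c_i, c_i]$, we have the pointwise bound $e^{sY_i} \leq \tfrac{c_i - Y_i}{2c_i} e^{-sc_i} + \tfrac{c_i + Y_i}{2c_i} e^{sc_i}$. Taking conditional expectation yields $\E[e^{sY_i} \mid \mathcal{F}_{i-1}] \leq \cosh(sc_i) + \tfrac{\mu_i}{2c_i}(e^{sc_i} - e^{-sc_i})$. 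Because $s > 0$ makes $e^{sc_i} - e^{-sc_i} > 0$ and $\mu_i \leq 0$ by the super-martingale assumption, the second term is non-positive and drops out, leaving $\E[e^{sY_i} \mid \mathcal{F}_{i-1}] \leq \cosh(sc_i) \leq e^{s^2 c_i^2/2}$ by the standard Taylor-series comparison. This is precisely where the super-martingale hypothesis is used in place of the usual martingale hypothesis; absent the sign condition on $\mu_i$ one would need a centering argument.

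Next, iterating the tower property gives $\E[e^{s(X_n - X_0)}] = \E\bigl[e^{s\sum_{i<n} Y_i} \cdot \E[e^{sY_n}\mid\mathcal{F}_{n-1}]\bigr] \leq e^{s^2 c_n^2/2}\, \E[e^{s\sum_{i<n} Y_i}]$, and inducting down through $i = n-1, \ldots, 1$ collapses the product into $e^{s^2 \sum_{i=1}^n c_i^2 / 2}$; here I use that $X_0$ is a constant so there is no residual term. Combining with Markov gives $\mathbb{P}[X_n - X_0 \geq \lambda] \leq \exp\bigl(-s\lambda + \tfrac{s^2}{2}\sum_{i=1}^n c_i^2\bigr)$, and the optimal choice $s = \lambda / \sum_{i=1}^n c_i^2$ (which is positive, justifying the sign hypothesis used above) yields the claimed $\exp\bigl(-\lambda^2 / (2\sum_{i=1}^n c_i^2)\bigr)$.

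I do not expect any serious obstacle: the only non-routine point is making sure the super-martingale assumption is exploited correctly (which forces $s > 0$, hence the one-sided nature of the tail bound), and that is handled transparently by the convexity inequality above.
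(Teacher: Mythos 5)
Your proposal is correct and follows essentially the same route as the paper's proof: the chord-above-convexity bound on $e^{s(X_i-X_{i-1})}$, the observation that the linear term has nonnegative coefficient for $s>0$ so the super-martingale condition kills it, the bound $\cosh(sc_i)\le e^{s^2c_i^2/2}$, iteration by the tower property using that $X_0$ is constant, and finally Markov's inequality optimized at $s=\lambda/\sum_{i=1}^n c_i^2$. There is no substantive difference between the two arguments.
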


\begin{proof}
We note that for any $c,t >0$, the function $f(x) = e^{tx}$ lies below the straight line connecting the two points $(-c,f(-c))$ and $(c,f(c))$. This gives the inequality, $e^{tx} \leq e^{-tc} + \Big ( \frac{e^{tc} - e^{-tc}}{2c} \Big ) (x + c)$. This simplifies to, 

\begin{align}\label{basic}
e^{tx} \leq \frac{1}{2c} (e^{tc} - e^{-tc})x + \Big ( \frac{e^{tc} + e^{-tc}}{2} \Big )    
\end{align}

Note that the above inequality holds only when $\vert x \vert \leq c$ Now we can invoke the bounded difference property of $\vert X_i - X_{i-1} \vert \leq c_i$ and use equation \ref{basic} with $x = X_i - X_{i-1} $ and $c = c_i$ to get,  

\[ \E \Big [ e^{t(X_i - X_{i-1})} \mid {\cal F}_{i-1}  \Big ] \leq \E \Big [ \frac{e^{tc_i} - e^{-tc_i}}{2c_i} \Big (X_i - X_{i-1}\Big )  + \Big ( \frac{e^{tc_i} + e^{-tc_i}}{2} \Big )     \mid {\cal F}_{i-1} \Big ] \leq \frac{e^{tc_i} + e^{-tc_i}}{2} \] 

The last inequality follows from the given property that, $\E \Big [ X_i - X_{i-1} \mid {\cal F}_{i-1} \Big ] \leq 0$

Now we invoke the inequality $\frac{e^x + e^{-x}}{2} \leq e^{\frac {x^2}{2}}$ on the RHS above to get, 

\[ \E \Big [ e^{t(X_i - X_{i-1})} \mid {\cal F}_{i-1}  \Big ] \leq e^{\frac {t^2 c_i^2}{2} } \]

Further since $X_{i-1}$ is ${\cal F}_{i-1}$ measurable we can write the above as, 
$\E \Big [ e^{tX_i}  \mid {\cal F}_{i-1}  \Big ] \leq e^{tX_{i-1}} e^{\frac {t^2 c_i^2}{2} }$

Now we recurse the above as follows, 

\[ \E \Big [ e^{tX_n} \Big ] = \E \Big [ \E \Big [ e^{tX_n} \mid {\cal F}_{n-1} \Big ] \Big ] \leq \E \Big [  e^{\frac{t^2 c_n^2}{2}} e^{tX_{n-1}} \Big ] = e^{\frac{t^2 c_n^2}{2}} \E \Big [ e^{tX_{n-1}} \Big ] \ldots \leq \prod_{i=1}^n e^{\frac {t^2 c_i^2}{2}} \E [ e^{tX_0} ]  \]

Now invoking that $X_0$ is a constant we can rewrite the above as, $\E \Big [ e^{t(X_n-X_0)} \Big ] \leq e^{\frac{t^2}{2} \sum_{i=1}^n c_i^2}$

Hence for any $\lambda >0$ we have by invoking the above,

\[ \mathbb{P} \Big [ X_n - X_0 \geq \lambda  \Big ] = \mathbb{P} \Big [ e^{t(X_n - X_0)} \geq e^{t\lambda} \Big ] \leq e^{-t\lambda} \E \Big [ e^{t(X_n - X_0)} \Big ] \leq e^{-t\lambda}e^{\frac{t^2}{2} \sum_{i=1}^n c_i^2}  \]

Now choose $t = \frac{\lambda}{\sum_{i=1}^n c_i^2}$ and we get, 
$\mathbb{P} \Big [ X_n - X_0 \geq \lambda  \Big ] \leq e^{-\frac{1}{2} \frac{\lambda^2}{\sum_{i=1}^n c_i^2} }$

\end{proof}

\newpage 
\section{A recursion estimate}\label{app:recurse}

\begin{lemma}\label{recurse} 
 Given constants $\eta', b,c_1, c_2 >0$ suppose one has a sequence of real numbers $X_1 = C,X_2,..$ s.t, 

\[ X_{t+1} \leq (1-\eta' b + \eta'^2 c_1)X_t + \eta'^2 c_2  \]

Given any $\epsilon' >0$ in the following two cases we have, $X_{\rm T} \leq \epsilon'^2$ 

\begin{itemize}
    \item If $c_2 =0, c_1 > \frac{b^2}{4}, C>0, \delta >0$,\\
    $\eta'= \frac {b}{2c_1}$ and ${\rm T} = O \Big ( \log \frac{C}{\epsilon'^2 } \Big )$
    
    
    \item If $0 < c_2 \leq c_1,  \epsilon'^2  \leq C, \frac{b^2}{c_1} \leq \Big ( \sqrt{\epsilon'}  + \frac{1}{\sqrt{\epsilon'}}  \Big )^2$,\\
    $\eta' = \frac{b}{c_1}\cdot\frac{\epsilon'^2}{(1+\epsilon'^2)}$ and 
${\rm T}  = O\Bigg(\frac{\log{\bigg(\frac{\epsilon'^2(c_1-c_2)}{Cc_1-c_2\epsilon'^2}}\bigg)}{\log{\bigg(1-\frac{b^2}{c_1}\cdot\frac{\epsilon'^2}{(1+\epsilon'^2)^2}\bigg)}}\Bigg)$
    . 
\end{itemize}
\end{lemma}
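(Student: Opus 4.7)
The plan is to iterate the recursion into a closed form and then solve for the smallest $T$ for which both the transient and the steady--state contribution sit below $\epsilon'^2$. Set $\alpha := 1 - \eta' b + \eta'^2 c_1$. A trivial induction on $t$ shows
\[
X_{T} \;\leq\; \alpha^{T-1}\,C \;+\; \eta'^{2} c_{2}\sum_{j=0}^{T-2}\alpha^{j},
\]
so as long as we can arrange $\alpha\in(0,1)$ the second term is geometric with limit $\eta'^{2}c_{2}/(1-\alpha)$, and the whole bound is $\alpha^{T-1}(C - \eta'^{2}c_{2}/(1-\alpha)) + \eta'^{2}c_{2}/(1-\alpha)$. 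So the game is: (i) choose $\eta'$ to make $\alpha\in(0,1)$ and (ii) check that the fixed point $\eta'^{2}c_{2}/(1-\alpha)$ is itself $\leq \epsilon'^{2}$; then pick $T$ just large enough that the transient contribution is also $\leq \epsilon'^{2}$.

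For Case 1 the sum vanishes because $c_2=0$, so the recursion is pure geometric contraction $X_{T}\leq \alpha^{T-1}C$. I would choose $\eta'$ to minimise $\eta'\mapsto 1-\eta' b+\eta'^{2}c_{1}$ over $\eta'>0$, which by the standard quadratic vertex computation gives $\eta'=b/(2c_{1})$ and $\alpha = 1 - b^{2}/(4c_{1})$. The hypothesis $c_{1}>b^{2}/4$ is exactly what makes $\alpha\in(0,1)$; enforcing $\alpha^{T-1}C\leq\epsilon'^{2}$ and taking logarithms gives $T-1 \geq \log(C/\epsilon'^{2})/(-\log(1-b^{2}/(4c_{1})))$, which is $O(\log(C/\epsilon'^{2}))$ as claimed.

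For Case 2 I would substitute $\eta' = (b/c_{1})\cdot \epsilon'^{2}/(1+\epsilon'^{2})$ directly into $1-\alpha = \eta'(b-\eta' c_{1})$ and simplify:
\[
1-\alpha \;=\; \frac{b^{2}}{c_{1}}\cdot \frac{\epsilon'^{2}}{(1+\epsilon'^{2})^{2}},
\]
which is strictly positive, while the hypothesis $b^{2}/c_{1}\le (\sqrt{\epsilon'}+1/\sqrt{\epsilon'})^{2}$ is what bounds $1-\alpha$ away from above, keeping $\alpha\ge 0$. Then a direct computation shows
\[
\frac{\eta'^{2}c_{2}}{1-\alpha} \;=\; \frac{c_{2}}{c_{1}}\,\epsilon'^{2} \;\leq\; \epsilon'^{2},
\]
using $c_{2}\leq c_{1}$, so the fixed point already satisfies the target. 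It then only remains to make $\alpha^{T-1}\bigl(C - (c_{2}/c_{1})\epsilon'^{2}\bigr) \leq \epsilon'^{2}(1-c_{2}/c_{1})$, i.e.\ $\alpha^{T-1}\leq \epsilon'^{2}(c_{1}-c_{2})/(Cc_{1}-c_{2}\epsilon'^{2})$; the hypothesis $\epsilon'^{2}\leq C$ makes the right side at most $1$ so the logarithm is nonpositive, and dividing by $\log\alpha = \log(1-b^{2}\epsilon'^{2}/(c_{1}(1+\epsilon'^{2})^{2}))<0$ yields exactly the $T$ stated in the lemma.

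The only real obstacle is algebraic bookkeeping: keeping the signs of the two logarithms straight when solving for $T$ in Case 2, and verifying that the stated condition on $b^{2}/c_{1}$ is precisely what guarantees $\alpha\geq 0$ (and hence $\alpha\in[0,1)$) for the prescribed $\eta'$. Everything else is just iterating a contractive affine recurrence and reading off the geometric sum.
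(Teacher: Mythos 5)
Your proposal is correct and follows essentially the same path as the paper's proof: unroll the affine recursion to $X_T \leq \alpha^{T-1}C + \eta'^2 c_2 \frac{1-\alpha^{T-1}}{1-\alpha}$ with $\alpha = 1-\eta'b+\eta'^2 c_1$, verify $\alpha\in(0,1)$ for the prescribed $\eta'$ (minimizing $\alpha$ over $\eta'$ in Case~1, direct substitution in Case~2), check that the fixed point $\eta'^2 c_2/(1-\alpha)=(c_2/c_1)\epsilon'^2\leq\epsilon'^2$, and then solve the transient contribution for $T$. The only cosmetic difference is that the paper reaches the Case~2 step size by positing $\eta'=b/(\theta c_1)$ and solving for an admissible $\theta=1+1/\epsilon'^2$, whereas you plug in the given $\eta'$ and verify it works directly, which is a mild streamlining rather than a different argument.
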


\begin{proof}
Suppose we define $\alpha = 1 - \eta' b + \eta'^2 c_1$ and $\beta = \eta'^2 c_2$. Then we have by unrolling the recursion, 
\begin{align*}
X_t &\leq \alpha X_{t-1} + \beta \leq \alpha (\alpha X_{t-1} + \beta ) + \beta \leq  ...\leq \alpha^{t-1}X_1 + \beta \frac{1-\alpha^{t-1}}{1-\alpha}.
\end{align*} 

We recall that $X_1 = C$ to realize that our Lemma gets proven if we can find ${\rm T}$ s.t, 
\[\alpha^{{\rm T}-1} C + \beta \frac{1-\alpha^{{\rm T}-1}}{1-\alpha} = \epsilon'^{2} \] 



Thus we need to solve the following for ${\rm T}$ s.t,  $\alpha ^{{\rm T}-1} = \frac{\epsilon'^2  (1-\alpha) - \beta}{C(1-\alpha) - \beta }$

{\bf Case 1 : $\beta = 0$}
In this case we see that if $\eta >0$ is s.t $\alpha \in (0,1)$ then
$\alpha ^{{\rm T}-1} = \frac{\epsilon'^2 }{C} \implies {\rm T} = 1 + \frac{\log \frac{C}{\epsilon^2 \delta} }{\log \frac {1}{\alpha} }$

But $\alpha = \eta'^2 c_1 - \eta' b + 1 = \Big ( \eta' \sqrt{c_1} - \frac{b}{2\sqrt{c_1}} \Big )^2 + \Big (  1 - \frac{b^2}{4c_1} \Big )$ Thus $\alpha \in (0,1)$ is easily ensured by choosing $\eta' = \frac{b}{2c_1}$ and ensuring $c_1 > \frac{b^2}{4}$.
This gives us the first part of the theorem.  

{\bf Case $2$ : $\beta >0$}

This time we are solving, 

\begin{align}\label{betanonz}
\alpha^{{\rm T}-1} = \frac{\epsilon'^2  (1-\alpha)-\beta}{C(1-\alpha)-\beta}
\end{align} 

Towards showing convergence, we want to set $\eta'$ such that $ \alpha^{t-1} \in (0,1)$ for all $t$. Since $\epsilon'^2  < C$, it is sufficient to require, 

\begin{align*}
    \beta < \epsilon'^2\delta(1-\alpha) &\implies \alpha < 1 - \frac{\beta}{\epsilon'^2}
    \Leftrightarrow 1-\frac{b^2}{4c_1} + \Big ( \eta' \sqrt{c_1} - \frac{b}{2\sqrt{c_1}} \Big )^2 \leq 1 - \frac{\beta}{\epsilon'^2}\\ 
    &\Leftrightarrow \frac{\eta'^2c_2}{\epsilon'^2} \leq \frac{b^2}{4c_1} - \Big ( \eta' \sqrt{c_1} - \frac{b}{2\sqrt{c_1}} \Big )^2 \Leftrightarrow \frac{c_2}{\epsilon'^2} \leq \frac{b^2}{4c_1\eta'^2} - \Big ( \sqrt{c_1} - \frac{b}{2\sqrt{c_1}\eta'} \Big )^2 
\end{align*}

Set $\eta' = \frac{b}{\theta c_1}$ for some constant $\theta>0$ to be chosen such that,
\begin{align*}
    \frac{c_2}{\epsilon'^2} \leq \frac{b^2}{4c_1 \cdot\frac{b^2}{\theta^2c_1^2}} - \Big ( \sqrt{c_1} - \frac{b}{2\sqrt{c_1}\cdot\frac{b}{\theta c_1}} \Big )^2 \implies \frac{c_2}{\epsilon'^2} \leq c_1\frac{\theta^2}{4} - c_1\cdot\Big (\frac{\theta}{2} - 1 \Big )^2 \implies  c_2 \leq \epsilon'^2 \cdot c_1 (\theta-1)
\end{align*}

Since $c_2 \leq c_1$ we can choose, $\theta = 1+\frac{1}{\epsilon'^2}$ and we have  $\alpha^{t-1} < 1$.  
Also note that, 
\begin{align*}
\alpha & = 1+ \eta'^2c_1 - \eta'b = 1+ \frac{b^2}{\theta^2c_1^2} - \frac{b^2}{\theta c_1}  = 1 - \frac{b^2}{c_1}\cdot\big(\frac{1}{\theta}-\frac{1}{\theta^2}\big). \\
& = 1 - \frac{b^2}{c_1}\cdot \frac{\epsilon'^2}{(1+\epsilon'^2)^2} = 1 - \frac{b^2}{c_1} \cdot \frac{1}{\Big ( \epsilon'  + \frac{1}{\epsilon' }  \Big )^2} 
\end{align*}

And here we recall that the condition that the lemma specifies on the ratio $\frac{b^2}{c_1}$ which ensures that the above equation leads to $\alpha >0$ 

Now in this case we get the given bound on ${\rm T}$ in the Lemma by solving equation \ref{betanonz}. To see this, note that,
\begin{align*}
\alpha = 1-\frac{b^2}{c_1}\cdot\frac{\epsilon'^2}{(1+\epsilon'^2)^2}
\text{ and }
\beta = \eta'^2 c_2 = \frac{b^2}{\theta^2 c_1} \cdot c_2 = \frac{b^2c_2}{c_1}\cdot \frac{(\epsilon'^2)^2}{(1+\epsilon'^2)^2}.
\end{align*}

Plugging the above into equation \ref{betanonz} we get, 
$\alpha^{{\rm T}-1} = \frac{\epsilon'^2\delta(c_1-c_2)}{Cc_1-c_2\epsilon'^2} 
\implies {\rm T} = 1 + \frac{\log{\bigg(\frac{\epsilon'^2(c_1-c_2)}{Cc_1-c_2\epsilon'^2}}\bigg)}{\log{\bigg(1-\frac{b^2}{c_1}\cdot\frac{\epsilon'^2}{(1+\epsilon'^2)^2}\bigg)}}$
\end{proof}

\end{subappendices}

\chapter{\vspace{10pt} Sparse Coding and Autoencoders}\label{chapaut} 

\section{Introduction}
One of the fundamental themes in learning theory is to consider data being sampled from a generative model and to provide efficient methods to recover the original model parameters exactly or with tight approximation guarantees. Classic examples include learning a mixture of gaussians \citep{moitra2010settling}, certain graphical models \citep{anandkumar2014tensor}, full rank square dictionaries \citep{spielman2012exact,blasiok2016improved} and overcomplete dictionaries \citep{agarwal2014learning,arora2014more,arora2015simple, arora2014new} The problem is usually distilled down to a non-convex optimization problem whose solution can be used to obtain the model parameters. With these hard non-convex problems it has been difficult to find any universal view as to why sometimes gradient descent gives very good and sometimes even exact recovery. In recent times progress has been made towards achieving a geometric understanding of the landscape of such non-convex optimization problems \citep{ge2017no}, \citep{mei2016landscape}, \citep{wu2017towards}. The corresponding question of parameter recovery for neural nets with one layer of activation has been solved in some special cases, \citep{du2017convolutional, allen2017natasha2, janzamin2015beating,sedghi2014provable, li2017convergence, tian2017analytical,zhang2017electron}. Almost all of these cases are in the supervised setting where it has also been assumed that the labels are being generated from a net of the same architecture as is being trained. In contrast to these works we address an unsupervised learning problem, and possibly more realistically, we do not tie the data generation model (sensing of sparse vectors by an overcomplete incoherent dictionary) to the neural architecture being analyzed except for assuming knowledge of a few parameters about the ground truth. 

Here we specialize to the generative model of  {\it dictionary learning/sparse coding} where one receives samples of vectors $y \in \mathbb{R}^n$ that have been generated as $y = A^*x^*$ where $A^* \in \mathbb{R}^{n \times h}$ and $x^* \in \mathbb{R}^h$. We typically assume that the number of non-zero entries in $x^*$ to be no larger than some function of the dimension $h$ and that $A^*$ satisfies certain incoherence properties. The question now is to recover $A^*$ from samples of $y$. There have been renewed investigations into the hardness of this problem \citep{tillmann2015computational} and many former results have recently been reviewed in these lectures \cite{GilbertLectures}. This question has been a cornerstone of learning theory ever since the ground-breaking paper by Olshausen and Field (\cite{olshausen1997sparse}) (a recent review by the same authors can be found in \cite{olshausen2005close}). Over the years many algorithms have been developed to solve this problem and a detailed comparison among these various approaches can be found in \cite{blasiok2016improved}.

~\\ 
{\em Autoencoder} neural networks that map $\mathbb{R}^n \rightarrow \mathbb{R}^n$ were defined in Section \ref{def:autoencoder}. These networks have been used extensively (\cite{baldi2012autoencoders,bengio2013generalized, rifai2011contractive, vincent2008extracting,   vincent2010stacked}) in the past for unsupervised feature learning tasks, and have been found to be successful in generating discriminative features \citep{coates2011analysis}. A number of different autoencoder architectures and regularizers have been proposed which purportedly induce sparsity, at the hidden layer \citep{arpit2016regularized,coates2011importance,li2016sparseness, ng2011sparse}. There has also been some investigation into what autoencoders learn about the data distribution \citep{alain2014regularized}.
~\\ \\
Olshausen and Field had, as early as $1996$, already made the connection between sparse coding and training neural architectures and in today's terminology this problem is very naturally reminiscent of the architecture of an autoencoder \citep{olshausen1996emergence}. However, to the best of our knowledge, there has not been sufficient progress to rigorously establish whether autoencoders can do sparse coding. 
~\\ \\
In this work, we present our progress towards bridging the above mentioned mathematical gap. To the best of our knowledge, there is no theoretical evidence (even under the usual generative assumptions of sparse coding) that the stationary points of any of the usual squared loss functions (with or without any of the usual regularizers) have any resemblance to the original dictionary that is being sought to be learned. {\bf The main point of this paper is to rigorously prove that for autoencoders with ReLU activation, the standard squared loss function has a neighborhood around the dictionary $A^*$ where the norm of the expected gradient is very small (for large enough sparse code dimension $h$). Thus, all points in a neighborhood of $A^*$, including $A^*$, are all asymptotic critical points of this standard squared loss.} 
We supplement our theoretical result with experimental evidence for it in Section~\ref{sec:experiments}, which also strongly suggests that the standard squared loss function has a local minimum in a neighborhood around $A^*$. We believe that our results provide theoretical and experimental evidence that the sparse coding problem can be tackled by training autoencoders.

\subsection {A motivating experiment on MNIST using TensorFlow}

We used TensorFlow \citep{abadi2016tensorflow} to train two ReLU autoencoders mapping $\R^{784} \rightarrow \R^{784}$ (since the MNIST images vectorize to elements in $\R^{784}$). These networks were trained on a subset of the MNIST dataset of handwritten digits. One of the nets had a single hidden layer of size $10000$ and the other one had two hidden layers of size $5000$ and $784$ (and a fixed identity matrix giving the output from the second layer of activations). In both the cases the weights of the encoder and decoder were maintained as transposes of each other. We trained the autoencoders on the standard squared loss function using RMSProp \cite{tieleman2015rmsprop}. The training was done on $6000$ images of the digits $6$ and $7$ from the MNIST dataset. In the following panel we show four pairs (two for each net) of ``reconstructed" image i.e output of the trained net when its given as input the ``actual" photograph as input. 

\begin{figure}[h] 
\centering
\includegraphics[scale=0.30]{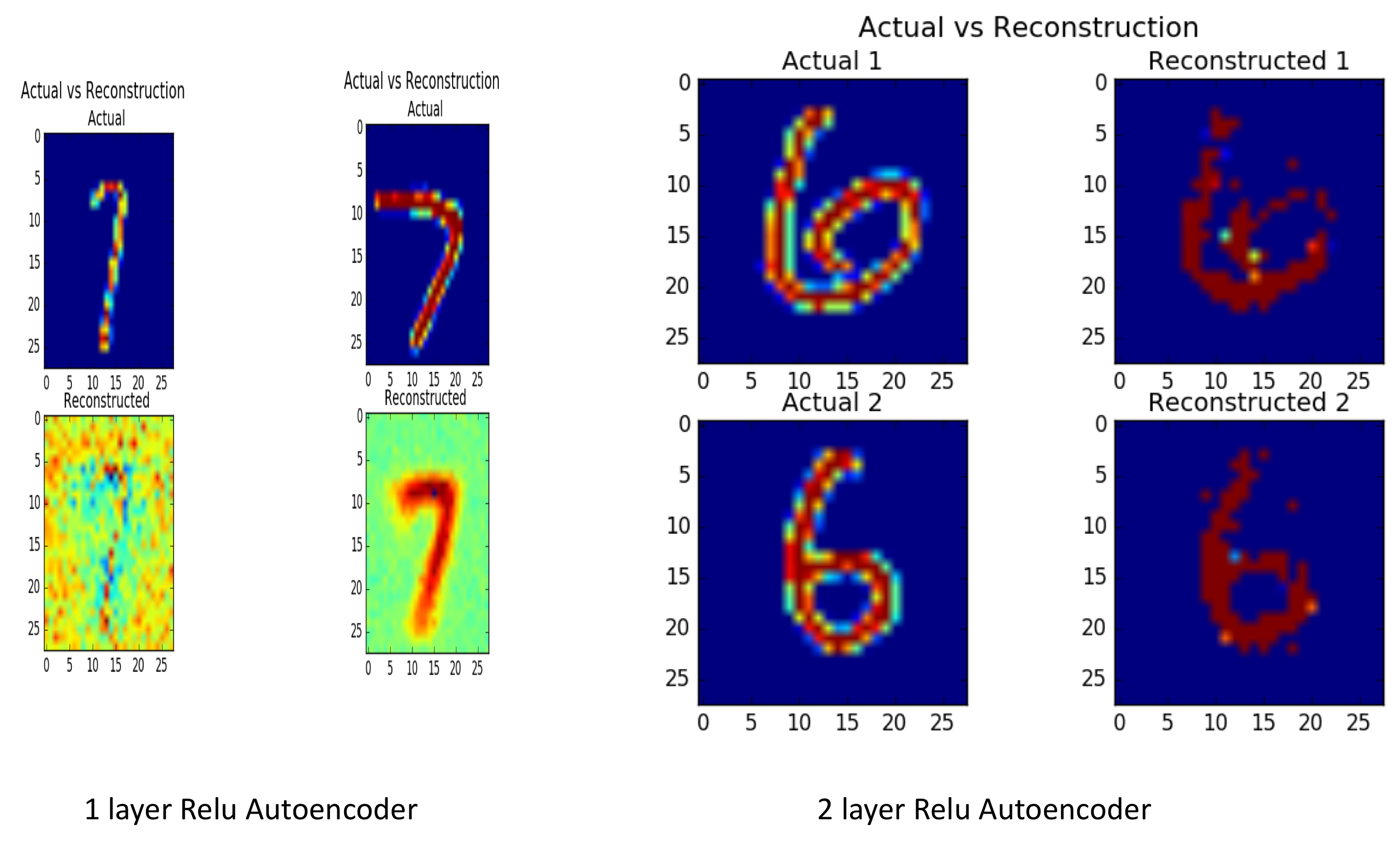}
\end{figure}\label{demo}
\newpage 
In our opinion, the above figures add support to the belief that a single and a double layer ReLU activated $\R^n \rightarrow \R^n$ network can learn an implicit high dimensional structure about the handwritten digits dataset. In particular this demonstrates that though adding more hidden layers obviously helps enhance the reconstruction ability, the single hidden layer autoencoder do hold within them significant power for unsupervised learning of representations. Unfortunately analyzing the RMSProp update rule used in the above experiment seems to be currently beyond our analytic means - though in the next chapter we shall make some progress about understanding this algorithm. However, we take inspiration from these experiments to devise a different mathematical set-up which is much more amenable to analysis taking us towards a better understanding of the power of autoencoders.

\section{Introducing the neural architecture and the distributional assumptions}\label{sec:defn}

For the autoencoders we continue to use the same variables as defined in equation \ref{eqn:autoencoder}.


\paragraph{Assumptions on the dictionary and the sparse code.}\label{assumptions} We assume that our signal $y$ is generated using sparse linear combinations of atoms/vectors of an overcomplete dictionary, i.e., $y = A^* x^*$, where $A^* \in \mathbb{R}^{n \times h}$ is a dictionary, and $x^* \in (\mathbb{R}^{\geq 0})^h$ is a non-negative sparse vector, with at most $k=h^p$ (for some $0 < p < 1$) non zero elements. The columns of the original dictionary $A^*$ (labeled as $\{ A^*_i \}_{i=1}^{h}$) are assumed to be normalized and we parameterize its incoherence property as,  $\max_{\substack{i,j=1,..,h \\ i \neq j}} \vert \langle A^*_i, A^*_j \rangle \vert \leq \frac{\mu}{\sqrt{n}} = h^{-\xi}$ for some $\xi >0$.  
\newline \newline 
We assume that the sparse code $x^*$ is sampled from a distribution with the following properties. We fix a set of possible supports of $x^*$, denoted by $\mathbb{S} \subseteq 2^{[h]}$, where each element of $\mathbb{S}$ has at most $k = h^p$ elements. We consider any arbitrary discrete probability distribution $D_{\mathbb{S}}$ on $\mathbb{S}$ such that the probability $q_1 := \mathbb{P}_{S \sim \mathbb{S}}[i \in S]$ is independent of $i \in [h]$, and the probability $q_{2}  := \mathbb{P}_{S \in \mathbb{S}}[i,j \in S]$ is independent of $i,j \in [h]$. A special case is when $\mathbb{S}$ is the set of all subsets of size $k$, and $D_{\mathbb{S}}$ is the uniform distribution on $\mathbb{S}$. For every $S \in \mathbb{S}$ there is a distribution say $D_S$ on $(\mathbb{R}^{\geq 0})^{h}$ which is supported on vectors whose support is contained in $S$ and which is uncorrelated for pairs of coordinates $i,j \in S$. Further, we assume that the distributions $D_S$ are such that each coordinate $x^*_i$ is compactly supported over an interval $[a(h),b(h)]$, where $a(h)$ and $b(h)$ are independent of both $i$ and $S$ but will be functions of $h$. Moreover, $m_1(h) := \mathbb{E}_{x^*\sim D_S}[x_i^*]$, and $m_2(h) :=\mathbb{E}_{x^*\sim D_S}[x_i^{*2}]$ are assumed to be independent of both $i$ and $S$ but allowed to depend on $h$. For ease of notation henceforth we will keep the $h$ dependence of these variables implicit and refer to them as $a, b, m_1$ and $m_2$. All of our results will hold in the special case when $a,b, m_1, m_2$ are constants (no dependence on $h$).

\section{Main Results}
 
\subsection{Recovery of the support of the sparse code by a layer of ReLUs}\label{sec:results-support}

First we prove the following theorem which precisely quantifies the sense in which a layer of ReLU gates is able to recover the support of the sparse code when the weight matrix of the deep net is close to the original dictionary. We recall that the size of the support of the sparse vector $x^*$ is $k=h^p$ for some $0 < p < 1$. We also recall the parameters $a,b$ as defining the support of the marginal distribution of each coordinate of $x^*$ and $m_1$ is the expected value of this marginal distribution (recall that none of these depend on the coordinate or the actual support). These parameters will be referenced in the results below.

\begin{theorem}\label{sec:theorem:support}
~\\
We recall from  equation \ref{eqn:autoencoder} that our autoencoding neural net under consideration is mapping, 
\begin{align*}
    \R^n &\rightarrow \R^n\\  
     \y  &\mapsto \W^T \rr \text{ where } \rr = \relu \left( \W \y - \epsilon \right)
\end{align*}
where the $h$ columns of $W^\top$ are denoted as $\{ W_i \in \R^n \mid i = 1,\ldots,h \}$

Let each column of $W^\top$ be within a $\delta$-ball of the corresponding column of $A^*$, where $\delta = O \left( h^{-p - \nu^2} \right)$ for some $\nu>0$, such that $p + \nu^2 < \xi$ (where $h^{-\xi}$ is the coherence parameter). We further assume that $a = \Omega \left( b h^{ - \nu^2} \right)$. Let the bias of the hidden layer of the autoencoder as given above, be $\epsilon = 2 m_1 k \left( \delta + \frac{\mu}{\sqrt{n}} \right)$. Then $r_i \neq 0$ if $i \in \textrm{supp}(x^*)$, and $r_i =0$ if $i \notin \textrm{supp}(x^*)$ with probability at least $1 - \exp \left ( -\frac{2 h^p m_1^2}{(b-a)^2} \right )$ (with respect to the distribution on $x^*$). \end{theorem}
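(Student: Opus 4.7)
The plan is to unify the two cases of the theorem (``active'' coordinate $i\in S:=\mathrm{supp}(x^*)$ and ``inactive'' coordinate $i\notin S$) via a single Hoeffding concentration for the quantity $T:=\sum_{j\in S} x^*_j$. Since the coordinates of $x^*$ restricted to $S$ are independent (by the assumption on $D_S$) and each lies in $[a,b]$ with mean $m_1$, and since $|S|\le k=h^p$, Hoeffding's inequality gives
\[
\Pr\!\big[T>2km_1\big]\;\le\;\Pr\!\big[T-\mathbb{E}[T]>km_1\big]\;\le\;\exp\!\left(-\tfrac{2(km_1)^2}{k(b-a)^2}\right)\;=\;\exp\!\left(-\tfrac{2h^p m_1^2}{(b-a)^2}\right),
\]
so with the probability stated in the theorem we may condition on the event $\{T\le 2km_1\}$.

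Next I would expand $W_i^\top y=\sum_{j\in S}(W_i^\top A^*_j)\,x^*_j$ using the perturbation decomposition $W_i=A^*_i+e_i$ with $\|e_i\|\le\delta$. Because the columns of $A^*$ are unit-normed and $\mu/\sqrt n$-incoherent, Cauchy--Schwarz gives $W_i^\top A^*_i\ge 1-\delta$ and $|W_i^\top A^*_j|\le \mu/\sqrt n+\delta$ for $j\neq i$. The crucial step, which exploits the non-negativity of $x^*$ and removes any need for a union bound over $i$, is
\[
\Big|\sum_{j\in S,\,j\neq i}(W_i^\top A^*_j)\,x^*_j\Big|\;\le\;\bigl(\tfrac{\mu}{\sqrt n}+\delta\bigr)\,T\;\le\;2m_1 k\bigl(\tfrac{\mu}{\sqrt n}+\delta\bigr)\;=\;\epsilon,
\]
valid on the conditioned event and uniformly in $i$. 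The choice of bias $\epsilon=2m_1k(\delta+\mu/\sqrt n)$ is precisely calibrated so that this ``cross-talk'' bound equals the threshold of the ReLU.

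The two cases then fall out immediately. If $i\notin S$ the diagonal term is absent, so $|W_i^\top y|\le\epsilon$ and hence $r_i=\relu(W_i^\top y-\epsilon)=0$. If $i\in S$ we have $W_i^\top y \geq (1-\delta)\,x^*_i - \epsilon \geq (1-\delta)\,a-\epsilon$, giving
\[
W_i^\top y-\epsilon\;\geq\;(1-\delta)\,a-2\epsilon.
\]
It remains to check this is strictly positive; this is where the hypotheses on the scales of $a$, $\delta$, and $\mu/\sqrt n$ are used. Plugging in $k=h^p$, $\delta=O(h^{-p-\nu^2})$, and $\mu/\sqrt n=h^{-\xi}$ with $\xi>p+\nu^2$ gives $\epsilon=O(m_1 h^{-\nu^2})$, while $a=\Omega(bh^{-\nu^2})\ge\Omega(m_1h^{-\nu^2})$ since $m_1\le b$.

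The only genuine obstacle is this last constant-factor accounting: the $\Omega$ hidden in the hypothesis $a=\Omega(bh^{-\nu^2})$ must be taken large enough (strictly greater than $4$, say, after absorbing the $1-\delta$ factor) to guarantee $(1-\delta)a>2\epsilon$. Once that is arranged, $r_i>0$ for every $i\in S$ and $r_i=0$ for every $i\notin S$ simultaneously on the single Hoeffding event above, proving the claim with the stated probability. Everything else is routine bookkeeping involving only incoherence, the perturbation bound $\|W_i-A^*_i\|\le\delta$, non-negativity of $x^*$, and the one concentration inequality.
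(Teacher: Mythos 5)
Your proof is correct and reaches the same probability bound, but it takes a genuinely different route from the paper. The paper splits the argument into two disjoint cases: for an active coordinate $i\in S$ it uses the purely \emph{deterministic} worst-case estimate $|Z_i|\le bk(\delta+\mu/\sqrt n)$ (each $x^*_j\le b$), so that $r_i\neq 0$ holds for \emph{every} realization of $x^*$ once $a \geq (b+2m_1)k(\delta+\mu/\sqrt n)/(1-\delta)$; for an inactive $i\notin S$ it runs a Chernoff/MGF bound on $Z_i=\sum_{j\in S}\langle W_i,A^*_j\rangle x^*_j$, factorizing the moment generating function across $j$, invoking Hoeffding's lemma on each factor, and optimizing over the Chernoff parameter, separately for each fixed $i$. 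You instead set up a single Hoeffding concentration for the scalar total mass $T=\sum_{j\in S}x^*_j$, then exploit non-negativity $x^*_j\ge 0$ together with the uniform incoherence bound $|\langle W_i,A^*_j\rangle|\le\delta+\mu/\sqrt n$ (valid for all $j\neq i$) to control the cross-talk by $(\delta+\mu/\sqrt n)\,T$ \emph{simultaneously for every $i$}. This buys you something the paper's per-$i$ Chernoff argument does not: full support recovery across all $h$ coordinates on a single event of the stated probability, with no union bound (and hence no $\log h$ degradation) -- arguably the more natural reading of the theorem, though the paper's per-$i$ statement is all that Lemma~\ref{good_proxy} downstream actually needs. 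What you trade away is that the active case becomes probabilistic rather than sure: the paper's ``$r_i\neq 0$ for $i\in S$'' holds deterministically, yours only on the good event, and your positivity condition $(1-\delta)a>4m_1k(\delta+\mu/\sqrt n)$ is calibrated slightly differently from the paper's $(1-\delta)a\ge(b+2m_1)k(\delta+\mu/\sqrt n)$; both are absorbed by the hypothesis $a=\Omega(bh^{-\nu^2})$ once the implied constant is taken large enough, exactly as you flag. One shared caveat: the stated assumption on $D_S$ is only that coordinates are \emph{uncorrelated}, but both proofs implicitly upgrade this to independence -- the paper when it factorizes $\mathbb{E}[\prod e^{t\cdot}]$, you when you invoke Hoeffding for the sum -- so neither argument is better off there.
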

~\\ As long as $\frac{h^p m_1^2}{(b-a)^2}$ is large, i.e., an increasing function of $h$, we can interpret this as saying that the probability of the adverse event is small, and we have successfully achieved support recovery at the hidden layer in the limit of large sparse code dimension.

\subsection{Asymptotic Criticality of the Autoencoder around $A^*$}\label{sec:result-loss}

In this work we analyze the following standard squared loss function for the autoencoder, 
\begin{align}\label{eqn:loss}
L = \frac{1}{2} \vert \vert \hat{y} - y \vert \vert^2
\end{align}
 If we consider a generative model in which $A^*$ is a square, orthogonal matrix and $x^*$ is a non-negative vector (not necessarily sparse), it is easily seen that the standard squared reconstruction error loss function for the autoencorder has a global minimum at $W = A^{*\top}$. In our generative model, however, $A^*$ is an incoherent and overcomplete dictionary.

\begin{theorem}\label{sec:theorem:critical} ({\bf The Main Theorem})
Assume that the hypotheses of Theorem \ref{sec:theorem:support} hold, and $p < \min \{ \frac{1}{2}, \nu^2\}$ (and hence $\xi > 2p$). Further, assume the distribution parameters satisfy $\textrm{exp} \left(\frac{h^p m_1^2}{2(b-a)^2}\right)$ is superpolynomial in $h$ (which holds, for example, when $m_1, a, b$ are $O(1)$). Then for $i=1, \ldots, h$, 
 $$
 \bigg\Vert \mathbb{E}\left [ \frac {\partial L}{\partial W_i}\right ] \bigg\Vert_2 \leq o\bigg(\frac{\max\{m_1^2, m_2\}}{h^{1-p}}\bigg). 
 $$
\end{theorem}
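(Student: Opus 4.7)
\textbf{Proof plan for Theorem \ref{sec:theorem:critical}.} My plan is to compute $\partial L/\partial W_i$ in closed form, simplify it on the high-probability event $G$ of support recovery supplied by Theorem \ref{sec:theorem:support}, and then take expectations using the independence/symmetry of the sparse-code distribution together with the incoherence bound $|\langle A_i^*, A_j^*\rangle|\le h^{-\xi}$ and the proximity bound $\|W_i-A_i^*\|\le \delta = h^{-p-\nu^2}$. A direct differentiation (using the fact that $W_i$ appears both inside the ReLU producing $r_i$ and as the $i$-th reconstruction direction in $\hat y = \sum_j r_j W_j$) gives
\begin{equation*}
\frac{\partial L}{\partial W_i} \;=\; \mathbf{1}\!\left[W_i^{\top} y > \epsilon\right]\bigl(W_i^{\top}(\hat y-y)\bigr)\, y \;+\; r_i\,(\hat y-y).
\end{equation*}
On the event $G$, Theorem \ref{sec:theorem:support} replaces the indicator by $\mathbf{1}[i\in S]$ and replaces $r_i$ by $(W_i^{\top}y-\epsilon)\mathbf{1}[i\in S]$, which kills the whole gradient whenever $i\notin S$.

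Next I will expand $\hat y - y$ cleanly. Using $y=\sum_{k\in S}x_k^*A_k^*$ and $\hat y=\sum_{j\in S}(W_j^{\top}y-\epsilon)W_j$, I will write $W_j=A_j^*+\Delta_j$ with $\|\Delta_j\|\le\delta$ and split $W_j^{\top}y$ as $x_j^*+\sum_{k\in S,k\neq j}x_k^*\langle W_j,A_k^*\rangle+x_j^*\langle\Delta_j,A_j^*\rangle$. Doing so, $\hat y-y$ becomes the sum of four types of terms: (i) a diagonal residual $\sum_{j\in S}x_j^*(W_j-A_j^*)$; (ii) an off-diagonal cross term $\sum_{j\in S}\sum_{k\in S,k\neq j}x_k^*\langle W_j,A_k^*\rangle W_j$; (iii) a bias correction $-\epsilon\sum_{j\in S}W_j$; and (iv) second-order $\Delta$ pieces. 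Substituting into $W_i^{\top}(\hat y-y)$ and into $(W_i^{\top}y-\epsilon)(\hat y-y)$ produces a finite list of monomials in $x^*$ indexed by tuples from $S$ (of length at most four), each multiplied by an inner product of entries of $W$ and $A^*$.

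I then take expectation, separating first over $x^*$ conditional on $S$ (using pairwise uncorrelatedness, so cross moments of distinct coordinates factor as $m_1\cdot m_1$ while diagonal squares contribute $m_2$) and then over $S$ (using the symmetry assumption so that $\mathbb{P}[j\in S]=q_1$ and $\mathbb{P}[j,k\in S]=q_2$ are uniform in the indices, and $q_1 = kh^{-1}=h^{p-1}$). The critical cancellation is that the bias $\epsilon=2m_1k(\delta+\mu/\sqrt n)$ was chosen precisely to kill the dominant $O(m_1)$ contribution of the diagonal plus mean-cross term; what survives is of order $\max\{m_1^2,m_2\}$ times a geometric factor built from $\delta$, $\mu/\sqrt n$, and the sparsity $k$. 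Using $k=h^p$, $\delta=h^{-p-\nu^2}$, $\mu/\sqrt n=h^{-\xi}$ with $\xi>2p$ and $p<\nu^2$, every surviving monomial can be checked to be $o(h^{-(1-p)})$ (typically with an additional factor like $k\delta^2$, $k^2(\mu/\sqrt n)^2$, or $k\delta\cdot\mu/\sqrt n$, all multiplied by $q_1=h^{p-1}$). Finally, the contribution of the complement $G^c$ is controlled by noting that $\|\partial L/\partial W_i\|$ is polynomially bounded in $h$ on the compact support of $x^*$, while $\mathbb{P}[G^c]\le \exp(-h^p m_1^2/(2(b-a)^2))$ is superpolynomially small by hypothesis, so this error term is absorbed into the $o(\cdot)$.

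The main obstacle I expect is purely bookkeeping: organizing the $\Theta(1)$ many monomial types that arise from expanding $(W_i^{\top}(\hat y-y))y+(W_i^{\top}y-\epsilon)(\hat y-y)$, and pairing each one with the right combination of $q_1,q_2,m_1,m_2,\delta,\mu/\sqrt n$ so that the choice of $\epsilon$ visibly cancels the $O(q_1 m_1)$ piece and nothing larger than $o(\max\{m_1^2,m_2\}/h^{1-p})$ remains. The incoherence exponent gap $\xi>2p$ and the proximity gap $\nu^2>p$ are exactly what make the accounting go through, so the sharpness of the hypotheses will only become transparent after that expansion is carried out in full.
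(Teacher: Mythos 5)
Your proof formula for $\partial L/\partial W_i$ is correct, your use of Theorem \ref{sec:theorem:support} to replace the ReLU indicators by $\mathbf{1}[i\in S]$ is exactly the ``proxy gradient'' idea that the paper isolates in Lemma \ref{good_proxy}, and your control of the complement event via the polynomial bound on the gradient times the superpolynomially small failure probability is also the paper's argument. Where your route genuinely diverges is in the organization of the expansion. The paper first computes the proxy gradient as an explicit vector-valued expression and then \emph{re-groups} it as $\alpha_i W_i - \beta_i A_i^* + e_i$ (Lemma \ref{bounds}); the theorem then rests on two non-trivial estimates --- that $\alpha_i$ and $\beta_i$ are each $\Theta(m_2 h^{p-1})$ but their \emph{difference} is $o(\max\{m_1^2,m_2\}h^{p-1})$, and that $\|e_i\|$ is small. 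You instead regroup earlier, at the level of $\hat y-y$, writing it as the sum of the diagonal residual $\sum_j x_j^*(W_j-A_j^*)$, the off-diagonal cross term, the bias correction $-\epsilon\sum_j W_j$, and the second-order $\Delta$ pieces --- each of which carries an explicit $O(\delta)$, $O(\delta+\mu/\sqrt{n})$ or $O(\epsilon)$ factor. Because you pair the $x_j^* W_j$ from $\hat y$ against the $x_j^* A_j^*$ from $y$ before ever taking inner products, the two $\Theta(m_2 h^{p-1})$ quantities that the paper must show cancel never appear as separate objects in your accounting. This is a cleaner organization and avoids the somewhat delicate $\alpha_i-\beta_i$ cancellation lemma, at the cost of having to bound the norm of a vector-valued residual rather than a scalar coefficient difference.

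One part of your narrative is, however, misleading and worth fixing. You say the bias $\epsilon = 2m_1 k(\delta+\mu/\sqrt{n})$ ``was chosen precisely to kill the dominant $O(m_1)$ contribution.'' There is no $O(m_1)$ contribution to cancel: the gradient is quadratic in $x^*$ so every surviving expectation is $O(m_1^2)$ or $O(m_2)$ (with $\epsilon$-terms contributing at order $\epsilon m_1 = O(m_1^2 k(\delta+\mu/\sqrt n))$ or $\epsilon^2 = O(m_1^4 k^2(\delta+\mu/\sqrt n)^2)$, both already small). The value of $\epsilon$ is fixed by the support-recovery argument in Theorem \ref{sec:theorem:support}; in the gradient estimate $\epsilon$ simply contributes additional small monomials and does not produce a cancellation. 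The actual cancellation that makes the theorem true is the one built into your piece (i), $\sum_j x_j^*(W_j - A_j^*)$, which is where the reconstruction $\hat y$ and the target $y$ agree to leading order. If you carry through the bookkeeping as you describe --- computing $\mathbb{E}_{x^*_S}$ conditional on $S$ using $m_1^2$ for off-diagonal products and $m_2$ for squares, then averaging over $S$ with $q_1 = h^{p-1}$ and $q_2 \approx h^{2p-2}$, and keeping track of the coherence factor from $|\langle W_j, A_k^*\rangle|\le \delta + h^{-\xi}$ --- you will find that the largest surviving pieces are $\Theta(m_2 h^{p-1}\delta)$ and $\Theta(m_2 h^{2p-1}(\delta+h^{-\xi}))$, which are $o(m_2 h^{p-1})$ exactly under $p<\nu^2$ and $p<\xi$ as the hypotheses supply, matching the paper's conclusion.
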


\paragraph{Roadmap.} We present the proof of the support recovery result, i.e., Theorem~\ref{sec:theorem:support}, in Section~\ref{proof:support}. Section~\ref{proof:asymptotic} gives the proof of our main result, Theorem~\ref{sec:theorem:critical}. The argument rests on Lemmas~\ref{good_proxy} and~\ref{bounds}), whose proofs appear in Appendix \ref{app:Auto} In Section~\ref{sec:experiments}, we run simulations to verify Theorem~\ref{sec:theorem:critical}. We also run experiments that strongly suggest that the standard squared loss function has a local minimum in a neighborhood around $A^*$.

\section{A Layer of ReLU Gates can Recover the Support of the Sparse Code (Proof of Theorem \ref{sec:theorem:support})}\label{proof:support}

Most sparse coding algorithms are based on an alternating minimization approach, where one iteratively finds a sparse code based on the current estimate of the dictionary, and then uses the estimated sparse code to update the dictionary. The analogue of the sparse coding step in an autoencoder, is the passing through the hidden layer of activations of a certain affine transformation ($W$ which behaves as the current estimate of the dictionary) of the input vectors. We show that under certain stochastic assumptions, the hidden layer of ReLU gates in an autoencoder recovers with high probability the support of the sparse vector which corresponds to the present input.

\begin{proof}[Proof of Theorem~\ref{sec:theorem:support}]
From the model assumptions, we know that the dictionary $A^*$ is incoherent, and has unit norm columns. So, $ \vert \langle A_i^* , A_j^* \rangle \vert \leq \frac{\mu}{\sqrt{n}}$ for all $i \neq j$, and $||A^*_i||=1$ for all $i$. This means that for $i \neq j$,

 \begin{align}\label{eq:bound-inner-prod-WiAj} 
 \nonumber \vert \langle W_i , A_j^*\rangle \vert  &= \vert \langle W_i - A_i^*, A_j^*\rangle  \vert + \vert \langle A_i^* , A_j^* \rangle \vert \\
 &\leq || W_i - A_i^* ||_2 ||A_j^*||_2 + \frac{\mu}{\sqrt{n}} \leq  (\delta + \frac{\mu}{\sqrt{n}})
 \end{align}
 
Otherwise for $i = j$, 
\[\langle W_i , A_i^*\rangle   =  \langle W_i - A_i^*, A_i^*\rangle  +  \langle A_i^* , A_i^* \rangle  = \langle W_i - A_i^*, A_i^*\rangle + 1, \] 
and thus,
\begin{equation}\label{eq:bound-inner-prod-WiAi} 1 - \delta \leq \langle W_i , A_i^*\rangle \leq 1 + \delta, \end{equation}
where we use the fact that $\vert \langle W_i - A_i^*, A_i^*\rangle \vert \leq \delta$.

Let $y=A^*\x^*$ and let $S$ be the support of $\x^*$. Then we define the input to the ReLU activation $Q - \epsilon = W\y - \epsilon$ as
\begin{align*}
Q_i = \sum_{j \in S} \langle W_i, A^*_j \rangle x^*_j = \langle W_i, A^*_i \rangle x^*_i \mathfrak{1}_{i\in S}+ \sum_{j \in S \setminus i} \langle W_i, A^*_j \rangle x^*_j = \langle W_i, A^*_i \rangle x^*_i\mathfrak{1}_{i\in S} + Z_i.
\end{align*}
~\\
First we try to get bounds on $Q_i$ when $i \in \textrm{supp}(x^*)$. From our assumptions on the distribution of $x^*_i$ we have, $ 0 \leq a \leq x_i^* \leq b$ and $\mathbb{E}[x^*_i] = m_1$ for all $i$ in the support of $x^*$. For $i \in \textrm{supp}(x^*)$, 
\begin{align*}
Q_i &= \langle W_i, A^*_i \rangle x^*_i + Z_i
\implies Q_i &\geq (1-\delta)a + Z_i
\end{align*} 
where we use~\eqref{eq:bound-inner-prod-WiAi}. Using~\eqref{eq:bound-inner-prod-WiAj}, $Z_i$ has the following bounds:
\[ -b k \left( \delta + \frac{\mu}{\sqrt{n}} \right) \leq Z_i \leq b k \left( \delta + \frac{\mu}{\sqrt{n}} \right) \]
Plugging in the lower bound for $Z_i$ and the proposed value for the bias, we get
\begin{align*}
Q_i - \epsilon &\geq (1-\delta) a - bk \left( \delta + \frac{\mu}{\sqrt{n}} \right) - 2 m_1 k \left( \delta + \frac{\mu}{\sqrt{n}} \right)
\end{align*}
~\\
For $Q_i - \epsilon \geq 0$, we need:
\[ a \geq \frac{(b+2 m_1) \left( \delta + \frac{\mu}{\sqrt{n}} \right) k}{1- \delta}\]
Now plugging in the values for the various quantities, $\frac{\mu}{\sqrt{n}} = h^{- \xi}$ and $k = h^p$ and $\delta = O \left( h^{-p -\nu^2} \right)$, if we have $a = \Omega \left( b h^{-\nu^2} \right)$, then $Q_i - \epsilon \geq 0$. 

~\\
Now, for $i \notin \textrm{supp}(x^*)$ we would like to analyze the following probability:
\begin{equation*}
\textrm{Pr}[ Q_i -\epsilon \geq 0 \vert i \notin \textrm{supp}(x^*)]
\end{equation*}
We first simplify the quantity $\textrm{Pr}[ Q_i -\epsilon \geq 0 \vert i \notin \textrm{supp}(x^*)]$ as follows

\begin{align*}
\textrm{Pr}[ Q_i \geq \epsilon \vert i \notin \textrm{supp}(x^*) ] = \textrm{Pr} [ Z_i \geq \epsilon] 
= \textrm{Pr} \left[ \sum_{j \in S\setminus i} \langle W_i, A_j^* \rangle x_j^* \geq \epsilon \right] 
\end{align*}
~\\
We recall that we had assumed that for every possible support $S$ (of $x^*$) the distribution $D_S$ on $(\mathbb{R}^{\geq 0})^{h}$, which is supported on vectors whose support is contained in $S$, is s.t the random variables corresponding to coordinates $i,j \in S$ are uncorrelated. Now using the Chernoff's bound, we can obtain
\begin{align*}
\textrm{Pr} [ Z_i \geq \epsilon] & \leq \underset{t \geq 0}{\textrm{inf}} e^{ -t\epsilon}\mathbb{E} \left[ \prod_{j \in S \setminus i} \left[ e^{ t \langle W_i, A_j^* \rangle x_j^*} \right] \right] = \underset{t \geq 0}{\textrm{inf}} e^{-t\epsilon} \prod_{j \in S \setminus i} \mathbb{E} \left[ e^{  t\langle W_i, A_j^* \rangle x_j^*} \right]  \\ 
&\leq  \underset{t \geq 0}{\textrm{inf}} e^{-t\epsilon } \mathbb{E}^k \left[ e^{  t \left(\delta + \frac{\mu}{\sqrt{n}} \right)  x^*_j} \right] \\
&\leq \underset{t \geq 0}{\textrm{inf}} e^{-t\epsilon} \left( e^{t \left( \delta + \frac{\mu}{\sqrt{n}} \right) m_1 }e^{  \frac{ t^2 \left(\delta + \frac{\mu}{\sqrt{n}} \right)^2 (b-a)^2 }{8}} \right)^k
\end{align*}
~\\
where the second inequality follows from ~\eqref{eq:bound-inner-prod-WiAj} and the fact that $t$ and $x^*_i$ are both nonnegative, and the third inequality follows from Hoeffding's Lemma. 
Next, we also have 
\begin{align*}
\textrm{Pr} [ Z_i \geq \epsilon]  &\leq \underset{t \geq 0}{\textrm{inf}} e^ { - t \left( \epsilon - k \left( \delta + \frac{\mu}{\sqrt{n}} \right) m_1 \right)  + t^2 \frac{k}{8} \left(\delta + \frac{\mu}{\sqrt{n}} \right)^2 (b-a)^2 } \\
&= e^{ - \frac{ (\epsilon -k(\delta + \frac{\mu}{\sqrt{n}}) m_1 )^2 }{ \frac{k }{2} (\delta + \frac{\mu}{\sqrt{n}})^2 (b-a)^2}}.
\end{align*}
~\\ Finally, since $k = h^p$ and $\epsilon = 2 m_1 k \left( \delta + \frac{\mu}{\sqrt{n}} \right)$, we have 
\begin{align*}
\exp \left( - \frac{2(\epsilon - k m_1 (\delta + \frac{ \mu}{\sqrt{n} } ) )^2 }{h^p (\delta + \frac{\mu}{\sqrt{n}} )^2(b -a )^2 } \right) = \exp \left( - \frac{2 h^p m_1^2}{(b-a)^2} \right) 
\end{align*}
\end{proof}

\section{Criticality of a neighborhood of $A^*$ (Proof of Theorem \ref{sec:theorem:critical})} \label{proof:asymptotic}

It turns out that the expectation of the full gradient of the loss function~\eqref{eqn:loss} is difficult to analyze directly. Hence corresponding to the true gradient with respect to the $i^{\textrm{th}}-$column of $W^\top$ we create a proxy, denoted by $\widehat{\nabla_i L}$, by replacing in the expression for the true expectation $\nabla_i L = \mathbb{E} \left[ \frac{\partial L}{\partial W_i} \right]$ every occurrence of the random variable $\mathbf{1}_{W^\top_i y - \epsilon_i \geq 0} = \textrm{Th}(W^\top_i y - \epsilon_i) = \textrm{Th}(W^\top_i A^*x^* - \epsilon_i)$ by the indicator random variable $\mathbf{1}_{i\in \textrm{supp}(x^*)}$. This proxy is shown to be a good approximant of the expected gradient in the following lemma. 

\begin{lemma}\label{good_proxy}
Assume that the hypotheses of Theorem \ref{sec:theorem:support} hold and additionally let $b$ be bounded by a polynomial in $h$.  Then we have for each $i$ (indexing the columns of $W^\top$),
\[\Bigg \vert \Bigg \vert \widehat{\nabla_i L} - \mathbb{E} \left[ \frac{\partial L}{\partial W_i} \right] \Bigg \vert \Bigg \vert_2 \leq \textrm{poly}(h) \textrm{exp} \left(- \frac{h^p m_1^2}{2(b-a)^2} \right) \]
\end{lemma}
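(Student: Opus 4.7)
The plan is to exploit Theorem~\ref{sec:theorem:support} to argue that on a ``good event'' $G$ the two quantities $\widehat{\nabla_i L}$ and $\mathbb{E}[\partial L/\partial W_i]$ have identical integrands, so the difference is concentrated on the complement $G^c$, whose probability is already known to be exponentially small in $h^p$. The contribution from $G^c$ is then controlled by showing the integrand is only polynomially large in $h$.

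\textbf{Step 1: Identify the event of agreement.} Recall that the gradient $\partial L/\partial W_i$ with $L=\tfrac12\|\hat y-y\|^2$ depends on $y$ through products involving $\mathbf{1}_{W_j^\top y-\epsilon_j\ge 0}$ for various $j=1,\dots,h$ (coming from differentiating the ReLU of the encoder/decoder). Construction of $\widehat{\nabla_i L}$ replaces each such indicator by $\mathbf{1}_{j\in\mathrm{supp}(x^*)}$. Define
\[
G \;:=\; \bigl\{x^*\,:\,\mathbf{1}_{W_j^\top A^* x^*-\epsilon_j\ge 0}=\mathbf{1}_{j\in\mathrm{supp}(x^*)}\text{ for all }j=1,\dots,h\bigr\}.
\]
Theorem~\ref{sec:theorem:support} together with a union bound over the $h$ hidden units yields $\mathbb{P}[G^c]\le h\exp(-2h^p m_1^2/(b-a)^2)$, which is already of the desired order up to a polynomial factor. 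On $G$ the two integrands coincide pointwise, so
\[
\widehat{\nabla_i L}-\mathbb{E}\!\left[\tfrac{\partial L}{\partial W_i}\right]\;=\;\mathbb{E}\!\left[\bigl(\widehat{I}_i(x^*)-I_i(x^*)\bigr)\,\mathbf{1}_{G^c}\right],
\]
where $I_i$ and $\widehat I_i$ denote the ``true'' and ``replaced'' integrands for $\partial L/\partial W_i$.

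\textbf{Step 2: Polynomial bound on the integrand.} Next I would bound $\|I_i(x^*)\|_2$ and $\|\widehat I_i(x^*)\|_2$ deterministically by some $\mathrm{poly}(h)$. The ingredients are: each coordinate of $x^*$ lies in $[a,b]$ with $b$ assumed $\mathrm{poly}(h)$; $\|x^*\|_2\le b\sqrt{k}=b\,h^{p/2}$; every column $A_j^*$ has unit norm; every column of $W$ is within a ball of radius $\delta=O(h^{-p-\nu^2})$ of $A_j^*$, hence $\|W_j\|_2\le 1+\delta\le 2$; and all indicator factors are bounded by $1$. The gradient $\partial L/\partial W_i$ is a finite sum of monomials in these quantities (of constant algebraic degree in $W,A^*,x^*$, together with the indicators and the bias $\epsilon$, which itself is $\mathrm{poly}(h)$ since $m_1,k,\delta,\mu/\sqrt n$ are). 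By triangle inequality each monomial is $\mathrm{poly}(h)$; the same argument applies to $\widehat I_i$ because it differs only in which $\{0,1\}$ factors appear. Summing these crude bounds yields $\|I_i\|_2+\|\widehat I_i\|_2\le\mathrm{poly}(h)$ pointwise on the support of $x^*$.

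\textbf{Step 3: Assemble the bound.} Combining Steps 1 and 2, the triangle inequality gives
\[
\Bigl\|\widehat{\nabla_i L}-\mathbb{E}[\partial L/\partial W_i]\Bigr\|_2\;\le\;\mathbb{E}\bigl[(\|I_i\|_2+\|\widehat I_i\|_2)\,\mathbf{1}_{G^c}\bigr]\;\le\;\mathrm{poly}(h)\cdot\mathbb{P}[G^c]\;\le\;\mathrm{poly}(h)\,\exp\!\left(-\tfrac{h^p m_1^2}{2(b-a)^2}\right),
\]
where the last step absorbs the factor $h$ from the union bound and the exponent's factor of $2$ vs $1/2$ into the polynomial prefactor by splitting the exponential (for all sufficiently large $h$).

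\textbf{Anticipated obstacle.} The main bookkeeping difficulty I expect is the second step: carefully enumerating each term in $\partial L/\partial W_i$ arising from differentiating the composition $W^\top \mathrm{ReLU}(Wy-\epsilon)$ (which contributes from both the outer $W^\top$ factor and the inner $Wy$ factor through the ReLU indicator), and checking that each of the weight-tied cross terms, including those involving multiple hidden-unit indicators $\mathbf{1}_{W_j^\top y-\epsilon_j\ge 0}$ for $j\ne i$, is handled by the single good event $G$ above and admits a clean $\mathrm{poly}(h)$ bound. Once this term-by-term enumeration is done, the probabilistic part is essentially immediate from Theorem~\ref{sec:theorem:support}.
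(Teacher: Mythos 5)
Your proposal is correct and follows the same basic plan as the paper's proof: isolate the event on which the ReLU threshold indicators agree with the support indicators, observe that the true and proxy integrands coincide there, and then control the remaining contribution from the complementary exponentially‐unlikely event by a polynomial‐in‐$h$ factor coming from the boundedness of the integrand (which uses the assumption $b = \mathrm{poly}(h)$ together with $\|x^*\|_2 \le b\sqrt{k}$, $\|W_j\|_2 \le 1 + \delta$, and unit‐norm columns of $A^*$). The one place where your mechanics differ is that the paper applies Cauchy--Schwarz to split $\mathbb{E}\bigl[\|f(X)-f(Y)\|\,\mathbf{1}_{X\ne Y}\bigr]$ into $\sqrt{\mathbb{E}[\|f(X)-f(Y)\|^2]}\cdot\sqrt{\mathbb{P}[X\ne Y]}$, whereas you pair an almost‐sure (uniform) bound on the integrand with $\mathbb{P}[G^c]$ --- effectively Hölder with exponents $\infty$ and $1$ instead of $2$ and $2$. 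Both work here because the sparse codes are compactly supported; your version in fact yields the full exponent $-2h^p m_1^2/(b-a)^2$ rather than half of it, comfortably dominating the weaker exponent claimed in the lemma.

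One respect in which your write‐up is actually more careful than the paper's. The proxy $\widehat{\nabla_i L}$, as it is actually expanded in Section~\ref{subsec:grad-coeff}, replaces \emph{every} threshold indicator $\mathbf{1}_{W_j^\top y - \epsilon_j \ge 0}$ for $j = 1,\ldots,h$ by $\mathbf{1}_{j\in\mathrm{supp}(x^*)}$, not only the one with index $i$; the $j \ne i$ indicators enter through the inner sum $\sum_j \mathrm{ReLU}(W_j^\top y - \epsilon_j)\,W_j$. The paper's proof tracks only the single random variable $X = \mathrm{Th}(W_i^\top y - \epsilon_i)$ and asserts $\mathbb{E}[f(y,W,Y)] = \widehat{\nabla_i L}$, which glosses over the other indicators. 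Your event $G$, defined via a union bound over all $h$ hidden units, is exactly what makes the claim ``the integrands agree on $G$'' literally true, and the extra factor of $h$ from the union bound is absorbed into the polynomial prefactor, precisely as you anticipate.
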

\begin{proof}
This lemma has been proven in Section~\ref{app:proxy} of the Appendix. 
\end{proof}

\begin{lemma}\label{bounds}
~\\
Assume that the hypotheses of Theorem \ref{sec:theorem:support} hold, and $p < \min \{ \frac{1}{2}, \nu^2\}$ (and hence $\xi > 2p$). 
Then for each $i$ indexing the columns of $W^\top$, there exist real valued functions $\alpha_i$ and $\beta_i$, and a vector $e_i$ such that $\widehat{\nabla_i L} = \alpha_i W_i - \beta_i A^*_i + e_i$, and
\begin{align*}
\alpha_i =  \Theta(m_2h^{p-1})+o(m_1^2h^{p-1})\\
\beta_i =  \Theta(m_2h^{p-1})+o(m_1^2h^{p-1})\\
\alpha_i - \beta_i = o(\max\{m_1^2,m_2\}h^{p-1})\\
\vert \vert e_i \vert \vert _2 = o(\max\{m_1^2,m_2\}h^{p-1})
\end{align*}
\end{lemma}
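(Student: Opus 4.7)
My plan is to derive an explicit formula for $\frac{\partial L}{\partial W_i}$ by chain rule, form the proxy by the indicator substitution spelled out just before Lemma~\ref{good_proxy}, substitute $y = A^* x^*$, take the outer expectation using the assumed moments of $x^*$, and finally collect the resulting monomials into three buckets: scalar multiples of $W_i$, scalar multiples of $A_i^*$, and a remainder absorbed into $e_i$. A direct chain-rule computation using $\hat y = W^\top r$ with $r_j = \relu(W_j^\top y - \epsilon_j)$ and $L = \tfrac12\|\hat y - y\|^2$ gives
\[
\frac{\partial L}{\partial W_i}
\;=\; (\hat y - y)\, r_i \;+\; \mathbf{1}_{W_i^\top y \geq \epsilon_i}\, \langle W_i,\,\hat y - y\rangle\, y,
\]
and $\widehat{\nabla_i L}$ is the expectation of this expression after every appearance of the ReLU indicator -- both the outer one and those hiding inside $r_i$ and inside $\hat y$ -- is replaced by $\mathbf{1}_{j\in S}$ for $S = \textrm{supp}(x^*)$. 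After plugging in $y = A^* x^*$, $\widehat{\nabla_i L}$ becomes a finite polynomial in $\{W_j\}$, $\{A_k^*\}$, the inner products $\{\langle W_j, A_k^*\rangle, \langle W_j, W_k\rangle\}$, the biases $\{\epsilon_j\}$, and the random scalars $\{x^*_\ell \mathbf{1}_{\ell\in S}\}$.

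Next I would take the outer expectation using $\mathbb{E}[x^*_\ell \mathbf{1}_{\ell\in S}] = m_1 q_1$, $\mathbb{E}[x^{*2}_\ell \mathbf{1}_{\ell\in S}] = m_2 q_1$, and, for $\ell\neq m$, $\mathbb{E}[x^*_\ell x^*_m \mathbf{1}_{\ell, m\in S}] = m_1^2 q_2$ (these are the only moments that appear, and the distributional hypotheses provide exactly these), with $q_1 = \Theta(h^{p-1})$ and $q_2 = \Theta(h^{2p-2})$. The $W_i$-aligned piece is collected from the $j = i$ summand of $\hat y^{\mathrm{proxy}} = \sum_{j\in S} W_j(W_j^\top A^* x^* - \epsilon_j)$ inside each of the two terms of the gradient; the $A_i^*$-aligned piece comes from the $k = i$ summand of $-y = -\sum_k A_k^* x_k^*$. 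Using $\langle W_i, A_i^*\rangle = 1 + O(\delta)$ and $\|W_i\|^2 = 1 + O(\delta)$ from \eqref{eq:bound-inner-prod-WiAi}, both leading coefficients reduce to $m_2 q_1 \cdot (1 + O(\delta))$, while the bias-induced contribution is of size $|\epsilon_i|\, m_1 q_1 + \epsilon_i^2 q_1 = O(m_1^2 h^{p-1} \cdot (k(\delta + h^{-\xi}))^2)$. Since $k(\delta + h^{-\xi}) = h^p(h^{-p-\nu^2} + h^{-\xi}) = o(1)$ under the stated assumptions, this is $o(m_1^2 h^{p-1})$, yielding $\alpha_i, \beta_i = \Theta(m_2 h^{p-1}) + o(m_1^2 h^{p-1})$.

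Everything else is swept into $e_i$. The residual monomials are of two types: those with a leading $W_j$ or $A_k^*$ for $j, k \neq i$, and those still parallel to $W_i$ or $A_i^*$ but carrying at least one off-diagonal inner product $\langle W_i, W_j\rangle$, $\langle W_i, A_k^*\rangle$ (with $k\neq i$), or an extra factor of $\delta$. Each such inner product is bounded by $\delta + \mu/\sqrt{n} = O(h^{-p-\nu^2}) + O(h^{-\xi})$ via \eqref{eq:bound-inner-prod-WiAj}, so summing over the at most $k = h^p$ relevant indices and coupling with $q_1$ or $q_2$ yields $\|e_i\|_2 = o(\max\{m_1^2, m_2\}\, h^{p-1})$, precisely because $\xi > 2p$ and $\nu^2 > p$ make $k(\delta + h^{-\xi}) = o(1)$. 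The cancellation $\alpha_i - \beta_i = o(\max\{m_1^2, m_2\}\, h^{p-1})$ falls out of the same bookkeeping: the leading $m_2 q_1$ coefficients multiplying $W_i$ and $A_i^*$ differ only by factors of $\|W_i\|^2$, of $\langle W_i, A_i^*\rangle$, and by bias terms, all of which match to $O(\delta)$. The hardest step will be engineering exactly this last cancellation, since it forces one to use the exact value of $\epsilon_i = 2 m_1 k(\delta + \mu/\sqrt{n})$ (not merely its order) and to track how the two terms in $\frac{\partial L}{\partial W_i}$ pair up through the near-unit factors $\|W_i\|^2$ and $\langle W_i, A_i^*\rangle$; a secondary difficulty is the $O(k^2)$-term combinatorial accounting required to assign every post-expansion monomial to the correct one of the three buckets without double counting.
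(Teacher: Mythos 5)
Your high-level plan is the paper's own: compute $\partial L/\partial W_i$ by the chain rule (your formula $(\hat y - y)r_i + \mathbf{1}\,\langle W_i,\hat y - y\rangle y$ is the paper's $\textrm{Th}(W_i^\top y-\epsilon_i)[(W_i^\top y-\epsilon_i)I + yW_i^\top](\hat y - y)$ rewritten), replace every $\textrm{Th}$ by $\mathbf{1}_{\cdot\in S}$, substitute $y=A^*x^*$, take expectations using $q_1,q_2,m_1,m_2$, and sort the resulting monomials into a $W_i$ piece, an $A_i^*$ piece, and a remainder $e_i$. That is exactly what the paper does (equations for $G_i^1,\ldots,G_i^5$ in Section~\ref{subsec:grad-coeff} and the estimates in Appendix~\ref{app:asymptotics}).

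However, there is a genuine gap in your bound on $\|e_i\|_2$. You assert that every residual monomial ``carr[ies] at least one off-diagonal inner product'' bounded by $\delta + \mu/\sqrt n$ and that one then sums ``over the at most $k=h^p$ relevant indices.'' Both clauses fail for some of the terms. After taking the expectation over $S$, the sums run over all $h$ column indices with weights $q_{ij}=\Theta(h^{2p-2})$, and a few monomials have \emph{only} diagonal inner products: e.g., the piece of $e_i$ coming from the $(k{=}i,\,l{=}j)$ summand of $m_1^2\sum_{j,k,l}(W_i^\top A_k^*)(W_j^\top A_l^*)W_j$, namely $m_1^2\sum_{j\ne i}^h q_{ij}\,(W_i^\top A_i^*)(W_j^\top A_j^*)\,W_j$, where both inner products are $1+O(\delta)$ and nothing is small. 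A naive triangle inequality on this sum gives $m_1^2 h^{2p-2}\cdot h = m_1^2 h^{2p-1}$, which is a factor $h^p$ too large to be $o(m_1^2 h^{p-1})$. The paper instead recognizes that the $h$ vectors $W_j$ are close to the incoherent dictionary columns, writes the sum as a matrix-vector product $W_{-j}^\top\,\textrm{diag}(W_{-j}A^*_{-j})\cdot\mathbf{1}$, and bounds the spectral norm by Gershgorin's circle theorem, getting $\|\mathbf{a}\| \lesssim h^{p-1}\sqrt{h^{p-\nu^2}+h^{2p-\xi}+h^{2p-1}}$. An analogous term $\mathbf{b}$ in $\hat e_{i4}$ requires the same treatment. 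This spectral argument is the sole place the assumption $p<\tfrac12$ is used -- notice that your sketch never invokes $p<\tfrac12$ at all, which is a concrete signal that something is missing. Without this near-orthogonality argument the proof does not close.

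Two smaller inaccuracies, neither fatal: (i) you write $|\epsilon_i|\,m_1 q_1 + \epsilon_i^2 q_1 = O\bigl(m_1^2 h^{p-1}(k(\delta+h^{-\xi}))^2\bigr)$, but $|\epsilon_i|\,m_1 q_1 = O\bigl(m_1^2 h^{p-1}\,k(\delta+h^{-\xi})\bigr)$ dominates the square; the conclusion is still $o(m_1^2 h^{p-1})$; (ii) you state that establishing $\alpha_i-\beta_i=o(\cdot)$ ``forces one to use the exact value of $\epsilon_i$,'' but the $\epsilon$-dependent terms in $\hat\alpha_i$ and $\hat\beta_i$ are each individually $o(m_1^2 h^{p-1})$ with no cancellation needed. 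The only nontrivial cancellation is between the leading $m_2 q_i$ coefficients $\langle W_i,A_i^*\rangle^2$ (from $\alpha_i$) and $\langle W_i,A_i^*\rangle(2-\|W_i\|^2)$ (from $\beta_i$), which agree to $O(\delta)$ -- it is the proximity $W_i\approx A_i^*$, not the choice of $\epsilon_i$, that drives it.
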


\begin{proof}
In subsection \ref{subsec:grad-coeff} we first get explicit forms of the above defined quantities $\alpha_i,\beta_i$ and $e_i$. Then the proof is completed by estimating them which is done in Appendix~\ref{app:asymptotics}
\end{proof}
~\\
With the above asymptotic results, we are in a position to assemble the proof of Theorem~\ref{sec:theorem:critical}.

\begin{proof}[Proof of Theorem~\ref{sec:theorem:critical}] Consider any $i$ indexing the columns of $W^\top$. Recall the definition of the proxy gradient $\widehat{\nabla_i L}$ at the beginning of this section. Let us define $\gamma_i = \widehat{\nabla_i L} - \mathbb{E}\left [ \frac {\partial L}{\partial W_i}\right ]$. Using $\alpha_i, \beta_i$ and $e_i$ as defined in Lemma~\ref{bounds}, we can write the expectation of the true gradient as, $\mathbb{E}\left [ \frac {\partial L}{\partial W_i}\right ] = \alpha_i W_i - \beta_i A_i^* + e_i - \gamma_i$. Further, by Lemma~\ref{good_proxy},  
\[ \Vert \gamma_i \Vert \leq \textrm{poly}(h) \textrm{exp} \left(- \frac{h^p m_1^2}{2(b-a)^2} \right).\] 

Since $\textrm{exp} \left(\frac{h^p m_1^2}{2(b-a)^2}\right)$ is superpolynomial in $h$, we obtain 

\begin{align*}
\bigg\Vert \mathbb{E}\left [ \frac {\partial L}{\partial W_i}\right ] \bigg\Vert_2 &= \vert \vert \alpha_i W_i - \beta_i A_i^* + e_i - \gamma_i \vert \vert_2\\
&= \vert \vert \alpha_i (W_i - A_i^*) +(\alpha_i - \beta_i) A_i^* + e_i - \gamma_i \vert \vert_2\\
&\leq \vert \alpha_i \vert \Vert W_i - A_i^* \Vert_2 + \vert \alpha_i - \beta_i \vert + \vert \vert e_i - \gamma_i \vert \vert_2\\
&\leq \frac{\Theta(m_2h^{p-1})}{h^{2p+\theta^2}} + o (\max\{m_1^2,m_2\}h^{p-1})\\
&+ o(\max\{m_1^2,m_2\}h^{p-1})\\
& = o (\max\{m_1^2,m_2\}h^{p-1}) 
\end{align*}
\end{proof}

\subsection{Simplifying the proxy gradient of the autoencoder under the sparse-coding generative model - to get explicit forms of the coefficients $\alpha$, $\beta$ and $e$ as required towards proving Lemma \ref{bounds} }\label{subsec:grad-coeff}

To recap we imagine being given as input signals $y \in \mathbb{R}^n$ (imagined as column vectors), which are generated from an overcomplete dictionary $A^* \in \mathbb{R}^{n \times h}$ of fixed incoherence. Let $x^* \in \mathbb{R}^h$ (imagined as column vectors) be the sparse code that generates $y$.
The model of the autoencoder that we now have is $\hat{y} = W^\top \textrm{ReLU}(Wy - \epsilon)$. $W$ is a $h \times n$ matrix and the $i^{th}$ column of $W^\top$ is to be denoted as the column vector $W_i$. 

Using the above notation the squared loss of the autoencoder is $\frac{1}{2} \vert \vert \hat{y} -  y \vert \vert^2$. But we introduce a dummy constant $D=1$ to be multiplied to $y$ because this helps read the complicated equations that would now follow. This marker helps easily spot those terms which depend on the sensing of $x^*$ (those with a factor of $D$) as opposed to the terms which are ``purely'' dependent on the neural net (those without the factor of $D$). Thus we think of the squared loss $L$ of our autoencoder as, 
\[ L = \frac{1}{2} \vert \vert \hat{y} - D y \vert \vert^2 = \frac {1}{2} (W^\top \textrm{ReLU}(Wy - \epsilon) - D y)^\top (W^\top \textrm{ReLU}(Wy - \epsilon) - D y) = \frac {1}{2} f^T f\] 
~\\
where we have defined $f \in \R^n$ as, 
\[ f = W^\top \textrm{ReLU}(Wy - \epsilon) - D y\]
Then we have, 
\[ J_{W_i}(f)_{ab} = \frac {\partial f_a}{\partial W_{ib}} = \textrm{ReLU}(W_i^\top y - \epsilon)\delta_{ab} + \textrm{Th} (W_i^Ty - \epsilon) W_{ia}y_b\]
In the form of a $n \times n$ derivative matrix this means,
\[ J_{W_i}(f) = \left[ \frac {\partial f_a}{\partial W_{ib}} \right] = \textrm{ReLU}(W_i^\top y - \epsilon)I + \textrm{Th} (W_i^\top y - \epsilon)W_iy^\top\]

~\\
This helps us write, 

\begin{align*}
\frac {\partial L}{\partial W_i} &= J_{W_i}(f))^\top f\\
&= (\textrm{ReLU}(W_i^\top y - \epsilon)I + \textrm{Th} (W_i^\top y - \epsilon)W_iy^\top)^\top [W^\top \textrm{ReLU}(Wy - \epsilon) - D y ]\\
&= \textrm{Th}(W^\top_i y - \epsilon_i) \left[ (W_i^\top y - \epsilon_i)I + y W_i^\top \right] \left( \sum_{j=1}^{h} \textrm{ReLU}(W_j^\top y - \epsilon_j) W_j - D y \right) \\
\end{align*}

Now going over to the proxy gradient $\widehat{\nabla_i L}$ corresponding to this term and we define the vector $G_i$ as, 

\begin{align*}
\widehat{\nabla_i L} &= \mathbb{E}_{S \in \mathbb{S}} \left[ \mathbf{1}_{i \in S} \times \mathbb{E}_{x^*_S} \left[ \left[ (W_i^{\top} y - \epsilon_i)I + y  W_i^\top \right] \left( \sum_{j \in S} (W_j^{\top} y - \epsilon_j) W_j - D y \right) \right] \right]\\
&= \mathbb{E}_{S \in \mathbb{S}} \left[ \mathbf{1}_{i \in S} \times G_i \right]
\end{align*}

Thus we have, 
\begin{align*}
G_i &= \mathbb{E}_{x^*_S} \left[ \left[ (W_i^{\top} A^* x^* - \epsilon_i)I + (A^* x^*) W_i^\top \right] \left( \sum_{j \in S} (W_j^{\top} A^* x^* - \epsilon_j) W_j - D A^* x^* \right) \right]\\
&= \underbrace{\mathbb{E}_{x^*_S} \left[ (W_i^{\top} A^* x^* - \epsilon_i)\left( \sum_{j \in S} (W_j^{\top} A^* x^* - \epsilon_j) W_j - D A^* x^* \right) \right]}_{\textrm{Term 1}} \\ 
&+ \underbrace{\mathbb{E}_{x^*_S} \left[ (A^* x^*) W_i^\top \left( \sum_{j \in S} (W_j^{\top} A^* x^* - \epsilon_j) W_j - D A^* x^* \right) \right]}_{\textrm{Term 2}}\\
\end{align*}

which can be decomposed into the following convenient parts, 

\begin{align*}
G_i 
&= \underbrace{\mathbb{E}_{x^*_S} \left[ \sum_{j \in S} \epsilon_i \epsilon_j W_j - \sum_{j , k \in S} \epsilon_i (W_j^{\top} A^*_k) W_j  x_k^* - \sum_{j ,k \in S} \epsilon_j (W_i^{\top} A^*_k) W_j  x_k^* + \sum_{j,k,l \in S} ( W_i^{\top} A^*_k)( W_j^{\top} A^*_l) W_j x_l^* x_k^* \right]}_{\textrm{From Term 1}} \\
&+ \underbrace{\mathbb{E}_{x^*_S} \left[ - D \sum_{j,k \in S} ( W_i^{\top} A_k^*) A_j^* x_k^* x_j^* + D \sum_{j \in S} \epsilon_i A^*_j x^*_j \right]}_{\textrm{From Term 1}} + \underbrace{\mathbb{E}_{x^*_S} \left[ - D \sum_{j,k \in S} (A_k^{* \top}W_i) A^*_j x^*_k x^*_j \right]}_{\textrm{From Term 2}} \\ 
&+ \underbrace{\mathbb{E}_{x^*_S} \left[ - \sum_{j,k \in S} \epsilon_j A_k^* (W_i^\top W_j)  x^*_k \right]}_{\textrm{From Term 2}} + \underbrace{\mathbb{E}_{x^*_S} \left[ \sum_{j,k,l \in S}  (W_i^\top W_j) (W_j^{\top} A_l^*)A_k^* x_k^* x_l^* \right]}_{\textrm{From Term 2}}
\end{align*}

~\\
Now we invoke the distributional assumption about i.i.d sampling of the coordinates for a fixed support and the definition of $m_1$ and $m_2$ to write, $\mathbb{E}_{x^*_S}[x^*_ix^*_j] = \mathbb{E}^2_{x^*_S}[x^*_i] = m_1^2$ for all $i \neq j$ and for $i=j$, $m_2 = \mathbb{E}_{x^*_S}[x^*_ix^*_j]$. Thus we get, 

\begin{align*}
G_i &= \underbrace{\sum_{j \in S} \epsilon_i \epsilon_j W_j - m_1 \sum_{j,k \in S}  ( W_j^\top A^*_k)  W_j \epsilon_i  - m_1 \sum_{j,k \in S} \epsilon_j (W_i^\top A^*_k) W_j}_{G^1_i\textrm{ From Term 1}}   \\
&+ \underbrace{ m_2 \sum_{j , k \in S} ( W_i ^\top A^*_k) (W_j^\top A^*_k)  W_j  + m_1^2\sum_{\substack{j, k, l \in S \\ k \neq l}} ( W_i ^\top A^*_k ) ( W_j ^\top A^*_l)  W_j}_{G^2_i \textrm{ From Term 1}} \\
&+ \underbrace{\left[ - D m_1^2\sum_{\substack{j, k \in S \\ j \neq k}} ( W_i^{\top} A_k^*) A_j^*  - D m_2 \sum_{j\in S} ( W_i^{\top} A_j^*) A_j^*  + m_1 D \sum_{j \in S} \epsilon_i A^*_j  \right]}_{G^3_i \textrm{ From Term 1}} \\ 
&-\underbrace{\left[ D m_1^2\sum_{\substack{j, k \in S \\ j \neq k}} (A_k^{* \top}W_i) A^*_j  +D m_2\sum_{j \in S} (A_j^{* \top}W_i) A^*_j \right]}_{G^4_i \textrm{ From Term 2}} \\
&-\underbrace{m_1\left [ \sum_{j,k \in S} \epsilon_j (W_i^\top W_j) A_k^*   \right] +  \left[ m_2 \sum_{j,k \in S}  (W_i^\top W_j) (W_j^{\top} A_k^*)A^*_k + m_1^2 \sum_{\substack{j, k, l \in S \\ k \neq l}}  (W_i^\top W_j) (W_j^{\top} A_l^*)A^*_k \right]}_{G^5_i \textrm{ From Term 2}}
\end{align*}
~\\
Each term in the above sum is a vector. Now we separate out from the sums the terms which are in the directions of $W_i$ or $A_i^*$ and the rest. We remember that this is being under the condition that $i \in S$. To make this easy to read we do this separation for each line of the above equation separately in a different equation block. Also inside every block we do the separation for each summation term in a separate line.

\begin{align*}
G^1_i &= \sum_{j \in S} \epsilon_i \epsilon_j W_j - m_1 \sum_{j,k \in S}  ( W_j^\top A^*_k)  W_j \epsilon_i  - m_1 \sum_{j,k \in S} \epsilon_j (W_i^\top A^*_k) W_j \\ 
&= \left [ \epsilon_i^2 W_i + \sum_{\substack{j \in S \\ j \neq i}} \epsilon_i \epsilon_j W_j \right ]\\ 
&-m_1 \left [ \sum_{k \in S}  \epsilon_i  ( W_i^\top A^*_k)  W_i + \sum_{\substack{j, k \in S \\ j \neq i}}  ( W_j^\top A^*_k)  W_j \epsilon_i\right ]\\ 
&-m_1 \left [ \sum_{k \in S} \epsilon_i (W_i^\top A^*_k) W_i + \sum_{\substack{j, k \in S \\ j \neq i}} \epsilon_j (W_i^\top A^*_k) W_j \right ]\\ 
~\\ \\
G^2_i &= m_2 \sum_{j , k \in S} ( W_i ^\top A^*_k) (W_j^\top A^*_k)  W_j  + m_1^2\sum_{\substack{j, k, l \in S \\ k \neq l}}  ( W_i ^\top A^*_k ) ( W_j ^\top A^*_l)  W_j\\
&= m_2 \left [ \sum_{ k \in S} ( W_i ^\top A^*_k) (W_i^\top A^*_k)  W_i + \sum_{\substack{j, k \in S \\ j \neq i}} ( W_i ^\top A^*_k) (W_j^\top A^*_k)  W_j \right ] \\ 
&+ m_1^2 \left [ \sum_{\substack{k, l \in S \\ k \neq l}}  ( W_i ^\top A^*_k ) ( W_i ^\top A^*_l)  W_i + \sum_{\substack{j, k, l \in S \\ j \neq i \\ k \neq l}}  ( W_i ^\top A^*_k ) ( W_j ^\top A^*_l)  W_j \right ]\\
\end{align*}

\begin{align*}
G^3_i &= -D\left[ m_1^2\sum_{\substack{j, k \in S \\ j \neq k}} ( W_i^{\top} A_k^*) A_j^*  + m_2 \sum_{j\in S} ( W_i^{\top} A_j^*) A_j^*  - m_1  \sum_{j \in S} \epsilon_i A^*_j  \right] \\
&=-D \left [ m_1^2 \sum_{\substack{k \in S \\ k \neq i}} ( W_i^{\top} A_k^*) A_i^* +  m_1^2 \sum_{\substack{j, k \in S \\ j \neq i \\ j \neq k}} ( W_i^{\top} A_k^*) A_j^* \right ]\\
&-D \left [ m_2 ( W_i^{\top} A_i^*) A_i^* + m_2 \sum_{\substack{j \in S \\ j \neq i}} ( W_i^{\top} A_j^*) A_j^* \right ] \\
&-D \left [ -m_1 \epsilon_i A_i^* - m_1 \sum_{\substack{j \in S \\ j \neq i}} \epsilon_i A_j^* \right ]\\
~\\ \\
G^4_i &= -\left[ D m_1^2\sum_{\substack{j,k \in S \\ j \neq k}} (A_k^{* \top}W_i) A^*_j  +D m_2\sum_{j \in S} (A_j^{* \top}W_i) A^*_j \right] \\
&=-D \left [ m_1^2  \sum_{\substack{k \in S \\ k \neq i}} (A_k^{* \top}W_i) A^*_i  +  m_1^2 \sum_{\substack{j ,k\in S \\ j \neq k \\ j \neq i}} (A_k^{* \top}W_i) A^*_j \right ]\\
&-D \left [ m_2 (A_i^{* \top}W_i) A^*_i + m_2 \sum_{\substack{j \in S \\ j \neq i}} (A_j^{* \top}W_i) A^*_j \right ]\\
\end{align*}

\begin{align*}
G^5_i &= -m_1\left [ \sum_{j,k \in S} \epsilon_j (W_i^\top W_j) A_k^*   \right] +  \left[ m_2 \sum_{j,k \in S}  (W_i^\top W_j) (W_j^{\top} A_k^*)A^*_k + m_1^2 \sum_{\substack{j, k, l \in S \\ k \neq l}}  (W_i^\top W_j) (W_j^{\top} A_l^*)A^*_k \right] \\
&= -m_1 \sum_{j \in S} \epsilon_j (W_i^\top W_j) A^*_i - m_1 \sum_{\substack{j,k \in S \\ k \neq i}} \epsilon_j (W_i^\top W_j) A^*_k \\
&+ m_2 \sum_{j \in S} (W_i^\top W_j) (W_j^\top A^*_i) A^*_i + m_2 \sum_{\substack{j,k \in S \\ k \neq i}} (W_i^\top W_j) (W_j^\top A^*_k) A^*_k \\
&+ m_1^2 \sum_{\substack{j,l \in S \\ l \neq i}} (W_i^\top W_j) (W_j^{\top} A_l^*)A^*_i + m_1^2 \sum_{\substack{j,k,l \in S \\ k \neq i,l}} (W_i^\top W_j) (W_j^{\top} A_l^*)A^*_k 
\end{align*}

~\\
Thus combining the $G_i^1,\ldots,G_i^5$ above we have, $\widehat{\nabla_i L} = \alpha_i W_i - \beta_i A^*_i + e_i$ where, 

\begin{align*}
\alpha_i &=  \mathbb{E}_{S \in \mathbb{S}} \Bigg[ \mathbf{1}_{i \in S} \times \Bigg \{ m_2 \sum_{ k \in S} ( W_i ^\top A^*_k) (W_i^\top A^*_k) + m_1^2 \sum_{\substack{k, l \in S \\ k \neq l}}  ( W_i ^\top A^*_k ) ( W_i ^\top A^*_l) -2m_1 \sum_{k \in S}  \epsilon_i  ( W_i^\top A^*_k) + \epsilon_i^2 \Bigg \} \Bigg]\\ 
\beta_i &=  \mathbb{E}_{S \in \mathbb{S}} \Bigg[ \mathbf{1}_{i \in S} \times \Bigg \{  2 D m_1^2 \sum_{\substack{k \in S \\ k \neq i}} ( W_i^{\top} A_k^*) + 2 D m_2 ( W_i^\top A^*_i) - D m_1 \epsilon_i + m_1 \sum_{j \in S} \epsilon_j (W_i^\top W_j)\\
&- m_2 \sum_{j \in S} (W_i^\top W_j) (W_j^\top A^*_i) - m_1^2 \sum_{\substack{j,l \in S \\ l \neq i}} (W_i^\top W_j) (W_j^{\top} A_l^*) \Bigg \} \Bigg ] \\ 
e_i &= \mathbb{E}_{S \in \mathbb{S}} \Bigg[ \mathbf{1}_{i \in S} \times \Bigg \{ \sum_{\substack{j \in S \\ j \neq i}} \epsilon_i \epsilon_j W_j - m_1 \sum_{\substack{j, k \in S \\ j \neq i}}  \epsilon_i ( W_j^\top A^*_k)  W_j - m_1 \sum_{\substack{j, k \in S \\ j \neq i}} \epsilon_j (W_i^\top A^*_k) W_j \\
&+ m_2 \sum_{\substack{j, k \in S \\ j \neq i}} ( W_i ^\top A^*_k) (W_j^\top A^*_k)  W_j + m_1^2 \sum_{\substack{j, k, l \in S \\ j \neq i \\ k \neq l}}  ( W_i ^\top A^*_k ) ( W_j ^\top A^*_l)  W_j \\
&-2D m_1^2 \sum_{\substack{j, k \in S \\ j \neq i \\ j \neq k}} ( W_i^{\top} A_k^*) A_j^* -2D m_2 \sum_{\substack{j \in S \\ j \neq i}} ( W_i^{\top} A_j^*) A_j^* + D m_1 \sum_{\substack{j \in S \\ j \neq i}} \epsilon_i A_j^* \\
&- m_1 \sum_{\substack{j,k \in S \\ k \neq i}} \epsilon_j (W_i^\top W_j) A^*_k + m_2 \sum_{\substack{j,k \in S \\ k \neq i}} (W_i^\top W_j) (W_j^\top A^*_k) A^*_k + m_1^2 \sum_{\substack{j,k,l \in S \\ k \neq i,l}} (W_i^\top W_j) (W_j^{\top} A_l^*)A^*_k \Bigg \} \Bigg ]
\end{align*}
Thus we have laid the groundwork of finding a convenient decomposition of the proxy-gradient in terms of the quantities $\alpha_i,\beta_i$ and $e_i$. Now we can go over to  Appendix~\ref{app:asymptotics} where their magnitudes are estimated towards completing the proof of Lemma \ref{bounds}.

\section{Simulations}\label{sec:experiments}
We conduct some experiments on synthetic data in order to check whether the gradient norm is indeed small within the columnwise $\delta$-ball of $A^*$. We also make some observations about the landscape of the squared loss function, which has implications for being able to recover the ground-truth dictionary $A^*$.
\paragraph*{Data Generation Model} We generate random gaussian dictionaries ($A^*$) of size $n \times h$ where $n=50$, and $h=256, 512, 1024, 2048$ and $4096$. For each $h$, we generate a dataset containing $N=5000$ sparse vectors with $h^p$ non-zero entries, for various $p \in [0.01,0.5]$. In our experiments, the coherence parameter $\xi$ was approximately $0.1$. The support of each sparse vector $x^*$ is drawn uniformly from all sets of indices of size $h^p$, and the non-zero entries in the sparse vectors are drawn from a uniform distribution between $a = 1$ and $b = 10$. Once we have generated the sparse vectors, we collect them in a matrix $X^* \in \mathbb{R}^{h \times N}$ and then compute the signals $Y = A^* X^*$. We set up the autoencoder as defined through equation \ref{eqn:autoencoder}. We analyze the squared loss function in~\eqref{eqn:loss} and its gradient with respect to a column of $W$ through their empirical averages over the signals in $Y$.

\paragraph*{Results} Once we have generated the data, we compute the empirical average of the gradient of the loss function in~\eqref{eqn:loss} at $200$ random points which are columnwise $\frac{\delta}{2} = \frac{1}{2h^{2p}}$ away from $A^*$. We average the gradient over the $200$ points which are all at the same distance from $A^*$, and compare the average column norm of the gradient to $h^{p-1}$. Our experimental results shown in Table \ref{tab:grad-norm} demonstrate that the average column norm of the gradient is of the order of $h^{p-1}$ (and thus falling with $h$ for any fixed $p$) as expected from Theorem \ref{sec:theorem:critical}.

\begin{table*}[h]
\centering
\begin{tabular}{|l||*{7}{c|}}\hline
\backslashbox{$h$}{$p$} & 0.01 & 0.02 & 0.05 & 0.1 & 0.2 \\\hline\hline
256 & (0.0137, 0.0041) & (0.0138, 0.0044) & (0.0126, 0.0052) & (0.0095, 0.0068) & (0.0284, 0.0118) \\\hline
512 & (0.0058, 0.0021) & (0.0058, 0.0022) & (0.0054, 0.0027) & (0.0071, 0.0036) & (0.0104, 0.0068) \\\hline
1024 & (0.0025, 0.0010) & (0.0024, 0.0011) & (0.0026, 0.0014) & (0.0079, 0.0020) & (0.0078, 0.0039) \\\hline
2048 & (0.0011, 0.0005) & (0.0012, 0.0006) & (0.0025, 0.0007) & (0.0031, 0.0010) & (0.0032, 0.0022) \\\hline
4096 & (0.0006, 0.0003) & (0.0012, 0.0003) & (0.0013, 0.0004) & (0.0026, 0.0006) & (0.0020, 0.0013) \\\hline
\end{tabular}
\newline 
\begin{tabular}{|l||*{7}{c|}}\hline
\backslashbox{$h$}{$p$} & 0.3 & 0.5 \\\hline\hline
256 &  (0.0464, 0.0206) & (0.0343, 0.0625) \\\hline
512 &  (0.0214, 0.0127) & (0.0028, 0.0442) \\\hline
1024 & (0.0099, 0.0078) & (0.00, 0.0313)\\\hline
2048 & (0.0036, 0.0048) & (0.00, 0.0221) \\\hline
4096 & (0.0008, 0.0030) & (0.00, 0.0156) \\\hline
\end{tabular}
\caption{Average gradient norm for points that are columnwise $\frac{\delta}{2}$ away from $A^*$. For each $h$ and $p$ we report $\left( \vert \vert \mathbb{E} \left[ \frac{\partial L}{\partial W_i} \right] \vert \vert, h^{p-1}\right)$. We note that the gradient norm and $h^{p-1}$ are of the same order, and for any fixed $p$ the gradient norm is decreasing with $h$ as expected from Theorem \ref{sec:theorem:critical}}
\label{tab:grad-norm}
\end{table*}

We also plot the squared loss of the autoencoder along a randomly chosen direction to understand the geometry of the landscape of the loss function around $A^*$. We draw a matrix $\Delta W$ from a standard normal distribution, and normalize its columns. We then plot $f(t) = L( (A^* + t \Delta W )^\top )$, as well as the gradient norm averaged over all the columns. For purposes of illustration, we show these plots for $p=0.01,0.1, 0.3$. The plots for $h=256$ are in Figure \ref{fig:loss_grad_256}, and those for $h=4096$ in Figure \ref{fig:loss_grad_4096}. From the plots for $p=0.01$ and $0.1$, we can observe that the loss function value, and the gradient norm keep decreasing as we get close to $A^*$. Figure \ref{fig:loss_grad_256} and \ref{fig:loss_grad_4096} are representative of the shapes obtained for every direction, $\Delta W$ that we checked. This suggests that $A^*$ might conveniently lie at the bottom of a well in the landscape of the loss function. For the value of $p=0.3$, (which is much larger than the coherence parameter $\xi$), Theorem \ref{sec:theorem:support} is no longer valid. We see that the value of the loss function decreases a little as we move away from $A^*$, and then increases. We suspect that $A^*$ is here in a region where $\textrm{ReLU}(A^{*\top}y - \epsilon ) = 0$, which means the function is flat in a small neighborhood of $A^*$.

\begin{figure}[!htbp]
\center
\includegraphics[scale=0.30]{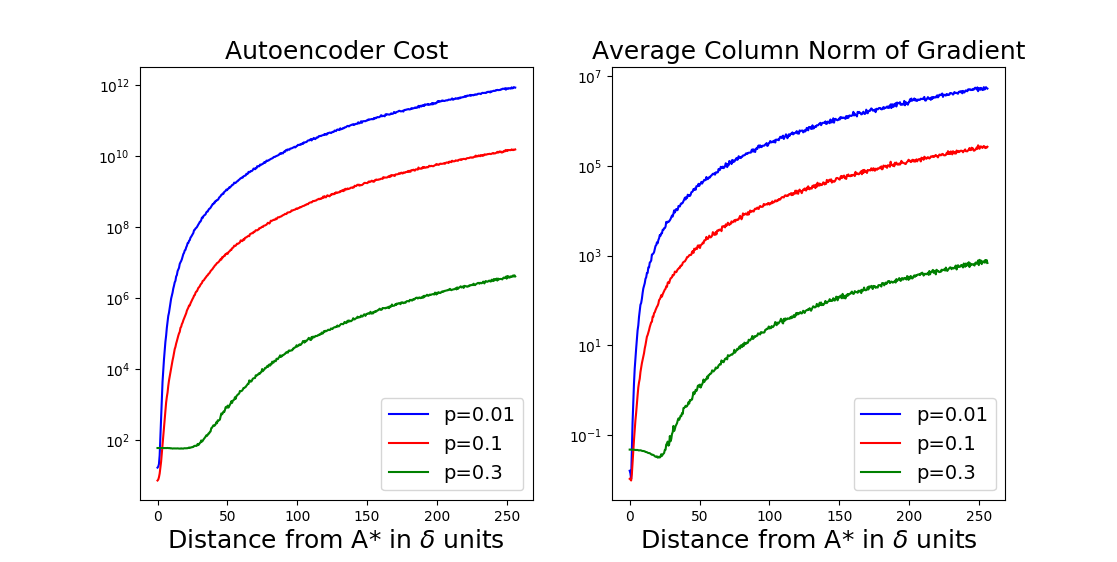}
\caption{Loss function plot for $h=256,n=50$}
\label{fig:loss_grad_256}
\end{figure} 

\begin{figure}[!htbp]
\centering
\includegraphics[scale=0.30]{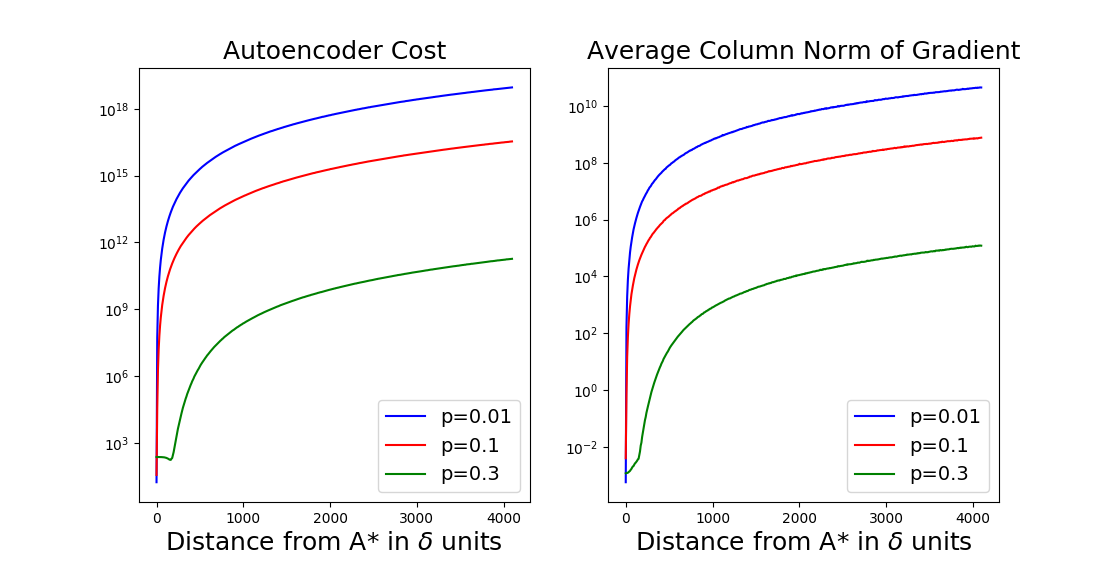}
\caption{Loss function plot for $h=4096,n=50$}\label{fig:loss_grad_4096}
\end{figure}

We also tried to minimize the squared loss of the autoencoder using gradient descent. In these experiments, we initialized $W^\top$ far away from $A^*$ (precisely at a columnwise distance of ${\frac h 5} \times \delta$), and did gradient descent until the gradient norm dropped below a factor of $2 \times 10^{-5}$ of the initial norm of the gradient. We then computed the average columnwise distance between $W^\top_{\textrm{final}}$ and $A^*$, and report the $\%$ decrease in the average columnwise distance from the initial point. These results are reported in Table \ref{tab:grad-des} below. These experiments suggest that there is a neighborhood of $A^*$ (the radius of which is increasing with $h$), such that gradient descent initialized at the edge of that neighborhood, greatly reduces the average columnwise distance between $W^\top$ and $A^*$.

\begin{table}[!htbp]
\centering
\begin{tabular}{|l||*{2}{c|}}\hline
$h$ & $p=0.05$ & $p=0.1$ \\\hline\hline
256 & 97.7\% & 96.9\%  \\\hline
512 & 98.6\% & 98.2\%  \\\hline
1024 & 99\% & 98.8\% \\\hline
2048 & 99.2\% & 99\% \\\hline
4096 & 99.4\% & 99.2\% \\\hline
\end{tabular}
\caption{Fraction of initial columnwise distance covered by the gradient descent procedure}\label{tab:grad-des}
\end{table}

\section{Conclusion}
In this chapter we have undertaken a rigorous analysis of the loss function of the squared loss of an autoencoder when the data is assumed to be generated by sensing of sparse high dimensional vectors by an overcomplete dictionary. {\bf We have shown that the expected gradient of this loss function is very close to zero in a neighborhood of the generating overcomplete dictionary.} 
~\\ \\
Our simulations complement this theoretical result by providing further empirical support. Firstly, they show that the gradient norm in this $\delta-$ball of $A^*$ indeed falls with $h$ and is of the same order as $\frac{1}{h^{1-p}}$ as expected from our proof. Secondly, the experiments also strongly suggest ranges of values of $h$ and $p$ where $A^*$ is a local minima of this loss function and that it has a neighborhood where the reconstruction error is low.
~\\ \\
This suggests sparse coding problems can be solved by training autoencoders using gradient descent based algorithms. 
Further, recent investigations have led to the conjecture/belief that many important unsupervised learning tasks, e.g. recognizing handwritten digits, are sparse coding problems in disguise~\citep{makhzani2013k,makhzani2015winner}. Thus, our results could shed some light on the observed phenomenon that gradient descent based algorithms train autoencoders to low reconstruction error for natural data sets, like MNIST.
~\\ \\
It remains to rigorously show whether a gradient descent algorithm can be initialized randomly (may be far away from $A^*$) and still be shown to converge to this neighborhood of critical points around the dictionary. Towards that it might be helpful to understand the structure of the Hessian outside this neighborhood. Since our analysis applies to the expected gradient, it remains to analyze the sample complexities where these nice results will become prominent. 
~\\ \\
The possibility also remains open that this standard loss or some other loss functions exist for the autoencoder with the provable property of having a global minima/minimum at the ground truth dictionary. We have mentioned one example of such in a special case (when $A^*$ is square orthogonal and $x^*$ is nonnegative) and even in this special case it remains open to find a provable optimization algorithm. 
~\\ \\
On the simulation front we have a couple of open challenges yet to be tackled. Firstly, it is left to find efficient implementations of the iterative update rule based on the exact gradient of the proposed loss function which has been given in~\eqref{eqn:loss}. This would open up avenues for testing the power of this loss function on real data rather than the synthetic data used here. Secondly, a simulation of the main Theorem~\ref{sec:theorem:critical} that can probe deeper into its claim would need to be able to sample $A^*$ for different $h$ at a fixed value of the incoherence parameter $\xi$. This sampling question of $A^*$ with these constraints is an unresolved one that is left for future work. 
~\\ \\Autoencoders with more than one hidden layer have been used for unsupervised feature learning \citep{le2013building} and recently there has been an analysis of the sparse coding performance of convolutional neural networks with one layer~\citep{gilbert2017towards} and two layers of nonlinearities \citep{vardan2016convolutional}. The connections between neural networks and sparse coding has also been recently explored in~\cite{bora2017compressed}. It remains an exciting open avenue of research to try to do a similar study as in this work to determine if and how deeper architectures under the same generative model might provide better means of doing sparse coding.

\begin{subappendices}

\chapter*{\vspace{10pt} Appendix To Chapter \ref{chapaut}}\label{app:Auto} 

\section {The proxy gradient is a good approximation of the true expectation of the gradient (Proof of Lemma 5.1)} \label{app:proxy}

\begin{proof}
To make it easy to present this argument let us abstractly think of the function $f$ (defined for any $i \in \{1,2,3,..,h\}$) as $f(y,W,X) = \frac{\partial L}{\partial W_i} $ where we have defined the random variable $X = \text{Th}[W_i^Ty-\epsilon_i]$. It is to be noted that because of the  ReLU term and its derivative this function $f$ has a dependency on $y= A^*x^*$ even outside its dependency through $X$. Let us define another random variable $Y = \mathbf{1}_{i \in \text{Support}(x^*)}$. Then we have,
\begin{align*}
&\big\Vert \mathbb{E}_{x^*}[f(y,W,X)] - \mathbb{E}_{x^*}[f(y,W,Y)] \big\Vert_{\ell_2} \\
\leq  &\mathbb{E}_{x^*} [ \vert f(y,W,X) -  f(y,W,Y) \vert_{\ell_2} ]\\
\leq &\mathbb{E}_{x^*} [ \vert f(y,W,X) (\mathbf{1}_{X=Y} + \mathbf{1}_{X \neq Y}) -  f(y,W,Y)(\mathbf{1}_{X=Y} + \mathbf{1}_{X \neq Y}) \vert_{\ell_2} ] \\
\leq & \mathbb{E}_{x^*} [\vert (f(y,W,X) - f(y,W,Y)) \vert_{\ell_2}\mathbf{1}_{X \neq Y} ]\\
\leq &\sqrt{\mathbb{E}_{x^*}[ \big\vert f(y,W,X) - f(y,W,Y) \big\vert_{2}^2 ]} \sqrt{\mathbb{E}_{x^*} [\mathbf{1}_{X \neq Y}]}
\end{align*}
~\\
In the last step above we have used the Cauchy-Schwarz inequality for random variables. We recognize that $\mathbb{E}_{x^*}[f(y,W,Y)]$ is precisely what we defined as the proxy gradient $\widehat{\nabla_i L}$. Further for such $W$ as in this lemma the support recovery theorem (Theorem 3.1) holds and that is precisely the statement that the term, $\mathbb{E}_{x^*} [\mathbf{1}_{X \neq Y}]$ is small. So we can rewrite the above inequality as, 
\begin{align*}
\bigg\Vert \mathbb{E}_{x^*}[\frac{\partial L}{\partial W_i}] - \widehat{\nabla_i L} \bigg\Vert_{2} \leq \sqrt{\mathbb{E}_{x^*}[ \big\vert f(y,W,X) - f(y,W,Y) \big\vert_{2}^2 ]} \exp \left ( - \frac{h^pm_1^2}{2(b-a)^2} \right )
\end{align*}
~\\
We remember that $f$ is a polynomial in $h$ because its $h$ dependency is through Frobenius norms of submatrices of $W$ and $\ell_2$ norms of projections of $Wy$. But the $\ell_\infty$ norm of the training vectors $y$ (that is $b$) have been assumed to be bounded by $\text{poly}(h)$. Also we have the assumption that the columns of $W^\top$ are within a $\frac{1}{h^{p+\nu^2}}-$ball of the corresponding columns of $A^*$ which in turn is a $n \times h$ dimensional matrix of bounded norm because all its columns are normalized. So summarizing we have, 

\begin{align*}
\bigg\Vert \mathbb{E}_{x^*}[\frac{\partial L}{\partial W_i}] - \widehat{\nabla_i L} \bigg\Vert_{2} \leq  \text{poly}(h)\exp \left ( - \frac{h^pm_1^2}{2(b-a)^2} \right )
\end{align*}
~\\
The above inequality immediately implies the claimed lemma.
\end{proof}

\newpage 
\section {The asymptotics of the coefficients of the gradient of the squared loss (Proof of Lemma $5.2$)}\label{app:asymptotics}

~\\  
We will pick up from where subsection \ref{subsec:grad-coeff} left and will now estimate bounds on each of the terms $\alpha_i, \beta_i, ||e_i||$, which were defined at the end of that segment. We will separate them as $\alpha_i = \tilde{\alpha_i} + \hat{\alpha_i}$ (similarly for the other terms). Where the tilde terms are those that come as a coefficient of $m_2$, and the hat terms are the ones that come as coefficient of $m_1$ or $\epsilon$ or both. (Note : Given the previous definitions of $q_1$ and $q_2$ it is obvious from context as to how the quantities $q_i,q_{ij},q_{ijk}$ and $q_S$ mean and we shall use this notation in this Appendix.)

\subsection {Estimating the $m_2$ dependent parts of the derivative}

Since $||A^*_i||=1$ and $W_i$ is being assumed to be within a $0 < \delta <1$ ball of $A^*_i$ we can use the following inequalities:
\begin{align*}
||W_i|| &= ||W_i - A^*_i + A^*_i|| \leq ||W_i - A^*_i|| + ||A^*_i|| = \delta + 1\\ 
||W_i|| &\geq 1-\delta \\
\langle W_i, A^*_i \rangle &= \langle W_i - A^*_i, A^*_i \rangle + \langle A^*_i, A^*_i \rangle \leq ||W_i - A^*_i||||A^*_i|| +  1 \leq \delta + 1\\ 
\langle W_i, A^*_i \rangle &\geq 1-\delta \\
|\langle W_j, A^*_i \rangle| &= |\langle W_j - A^*_j, A^*_i \rangle + \langle A^*_j, A^*_i \rangle| \leq \frac{\mu}{\sqrt{n}}+ ||W_j - A^*_j||||A^*_i|| = \frac{\mu}{\sqrt{n}}+\delta\\
\vert \langle W_i, W_j  \rangle \vert &= \vert \langle W_i - A_i^*,W_j \rangle + \langle A_i^*,W_j \rangle \vert \leq \delta(1+\delta) + (\delta + \frac {\mu}{\sqrt{n}}) = \delta ^2 + 2\delta + \frac{\mu}{\sqrt{n}}\\
\langle W_i, W_i \rangle &= ||W_i||^2 \geq (1-\delta)^2 \\
\langle W_i, W_i \rangle &= ||W_i||^2 \leq (1+\delta)^2
\end{align*}

\paragraph{Bounding $\tilde{\beta_i}$}
\begin{align*}
\tilde{\beta_i} &= \mathbb{E}_{S \in \mathbb{S}}\left [ \mathbf{1}_{i \in S} \left \{ 2D   m_2 ( W_i^{\top} A_i^*) - m_2 \sum_{j \in S} (W_i^\top W_j) (W_j^\top A^*_i)\right \} \right ]\\
&= \mathbb{E}_{S \in \mathbb{S}}\left [ \mathbf{1}_{i \in S} \left \{ 2D   m_2 \langle W_i, A_i^* \rangle  - m_2 ||W_i||^2 \langle W_i, A^*_i\rangle - m_2 \sum_{\substack{j \in S \\ j \neq i}} \langle W_i, W_j \rangle  \langle W_j, A^*_i\rangle \right \} \right ]
\end{align*}

Evaluating the outer expectation we get,
\begin{align}\label{beta_m2}
\nonumber \tilde{\beta_i} &= \sum_{\{S \in \mathbb{S} : i \in S\}} q_S 2D m_2 \langle W_i, A_i^* \rangle  - \sum_{\{S \in \mathbb{S} : i \in S\}} q_S m_2 ||W_i||^2 \langle W_i, A^*_i\rangle - m_2 \sum_{\substack{j =1 \\ j \neq i}}^h \langle W_i, W_j \rangle  \langle W_j , A^*_i\rangle \sum_{\{S \in \mathbb{S} : i,j \in S, i \neq j\}} q_S\\ 
\nonumber &= 2D q_i m_2 \langle W_i, A_i^* \rangle - q_i m_2 ||W_i||^2 \langle W_i, A^*_i\rangle - m_2 \sum_{\substack{j =1 \\ j \neq i}}^h q_{ij} \langle W_i, W_j \rangle  \langle W_j , A^*_i\rangle\\
\nonumber &\text{Upper bounding the above we get,}\\
\nonumber \tilde{\beta_i} &\leq 2D m_2 h^{p-1} (1+\delta) - m_2 h^{p-1} (1-\delta)^3 + m_2 h^{2p-1} \left( \delta + \frac{\mu}{\sqrt{n}} \right) \left( \delta^2 + 2\delta + \frac{\mu}{\sqrt{n}} \right)\\
\nonumber &= 2D m_2 h^{p-1} (1+h^{-p-\nu^2}) - m_2 h^{p-1} (1-3h^{-p-\nu^2} + 3 h^{-2p-2\nu^2} - h^{-3p-3\nu^2}) \\
&+ m_2 h^{2p-1} (h^{-3p-3\nu^2} + 2h^{-2p-2\nu^2} + h^{-2p-2\nu^2 - \xi} + 3h^{-p-\nu^2 - \xi} + h^{-2 \xi}) \\
\nonumber &\text{Similarly for the lower bound on $\beta_i$ we get,}\\
\nonumber \tilde{\beta_i} &\geq 2D m_2 h^{p-1} (1-\delta) - m_2 h^{p-1} (1+\delta)^3 - m_2 h^{2p-1} \left( \delta + \frac{\mu}{\sqrt{n}} \right) \left( \delta^2 + 2\delta + \frac{\mu}{\sqrt{n}} \right)\\
\nonumber &= 2D m_2 h^{p-1} (1-h^{-p-\nu^2}) - m_2 h^{p-1} (1+3h^{-p-\nu^2} + 3 h^{-2p-2\nu^2} + h^{-3p-3\nu^2}) \\
&- m_2 h^{2p-1} (h^{-3p-3\nu^2} + 2h^{-2p-2\nu^2} + h^{-2p-2\nu^2 - \xi} + 3h^{-p-\nu^2 - \xi} + h^{-2 \xi})
\end{align}
~\\
Thus for $0<p<2\xi$ and $D=1$, we have $\beta = \Theta \left( m_2 h^{p-1} \right)$

\paragraph{Bounding $\tilde{\alpha_i}$}
\begin{align*}
\tilde{\alpha_i} &= \mathbb{E}_{S \in \mathbb{S}}\left [ \mathbf{1}_{i \in S} \left \{ m_2  \sum_{k \in S}  (W_i^{\top}A_k^*)^2  \right \} \right ]\\
&= \mathbb{E}_{S \in \mathbb{S}}\left [ \mathbf{1}_{i \in S} \left \{ m_2  \langle W_i, A_i^* \rangle^2 + m_2 \sum_{\substack{k \in S \\ k \neq i}}  \langle W_i, A_k^*\rangle^2  \right \} \right ] \\
&= \sum_{\{S \in \mathbb{S} : i \in S\}} m_2 \langle W_i, A_i^* \rangle^2 q_S +  \sum_{\substack{k=1 \\ k \neq i}}^h \sum_{\{S \in \mathbb{S} : i,k \in S\}} \langle W_i, A_k^*\rangle^2 q_S \\
&= m_2 \langle W_i, A_i^* \rangle^2 \sum_{\{S \in \mathbb{S} : i \in S\}} q_S + m_2 \sum_{\substack{k=1 \\ k \neq i}}^h \langle W_i, A_k^*\rangle^2 \left( \sum_{\{S \in \mathbb{S} : i,k \in S, i \neq k\}} q_S\right)  \\
&= q_i m_2 \langle W_i, A_i^* \rangle^2  + m_2 \sum_{\substack{k=1 \\ k \neq i}}^h q_{ik} \langle W_i, A_k^*\rangle^2 \\
&= h^{p-1} m_2 \langle W_i, A_i^* \rangle^2  + m_2 h^{2p-1} \textrm{ max } \langle W_i, A_k^*\rangle^2
\end{align*}

~\\
The above implies the following bounds,
\begin{align}\label{alpha_m2}
h^{p-1} m_2 (1 - h^{-p-\nu^2})^2 \leq \tilde{\alpha_i} \leq h^{p-1} m_2 (1 + h^{-p-\nu^2})^2 + m_2 h^{2p-1} (h^{-p-\nu^2} + h^{-\xi})^2
\end{align}
As long as $0< p < 2\xi$, $\tilde{\alpha_i} = \Theta \left( m_2 h^{p-1}\right)$

\paragraph{Bounding $\vert \vert \tilde{e_i} \vert \vert_2$}

\begin{align*}
\tilde{e_i} &= \mathbb{E}_{S \in \mathbb{S}}\left [ \mathbf{1}_{i \in S} \times \left \{    m_2\sum_{\substack{j, k \in S \\ j \neq i}} ( W_i ^\top A^*_k) (W_j^\top A^*_k)  W_j
+ (-2D)m_2 \sum_{\substack{j \in S \\ j \neq i}} ( W_i^{\top} A_j^*) A_j^* \right \} \right ]\\
&+ \mathbb{E}_{S \in \mathbb{S}}\left [ \mathbf{1}_{i \in S} \times \left \{ m_2 \sum_{\substack{j,k \in S \\ k \neq i}} (W_i^\top W_j) (W_j^\top A^*_k) A^*_k \right  \} \right ]\\
\end{align*}

Expanding further over the summation of the $j$ and the $k$ indices we have,

\begin{align*}
\tilde{e_i} &= \mathbb{E}_{S \in \mathbb{S}}\Bigg [ \mathbf{1}_{i \in S} \times m_2 \Bigg \{    \sum_{j (=k) \in S\setminus i } ( W_i^{\top} A_j^*)(  W_j^{\top}  A_j^*) W_j + \sum_{\substack{j \in S\setminus i\\ k \in S \setminus i,j}} ( W_i^{\top}  A_k^*)( W_j^{\top}  A_k^*) W_j \\
&+ \sum_{\substack{j \in S\setminus i \\ k=i}} ( W_i^{\top}  A_i^*)(  W_j^{\top}  A_i^*) W_j \Bigg \} \Bigg ]\\
&+ \mathbb{E}_{S \in \mathbb{S}}\left [ \mathbf{1}_{i \in S} \times (-2D)m_2 \left \{ \sum_{\substack{j \in S \\ j \neq i}} ( W_i^{\top} A_j^*) A_j^*  \right \} \right ]\\
&+\mathbb{E}_{S \in \mathbb{S}}\Bigg [ \mathbf{1}_{i \in S} \times m_2 \Bigg \{ \sum_{\substack{k(=j) \in S \setminus i}} (W_i^\top W_k) (W_k^\top A^*_k) A^*_k  + \sum_{\substack{k \in S\setminus i\\ j \in S \setminus i,k}} (W_i^\top W_j) (W_j^\top A^*_k) A^*_k\\
&+ \sum_{\substack{k \in S\setminus i \\ j=i}} (W_i^\top W_i) (W_i^\top A^*_k) A^*_k \Bigg \} \Bigg ]
\end{align*}

~\\
Expanding the above in terms of $q_S$ we have, 

\begin{align*}
\tilde{e_i} &= m_2 \Bigg \{ \sum_{j=1, j \neq i}^h ( W_i^{\top} A_j^*)(  W_j^{\top}  A_j^*) W_j \sum_{\{S \in \mathbb{S}: i,j \in S, i \neq j\}} q_S + \sum_{\substack{j,k=1 \\j \neq k \neq i}}^h ( W_i^{\top}  A_k^*)( W_j^{\top}  A_k^*) W_j \sum_{\{S \in \mathbb{S}: i,j,k \in S, i \neq j \neq k\}} q_S \\
&+ \sum_{\substack{j =1 \\ j \neq i}}^h ( W_i^{\top}  A_i^*)(  W_j^{\top}  A_i^*) W_j \sum_{\{S \in \mathbb{S}: i,j \in S, i \neq j\}} q_S \Bigg \}\\
&+ (-2D)m_2 \left \{ \sum_{\substack{j =1 \\ j \neq i}}^h ( W_i^{\top} A_j^*) A_j^* \sum_{\{S \in \mathbb{S}: i,j \in S, i \neq j\}} q_S \right \} \\
&+ m_2 \Bigg \{ \sum_{\substack{k=1 \\ k \neq i}}^h (W_i^\top W_k) (W_k^\top A^*_k) A^*_k \sum_{\{S \in \mathbb{S}: i,k \in S, i \neq k\}} q_S  + \sum_{\substack{j,k =1\\ j \neq i \neq k}}^h (W_i^\top W_j) (W_j^\top A^*_k) A^*_k \sum_{\{S \in \mathbb{S}: i,j,k \in S, i \neq j \neq k\}} q_S \\
&+ \sum_{\substack{k =1 \\ k \neq i}}^h (W_i^\top W_i) (W_i^\top A^*_k) A^*_k \sum_{\{S \in \mathbb{S}: i,k \in S, i \neq k\}} q_S \Bigg \}\\
\end{align*}

Expanding the $q_S$ dependency in terms of $q_{ij}$ and $q_{ijk}$ we have,

\begin{align*}
\tilde{e_i}
&= m_2 \Bigg \{ \sum_{j=1, j \neq i}^h q_{ij} ( W_i^{\top} A_j^*)(  W_j^{\top}  A_j^*) W_j + \sum_{\substack{j,k=1 \\j \neq k \neq i}}^h q_{ijk} ( W_i^{\top}  A_k^*)( W_j^{\top}  A_k^*) W_j \\
&+ \sum_{\substack{j =1 \\ j \neq i}}^h q_{ij} ( W_i^{\top}  A_i^*)(  W_j^{\top}  A_i^*) W_j \Bigg \} + (-2D)m_2 \left \{ \sum_{\substack{j =1 \\ j \neq i}}^h q_{ij} ( W_i^{\top} A_j^*) A_j^*  \right \} \\
&+ m_2 \Bigg \{ \sum_{\substack{k=1 \\ k \neq i}}^h q_{ik} (W_i^\top W_k) (W_k^\top A^*_k) A^*_k  + \sum_{\substack{j,k =1\\ j \neq i \neq k}}^h q_{ijk} (W_i^\top W_j) (W_j^\top A^*_k) A^*_k \\
&+ \sum_{\substack{k =1 \\ k \neq i}}^h q_{ik} (W_i^\top W_i) (W_i^\top A^*_k) A^*_k \Bigg \}
\end{align*}


~\\
Upper bounding the norm of this vector $\tilde{e}_i$ we get, 

\begin{align}
\label{ei_m2}
\nonumber ||\tilde{e_i}|| &\leq m_2 h^{2p-1} \left( \delta + \frac{\mu}{\sqrt{n}} \right) (1+\delta)^2 + m_2 h^{3p-1} \left( \delta + \frac{\mu}{\sqrt{n}} \right)^2 (1+\delta) \\
\nonumber &+ m_2 h^{2p-1} \left( \delta + \frac{\mu}{\sqrt{n}} \right) (1+\delta)^2 + 2D m_2 h^{2p-1} \left( \delta + \frac{\mu}{\sqrt{n}} \right) \\
\nonumber &+ m_2 h^{2p-1} \left( \delta^2 + 2\delta + \frac{\mu}{\sqrt{n}} \right) (1+\delta) + m_2 h^{3p-1} \left( \delta^2 + 2\delta + \frac{\mu}{\sqrt{n}} \right) \left(\delta + \frac{\mu}{\sqrt{n}} \right) \\
\nonumber &+ m_2 h^{2p-1} \left( \delta + \frac{\mu}{\sqrt{n}} \right) (1+\delta)^2\\
\nonumber &\leq m_2 h^{2p-1} (h^{-p-\nu^2} +2h^{-2p-2\nu^2} + h^{-3p-3\nu^2} + 2h^{-p-\nu^2 -\xi} + h^{-2p-2\nu^2-\xi} + h^{-\xi}) \\
\nonumber &+ m_2 h^{3p-1} (h^{-2p-2\nu^2} + h^{-3p-3\nu^2} + 2h^{-p-\nu^2 -\xi} + 2h^{-2p-2\nu^2-\xi} + h^{-2\xi} + h^{-p -\nu^2 -2\xi}) \\
\nonumber &+ m_2 h^{2p-1} (h^{-p-\nu^2} +2h^{-2p-2\nu^2} + h^{-3p-3\nu^2} + 2h^{-p-\nu^2 -\xi} + h^{-2p-2\nu^2-\xi} + h^{-\xi})\\
\nonumber &+ 2D m_2 h^{2p-1} (h^{-p-\nu^2} +h^{-\xi}) \\
\nonumber &+ m_2 h^{2p-1} (2h^{-p-\nu^2} +3h^{-2p-2\nu^2} + h^{-3p-3\nu^2} + h^{-p-\nu^2 -\xi} + h^{-\xi}) \\
\nonumber &+ m_2 h^{3p-1} (2h^{-2p-2\nu^2} + h^{-3p-3\nu^2} + 3h^{-p-\nu^2 -\xi} + h^{-2p-2\nu^2-\xi} + h^{-2\xi}) \\
&+ m_2 h^{2p-1} (h^{-p-\nu^2} +2h^{-2p-2\nu^2} + h^{-3p-3\nu^2} + 2h^{-p-\nu^2 -\xi} + h^{-2p-2\nu^2-\xi} + h^{-\xi})
\end{align}

~\\
If $D=1$ and $0<p<\xi$, we get $||\tilde{e_i}|| = o(m_2h^{p-1})$


\subsection {Estimating the $m_1$ dependent parts of the derivative}

We continue working in the same regime for the $W$ matrix as in the previous subsection. Hence the same inequalities as listed at the beginning of the previous subsection continue to hold and we use them  to get the following bounds, 

\paragraph {Bounding $\hat{\alpha_i}$}

\begin{align*}
\hat{\alpha_i} &= \mathbb{E}_{S \in \mathbb{S}} \Bigg[ \mathbf{1}_{i \in S} \times \Bigg \{ m_1^2 \sum_{\substack{k, l \in S \\ k \neq l}}  ( W_i ^\top A^*_k ) ( W_i ^\top A^*_l) -2m_1 \sum_{k \in S}  \epsilon_i  ( W_i^\top A^*_k) + \epsilon_i^2 \Bigg \} \Bigg] \\
&= \mathbb{E}_{S \in \mathbb{S}} \Bigg[ \mathbf{1}_{i \in S} \times \Bigg \{ m_1^2 \sum_{\substack{k \in S \\ k \neq i}} \langle W_i, A^*_k \rangle \langle W_i, A^*_i\rangle + m_1^2 \sum_{\substack{l \in S \\ l \neq i}} \langle W_i, A^*_i \rangle \langle W_i, A^*_l\rangle + m_1^2 \sum_{\substack{k, l \in S \\ k \neq l \\ k \neq i \\ l \neq i}}  \langle W_i, A^*_k \rangle \langle W_i, A^*_l \rangle \\
&- 2m_1 \epsilon_i \langle W_i, A^*_i \rangle - 2m_1 \sum_{\substack{k \in S \\ k \neq i}} \epsilon_i \langle W_i, A^*_k \rangle + \epsilon_i^2 \Bigg \} \Bigg] \\
&= 2m_1^2 \sum_{\substack{k = 1 \\ k \neq i}}^h \langle W_i, A^*_k \rangle \langle W_i, A^*_i\rangle \sum_{\{S \in \mathbb{S} : i,k \in S, k \neq i \}}q_S + m_1^2 \sum_{\substack{k, l = 1 \\ k \neq l \\ k \neq i \\ l \neq i}}^h  \langle W_i, A^*_k \rangle \langle W_i, A^*_l \rangle \sum_{\{S \in \mathbb{S} : i,k,l \in S, k \neq i \neq l \}}q_S \\
&- 2m_1 \epsilon_i \langle W_i, A^*_i \rangle \sum_{\{S \in \mathbb{S} : i \in S \}}q_S - 2m_1 \sum_{\substack{k = 1 \\ k \neq i}}^h \epsilon_i \langle W_i, A^*_k \rangle \sum_{\{S \in \mathbb{S} : i,k \in S, k \neq i \}}q_S + \epsilon_i^2 \sum_{\{S \in \mathbb{S} : i \in S \}}q_S\\
\implies \hat{\alpha_i} &= 2m_1^2 \sum_{\substack{k = 1 \\ k \neq i}}^h q_{ik} \langle W_i, A^*_k \rangle \langle W_i, A^*_i\rangle + m_1^2 \sum_{\substack{k, l = 1 \\ k \neq l \\ k \neq i \\ l \neq i}}^h q_{ikl} \langle W_i, A^*_k \rangle \langle W_i, A^*_l \rangle \\
&- 2m_1 q_i \epsilon_i \langle W_i, A^*_i \rangle - 2m_1 \sum_{\substack{k = 1 \\ k \neq i}}^h q_{ik} \epsilon_i \langle W_i, A^*_k \rangle + q_i \epsilon_i^2
\end{align*}

~\\
We plugin $\epsilon_i = 2m_1 h^p \left( \delta + \frac{\mu}{\sqrt{n}} \right)$ for $i = 1, \ldots, h$

\begin{align*}
|\hat{\alpha_i}| &\leq 2m_1^2 h^{2p-1} \left( \delta + \frac{\mu}{\sqrt{n}}\right) (1+\delta) + m_1^2 h^{3p-1} \left( \delta + \frac{\mu}{\sqrt{n}} \right)^2 + 4m_1^2 h^{2p-1} (1+\delta) \left( \delta + \frac{\mu}{\sqrt{n}} \right) \\
&+ 4m_1^2 h^{3p-1} \left( \delta + \frac{\mu}{\sqrt{n}} \right)^2 + 4m_1^2 h^{3p-1} \left( \delta + \frac{\mu}{\sqrt{n}} \right)^2\\
&= 2m_1^2 h^{2p-1} (h^{-p-\nu^2} + h^{-2p-2\nu^2} + h^{-p-\nu^2 -\xi} + h^{-\xi}) + m_1^2 h^{3p-1} (h^{-2p-2\nu^2} + 2h^{-p-\nu^2 -\xi} + h^{-2\xi}) \\
&+4m_1^2 h^{2p-1} (h^{-p-\nu^2} + h^{-2p-2\nu^2} + h^{-\xi} + h^{-p-\nu^2 -\xi}) + 4m_1^2 h^{3p-1} (h^{-2p-2\nu^2} + 2h^{-p-\nu^2 -\xi} + h^{-2\xi}) \\
&+ 4m_1^2 h^{3p-1} (h^{-2p-2\nu^2} + 2h^{-p-\nu^2 -\xi} + h^{-2\xi})
\end{align*}
~\\
This means that if $p < \xi$, $|\hat{\alpha_i}| = o( m_1^2 h^{p-1} )$. Putting  this together with the bounds obtained below equation \ref{alpha_m2}, we get that $\alpha_i = \Theta(m_2h^{p-1}) + o( m_1^2 h^{p-1} )$.

\paragraph{Bounding $\hat{\beta_i}$}

\begin{align*}
\hat{\beta_i} &= \mathbb{E}_{S \in \mathbb{S}} \Bigg[ \mathbf{1}_{i \in S} \times \Bigg \{  2 D m_1^2 \sum_{\substack{k \in S \\ k \neq i}} ( W_i^{\top} A_k^*) - D m_1 \epsilon_i + m_1 \sum_{j \in S} \epsilon_j (W_i^\top W_j) - m_1^2 \sum_{\substack{j,l \in S \\ l \neq i}} (W_i^\top W_j) (W_j^{\top} A_l^*) \Bigg \} \Bigg ] \\
&= 2D m_1^2 \sum_{\substack{k = 1 \\ k \neq i}}^h \langle W_i, A_k^* \rangle \sum_{\{S \in \mathbb{S} : i,k \in S, k \neq i \}}q_S -Dm_1 \epsilon_i \sum_{\{S \in \mathbb{S} : i \in S \}}q_S + m_1 \epsilon_i ||W_i||^2 \sum_{\{S \in \mathbb{S} : i \in S \}}q_S \\
&+m_1\sum_{j =1, j\neq i}^h \epsilon_j \langle W_i, W_j \rangle \sum_{\{S \in \mathbb{S} : i,j \in S, j \neq i \}}q_S - m_1^2 \sum_{\substack{l =1 \\ l \neq i}}^h ||W_i||^2 \langle W_i, A_l^* \rangle \sum_{\{S \in \mathbb{S} : i,l \in S, l \neq i \}}q_S \\
&-m_1^2 \sum_{\substack{l =1 \\ l \neq i}}^h \langle W_i, W_l \rangle \langle W_l, A_l^*\rangle \sum_{\{S \in \mathbb{S} : i,l \in S, l \neq i \}}q_S - m_1^2 \sum_{\substack{j,l =1 \\ l \neq i \\ j \neq l,i}}^h \langle W_i, W_j\rangle \langle W_j, A_l^* \rangle \sum_{\{S \in \mathbb{S} : i,j,l \in S, l \neq i \neq i \}}q_S \\
&= 2D m_1^2 \sum_{\substack{k = 1 \\ k \neq i}}^h q_{ik} \langle W_i, A_k^* \rangle -Dm_1 \epsilon_i q_i + m_1 \epsilon_i ||W_i||^2 q_i +m_1\sum_{j =1, j\neq i}^h \epsilon_j q_{ij} \langle W_i, W_j \rangle\\
&- m_1^2 \sum_{\substack{l =1 \\ l \neq i}}^h ||W_i||^2 \langle W_i, A_l^* \rangle q_{il} -m_1^2 \sum_{\substack{l =1 \\ l \neq i}}^h \langle W_i, W_l \rangle \langle W_l, A_l^*\rangle q_{il} - m_1^2 \sum_{\substack{j,l =1 \\ l \neq i \\ j \neq l,i}}^h \langle W_i, W_j\rangle \langle W_j, A_l^* \rangle q_{ijl}
\end{align*}
~\\
We plugin $\epsilon_i = 2m_1 h^p \left( \delta + \frac{\mu}{\sqrt{n}} \right)$ for $i = 1, \ldots, h$

\begin{align*}
|\hat{\beta_i}| &\leq 4Dm_1^2 h^{2p-1} \left(\delta + \frac{\mu}{\sqrt{n}} \right) + 2m_1^2 h^{2p-1} \left( \delta + \frac{\mu}{\sqrt{n}} \right) (1+\delta)^2 +  2m_1^2 h^{3p-1} \left( \delta + \frac{\mu}{\sqrt{n}} \right) \left( \delta^2 + 2\delta + \frac{\mu}{\sqrt{n}}\right)\\
&+ m_1^2 h^{2p-1}(1+\delta)^2\left( \delta + \frac{\mu}{\sqrt{n}} \right) + m_1^2 h^{2p-1} \left( \delta^2 + 2\delta + \frac{\mu}{\sqrt{n}}\right) (1+\delta) \\
&+ m_1^2 h^{3p-1} \left( \delta^2 + 2\delta + \frac{\mu}{\sqrt{n}}\right) \left(\delta + \frac{\mu}{\sqrt{n}}\right)\\
&= 4Dm_1^2 h^{2p-1} (h^{-p-\nu^2} + h^{-\xi} )\\
&+ 2m_1^2 h^{2p-1} (h^{-p -\nu^2} + 2h^{-2p -2\nu^2} + h^{-3p -3\nu^2} + h^{-\xi} + 2h^{-p -\nu^2-\xi} + h^{-2p -2\nu^2 - \xi})\\
&+  2m_1^2 h^{3p-1} (2h^{-2p -2\nu^2} + h^{-3p -3\nu^2} + 3h^{-p -\nu^2-\xi} + h^{-2p -2\nu^2 - \xi} + h^{-2\xi})\\
&+ m_1^2 h^{2p-1} (h^{-p -\nu^2} + 2h^{-2p -2\nu^2} + h^{-3p -3\nu^2} + h^{-\xi} + 2h^{-p -\nu^2-\xi} + h^{-2p -2\nu^2 - \xi})\\
&+ m_1^2 h^{2p-1}  (3h^{-2p -2\nu^2} + h^{-3p -3\nu^2} + h^{-p -\nu^2-\xi} + 2h^{-p -\nu^2} + h^{-\xi}) \\
&+ m_1^2 h^{3p-1}  (2h^{-2p -2\nu^2} + h^{-3p -3\nu^2} + 3h^{-p -\nu^2-\xi} + h^{-2p -2\nu^2 - \xi} + h^{-2\xi})
\end{align*}
~\\
This means that if $p < \xi$, $|\hat{\beta_i}| = o( m_1^2 h^{p-1} )$. Putting  this together with the bounds obtained below \ref{beta_m2}, we get that $\beta_i = \Theta(m_2h^{p-1}) + o( m_1^2 h^{p-1} )$.

\paragraph{Bounding $\vert \vert \hat{e_i} \vert \vert_2$}

\begin{align*}
\hat{e_i} &= \underbrace{\mathbb{E}_{S \in \mathbb{S}}\left [ \mathbf{1}_{i \in S} \times \left \{ \sum_{\substack{j \in S \\ j \neq i}} \epsilon_i \epsilon_j W_j
-m_1 \sum_{\substack{j, k \in S \\ j \neq i}}  ( W_j^\top A^*_k)  W_j \epsilon_i
-m_1 \sum_{\substack{j, k \in S \\ j \neq i}} \epsilon_j (W_i^\top A^*_k) W_j \right \}  \right ]}_{\hat{e_{i1}}} \\
&+ \underbrace{\mathbb{E}_{S \in \mathbb{S}}\left [ \mathbf{1}_{i \in S} \times \left \{
m_1^2 \sum_{\substack{j, k, l \in S \\ j \neq i \\ k \neq l}}  ( W_i ^\top A^*_k ) ( W_j ^\top A^*_l)  W_j\right \} \right ]}_{\hat{e_{i2}}}\\
&+ \underbrace{\mathbb{E}_{S \in \mathbb{S}}\left [ \mathbf{1}_{i \in S} \times \left \{ -2D m_1^2 \sum_{\substack{j, k \in S \\ j \neq i \\ k \neq i}} ( W_i^{\top} A_k^*) A_j^*
+ D m_1 \sum_{\substack{j \in S \\ j \neq i}} \epsilon_i A_j^*\right \} \right ]}_{\hat{e_{i3}}}\\
&+\underbrace{\mathbb{E}_{S \in \mathbb{S}}\left [ \mathbf{1}_{i \in S} \times \left \{ - m_1 \sum_{\substack{j,k \in S \\ k \neq i}} \epsilon_j  (W_i^\top W_j) A^*_k
+ m_1^2 \sum_{\substack{j,k,l \in S \\ k \neq i,l}} (W_i^\top W_j) (W_j^{\top} A_l^*)A^*_k\right \} \right ]}_{\hat{e_{i4}}}
\end{align*}

~\\
We estimate the different summands separately. 

\begin{align*}
\hat{e_{i1}} &= \mathbb{E}_{S \in \mathbb{S}}\left [ \mathbf{1}_{i \in S} \times \left \{ \sum_{\substack{j \in S \\ j \neq i}} \epsilon_i \epsilon_j W_j \right \}  \right ]\\
&+\mathbb{E}_{S \in \mathbb{S}}\left [ \mathbf{1}_{i \in S} \times (-m_1)\left \{ \sum_{\substack{j(=k) \in S \setminus  i}}  ( W_j^\top A^*_j)  W_j \epsilon_i + \sum_{\substack{j \in S \setminus i \\ k \in S \setminus i,j}}  ( W_j^\top A^*_k)  W_j \epsilon_i  + \sum_{\substack{j \in S \setminus i \\ k =i}}  ( W_j^\top A^*_i)  W_j \epsilon_i \right \}  \right ]\\
&+\mathbb{E}_{S \in \mathbb{S}}\left [ \mathbf{1}_{i \in S} \times (-m_1)\left \{ \sum_{\substack{j(=k) \in S \setminus i}} \epsilon_j (W_i^\top A^*_j) W_j + \sum_{\substack{ j \in S \setminus i \\ k \in S \setminus i,j}} \epsilon_j (W_i^\top A^*_k) W_j + \sum_{\substack{j \in S \setminus i \\ k=i }} \epsilon_j (W_i^\top A^*_i) W_j \right \} \right ]
\end{align*}

~\\
We substitute, $\epsilon = 2m_1h^p(h^{-p-\nu^2}+h^{-\xi})$ and for any two vectors $\x$ and $\y$ and any two scalars $a$ and $b$ we use the inequality, $\vert \vert a \x + b \y \vert \vert_2 \leq \vert a\vert_{max}\vert \vert \x \vert \vert _{2,max} + \vert b \vert _{max} \vert \vert \y \vert \vert _{2,max} $to get, 

~\\
\begin{align*}
\vert \vert \hat{e_{i1}} \vert \vert_2 &\leq 4m_1^2 h^{2p} \left( \delta+\frac{\mu}{\sqrt{n}} \right)^2 \sum_{j=1, j \neq i}^h q_{ij} ||W_j|| \\
&+ 2m_1^2 h^p \left( \delta+\frac{\mu}{\sqrt{n}} \right) \Bigg ( \sum_{j=1, j \neq i}^h q_{ij} \langle W_j, A^*_j\rangle W_j + \sum_{j,k=1, j \neq i, k \neq i,j}^h q_{ijk} \langle W_j, A^*_k\rangle W_j \\
&+ \sum_{j=1, j \neq i}^h q_{ij} \langle W_j, A^*_i\rangle W_j \Bigg ) \\
&+ 2m_1^2 h^p \left( \delta+\frac{\mu}{\sqrt{n}} \right) \Bigg ( \sum_{j=1, j \neq i}^h q_{ij} \langle W_i, A^*_j\rangle W_j + \sum_{j,k=1, j \neq i, k \neq i,j}^h q_{ijk} \langle W_i, A^*_k\rangle W_j \\
&+ \sum_{j=1, j \neq i}^h q_{ij} \langle W_i, A^*_i\rangle W_j \Bigg )\\
\implies \vert \vert \hat{e_{i1}} \vert \vert_2 &\leq 4m_1^2 h^{2p} h^{2p-1}(1+\delta)\left( \delta+\frac{\mu}{\sqrt{n}} \right)^2 \\
&+ 2m_1^2 h^p \left( \delta+\frac{\mu}{\sqrt{n}} \right) \left( h^{2p-1}(1+\delta)^2 + h^{3p-1}\left(\delta +\frac{\mu}{\sqrt{n}} \right) (1+\delta)  + h^{2p-1}\left(\delta +\frac{\mu}{\sqrt{n}} \right) (1+\delta) \right) \\
&+ 2m_1^2 h^p \left( \delta+\frac{\mu}{\sqrt{n}} \right) \left( h^{2p-1}\left(\delta +\frac{\mu}{\sqrt{n}} \right) (1+\delta) + h^{3p-1}\left(\delta +\frac{\mu}{\sqrt{n}} \right) (1+\delta)  + h^{2p-1}(1+\delta)^2 \right)\\
\implies \vert \vert \hat{e_{i1}} \vert \vert_2 &\leq 4m_1^2 h^{4p-1}(1+\delta) \left( \delta+\frac{\mu}{\sqrt{n}} \right)^2 \\
&+ 2m_1^2 h^{3p-1} \left( \delta+\frac{\mu}{\sqrt{n}} \right) (1+\delta)^2 + 2m_1^2 h^{4p-1}\left(\delta +\frac{\mu}{\sqrt{n}} \right)^2 (1+\delta)\\
&+ 2m_1^2 h^{3p-1}\left(\delta +\frac{\mu}{\sqrt{n}} \right)^2 (1+\delta) \\
&+ 2m_1^2 h^{3p-1}\left(\delta +\frac{\mu}{\sqrt{n}} \right)^2 (1+\delta) + 2m_1^2 h^{4p-1}\left(\delta +\frac{\mu}{\sqrt{n}} \right)^2 (1+\delta)\\
&+ 2m_1^2 h^{3p-1}\left( \delta+\frac{\mu}{\sqrt{n}} \right)(1+\delta)^2\\
\implies \vert \vert \hat{e_{i1}} \vert \vert_2 &\leq 8m_1^2 h^{4p-1}(1+\delta) \left( \delta+\frac{\mu}{\sqrt{n}} \right)^2 + 4m_1^2 h^{3p-1} \left( \delta+\frac{\mu}{\sqrt{n}} \right) (1+\delta)^2\\
&+ 4m_1^2 h^{3p-1}\left(\delta +\frac{\mu}{\sqrt{n}} \right)^2 (1+\delta)\\
\end{align*}

\begin{align*} 
\implies \vert \vert \hat{e_{i1}} \vert \vert_2 &\leq 8m_1^2 h^{4p-1} (h^{-2p-2\nu^2} + h^{-3p-3\nu^2} + 2h^{-p-\nu^2 -\xi} + 2h^{-2p-2\nu^2 -\xi} + h^{-p-\nu^2 -2\xi} + h^{-2\xi}) \\
&+ 4m_1^2 h^{3p-1} (h^{-p-\nu^2} + h^{-3p-3\nu^2} + 2h^{-2p-2\nu^2} + h^{-\xi} + h^{-2p-2\nu^2 -\xi} + 2h^{-p-\nu^2 -\xi}) \\
&+ 4m_1^2 h^{3p-1}(h^{-2p-2\nu^2} + h^{-3p-3\nu^2} + 2h^{-p-\nu^2 -\xi} + 2h^{-2p-2\nu^2 -\xi} + h^{-p-\nu^2 -2\xi} + h^{-2\xi}) \\
&= 8m_1^2 h^{p-1} (h^{p-2\nu^2} + h^{-3\nu^2} + 2h^{p-\nu^2 + p-\xi} + 2h^{-2\nu^2 + p -\xi} + h^{-\nu^2 + 2p -2\xi} + h^{3p-2\xi}) \\
&+ 4m_1^2 h^{p-1} (h^{p-\nu^2} + h^{-p-3\nu^2} + 2h^{-2\nu^2} + h^{2p-\xi} + h^{-2\nu^2 -\xi} + 2h^{-\nu^2 +p-\xi}) \\
&+ 4m_1^2 h^{p-1}(h^{-2\nu^2} + h^{-p-3\nu^2} + 2h^{-\nu^2 + p -\xi} + 2h^{-2\nu^2 -\xi} + h^{-\nu^2 +p-2\xi} + h^{2p-2\xi})
\end{align*}
~\\
From the above it follows that, $\vert \vert \hat{e_{i1}} \vert \vert_2 = o(m_1^2h^{p-1})$ for $p < \nu^2$ and $2p < \xi$\\
And now we start to estimate $\hat{e_{i2}}$

\begin{align*}
\hat{e_{i2}} &= \mathbb{E}_{S \in \mathbb{S}}\left [ \mathbf{1}_{i \in S} \times m_1^2 \left \{ \sum_{\substack{j, k, l \in S \\ j \neq i \\ k \neq l}}  ( W_i ^\top A^*_k ) ( W_j ^\top A^*_l)  W_j\right \} \right ]\\
&= \mathbb{E}_{S \in \mathbb{S}}\Bigg [ \mathbf{1}_{i \in S} \times m_1^2 \Bigg \{ \sum_{\substack{j \in S \\ j \neq i}}  ( W_i ^\top A^*_j ) ( W_j ^\top A^*_i)  W_j + \sum_{\substack{j,k \in S \\ k \neq j \neq i}}  ( W_i ^\top A^*_k ) ( W_j ^\top A^*_i)  W_j\\
&+ \sum_{\substack{j \in S \\ j \neq i}}  ( W_i ^\top A^*_i ) ( W_j ^\top A^*_j)  W_j \\
&+ \sum_{\substack{j,l \in S \\ l \neq j \neq i}}  ( W_i ^\top A^*_i ) ( W_j ^\top A^*_l)  W_j + \sum_{\substack{j,l \in S \\ l \neq j \neq i}}  ( W_i ^\top A^*_j ) ( W_j ^\top A^*_l)  W_j + \sum_{\substack{j,k \in S \\ k \neq j \neq i}}  ( W_i ^\top A^*_k ) ( W_j ^\top A^*_j)  W_j \\
&+ \sum_{\substack{j,k,l \in S \\ l \neq k \neq j \neq i}}  ( W_i ^\top A^*_k ) ( W_j ^\top A^*_l)  W_j \Bigg \} \Bigg ]\\
\implies \hat{e_{i2}} &= m_1^2 \Bigg \{ \sum_{\substack{j =1 \\ j \neq i}}^h q_{ij} ( W_i ^\top A^*_j ) ( W_j ^\top A^*_i)  W_j + \sum_{\substack{j,k =1 \\ k \neq j \neq i}}^h q_{ijk} ( W_i ^\top A^*_k ) ( W_j ^\top A^*_i)  W_j\\
&+ \underbrace{\sum_{\substack{j =1 \\ j \neq i}}^h q_{ij} ( W_i ^\top A^*_i ) ( W_j ^\top A^*_j)  W_j}_{\mathbf{a}} \\
&+ \sum_{\substack{j,l =1 \\ l \neq j \neq i}}^h q_{ijl} ( W_i ^\top A^*_i ) ( W_j ^\top A^*_l)  W_j + \sum_{\substack{j,l =1 \\ l \neq j \neq i}}^h q_{ijl} ( W_i ^\top A^*_j ) ( W_j ^\top A^*_l)  W_j + \sum_{\substack{j,k =1 \\ k \neq j \neq i}}^h q_{ijk} ( W_i ^\top A^*_k ) ( W_j ^\top A^*_j)  W_j \\
&+ \sum_{\substack{j,k,l \in S \\ l \neq k \neq j \neq i}} q_{ijkl} ( W_i ^\top A^*_k ) ( W_j ^\top A^*_l)  W_j \Bigg \}\\
\implies ||\hat{e_{i2}}|| &\leq m_1^2 \Bigg \{ h^{2p-1} \left(\delta + \frac{\mu}{\sqrt{n}} \right)^2(1+\delta) + h^{3p-1} \left(\delta + \frac{\mu}{\sqrt{n}} \right)^2 (1+\delta) + ||\mathbf{a}|| \\
&+ h^{3p-1} \left(\delta + \frac{\mu}{\sqrt{n}} \right)(1+\delta)^2 + h^{3p-1} \left(\delta + \frac{\mu}{\sqrt{n}} \right)^2 (1+\delta) + h^{3p-1} \left(\delta + \frac{\mu}{\sqrt{n}} \right) (1+\delta)^2 \\
&+ h^{4p-1} \left(\delta + \frac{\mu}{\sqrt{n}} \right)^2 (1+\delta) \Bigg \}\\
\end{align*}

\begin{align*} 
\implies ||\hat{e_{i2}}|| &\leq m_1^2 \Bigg \{ h^{2p-1} (h^{-2p-2\nu^2} + h^{-3p-3\nu^2} + 2h^{-p-\nu^2 -\xi} + 2h^{-2p-2\nu^2 -\xi} + h^{-p-\nu^2 -2\xi} + h^{-2\xi}) \\
&+ h^{3p-1} (h^{-2p-2\nu^2} + h^{-3p-3\nu^2} + 2h^{-p-\nu^2 -\xi} + 2h^{-2p-2\nu^2 -\xi} + h^{-p-\nu^2 -2\xi} + h^{-2\xi}) \\
&+ ||\mathbf{a}||\\
&+ h^{3p-1} (h^{-p-\nu^2} + h^{-3p-3\nu^2} + 2h^{-2p-2\nu^2} + h^{-2p-2\nu^2 -\xi} + 2h^{-p-\nu^2 -\xi} + h^{-\xi})  \\
&+ h^{3p-1} (h^{-2p-2\nu^2} + h^{-3p-3\nu^2} + 2h^{-p-\nu^2 -\xi} + 2h^{-2p-2\nu^2 -\xi} + h^{-p-\nu^2 -2\xi} + h^{-2\xi}) \\
&+ h^{3p-1} (h^{-p-\nu^2} + h^{-3p-3\nu^2} + 2h^{-2p-2\nu^2} + h^{-2p-2\nu^2 -\xi} + 2h^{-p-\nu^2 -\xi} + h^{-\xi}) \\
&+ h^{4p-1} (h^{-2p-2\nu^2} + h^{-3p-3\nu^2} + 2h^{-p-\nu^2 -\xi} + 2h^{-2p-2\nu^2 -\xi} + h^{-p-\nu^2 -2\xi} + h^{-2\xi})  \Bigg \}\\
\implies ||\hat{e_{i2}}|| &\leq m_1^2 \Bigg \{ h^{p-1} (h^{-p-2\nu^2} + h^{-2p-3\nu^2} + 2h^{-\nu^2 -\xi} + 2h^{-p-2\nu^2 -\xi} + h^{-\nu^2 -2\xi} + h^{p-2\xi}) \\
&+ h^{p-1} (h^{-2\nu^2} + h^{-p-3\nu^2} + 2h^{-\nu^2 + p -\xi} + 2h^{-2\nu^2 -\xi} + h^{-\nu^2 + p -2\xi} + h^{2p-2\xi}) \\
&+ ||\mathbf{a}||\\
&+ h^{p-1} (h^{p-\nu^2} + h^{-p-3\nu^2} + 2h^{-2\nu^2} + h^{-2\nu^2 -\xi} + 2h^{-\nu^2 + p -\xi} + h^{2p-\xi})  \\
&+ h^{p-1} (h^{-2\nu^2} + h^{-p-3\nu^2} + 2h^{-\nu^2 + p -\xi} + 2h^{-2\nu^2 -\xi} + h^{-\nu^2 + p -2\xi} + h^{2p-2\xi}) \\
&+ h^{p-1} (h^{p-\nu^2} + h^{-2p-3\nu^2} + 2h^{-2\nu^2} + h^{-2\nu^2 -\xi} + 2h^{-\nu^2 + p -\xi} + h^{2p-\xi}) \\
&+ h^{p-1} (h^{p-2\nu^2} + h^{-3\nu^2} + 2h^{p-\nu^2 + p -\xi} + 2h^{-2\nu^2 + p -\xi} + h^{-\nu^2 + 2p -2\xi} + h^{3p-2\xi})  \Bigg \}
\end{align*}

Now let us find a bound for $||\mathbf{a}||$.
\begin{align*}
\mathbf{a} &= \sum_{\substack{j =1 \\ j \neq i}}^h q_{ij} ( W_i ^\top A^*_i ) ( W_j ^\top A^*_j)  W_j \\
&= \langle W_i, A_i^* \rangle q_{ij} W_{-j}^\top \textrm{diag} (W_{-j} A^*_{-j}) 
\end{align*}
Where $A^*_{-j}$ is the dictionary $A^*$ with the $j$th column set to zero, $W_{-j}$ is the dictionary $W$ with the $j$th row set to zero, and $\textrm{diag} (W_{-j} A^*_{-j})$ is the $h$-dimensional vector containing the diagonal elements of the matrix $W_{-j} A^*_{-j}$. We also make use of the distributional assumption that $q_{ij}$ is the same for all $i,j$ in order to pull $q_{ij}$ out of the sum.
\begin{align*}
||\mathbf{a}||_2 &= h^{2p-2} \langle W_i, A_i^* \rangle \vert \vert W_{-j}^\top \textrm{diag} (W_{-j} A^*_{-j}) \vert \vert_2 \\
&\leq h^{2p-2}(1+\delta) ||W_{-j}^\top||_2 ||\textrm{diag} (W_{-j} A^*_{-j})||_2 \\
&\leq h^{2p-2}(1+\delta)^2 h^{1/2} \sqrt{\lambda_{\textrm{max}} (W^{\top}_{-j} W_{-j})} \\
&\leq h^{2p-2}(1+\delta)^2 h^{1/2} \sqrt{h \left( \delta ^2 + 2\delta + \frac{\mu}{\sqrt{n}} \right) + (1+\delta)^2 } \\
&= h^{p-1} \sqrt{h^{2p-2} \times h \times (1+\delta)^4 \times \left( h \left( \delta ^2 + 2\delta + \frac{\mu}{\sqrt{n}} \right) + (1+\delta)^2 \right)} \\
&= h^{p-1} \sqrt{h^{2p-1} \times (1+h^{-p-\nu^2})^4 \times \left( h (h^{-2p-2\nu^2} + 2h^{-p-\nu^2} + h^{-\xi} ) + (1+h^{-p-\nu^2})^2 \right)} \\
&= h^{p-1} \sqrt{(1+h^{-p-\nu^2})^4 \times ( h^{-2\nu^2} + 2h^{p-\nu^2} + h^{2p-\xi} + h^{2p-1}(1+h^{-p-\nu^2})^2)}
\end{align*}
Here $||W_{-j}^\top||_2$ is the spectral norm of $W_{-j}^\top$, and is the top singular value of the matrix. We use Gershgorin's Circle theorem to bound the top eigenvalue of $W^{\top}_{-j}W_{-j}$ by its maximum row sum.

~\\
If $p < \frac{\xi}{2}$, $p < \frac{1}{2}$, and $p < \nu^2$, then $||\hat{e_{i2}}|| = o(m_1^2 h^{p-1})$\\
And now we start to estimate $\hat{e_{i3}}$ as follows. 

\begin{align*}
\hat{e_{i3}} &= \mathbb{E}_{S \in \mathbb{S}}\left [ \mathbf{1}_{i \in S} \times \left \{ D m_1 \sum_{\substack{j \in S \\ j \neq i}} \epsilon_i A_j^* - 2D m_1^2 \sum_{\substack{j, k \in S \\ j \neq i \\ k \neq i}} ( W_i^{\top} A_k^*) A_j^* \right \} \right ] \\
&= \mathbb{E}_{S \in \mathbb{S}}\left [ \mathbf{1}_{i \in S} \times \left \{ D m_1 \sum_{\substack{j \in S \\ j \neq i}} \epsilon_i A_j^* - 2D m_1^2 \sum_{\substack{j \in S \\ j \neq i}} ( W_i^{\top} A_j^*) A_j^* -2D m_1^2 \sum_{\substack{j, k \in S \\ k \neq j \neq i}} ( W_i^{\top} A_k^*) A_j^* \right \} \right ] \\
&= D m_1 \sum_{\substack{j =1 \\ j \neq i}}^h \epsilon_i A_j^* \sum_{\{S \in \mathbb{S}: i,j \in S, i\neq j \} } q_S - 2D m_1^2 \sum_{\substack{j =1 \\ j \neq i}}^h ( W_i^{\top} A_j^*) A_j^* \sum_{\{S \in \mathbb{S}: i,j \in S, i\neq j \} } q_S \\
&- 2D m_1^2 \sum_{\substack{j,k =1 \\ k \neq j \neq i}}^h ( W_i^{\top} A_k^*) A_j^* \sum_{\{S \in \mathbb{S}: i,j,k \in S, i\neq j \neq k \} } q_S \\
&= D m_1 \sum_{\substack{j =1 \\ j \neq i}}^h q_{ij} \epsilon_i A_j^* - 2D m_1^2 \sum_{\substack{j =1 \\ j \neq i}}^h q_{ij} ( W_i^{\top} A_j^*) A_j^* - 2D m_1^2 \sum_{\substack{j,k =1 \\ k \neq j \neq i}}^h q_{ijk} ( W_i^{\top} A_k^*) A_j^*
\end{align*}

~\\
We plugin $\epsilon_i = 2m_1 h^p \left( \delta + \frac{\mu}{\sqrt{n}} \right)$ for $i = 1, \ldots, h$

\begin{align*}
||\hat{e_{i3}}|| &\leq 2Dm_1^2 h^{3p-1} \left( \delta + \frac{\mu}{\sqrt{n}} \right) + 2Dm_1^2 h^{2p-1} \left( \delta + \frac{\mu}{\sqrt{n}} \right) + 2D m_1^2 h^{3p-1} \left( \delta + \frac{\mu}{\sqrt{n}} \right) \\
&= 4Dm_1^2 h^{3p-1} (h^{-p-\nu^2} + h^{-\xi}) + 2Dm_1^2 h^{2p-1} (h^{-p-\nu^2} + h^{-\xi}) \\
&= 4Dm_1^2 h^{p-1} (h^{p-\nu^2} + h^{2p-\xi}) + 2Dm_1^2 h^{p-1} (h^{-\nu^2} + h^{p-\xi})
\end{align*}

~\\
This means for $D=1$, $p < \nu^2$ and $p < \frac{\xi}{2}$, we have $||\hat{e_{i3}}|| = o(m_1^2 h^{p-1})$\\
And now we start to estimate $\hat{e_{i4}}$ as follows. 

\begin{align*}
\hat{e_{i4}} &= \mathbb{E}_{S \in \mathbb{S}}\left [ \mathbf{1}_{i \in S} \times \left \{ - m_1 \sum_{\substack{j,k \in S \\ k \neq i}} \epsilon_j  (W_i^\top W_j) A^*_k
+ m_1^2 \sum_{\substack{j,k,l \in S \\ k \neq i,l}} (W_i^\top W_j) (W_j^{\top} A_l^*)A^*_k\right \} \right ]\\
&= \mathbb{E}_{S \in \mathbb{S}}\left [ \mathbf{1}_{i \in S} \times (-m_1) \left \{  \sum_{k(=j) \in S \setminus i} \epsilon_k  (W_i^\top W_k) A^*_k + \sum_{\substack{j \in S \setminus i \\ k \in S \setminus i,j}} \epsilon_j  (W_i^\top W_j) A^*_k + \sum_{\substack{k \in S \setminus i \\ j = i}} \epsilon_j  (W_i^\top W_i) A^*_k \right \} \right ]\\
&+ \mathbb{E}_{S \in \mathbb{S}}\left [ \mathbf{1}_{i \in S} \times m_1^2 \left \{  \sum_{\substack{j,k,l \in S \\ k \neq i,l}} (W_i^\top W_j) (W_j^{\top} A_l^*)A^*_k\right \} \right ]\\
&=\mathbb{E}_{S \in \mathbb{S}}\left [ \mathbf{1}_{i \in S} \times (-m_1) \left \{  \sum_{k(=j) \in S \setminus i} \epsilon_k  (W_i^\top W_k) A^*_k + \sum_{\substack{j \in S \setminus i \\ k \in S \setminus i,j}} \epsilon_j  (W_i^\top W_j) A^*_k + \sum_{\substack{k \in S \setminus i \\ j = i}} \epsilon_j  (W_i^\top W_i) A^*_k \right \} \right ]\\
&+ \mathbb{E}_{S \in \mathbb{S}}\Bigg [ \mathbf{1}_{i \in S} \times m_1^2 \Bigg \{  \sum_{\substack{k \in S \\ k \neq i}} (W_i^\top W_i) (W_i^{\top} A_i^*)A^*_k  + \sum_{\substack{k \in S \\ k \neq i}} (W_i^\top W_k) (W_k^{\top} A_i^*)A^*_k + \sum_{\substack{j,k \in S \\ j \neq k \neq i}} (W_i^\top W_j) (W_j^{\top} A_i^*)A^*_k \\
&+ \sum_{\substack{k,l \in S \\ \ \neq k \neq i}} (W_i^\top W_i) (W_i^{\top} A_l^*)A^*_k + \sum_{\substack{k,l \in S \\ l \neq k \neq i}} (W_i^\top W_k) (W_k^{\top} A_l^*)A^*_k + \sum_{\substack{k,l \in S \\ l \neq k \neq i}} (W_i^\top W_l) (W_l^{\top} A_l^*)A^*_k \\
&+ \sum_{\substack{j,k,l \in S \\ j \neq k \neq l \neq i}} (W_i^\top W_j) (W_j^{\top} A_l^*)A^*_k
\Bigg \} \Bigg ]\\
\hat{e_{i4}} &= (-m_1) \left \{  \sum_{k=1, k \neq i}^h q_{ik} \epsilon_k  (W_i^\top W_k) A^*_k + \sum_{\substack{j,k =1 \\ j \neq k \neq i}}^h q_{ijk} \epsilon_j  (W_i^\top W_j) A^*_k + \sum_{\substack{k =1 \\ k \neq i}}^h q_{ik} \epsilon_i  (W_i^\top W_i) A^*_k \right \}\\
&+ m_1^2 \Bigg \{  \underbrace{\sum_{\substack{k =1 \\ k \neq i}}^h q_{ik} (W_i^\top W_i) (W_i^{\top} A_i^*)A^*_k}_{\mathbf{b}} + \sum_{\substack{k =1 \\ k \neq i}}^h q_{ik} (W_i^\top W_k) (W_k^{\top} A_i^*)A^*_k + \sum_{\substack{j,k =1 \\ j \neq k \neq i}}^h q_{ijk} (W_i^\top W_j) (W_j^{\top} A_i^*)A^*_k \\
&+ \sum_{\substack{k,l =1 \\ l \neq k \neq i}}^h q_{ikl} (W_i^\top W_i) (W_i^{\top} A_l^*)A^*_k + \sum_{\substack{k,l =1 \\ l \neq k \neq i}}^h q_{ikl} (W_i^\top W_k) (W_k^{\top} A_l^*)A^*_k + \sum_{\substack{k,l =1 \\ l \neq k \neq i}}^h q_{ikl} (W_i^\top W_l) (W_l^{\top} A_l^*)A^*_k \\
&+ \sum_{\substack{j,k,l =1 \\ j \neq k \neq l \neq i}}^h q_{ijkl} (W_i^\top W_j) (W_j^{\top} A_l^*)A^*_k
\Bigg \}
\end{align*}

~\\
We plugin $\epsilon_i = 2m_1 h^p \left( \delta + \frac{\mu}{\sqrt{n}} \right)$ for $i = 1, \ldots, h$ in the above to get,

\begin{align*}
||\hat{e_{i4}}|| &\leq 2m_1^2 h^{3p-1} \left( \delta + \frac{\mu}{\sqrt{n}} \right)^2 + 2m_1^2 h^{4p-1} \left( \delta + \frac{\mu}{\sqrt{n}} \right) \left( \delta^2 + 2\delta + \frac{\mu}{\sqrt{n}} \right)\\
&+ 2m_1^2 h^{3p-1} \left( \delta + \frac{\mu}{\sqrt{n}} \right) (1+\delta)^2 \\
&+m_1^2||\mathbf{b}|| + m_1^2 h^{2p-1} \left( \delta + \frac{\mu}{\sqrt{n}} \right) \left( \delta^2 + 2\delta + \frac{\mu}{\sqrt{n}} \right) + m_1^2 h^{3p-1} \left( \delta + \frac{\mu}{\sqrt{n}} \right) \left( \delta^2 + 2\delta + \frac{\mu}{\sqrt{n}} \right) \\
&+ m_1^2 h^{3p-1} (1+\delta)^2 \left( \delta + \frac{\mu}{\sqrt{n}} \right) + m_1^2 h^{3p-1} \left( \delta + \frac{\mu}{\sqrt{n}} \right) \left( \delta^2 + 2\delta + \frac{\mu}{\sqrt{n}} \right)\\
&+ m_1^2 h^{3p-1} (1+\delta) \left( \delta^2 + 2\delta + \frac{\mu}{\sqrt{n}} \right)\\
&+ m_1^2 h^{4p-1} \left( \delta + \frac{\mu}{\sqrt{n}} \right) \left( \delta^2 + 2\delta + \frac{\mu}{\sqrt{n}} \right)\\
\implies ||\hat{e_{i4}}|| &\leq 2m_1^2 h^{3p-1} \left( \delta + \frac{\mu}{\sqrt{n}} \right)^2 + 3m_1^2 h^{4p-1} \left( \delta + \frac{\mu}{\sqrt{n}} \right) \left( \delta^2 + 2\delta + \frac{\mu}{\sqrt{n}} \right)\\
&+ 3m_1^2 h^{3p-1} \left( \delta + \frac{\mu}{\sqrt{n}} \right) (1+\delta)^2 \\
&+m_1^2||\mathbf{b}|| + m_1^2 h^{2p-1} \left( \delta + \frac{\mu}{\sqrt{n}} \right) \left( \delta^2 + 2\delta + \frac{\mu}{\sqrt{n}} \right) +2 m_1^2 h^{3p-1} \left( \delta + \frac{\mu}{\sqrt{n}} \right) \left( \delta^2 + 2\delta + \frac{\mu}{\sqrt{n}} \right) \\
&  + m_1^2 h^{3p-1} (1+\delta) \left( \delta^2 + 2\delta + \frac{\mu}{\sqrt{n}} \right)\\
\implies ||\hat{e_{i4}}|| &\leq 2m_1^2 h^{3p-1} ( h^{-2p-2\nu^2} + 2h^{-p-\nu^2 - \xi} + h^{-2\xi}) \\
&+ 3m_1^2 h^{4p-1} (h^{-3p-3\nu^2} + 2h^{-2p-2\nu^2} + 3h^{-p-\nu^2 - \xi} + h^{-2p-2\nu^2 -\xi} + h^{-2\xi}) \\
&+ 3m_1^2 h^{3p-1} (h^{-3p-3\nu^2} + 2h^{-2p-2\nu^2} + 2h^{-p-\nu^2 - \xi} + h^{-2p-2\nu^2 -\xi} + h^{-\xi} + h^{-p-\nu^2}) \\
&+ m_1^2 ||\mathbf{b}|| \\
&+ m_1^2 h^{2p-1} (h^{-3p-3\nu^2} + 2h^{-2p-2\nu^2} + 3h^{-p-\nu^2 - \xi} + h^{-2p-2\nu^2 -\xi} + h^{-2\xi}) \\
&+2 m_1^2 h^{3p-1} (h^{-3p-3\nu^2} + 2h^{-2p-2\nu^2} + 3h^{-p-\nu^2 - \xi} + h^{-2p-2\nu^2 -\xi} + h^{-2\xi}) \\
&  + m_1^2 h^{3p-1} (h^{-3p-3\nu^2} + 3h^{-2p-2\nu^2} + h^{-p-\nu^2 - \xi} + h^{-\xi} + 2h^{-p-\nu^2})\\
\implies ||\hat{e_{i4}}|| &\leq 2m_1^2 h^{p-1} ( h^{-2\nu^2} + 2h^{-\nu^2 + p- \xi} + h^{2p-2\xi}) \\
&+ 3m_1^2 h^{p-1} (h^{-3\nu^2} + 2h^{-p-2\nu^2} + 3h^{p-\nu^2 + p- \xi} + h^{-2\nu^2 + p-\xi} + h^{3p-2\xi}) \\
&+ 3m_1^2 h^{p-1} (h^{-p-3\nu^2} + 2h^{-2\nu^2} + 2h^{-\nu^2 + p - \xi} + h^{-2\nu^2 -\xi} + h^{2p-\xi} + h^{p-\nu^2}) \\
&+ m_1^2 ||\mathbf{b}|| \\
&+ m_1^2 h^{p-1} (h^{-2p-3\nu^2} + 2h^{-p-2\nu^2} + 3h^{-\nu^2 - \xi} + h^{-p-2\nu^2 -\xi} + h^{p-2\xi}) \\
&+2 m_1^2 h^{p-1} (h^{-p-3\nu^2} + 2h^{-2\nu^2} + 3h^{-\nu^2 + p - \xi} + h^{-2\nu^2 -\xi} + h^{2p-2\xi}) \\
&  + m_1^2 h^{p-1} (h^{-p-3\nu^2} + 3h^{-2\nu^2} + h^{-\nu^2 + p - \xi} + h^{2p-\xi} + 2h^{p-\nu^2})
\end{align*}

~\\
Now let us find a bound for $||\mathbf{b}||$.
\begin{align*}
\mathbf{b} &= \sum_{\substack{k =1 \\ k \neq i}}^h q_{ik} (W_i^\top W_i) (W_i^{\top} A_i^*)A^*_k \\
&= \langle W_i, W_i \rangle \langle W_i, A_i^* \rangle q_{ik} A^*_{-i} \mathbf{1}_h 
\end{align*}
Where $A^*_{-i}$ is the dictionary $A^*$ with the $i$th column set to zero, and $\mathbf{1}_h \in \mathbb{R}^h$ is the $h$-dimensional vector of all ones. Here we make use of the distributional assumption that $q_{ik}$ is the same for all $i,k$ in order to pull $q_{ik}$ out of the sum.
\begin{align*}
||\mathbf{b}||_2 &= h^{2p-2} \langle W_i, W_i \rangle \langle W_i, A_i^* \rangle \vert \vert A^*_{-i} \mathbf{1}_h \vert \vert_2 \\
&\leq h^{2p-2}(1+\delta)^3 ||A^*_{-i}||_2 ||\mathbf{1}_h||_2 \\
&= h^{2p-2}(1+\delta)^3 h^{1/2} \sqrt{\lambda_{\textrm{max}} (A^{*\top}_{-i} A^{*}_{-i})} \\
&= h^{2p-2}(1+\delta)^3 h^{1/2} \sqrt{h\frac{\mu}{\sqrt{n}} + 1} \\
&= h^{p-1} \sqrt{h^{2p-2} \times h \times (1+\delta)^6 \times \left( h\frac{\mu}{\sqrt{n}} +1 \right)} \\
&= h^{p-1} \sqrt{h^{2p-1} \times (1+h^{-p-\nu^2})^6 \times \left( h^{1-\xi} +1 \right)} \\
&= h^{p-1} \sqrt{(1+h^{-p-\nu^2})^6 \times ( h^{2p-\xi} + h^{2p-1})}
\end{align*}
Here $||A^*_{-i}||_2$ is the spectral norm of $A^*_{-i}$, and is the top singular value of the matrix. We use Gershgorin's Circle theorem to bound the top eigenvalue of $A^{*\top}_{-i}A^*_{-i}$ by its maximum row sum.
~\\
If $p < \frac{\xi}{2}$, $p < \frac{1}{2}$, and $p < \nu^2$, then $||\hat{e_{i4}}|| = o(m_1^2 h^{p-1})$. Now we combine the above obtained bounds for $\Vert \hat{e_{it}}\Vert$ (for $t \in \{1,2,3,4\}$) with the bound obtained below equation \ref{ei_m2} to say that, $\Vert e_i \Vert = o(\max\{m_1^2,m_2\}h^{p-1})$

\subsection {About $\alpha_i - \beta_i$}

Remembering that $D=1$ and doing a close scrutiny of the terms in \ref{alpha_m2} and \ref{beta_m2} will indicate that the coefficients are the \emph {same} for the $m_2h^{p-1}$ term in each of them. (which is the term with the highest $h$ scaling in the $m_2$ dependent parts of $\alpha_i$ and $\beta_i$). So this largest term cancels off in the difference and we are left with the sub-leading order terms coming from both their $m_1^2$ as well as the $m_2$ parts and this gives us,

\[ \alpha_i - \beta_i = o(\max\{m_1^2,m_2\}h^{p-1}) \]

\end{subappendices}

\chapter{\vspace{10pt} Understanding Adaptive Gradient Algorithms}\label{chapadam} 

\section{Introduction}

Many optimization questions arising in machine learning can be cast as a finite sum optimization problem of the form: $\min_\x f(\x)$ where $f(\x) = \frac{1}{k} \sum_{i=1}^k f_i(\x)$. Most neural network problems also fall under a similar structure where each function $f_i$ is typically non-convex. A well-studied algorithm to solve such problems is Stochastic Gradient Descent (SGD), which uses updates of the form: 
$\x_{t+1} := \x_t - \alpha \nabla  f_{i_t} (\x_t)$, where $\alpha$ is a step size, and $f_{i_t}$ is a function chosen randomly from $\{f_1, f_2, \dots, f_k\}$ at time $t$.

Often in neural networks, ``momentum" is added to the SGD update to yield a two-step update process given as:
$\ve{v}_{t+1} = \mu \ve{v}_t - \alpha \nabla \tilde f_{i_t} (\x_t)$ followed by $\x_{t+1} = \x_t + \ve{v}_{t+1}$. This algorithm is typically called the Heavy-Ball (HB) method (or sometimes classical momentum), with $\mu > 0$ called the momentum parameter \citep{polyak1987introduction}.
In the context of neural nets, another variant of SGD that is popular is Nesterov's Accelerated Gradient (NAG), which can also be thought of as a momentum method \citep{sutskever2013importance}, and has updates of the form $\ve{v}_{t+1} = \mu \ve{v}_t - \alpha \nabla \tilde f_{i_t} (\x_t + \mu \ve{v}_t)$ followed by $\x_{t+1} = \x_t + \ve{v}_{t+1}$  (see Algorithm \ref{NAG_TF} for more details).

Momentum methods like HB and NAG have been shown to have superior convergence properties compared to gradient descent both for convex and non-convex functions \citep{nesterov1983method, polyak1987introduction}, \\ \citep{zavriev1993heavy, ochs2016local, o2017behavior, jin2017accelerated}. To the best of our knowledge, when using a stochastic gradient oracle there is no clear theoretical justification yet known of the benefits of NAG and HB over regular SGD in general \citep{yuan2016influence,kidambi2018on, wiegerinck1994stochastic, yang2016unified, gadat2018stochastic}, 
unless considering specialized function classes \citep{loizou2017momentum}. But in practice, these momentum methods, and in particular NAG, have been repeatedly shown to have good convergence and generalization on a range of neural net problems  \citep{sutskever2013importance,lucas2018aggregated, kidambi2018on}. 

The performance of NAG (as well as HB and SGD), however, are typically quite sensitive to the selection of its hyper-parameters: step size, momentum and batch size \citep{sutskever2013importance}. Thus,
``adaptive gradient" algorithms such as RMSProp (Algorithm \ref{RMSProp_TF}) \citep{tieleman2012lecture} and ADAM (Algorithm \ref{ADAM_TF}) \citep{kingma2014adam} have become very popular for optimizing deep neural networks 
\citep{melis2017state, denkowski2017stronger, gregor2015draw, radford2015unsupervised, bahar2017empirical}. 
The reason for their widespread popularity seems to be the fact that they are easier to tune than SGD, NAG or HB. Adaptive gradient methods use as their update direction a vector which is the image of a linear combination of all the gradients seen till now, under a linear transformation (often called the ``diagonal pre-conditioner") constructed out of the history of the gradients. It is generally believed that this ``pre-conditioning'' makes these algorithms much less sensitive to the selection of its hyper-parameters. A precursor to RMSProp and ADAM was the AdaGrad algorithm, \citep{duchi2011adaptive}.

Despite their widespread use in the deep-learning community, till our work, adaptive gradients methods like RMSProp and ADAM have lacked any theoretical justifications in the non-convex setting - even with exact/deterministic gradients \citep{bernstein2018signsgd}. On the contrary, intriguing recent works like \cite{wilson2017marginal} and \cite{keskar2017improving} have shown cases where SGD (no momentum) and HB (classical momentum) generalize much better than RMSProp and ADAM with stochastic gradients. In particular \cite{wilson2017marginal} showed that ADAM generalizes poorly for large enough nets and that RMSProp generalizes better than ADAM on a couple of neural network tasks (most notably in the character-level language modeling task). But in general it's not clear and no heuristics are known to the best of our knowledge to decide whether these insights about relative performances (generalization or training) between algorithms hold for other models or carry over to the full-batch setting.

Most notably in \cite{reddi2018convergence} the authors showed that in the setting of online convex optimization there are certain sequences of convex functions where ADAM and RMSprop fail to converge to asymptotically zero average regret. 


\subsection{A summary of our contributions}
In this work we shed light on the above described open questions about adaptive gradient methods in the following two ways.

\begin{itemize}
    \item To the best of our knowledge, this work gives the first convergence guarantees for RMSProp and ADAM under any setting. Specifically (a) in Section \ref{sec:theory-statement} we show stochastic gradient oracle conditions for which RMSProp can converge to approximate criticality for smooth non-convex objectives. Most interesting among these is the ``interpolating" oracle condition that we motivate and which we show helps stochastic RMSPRop converge at gradient descent speeds. (b) In Section \ref{sec:deterministic} we show run-time bounds s.t  for certain regimes of hyper-parameters and classes of smooth non-convex functions deterministic RMSProp and ADAM can reach approximate criticality.

    \item Our second contribution (in Section \ref{experiments}) is to undertake a detailed empirical investigation into adaptive gradient methods, targeted to probe the competitive advantages of RMSProp and ADAM. We compare the convergence and generalization properties of RMSProp and ADAM against NAG on (a) a variety of autoencoder experiments on MNIST data, in both full and mini-batch settings and (b) on image classification task on CIFAR-10 using a VGG-9 convolutional neural network in the mini-batch setting. 
    
    In the full-batch setting, we demonstrate that ADAM with very high values of the momentum parameter ($\beta_1 = 0.99$) matches or outperforms carefully tuned NAG and RMSProp, in terms of getting lower training and test losses. We show that as the autoencoder size keeps increasing, RMSProp fails to generalize pretty soon. In the mini-batch experiments we see exactly the same behaviour for large enough nets. 
    
     We also demonstrate the enhancement in ADAM's ability to get lower population risk values and gradient norms when the {\bf $\xi$} parameter is increased. Thus we conclude that this is a crucial hyperparameter that was incidentally not tuned in studies like \cite{wilson2017marginal}
\end{itemize}

\remark 
The counterexample to ADAM's convergence constructed in Theorem $3$ in \cite{reddi2018convergence} is in the stochastic optimization framework and is incomparable to our result about deterministic ADAM. Thus our result establishes a key conceptual point that for adaptive gradient algorithms one cannot transfer intuitions about convergence from online setups to their more common use case in offline setups.

On the experimental side we note that recently it has been shown by \cite{lucas2018aggregated}, that there are problems where NAG generalizes better than ADAM even after tuning $\beta_1$ (see Algorithm \ref{ADAM_TF}). In contrast our experiments reveal controlled setups where tuning ADAM's $\beta_1$ closer to $1$ than usual practice helps close the generalization gap with NAG and HB which exists at standard values of $\beta_1$. 

\newpage 
\subsection{Comparison with concurrent proofs in literature} Much after this work was completed we came to know of  \cite{li2018convergence} and  \cite{ward2019adagrad} which analyzed similar questions as us though none of them address RMSProp or ADAM. The latter of these two shows convergence on smooth non-convex objectives of a form of AdaGrad where adaptivity is limited to only rescaling the currently sampled stochastic gradient. In a similar setup the former reference analyzes convergence rates of a modification of AdaGrad where the currently sampled stochastic gradient does not affect the pre-conditioner. We emphasize that this is a conceptually significant departure from the framework of famously successful adaptive gradient algorithms and experimentally this modification can be shown to hurt the performance.  After the initial version of our work \cite{de2018convergence}  was made public, a flurry of activity happened in this field towards trying to prove better convergence results for ADAM and RMSProp like algorithms, \citep{chen2018convergence,zhou2018convergence,zou2018sufficient,zaheer2018adaptive} and \citep{chen2018closing}. Most recently in \cite{staib2019escaping} a massive modification of RMSProp has been shown to have the ability to converge to approximate second order critical points. 

For the convergence proofs to work the above papers have introduced one or more of the following modifications : {\bf (1)} while attempting to prove convergence of stochastic RMSProp and/or ADAM they have either forced the stochastic oracle to be a bounded random variable or they have introduced time-decay in the adaptivity parameters, $\beta_1$ (that controls the momentum adaptivity) and the $\beta_2$ (that controls the historical contribution of the squared gradients). {\bf (2)} they introduce many extra steps (like most notably in \cite{staib2019escaping} and \cite{chen2018closing}) than there are in the standard software implementations of ADAM or RMSProp which are successful in the real world.  

Unlike all the above results, in our following first-of-its-kind characterizations of different conditions for the convergence of RMSProp and ADAM, we do {\it not} modify the structure of the extremely successful implementations of RMSProp or ADAM (including keeping the adaptivity and the momentum parameters to constants) and in particular for stochastic RMSProp we demonstrate the first-of-its-kind  examples of stochastic oracles for which sub-linear rate of convergence to criticality is possible while also using constant step-sizes.  

\section{Pseudocodes}
\label{sec:pseudocodes}
Towards stating the pesudocodes used for NAG, RMSProp and ADAM in theory and experiments, we need the following definition of square-root of diagonal matrices, 

\begin{definition}{{\bf Square root of the Penrose inverse}}\label{penrose}
If $\v \in \R^d$ and $V = \textrm{diag}(\v)$ then we define, $V^{-\frac {1}{2}} := \sum_{i\in \textrm{Support}(\v)} \frac{1}{\sqrt{\v_i}} \e_i \e_i^T$, where $\{\e_i\}_{\{i=1,\ldots,d\}}$ is the standard basis of $\R^d$
\end{definition}

\begin{algorithm}[H]
\caption{\textbf{Nesterov's Accelerated Gradient (NAG)}}
\label{NAG_TF}
\begin{algorithmic}[1]
{\tt 
\State {\bf Input :} A step size $\alpha$, momentum $\mu \in [0,1)$, and an initial starting point $\x_1 \in \mathbb{R}^d$, and we are given query access to a (possibly noisy) Oracle for gradients of $f : \mathbb{R}^d \rightarrow \mathbb{R}$.
\Function{NAG}{$\x_1, \alpha, \mu$}
    \State {\bf Initialize :} $\v_1 = \0$ 
    \For{$t = 1, 2, \ldots$}
        \State When queried with $\x_t$, the Oracle replies with $\g_t$ s.t $\E[\g_t] = \nabla f(\x_t)$ 
        \State $\v_{t+1} = \mu \v_t + \nabla f(\x_t)$
        \State $\x_{t+1} = \x_t - \alpha (\g_t + \mu \v_{t+1})$
        \EndFor
\EndFunction
}
\end{algorithmic}
\end{algorithm}

\vspace{-1cm}
\begin{algorithm}[H]
\caption{{\bf RMSProp}}
\label{RMSProp_TF}
\begin{algorithmic}[1]
{\tt 
\State {\bf Input :} A constant vector $\mathbb{R}^d \ni \xi\mathbf{1}_d \geq 0$, parameter $\beta_2 \in [0,1)$, step size $\alpha$, initial starting point $\x_1 \in \mathbb{R}^d$, and we are given query access to a (possibly noisy) oracle for gradients of $f : \mathbb{R}^d \rightarrow \mathbb{R}$.
\Function{RMSProp}{$\x_1, \beta_2, \alpha, \xi$}
\State {\bf Initialize :} $\v_0 = \0$  
\For{$t = 1, 2, \ldots$}
   \State When queried with $\x_t$, the Oracle replies with $\g_t$ s.t $\E[\g_t] = \nabla f(\x_t)$
   \State $\v_t = \beta_2 \v_{t-1} + (1-\beta_2)(\g_t^2 + \xi\mathbf{1}_d )$
   \State $V_t = \text{diag}(\v_t)$
   \State $\x_{t+1} = \x_{t} - \alpha V_t^{-\frac 1 2} \g_t$
   \EndFor
\EndFunction
}
\end{algorithmic}
\end{algorithm}

\begin{algorithm}[H]
\caption{{\bf ADAM}}
\label{ADAM_TF}
\begin{algorithmic}[1]
{\tt 
\State {\bf Input :} A constant vector $\mathbb{R}^d \ni \xi\mathbf{1}_d > 0$, parameters $\beta_1, \beta_2 \in [0,1)$, a sequence of step sizes $\{ \alpha_t\}_{t=1,2..}$, initial starting point $\x_1 \in \mathbb{R}^d$, and we are given Oracle access to (possibly noisy) estimates of gradients of $f : \mathbb{R}^d \rightarrow \mathbb{R}$.
\Function{ADAM}{$\x_1, \beta_1, \beta_2, \alpha, \xi$}
\State {\bf Initialize :} $\m_0 = \0$, $\v_0 = \0$  
\For{$t = 1, 2, \ldots$}
   \State When queried with $\x_t$, the Oracle replies with $\g_t$ s.t $\E[\g_t] = \nabla f(\x_t)$
   \State $\m_t = \beta_1 \m_{t-1} + (1-\beta_1)\g_t$
   \State $\v_t = \beta_2 \v_{t-1} + (1-\beta_2)\g_t^2$
   \State $V_t = \text{diag}(\v_t)$
   \State $\x_{t+1} = \x_{t} - \alpha_t \Big( V_t^{\frac 1 2} + \text{diag} (\xi\mathbf{1}_d) \Big)^{-1} \m_t$
\EndFor
\EndFunction
}
\end{algorithmic}
\end{algorithm}

\section{Sufficient conditions for convergence to criticality for stochastic RMSProp}\label{sec:theory-statement}
Previously it has been shown in \cite{rangamani2017critical} that mini-batch RMSProp can off-the-shelf do autoencoding on depth $2$ autoencoders trained on MNIST data while similar results using non-adaptive gradient descent methods requires much tuning of the step-size schedule. Here we give the first results about convergence to criticality for stochastic RMSProp. Towards that we need the following definitions,

\begin{definition}{{\bf $L-$smoothness}}\label{smooth} 
If $f : \R^d \rightarrow \R$ is at least once differentiable then we call it $L-$smooth for some $L >0$ if for all $\x, \y \in \R^d$ the following inequality holds, $\textstyle f(\y) \leq f(\x) + \langle \nabla f(\x), \y - \x \rangle + \frac {L}{2} \norm{\y - \x}^2$
\end{definition}

\begin{definition}[{\bf $(\xi,c,f)-$Constrained Oracle}]\label{orcover} 
~\\
For some $\xi >0$ and $c>0$ and an atleast once differentiable objective function $f: \R^d \rightarrow \R$, a {\it $(\xi,c,f)-$Constrained Oracle} when queried at $\x_t \in \R^d$ replies with the vector $\g_t \in \R^d$ s.t it satisfies the following inequality, 
\[  \E \left [ \left ( \norm{\g_t} + \frac{\sqrt{d\xi}}{2} \right )^2  \right ]  \leq \left ( \sqrt{c} \norm{\nabla f (\x_t)} - \frac{\sqrt{d\xi}}{2} \right )^2 
 \] 
\end{definition}

\remark Seeing the stochastic algorithm as a stochastic process $\{\x_1,\ldots\}$, in the proof we will need the above inequality to hold only for the conditional expectation of $ \left ( \norm{\g_t} + \frac{\sqrt{d\xi}}{2} \right )^2$ w.r.t the the sigma algebra generated by $\{\x_1,\ldots,\x_t\}$. 

\paragraph*{Intuition for the above oracle condition} In a typical use-case of ADAM or RMSProp, $\g_t$ is an unbiased estimate of the gradient of the empirical loss $f$ given as, $f = \frac{1}{k} \sum_{i=1}^k f_i$ where $f_i$ is the $\R^d \rightarrow \R$ loss function evaluated on the $i^{th}-$data point. If one were training say neural nets then the {\it ``$d$" above would be the number of trainable parameters of the net} which is typically in tens of millions. When queried at parameter value $\x_t \in \R^d$, a standard instantiation of the oracle is that it returns, $\g_t = \nabla f_i(\x_t)$ after sampling $f_i$ uniformly at random from $\{ f_j \}_{j=1}^k$. Suppose $\x_c$ is a parameter value s.t it is critical to all the $\{ f_j \}_{j=1}^k$ then $\x_c$ is also a critical point of $f$. If the class of functions is large enough (like those corresponding to deep nets used in practice) that for some parameter values it can interpolate the training data and then for loss functions lowerbounded by $0$, such candidate $\x_c$s are these interpolating parameter values. 


By continuity of the gradient of $f_i$s, the above oracle when queried in a neighbourhood of the interpolating $\x'$s returns a vector of infinitesimal norm and in those neighbourhoods the true gradient is also infinitesimal. Thus if the algorithm is started in such a neighbourhood and if it never escapes such a neighbourhood then we can see that the oracle condition proposed in definition \ref{orcover} gives a way to abstractly capture this phenomenon.

\remark {\it In Section \ref{fast_rmsprop_new} we shall define a slightly different (and somewhat less intuitive) oracle condition and show how it can be explicitly instantiated and indicate that it also leads to the same theorem as given below.} 


Now we can demonstrate the power of this definition by proving the following theorem which leverages this condition gives the first proof of convergence of stochastic RMSProp. 

\begin{theorem}{\bf Fast Stochastic RMSProp with the  $(\xi,c,f)-$Constrained Oracle (Proof in Section \ref{proof1})}\label{fastRMSProp} 
Suppose $f : \R^d \rightarrow \R$ is $L-$smooth and $\exists$ $\sigma >0$ s.t $\norm{\nabla_i f(\x)} \leq \sigma$ for all $\x \in \R^d$ and $i \in \{1,\ldots,d\}$. Now suppose that we run the RMSProp algorithm as defined in Algorithm \ref{RMSProp_TF} (with query access to conditionally unbiased estimator of the gradient of $f$) and the oracle additionally satisfying the $(\xi,c,f)-$constraint condition given in Definition \ref{orcover} s.t $\sigma < \Big (\frac{\xi}{2c} \Big )^{\frac 2 3}$  and $\beta_2$ is chosen so that, $\frac{c\sigma^{1.5}}{\xi} < \sqrt{\beta_2(1-\beta_2)}$. \footnote{Since the constants $c, \sigma$ and $\xi$ are constrained s.t $\frac{c\sigma^{1.5}}{\xi} < \frac 1 2$, it follows that a choice of $\beta_2$ as required always exists.}  Then there exists a choice of constant step-size $\alpha$ for the algorithm such that for $T = O(\frac{1}{\epsilon^2})$ steps we have, 
\[ \E [ \min_{i=1,\ldots,T} \norm{\nabla f (\x_i)}^2  ] \leq O(\epsilon^2)\] 
\qed 
\end{theorem}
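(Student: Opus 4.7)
\textbf{Proof Proposal for Theorem \ref{fastRMSProp}.}

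The plan is to start from the $L$-smoothness descent inequality and then engineer a one-step upper bound on $\E[f(\x_{t+1})\mid\mathcal F_{t-1}]$ whose leading negative term is proportional to $\|\nabla f(\x_t)\|^2$. Writing $\Delta_t = -\alpha V_t^{-1/2}\g_t$, smoothness gives
\begin{equation*}
f(\x_{t+1}) - f(\x_t) \;\le\; -\alpha\bigl\langle \nabla f(\x_t),\, V_t^{-1/2}\g_t\bigr\rangle + \tfrac{L\alpha^{2}}{2}\bigl\|V_t^{-1/2}\g_t\bigr\|^{2}.
\end{equation*}
The obstruction to taking the conditional expectation directly is that $V_t$ itself depends on $\g_t$ (through the $\g_t^{2}$ term in the update of $\v_t$). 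The central device will therefore be the \emph{proxy preconditioner} $\tilde V_t := \mathrm{diag}\bigl(\beta_2\v_{t-1} + (1-\beta_2)\xi\mathbf 1_d\bigr)$, which is $\mathcal F_{t-1}$-measurable. Split
\begin{equation*}
\bigl\langle \nabla f(\x_t),V_t^{-1/2}\g_t\bigr\rangle = \bigl\langle \nabla f(\x_t),\tilde V_t^{-1/2}\g_t\bigr\rangle + \bigl\langle \nabla f(\x_t),(V_t^{-1/2}-\tilde V_t^{-1/2})\g_t\bigr\rangle.
\end{equation*}

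Next I would establish deterministic entrywise bounds on $v_{t,i}$ and $\tilde v_{t,i}$. Using $|\nabla_i f|\le\sigma$ and iterating the RMSProp recursion, one gets $(1-\beta_2)\xi\le \tilde v_{t,i}\le v_{t,i}\le \sigma^{2}+\xi$, so $\tilde V_t^{-1/2}$ has spectrum in $\bigl[(\sigma^{2}+\xi)^{-1/2},\bigl((1-\beta_2)\xi\bigr)^{-1/2}\bigr]$. Because $\tilde V_t$ is $\mathcal F_{t-1}$-measurable and $\E[\g_t\mid\mathcal F_{t-1}]=\nabla f(\x_t)$, the proxy term contributes an unambiguous descent of at least $\frac{\alpha}{\sqrt{\sigma^{2}+\xi}}\|\nabla f(\x_t)\|^{2}$. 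Meanwhile the quadratic remainder $\tfrac{L\alpha^{2}}{2}\|V_t^{-1/2}\g_t\|^{2}$ is bounded by $\tfrac{L\alpha^{2}}{2(1-\beta_2)\xi}\E[\|\g_t\|^{2}\mid\mathcal F_{t-1}]$, and expanding the square in Definition~\ref{orcover} shows $\E[\|\g_t\|^{2}\mid\mathcal F_{t-1}]\le c\|\nabla f(\x_t)\|^{2}$. Thus this term contributes $O(\alpha^{2})\|\nabla f(\x_t)\|^{2}$, which can be absorbed into the linear descent by choosing $\alpha$ small enough.

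The step I expect to be the main obstacle is controlling the perturbation term $\bigl\langle \nabla f(\x_t),(V_t^{-1/2}-\tilde V_t^{-1/2})\g_t\bigr\rangle$, because this is exactly the bias introduced by the non-separation of $\g_t$ and $V_t$. Here I would use the elementary inequality $\tilde v_{t,i}^{-1/2}-v_{t,i}^{-1/2} \le \tfrac{(1-\beta_2)g_{t,i}^{2}}{2\tilde v_{t,i}^{3/2}}\le \tfrac{(1-\beta_2)g_{t,i}^{2}}{2\bigl((1-\beta_2)\xi\bigr)^{3/2}}$, which after Cauchy--Schwarz produces a bound of the form
\begin{equation*}
\bigl|\bigl\langle \nabla f(\x_t),(V_t^{-1/2}-\tilde V_t^{-1/2})\g_t\bigr\rangle\bigr| \;\le\; \frac{\sigma}{2\sqrt{1-\beta_2}\,\xi^{3/2}}\sum_i |g_{t,i}|^{3} \;\le\; \frac{\sigma^{3/2}}{2\sqrt{1-\beta_2}\,\xi^{3/2}}\|\g_t\|^{2}.
\end{equation*}
Taking conditional expectation and invoking $\E[\|\g_t\|^{2}\mid\mathcal F_{t-1}]\le c\|\nabla f(\x_t)\|^{2}$ turns this into $\tfrac{c\sigma^{3/2}}{2\sqrt{1-\beta_2}\,\xi^{3/2}}\|\nabla f(\x_t)\|^{2}$. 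The hyperparameter assumption $\tfrac{c\sigma^{3/2}}{\xi}<\sqrt{\beta_2(1-\beta_2)}$ (combined with $\sigma<(\xi/(2c))^{2/3}$) is precisely what makes this perturbation a strict fraction of the proxy descent $\tfrac{\alpha}{\sqrt{\sigma^{2}+\xi}}\|\nabla f(\x_t)\|^{2}$, so that after combining all the pieces one obtains
\begin{equation*}
\E\bigl[f(\x_{t+1})-f(\x_t)\mid\mathcal F_{t-1}\bigr] \;\le\; -\kappa\,\alpha\,\|\nabla f(\x_t)\|^{2} + C\alpha^{2},
\end{equation*}
for some $\kappa,C>0$ depending only on $L,c,\sigma,\xi,\beta_2,d$.

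To close, take total expectations, telescope from $t=1$ to $T$, and use $f(\x_{T+1})\ge \inf f > -\infty$:
\begin{equation*}
\kappa\,\alpha\sum_{t=1}^{T}\E\bigl[\|\nabla f(\x_t)\|^{2}\bigr] \;\le\; f(\x_1)-\inf f + C\alpha^{2}T.
\end{equation*}
Setting $\alpha=\Theta(\epsilon^{2})$ and $T=\Theta(1/\epsilon^{2})$ produces $\tfrac{1}{T}\sum_{t=1}^{T}\E[\|\nabla f(\x_t)\|^{2}]\le O(\epsilon^{2})$, and the min-bound then follows from $\E[\min_{t\le T}\|\nabla f(\x_t)\|^{2}]\le \tfrac{1}{T}\sum_t \E[\|\nabla f(\x_t)\|^{2}]$. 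The delicate point throughout is that the $\g_t$-dependence of $V_t$ forces the proxy-error analysis; I expect the bookkeeping of constants in making the perturbation strictly subdominant to be the most technical portion.
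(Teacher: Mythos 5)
Your high-level strategy mirrors the paper's almost exactly: you introduce an $\mathcal{F}_{t-1}$-measurable proxy preconditioner (you use $\beta_2\ve{v}_{t-1}+(1-\beta_2)\xi\mathbf{1}_d$, the paper uses $\beta_2\ve{v}_{t-1}$; conceptually the same device), you split off a perturbation term $\langle\nabla f,(V_t^{-1/2}-\tilde V_t^{-1/2})\g_t\rangle$, you bound it using the deterministic lower bound $\tilde v_{t,i}\ge(1-\beta_2)\xi$, and you invoke the oracle condition to turn $\E\|\g_t\|^2$ into a multiple of $\|\nabla f(\x_t)\|^2$. The paper bounds the perturbation via the algebraic identity $\sqrt{a}-\sqrt{b}=\frac{a-b}{\sqrt{a}+\sqrt{b}}$ rather than your mean-value bound on $x\mapsto x^{-1/2}$, and it uses the full oracle inequality to control $\sum_i\sqrt{\g_{t,i}^4+\xi\g_{t,i}^2}$ directly rather than just $\E\|\g_t\|^2$, but these are cosmetic differences.

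There are, however, two concrete gaps in your write-up. The first is the final accounting. You correctly argue that the quadratic remainder contributes $O(\alpha^2)\|\nabla f(\x_t)\|^2$ via the oracle condition, but then your displayed one-step inequality reads
\begin{equation*}
\E\bigl[f(\x_{t+1})-f(\x_t)\mid\mathcal F_{t-1}\bigr] \le -\kappa\alpha\|\nabla f(\x_t)\|^2 + C\alpha^2,
\end{equation*}
with a constant noise floor $+C\alpha^2$. This contradicts your own preceding argument (there is no $\|\nabla f\|$-independent term left over once the oracle condition is applied), and, more importantly, with a constant floor your proposed scaling $\alpha=\Theta(\epsilon^2),\ T=\Theta(1/\epsilon^2)$ yields $\alpha T=\Theta(1)$, so $\frac{1}{T}\sum_t\E\|\nabla f(\x_t)\|^2\le\frac{f(\x_1)-\inf f}{\kappa\alpha T}+\frac{C\alpha}{\kappa}=O(1)+O(\epsilon^2)$, which is not $O(\epsilon^2)$. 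The whole point of the $(\xi,c,f)$-constrained oracle is that \emph{every} noise term absorbs into $\|\nabla f(\x_t)\|^2$, so the one-step inequality should read $\E[f(\x_{t+1})-f(\x_t)\mid\mathcal F_{t-1}]\le-\kappa'\|\nabla f(\x_t)\|^2$ for a step-size that is a \emph{fixed constant} (independent of $\epsilon$); telescoping then gives $\frac{1}{T}\sum_t\E\|\nabla f(\x_t)\|^2\le\frac{f(\x_1)-f_*}{\kappa' T}$ and hence $O(\epsilon^2)$ in $T=O(1/\epsilon^2)$ steps, which is what the paper does. The hyperparameter constraints on $\sigma,\xi,c,\beta_2$ are there to guarantee this fixed step-size is positive.

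The second gap is the chain $\sum_i|g_{t,i}|^3\le\sigma^{1/2}\|\g_t\|^2$. That inequality needs $|g_{t,i}|\le\sigma^{1/2}$ entrywise, but $\sigma$ bounds the entries of the \emph{true} gradient $\nabla f$, not the oracle reply $\g_t$. Nothing in the hypotheses gives an almost-sure $L^\infty$ bound on $\g_t$; the oracle condition is only a moment bound. The paper avoids this by keeping the expression $\sqrt{\g_{t,i}^4+\xi\g_{t,i}^2}$ intact and feeding $\sum_i(\g_{t,i}^2+\sqrt{\xi}|\g_{t,i}|)\le\|\g_t\|^2+\sqrt{d\xi}\|\g_t\|$ directly into the $(\xi,c,f)$-constrained oracle inequality, which is exactly shaped to dominate that combination. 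Your route via $|g_{t,i}|^3$ loses the $\sqrt{\xi}$-term structure that makes the oracle definition work.
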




\remark {\bf (a)} Note that here we see a stochastic algorithm being able to converge at the same fast speed as is characteristic of SGD on differentiable convex functions with a global minimum. This result can be contrasted with corollary $3$ in \cite{zaheer2018adaptive} where similar speeds were motivated for RMSProp with mini-batch sizes being unrealistically large i.e as big as the number of steps to required to converge. In our above theorem such a convergence is seen to arise as a more general phenomenon because of a certain control being true on the expected value of the norm of the gradient oracle's reply. {\bf (b)} Long after this work was completed, we became aware of works like \cite{vaswani2018fast} where oracle conditions were introduced of the similar kind as above to show enhanced convergence speeds of much simpler algorithms like SGD.


Now we demonstrate yet another situation for which stochastic RMSProp can be shown to converge and this time we directly put constraints on the training data to get the convergence instead of using oracle conditions as above. Towards this we need the following definition, 

\begin{definition}[{\bf The sign function}]
We define the function $\text{sign} : \R^d \rightarrow \{-1,1\}^d$ s.t it maps $\v \mapsto (1 \text{ if } \v_i \geq 0 \text{ else } -1)_{i=1,\ldots,d}$.
\end{definition}

\begin{theorem}[{\bf Standard speed stochastic RMSProp with a sign constrained oracle (Proof in Appendix \ref{supp_rmspropS})}]\label{thm:RMSPropS-proof}

Let $f : \R ^d \rightarrow \R$ be $L-$smooth and be  of the form $f = \frac {1}{k} \sum_{p=1}^k f_p$ s.t.~(a) each $f_i$ is at least once differentiable, $(b)$ the gradients are s.t $\forall \x \in \R^d, \forall p,q \in \{1,\ldots,k\}$, $\text{sign}(\nabla f_p(\x)) = \text{sign}(\nabla f_q(\x))$ , (c) $\sigma_f < \infty$ is an upperbound on the norm of the gradients of $f_i$ and (d) $f$ has a minimizer, i.e., there exists $\x_*$ such that $f(\x_*) = \min_{\x \in \R^d} f(\x)$. 

Let the gradient oracle be s.t when invoked at some $\x_t \in \R^d$ it uniformly at random picks $i_t \sim \{1,2,..,k\}$ and returns, $\nabla f_{i_t}(\x_t) = \g_t$.  Then corresponding to any $\epsilon, \xi >0$ and a starting point $\x_1$ for Algorithm \ref{RMSProp_TF}, we can define, $T \leq \frac{1}{\epsilon^4} \left ( \frac{2L\sigma_f^2 (\sigma_f^2 + \xi)\left (f(\x_{1}) -f(\x_*) \right ) }{(1-\beta_2)\xi} \right ) $ s.t.~we are guaranteed that the iterates of Algorithm \ref{RMSProp_TF} using a constant step-length of, $\alpha = \frac{1}{\sqrt{T}} \sqrt{\frac{2 \xi (1-\beta_2)\left (f(\x_{1}) -f(\x_*)\right )}{\sigma_f ^2 L}}$ will find an $\epsilon-$critical point in at most $T$ steps in the sense that, $\min_{t =1,2\ldots,T}\mathbb{E} [ \norm{\nabla f (\x_t)}^2] \leq \epsilon^2$.
\qed
\end{theorem}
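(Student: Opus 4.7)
The plan is to execute a standard descent-lemma analysis for non-convex SGD, with the two novel ingredients being (i) the sign-coherence hypothesis, which is exactly what rescues the inner product $\langle \nabla f(\x_t), V_t^{-1/2}\g_t\rangle$ from the obstacle that $V_t^{-1/2}$ depends on $\g_t$, and (ii) deterministic two-sided bounds on the diagonal entries of $V_t$ arising from the $\xi$-shift in the RMSProp update and the bound $\sigma_f$ on the coordinate gradients.

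First, from the RMSProp recursion $\v_t = \beta_2 \v_{t-1} + (1-\beta_2)(\g_t^2 + \xi \mathbf{1}_d)$, since $\v_{t-1}$ and $\g_t^2$ are coordinate-wise non-negative, one has the deterministic lower bound $(V_t)_{ii} \geq (1-\beta_2)\xi$, giving $(V_t^{-1/2})_{ii} \leq 1/\sqrt{(1-\beta_2)\xi}$. On the other hand, using $\|\g_t\|\leq \sigma_f$ and unrolling the recursion yields $(\v_t)_i \leq \sigma_f^2 + \xi$, so $(V_t^{-1/2})_{ii} \geq 1/\sqrt{\sigma_f^2+\xi}$. Next, by the sign-coherence hypothesis, $\mathrm{sign}(\nabla f_p(\x_t)) = \mathrm{sign}(\nabla f(\x_t))$ for every $p$, hence coordinate-wise $(\g_t)_i \cdot \nabla f(\x_t)_i \geq 0$. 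Combined with the diagonal lower bound on $V_t^{-1/2}$, this yields the crucial pointwise inequality
\begin{equation*}
\langle \nabla f(\x_t), V_t^{-1/2}\g_t\rangle \;\geq\; \frac{1}{\sqrt{\sigma_f^2+\xi}}\,\langle \nabla f(\x_t), \g_t\rangle.
\end{equation*}

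Now apply $L$-smoothness to the update $\x_{t+1}-\x_t = -\alpha V_t^{-1/2}\g_t$ to obtain
\begin{equation*}
f(\x_{t+1}) \leq f(\x_t) - \alpha \langle \nabla f(\x_t), V_t^{-1/2}\g_t\rangle + \frac{L\alpha^2}{2}\|V_t^{-1/2}\g_t\|^2.
\end{equation*}
Bound the quadratic term by $\|V_t^{-1/2}\g_t\|^2 \leq \|\g_t\|^2/((1-\beta_2)\xi) \leq \sigma_f^2/((1-\beta_2)\xi)$, insert the lower bound from above for the linear term, and take conditional expectation given the sigma algebra generated by $\x_1,\ldots,\x_t$. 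Since $\g_t$ is an unbiased estimator of $\nabla f(\x_t)$ conditioned on $\x_t$, we get $\mathbb{E}[\langle \nabla f(\x_t),\g_t\rangle \mid \x_t] = \|\nabla f(\x_t)\|^2$, yielding
\begin{equation*}
\mathbb{E}[f(\x_{t+1})\mid \x_t] \leq f(\x_t) - \frac{\alpha}{\sqrt{\sigma_f^2+\xi}}\|\nabla f(\x_t)\|^2 + \frac{L\alpha^2 \sigma_f^2}{2(1-\beta_2)\xi}.
\end{equation*}

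Take total expectation, telescope from $t=1$ to $T$, use $\mathbb{E}[f(\x_{T+1})] \geq f(\x_*)$, and divide by $T$ to obtain
\begin{equation*}
\min_{t \leq T}\mathbb{E}\|\nabla f(\x_t)\|^2 \;\leq\; \frac{\sqrt{\sigma_f^2+\xi}}{T\alpha}\bigl(f(\x_1)-f(\x_*)\bigr) + \frac{\sqrt{\sigma_f^2+\xi}\, L\alpha\,\sigma_f^2}{2(1-\beta_2)\xi}.
\end{equation*}
The choice $\alpha = T^{-1/2}\sqrt{2\xi(1-\beta_2)(f(\x_1)-f(\x_*))/(\sigma_f^2 L)}$ balances the two terms, reducing the bound to $T^{-1/2}$ times an explicit constant; setting this $\leq \epsilon^2$ recovers the claimed $T = O(\epsilon^{-4})$ rate with exactly the stated prefactor. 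The only conceptual obstacle, namely the coupling between the preconditioner $V_t^{-1/2}$ and the stochastic direction $\g_t$, is dissolved entirely by the sign-coherence hypothesis, which ensures that the deterministic scalar lower bound on $(V_t^{-1/2})_{ii}$ can be passed through the inner product without losing a sign; the remainder of the argument is essentially the textbook non-convex SGD recipe.
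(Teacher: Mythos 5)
Your proposal is correct and recovers exactly the stated constants and rate, but it reaches the paper's key intermediate inequality, $\E[\langle \nabla f(\x_t), V_t^{-1/2}\g_t\rangle \mid \mathcal{F}_t] \geq \mu_{\min}\|\nabla f(\x_t)\|^2$ with $\mu_{\min} = 1/\sqrt{\sigma_f^2+\xi}$, by a genuinely cleaner route than the paper's Lemma~\ref{lemma:ipbound}. The paper proves that lemma by expanding the conditional expectation explicitly over the uniform choice of $i_t$: it sets $a_{pi} = [\nabla f_p(\x_t)]_i$, writes $\E[(V_t^{-1/2})_{ii}(\g_t)_i\mid \mathcal{F}_t] = \frac{1}{k}\sum_p a_{pi}/\sqrt{(1-\beta_2)a_{pi}^2 + d_i}$, introduces auxiliary vectors $\a_i,\mathbf{h}_i,\mathbf{q}_i \in \R^k$, and finally reduces the claim to nonnegativity of $\sum_{p,q} a_{pi}a_{qi}\bigl[-\mu_{\min} + (\,(1-\beta_2)a_{pi}^2 + d_i)^{-1/2}\bigr]$, where the first factor is nonnegative by sign-coherence and the bracket is nonnegative by the worst-case bound on the preconditioner. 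You observe that this can be collapsed: since $(\g_t)_i\nabla_i f(\x_t) \geq 0$ holds coordinate-wise for \emph{every} realization of the oracle, and $(V_t^{-1/2})_{ii} \geq \mu_{\min}$ deterministically (the $\g_t^2$ term in $\v_t$ can only shrink $V_t^{-1/2}$ toward this floor, and the $\sigma_f$-bound caps it), the inequality $\langle \nabla f(\x_t), V_t^{-1/2}\g_t\rangle \geq \mu_{\min}\langle \nabla f(\x_t),\g_t\rangle$ holds \emph{pointwise before any expectation is taken}. Taking conditional expectation then immediately gives the lemma. This is a real simplification: it makes transparent that the sign-coherence hypothesis is doing exactly one job — letting the deterministic scalar bound on the preconditioner pass through the inner product without a sign flip — and it avoids the $k$-dimensional bookkeeping entirely. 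The remainder of your argument (the $\mu_{\max}$-bound on the quadratic term, telescoping, and the balancing step-size) coincides with the paper's.
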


\remark We note that the theorem above continues to hold even if the constraint $(b)$ that we have about the signs of the gradients of the $\{f_p\}_{p=1,\ldots,k}$ holds {\it only} on the points in $\R^d$ that the stochastic RMSProp visits. Further we can say in otherwords that this constraint ensures that all the options for the gradient that this stochastic oracle has at any point, lie in the same orthant of $\R^d$ though this orthant itself may change from one iterate of the next. Note that the assumption also ensures that if for some coordinate $i$, $\nabla_i f = 0$ then for all $p \in \{1,\ldots,k\}$, $\nabla_i f_p =0$. 

\section{Later improvements to the proof of sub-linear convergence of stochastic RMSProp}\label{fast_rmsprop_new}

After this thesis was defended and submitted, in collaboration with Jiayao Zhang (at UPenn) we figured out that Theorem \ref{fastRMSProp} also holds for an oracle which is not only less constrained than the one given in Definition \ref{orcover} but also for which corresponding distributions (even certain heavy-tailed ones) for the stochastic gradient can be easily instantiated. This improved result, which we now record here as Theorem \ref{fastRMSProp_imp}, follows from essentially the same proof as given for Theorem \ref{fastRMSProp}. 

\begin{definition}[$(\alpha,\beta,\xi,X)$-distributed random variable.]\label{def:dist}
Given $X \in \R, \alpha \in \R^+, \beta \in \R^+, \xi \in \R^+$, we say that a real valued random variable $g$ is $(\alpha,\beta,\xi, X)$-distributed if it satisfies the following three conditions simultaneously,
\begin{itemize}
    \item $\E \left [ g \right ] = X$,
    \item $\E  [ \abs{g} \sqrt{g^2 + \xi}  ] \leq \alpha \abs{X}$,
    \item $\E \left [  g^2 \right ] \leq \beta X^2$.
\end{itemize}

\end{definition}

Now we shall give a way to construct families of distributions which satisfy the above.

\begin{lemma}\label{thm:condition}
   Define $\sigma > 0$ such that $\abs{X} \le \sigma$
   . Then for any $\beta > 1$, any random variable $g$ s.t
   \[
        \E[g] = X, \quad
        {\rm Var}[{g}] = (\beta-1) \min\{X^2, 1\},
    \]
    is $(\alpha,\beta,\xi, X)$-distributed for 
    \[
        \alpha \ge \sqrt{\beta\xi + \beta^2\sigma^2}, \quad
        \xi \ge 0.
    \]
\end{lemma}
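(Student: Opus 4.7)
The proof is a direct verification of the three clauses of Definition~\ref{def:dist}, so the plan is simply to check each of them in turn, deferring the nontrivial estimate to the end.

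Clause (i), $\E[g] = X$, is immediate from the hypothesis. For clause (iii), the plan is to rewrite $\E[g^2] = \mathrm{Var}[g] + \E[g]^2 = (\beta-1)\min\{X^2,1\} + X^2$ and then split into two cases: when $X^2 \le 1$ this equals exactly $\beta X^2$, and when $X^2 > 1$ the inequality $(\beta-1) + X^2 \le \beta X^2$ reduces to $X^2 \ge 1$, which holds by assumption (recall $\beta > 1$). Thus $\E[g^2] \le \beta X^2$ in either case.

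The only clause requiring any work is (ii). The key step will be Cauchy--Schwarz:
\[
\E\!\bigl[|g|\sqrt{g^2+\xi}\bigr] \le \sqrt{\E[g^2]}\cdot \sqrt{\E[g^2+\xi]} = \sqrt{\E[g^2]\bigl(\E[g^2]+\xi\bigr)}.
\]
Plugging in the bound $\E[g^2] \le \beta X^2$ from clause (iii) gives
\[
\E\!\bigl[|g|\sqrt{g^2+\xi}\bigr] \le \sqrt{\beta X^2\bigl(\beta X^2 + \xi\bigr)} = |X|\sqrt{\beta}\,\sqrt{\beta X^2 + \xi}.
\]
Finally, using $X^2 \le \sigma^2$ on the inner factor yields $|X|\sqrt{\beta^2 \sigma^2 + \beta \xi}$, which is $\le \alpha |X|$ under the stated condition $\alpha \ge \sqrt{\beta\xi + \beta^2 \sigma^2}$.

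There is no real obstacle here; the proof is a one-line Cauchy--Schwarz estimate combined with the elementary computation of $\E[g^2]$. The only subtlety is that one must use the variance bound $(\beta-1)\min\{X^2,1\}$ (rather than say $(\beta-1)X^2$) in order to keep clause (iii) tight when $|X|$ is large, since the bound $\E[g^2]\le \beta X^2$ feeds directly into the constant appearing in clause (ii). Writing this out cleanly should take only a few lines.
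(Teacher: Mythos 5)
Your proof is correct. The paper states Lemma~\ref{thm:condition} without giving a proof (it appears in the section added after the thesis defence), so there is nothing to compare against, but your three-step verification is sound: clause~(i) is a hypothesis; clause~(iii) follows from $\E[g^2]=\mathrm{Var}[g]+X^2$ and a clean case split on whether $X^2\le 1$ (and you correctly identify that the $\min\{X^2,1\}$ truncation of the variance is exactly what keeps $\E[g^2]\le\beta X^2$ when $|X|>1$); and clause~(ii) follows by Cauchy--Schwarz, then monotonicity of $t\mapsto t(t+\xi)$ to substitute $\E[g^2]\le\beta X^2$, and finally $X^2\le\sigma^2$. This is the natural argument and, as you say, essentially a one-liner modulo the bookkeeping.
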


It can be shown that {\it arbitrary} mixtures of distributions of the above kind also are of the type given in Definition \ref{def:dist} 

\begin{example}
For example, we may fix some parameter $\beta>1$ and take
\[
    g \sim {\cal N}\left(X, \sqrt{\beta-1} \min\{\abs{X}, 1\}\right),
\]
or
\[
    g \sim \mathrm{Laplace}(X, \sqrt{(\beta-1)/2} \min\{\abs{X},1\}).
\]
Then this oracle satisfies the condition given in Definition~\ref{def:dist}
with $\alpha \ge \sqrt{\beta(\beta X^2 + \xi)}, \quad
    \xi \ge 0.$
\end{example}

Note in particular, that we may take $g$ to be distributed as a  mixture of the Gaussians or the  Laplacians as specified in the examples above. Now we use the above definition to state the following theorem about sub-linear convergence of stochastic RMSProp. 

\begin{theorem}{\bf Fast Stochastic RMSProp.}\label{fastRMSProp_imp} 
Suppose $f : \R^d \rightarrow \R$ is $L-$smooth and $\exists$ $\sigma >0$ such that
$\abs{\nabla_i f(\x)} \leq \sigma$ for all $\x \in \R^d$ and $i \in \{1,\ldots,d\}$.
Now suppose that we execute the RMSProp algorithm as defined in Algorithm~\ref{RMSProp_TF}
with query access to a gradient oracle of $f$
which for every coordinate $i$ satisfies the condition given in Definition~\ref{def:dist} with
$X = \nabla_i f$ and
$\xi$ large enough. 
Then there exists a choice of constant step size $s$ for the algorithm such that with $T = O(1/\epsilon^2)$ steps we have,
\[ \E \left[ \min_{t=1,\ldots,T} \norm{\nabla f (\x_t)}^2  \right] = O(\epsilon^2).\]
\end{theorem}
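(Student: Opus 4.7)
\textbf{Proof Proposal for Theorem \ref{fastRMSProp_imp}.}

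The plan is to mimic the structure of the proof of Theorem \ref{fastRMSProp} but replace every use of the joint condition in Definition \ref{orcover} by a coordinate-wise application of the three bullets in Definition \ref{def:dist}. I would start, as in any non-convex stochastic analysis, from the $L$-smoothness descent inequality evaluated along the RMSProp update $\x_{t+1} = \x_t - \alpha V_t^{-1/2}\g_t$, giving
\begin{equation*}
f(\x_{t+1}) \;\leq\; f(\x_t) \;-\; \alpha\,\langle \nabla f(\x_t),\, V_t^{-1/2}\g_t\rangle \;+\; \frac{L\alpha^2}{2}\,\|V_t^{-1/2}\g_t\|^2,
\end{equation*}
and take conditional expectation given $\mathcal F_{t-1}=\sigma(\x_1,\ldots,\x_t)$. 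The plan is to lower bound (in expectation) the inner product by a positive multiple of $\|\nabla f(\x_t)\|^2$, and to upper bound the quadratic error by another multiple of $\|\nabla f(\x_t)\|^2$, and then to tune the step size $\alpha$ so the net of the two is still a positive multiple of $\|\nabla f(\x_t)\|^2$.

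The first key step is to decompose the inner product coordinate-wise: since $v_{t,i}=\beta_2 v_{t-1,i}+(1-\beta_2)(g_{t,i}^2+\xi)$, I would split
\begin{equation*}
\frac{\nabla_i f(\x_t)\,g_{t,i}}{\sqrt{v_{t,i}}} \;=\; \frac{\nabla_i f(\x_t)\,g_{t,i}}{\sqrt{\beta_2 v_{t-1,i}+(1-\beta_2)\xi}} \;-\; \underbrace{\nabla_i f(\x_t)\,g_{t,i}\Bigl(\tfrac{1}{\sqrt{\beta_2 v_{t-1,i}+(1-\beta_2)\xi}}-\tfrac{1}{\sqrt{v_{t,i}}}\Bigr)}_{\text{correction}_{t,i}},
\end{equation*}
and observe that the first summand has an $\mathcal F_{t-1}$-measurable denominator, so bullet one of Definition \ref{def:dist} (unbiasedness with $X=\nabla_i f(\x_t)$) immediately converts it into $(\nabla_i f(\x_t))^2/\sqrt{\beta_2 v_{t-1,i}+(1-\beta_2)\xi}$ in expectation, which is bounded below by a positive constant times $(\nabla_i f(\x_t))^2$ using the boundedness $|\nabla_i f|\le\sigma$ and the choice of $\xi$ large enough. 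For the correction term I would use the elementary identity $\tfrac{1}{\sqrt{a}}-\tfrac{1}{\sqrt{a+b}}\le \tfrac{b}{2 a^{3/2}}$ for $b\ge 0$, pull $a=\beta_2 v_{t-1,i}+(1-\beta_2)\xi$ out of the expectation (it is $\mathcal F_{t-1}$-measurable), and be left with the task of bounding $\E[|g_{t,i}|\cdot(g_{t,i}^2+\xi)]$; on this term I would invoke Cauchy--Schwarz against bullet two of Definition \ref{def:dist}, giving $\E[|g_{t,i}|(g_{t,i}^2+\xi)] \le \sqrt{\E[g_{t,i}^2+\xi]}\cdot \E[|g_{t,i}|\sqrt{g_{t,i}^2+\xi}] \le \sqrt{\beta (\nabla_i f)^2 + \xi}\cdot \alpha|\nabla_i f|$, which is $O(|\nabla_i f|^2)$ when $\xi$ is taken large relative to $\sigma^2$. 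Similarly, for the quadratic error $\E[\|V_t^{-1/2}\g_t\|^2\mid \mathcal F_{t-1}]$, the denominator in each coordinate is bounded below by $(1-\beta_2)\xi$, so this term collapses to $\tfrac{1}{(1-\beta_2)\xi}\sum_i \E[g_{t,i}^2]\le \tfrac{\beta}{(1-\beta_2)\xi}\|\nabla f(\x_t)\|^2$ by bullet three of Definition \ref{def:dist}.

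Assembling the pieces, conditional expectation gives a one-step inequality of the form $\E[f(\x_{t+1})\mid\mathcal F_{t-1}]\le f(\x_t)-(\alpha A-\alpha^2 B)\|\nabla f(\x_t)\|^2$ where $A,B$ are explicit positive constants depending on $L,\sigma,\alpha_{\mathrm{Def}},\beta,\xi,\beta_2$ (here $\alpha_{\mathrm{Def}}$ is the constant from Definition \ref{def:dist}, not the step size). Choosing the step size small enough that $\alpha A-\alpha^2 B\ge cA\alpha$ for some absolute constant $c\in(0,1)$, telescoping from $t=1$ to $T$ and dividing by $T$ produces $\tfrac{1}{T}\sum_{t=1}^T\E\|\nabla f(\x_t)\|^2 \le \tfrac{f(\x_1)-\inf f}{cA\alpha\,T}$, which is $O(\epsilon^2)$ for $T=O(1/\epsilon^2)$ with a suitable constant step size, and this upper bounds $\E[\min_{t\le T}\|\nabla f(\x_t)\|^2]$.

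The main obstacle I anticipate is controlling the correction term carefully enough that it does not swamp the leading lower bound on the inner product: because $V_t$ depends on $\g_t$, $V_t^{-1/2}\g_t$ is not an unbiased estimator of $V_{t-1}^{-1/2}\nabla f(\x_t)$, and the whole point of bullet two of Definition \ref{def:dist} (the $\E[|g|\sqrt{g^2+\xi}]\le\alpha|X|$ constraint) is to make this correlation tractable. Getting the constants right, in particular verifying that the ``$\xi$ large enough'' condition in the theorem translates to the required inequality $cA-\alpha B>0$ with $c$ absolute, is where the delicate bookkeeping sits, but it is exactly parallel to the bookkeeping in the proof of Theorem \ref{fastRMSProp} given in Section \ref{proof1}.
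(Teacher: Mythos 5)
The overall scaffold you propose — start from $L$-smoothness, split the inner product into an $\mathcal{F}_{t-1}$-measurable-denominator piece plus a correction, use unbiasedness on the first piece, bound the correction using the oracle moment condition, and then tune the step size and telescope — is exactly the plan the paper follows for Theorem~\ref{fastRMSProp}, and your choice of reference denominator $\beta_2 v_{t-1,i}+(1-\beta_2)\xi$ (instead of the paper's $\beta_2 v_{t-1,i}$) is a harmless, arguably cleaner variant. However, the way you bound the correction term contains a genuine gap.

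With your decomposition we have $a=\beta_2 v_{t-1,i}+(1-\beta_2)\xi$ and $b=(1-\beta_2)g_{t,i}^2$, so the Taylor-style bound $\tfrac{1}{\sqrt a}-\tfrac{1}{\sqrt{a+b}}\le \tfrac{b}{2a^{3/2}}$ leaves you needing to control $\E\bigl[|g_{t,i}|^3\bigr]$ (or, after enlarging, $\E\bigl[|g_{t,i}|(g_{t,i}^2+\xi)\bigr]$). None of the three bullets of Definition~\ref{def:dist} control that quantity: bullet two controls $\E\bigl[|g|\sqrt{g^2+\xi}\bigr]$, which is a strictly lower-order moment. The ``Cauchy--Schwarz'' step you invoke to bridge this, namely
\begin{equation*}
\E\bigl[|g|(g^2+\xi)\bigr]\;\le\;\sqrt{\E[g^2+\xi]}\cdot\E\bigl[|g|\sqrt{g^2+\xi}\bigr],
\end{equation*}
is not an instance of Cauchy--Schwarz and is in fact false in general. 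Writing $A=|g|\sqrt{g^2+\xi}$ and $B=\sqrt{g^2+\xi}$, Cauchy--Schwarz gives $\E[AB]\le\sqrt{\E[A^2]}\sqrt{\E[B^2]}$, whereas you wrote $\sqrt{\E[B^2]}\,\E[A]$, and by Jensen $\E[A]\le\sqrt{\E[A^2]}$, so your claimed inequality would have to be \emph{stronger} than Cauchy--Schwarz. A two-point counterexample with $\xi=0$, $g\in\{0,M\}$ each with probability $\tfrac12$, gives $\E[|g|^3]=M^3/2$ but $\sqrt{\E[g^2]}\,\E[g^2]=M^3/(2\sqrt2)$, falsifying the inequality. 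Since the intended oracles (Gaussian, Laplace) have unbounded support, you also cannot salvage this by bounding $\sqrt{g^2+\xi}$ pointwise.

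What the paper does instead, and what you should adopt, is the \emph{exact} algebraic identity $\tfrac{1}{\sqrt a}-\tfrac{1}{\sqrt{a+b}}=\tfrac{b}{\sqrt a\,\sqrt{a+b}\,(\sqrt a+\sqrt{a+b})}$ rather than the Taylor upper bound. Using $\sqrt a+\sqrt{a+b}\ge\sqrt{a+b}\ge\sqrt{(1-\beta_2)(g_{t,i}^2+\xi)}$, the $\sqrt{g^2+\xi}$ in the denominator cancels against part of the cube of $g$ in the numerator and produces precisely $|g_{t,i}|\sqrt{g_{t,i}^2+\xi}$ (times $\mathcal F_{t-1}$-measurable factors), which is exactly the quantity that bullet two of Definition~\ref{def:dist} controls. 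This is the one step of the paper's argument that you would replicate essentially verbatim when adapting the proof of Theorem~\ref{fastRMSProp} to the new oracle condition, and it is the step your proposal replaces with a lossy bound. Once this is fixed, the rest of your argument — bounding the quadratic error term via bullet three, tuning the constant step size, and telescoping — goes through as you describe and matches the paper's route.
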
 





\section{Sufficient conditions for convergence to criticality for non-convex deterministic adaptive gradient algorithms}\label{sec:deterministic}

We note that there are important motivations to study the behavior of neural net training algorithms in the deterministic setting because of use cases where the amount of noise is controlled during optimization, either by using larger batches \citep{martens2015optimizing, de2017automated, babanezhad2015stop} or by employing variance-reducing techniques \citep{johnson2013accelerating, defazio2014saga}. Inspired by these we also investigate the full-batch RMSProp and ADAM in our controlled autoencoder experiments in Section \ref{main:fullbatch}. Towards that we will now demonstrate that such oracle conditions as in the previous section are not necessary to guarantee convergence of the deterministic RMSProp.

\begin{theorem}[\bf Convergence of deterministic RMSProp - the version with standard speeds (Proof in Appendix \ref{sec:supp_rmsprop1})]\label{thm:RMSProp1-proof}
Let $f : \R ^d \rightarrow \R$ be $L-$smooth and let $\sigma < \infty$ be an upperbound on the norm of the gradient of $f$. Assume also that $f$ has a minimizer, i.e., there exists $\x_*$ such that $f(\x_*) = \min_{\x \in \R^d} f(\x)$. Then the following holds for Algorithm~\ref{RMSProp_TF} when $\g_t = \nabla f(\x_t) ~\forall t$,

For any $\epsilon, \xi >0$, using a constant step length of $\alpha_t = \alpha = \frac{(1-\beta_2)\xi}{L\sqrt{\sigma^2 + \xi}}$ for $t=1,2,...$, guarantees that $\norm{\nabla f(\x_{t})} \leq \epsilon$ for some $t \leq   \frac{1}{\epsilon^2}\times \frac{2L(\sigma^2 + \xi)(f(\x_1) -  f(\x_*))}{(1-\beta_2)\xi} $, where $\x_1$ is the first iterate of the algorithm.\qed  
\end{theorem}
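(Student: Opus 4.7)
The plan is to run a standard descent-lemma argument that handles the RMSProp preconditioner $V_t^{-1/2}$ by sandwiching its spectrum between two deterministic constants that depend only on $\sigma, \xi, \beta_2$. The full-batch hypothesis $\g_t = \nabla f(\x_t)$ makes this much cleaner than the stochastic case, because every quantity appearing in $\v_t$ is a function of the past iterates rather than a random variable.

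First I would establish, by a one-line induction on $t$, the two-sided coordinate bound
\begin{equation*}
(1-\beta_2)\xi \;\le\; \v_{t,i} \;\le\; \sigma^{2}+\xi \qquad \text{for all } t\ge 1,\; i\in\{1,\dots,d\}.
\end{equation*}
The lower bound is immediate from $\v_0=\0$ and the fact that each step adds the nonnegative term $(1-\beta_2)(\g_t^2+\xi\mathbf 1_d)$, which alone contributes at least $(1-\beta_2)\xi$ in every coordinate. The upper bound is a convex-combination argument: if $\v_{t-1,i}\le \sigma^2+\xi$ then using $\g_{t,i}^2\le\sigma^2$ gives $\v_{t,i}\le \beta_2(\sigma^2+\xi)+(1-\beta_2)(\sigma^2+\xi)=\sigma^2+\xi$. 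Consequently the diagonal entries of $V_t^{-1/2}$ lie in $\bigl[(\sigma^2+\xi)^{-1/2},\,((1-\beta_2)\xi)^{-1/2}\bigr]$.

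Next I would apply the $L$-smoothness descent inequality to the deterministic update $\x_{t+1}-\x_t=-\alpha V_t^{-1/2}\nabla f(\x_t)$. Using the two bounds above on the coordinates of $V_t^{-1/2}$ and $V_t^{-1}$, the inner-product term is bounded above by $-\alpha\,\|\nabla f(\x_t)\|^{2}/\sqrt{\sigma^{2}+\xi}$ and the quadratic term by $\tfrac{L\alpha^{2}}{2}\,\|\nabla f(\x_t)\|^{2}/((1-\beta_2)\xi)$. Plugging in the prescribed step size $\alpha=\frac{(1-\beta_2)\xi}{L\sqrt{\sigma^{2}+\xi}}$, the two terms combine to give the clean per-step descent
\begin{equation*}
f(\x_{t+1})-f(\x_t)\;\le\;-\,\frac{(1-\beta_2)\xi}{2L(\sigma^{2}+\xi)}\,\bigl\|\nabla f(\x_t)\bigr\|^{2}.
\end{equation*}

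Finally I would telescope this inequality from $t=1$ to $T$ and use the global lower bound $f(\x_{T+1})\ge f(\x_*)$ to obtain $\sum_{t=1}^{T}\|\nabla f(\x_t)\|^{2}\le \frac{2L(\sigma^{2}+\xi)(f(\x_1)-f(\x_*))}{(1-\beta_2)\xi}$. The usual min-versus-average argument then gives that $\min_{t\le T}\|\nabla f(\x_t)\|^{2}\le \epsilon^{2}$ whenever $T$ equals the stated bound, which is exactly the claim. The main obstacle is really just the book-keeping on the preconditioner bounds; the role played by $\xi$ is precisely to keep $V_t^{-1/2}$ well-defined and uniformly upper-bounded, which is where the standard $L$-smooth/GD proof would otherwise break. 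Once those bounds are in hand, the rest is the classical constant-step-size nonconvex gradient-descent analysis.
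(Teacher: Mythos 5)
Your proposal is correct and follows essentially the same line of reasoning as the paper's proof: sandwich the coordinates of $\v_t$ between $(1-\beta_2)\xi$ and $\sigma^2+\xi$, feed these bounds into the $L$-smoothness descent inequality, and telescope. The only stylistic difference is that you establish the sandwich by a direct induction on $t$, whereas the paper unrolls the recursion $\v_t=(1-\beta_2)\sum_{k=1}^t\beta_2^{t-k}(\g_k^2+\xi\mathbf{1}_d)$ and passes through intermediate quantities ($\sigma_t,\epsilon_t,\delta_t,\beta_t$) before specializing to the same worst-case constants; the per-step decrease $-\frac{(1-\beta_2)\xi}{2L(\sigma^2+\xi)}\|\nabla f(\x_t)\|^2$ and the final bound on $T$ coincide exactly.
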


One might wonder if the $\xi$ parameter introduced  in all the algorithms above is necessary to get convergence guarantees for RMSProp. Towards that in the following theorem we show convergence of another variant of deterministic RMSProp which does not use the $\xi$ parameter and instead uses other assumptions on the objective function and step size modulation. But these tweaks to eliminate the need of $\xi$ come at the cost of the convergence rates getting weaker.

\begin{theorem}[\bf Convergence of deterministic RMSProp - the version with no $\xi$ shift (Proof in Appendix \ref{sec:supp_rmsprop2})]\label{thm:RMSProp2-proof}
 Let $f : \R ^d \rightarrow \R$ be $L-$smooth and let $\sigma < \infty$ be an upperbound on the norm of the gradient of $f$. Assume also that $f$ has a minimizer, i.e., there exists $\x_*$ such that $f(\x_*) = \min_{\x \in \R^d} f(\x)$, and the function $f$ be bounded from above and below by constants $B_\ell$ and $B_u$ as $B_l \leq f(\x) \leq B_u$ for all $\x \in \R^d$. Then for $\xi =0$ and any $\epsilon >0$, $\exists ~T = {\cal O} (\frac{1}{\epsilon^4})$ s.t.~the Algorithm \ref{RMSProp_TF} when $\g_t = \nabla f(\x_t) ~\forall t$ is guaranteed to reach a $t$-th iterate s.t.~$1 \leq t \leq T$ and $\norm{ \nabla f(\x_{t}) } \leq \epsilon$.\qed 
\end{theorem}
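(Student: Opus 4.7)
My plan is to mimic the standard descent lemma analysis used in Theorem~\ref{thm:RMSProp1-proof}, but now I must live without the $\xi$-regularization of the preconditioner. Start from $L$-smoothness applied along the RMSProp step $\x_{t+1} - \x_t = -\alpha V_t^{-1/2}\g_t$ (with $\g_t = \nabla f(\x_t)$, and the convention of Definition~\ref{penrose} that coordinates with $v_{t,i}=0$ are dropped from $V_t^{-1/2}$):
\[
 f(\x_{t+1}) \leq f(\x_t) - \alpha \sum_{i} \frac{g_{t,i}^2}{\sqrt{v_{t,i}}} + \frac{L\alpha^2}{2} \sum_{i} \frac{g_{t,i}^2}{v_{t,i}}.
\]

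The key step is to obtain a two-sided control on the diagonal preconditioner without using any $\xi$ shift. Unrolling the recursion gives $v_{t,i} = (1-\beta_2)\sum_{s=1}^{t}\beta_2^{t-s} g_{s,i}^2$, so $v_{t,i}$ is a sub-convex combination of past squared gradients. The assumed gradient-norm bound $\|\nabla f(\x)\|\leq \sigma$ implies $|g_{s,i}|\leq \sigma$ for every $s,i$, hence $v_{t,i}\leq \sigma^2$ and therefore $\sqrt{v_{t,i}}\leq \sigma$, which yields the lower bound
$
 \sum_i g_{t,i}^2/\sqrt{v_{t,i}} \geq \|\g_t\|^2/\sigma.
$
On the other hand, the very last term in the recursion gives $v_{t,i}\geq (1-\beta_2)g_{t,i}^2$, and the Definition~\ref{penrose} convention takes care of the indices where $g_{t,i}=0$ (then $v_{t,i}=0$ as well and the coordinate is simply excluded), producing the upper bound
$
 \sum_i g_{t,i}^2/v_{t,i} \leq d/(1-\beta_2).
$
Substituting both bounds, the descent inequality becomes
\[
 f(\x_{t+1}) \leq f(\x_t) - \frac{\alpha}{\sigma}\|\g_t\|^2 + \frac{L\alpha^2 d}{2(1-\beta_2)}.
\]

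Next, telescoping from $t=1$ to $T$ and using $B_\ell \leq f(\x_{T+1})$ and $f(\x_1)\leq B_u$ (the new global boundedness hypothesis replaces the role of $f(\x_\ast)$ from Theorem~\ref{thm:RMSProp1-proof}) gives
\[
 \min_{1\leq t\leq T}\|\g_t\|^2 \;\leq\; \frac{1}{T}\sum_{t=1}^{T}\|\g_t\|^2 \;\leq\; \frac{\sigma(B_u-B_\ell)}{\alpha T} + \frac{L\alpha \sigma d}{2(1-\beta_2)}.
\]
Unlike Theorem~\ref{thm:RMSProp1-proof}, here the additive error term $L\alpha\sigma d / (2(1-\beta_2))$ does \emph{not} scale like $\alpha^2/\alpha=\alpha\cdot(\text{something }\to 0\text{ as }\xi\to\infty)$; it is genuinely $\Theta(\alpha)$ because $\xi=0$ removes the denominator protection. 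Consequently we cannot pick $\alpha$ independent of $\epsilon$: we must balance the two terms by choosing $\alpha = \Theta(\epsilon^2)$, which then forces $T = \Theta(1/\epsilon^4)$ to drive the first term below $\epsilon^2$. Explicitly, set $\alpha = \epsilon^2 (1-\beta_2)/(L\sigma d)$ and $T = \lceil 2L d (B_u-B_\ell)/((1-\beta_2)\epsilon^4) \rceil$; both summands are then $\leq \epsilon^2/2$, giving the claimed $O(1/\epsilon^4)$ rate.

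\textbf{Main obstacle.} The only delicate point is the handling of the unshifted preconditioner $V_t^{-1/2}$ at coordinates where $v_{t,i}$ is zero or near zero. The recursion miraculously couples this vanishing to the vanishing of $g_{t,i}$ (if $v_{t,i}=0$ then all past $g_{s,i}^2$ vanish, including $g_{t,i}^2$), so the Definition~\ref{penrose} convention makes both sums in the descent inequality well-defined and finite without any $\xi$. This is exactly the structural fact that lets the proof go through; it also explains why the rate degrades from $1/\epsilon^2$ (Theorem~\ref{thm:RMSProp1-proof}) to $1/\epsilon^4$ here, since the step size must now itself shrink with $\epsilon$.
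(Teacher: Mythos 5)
Your proof is correct and reaches the stated $O(1/\epsilon^4)$ rate, but it follows a genuinely different route than the paper's Appendix~\ref{sec:supp_rmsprop2}. The paper runs RMSProp with the \emph{decaying} step-size schedule $\alpha_t = \alpha/\sqrt{t}$; since the telescoping of $\sum_t \frac{1}{\alpha_t}\big(f(\x_t)-f(\x_{t+1})\big)$ then no longer collapses to a single difference, the paper has to do a summation-by-parts (Abel) rearrangement, and this is precisely why the theorem hypotheses demand both an upper \emph{and} a lower bound $B_\ell \leq f \leq B_u$. The paper also bounds the second-order term by introducing the partial sum $P(T)=\sum_{t=1}^{T}\alpha_t\norm{V_t^{-1/2}\nabla f(\x_t)}^2$ and showing $P(T)-P(T-1)\leq \frac{d\alpha}{(1-\beta_2)\sqrt{T}}$ (using exactly your ``last-term-of-the-recursion'' bound $v_{t,i}\ge (1-\beta_2)g_{t,i}^2$), then summing. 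You instead take a \emph{constant} step $\alpha=\Theta(\epsilon^2)$, which makes the first telescope trivial and makes the per-step error $\Theta(\alpha)$, forcing $T=\Theta(1/\epsilon^4)$ to balance. Your version is shorter and slightly cleaner; it also shows that, with a constant step, the two-sided boundedness hypothesis is not really needed (the single quantity $f(\x_1)-f(\x_*)$ suffices). What the paper's route buys in exchange is an anytime/horizon-free guarantee: the schedule $\alpha/\sqrt{t}$ is fixed once and for all, whereas your $\alpha$ must be tuned to the target accuracy $\epsilon$ in advance. One small constant slip in your write-up: with $\alpha=\epsilon^2(1-\beta_2)/(L\sigma d)$ and $T=\lceil 2Ld(B_u-B_\ell)/((1-\beta_2)\epsilon^4)\rceil$, the first summand evaluates to $\sigma^2\epsilon^2/2$, not $\epsilon^2/2$; either carry the extra $\sigma^2$ into $T$ or track it as an absolute constant — the $O(1/\epsilon^4)$ conclusion is unaffected. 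The structural observation you flag — that a vanishing $v_{t,i}$ automatically implies a vanishing $g_{t,i}$, so the Penrose convention keeps everything finite without $\xi$ — is the same fact the paper implicitly relies on through $v_{T,i}\ge(1-\beta_2)g_{T,i}^2$, and you are right that it is the crux of making the $\xi=0$ case work at all.
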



Next we analyze deterministic ADAM albeit in the small $\beta_1$ regime. We note that a small $\beta_1$ does not cut-off contributions to the update direction from gradients in the arbitrarily far past (which are typically significantly large), and neither does it affect the non-triviality of the pre-conditioner which does not depend on $\beta_1$ at all. 

\begin{theorem}{{\bf Deterministic ADAM converges to criticality (Proof in subsection \ref{sec:supp_adam})}}\label{thm:ADAM-proof}
Let $f : \R ^d \rightarrow \R$ be $L-$smooth and let $\sigma < \infty$ be an upperbound on the norm of the gradient of $f$. Assume also that $f$ has a minimizer, i.e., there exists $\x_*$ such that $f(\x_*) = \min_{\x \in \R^d} f(\x)$. Then the following holds for Algorithm~\ref{ADAM_TF} when $\g_t = \nabla f(\x_t) ~\forall t$,

\begin{itemize} 
\item For any $\epsilon >0$, $\beta_1 < \frac{\epsilon}{\epsilon +  \sigma}$ and $\xi >   \frac{\sigma^2 \beta_1}{- \beta_1\sigma + \epsilon(1-\beta_1)}$, there exist step sizes $\alpha_t > 0$, $t = 1, 2, \ldots$ and a natural number $T$ (depending on $\beta_1, \xi$) such that $\norm{\nabla f(\x_t)} \leq \epsilon$ for some $t \leq T$.

\item In particular if one sets $\beta_1 = \frac{\epsilon}{\epsilon + 2\sigma}$, $\xi = 2\sigma$, and $\alpha_t = \frac{\norm{\g_t} ^2}{L(1-\beta_1^t)^2}\frac{4\epsilon}{3(\epsilon + 2\sigma)^2}$, then $T$ can be taken to be $\frac{9L \sigma ^2}{\epsilon^6   }[f(\x_2) - f(\x_*)]$, where $\x_2$ is the second iterate of the algorithm. 
\end{itemize} 
\qed 
\end{theorem}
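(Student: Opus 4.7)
\textbf{Proof Proposal for Theorem \ref{thm:ADAM-proof}.}

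The plan is to apply the standard $L$-smooth descent lemma at each iterate, then use the momentum recursion $\m_t = (1-\beta_1)\g_t + \beta_1 \m_{t-1}$ to compare the actual update direction $\p_t \m_t$ (where $\p_t := (V_t^{1/2} + \xi I)^{-1}$) with the true gradient $\g_t = \nabla f(\x_t)$. The argument will be by contradiction: suppose $\|\g_t\| > \epsilon$ for all $t$ in some range; we will show a uniform per-step decrease of $f$ and telescope.

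First I would record the uniform pre-conditioner bounds. Since $\|\g_i\| \leq \sigma$ and $\v_t$ is the $\beta_2$-convex combination of $\g_i^2$, each diagonal entry of $V_t^{1/2}$ lies in $[0,\sigma]$, so every diagonal entry of $\p_t$ lies in $[\tfrac{1}{\sigma+\xi},\tfrac{1}{\xi}]$. In particular, $\langle \u, \p_t \u\rangle \geq \|\u\|^2/(\sigma+\xi)$ and $\|\p_t\u\| \leq \|\u\|/\xi$ for any $\u$. Unrolling $\m_t = (1-\beta_1)\sum_{i=0}^{t-1}\beta_1^i \g_{t-i}$ also gives $\|\m_t\| \leq (1-\beta_1^t)\sigma \leq \sigma$.

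Next I would exploit the recursion $\m_t = (1-\beta_1)\g_t + \beta_1 \m_{t-1}$ to split
\[
\langle \g_t, \p_t \m_t\rangle = (1-\beta_1)\langle \g_t, \p_t \g_t\rangle + \beta_1 \langle \g_t, \p_t \m_{t-1}\rangle.
\]
The first term is lower-bounded by $(1-\beta_1)\|\g_t\|^2/(\sigma+\xi)$; to the second I apply Cauchy--Schwarz together with $\|\p_t\|_{\mathrm{op}} \leq 1/\xi$ and $\|\m_{t-1}\| \leq \sigma$, producing
\[
\langle \g_t, \p_t \m_t\rangle \;\geq\; \|\g_t\|\!\left[\frac{(1-\beta_1)\|\g_t\|}{\sigma+\xi} - \frac{\beta_1 \sigma}{\xi}\right].
\]
Assuming $\|\g_t\| > \epsilon$, the bracketed quantity is at least $\tfrac{(1-\beta_1)\epsilon}{\sigma+\xi} - \tfrac{\beta_1\sigma}{\xi}$, and the hypotheses $\beta_1 < \epsilon/(\epsilon+\sigma)$ and $\xi > \sigma^2\beta_1/[\epsilon(1-\beta_1) - \beta_1\sigma]$ are precisely what is needed to make this bracket strictly positive; this is the algebraic heart of the theorem's conditions.

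With that positive lower bound $A_t$ in hand, and the upper bound $B_t := \|\p_t \m_t\|^2 \leq \|\m_t\|^2/\xi^2 \leq (1-\beta_1^t)^2 \sigma^2/\xi^2$, the descent lemma gives $f(\x_{t+1}) - f(\x_t) \leq -\alpha_t A_t + \tfrac{L\alpha_t^2}{2} B_t$. Plugging in the specific choices $\beta_1 = \epsilon/(\epsilon+2\sigma)$, $\xi = 2\sigma$ (which satisfy the two hypotheses and which make the bracket evaluate to $\epsilon/[6(\epsilon+2\sigma)]$, and more sharply yield $A_t \geq \|\g_t\|^2/[6(\epsilon+2\sigma)]$ whenever $\|\g_t\| > \epsilon$), together with the prescribed $\alpha_t$, I expect the two terms to combine into a per-iteration decrease of the form $f(\x_{t+1}) - f(\x_t) \leq -c\,\epsilon^{a}/(L\sigma^{b})$ with $c$ an absolute constant; telescoping from $t=2$ to $T$ and comparing to $f(\x_2) - f(\x_*)$ yields the stated bound on $T$. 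The first step needs to be handled separately because $\m_0 = 0$ forces $\m_1 = (1-\beta_1)\g_1$, which is why the telescoping naturally starts from $\x_2$ in the theorem's $T$ expression.

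The main obstacle is the second step: controlling the contamination $\beta_1 \langle \g_t, \p_t \m_{t-1}\rangle$ by the uncorrelated history of past gradients. Because $\m_{t-1}$ can point in essentially any direction relative to $\g_t$, the Cauchy--Schwarz bound is the best generic tool, and one must ensure the hyperparameters $\beta_1$ and $\xi$ are tuned so that the ``signal'' from $(1-\beta_1)\langle\g_t,\p_t\g_t\rangle$ dominates the ``drift'' from the momentum term. A secondary, quantitative obstacle is choosing $\alpha_t$ so that the quadratic error term $\tfrac{L\alpha_t^2}{2}B_t$ does not absorb the linear gain; the adaptive choice $\alpha_t \propto \|\g_t\|^2/(1-\beta_1^t)^2$ appears tailored to this balance, and verifying it produces the advertised $T$ will be the routine but delicate endgame of the proof.
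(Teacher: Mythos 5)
Your proposal is correct and follows essentially the same route as the paper's proof: the $L$-smooth descent lemma, the preconditioner eigenvalue bounds $\lambda_{\min}((V_t^{1/2}+\xi I)^{-1}) \geq 1/(\sigma+\xi)$ and $\lambda_{\max}(\cdot) \leq 1/\xi$, a lower bound on $\langle \g_t, \p_t\m_t\rangle$ that isolates the positive contribution $(1-\beta_1)\langle\g_t,\p_t\g_t\rangle$ and controls the momentum cross-term by Cauchy--Schwarz, and then the quadratic-in-$\alpha_t$ per-step decrease telescoped from $t=2$. The only cosmetic divergence is that the paper handles the cross-term by fully unrolling $Q_i := \langle\g_t,\p_t\m_i\rangle$, bounding each increment $Q_i - \beta_1 Q_{i-1} = (1-\beta_1)\langle\g_t,\p_t\g_i\rangle$ and summing the geometric series to get the factor $(\beta_1-\beta_1^t)$, whereas you peel off a single step $\m_t = (1-\beta_1)\g_t + \beta_1\m_{t-1}$ and push the entire tail into $\|\m_{t-1}\| \leq (1-\beta_1^{t-1})\sigma$; since that $\|\m_{t-1}\|$ bound is itself obtained by the same unrolling, the two estimates are numerically identical, so this is a difference of presentation rather than of substance.
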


In other words in $T$ iterates the lowest norm of the gradient encountered by deterministic/``full-batch" ADAM for smooth non-convex objectives falls at least as fast as ${\cal O} \Big ( \frac{1}{T^{\frac 1 6}} \Big )$

Our motivations towards the above theorem were primarily rooted in trying to understand the situations where ADAM as an offline optimizer can converge at all (given the negative results about ADAM in the online setting as in \cite{reddi2018convergence}). But we point out that it remains open to tighten the analysis of deterministic ADAM and obtain faster rates than what we have shown in the theorem above and also to be able to characterize conditions when stochastic ADAM can converge.

\remark It is often believed that ADAM gains over RMSProp because of its so-called ``bias correction term" which refers to the step length of ADAM having an iteration dependence of the following form, $\sqrt{1 - \beta_2^t} / (1 - \beta_1^t)$. As a key success of the above theorem, we note that the $1/(1 - \beta_1^t)$ term of this ``bias correction term" naturally comes out from theory!

\section{The Experimental setup}
\label{sec:exp}
For testing the empirical performance of ADAM and RMSProp, we perform experiments on fully connected autoencoders using ReLU activations and shared weights and on CIFAR-10 using VGG-9, a convolutional neural network. The experiment on VGG-9 has been described in subsection \ref{vgg9_sec}. 
~\\ \\
To the best of our knowledge there have been very few comparisons of ADAM and RMSProp with other methods on a regression setting and that is one of the main gaps in the literature that we aim to fix by our study here.  In a way this also builds on our previous work \citep{rangamani2017critical} (Chapter \ref{chapaut}) where we had undertaken a theoretical analysis of autoencoders and in their experiments and had found RMSProp to have good reconstruction error for MNIST when used on even just $2$ layer ReLU autoencoders. 
~\\ \\
To keep our experiments as controlled as possible, we make all layers in a network have the same width (which we denote as $h$). Thus, we fix the dimensions of the weight matrices of the depth $2\ell-1$, $\R^d \rightarrow \R^d$ autoencoders (as defined in Chapter \ref{chapsum}) as : $W_1 \in \mathbb{R}^{h \times d}$, $W_i \in \mathbb{R}^{h \times h}, i = 2, \dots, \ell$.
This allowed us to study the effect of increasing depth $\ell$ or width $h$ without having to deal with added confounding factors. For all experiments, we use the standard ``Glorot initialization" for the weights \citep{glorot2010understanding}, where each element in the weight matrix is initialized by sampling from a uniform distribution with $[-\text{limit}, \text{limit}]$, $\text{limit} = \sqrt{6/(\text{fan}_{\text{in}} + \text{fan}_{\text{out}})}$, where $\text{fan}_{\text{in}}$ denotes the number of input units in the weight matrix, and $\text{fan}_{\text{out}}$ denotes the number of output units in the weight matrix. All bias vectors were initialized to zero. No regularization was used. 
~\\ \\
We performed autoencoder experiments on the MNIST dataset for various network sizes (i.e., different values of $\ell$ and $h$). We implemented all experiments using TensorFlow \citep{abadi2016tensorflow} using an NVIDIA GeForce GTX 1080 Ti graphics card. We compared the performance of ADAM and RMSProp with Nesterov's Accelerated Gradient (NAG). All experiments were run for $10^5$ iterations. We tune over the hyper-parameters for each optimization algorithm using a grid search as described in Appendix \ref{supp_sec:exp_details}. 
~\\ \\
To pick the best set of hyper-parameters, we choose the ones corresponding to the lowest loss on the training set at the end of $10^5$ iterations. Further, to cut down on the computation time so that we can test a number of different neural net architectures, we crop the MNIST image from $28\times 28$ down to a $22\times 22$ image by removing 3 pixels from each side (almost all of which is whitespace).

\paragraph{Full-batch experiments}
 We are interested in first comparing these algorithms in the full-batch setting. To do this in a computationally feasible way, we consider a subset of the MNIST dataset (we call this: mini-MNIST), which we build by extracting the first 5500 images in the training set and first 1000 images in the test set in MNIST. Thus, the training and testing datasets in mini-MNIST is 10\% of the size of the  MNIST dataset. Thus the training set in mini-MNIST contains 5500 images, while the test set contains 1000 images. This subset of the dataset is a fairly reasonable approximation of the full MNIST dataset (i.e., contains roughly the same distribution of labels as in the full MNIST dataset), and thus a legitimate dataset to optimize on.

\paragraph{Mini-batch experiments}
To test if our conclusions on the full-batch case extend to the mini-batch case, we then perform the same experiments in a mini-batch setup where we fix the mini-batch size at 100. For the mini-batch experiment, we consider the full training set of MNIST, instead of the mini-MNIST dataset considered for the full-batch experiments and we also test on CIFAR-10 using VGG-9, a convolutional neural network.

\section {Experimental Results}\label{experiments}

\subsection{RMSProp and ADAM are sensitive to choice of $\xi$}
\label{sec:main_xi}

The $\xi$ parameter is a feature of the default implementations of RMSProp and ADAM such  as in TensorFlow. Most interestingly this strictly positive parameter is crucial for our proofs. In this section we present experimental evidence that attempts to clarify that this isn't merely a theoretical artefact but its value indeed has visible effect on the behaviours of these algorithms. 
We see in Figure \ref{xi_best} that on increasing the value of this fixed shift parameter $\xi$, ADAM in particular, is strongly helped towards getting lower gradient norms and lower test losses though it can hurt its ability to get lower training losses. The plots are shown for optimally tuned values for the other hyper-parameters. 

\begin{figure*}[h]
\centering
\begin{subfigure}[t]{0.32\textwidth}
\includegraphics[width=\textwidth]{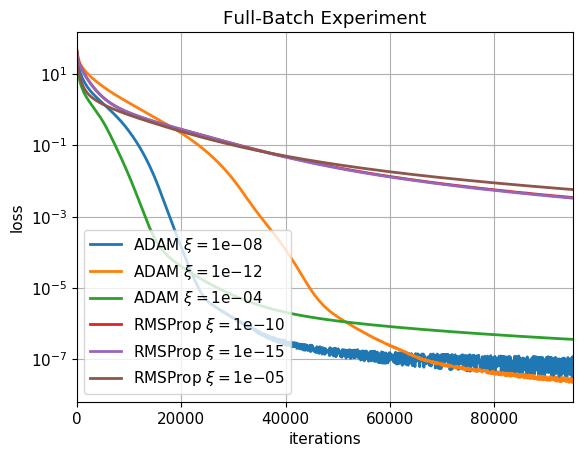}
\end{subfigure}
\begin{subfigure}[t]{0.32\textwidth}
\includegraphics[width=\textwidth]{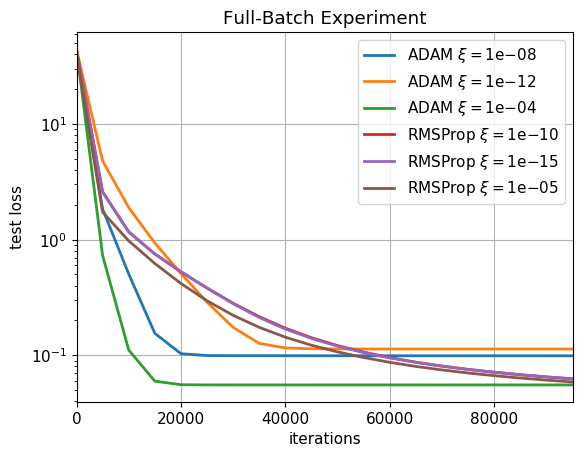}
\end{subfigure}
\begin{subfigure}[t]{0.32\textwidth}
\includegraphics[width=\textwidth]{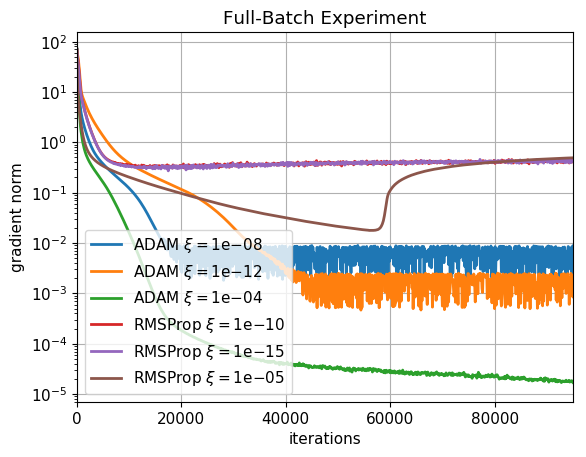}
\end{subfigure}
\caption{Optimally tuned parameters for different $\xi$ values. 1 hidden layer network of 1000 nodes; \emph{Left}: Loss on training set; \emph{Middle}: Loss on test set; \emph{Right}: Gradient norm on training set}
\label{xi_best}
\end{figure*}

\subsection{Tracking $\lambda_{min}(\text{Hessian})$ of the loss function}

To check whether NAG, ADAM or RMSProp is capable of consistently moving from a ``bad" saddle point to a ``good" saddle point region, we track the most negative eigenvalue of the Hessian $\lambda_{\min}(\text{Hessian})$.
Even for a very small neural network with around $10^5$ parameters, it is still intractable to store the full Hessian matrix in memory to compute the eigenvalues. Instead, we use the Scipy library function \texttt{scipy.sparse.linalg.eigsh} that can use a function that computes the matrix-vector products to compute the eigenvalues of the matrix \citep{lehoucq1998arpack}. Thus, for finding the eigenvalues of the Hessian, it is sufficient to be able to do Hessian-vector products. 
This can be done exactly in a fairly efficient way \citep{hvp}.

We display a representative plot in Figure \ref{min_eig} which shows that NAG in particular has a distinct ability to gradually, but consistently, keep increasing the minimum eigenvalue of the Hessian while continuing to decrease the gradient norm.  However unlike as in deeper autoencoders in this case the gradient norms are consistently bigger for NAG, compared to RMSProp and ADAM. In contrast, RSMProp and ADAM quickly get to a high value of the minimum eigenvalue and a small gradient norm, but somewhat stagnate there. In short, the trend looks better for NAG, but in actual numbers RMSProp and ADAM do better.

\begin{figure*}[h!]
\centering
\begin{subfigure}[t]{0.32\textwidth}
\includegraphics[width=\textwidth]{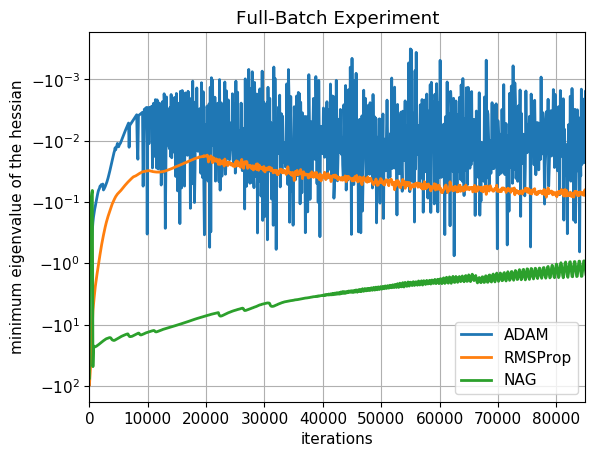}
\end{subfigure}
\begin{subfigure}[t]{0.32\textwidth}
\includegraphics[width=\textwidth]{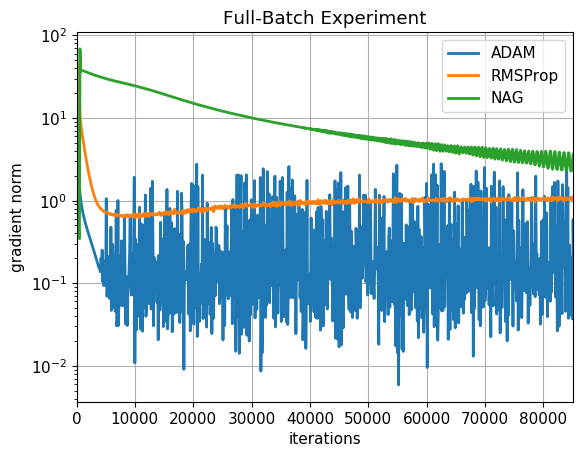}
\end{subfigure}
\caption{Tracking the smallest eigenvalue of the Hessian on a 1 hidden layer network of size 300. \emph{Left}: Minimum Hessian eigenvalue. \emph{Right}: Gradient norm on training set.}
\label{min_eig}
\end{figure*}

\subsection{Comparing performance in the full-batch setting}\label{main:fullbatch}

In Figure \ref{full_batch_3_1000}, we show how the training loss, test loss and gradient norms vary through the iterations for RMSProp, ADAM 
(at $\beta_1 = 0.9$ and $0.99$)
and NAG (at $\mu = 0.9$ and $0.99$)
on a $3$ hidden layer autoencoder with $1000$ nodes in each hidden layer trained on mini-MNIST. Appendix \ref{sec:supp_fullbatch} and \ref{sec:supp_varydim} have more such comparisons for various neural net architectures with varying depth and width and input image sizes, where the following qualitative results also extend.


\begin{figure*}[tb]
\centering
\begin{subfigure}[t]{0.32\textwidth}
\includegraphics[width=\textwidth]{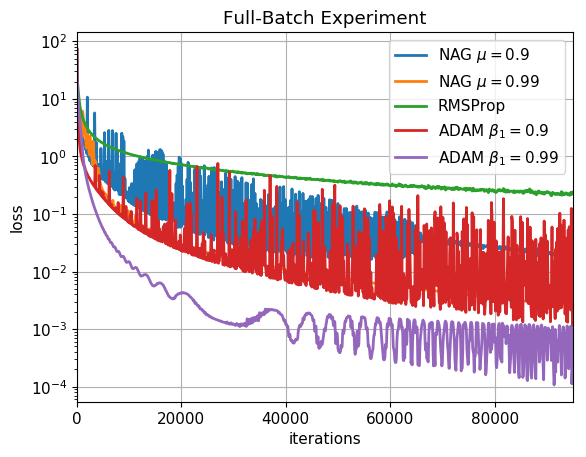}
\end{subfigure}
\begin{subfigure}[t]{0.32\textwidth}
\includegraphics[width=\textwidth]{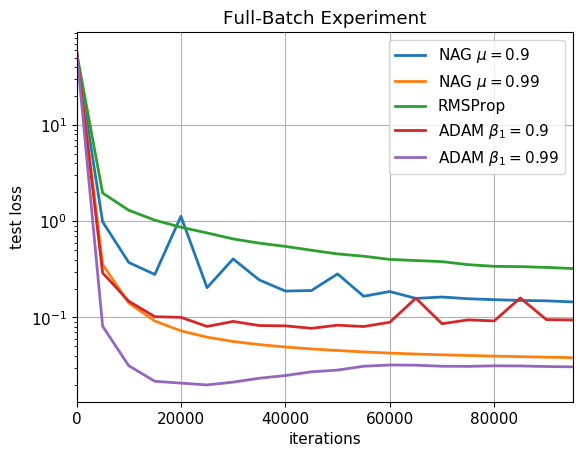}
\end{subfigure}
\begin{subfigure}[t]{0.32\textwidth}
\includegraphics[width=\textwidth]{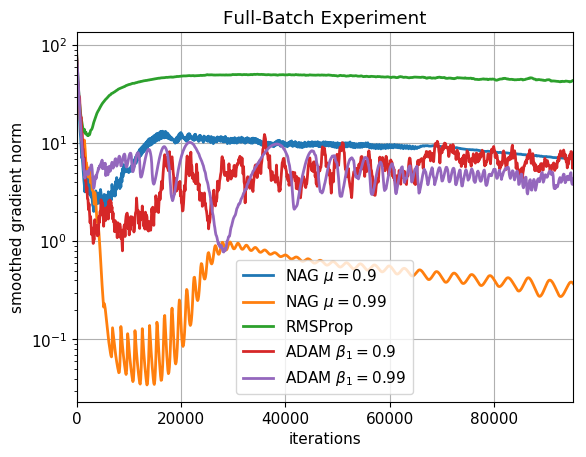}
\end{subfigure}
\caption{Full-batch experiments on a 3 hidden layer network with 1000 nodes in each layer; \emph{Left}: Loss on training set; \emph{Middle}: Loss on test set; \emph{Right}: Gradient norm on training set}
\label{full_batch_3_1000}
\end{figure*}

\paragraph{Conclusions from the full-batch experiments of training autoencoders on mini-MNIST}
\begin{itemize}
\item Pushing $\beta_1$ closer to $1$ significantly helps ADAM in getting lower training and test losses and at these values of $\beta_1$, it has better performance on these metrics than all the other algorithms.
One sees cases like the one displayed in Figure~\ref{full_batch_3_1000} where ADAM at $\beta_1 =0.9$ was getting comparable or slightly worse test and training errors than NAG. But once $\beta_1$ gets closer to $1$, ADAM's performance sharply improves and gets better than other algorithms.

\item Increasing momentum helps NAG get lower gradient norms though on larger nets it might hurt its training or test performance. NAG does seem to get the lowest gradient norms compared to the other algorithms, except for single hidden layer networks like in Figure~\ref{min_eig}.
\end{itemize}

\begin{figure*}[tb!]
\centering
\begin{subfigure}[t]{0.32\textwidth}
\includegraphics[width=\textwidth]{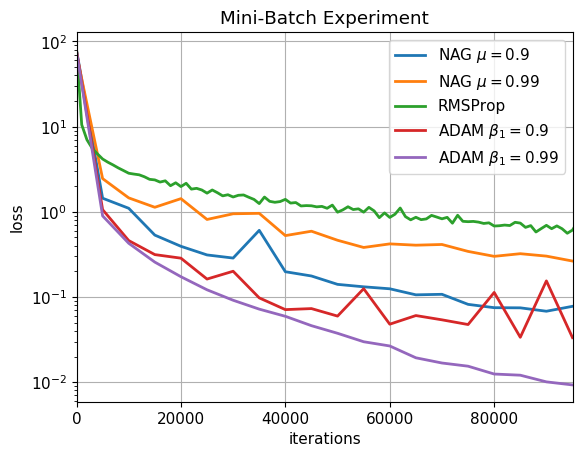}
\end{subfigure}
\begin{subfigure}[t]{0.32\textwidth}
\includegraphics[width=\textwidth]{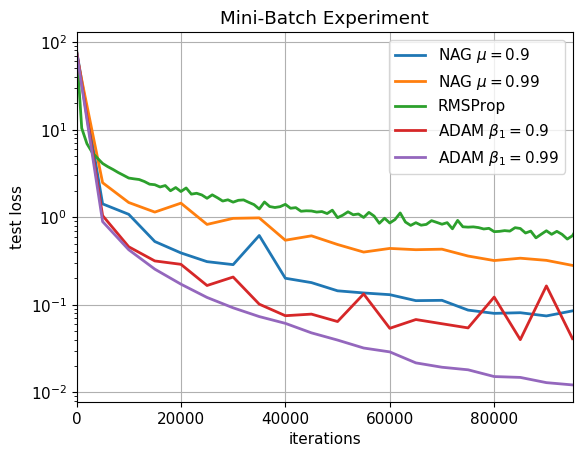}
\end{subfigure}
\begin{subfigure}[t]{0.32\textwidth}
\includegraphics[width=\textwidth]{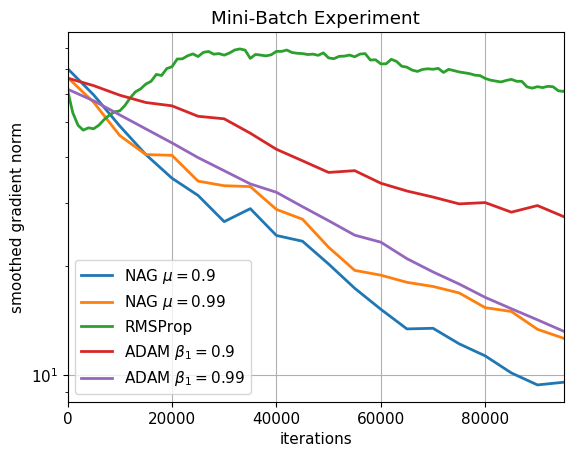}
\end{subfigure}
\caption{Mini-batch experiments on a network with 5 hidden layers of 1000 nodes each; \emph{Left}: Loss on training set; \emph{Middle}: Loss on test set; \emph{Right}: Gradient norm on training set}
\label{mini_batch_5_1000}
\end{figure*}

\subsection{Corroborating the full-batch behaviors in the mini-batch setting}

In Figure \ref{mini_batch_5_1000}, we show how training loss, test loss and gradient norms vary when using mini-batches of size 100, on a $5$ hidden layer autoencoder with $1000$ nodes in each hidden layer trained on the full MNIST dataset. The same phenomenon as here has been demonstrated in more such mini-batch comparisons on autoencoder architectures with varying depths and widths in Appendix \ref{sec:supp_minibatch} and on VGG-9 with CIFAR-10 in the next subsection \ref{vgg9_sec}. 


\paragraph{Conclusions from the mini-batch experiments of training autoencoders on the full MNIST dataset:}
\begin{itemize}
\item Mini-batching does seem to help NAG do better than ADAM on small nets. However, for larger nets, the full-batch behavior continues, i.e., when ADAM's momentum parameter $\beta_1$ is pushed closer to $1$, it gets better generalization (significantly lower test losses) than NAG at any momentum tested.
\item In general, for all metrics (test loss, training loss and gradient norm reduction) both ADAM as well as NAG seem to improve in performance when their momentum parameter ($\mu$ for NAG and $\beta_1$ for ADAM) is pushed closer to $1$. This effect, which was present in the full-batch setting, seems to get more pronounced here.
\item As in the full-batch experiments, NAG continues to have the best ability to reduce gradient norms while for larger enough nets, ADAM at large momentum continues to have the best training error.
\end{itemize}

\subsection{Image Classification on Convolutional Neural Nets}
\label{vgg9_sec}

\begin{figure}[h!]
\centering
\begin{subfigure}[t]{0.4\textwidth}
\includegraphics[width=\textwidth]{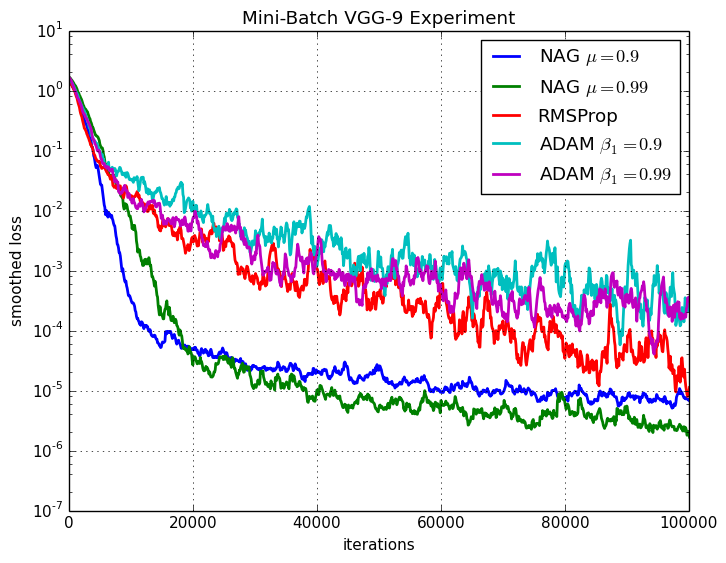}
\caption{Training loss}
\end{subfigure}
\begin{subfigure}[t]{0.4\textwidth}
\includegraphics[width=\textwidth]{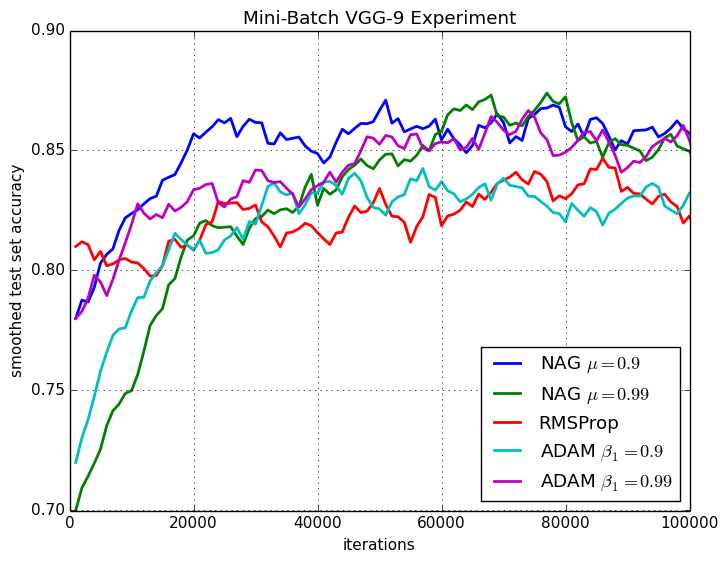}
\caption{Test set accuracy}
\end{subfigure}
\caption{Mini-batch image classification experiments with CIFAR-10 using VGG-9}
\label{vgg9}
\end{figure}

To test whether these results might qualitatively hold for other datasets and models, we train an image classifier on CIFAR-10 (containing 10 classes) using VGG-like convolutional neural networks \\  \citep{simonyan2014very}. In particular, we train VGG-9 on CIFAR-10, which contains 7 convolutional layers and 2 fully connected layers, a total of 9 layers. The convolutional layers contain 64, 64, 128, 128, 256, 256, 256 filters each of size $3 \times 3$, respectively. We use batch normalization \citep{ioffe2015batch} and ReLU activations after each convolutional layer, and the first fully connected layer. Table \ref{tab:vgg9} contains more details of the VGG-9 architecture. We use minibatches of size 100, and weight decay of $10^{-5}$. We use fixed step sizes, and all hyperparameters were tuned as indicated in Section \ref{supp_sec:exp_details}.
~\\ \\
We present results in Figure \ref{vgg9}. As before, we see that this task is another example where tuning the momentum parameter ($\beta_1$) of ADAM helps. While attaining approximately the same loss value, ADAM with $\beta_1 = 0.99$ generalizes as good as NAG and better than when $\beta_1 = 0.9$. Thus tuning $\beta_1$ of ADAM helped in closing the generalization gap with NAG.

\begin{table}[h]
\centering
\small
\caption{VGG-9 on CIFAR-10.}
\label{tab:vgg9}
\begin{tabular}{l|ccc}
\hline
layer type  & kernel size & input size              & output size \\ \hline
Conv\_1     & $3 \times 3$    & $~~3~~ \times 32  \times 32$  & $~64~  \times 32 \times 32$       \\
Conv\_2     & $3 \times 3$    & $~64~ \times 32  \times 32$  & $~64~  \times 32 \times 32$       \\
Max Pooling & $2 \times 2$    & $~64~ \times 32  \times 32$  & $~64~  \times 16 \times 16$       \\\hline
Conv\_3     & $3 \times 3$    & $~64~ \times 16  \times 16$  & $128 \times 16 \times 16$      \\
Conv\_4     & $3 \times 3$    & $128 \times 16  \times 16$  & $128 \times 16 \times 16$      \\
Max Pooling & $2 \times 2$    & $128 \times 16  \times 16$  & $128 \times ~8~ \times ~8~$      \\\hline
Conv\_5     & $3 \times 3$    & $128 \times ~8~   \times ~8~$   & $256 \times ~8~ \times ~8~$      \\
Conv\_6     & $3 \times 3$    & $256 \times ~8~   \times ~8~$   & $256 \times ~8~ \times ~8~$      \\
Conv\_7     & $3 \times 3$    & $256 \times ~8~   \times ~8~$   & $256 \times ~8~ \times ~8~$      \\
Max Pooling & $2 \times 2$    & $256 \times ~8~   \times ~8~$   & $256 \times ~4~ \times ~4~$ \\\hline
Linear      & $1 \times 1$    & $~1 \times 4096~~$           & $~1 \times 256~~~$      \\
Linear      & $1 \times 1$    & $~1 \times ~256~~~$            & $~1 \times ~10~~~~$       \\
\hline
\end{tabular}
\end{table}

\newpage 
\section{Proofs of convergence of (stochastic) RMSProp and ADAM}


\subsection{Fast convergence of stochastic RMSProp with ``Over Parameterization" (Proof of Theorem \ref{fastRMSProp})}
\label{proof1}   

\begin{proof} 
By $L-$smoothness of the objective we have the following relationship between the values at consecutive updates, 

\begin{align}
f(\x_{t+1}) &\leq f(\x_t) + \langle \nabla f(\x_t), \x_{t+1} - \x_t \rangle + \frac {L}{2} \norm{\x_{t+1} - \x_t}^2\\ 
&\leq f(\x_t) + \sum_{i=1}^d \nabla_i f(\x_t) (\x_{t+1} - \x_t)_i  + \frac {L}{2} \sum_{i=1}^d (\x_{t+1} - \x_t)_i^2
\end{align}

In the last step above we substitute the update rule for the $i^{th}-$coordinate of $\x_t$ as, $\x_{t+1,i} = \x_{t,i} - \frac{\alpha_t \g_{t,i}}{\sqrt{\ve{v}_{t,i}}}$ to get, 

\begin{align*}
f(\x_{t+1}) &\leq f(\x_t) - \alpha_t \sum_{i=1}^d \nabla_i f(\x_t) \frac{\g_{t,i}}{\sqrt{\ve{v}_{t,i}}}   + \frac {L \alpha_t^2}{2} \sum_{i=1}^d \frac {\g_{t,i}^2 }{(\sqrt{\ve{v}_{t,i}})^2 }\\
&\leq f(\x_t) - \alpha_t \sum_{i=1}^d \nabla_i f(\x_t) \left ( \frac{\g_{t,i}}{\sqrt{\ve{v}_{t,i}}} - \frac{\g_{t,i}}{\sqrt{\beta_2 \ve{v}_{t-1,i}}}   + \frac{\g_{t,i}}{\sqrt{\beta_2 \ve{v}_{t-1,i}}} \right)   
\\&+ \frac {L \alpha_t^2}{2} \sum_{i=1}^d 
\frac{\g_{t,i}^2}{\ve{v}_{t,i}}
\end{align*}

Now recall that $\E [ \g_t \mid \{\x_i\}_{i=1,\ldots,t} ] =  \nabla f (\x_t)$. We substitute this in the above to get, 


\begin{align}\label{ELsmooth} 
&\nonumber \E [ f(\x_{t+1}) \mid  \{\x_i\}_{i=1,\ldots,t} ] \\&\leq f(\x_t) - \alpha_t \sum_{i=1}^d \nabla_i f(\x_t) \Big ( \frac{\nabla_i f (\x_t) }{\sqrt{\beta_2 \ve{v}_{t-1,i}}}  
+   \E \left [  \frac{\g_{t,i}}{\sqrt{\ve{v}_{t,i}} } - \frac{\g_{t,i}}{\sqrt{\beta_2  \ve{v}_{t-1,i}}} \mid \{\x_i\}_{i=1,\ldots,t} \right ]  \Big ) \\
\nonumber &+ \frac {L \alpha_t^2}{2} \sum_{i=1}^d \E \left [ \frac {\g_{t,i}^2 }{\ve{v}_{t,i}  } \mid \{\x_i\}_{i=1,\ldots,t} \right ]\\
\nonumber &\leq f(\x_t) - \alpha_t \sum_{i=1}^d  \frac{ (\nabla_i f (\x_t))^2 }{\sqrt{\beta_2 \ve{v}_{t-1,i}}}\\
\nonumber &+   \alpha_t \sum_{i=1}^d \vert \nabla_i f(\x_t)\vert  \left  \vert \E \left[  \frac{\g_{t,i}}{\sqrt{\ve{v}_{t,i}}} - \frac{\g_{t,i}}{\sqrt{\beta_2 \ve{v}_{t-1,i}}} \mid \{\x_i\}_{i=1,\ldots,t} \right] \right \vert    \\ 
&+ \frac {L \alpha_t^2}{2} \sum_{i=1}^d \E \left [ \frac {\g_{t,i}^2 }{\ve{v}_{t,i}} \mid \{\x_i\}_{i=1,\ldots,t} \right]
\end{align}

Now observe that, 

\begin{align*}\label{T1} 
\frac{\g_{t,i}}{\sqrt{\ve{v}_{t,i}}} - \frac{\g_{t,i}}{\sqrt{\beta_2 \ve{v}_{t-1,i}}} &\leq \vert \g_{t,i} \vert \left \vert \frac{1}{\sqrt{\ve{v}_{t,i}}} - \frac{1}{\sqrt{\beta_2 \ve{v}_{t-1,i}}} \right \vert\\
&\leq \left \vert \frac{\g_{t,i}}{ \sqrt{\ve{v}_{t,i}} \sqrt{\beta_2 \ve{v}_{t-1,i}}} \right \vert \vert  \sqrt{\beta_2 \ve{v}_{t-1,i}}  - \sqrt{\ve{v}_{t,i}} \vert\\
&\leq \left \vert \frac{\g_{t,i}}{ \sqrt{\ve{v}_{t,i}} \sqrt{\beta_2 \ve{v}_{t-1,i}}} \right \vert \left \vert \frac{\beta_2 \ve{v}_{t-1,i}  - \ve{v}_{t,i}}{\sqrt{\beta_2 \ve{v}_{t-1,i}}  + \sqrt{\ve{v}_{t,i}}} \right \vert
\end{align*}

From the algorithm we have,  $\ve{v}_{t,i} = \beta_2 \ve{v}_{t-1,i} + (1-\beta_2) (\g_{t,i}^2 + \xi)$. We substitute this into the numerator and the denominator of the second factor of the RHS above to get, 

\begin{align*}
    &\frac{\g_{t,i}}{\sqrt{\ve{v}_{t,i}}} - \frac{\g_{t,i}}{\sqrt{\beta_2 \ve{v}_{t-1,i}}} \\&\leq  \left \vert \frac{\g_{t,i}}{ \sqrt{\ve{v}_{t,i}} \sqrt{\beta_2 \ve{v}_{t-1,i}}} \right \vert \left \vert \frac{ (1-\beta_2) (\g_{t,i}^2 + \xi)}{\sqrt{\beta_2 \ve{v}_{t-1,i}}  + \sqrt{\beta_2 \ve{v}_{t-1,i} + (1-\beta_2) (\g_{t,i}^2 + \xi)}}  \right \vert\\
    &\leq \left \vert \frac{\g_{t,i}}{ \sqrt{\ve{v}_{t,i}} \sqrt{\beta_2 \ve{v}_{t-1,i}}} \right \vert \left \vert \frac{ (1-\beta_2) (\g_{t,i}^2 + \xi)}{\sqrt{  (1-\beta_2) (\g_{t,i}^2 + \xi)}}  \right \vert = \sqrt{1-\beta_2} \left \vert \frac{\g_{t,i}\sqrt{\g_{t,i}^2 + \xi}    }{ \sqrt{\ve{v}_{t,i}} \sqrt{\beta_2 \ve{v}_{t-1,i}}} \right \vert
\end{align*} 

Now we substitute the above into equation \ref{ELsmooth} to get, 
\begin{align}
&\nonumber \E [ f(\x_{t+1}) \mid  \{\x_i\}_{i=1,\ldots,t} ] \\&\leq f(\x_t) - \alpha_t \sum_{i=1}^d  \frac{ (\nabla_i f (\x_t))^2 }{\sqrt{\beta_2 \ve{v}_{t-1,i}}} \\
\nonumber &+ \alpha_t \sqrt{1-\beta_2} \sum_{i=1}^d \vert \nabla_i f(\x_t)\vert   \E \left [  \left \vert \frac{\g_{t,i}\sqrt{\g_{t,i}^2 + \xi}    }{ \sqrt{\ve{v}_{t,i}} \sqrt{\beta_2 \ve{v}_{t-1,i}}} \right \vert  \mid \{\x_i\}_{i=1,\ldots,t} \right ]    \\ 
&+ \frac {L \alpha_t^2}{2} \sum_{i=1}^d \E \left [ \frac {\g_{t,i}^2 }{\ve{v}_{t,i}} \mid \{\x_i\}_{i=1,\ldots,t} \right ]
\end{align}

Now by definition we have, $\ve{v}_{t,i} \geq \beta_2 \ve{v}_{t-1,i}$ and the definition of $\sigma$ we infer from the above, 

\begin{align}
\nonumber \E [ f(\x_{t+1}) \mid  \{\x_i\}_{i=1,\ldots,t} ] &\leq f(\x_t) - \alpha_t \sum_{i=1}^d  \frac{ (\nabla_i f (\x_t))^2 }{\sqrt{\beta_2 \ve{v}_{t-1,i}}}  \\&+   \sigma \alpha_t \sqrt{1-\beta_2} \sum_{i=1}^d   \E \left [  \left \vert \frac{\g_{t,i}\sqrt{\g_{t,i}^2 + \xi}    }{ \beta_2 \ve{v}_{t-1,i}} \right \vert  \mid \{\x_i\}_{i=1,\ldots,t} \right ]    \\ 
&+ \frac {L \alpha_t^2}{2} \sum_{i=1}^d \E \left [ \frac {\g_{t,i}^2 }{\beta_2 \ve{v}_{t-1,i}} \mid \{\x_i\}_{i=1,\ldots,t} \right ]
\end{align}

We have, ${\ve v}_t = (1-\beta_2)\sum_{k=1}^t \beta_2^{t-k}(\g_k^2 + \xi)$ This implies,
${\ve v}_{t,i} \geq (1-\beta_2^t)\xi \geq (1-\beta_2)\xi$. The last inequality follows because we have, $\beta_2 \in (0,1)$ and $t \geq 1$
Substituting this in the above (along with the fact that $\ve{v}_{t,i} >0$ ) we get, 

\begin{align}\label{assum}
&\nonumber  \E [ f(\x_{t+1}) \mid  \{\x_j\}_{j=2,\ldots,t} ] \\
\nonumber &\leq f(\x_t) + \sum_{i=1}^d \Big(  - \alpha_t  \frac{ (\nabla_i f (\x_t))^2 }{\sqrt{\beta_2 \ve{v}_{t-1,i}}} +   \sigma \alpha_t \sqrt{1-\beta_2} \frac{ \E \left [ \sqrt{\g_{t,i}^4 + \xi\vert \g_{t,i}\vert^2 }    \mid \{\x_j\}_{j=2,\ldots,t} \right ]}{ \beta_2 \ve{v}_{t-1,i}} \\ 
\nonumber &+ \frac {L \alpha_t^2}{2}   \frac { \E \left [ \g_{t,i}^2 \mid \{\x_j\}_{j=2,\ldots,t} \right ]}{\beta_2 \ve{v}_{t-1,i}}  \Big )\\
\nonumber &\leq f(\x_t)   - \alpha_t  \frac{ \norm{\nabla f (\x_t)}^2 }{\sqrt{\beta_2 \sigma}}\\ 
\nonumber &+   \sum_{i=1}^d \Big ( \sigma \alpha_t \sqrt{1-\beta_2} \frac{ \E \left [ \sqrt{\g_{t,i}^4 + \xi\vert \g_{t,i}\vert^2 }    \mid \{\x_j\}_{j=2,\ldots,t} \right ]}{ \xi \beta_2(1-\beta_2)} + \frac {L \alpha_t^2}{2}   \frac { \E \left [ \g_{t,i}^2 \mid \{\x_j\}_{j=2,\ldots,t} \right ]}{\xi \beta_2(1-\beta_2)}  \Big )\\
\nonumber &\leq f(\x_t)   - \alpha_t  \frac{ \norm{\nabla f (\x_t)}^2 }{\sqrt{\beta_2 \sigma}}\\ 
&+   \sum_{i=1}^d \Big ( \sigma \alpha_t \sqrt{1-\beta_2}  + \frac {L \alpha_t^2}{2}    \Big )\frac{ \E \left [ \sqrt{\g_{t,i}^4 + \xi\vert \g_{t,i}\vert^2 }    \mid \{\x_j\}_{j=2,\ldots,t} \right ]}{ \xi \beta_2(1-\beta_2)}
\end{align}

Now we note that, 

\begin{align*}
&    \E \left [ \sum_{i=1}^d \sqrt{\g_{t,i}^4 + \xi\vert \g_{t,i}\vert^2 }    \mid \{\x_j\}_{j=2,\ldots,t} \right ] \\&\leq \E \left [ \sum_{i=1}^d ( \g_{t,i}^2 + \sqrt{\xi} \vert  \g_{t,i}\vert ) \mid \{\x_j\}_{j=2,\ldots,t} \right ] \leq \E \left [ \norm{\g_{t}}^2 + \sqrt{d\xi} \norm{\g_{t,i}} ) \mid \{\x_j\}_{j=2,\ldots,t} \right ]\\
    &\leq \E \left [ \Big ( \norm{\g_{t}} + \frac{\sqrt{d\xi}}{2} \Big )^2 - \frac{d\xi }{4} \mid \{\x_j\}_{j=2,\ldots,t} \right ]
\end{align*}

Now we invoke the the property of the oracle given in definition \ref{orcover} to say that, 

\begin{align*}
    &\E \left [ \sum_{i=1}^d \sqrt{\g_{t,i}^4 + \xi\vert \g_{t,i}\vert^2 }    \mid \{\x_j\}_{j=2,\ldots,t} \right ] \\&\leq \E \left [ \Big ( \sqrt{c} \norm{\nabla f(\x_t)} - \frac{\sqrt{d\xi}}{2} \Big )^2 - \frac{d\xi }{4} \mid \{\x_j\}_{j=2,\ldots,t} \right ]\\
    &\leq  c \norm{\nabla f(\x_t)}^2 
\end{align*}     

Thus substituting the above back into equation \ref{assum} we get, 

\begin{align}
\nonumber \E [ f(\x_{t+1}) \mid  \{\x_j\}_{j=2,\ldots,t} ]  &\leq f(\x_t)   
\\&+   \sum_{i=1}^d \Big \{ -\frac{\alpha_t}{\sqrt{\beta_2 \sigma}} + c\frac{\Big ( \sigma \alpha_t \sqrt{1-\beta_2}  + \frac {L \alpha_t^2}{2}    \Big )}{\xi \beta_2(1-\beta_2)}   \Big \}  \norm{\nabla_i f (\x_t)}^2
\end{align}

Further we make the optimal choice of $\alpha_t = \frac{\xi \beta_2 (1-\beta_2)}{cL} \Big ( \frac{1}{\sqrt{\beta_2 \sigma}} - \frac{c\sigma }{\xi \beta_2 \sqrt{1-\beta_2}} \Big )$ (which is positive by assumptions) and we get, 

\[ \E [ f(\x_{t+1}) \mid  \{\x_j\}_{j=2,\ldots,t} ]  \leq f(\x_t)   
- \frac{1}{2cL}   \Big ( \frac{c\sigma}{\sqrt{\beta_2 \xi}} - \sqrt{\frac{\xi (1-\beta_2)}{\sigma}} \Big )^2 \norm{\nabla f (\x_t)}^2 \] 

Taking expectation and rearranging we get, 


\[ \E [ \min_{t=1,\ldots,T} \norm{\nabla f(\x_t)}^2 ] \leq \frac{1}{T} \sum_{t=1}^ T \E [ \norm{\nabla f (\x_t)}^2 ] \leq \frac{ f(\x_1)  - f_*}{ \frac{T}{2cL}   \Big ( \frac{c\sigma}{\sqrt{\beta_2 \xi}} - \sqrt{\frac{\xi (1-\beta_2)}{\sigma}} \Big )^2}  \] 


From here the result follows.  

\end{proof} 

\subsection {Proving ADAM (Proof of Theorem~\ref{thm:ADAM-proof})}
\label{sec:supp_adam}

\begin{proof}
Let us assume to the contrary that  $\norm{g_t} > \epsilon$ for all $t=1,2,3.\ldots$. We will show that this assumption will lead to a contradiction. 
By $L-$smoothness of the objective we have the following relationship between the values at consecutive updates, 

\[ f(\x_{t+1}) \leq f(\x_t) + \langle \nabla f(\x_t), \x_{t+1} - \x_t \rangle + \frac {L}{2} \norm{\x_{t+1} - \x_t}^2 \]

~\\
Substituting the update rule using a dummy step length $\eta_t >0$ we have, 

\begin{align}\label{decrease1} 
\nonumber f(\x_{t+1}) &\leq f(\x_t) - \eta_t \langle \nabla f(\x_t), \Big( V_t^{\frac 1 2} + \text{diag} (\xi\mathbf{1}_d) \Big)^{-1} \m_t \rangle \\&+ \frac {L \eta_t ^2}{2} \norm{ \Big( V_t^{\frac 1 2} + \text{diag} (\xi\mathbf{1}_d) \Big)^{-1} \m_t}^2\\
\implies &f(\x_{t+1}) -  f(\x_t) \\&\leq \eta_t \left ( -  \langle \g_t , \Big( V_t^{\frac 1 2} + \text{diag} (\xi\mathbf{1}_d) \Big)^{-1} \m_t \rangle + \frac {L \eta_t }{2} \norm{ \Big( V_t^{\frac 1 2} + \text{diag} (\xi\mathbf{1}_d) \Big)^{-1} \m_t}^2 \right )
\end{align}

The RHS in equation \ref{decrease1} above is a quadratic in $\eta_t$ with two roots: $0$ and $\frac{\langle \g_t , \Big( V_t^{\frac 1 2} + \text{diag} (\xi\mathbf{1}_d) \Big )^{-1} \m_t \rangle}{\frac {L}{2} \norm{ \Big( V_t^{\frac 1 2} + \text{diag} (\xi\mathbf{1}_d) \Big )^{-1} \m_t}^2}$. So the quadratic's minimum value is at the midpoint of this interval, which gives us a candidate $t^{th}-$step length i.e $$\alpha_t^* := \frac{1}{2}\cdot\frac{\langle \g_t , \Big( V_t^{\frac 1 2} + \text{diag} (\xi\mathbf{1}_d) \Big )^{-1} \m_t \rangle}{\frac {L}{2} \norm{ \Big( V_t^{\frac 1 2} + \text{diag} (\xi\mathbf{1}_d) \Big )^{-1} \m_t}^2}$$ and the value of the quadratic at this point is $-\frac{1}{4}\cdot\frac{(\langle \g_t , \Big( V_t^{\frac 1 2} + \text{diag} (\xi\mathbf{1}_d) \Big )^{-1} \m_t \rangle)^2}{\frac {L}{2} \norm{ \Big( V_t^{\frac 1 2} + \text{diag} (\xi\mathbf{1}_d) \Big )^{-1} \m_t}^2}.$
That is with step lengths being this $\alpha_t^*$ we have the following guarantee of decrease of function value between consecutive steps,

\begin{align}\label{decrease} 
f(\x_{t+1}) -  f(\x_t) \leq -\frac{1}{2L}\cdot\frac{(\langle \g_t , \Big( V_t^{\frac 1 2} + \text{diag} (\xi\mathbf{1}_d) \Big )^{-1} \m_t \rangle)^2}{ \norm{ \Big( V_t^{\frac 1 2} + \text{diag} (\xi\mathbf{1}_d) \Big )^{-1} \m_t}^2}
\end{align}
Now we separately lower bound the numerator and upper bound the denominator of the RHS above. 
~\\
\paragraph{{\bf Upperbound on} $\norm{ \Big( V_t^{\frac 1 2} + \text{diag} (\xi\mathbf{1}_d) \Big )^{-1} \m_t}$}
~\\ 
We have, $\lambda_{max} \Big ( \Big( V_t^{\frac 1 2} + \text{diag} (\xi\mathbf{1}_d) \Big )^{-1} \Big ) \leq \frac{1}{\xi + \min_{i=1..d} \sqrt{(\v_t)_i}}$ Further we note that the recursion of $\v_t$ can be solved as, $\v_t = (1-\beta_2) \sum_{k=1}^t \beta_2 ^{t-k} \g_k^2$. Now we define, $\epsilon_t := \min_{k=1,..,t,i=1,..,d} (\g_k^2)_i$ and this gives us,

\begin{align}\label{max1}
\lambda_{max} \Big ( \Big( V_t^{\frac 1 2} + \text{diag} (\xi\mathbf{1}_d) \Big )^{-1} \Big ) \leq \frac {1}{\xi + \sqrt{(1-\beta_2^t)\epsilon_t}}
\end{align}
~\\
We solve the recursion for $\m_t$ to get, $\m_t = (1-\beta_1)\sum_{k=1}^t \beta_1^{t-k}\g_k$. Then by triangle inequality and defining $\sigma_t := \max_{i=1,..,t} \norm{\nabla f (\x_i)}$ we have, $\norm{\m_t} \leq (1-\beta_1^t)\sigma_t $. Thus combining this estimate of $\norm{\m_t}$ with equation \ref{max1} we have,

\begin{align}\label{denom1} 
\norm{ \Big( V_t^{\frac 1 2} + \text{diag} (\xi\mathbf{1}_d) \Big )^{-1} \m_t} \leq \frac{(1-\beta_1^t)\sigma_t}{\xi +\sqrt{\epsilon_t (1-\beta_2^t)}} \leq \frac{(1-\beta_1^t)\sigma_t}{\xi}
\end{align} 

\paragraph{ {\bf Lowerbound on} $\langle \g_t , \Big( V_t^{\frac 1 2} + \text{diag} (\xi\mathbf{1}_d) \Big )^{-1} \m_t \rangle$}

~\\
To analyze this we define the following sequence of functions for each $i = 0,1,2..,t$

\[ Q_i = \langle \g_t , \Big( V_t^{\frac 1 2} + \text{diag} (\xi\mathbf{1}_d) \Big )^{-1} \m_i \rangle \]

~\\
This gives us the following on substituting the update rule for $\m_t$,

\begin{align*} 
Q_i - \beta_1 Q_{i-1} &= \langle \g_t , \Big( V_t^{\frac 1 2} + \text{diag} (\xi\mathbf{1}_d) \Big )^{-1} (\m_i - \beta_1 \m_{i-1})\rangle\\
&= (1-\beta_1) \langle \g_t , \Big( V_t^{\frac 1 2} + \text{diag} (\xi\mathbf{1}_d) \Big )^{-1} \g_i \rangle
\end{align*}

~\\
At $i=t$ we have, $Q_t - \beta_1 Q_{t-1} \geq (1-\beta_1)  \norm{\g_t}^2 \lambda_{min} \Big ( \Big( V_t^{\frac 1 2} + \text{diag} (\xi\mathbf{1}_d) \Big )^{-1} \Big )$

Lets define, $\sigma_{t-1} := \max_{i=1,..,t-1} \norm{\nabla f (\x_i)}$ and this gives us for $i \in \{1,..,t-1\}$, 

\begin{align*} 
Q_i - \beta_1 Q_{i-1} &\geq -(1-\beta_1) \norm{\g_t} \sigma_{t-1}  \lambda_{max} \Big ( \Big( V_t^{\frac 1 2} + \text{diag} (\xi\mathbf{1}_d) \Big )^{-1} \Big )
\end{align*}

~\\
We note the following identity, 
\begin{align*}
    Q_t -\beta_1 ^t Q_0 &= (Q_t - \beta_1 Q_{t-1}) + \beta_1 (Q_{t-1}-\beta_1Q_{t-2}) + \beta_1^2 (Q_{t-2}-\beta_1Q_{t-3}) + .. \\&+ \beta_1^{t-1}(Q_{1} - \beta_1 Q_0)
\end{align*}

Now we use the lowerbounds proven on $Q_i - \beta_1 Q_{i-1}$ for $i\in \{1,..,t-1\}$ and $Q_t - \beta_1 Q_{t-1}$ to lowerbound the above sum as,

\begin{align}\label{telescope}
\nonumber Q_t -\beta_1 ^t Q_0  &\geq (1-\beta_1) \norm{\g_t}^2 \lambda_{min} \Big ( \Big( V_t^{\frac 1 2} + \text{diag} (\xi\mathbf{1}_d) \Big )^{-1} \Big )\\
\nonumber &- (1-\beta_1) \norm{\g_t}\sigma_{t-1} \lambda_{max} \Big ( \Big( V_t^{\frac 1 2} + \text{diag} (\xi\mathbf{1}_d) \Big )^{-1} \Big )\sum_{j=1}^{t-1} \beta_1^j\\
\nonumber &\geq (1-\beta_1) \norm{\g_t}^2 \lambda_{min} \Big ( \Big( V_t^{\frac 1 2} + \text{diag} (\xi\mathbf{1}_d) \Big )^{-1} \Big )\\
&-(\beta_1 - \beta_1^t) \norm{\g_t} \sigma_{t-1} \lambda_{max} \Big ( \Big( V_t^{\frac 1 2} + \text{diag} (\xi\mathbf{1}_d) \Big )^{-1} \Big )
\end{align}

We can evaluate the following lowerbound, 
$$\lambda_{min} \Big ( \Big( V_t^{\frac 1 2} + \text{diag} (\xi\mathbf{1}_d) \Big )^{-1} \Big ) \geq \frac{1}{\xi + \sqrt{\max_{i=1,..,d} (\v_t)_i}}$$ Next we remember that the recursion of $\v_t$ can be solved as, $\v_t = (1-\beta_2) \sum_{k=1}^t \beta_2 ^{t-k} \g_k^2$ and we define, $\sigma_t := \max_{i=1,..,t} \norm{\nabla f (\x_i)}$ to get, 

\begin{align}\label{min}
\lambda_{min} \Big ( \Big( V_t^{\frac 1 2} + \text{diag} (\xi\mathbf{1}_d) \Big )^{-1} \Big ) \geq \frac {1}{\xi + \sqrt{(1-\beta_2^t)\sigma_t^2}}
\end{align}

~\\
Now we combine the above and equation \ref{max1} and the known value of $Q_0 = 0$ (from definition and initial conditions) to get from the equation \ref{telescope},

\begin{align}\label{qtb}
\nonumber Q_t  &\geq -(\beta_1 - \beta_1^t)\norm{\g_t}\sigma_{t-1}\frac {1}{\xi + \sqrt{(1-\beta_2^t)\epsilon_t}}\\
\nonumber &+(1-\beta_1)\norm{\g_t}^2\frac {1}{\xi + \sqrt{(1-\beta_2^t)\sigma_t^2}}\\
&\geq \norm{\g_t}^2 \left ( \frac {(1-\beta_1)}{\xi + \sigma \sqrt{(1-\beta_2^t)}} -  \frac {(\beta_1 - \beta_1^t) \sigma}{\xi \norm{\g_t}} \right )
\end{align}

In the above inequalities we have set $\epsilon_t =0$ and we have set, $\sigma_t = \sigma_{t-1} =\sigma$. Now we examine the following part of the lowerbound proven above,

\begin{align*}
&\frac {(1-\beta_1)}{\xi + \sqrt{(1-\beta_2^t)\sigma^2}} - \frac{(\beta_1 - \beta_1^t) \sigma}{\xi \norm{\g_t}} \\&= \frac {\xi \norm{\g_t} (1-\beta_1) - \sigma (\beta_1 - \beta_1^t)(\xi + \sigma \sqrt{(1-\beta_2^t)})}{\xi \norm{\g_t} ( \xi + \sigma\sqrt{(1-\beta_2^t)} ) }\\
&= \sigma(\beta_1 - \beta_1^t)\frac {\xi \left ( \frac{\norm{\g_t} (1-\beta_1)}{\sigma (\beta_1 - \beta_1^t)} - 1 \right ) - \sigma \sqrt{(1-\beta_2^t)}}{\xi \norm{\g_t} ( \xi + \sigma\sqrt{(1-\beta_2^t)} ) }\\
&=\sigma(\beta_1 - \beta_1^t)\left ( \frac{\norm{\g_t} (1-\beta_1)}{\sigma(\beta_1 - \beta_1^t)} - 1 \right ) \frac {\xi - \left ( \frac{\sigma\sqrt{(1-\beta_2^t)}}{-1+  \frac {(1-\beta_1)\norm{\g_t}}{(\beta_1 - \beta_1^t) \sigma}} \right ) }{\xi \norm{\g_t} ( \xi + \sigma\sqrt{(1-\beta_2^t)} )}
\end{align*}

Now we remember the assumption that we are working under i.e  $\norm{\g_t} > \epsilon$. Also by definition $0 < \beta_1 <1$ and hence we have $0 <\beta_1 - \beta_1^t < \beta_1$. This implies, $\frac {(1-\beta_1)\norm{\g_t}}{(\beta_1 - \beta_1^t) \sigma} > \frac{(1-\beta_1)\epsilon}{\beta_1 \sigma} > 1$ where the last inequality follows because of our choice of $\epsilon$ as stated in the theorem statement. This allows us to define a constant, $ \frac {\epsilon(1-\beta_1)}{\beta_1 \sigma} - 1 := \theta_1 > 0$ s.t  $\frac {(1-\beta_1)\norm{\g_t}}{(\beta_1 - \beta_1^t) \sigma} - 1 > \theta_1$ Similarly our definition of $\xi$ allows us to define a constant $\theta_2 > 0$ to get,

\[ \left ( \frac{\sigma\sqrt{(1-\beta_2^t)}}{-1+  \frac {(1-\beta_1)\norm{\g_t}}{(\beta_1 - \beta_1^t) \sigma}} \right ) < \frac{\sigma}{\theta_1} = \xi - \theta_2 \]

Putting the above back into the lowerbound for $Q_t$ in equation \ref{qtb} we have, 

\begin{align}\label{qtfinal}
Q_t \geq \norm{\g_t} ^2 \left (  \frac{\sigma (\beta_1 - \beta_1^2) \theta_1  \theta_2}{\xi \sigma (\xi + \sigma)} \right )
\end{align}

Now we substitute the above and equation \ref{denom1} into equation \ref{decrease} to get, 
\begin{align}\label{findecrease}
\nonumber f(\x_{t+1}) -  f(\x_t) &\leq -\frac{1}{2L}\cdot\frac{(\langle \g_t , \Big( V_t^{\frac 1 2} + \text{diag} (\xi\mathbf{1}_d) \Big )^{-1} \m_t \rangle)^2}{ \norm{ \Big( V_t^{\frac 1 2} + \text{diag} (\xi\mathbf{1}_d) \Big )^{-1} \m_t}^2}\\
\nonumber &\leq -\frac {1}{2L} \frac {Q_t^2}{\norm{ \Big( V_t^{\frac 1 2} + \text{diag} (\xi\mathbf{1}_d) \Big )^{-1} \m_t}^2}\\
\nonumber &\leq -\frac {1}{2L}\frac {\norm{\g_t} ^4 \left (  \frac{ (\beta_1 - \beta_1^2) \theta_1  \theta_2}{\xi (\xi + \sigma)} \right )^2}{\left ( \frac{(1-\beta_1^t)\sigma}{\xi} \right ) ^2 }\\
 &\leq -\frac {\norm{\g_t} ^4}{2L} \left (    \frac{ (\beta_1 - \beta_1^2)^2 \theta_1^2  \theta_2^2}{ (\xi + \sigma)^2 (1-\beta_1^t)^2\sigma^2} \right )
\end{align}

Thus we get from the above, 

\begin{align*}
 &\left (    \frac{ (\beta_1 - \beta_1^2)^2 \theta_1^2  \theta_2^2}{ 2L (\xi + \sigma)^2 (1-\beta_1^t)^2\sigma^2} \right ) \norm{\nabla f (\x_t)}^4 \leq [ f(\x_t) - f(\x_{t+1})] \\
 &\implies \sum_{t=2}^T \left (    \frac{ (\beta_1 - \beta_1^2)^2 \theta_1^2  \theta_2^2}{ 2L (\xi + \sigma)^2 \sigma^2} \right )  \norm{\nabla f (\x_t)}^4 \leq  [f(\x_2) - f(\x_{T+1}) ]\\
&\implies \min_{t=2,..,T} \norm{\nabla f (\x_t)}^4 \leq \frac{2L (\xi + \sigma)^2 \sigma^2}{T(\beta_1 - \beta_1^2)^2 \theta_1^2  \theta_2^2 } [f(\x_2) - f(\x_*)]\\
\end{align*}

Observe that if, $T \geq \frac{2L \sigma^2 (\xi + \sigma)^2}{2 \epsilon^4 (\beta_1 - \beta_1^2)^2 \theta_1^2  \theta_2^2 } [f(\x_2) - f(\x_*)]$ then the RHS of the inequality above is less than or equal to $\epsilon^4$ and this would contradict the assumption that $\norm{\nabla f (\x_t)} > \epsilon$ for all $t=1,2, \ldots$. 

As a consequence we have proven the first part of the theorem which guarantees the existence of positive step lengths, $\alpha_t$ s.t ADAM finds an approximately critical point in finite time. 

Now choose $\theta_1 = 1$ i.e $\frac{\epsilon}{2} = \frac{\beta_1 \sigma}{1-\beta_1}$ i.e $\beta_1 = \frac{\epsilon}{\epsilon + 2\sigma} \implies \beta_1(1-\beta_1) = \frac{\epsilon}{\epsilon + 2\sigma} ( 1 - \frac{\epsilon}{\epsilon + 2\sigma}) = \frac {2\sigma \epsilon}{(\epsilon + 2\sigma)^2}$. This also gives a easier-to-read condition on $\xi$ in terms of these parameters i.e $\xi > \sigma$. Now choose $\xi = 2\sigma$ i.e $\theta_2 = \sigma$ and making these substitutions gives us,

\begin{align*}
T \geq \frac{18L\sigma^4}{2\epsilon^4  \left ( \frac {2\sigma \epsilon}{(\epsilon + 2\sigma)} \right ) ^2 \sigma^2 } [f(\x_2) - f(\x_*)] &\geq \frac{18L}{8\epsilon^6  \left ( \frac {1}{\epsilon + 2\sigma} \right ) ^2  } [f(\x_2) - f(\x_*)] \\&\geq \frac{9 L \sigma ^2}{\epsilon^6   } [f(\x_2) - f(\x_*)] 
\end{align*}

We substitute these choices in the step length found earlier to get, 

\begin{align*}
\alpha_t^* &= \frac{1}{L}\cdot\frac{\langle \g_t , \Big( V_t^{\frac 1 2} + \text{diag} (\xi\mathbf{1}_d) \Big )^{-1} \m_t \rangle}{\norm{ \Big( V_t^{\frac 1 2} + \text{diag} (\xi\mathbf{1}_d) \Big )^{-1} \m_t}^2} = \frac{1}{L}\cdot\frac{Q_t}{\norm{ \Big( V_t^{\frac 1 2} + \text{diag} (\xi\mathbf{1}_d) \Big )^{-1} \m_t}^2}\\
&\geq \frac{1}{L} \frac{\norm{\g_t} ^2 \left (  \frac{\sigma^2 (\beta_1 - \beta_1^2) }{\xi \sigma (\xi + \sigma)} \right )}{ \left (  \frac{(1-\beta_1^t)\sigma}{\xi} \right )^2} = \frac{\norm{\g_t} ^2}{L(1-\beta_1^t)^2}\frac{4\epsilon}{3(\epsilon + 2\sigma)^2} := \alpha_t
\end{align*}

In the theorem statement we choose to call as the final $\alpha_t$ the lowerbound proven above. We check below that this smaller value of $\alpha_t$ still guarantees a decrease in the function value that is sufficient for the statement of the theorem to hold.

\paragraph{{\bf A consistency check!}} Let us substitute the above final value of the step length $ \alpha_t = \frac{1}{L} \frac{\norm{\g_t} ^2 \left (  \frac{\sigma^2 (\beta_1 - \beta_1^2) }{\xi \sigma (\xi + \sigma)} \right )}{ \left (  \frac{(1-\beta_1^t)\sigma}{\xi} \right )^2} = \frac{\xi}{L(1-\beta_1^t)^2}\norm{\g_t} ^2 \left (  \frac{ (\beta_1 - \beta_1^2) }{\sigma (\xi + \sigma)} \right )$, the bound in  equation \ref{denom1} (with $\sigma_t$ replaced by $\sigma$), and the bound in equation \ref{qtfinal} (at the chosen values of $\theta_1 = 1$ and $\theta_2 = \sigma$) in the original equation \ref{decrease} to measure the decrease in the function value between consecutive steps, 

\begin{align*} 
&f(\x_{t+1}) -  f(\x_t) \\&\leq \alpha_t \left ( -  \langle \g_t , \Big( V_t^{\frac 1 2} + \text{diag} (\xi\mathbf{1}_d) \Big)^{-1} \m_t \rangle + \frac {L \alpha_t }{2} \norm{ \Big( V_t^{\frac 1 2} + \text{diag} (\xi\mathbf{1}_d) \Big)^{-1} \m_t}^2 \right )\\
&\leq \alpha_t \left ( -  Q_t + \frac {L \alpha_t }{2} \norm{ \Big( V_t^{\frac 1 2} + \text{diag} (\xi\mathbf{1}_d) \Big)^{-1} \m_t}^2 \right )\\
&\leq \frac{\xi}{L(1-\beta_1^t)^2}\norm{\g_t} ^2 \left (  \frac{ (\beta_1 - \beta_1^2) }{\sigma (\xi + \sigma)} \right ) \left ( -  \norm{\g_t} ^2 \left (  \frac{\sigma (\beta_1 - \beta_1^2) \theta_1  \theta_2}{\xi \sigma (\xi + \sigma)} \right ) \right )\\
&+ \frac{L}{2} \left ( \frac{\xi}{L(1-\beta_1^t)^2}\norm{\g_t} ^2 \left (  \frac{ (\beta_1 - \beta_1^2) }{\sigma (\xi + \sigma)} \right ) \frac{(1-\beta_1^t)\sigma}{\xi}\right )^2
\end{align*}
The RHS above can be simplified to be shown to be equal to the RHS in equation \ref{findecrease} at the same values of $\theta_1$ and $\theta_2$ as used above. And we recall that the bound on the running time was derived from this equation \ref{findecrease}. 
\end{proof}

\newpage 
\section{Conclusion}
To the best of our knowledge, we present the first theoretical guarantees of convergence to criticality for the immensely popular algorithms RMSProp and ADAM in their most commonly used setting of optimizing a non-convex objective. 
 
By our experiments, we have sought to shed light on the important topic of the interplay between adaptivity and momentum in training nets. By choosing to study textbook autoencoder architectures  where various parameters of the net can be changed controllably we  highlight the following two aspects that (a) the value of the gradient shifting hyperparameter $\xi$ has a significant influence on the performance of ADAM and RMSProp and (b) ADAM seems to perform particularly well (often supersedes Nesterov accelerated gradient method) when its momentum parameter $\beta_1$ is very close to $1$. On VGG-9 with CIFAR-10 and for the task of training autoencoders on MNIST we have verified these conclusions across different widths and depths of nets as well as in the full-batch and the mini-batch setting (with large nets) and also under compression of the input/output image size. 

Curiously enough, this regime of $\beta_1$ being close to $1$ is currently not within the reach of our proof techniques of showing convergence for ADAM. Our experiments give strong reasons to try to advance theory in this direction in future work. Though we note that it is still open to find a characterization of the class of objectives for which ADAM and RMSProp in their standard stochastic forms converge to criticality using just a bounded moment and unbiased gradient estimating oracle. Hence theoretically we are still far from being able to explain the unique advantages of the standard versions of RMSProp or ADAM, which in turn we have thoroughly demonstrated in the experiments in this work.  
\begin{subappendices}
\chapter*{\vspace{10pt}Appendix To Chapter \ref{chapadam}}\label{app:adam}
\renewcommand{\thesubsection}{\Alph{subsection}}

\section{Proving stochastic RMSProp (Proof of Theorem \ref{thm:RMSPropS-proof})}
\label{supp_rmspropS}

Now we give the proof of Theorem \ref{thm:RMSPropS-proof}.

\begin{proof}
We define $\sigma_t := \max_{k=1,..,t} \Vert \nabla f_{i_k}(x_k) \Vert$ and we solve the recursion for $v_t$ as, $v_t = (1-\beta_2)\sum_{k=1}^t \beta_2^{t-k}(g_k^2 + \xi)$. This lets us write the following bounds, 
\begin{align*}
\lambda_{min} (V_t^{-\frac 1 2})  &\geq \frac  { 1 } {\sqrt{\max_{i=1,..,d} (\v_t)_i}} \geq \frac  { 1 } {\sqrt{\max_{i=1,..,d} ((1-\beta_2)\sum_{k=1}^{t} \beta_2^{t-k}(\g_k^2 + \xi\mathfrak{1}_d)_i)}}\\
&\geq  \frac  {1} {\sqrt{1-\beta_2^t}\sqrt{\sigma_t^2 + \xi}}
\end{align*}

Now we define, $\epsilon_t := \min_{k=1,..,t ,i = 1,..,d} (\nabla f_{i_k}(\x_k))^2_i$ and this lets us get the following bounds,

\begin{align*}
\lambda_{max}(V_t^{-\frac 1 2}) \leq \frac {1}{\min_{i=1,..,d} (\sqrt{ (\v_t)_i})} \leq \frac {1}{\sqrt{(1-\beta_2^t)}\sqrt{(\xi+\epsilon_t)}}    
\end{align*}

Now we invoke the bounded gradient assumption about the $f_i$ functions and replace in the above equation the eigenvalue bounds of the pre-conditioner by worst-case estimates $\mu_{\max}$ and $\mu_{\min}$ defined as, 

\begin{align*}
\lambda_{min}(V_t^{-\frac {1}{2}}) &\geq  \frac  {1} {\sqrt{\sigma_f^2 + \xi}} := \mu_{\min}\\
\lambda_{max}(V_t^{-\frac {1}{2}}) &\leq \frac {1}{\sqrt{(1-\beta_2)}\sqrt{\xi}} := \mu_{max}
\end{align*}

Using the $L$-smoothness of $f$ between consecutive iterates $\x_t$ and $\x_{t+1}$ we have, 
\begin{align*}
    f(\x_{t+1}) & \leq f(\x_t) +\langle \nabla f(\x_t), \x_{t+1} - \x_t\rangle + \frac{L}{2}\Vert \x_{t+1} - \x_t\Vert^2  \\
\end{align*}

We note that the update step of stochastic RMSProp is $x_{t+1} = x_t - \alpha (V_t)^{-\frac{1}{2}}g_t$ where $g_t$ is the stochastic gradient at iterate $x_t$. Let $H_t = \{ \x_1, \x_2,..,\x_t\}$ be the set of random variables corresponding to the first $t$ iterates. The assumptions we have about the stochastic oracle give us the following relations, $\mathbb{E}[g_t \mid H_t ] = \nabla f(x_t)$  and $\mathbb{E}[\Vert g_t\Vert^2 \mid H_t] \leq \sigma_f^2$ 
. Now we can invoke these stochastic oracle's properties and take a conditional (on $H_t$) expectation over $\g_t$ of the $L-$smoothness in equation to get,

\begin{align}\label{history}
    \nonumber \mathbb{E}[f(\x_{t+1})\mid H_t] & \leq f(\x_t) -\alpha \mathbb{E}\left [\langle \nabla f(\x_t), (V_t)^{-\frac{1}{2}}\g_t  \rangle \mid H_t \right ]+ \frac{\alpha^2 L}{2}\mathbb{E}\left [\Vert (V_t)^{-\frac{1}{2}}\g_t  \Vert^2\mid H_t \right ]  \\
    \nonumber & \leq f(\x_t) - \alpha \mathbb{E}\left [\langle \nabla f(\x_t), (V_t)^{-\frac{1}{2}}\g_t  \rangle \mid H_t \right ]+ \mu_{\max}^2\frac{\alpha^2 L}{2}\mathbb{E} \left [\Vert \g_t  \Vert^2 \mid H_t \right ]  \\
    & \leq f(\x_t) - \alpha \mathbb{E} \left [\langle \nabla f(\x_t), (V_t)^{-\frac{1}{2}}\g_t  \rangle \mid  H_t \right ]+ \mu_{\max}^2 \frac{\alpha^2\sigma_f ^2  L}{2}
\end{align}

We now separately analyze the middle term in the RHS above. In Lemma \ref{lemma:ipbound} below and we get,

\begin{align*}
     \mathbb{E}[\langle \nabla f(\x_t), (V_t)^{-\frac{1}{2}}\g_t  \rangle \mid H_t] \geq  \mu_{min} \Vert \nabla f(\x_t)\Vert^2
\end{align*}

We substitute the above into equation \ref{history} and take expectations over $H_t$ to get, 

\begin{align}\label{enorm}
     \nonumber &\mathbb{E}[f(\x_{t+1}) -f(\x_t)] \leq - \alpha\mu_{min} \mathbb{E}[ \Vert \nabla f(\x_t)\Vert^2 ]  + \mu_{max}^2\frac{\alpha^2\sigma_f^2  L}{2}\\
     \implies &\mathbb{E}[\Vert \nabla f(\x_t)\Vert^2] \leq  \frac{1}{\alpha \mu_{min}}\mathbb{E}[f(\x_{t}) -f(\x_{t+1})] +\frac{\alpha \sigma_f^2 L}{2} \frac{\mu_{max}^2}{\mu_{min}}  
\end{align}


Doing the above replacements to upperbound the RHS of equation \ref{enorm} and summing the inequation over $t = 1$ to $t=T$ and taking the average and replacing the LHS by a lowerbound of it, we get,
\begin{align*}
     \min_{t = 1 \ldots T} \mathbb{E}[\Vert \nabla f(\x_t)\Vert^2] &\leq  \frac{1}{\alpha T \mu_{\min}}\mathbb{E}[f(\x_{1}) -f(\x_{T+1})] +\frac{\alpha  \sigma_f^2 L}{2} \frac{\mu_{\max}^2}{\mu_{\min}}\\
     &\leq \frac{1}{\alpha T \mu_{\min}}\left (f(\x_{1}) -f(\x_*) \right ) +\frac{\alpha  \sigma_f^2 L}{2} \frac{\mu_{\max}^2}{\mu_{\min}}
\end{align*}
Replacing into the RHS above the optimal choice of,  
\[ \alpha = \frac{1}{\sqrt{T}} \sqrt{\frac{2\left (f(\x_{1}) -f(\x_*)\right )}{\sigma_f ^2 L \mu_{\max}^2}} = \frac{1}{\sqrt{T}} \sqrt{\frac{2 \xi (1-\beta_2)\left (f(\x_{1}) -f(\x_*)\right )}{\sigma_f ^2 L}} \]  
we get,
\begin{align*}
\min_{t = 1 \ldots T} \mathbb{E}[\Vert \nabla f(\x_t)\Vert^2] &\leq 2\sqrt{ \frac{1}{ T \mu_{\min}}\left (f(\x_{1}) -f(\x_*) \right ) \times \frac{ L \sigma_f^2}{2} \frac{\mu_{\max}^2}{\mu_{\min}} } \\&= \frac{1}{\sqrt{T}}\sqrt{\frac{2L\sigma_f^2 (\sigma_f^2 + \xi)\left (f(\x_{1}) -f(\x_*) \right ) }{(1-\beta_2)\xi}} 
\end{align*}
Thus stochastic RMSProp with the above step-length is guaranteed is reach $\epsilon$ criticality in number of iterations given by, $T \leq \frac{1}{\epsilon^4} \left ( \frac{2L\sigma_f^2 (\sigma_f^2 + \xi)\left (f(\x_{1}) -f(\x_*) \right ) }{(1-\beta_2)\xi} \right )$
\end{proof}

\begin{lemma} At any time $t$, the following holds,
\label{lemma:ipbound}
\begin{align*}
    \mathbb{E}[\langle \nabla f(x_t),V_t^{-1/2} g_t\rangle \mid H_t] \geq \mu_{\text{min}} \norm{\nabla f(x_t)}^2 
\end{align*}
\end{lemma}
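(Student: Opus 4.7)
The main obstacle here is that the preconditioner $V_t^{-1/2}$ depends on $g_t$ through the update $v_t = \beta_2 v_{t-1} + (1-\beta_2)(g_t^2 + \xi \mathbf{1}_d)$, so one cannot naively pull $V_t^{-1/2}$ outside the conditional expectation and exploit unbiasedness of $g_t$. The plan is to instead obtain a \emph{pointwise} (pre-expectation) lower bound on $\langle \nabla f(x_t), V_t^{-1/2} g_t \rangle$ in terms of $\langle \nabla f(x_t), g_t \rangle$, and only then take the conditional expectation. This is where the sign-alignment hypothesis of Theorem~\ref{thm:RMSPropS-proof} will play an essential role.

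First I would expand the inner product coordinatewise: since $V_t$ is diagonal,
\[
\langle \nabla f(x_t), V_t^{-1/2} g_t \rangle = \sum_{i=1}^d (V_t^{-1/2})_{ii}\, \nabla_i f(x_t)\, g_{t,i}.
\]
The key step is now to observe that, under assumption (b) of Theorem~\ref{thm:RMSPropS-proof}, for every $p, q \in \{1,\ldots,k\}$ the gradients $\nabla f_p(x_t)$ and $\nabla f_q(x_t)$ share the same sign pattern. Consequently $\nabla f(x_t) = \tfrac{1}{k}\sum_p \nabla f_p(x_t)$ and $g_t = \nabla f_{i_t}(x_t)$ have the same sign pattern as well, so $\nabla_i f(x_t) \cdot g_{t,i} \geq 0$ coordinate by coordinate. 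Since $(V_t^{-1/2})_{ii} > 0$ and is bounded below by the worst-case constant $\mu_{\min} = 1/\sqrt{\sigma_f^2 + \xi}$ (using the upper bound $\|\nabla f_p\| \leq \sigma_f$ to uniformly bound each $v_{t,i}$), we can replace $(V_t^{-1/2})_{ii}$ by $\mu_{\min}$ term by term without reversing the inequality, giving
\[
\langle \nabla f(x_t), V_t^{-1/2} g_t \rangle \;\geq\; \mu_{\min} \sum_{i=1}^d \nabla_i f(x_t)\, g_{t,i} \;=\; \mu_{\min}\, \langle \nabla f(x_t), g_t \rangle.
\]

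Finally I would take conditional expectation with respect to $H_t$: since $\nabla f(x_t)$ is $H_t$-measurable and the oracle is conditionally unbiased (i.e.\ $\mathbb{E}[g_t \mid H_t] = \nabla f(x_t)$), the right-hand side evaluates to $\mu_{\min}\, \langle \nabla f(x_t), \nabla f(x_t)\rangle = \mu_{\min}\, \|\nabla f(x_t)\|^2$, completing the proof. The only delicate point to verify carefully is that the sign-pattern condition really does propagate from the individual $f_p$'s to their average $f$ and to the sampled $g_t$ simultaneously at the current iterate $x_t$; once that common-orthant property is established, the rest of the argument reduces to elementary monotonicity of each summand.
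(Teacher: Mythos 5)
Your proof is correct, and it achieves the same conclusion using the same two essential facts as the paper (coordinatewise sign alignment of $\nabla f$ with $g_t$, and the uniform lower bound $(V_t^{-1/2})_{ii}\geq\mu_{\min}$), but it organizes the argument more cleanly. You establish the pointwise inequality $\langle\nabla f(x_t),V_t^{-1/2}g_t\rangle\geq\mu_{\min}\langle\nabla f(x_t),g_t\rangle$ \emph{before} taking the conditional expectation, after which unbiasedness of $g_t$ finishes the job in one line. The paper instead takes the conditional expectation first, writes out $\mathbb{E}[(V_t^{-1/2})_{ii}(g_t)_i\mid H_t]$ explicitly as $\tfrac{1}{k}\sum_p a_{pi}/\sqrt{(1-\beta_2)a_{pi}^2+d_i}$, reformulates the needed inequality as a bilinear form $(\mathbf{a}_i^\top\mathbf{1}_k)(\mathbf{q}_i^\top\mathbf{a}_i)\geq 0$, and then expands the double sum over $p,q$ to conclude from $a_{pi}a_{qi}\geq 0$ and $\mathbf{q}_i\geq 0$. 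Both routes reduce to precisely the same algebraic facts, but your pre-expectation pointwise bound bypasses the explicit computation of the conditional expectation of the preconditioner entry and the vector/matrix reformulation entirely, which makes the role of the sign-alignment hypothesis more transparent. The one delicate point you flag (that the sign pattern propagates to the average $\nabla f$ and to the sampled $g_t$ simultaneously) is indeed the crux, and it holds because an average of numbers lying on a common side of zero lies on that same side, as the paper's remark after Theorem~\ref{thm:RMSPropS-proof} observes.
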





\begin{proof}
\begin{align}\label{A1E1}
    \nonumber \mathbb{E}\left [\langle \nabla f(x_t),V_t^{-1/2}g_t\rangle \mid H_t \right ] 
    &= \mathbb{E} \left [\sum_{i=1}^d \nabla_i f(x_t) (V_t^{-1/2})_{ii} (g_t)_i  \mid H_t \right] \\
    &=  \sum_{i=1}^d \nabla_i f(x_t)\mathbb{E} \left [(V_t^{-1/2})_{ii} (g_t)_i \mid H_t \right ]
\end{align}

Now we introduce some new variables to make the analysis easier to present. Let $a_{pi} := \left  [ \nabla f_p (\x_t) \right ]_i$ where $p$ indexes the training data set, $p \in  \{1,\ldots,k\}$. (conditioned on $H_t$, $a_{pi}$s are constants)  This implies, 
$\nabla_i f(x_t) = \frac{1}{k}\sum_{p=1}^k a_{pi}$ 
We recall that $\mathbb{E} \left [ (\g_t)_i\right ] = \nabla_i f(\x_t)$ 
where the expectation is taken over the oracle call at the $t^{th}$ update step. Further our instantiation of the oracle is equivalent to doing the uniformly at random sampling, $(\g_t)_i \sim \{a_{pi}\}_{p=1,\ldots,k}$. 

Given that we have, $V_t = \text{diag}(\v_t)$ with $\v_t = (1-\beta_2)\sum_{k=1}^t \beta_2^{t-k}(\g_k^2 + \xi\mathbf{1}_d)$ this implies, $(V_t^{-1/2})_{ii} = \frac{1}{\sqrt{(1-\beta_2)(\g_t)_i^2 + d_i}}$ where we have defined $d_i := (1-\beta_2)\xi + (1-\beta_2)\sum_{k=1}^{t-1} \beta_2^{t-k}((\g_k)_i^2 + \xi)$. (conditioned on $H_t$, $d_i$ is a constant) This leads to an explicit form of the needed expectation over the $t^{th}-$oracle call as, 

\begin{align*}
\mathbb{E} \left [(V_t^{-1/2})_{ii} (g_t)_i\mid H_t \right ] &= \mathbb{E} \left [(V_t^{-1/2})_{ii} (g_t)_i \mid H_t \right]\\
&= \mathbb{E}_{(\g_t)_i \sim \{a_{pi}\}_{p=1,\ldots,k}} \left [\frac{ (g_t)_i}{\sqrt{(1-\beta_2)(g_t)_i^2 + d_i}} \mid H_t \right]\\
&= \frac{1}{k}\sum_{p=1}^k \frac{ a_{pi}}{\sqrt{(1-\beta_2)a_{pi}^2  + d_{i}}}
\end{align*} 

Substituting the above (and the definition of the constants $a_{pi}$) back into equation \ref{A1E1} we have, 

\begin{align*}
     \mathbb{E} \left [\langle \nabla f(x_t),V_t^{-1/2} g_t\rangle \mid H_t \right ]  =  \sum_{i=1}^d \left(\frac{1}{k}\sum_{p=1}^k a_{pi}\right)\left(\frac{1}{k}\sum_{p=1}^k \frac{ a_{pi}}{\sqrt{(1-\beta_2)a_{pi}^2  + d_{i}}}\right)
\end{align*}

We define two vectors $\a_i, \mathbf{h}_i \in \R^k$ s.t $(\a_i)_p = a_{pi}$ and  $(\mathbf{h}_i)_p = \frac{1}{\sqrt{(1-\beta_2)a_{pi}^2  + d_{i}}}$

Substituting this, the above expression can  be written as,

\begin{align}
    \mathbb{E} \left [\langle \nabla f(x_t),V_t^{-1/2} g_t\rangle \mid H_t \right ] = \frac{1}{k^2} \sum_{i=1}^d \left ( \a_i^\top \mathbf{1}_k \right ) \left (  \mathbf{h}_i^\top \a_i \right ) =  \frac{1}{k^2} \sum_{i=1}^d \a_i^\top \left ( \mathbf{1}_k \mathbf{h}_i^\top \right ) \a_i
\end{align}

Note that with this substitution, the RHS of the claimed lemma becomes, 
\begin{align*}
    \mu_{\text{min}} \norm{\nabla f(x_t)}^2  &= \mu_{\text{min}} \sum_{i=1}^d  \left(\frac{1}{k}\sum_{p=1}^k  \nabla_p f(\x_t)\right)^2 \\
    & =  \frac{ \mu_{\text{min}} }{k^2} \sum_{i=1}^d (\a_i^\top \mathbf{1}_k)^2 \\
    & =  \frac{ \mu_{\text{min}} }{k^2} \sum_{i=1}^d  \a_i^\top \mathbf{1}_k \mathbf{1}_k^\top \a_i
\end{align*}

Therefore our claim is proved if we show that for all $i$, $$ \frac{1}{k^2}\a_i^\top \left ( \mathbf{1}_k \mathbf{h}_i^\top \right ) \a_i -   \frac{\mu_{\text{min}} }{k^2}  \a_i^\top \mathbf{1}_k \mathbf{1}_k^\top \a_i \geq 0$$. This can be simplified as,

\begin{align*}
    \frac{1}{k^2}\a_i^\top \left ( \mathbf{1}_k \mathbf{h}_i^\top \right ) \a_i -  \mu_{\text{min}}  \frac{1}{k^2}  \a_i^\top \mathbf{1}_k \mathbf{1}_k^\top \a_i & = \frac{1}{k^2} \a_i^\top\left(\mathbf{1}_k \left( \mathbf{h}_i - \mu_{\text{min}} \mathbf{1}_k  \right)^\top \right)\a_i
\end{align*}

To further simplify, we define $\mathbf{q}_i \in \mathbb{R}^k, (\mathbf{q}_i)_p =  (\mathbf{h}_i)_p - \mu_{\text{min}} = \frac{1}{\sqrt{(1-\beta_2)a_{pi}^2  + d_{i}}} - \mu_{\text{min}}$. We therefore need to show, 

\begin{align*}
\frac{1}{k^2} \a_i^\top\left(\mathbf{1}_k \mathbf{q}_i^\top \right)\a_i \geq 0
\end{align*}

We first bound $d_i$ by recalling the definition of $\sigma_f$ (from which it follows that $a_{pi}^2 \leq \sigma_f^2$), 

\begin{align}\label{mumin}
 \nonumber &d_i \leq (1-\beta_2) \left [ \xi  + \sum_{k=1}^{t-1} \beta_2^{t-k} ( \sigma_f^2 + \xi ) \right ] =  (1-\beta_2) \left [ \xi  +  \frac{\beta_2 - \beta_2^{t-1}}{1-\beta_2}( \sigma_f^2 + \xi )  \right ]\\
 \nonumber &\leq (1-\beta_2)\xi + (\beta_2 - \beta_2^{t-1})\xi + (\beta_2 - \beta_2^{t-1})\sigma_f^2 = (1-\beta_2^{t-1})\xi + (\beta_2 - \beta_2^{t-1})\sigma_f^2\\
 \nonumber &\implies \sqrt{(1-\beta_2)a_{pi}^2  + d_{i}} \leq \sqrt{(1-\beta_2)\sigma_f^2 + (1-\beta_2^{t-1})\xi + (\beta_2 - \beta_2^{t-1})\sigma_f^2} \\ \nonumber &= \sqrt{(1-\beta_2^{t-1})(\sigma_f^2 + \xi)}\\
 \nonumber &\implies -\mu_{\text{min}} + \frac{1}{\sqrt{(1-\beta_2)a_{pi}^2  + d_{i}}} \geq -\mu_{\text{min}} + \frac{1}{\sqrt{(1-\beta_2^{t-1})(\sigma_f^2 + \xi)}} \\
  \nonumber &= -\frac{1}{\sqrt{\sigma_f^2 + \xi}} + \frac{1}{\sqrt{(1-\beta_2^{t-1})(\sigma_f^2 + \xi)}}\\
&\implies -\mu_{\text{min}} + \frac{1}{\sqrt{(1-\beta_2)a_{pi}^2  + d_{i}}} \geq 0 
\end{align}

The inequality follows since $\beta_2 \in (0,1]$ 

Putting this all together, we get,
\begin{align*}
&(\a_i^\top \mathbf{1}_k)(\mathbf{q}_i^\top \a_i)\\
= & \left ( \sum_{p=1}^k a_{pi} \right)\left ( \sum_{p=1}^k \left [ -\mu_{\text{min}}a_{pi} + \frac{a_{pi}}{\sqrt{(1-\beta_2)a_{pi}^2 + d_i}} \right ] \right )\\ 
= &\sum_{p,q=1}^k \left [ -\mu_{\text{min}}a_{pi}a_{qi} + \frac{a_{pi}a_{qi}}{\sqrt{(1-\beta_2)a_{pi}^2 + d_i}} \right ]\\ 
= &\sum_{p,q=1}^k a_{pi}a_{qi} \left [ -\mu_{\text{min}} + \frac{1}{\sqrt{(1-\beta_2)a_{pi}^2 + d_i}} \right ]
\end{align*}

Now our assumption that for all $\x$, $\text{sign}(\nabla f_p (\x)) = \text{sign}(\nabla f_q (\x))$ for all $p,q \in \{1,\ldots,k\}$ leads to the conclusion that the term $a_{pi}a_{qi} \geq 0$. And we had already shown in equation \ref{mumin} that $ \left [ -\mu_{\text{min}} + \frac{1}{\sqrt{(1-\beta_2)a_{pi}^2 + d_i}} \right ] \geq 0$. Thus we have shown that $(\a_i^\top \mathbf{1}_k)(\mathbf{q}_i^\top \a_i) \geq 0$ and this finishes the proof.
\end{proof}

\section{Proving deterministic RMSProp - the version with standard speed (Proof of Theorem \ref{thm:RMSProp1-proof})}
\label{sec:supp_rmsprop1}

\begin{proof}
By the $L-$smoothness condition and the update rule in Algorithm \ref{RMSProp_TF} we have,https://www.overleaf.com/project/5f7fa365abb4250001d5d795
\begin{align}\label{step1}
\nonumber f(\x_{t+1}) &\leq f(\x_t) - \alpha_t \langle \nabla f(\x_t),  V_t^{-\frac 1 2}\nabla f(\x_t)\rangle +  \alpha_t ^2 \frac {L}{2}\Vert V_t^{-\frac 1 2}  \nabla f(\x_t) \Vert ^2 \\
\implies f(\x_{t+1}) - f(\x_t) &\leq \alpha_t \left (\frac {L \alpha_t}{2} \Vert V_t^{-\frac 1 2}  \nabla f(\x_t) \Vert ^2 - \langle \nabla f(\x_t),  V_t^{-\frac 1 2}\nabla f(\x_t)\rangle  \right ) 
\end{align}

For $0 < \delta_t^2 < \frac{1}{ \sqrt{1-\beta_2^t} \sqrt{\sigma_t^2 + \xi}}$ we now show a strictly positive lowerbound on the following function,
\begin{align}\label{step2}
 \frac{2}{L} \left (  \frac {\langle \nabla f(\x_t),  V_t^{-\frac 1 2}\nabla f(\x_t)\rangle - \delta_t^2 \norm{\nabla f(\x_t)}^2 } { \Vert V_t^{-\frac 1 2}  \nabla f(\x_t) \Vert ^2 } \right ) 
\end{align}
We define $\sigma_t := \max_{i=1,..,t} \Vert \nabla f(\x_i) \Vert$ and we solve the recursion for $\v_t$ as, $\v_t = (1-\beta_2)\sum_{k=1}^t \beta_2^{t-k}(\g_k^2 + \xi)$. This lets us write the following bounds, 

\begin{align}\label{numer} 
\nonumber \langle \nabla f(\x_t),  V_t^{-\frac 1 2}\nabla f(\x_t)\rangle &\geq \lambda_{min} (V_t^{-\frac 1 2}) \Vert \nabla f(\x_t) \Vert ^2 \geq \frac  { \Vert \nabla f(\x_t) \Vert ^2 } {\sqrt{\max_{i=1,..,d} (\v_t)_i}} \\
\nonumber &\geq \frac  { \Vert \nabla f(\x_t) \Vert ^2 } {\sqrt{\max_{i=1,..,d} ((1-\beta_2)\sum_{k=1}^{t} \beta_2^{t-k}(\g_k^2 + \xi\mathfrak{1}_d)_i)}} \\
&\geq  \frac  { \Vert \nabla f(\x_t) \Vert ^2 } {\sqrt{1-\beta_2^t}\sqrt{\sigma_t^2 + \xi}} 
\end{align}
Now we define, $\epsilon_t := \min_{k=1,..,t ,i = 1,..,d} (\nabla f(\x_k))^2_i$ and this lets us get the following sequence of inequalities,

\begin{align}\label{denom}
\Vert V_t^{-\frac 1 2}  \nabla f(\x_t) \Vert ^2 \leq \lambda_{max}^2(V_t^{-\frac 1 2}) \Vert \nabla f(\x_t) \Vert ^2 &\leq \frac {\Vert \nabla f(\x_t) \Vert ^2}{(\min_{i=1,..,d} (\sqrt{ (\v_t)_i}))^2} \\&\leq \frac {\Vert \nabla f(\x_t) \Vert ^2}{(1-\beta_2^t)(\xi+\epsilon_t)} 
\end{align}

So combining equations \ref{denom} and \ref{numer} into equation \ref{step2} and from the exit line in the loop we are assured that $\Vert \nabla f(\x_t) \Vert ^2 \neq 0$ and combining these we have, 

\begin{align*}
&\frac{2}{L} \left ( \frac{-\delta_t^2 \norm{\nabla f(\x_t)}^2  + \langle \nabla f(\x_t),  V_t^{-\frac 1 2}\nabla f(\x_t)\rangle} { \Vert V_t^{-\frac 1 2}  \nabla f(\x_t) \Vert ^2 } \right ) \\&\geq \frac{2}{L} \left ( \frac {- \delta_t^2 + \frac{ 1} {\sqrt{1-\beta_2^t}\sqrt{\sigma_t^2 + \xi}} }{ \frac {1}{(1-\beta_2^t)(\xi+\epsilon_t)} } \right )  \\
&\geq \frac{2 (1-\beta_2^t)(\xi + \epsilon_t)}{L} \left ( -\delta_t^2 + \frac{1}{ \sqrt{1-\beta_2^t} \sqrt{\sigma_t^2 + \xi}} \right )
\end{align*}

Now our definition of $\delta_t^2$ allows us to define a parameter $0 < \beta_t := \frac{1}{ \sqrt{1-\beta_2^t} \sqrt{\sigma_t^2 + \xi}} - \delta_t^2$ and rewrite the above equation as, 

\begin{align}\label{lower} 
\frac{2}{L} \left ( \frac{-\delta_t^2 \norm{\nabla f(\x_t)}^2  + \langle \nabla f(\x_t),  V_t^{-\frac 1 2}\nabla f(\x_t)\rangle} { \Vert V_t^{-\frac 1 2}  \nabla f(\x_t) \Vert ^2 } \right )  \geq \frac{2 (1-\beta_2^t)(\xi + \epsilon_t)\beta_t}{L} 
\end{align}
We can as well satisfy the conditions needed on the variables, $\beta_t$ and $\delta_t$ by choosing, 

\[ 
\delta_t^2  = \frac{1}{2} \min_{t=1,\ldots} \frac{1}{\sqrt{1-\beta_2^t}\sqrt{\sigma_t^2 + \xi}} =  \frac{1}{2} \frac{1}{\sqrt{\sigma^2 + \xi}} =: \delta^2\] 

and

\[ 
\beta_t  = \min_{t=1,..} \frac{1}{\sqrt{1-\beta_2^t}\sqrt{\sigma_t^2 + \xi}} - \delta^2 =  \frac{1}{2} \frac{1}{\sqrt{\sigma^2 + \xi}} 
\]  

Then the worst-case lowerbound in equation \ref{lower} becomes, 

\[  
\frac{2}{L} \left ( \frac{-\delta_t^2 \norm{\nabla f(\x_t)}^2  + \langle \nabla f(\x_t),  V_t^{-\frac 1 2}\nabla f(\x_t)\rangle} { \Vert V_t^{-\frac 1 2}  \nabla f(\x_t) \Vert ^2 } \right ) \geq \frac{2(1-\beta_2)\xi}{L} \times \frac{1}{2} \frac{1}{\sqrt{\sigma^2 + \xi}}  
\]

This now allows us to see that a constant step length $\alpha_t = \alpha >0$ can be defined as, $\alpha  = \frac{(1-\beta_2)\xi}{L\sqrt{\sigma^2 + \xi}}$ and this is 
such that the above equation can be written as, $\frac {L \alpha}{2} \Vert V_t^{-\frac 1 2}  \nabla f(\x_t) \Vert ^2 - \langle \nabla f(\x_t),  V_t^{-\frac 1 2}\nabla f(\x_t)\rangle \leq - \delta^2 \norm{\nabla f(\x_t)}^2$ . This when substituted back into equation \ref{step1} we have, 

\[ f(\x_{t+1}) - f(\x_t) \leq -\delta ^2 \alpha  \norm{\nabla f (\x_t)}^2 =  -\delta^2 \alpha \norm{\nabla f(\x_t)}^2 \]

This gives us, 
\begin{align}\label{average}
\nonumber \norm{\nabla f(\x_t)}^2 &\leq \frac{1}{\delta^2 \alpha} [ f(\x_t) -  f(\x_{t+1}) ]\\
\implies  \sum_{t=1}^T \norm{\nabla f(\x_t)}^2 &\leq \frac{1}{\delta^2 \alpha}  [ f(\x_1) -  f(\x_{*}) ]\\
\implies \min_{t=1,,.T}{\norm{\nabla f(\x_t)}^2} &\leq \frac{1}{T\delta^2 \alpha} [ f(\x_1) -  f(\x_*) ]
\end{align}

Thus for any given $\epsilon>0$, $T$ satisfying, $\frac{1}{T\delta^2 \alpha} [ f(\x_1) -  f(\x_*)] \leq \epsilon ^2$ is a sufficient condition to ensure that the algorithm finds a point $\x_{result} := \argmin_{t=1,,.T}{\norm{\nabla f(\x_t)}^2}$ with $ \norm{\nabla f(\x_{result})}^2 \leq \epsilon^2$. 

Thus we have shown that using a constant step length of $\alpha = \frac{(1-\beta_2)\xi}{L\sqrt{\sigma^2 + \xi}}$ deterministic RMSProp can find an $\epsilon-$critical point in $T = \frac{1}{\epsilon^2} \times \frac{ f(\x_1) -  f(\x_*)}{\delta^2 \alpha} =  \frac{1}{\epsilon^2}\times \frac{2L(\sigma^2 + \xi)(f(\x_1) -  f(\x_*))}{(1-\beta_2)\xi} $ steps. 

\end{proof}




\section{Proving deterministic RMSProp - the version with no added shift (Proof of Theorem \ref{thm:RMSProp2-proof})}
\label{sec:supp_rmsprop2}

\begin{proof}
From the $L-$smoothness condition on $f$ we have between consecutive iterates of the above algorithm, 

\begin{align}\label{step}
\nonumber f(\x_{t+1}) &\leq f(\x_t) - \alpha_t \langle \nabla f(\x_t),  V_t^{-\frac 1 2}\nabla f(\x_t)\rangle \\&+ \frac {L}{2} \alpha_t^2 \Vert V_t^{-\frac 1 2}\nabla f(\x_t) \Vert^2\\
\implies \langle \nabla f(\x_t), V_t^{-\frac 1 2} \nabla f(\x_t) \rangle &\leq \frac{1}{\alpha_t} \left(f(\x_t)-f(\x_{t+1}) \right) + \frac {L\alpha_t}{2} \Vert V_t^{-\frac 1 2}\nabla f(\x_t)  \Vert^2
\end{align}

Now the recursion for $\v_t$ can be solved to get, $\v_t = (1-\beta_2)\sum_{k=1}^t \beta_2^{t-k}\g_k^2$. Then
\begin{align*}
\norm{V_t^{\frac 1 2}} &\geq \frac{1}{\max_{ i \in \text{Support}(\v_t)} \sqrt{(\v_t)_i}} \\&= \frac{1}{\max_{ i \in \text{Support}(\v_t)} \sqrt{(1-\beta_2)\sum_{k=1}^t \beta_2^{t-k}(\g_k^2)_i}} \\&= \frac{1}{\max_{ i \in \text{Support}(\v_t)} \sigma \sqrt{(1-\beta_2)\sum_{k=1}^t \beta_2^{t-k}}} = \frac {1}{\sigma \sqrt{(1-\beta_2^t)}}
\end{align*}
Substituting this in a lowerbound on the LHS of equation \ref{step} we get,

\begin{align*}
\frac {1}{\sigma \sqrt{(1-\beta_2^t)}}\Vert \nabla f(\x_t) \Vert^2 &\leq  \langle \nabla f(\x_t), V_t^{-\frac 1 2} \nabla f(\x_t) \rangle \leq \frac{1}{\alpha_t} \left(f(\x_t)-f(\x_{t+1}) \right)
\\&+ \frac {L\alpha_t}{2} \Vert V_t^{-\frac 1 2}\nabla f(\x_t)  \Vert^2
\end{align*}

Summing the above we get,
\begin{align}\label{sum}
\sum_{t=1}^T\frac {1}{\sigma \sqrt{(1-\beta_2^t)}}\Vert \nabla f(\x_t) \Vert^2 \leq  \sum_{t=1}^T\frac{1}{\alpha_t} \left(f(\x_t)-f(\x_{t+1}) \right)
+ \sum_{t=1}^T\frac {L\alpha_t}{2} \Vert V_t^{-\frac 1 2}\nabla f(\x_t)  \Vert^2
\end{align}
Now we substitute $\alpha_t = \frac{\alpha}{\sqrt{t}}$ and invoke the definition of $B_\ell$ and $B_u$ to write the first term on the RHS of equation \ref{sum} as, 

\begin{align*}
\sum_{t=1}^T \frac {1}{\alpha_t}[f(\x_t) -f(\x_{t+1})] &=  \frac{ f(\x_1)}{\alpha} + \sum_{t=1}^T \left ( \frac{f(\x_{t+1})}{\alpha_{t+1}} - \frac{f(\x_{t+1})}{\alpha_{t}} \right ) - \frac{f(x_{T+1})}{\alpha_{T+1}} \\
&= \frac {f(\x_1)}{\alpha} - \frac{f(x_{T+1})}{\alpha_{T+1}} + \frac{1}{\alpha}\sum_{t=1}^T f(\x_{t+1}) (\sqrt{t+1} - \sqrt{t})\\
&\leq \frac{B_u}{\alpha} - \frac{B_\ell \sqrt{T+1}}{\alpha} + \frac {B_u}{\alpha} (\sqrt{T+1} -1)\\
\end{align*}

Now we bound the second term in the RHS of equation \ref{sum} as follows. Lets first define a function $P(T)$ as follows, $P(T) = \sum_{t=1}^T \alpha_t \Vert V_t^{-\frac 1 2} \nabla f(\x_t)\Vert^2$ and that gives us,  
\begin{align*}
P(T) - P(T-1) &= \alpha_T \sum_{i=1}^d \frac {\g_{T,i}^2}{\v_{T,i}} = \alpha_T \sum_{i=1}^d \frac { \g_{T,i} ^2 }{(1-\beta_2) \sum_{k=1}^T \beta_{2}^{T-k}\g^2_{k,i}}\\
&=  \frac {\alpha_T }{ (1-\beta_2)} \sum_{i=1}^d \frac { \g_{T,i} ^2 }{ \sum_{k=1}^T \beta_{2}^{T-k}\g^2_{k,i}} \leq \frac {d\alpha}{(1-\beta_2)\sqrt{T}}\\
\implies \sum_{t=2}^T [P(t) -P(t-1) ] &= P(T) - P(1) \\&\leq \frac {d\alpha}{(1-\beta_2)} \sum_{t=2}^T \frac {1}{\sqrt{t}} \leq \frac {d\alpha}{2(1-\beta_2)} (\sqrt{T} -2)\\
\implies P(T) &\leq P(1) + \frac {d\alpha}{2(1-\beta_2)} (\sqrt{T} -2) 
\end{align*} 

So substituting the above two bounds back into the RHS of the above inequality \ref{sum}and removing the factor of $\sqrt{1-\beta_2^T} <1$ from the numerator, we can define a point $\x_{result}$ as follows,

\begin{align*}
\norm{ \nabla f(\x_{result}) }^2 := \argmin_{t=1,..,T}  \Vert \nabla f(\x_t) \Vert^2 &\leq \frac {1}{T} \sum_{t=1}^T \Vert \nabla f(\x_t) \Vert^2 \\
&\leq \frac {\sigma }{T} \Bigg ( \frac {B_u}{\alpha} - \frac{B_l \sqrt{T+1}}{\alpha}+ \frac {B_u}{\alpha} (\sqrt{T+1} -1) \\
&+  \frac {L}{2} \left [ P(1) + \frac {d\alpha}{2(1-\beta_2)} (\sqrt{T} -2) \right ]  \Bigg ) 
\end{align*}

Thus it follows that for $T = O(\frac{1}{\epsilon^4})$ the algorithm \ref{RMSProp_TF} is guaranteed to have found at least one point $\x_{result}$ such that, $\norm{ \nabla f(\x_{result}) }^2 \leq \epsilon^2$
\end{proof}

\section{Hyperparameter Tuning}\label{supp_sec:exp_details}
Here we describe how we tune the hyper-parameters of each optimization algorithm. NAG has two hyper-parameters, the step size $\alpha$ and the momentum $\mu$. The main hyper-parameters for RMSProp are the step size $\alpha$, the decay parameter $\beta_2$ and the perturbation $\xi$. ADAM, in addition to the ones in RMSProp, also has a momentum parameter $\beta_1$. We vary the step-sizes of ADAM in the conventional way of $\alpha_t = \alpha \sqrt{1 - \beta_2^t} / (1 - \beta_1^t)$. 

For tuning the step size, we follow the same method used in \cite{wilson2017marginal}. We start out with a logarithmically-spaced grid of five step sizes. If the best performing parameter was at one of the extremes of the grid, we tried new grid points so that the best performing parameters were at one of the middle points in the grid. While it is computationally infeasible even with substantial resources to follow a similarly rigorous tuning process for all other hyper-parameters, we do tune over them somewhat as described below.

\paragraph{NAG}
The initial set of step sizes used for NAG were: $\{3\mathrm{e}{-3}, 1\mathrm{e}{-3}, 3\mathrm{e}{-4}, 1\mathrm{e}{-4}, 3\mathrm{e}{-5}\}$ We tune the momentum parameter over values $\mu \in \{0.9, 0.99\}$.

\paragraph{RMSProp}
The initial set of step sizes used were: $\{3\mathrm{e}{-4}, 1\mathrm{e}{-4}, 3\mathrm{e}{-5}, 1\mathrm{e}{-5}, 3\mathrm{e}{-6}\}$. We tune over $\beta_2 \in \{0.9, 0.99\}$. We set the perturbation value $\xi = 10^{-10}$, following the default values in TensorFlow, except for the experiments in Section \ref{sec:main_xi}. In Section \ref{sec:main_xi}, we show the effect on convergence and generalization properties of ADAM and RMSProp when changing this parameter $\xi$.


Note that ADAM and RMSProp uses an accumulator for keeping track of decayed squared gradient $\v_t$. For ADAM this is recommended to be initialized at $\v_0 = 0$. However, we found in the TensorFlow implementation of RMSProp that it sets $\v_0 = \mathbf{1}_d$. Instead of using this version of the algorithm, we used a modified version where we set $\v_0 = 0$. We typically found setting $\v_0 = 0$ to lead to faster convergence in our experiments.

\paragraph{ADAM}
The initial set of step sizes used were: $\{3\mathrm{e}{-4}, 1\mathrm{e}{-4}, 3\mathrm{e}{-5}, 1\mathrm{e}{-5}, 3\mathrm{e}{-6}\}$. 
For ADAM, we tune over $\beta_1$ values of $\{0.9, 0.99\}$. For ADAM, We set $\beta_2 = 0.999$ for all our experiments as is set as the default in TensorFlow. 
Unless otherwise specified we use for the perturbation value $\xi = 10^{-8}$ for ADAM, following the default values in TensorFlow. 

Contrary to what is the often used values of $\beta_1$ for ADAM (usually set to 0.9), we found that we often got better results on the autoencoder problem when setting $\beta_1 = 0.99$.

\newpage 
\section{Effect of the $\xi$ parameter on adaptive gradient algorithms}

In Figure \ref{xi_fixed}, we show the same effect of changing $\xi$ as in Section \ref{sec:main_xi} 
on a 1 hidden layer network of 1000 nodes, while keeping all other hyper parameters fixed (such as learning rate, $\beta_1$, $\beta_2$). These other hyper-parameter values were fixed at the best values of these parameters for the default values of $\xi$, i.e., $\xi = 10^{-10}$ for RMSProp and $\xi = 10^{-8}$ for ADAM.

\begin{figure}[h!]
\centering
\begin{subfigure}[t]{0.32\textwidth}
\includegraphics[width=\textwidth]{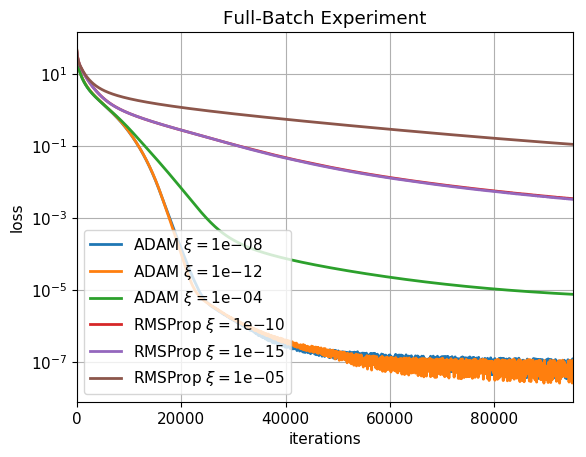}
\caption{Loss on training set}
\end{subfigure}
\begin{subfigure}[t]{0.32\textwidth}
\includegraphics[width=\textwidth]{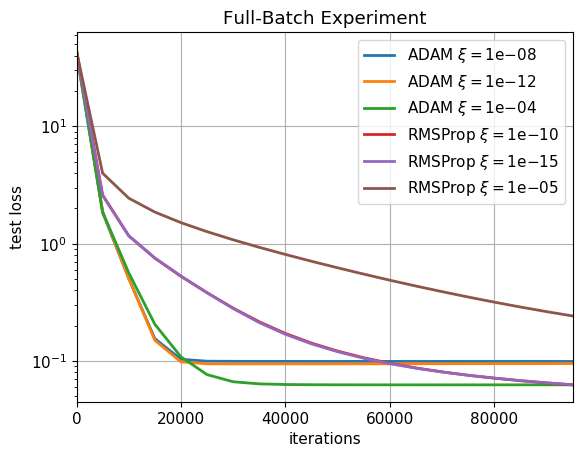}
\caption{Loss on test set}
\end{subfigure}
\begin{subfigure}[t]{0.32\textwidth}
\includegraphics[width=\textwidth]{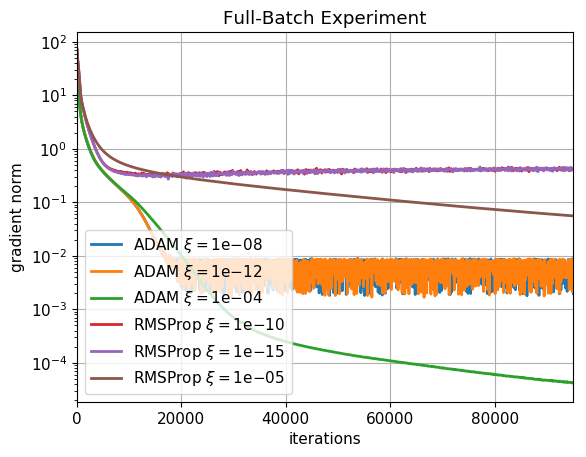}
\caption{Gradient norm on training set}
\end{subfigure}
\caption{Fixed parameters with changing $\xi$ values. 1 hidden layer network of 1000 nodes}
\label{xi_fixed}
\end{figure}

\newpage 
\section{Additional Experiments}

\subsection{Additional full-batch experiments on $22 \times 22$ sized images}
\label{sec:supp_fullbatch}

In Figures \ref{full_batch_trainloss_imsize_22}, \ref{full_batch_testloss_imsize_22} and \ref{full_batch_normgrad_imsize_22}, we show training loss, test loss and gradient norm results for a variety of additional network architectures. Across almost all network architectures, our main results remain consistent. ADAM with $\beta_1 = 0.99$ consistently reaches lower training loss values as well as better generalization than NAG.
\vspace{-3mm}
\begin{figure}[h!]
\centering
\begin{subfigure}[t]{0.32\textwidth}
\includegraphics[width=\textwidth]{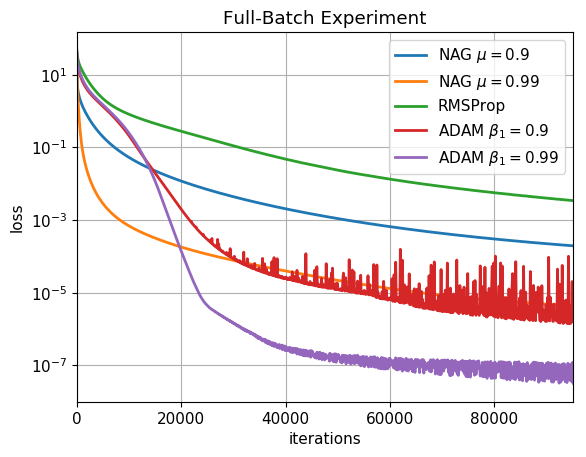}
\vspace{-6mm}\caption{1 hidden layer; 1000 nodes}
\end{subfigure}
\begin{subfigure}[t]{0.32\textwidth}
\includegraphics[width=\textwidth]{adam/1000_1000_loss.png}
\vspace{-6mm}\caption{3 hidden layers; 1000 nodes each}
\end{subfigure}
\begin{subfigure}[t]{0.32\textwidth}
\includegraphics[width=\textwidth]{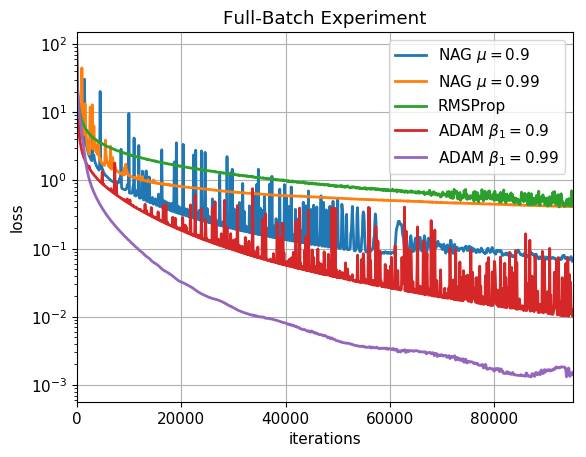}
\vspace{-6mm}\caption{5 hidden layers; 1000 nodes each}
\end{subfigure}
\begin{subfigure}[t]{0.32\textwidth}
\includegraphics[width=\textwidth]{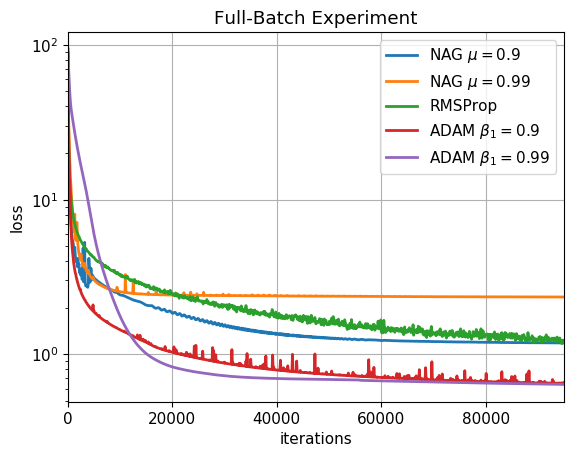}
\vspace{-6mm}\caption{3 hidden layers; 300 nodes}
\end{subfigure}
\begin{subfigure}[t]{0.32\textwidth}
\includegraphics[width=\textwidth]{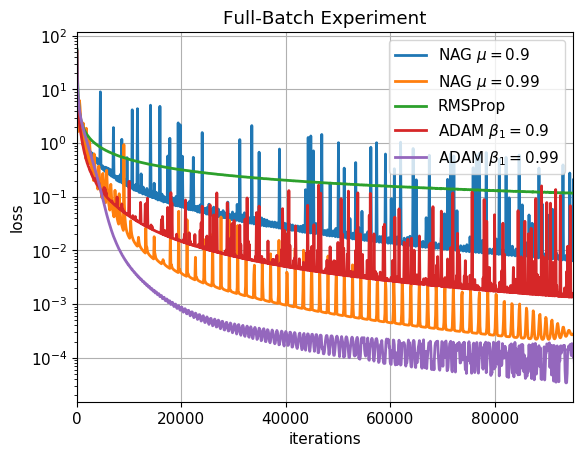}
\vspace{-6mm}\caption{3 hidden layer; 3000 nodes}
\end{subfigure}
\begin{subfigure}[t]{0.32\textwidth}
\includegraphics[width=\textwidth]{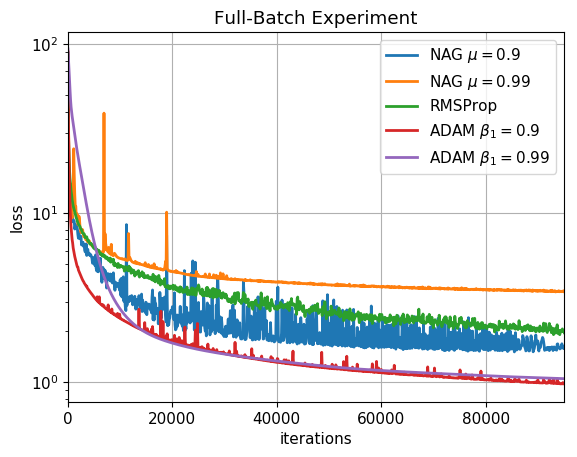}
\vspace{-6mm}\caption{5 hidden layer; 300 nodes}
\end{subfigure}
\vspace{-2mm}
\caption{Loss on training set; Input image size $22\times 22$}
\label{full_batch_trainloss_imsize_22}
\end{figure}

\begin{figure}[h!]
\centering
\begin{subfigure}[t]{0.32\textwidth}
\includegraphics[width=\textwidth]{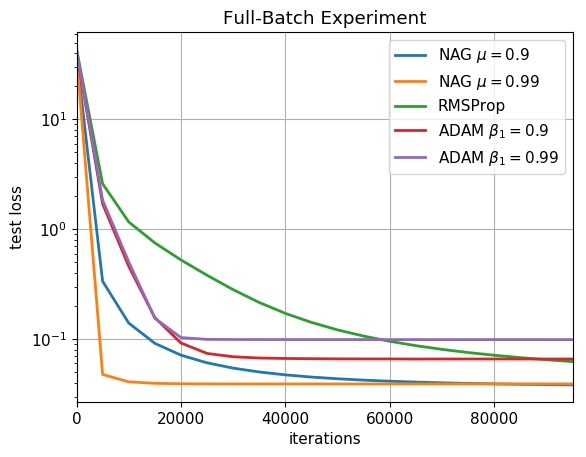}
\caption{1 hidden layer; 1000 nodes}
\end{subfigure}
\begin{subfigure}[t]{0.32\textwidth}
\includegraphics[width=\textwidth]{adam/1000_1000_test.png}
\caption{3 hidden layers; 1000 nodes each}
\end{subfigure}
\begin{subfigure}[t]{0.32\textwidth}
\includegraphics[width=\textwidth]{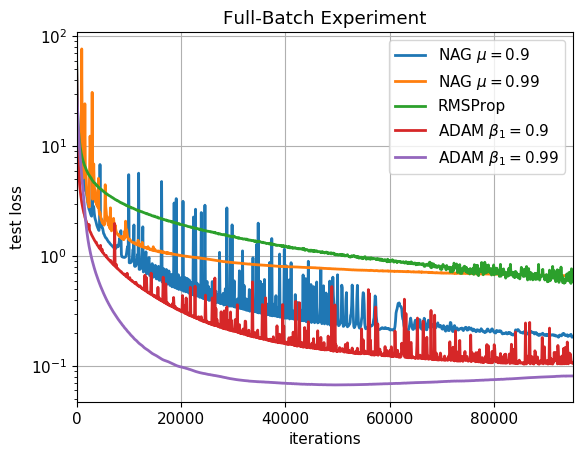}
\caption{5 hidden layers; 1000 nodes each}
\end{subfigure}
\begin{subfigure}[t]{0.32\textwidth}
\includegraphics[width=\textwidth]{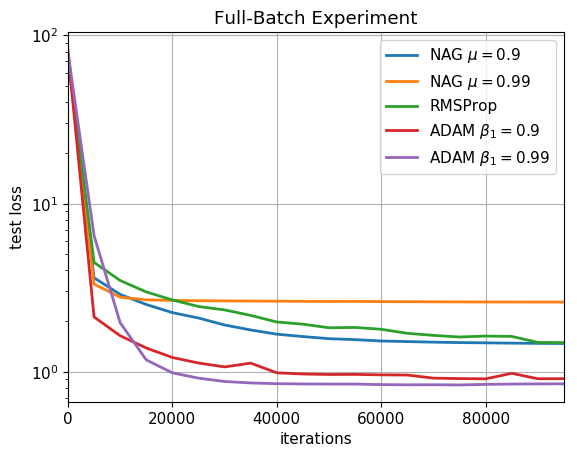}
\caption{3 hidden layers; 300 nodes}
\end{subfigure}
\begin{subfigure}[t]{0.32\textwidth}
\includegraphics[width=\textwidth]{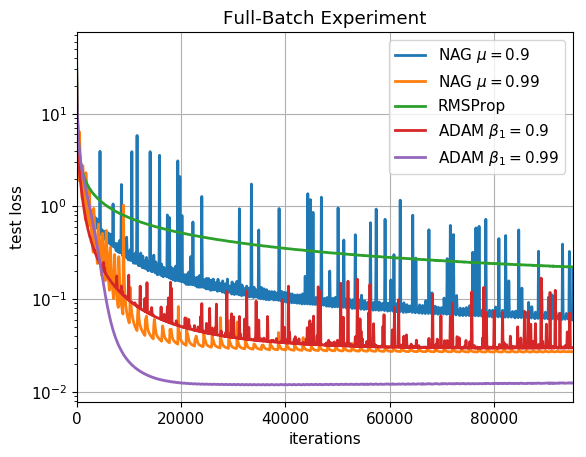}
\caption{3 hidden layer; 3000 nodes}
\end{subfigure}
\begin{subfigure}[t]{0.32\textwidth}
\includegraphics[width=\textwidth]{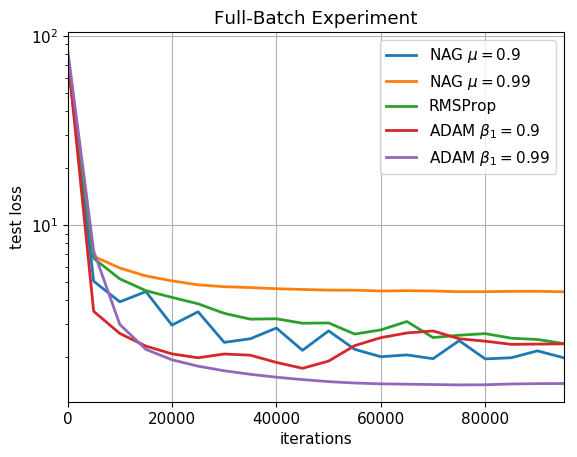}
\caption{5 hidden layer; 300 nodes}
\end{subfigure}
\caption{Loss on test set; Input image size $22\times 22$}
\label{full_batch_testloss_imsize_22}
\end{figure}

\begin{figure}[h!]
\centering
\begin{subfigure}[t]{0.32\textwidth}
\includegraphics[width=\textwidth]{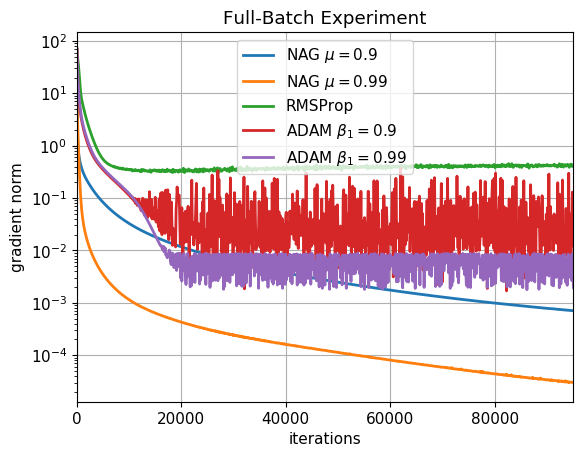}
\caption{1 hidden layer; 1000 nodes}
\end{subfigure}
\begin{subfigure}[t]{0.32\textwidth}
\includegraphics[width=\textwidth]{adam/1000_1000_grad_smooth.png}
\caption{3 hidden layers; 1000 nodes each}
\end{subfigure}
\begin{subfigure}[t]{0.32\textwidth}
\includegraphics[width=\textwidth]{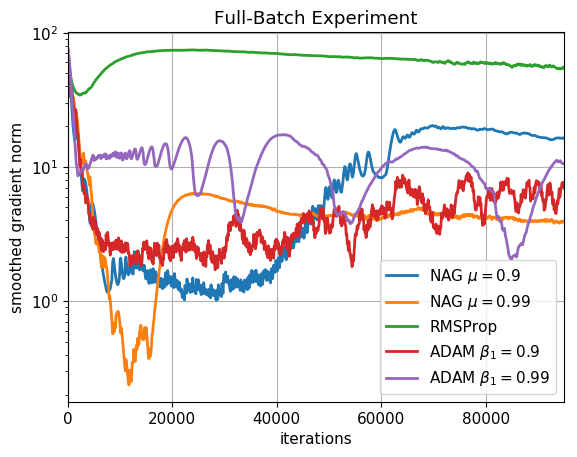}
\caption{5 hidden layers; 1000 nodes each}
\end{subfigure}
\begin{subfigure}[t]{0.32\textwidth}
\includegraphics[width=\textwidth]{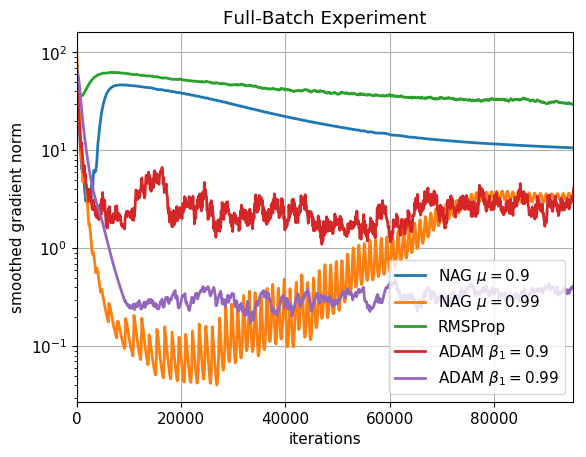}
\caption{3 hidden layers; 300 nodes}
\end{subfigure}
\begin{subfigure}[t]{0.32\textwidth}
\includegraphics[width=\textwidth]{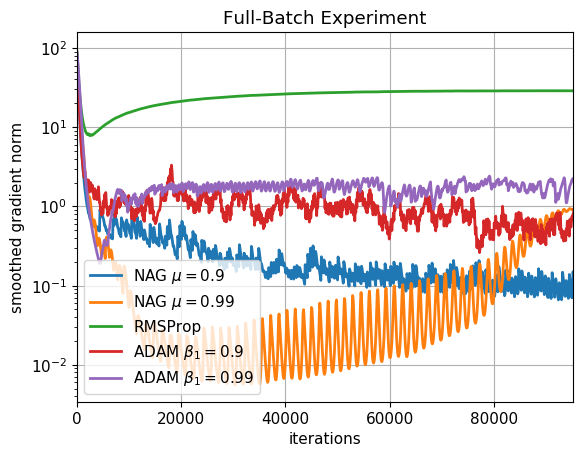}
\caption{3 hidden layer; 3000 nodes}
\end{subfigure}
\begin{subfigure}[t]{0.32\textwidth}
\includegraphics[width=\textwidth]{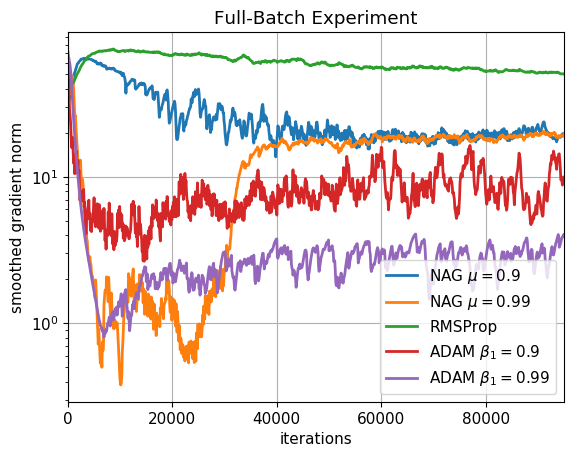}
\caption{5 hidden layer; 300 nodes}
\end{subfigure}
\caption{Norm of gradient on training set; Input image size $22\times 22$}
\label{full_batch_normgrad_imsize_22}
\end{figure}

\clearpage
\subsection{Are the full-batch results consistent across different input dimensions?}
\label{sec:supp_varydim}

To test whether our conclusions are consistent across different input dimensions, we do two experiments where we resize the $22\times 22$ MNIST image to $17\times 17$ and to $12\times 12$. Resizing is done using TensorFlow's 
\texttt{tf.image.resize\_images} 
method, which uses bilinear interpolation.

\subsubsection{Input images of size $17 \times 17$}

Figure \ref{dim_17} shows results on input images of size $17\times 17$ on a 3 layer network with 1000 hidden nodes in each layer. Our main results extend to this input dimension, where we see ADAM with $\beta_1 = 0.99$ both converging the fastest as well as generalizing the best, while NAG does better than ADAM with $\beta_1 = 0.9$.

\begin{figure}[h!]
\centering
\begin{subfigure}[t]{0.32\textwidth}
\includegraphics[width=\textwidth]{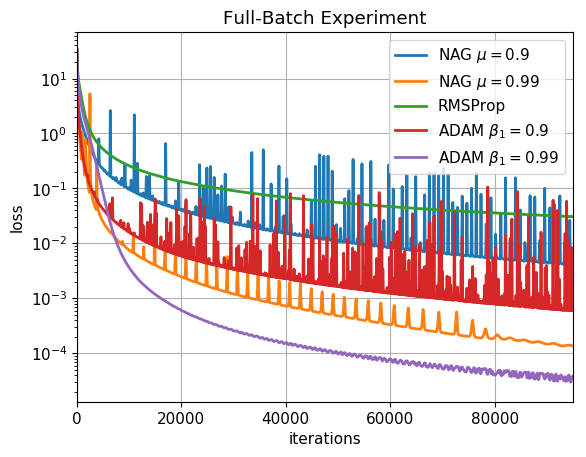}
\caption{Training loss}
\end{subfigure}
\begin{subfigure}[t]{0.32\textwidth}
\includegraphics[width=\textwidth]{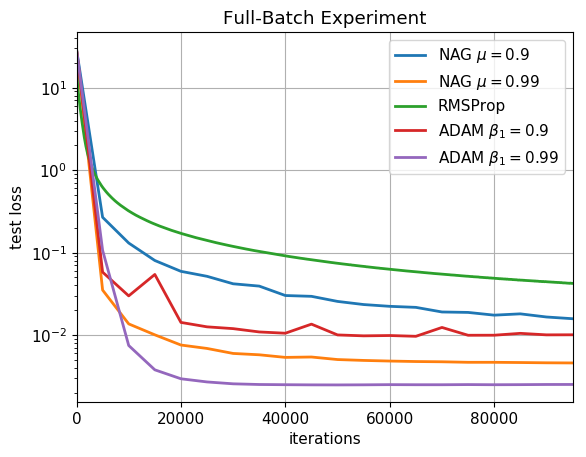}
\caption{Test loss}
\end{subfigure}
\begin{subfigure}[t]{0.32\textwidth}
\includegraphics[width=\textwidth]{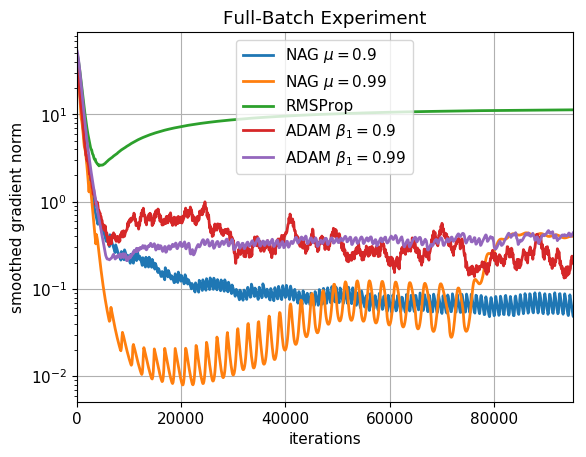}
\caption{Gradient norm}
\end{subfigure}
\caption{Full-batch experiments with input image size $17\times 17$}
\label{dim_17}
\end{figure}

\subsubsection{Input images of size $12 \times 12$}

Figure \ref{dim_12} shows results on input images of size $12\times 12$ on a 3 layer network with 1000 hidden nodes in each layer. Our main results extend to this input dimension as well. ADAM with $\beta_1 = 0.99$ converges the fastest as well as generalizes the best, while NAG does better than ADAM with $\beta_1 = 0.9$.

\begin{figure}[h!]
\centering
\begin{subfigure}[t]{0.32\textwidth}
\includegraphics[width=\textwidth]{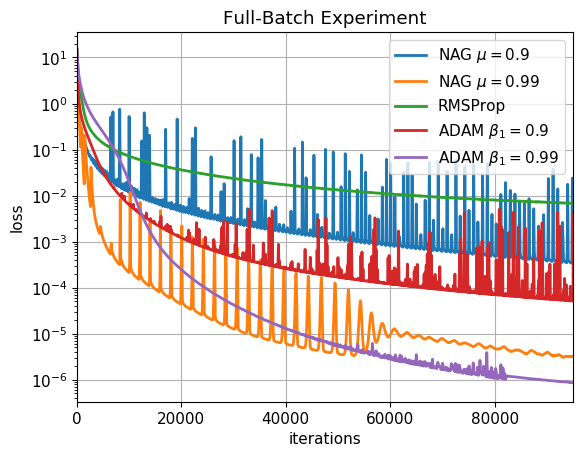}
\caption{Training loss}
\end{subfigure}
\begin{subfigure}[t]{0.32\textwidth}
\includegraphics[width=\textwidth]{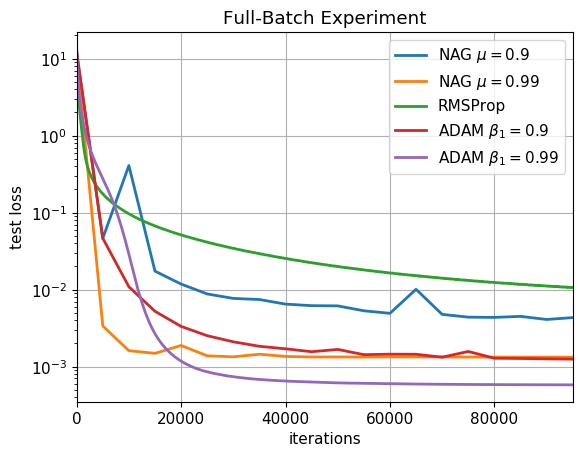}
\caption{Test loss}
\end{subfigure}
\begin{subfigure}[t]{0.32\textwidth}
\includegraphics[width=\textwidth]{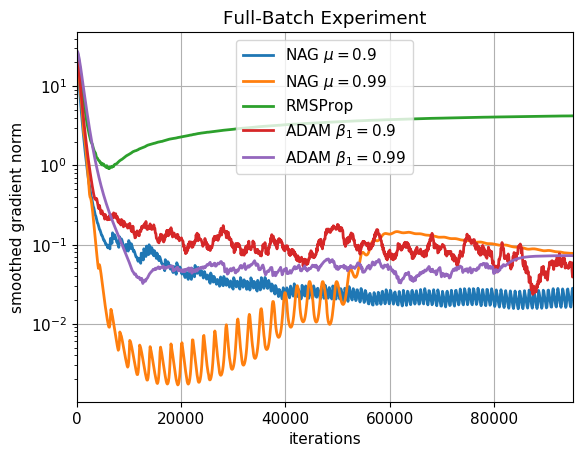}
\caption{Gradient norm}
\end{subfigure}
\caption{Full-batch experiments with input image size $12\times 12$}
\label{dim_12}
\end{figure}

\clearpage

\subsection{Additional mini-batch experiments on $22 \times 22$ sized images}
\label{sec:supp_minibatch}

In Figure \ref{minibatch_imsize_22}, we present results on additional neural net architectures on mini-batches of size 100 with an input dimension of $22 \times 22$. We see that most of our full-batch results extend to the mini-batch case.

\begin{figure}[h!]
\centering
\begin{subfigure}[t]{0.32\textwidth}
\includegraphics[width=\textwidth]{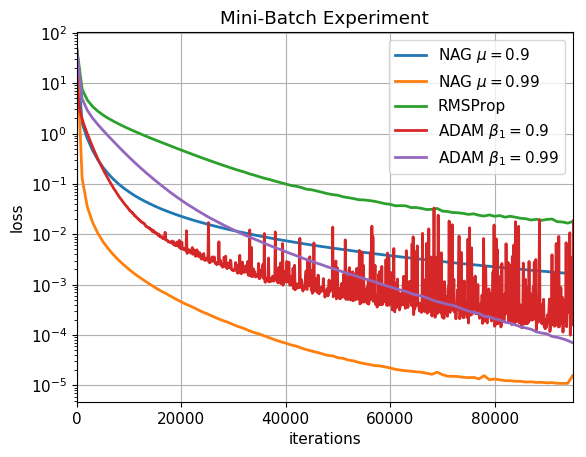}
\caption{1 hidden layer; 1000 nodes}
\end{subfigure}
\begin{subfigure}[t]{0.32\textwidth}
\includegraphics[width=\textwidth]{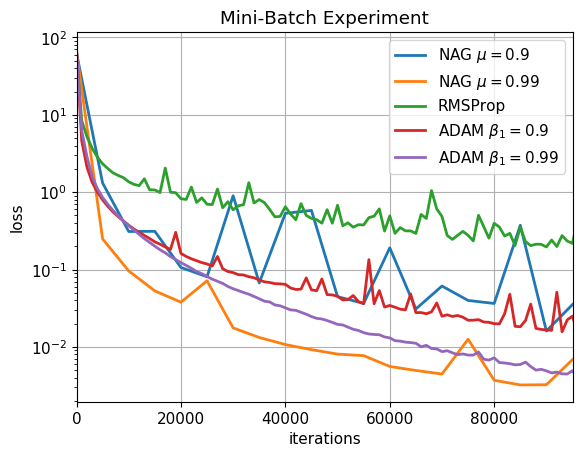}
\caption{3 hidden layers; 1000 nodes each}
\end{subfigure}
\begin{subfigure}[t]{0.32\textwidth}
\includegraphics[width=\textwidth]{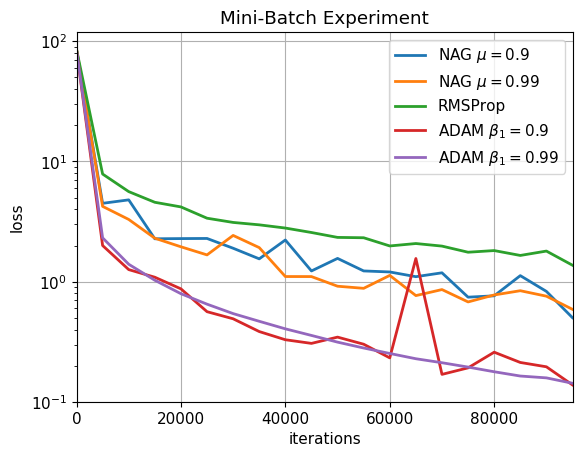}
\caption{9 hidden layers; 1000 nodes each}
\end{subfigure}
\begin{subfigure}[t]{0.32\textwidth}
\includegraphics[width=\textwidth]{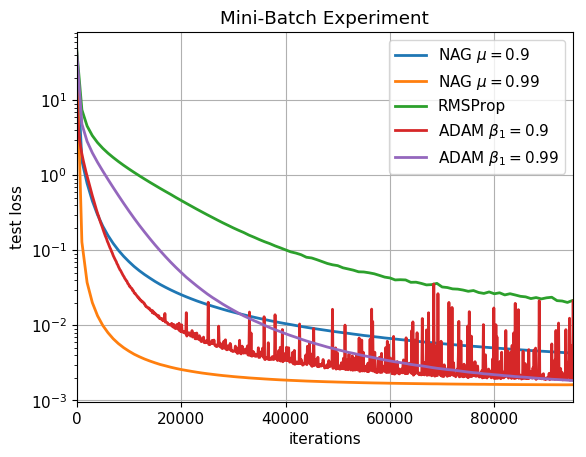}
\caption{1 hidden layer; 1000 nodes}
\end{subfigure}
\begin{subfigure}[t]{0.32\textwidth}
\includegraphics[width=\textwidth]{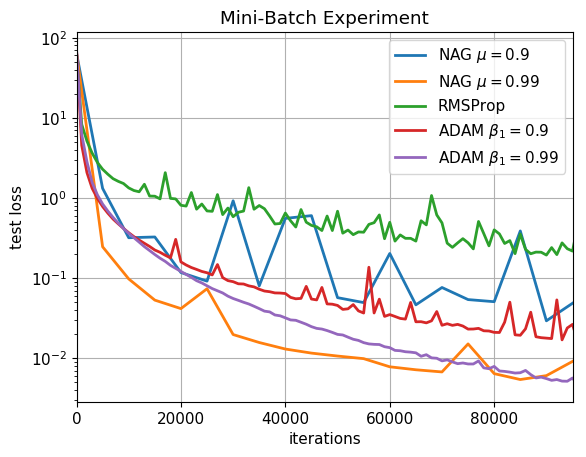}
\caption{3 hidden layers; 1000 nodes each}
\end{subfigure}
\begin{subfigure}[t]{0.32\textwidth}
\includegraphics[width=\textwidth]{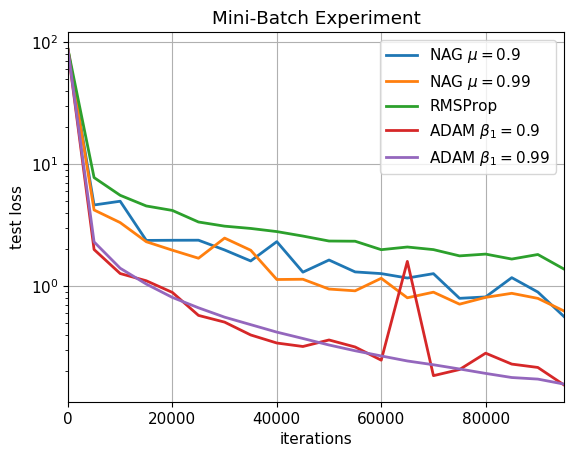}
\caption{9 hidden layers; 1000 nodes each}
\end{subfigure}
\begin{subfigure}[t]{0.32\textwidth}
\includegraphics[width=\textwidth]{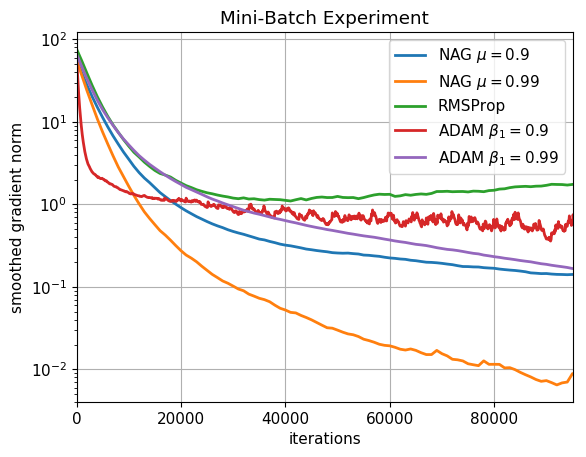}
\caption{1 hidden layer; 1000 nodes}
\end{subfigure}
\begin{subfigure}[t]{0.32\textwidth}
\includegraphics[width=\textwidth]{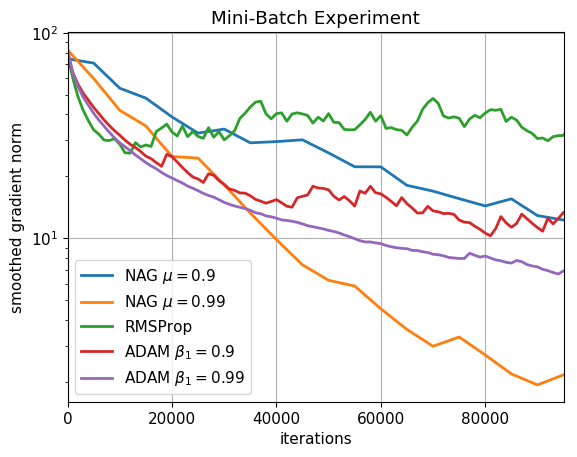}
\caption{3 hidden layers; 1000 nodes each}
\end{subfigure}
\begin{subfigure}[t]{0.32\textwidth}
\includegraphics[width=\textwidth]{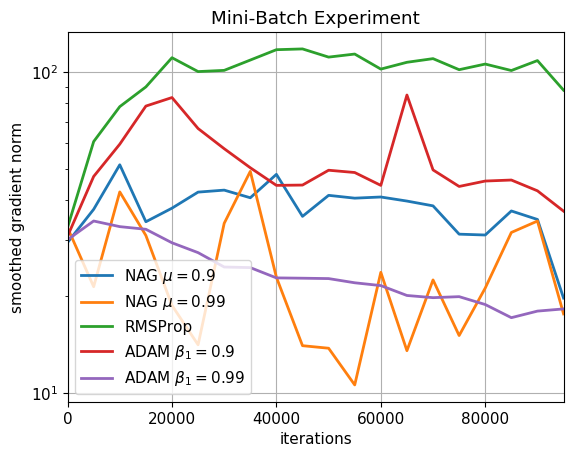}
\caption{9 hidden layers; 1000 nodes each}
\end{subfigure}
\caption{Experiments on various networks with mini-batch size 100 on full MNIST dataset with input image size $22\times 22$. First row shows the loss on the full training set, middle row shows the loss on the test set, and bottom row shows the norm of the gradient on the training set.}
\label{minibatch_imsize_22}
\end{figure}

\end{subappendices}

\chapter{\vspace{10pt} PAC-Bayesian Risk Bounds for Neural Nets}\label{chapPAC}

\section{Introduction}
At the end of the thesis we finally arrive to explore what is possibly the deepest and the hardest question about neural nets and that is to understand their risk function. A long standing open question in deep-learning is to be able to theoretically explain as to when and why do neural nets which are massively over-parameterized happen to also minimize the risk even when they fit the training data arbitrarily accurately. Attempts to explain this have led to obtaining of risk bounds which do not scale with the number of parameters being trained on, \citep{bartlett1998sample,golowich2018size,harvey2017nearly}. Recently it has been increasingly realized that good risk bounds can be obtained by making the bounds more sensitive to the training algorithm as well as the training data,\citep{arora2019fine}. 


The range of available methods to bound risk or generalization error have been beautifully reviewed in \cite{audibert2007combining}. Here the authors have grouped the techniques into primarily four categories, $(1)$ ``Supremum Bounds" (like generic chaining, Dudley integral, Rademacher complexity), $(2)$ ```Variance Localized Bounds", $(3)$ ``Data-Dependent Bounds" and $(4)$ ``Algorithm Dependent Complexity". The last category includes PAC-Bayes bounds which have resurfaced as a prominent candidate for a framework to understand risk of neural nets.  

Over the years PAC-Bayesian risk bounds have been formulated in many different forms, \\ \citep{hinton1993keeping,mcallester1999pac, langford2001bounds, mcallester2003simplified}. In the last couple of years, works like \cite{dziugaite2017computing}, \cite{dziugaite2018entropy}, \cite{dziugaite2018data} and \cite{zhou2018non} have shown the power of the PAC-Bayesian form of analysis of risk of neural nets. To the best of our knowledge ``computational" bounds as demonstrated in the above reference are the first examples of non-vacuous/non-trivial upperbounds for risks of neural nets of any kind. 

The above bounds are ``computational" in the sense that they are obtained as outputs of an algorithmic search over hyperparameters on which the posterior distribution used in the bound depends on.  These experiments strongly motivate the current work to search for stronger theoretical basis towards explaining the power of PAC-Bayesian risk bounds in explaining the generalization ability of neural nets. We make progress by identifying certain geometrical properties of the process of training nets which can be leveraged into getting better risk bounds.

\subsection{A summary of our contributions}\label{intro:theory} 
In works like \cite{nagarajan2018deterministic}, \cite{nagarajan2019generalization},  it has been previously understood that risk bound on nets get better if they can appropriately utilize the information about the distance of the trained net from initialization. In this work we take a more careful look at this idea. We can decompose the distance from initialization into two independent quantities (a) a non-compact part, the change in the norm of the vector of weights of the net (i.e the sum of Frobenius norms of the layer matrices for a net without bias weights) and (b) a compact part, the angular deflection of this vector of weights from initialization to the end of training. 
Previous PAC-Bayes bounds have used data dependent priors to track the geometric mean of the spectral norms of the layer matrices (denoted as $\beta$) and have thus tracked the first parameter above. In this work we propose a mechanism of choosing priors in a data-dependent way from a two-indexed finite set, tracking both the quantities specified above. The compact angle is tracked by the index called $\lambda$ in our two-indexed set of priors as specified in Definition \ref{ourgrid}.

Our second key innovation is that we show how in the PAC-Bayesian framework one can leverage more out of the angle parameter by simultaneously training a cluster of nets. 
In our risk bound in Theorem \ref{maincor} {\bf (the main theorem)} we imagine starting from a net $f_\B$ to get to the trained net $f_\A$ - the bold faced letter in the subscript of $f$ will denote the (very high dimensional) vector of weights of the nets.

But alongside training $f_\B$ to obtain $f_\A$, we also obtain a set $\{f_{\A_i}\}_{i=1,\ldots,k_1}$ of trained nets  - of the same architecture as $f_\A$ and obtained using the same data set and using the same algorithm as was used to obtain $f_\A$. This cluster of $k_1$ nets are obtained by doing training starting from  multiple instances of weights initialized at different weight vectors, $\{ \B_{\lambda^*,j} \}_{j=1,\ldots,k_1}$, of the {\it same norm} as $\B$ but within a cone around $\B$ whose half-angle $\lambda^*$ is determined in a data-dependent way. The angle index $\lambda$ of the set of priors, that we introduced previously, covers this choice of the conical half-angle. 

Because of this use of clusters, compared to previous bounds our dependency on the distance from initialization is also more intricate.  Our risk bound as derived in Theorem \ref{maincor} can be seen to be scaling with an effective notion of distance between any one of the $\A_i$ and the {\it initial cluster} of weights around $\B$ the $\{ \B_{\lambda^*,j} \}_{j=1,\ldots,k_1}$. The bound has the flexibility that it will allow us to choose the $\A_i$ which has the smallest value of this effective distance and thus we are able to be more sensitive to the average behaviour. That is, for $h$ being the width of the depth $d$ nets being used, if the $i^{th}$ net of the {\it final cluster} $\{ \A_j\}_{j=1,\ldots,k_1}$ is closest to the {\it initial cluster} $\{ \B_{\lambda^*,j} \}_{j=1,\ldots,k_1}$ then a crude ``order" estimate of the risk bound on the stochastic net centered at $f_\A$, that is given by Theorem \ref{maincor} can be written as,  

\begin{align*}
\bigO\Bigg(\frac{\sqrt{h\log\left(\frac{2dh}{\delta}\right)}}{\sqrt{\text{training set size}}}\times & d\cdot B\Big(\prod_{\ell=1}^{d}\norm{\text{A}_{i,\ell}}_{2}\Big)^{1-\frac{1}{d}}\times\frac{\text{inter-cluster distance between }\text{B}_{\lambda^{*},j}s~\&~\text{A}_{j}s}{\gamma}\Bigg)
\end{align*}

In above, $B$ is a bound on the input vectors at training, the $\ell^{th}-$layer matrix corresponding to $\A_i$ is denoted as $\A_{i,\ell}$, $\gamma$ is the margin value of which the margin-loss is being evaluated and the failure probability for the bound to hold is ${\cal O}(\delta)$. The exact formula for the bound given in Theorem \ref{maincor} {\bf (the main theorem)} makes it explicit that we have effectively built into the theory more data-dependent (and hence tunable) parameters which help us improve over existing bounds in multiple conceptual ways.

We would like to emphasize that our ability to exploit the cluster construction is crucially hinged on us being able to prove novel data-dependent noise resilience theorems for neural nets as given in Theorem \ref{mainNoise}. This theorem is potentially of independent interest and forms the technical core of our theoretical contribution. To the best of our knowledge this is the first such construction of a multi-parameter family of noise distributions on the weights of a neural net with guarantees of stability. In other words, if the net's weights are sampled from any of these noise distributions constructed in the theorem then w.h.p the output on the given data-set is guaranteed to not deviate too much from the given neural function on that architecture.

\paragraph{Summary of the experimental evidence in favour of our bounds} We choose to compare our results against the bounds from \cite{neyshabur2017pac} which we have in turn restated in subsection \ref{review} with more accurate tracking of the various parameters therein. There are two ways to see why they are the appropriate baseline for comparison. (a) {\it Firstly} since our primary goal is to advance mathematical techniques to get better PAC-Bayesian risk bounds on nets we want to compare to other {\it theoretical} bounds in the same framework. The result from \cite{neyshabur2017pac} are known to be the current state-of-the-art PAC-Bayesian risk bounds on nets. (b) {\it Secondly} to the best of our knowledge, for the range of depths of neural architectures that we are experimenting on, the bounds in \cite{neyshabur2017pac} are the state-of-the-art among all risk bounds (PAC-Bayesian or otherwise) on nets as has been alluded to in \cite{nagarajan2019generalization}. 

Further when two different theoretical expressions for risk bounds on nets depend on different sets of parameters of the neural net and its training process, we have to rely on empirical comparison. We recall that under similar situation this was also the adopted method of comparison to baselines for the state-of-the-art compression techniques of risk bounds in \cite{arora2018stronger}. 

In the experiments in Section \ref{sec:pac_exp} we will show multiple instances of nets trained over synthetic data and CIFAR-10 where we supersede the existing state-of-the-art in theoretical PAC-Bayesian bounds of \cite{neyshabur2017pac}. On these two datasets we probe very different regimes of neural net training in terms of the typical values of the angular deflection seen when obtaining the trained weights $\A$ from the initial weights $\B$. In both these situations our bounds are lower than those from, \cite{neyshabur2017pac} as we increase both the depth and the width. {\it Since we plot the bounds in the log scale for comparison we can conclude from the figures that not only do we have lower/tighter upperbounds we indeed also have better ``rates" of dependencies on the architectural parameters.}

In the experiments we also demonstrate different properties of the path travelled by the neural net's weight vector during training and these we report in Section \ref{sec:geom}. For instance we observe that the $2-$norm of the weight vector of the net increases during training by a multiplicative factor which varies very little across all the experiments. The factor is between $1$ and $3$ and is fairly stable for one order of magnitude change in depths. 


\subsection{Reformulation of the PAC-Bayesian risk bound on neural nets from, \cite{neyshabur2017pac}}\label{review}

We start from Theorem 31.1 in \cite{shalev2014understanding} on PAC-Bayesian risk bounds which we re-state below as Theorem~\ref{pacmain}. 

\begin{definition}
Let ${\cal H}$ be a hypothesis class, let $h \in \mathcal{H}$, let ${\cal D}$ be a distribution on an instance space $Z$, let $\ell : {\cal H} \times Z \rightarrow [0,1]$ be a loss function and let $S$ be a finite subset of $Z$. Let $m$ denote the size of $S$, i.e. the training-set size. $z \sim \mathcal{D}$ denotes sampling $z$ from $\mathcal{D}$ and with slight abuse of notation $z \sim S$ denotes sampling $z$ from a uniform distribution over $S$ whenever $S$ is a finite set. Further we define the expected and empirical risks for $h$ as
\[ L(h) := \E_{z \sim {\cal D}} [ \ell(h,z) ]
\text{ and } \hat{L}(h) := \E_{z \sim S} [\ell (h,z) ] \]
\end{definition}

\begin{theorem}{\bf (PAC-Bayesian Bound On Risk)}\label{pacmain}
Consider being given ${\cal D}$, ${\cal H}$ and $\ell$ as defined above. Let ${\cal H}$ also be equipped with the structure of a probability space. Let $P, Q$ be two distributions over ${\cal H}$ called the ``prior'' and the ``posterior'' distribution respectively. Let $S \sim {\cal D}^m(Z)$ and then for every $\delta \in (0,1)$ we have the following guarantee:

\begin{equation*}
\resizebox{0.5\textwidth}{!}{$\mathbb{P}_{S} \left [ \forall Q \text{ } \E_{h \sim Q} [ L(h) ] \leq  \E_{h \sim Q} [ \hat{L}(h) ] {+} \sqrt{\frac{\text{KL}\left (Q \vert \vert P \right ) + \log \frac m \delta }{2(m-1)}}  \right ] \geq 1 {-} \delta$}
\end{equation*}
\end{theorem}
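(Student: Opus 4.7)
The plan is to combine three classical ingredients: a sub-Gaussian concentration estimate for the generalization gap of a fixed hypothesis, the Donsker--Varadhan change-of-measure inequality, and Markov's inequality together with Fubini to turn a prior-expectation statement into a high-probability bound that is simultaneously valid for every posterior $Q$. The order of operations is important: the change-of-measure step is applied sample-wise so that the resulting inequality quantifies over all $Q$ on a single high-probability event in $S$.

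First I would fix an arbitrary $h\in\mathcal{H}$ and, writing $\Delta(h):=L(h)-\hat L(h)$, establish a sub-Gaussian-type exponential moment bound of the form $\E_S\big[e^{2(m-1)\Delta(h)^2}\big]\leq m$. Because $\ell(h,\cdot)\in[0,1]$, Hoeffding's inequality applied to the i.i.d.\ sample $S$ gives the two-sided tail $\mathbb{P}_S[|\Delta(h)|\geq \epsilon]\leq 2e^{-2m\epsilon^2}$; integrating this tail against the weight $e^{2(m-1)\epsilon^2}$ via the layer-cake identity produces precisely a bound of order $m$. The "$m-1$" rather than "$m$" in the exponent is exactly what keeps the resulting integral finite and explains the denominator $2(m-1)$ appearing in the final bound.

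Next, for each fixed $S$, I would apply the Donsker--Varadhan variational formula to the nonnegative function $g(h):=2(m-1)\Delta(h)^2$, which yields, for every posterior $Q\ll P$,
\[
2(m-1)\,\E_{h\sim Q}[\Delta(h)^2]\;\leq\;\text{KL}(Q\Vert P)+\log\E_{h\sim P}\big[e^{2(m-1)\Delta(h)^2}\big].
\]
Crucially this inequality holds for every $Q$ on every $S$, so controlling the second term on the right uniformly in $S$ with high probability will give the uniform-over-$Q$ statement that the theorem claims.

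Finally, by Fubini and the moment bound from step one,
\[
\E_S\,\E_{h\sim P}\big[e^{2(m-1)\Delta(h)^2}\big]\;=\;\E_{h\sim P}\,\E_S\big[e^{2(m-1)\Delta(h)^2}\big]\;\leq\;m,
\]
so Markov's inequality gives that with $S$-probability at least $1-\delta$ the prior-expected exponential is at most $m/\delta$. On this event, substitution into the change-of-measure bound yields $\E_{h\sim Q}[\Delta(h)^2]\leq (\text{KL}(Q\Vert P)+\log(m/\delta))/(2(m-1))$ simultaneously for all $Q$, and Jensen's inequality $(\E_{h\sim Q}[\Delta(h)])^2\leq\E_{h\sim Q}[\Delta(h)^2]$ followed by a square root delivers the stated inequality. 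The only mildly technical step is the exponential-moment estimate of step one; everything else is essentially plug-and-play once the change-of-measure inequality is in hand.
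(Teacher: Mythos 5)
The paper does not prove this statement; it is cited verbatim as Theorem $31.1$ of Shalev--Shwartz and Ben-David, so there is no in-paper proof to compare against. Your outline follows the canonical route taken in that reference: an exponential moment bound for a fixed hypothesis, a samplewise Donsker--Varadhan change of measure, Fubini plus Markov, then Jensen, with the quantifier order handled correctly so that the final inequality holds simultaneously for all $Q$ on a single high-probability event in $S$.

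There is, however, a gap in the constant that your proof as written does not close. You integrate the two-sided Hoeffding tail $\mathbb{P}_S[|\Delta(h)|\geq\epsilon]\leq 2e^{-2m\epsilon^2}$ against the layer-cake weight, but that computation gives $\E_S\big[e^{2(m-1)\Delta(h)^2}\big]\leq 1+8(m-1)\int_0^\infty\epsilon e^{-2\epsilon^2}\,d\epsilon=2m-1$, not $m$; carrying $2m-1$ through Markov replaces $\log\frac{m}{\delta}$ by $\log\frac{2m-1}{\delta}$, which is strictly larger than the bound claimed. To obtain the stated constant, use only the one-sided tail $\mathbb{P}_S[L(h)-\hat{L}(h)\geq\epsilon]\leq e^{-2m\epsilon^2}$ and work with $\Delta_+(h):=\max\{0,L(h)-\hat{L}(h)\}$; the same layer-cake integration then yields $\E_S\big[e^{2(m-1)\Delta_+(h)^2}\big]\leq 1+(m-1)=m$ exactly. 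The Jensen step survives this substitution because the claimed inequality is vacuous when $\E_{h\sim Q}[\Delta(h)]\leq 0$ (the right-hand side is nonnegative), and otherwise $0<\E_{h\sim Q}[\Delta(h)]\leq\E_{h\sim Q}[\Delta_+(h)]$ gives $(\E_{h\sim Q}[\Delta(h)])^2\leq\E_{h\sim Q}[\Delta_+(h)^2]$ by the same monotonicity-plus-Jensen argument. With this one-sided refinement your proof matches the statement exactly; as written it proves a version with a slightly worse logarithmic term.
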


The above theorem shows that given a finite sample $S$ from $Z$, we can choose the posterior distribution $Q$ as a function of $S$ and the above bound on generalization error is still guaranteed to hold w.h.p. The mechanism of choosing this $Q$ in such a data-dependent way is made critical by the trade-offs between keeping the expected empirical risk of $Q$ low and the {\rm KL} divergence between $P$ and $Q$ low.  



The above theorem applies to a wide class of loss functions, but for getting risk bounds specific to neural nets, hence forth we will focus on the following setup of classification loss.

\begin{definition}[\textbf{Margin Risk of a Multiclass Classifier}]\label{loss}
Define $\chi \subseteq \R^n$ to be the input space. Let $k \ge 2$ be the number of classes and set $Z = \chi \times \{1, \ldots, k\}$ for the rest of the chapter. Let $f_\w: \chi \to \R^k$ be a $k-$class classifier parameterized by the weight vector $\w$ and by ``$f(\x)[y]$" we shall mean the $y^{th}$ coordinate of the output of $f$ when evaluated on $\x$. Let $\gamma >0$ be the ``margin" parameter and then the ``$\gamma-$Margin Risk'' of $f$ is
\[L_{\gamma}(f) := \mathbb{P}_{(\x,y)\sim {\cal D}} \Big [ f(\x)[y] \leq \gamma + \max_{i \neq y} f(\x) [i] \Big ].\]
Analogously $\hat{L}_\gamma(f)$ denotes the ``$\gamma-$Empirical Margin Risk'' of $f$ computed on a finite sample $S \subset Z$. We will use $m := \vert S \vert$.  
\end{definition}

Given this we can now present Theorem~\ref{precisepac} below which is a slight variant of Lemma $1$ of \cite{neyshabur2017pac} and for completeness we give its proof in Appendix \ref{proof:precisepac}. 

\begin{theorem}[\textbf{A special case of the PAC-Bayesian bounds for the margin loss}]\label{precisepac}



We follow the notation from Definition \ref{loss}. Let $\{ f_\w \mid \w \in {\rm W} \}$ denote a hypothesis class of $k-$class classifiers, where ${\rm W}$ is a space of parameters. Let $P$ be any distribution (the "data-independent prior") on the space {\rm W}. Then for any $\gamma > 0, \w \in {\rm W}$, and finite sample $S \subset Z$, define the family $D_{\gamma,\w,S}$ of distributions on {\rm W} such that for any $\mu \in D_{\gamma,\w,S}$, we have, 

\[ P_{\w' \sim \mu} \left [ \max_{(\x,\y)\in S} \norm{ f_{\w'}(\x) - f_\w(\x) }_\infty < \frac{\gamma}{4} \right ] \geq \frac{1}{2}  \]

Then for any $\gamma >0$ and $\delta \in [0,1]$, the following holds 

\begin{align}
\nonumber &\mathbb{P}_{S} \Bigg [ \forall \w \text{ and } \mu \in D_{\gamma,\w,S}  ~\exists\text{  distribution }\tilde{\mu} \text{ on } {\rm W} ~\text{ s.t. }\\ 
&\mathbb{E}_{\tilde{\w} \sim \tilde{\mu}} [ L_{0} (f_{\tilde{\w}}) ] \leq \hat{L}_{\frac \gamma 2} (f_{\w}) + \sqrt{\frac{\text{KL}(\mu \vert \vert P) + \log \frac{3m}{\delta}}{m-1}} \Bigg ] \geq 1 - \delta \label{eq:12}
\end{align}


\end{theorem}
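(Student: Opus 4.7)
The plan is to reduce Theorem~\ref{precisepac} to the master PAC-Bayes bound (Theorem~\ref{pacmain}) applied to the $0$-margin loss, with the posterior chosen to be $\mu$ conditioned on the ``perturbation-stable'' event, and then to absorb the cost of this conditioning into the constants. First I would, given $\w \in {\rm W}$ and $\mu \in D_{\gamma,\w,S}$, define
\[
E_\w \;:=\; \{\w' \in {\rm W} : \max_{(\x,y)\in S} \|f_{\w'}(\x) - f_\w(\x)\|_\infty < \gamma/4\},
\]
which by hypothesis satisfies $\mu(E_\w) \geq 1/2$, and take $\tilde\mu := \mu(\,\cdot\,\mid E_\w)$, i.e., $\mu$ restricted to $E_\w$ and renormalized, as the posterior whose existence the theorem asserts.

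Next I would apply Theorem~\ref{pacmain} with the bounded $0$-margin loss $\ell(\w',z) = \mathbf{1}[f_{\w'}(\x)[y] \leq \max_{i\neq y} f_{\w'}(\x)[i]]$, prior $P$, and posterior $Q$ ranging uniformly over all distributions on ${\rm W}$ (so in particular $Q = \tilde\mu$ is allowed for any fixed $\w,\mu$, and the joint quantifier over $\w$ and $\mu \in D_{\gamma,\w,S}$ in the target statement comes for free from this uniformity). This yields, with probability at least $1-\delta$ over the draw of $S$,
\[
\mathbb{E}_{\tilde\w \sim \tilde\mu}[L_0(f_{\tilde\w})] \;\leq\; \mathbb{E}_{\tilde\w \sim \tilde\mu}[\hat L_0(f_{\tilde\w})] + \sqrt{\frac{\text{KL}(\tilde\mu \| P) + \log(m/\delta)}{2(m-1)}}.
\]
It then remains to (a) replace the empirical term by $\hat L_{\gamma/2}(f_\w)$ and (b) replace $\text{KL}(\tilde\mu\|P)$ by $\text{KL}(\mu\|P)$. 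Step (a) reduces to a short margin calculation: setting $m_\w(\x,y) := f_\w(\x)[y] - \max_{i\neq y}f_\w(\x)[i]$, any $\w' \in E_\w$ satisfies $m_{\w'}(\x,y) \geq m_\w(\x,y) - \gamma/2$ for each $(\x,y) \in S$, because one coordinate of $f_{\w'}(\x)$ can drop by $\gamma/4$ while another rises by $\gamma/4$. Therefore $\{m_{\w'} \leq 0\} \subseteq \{m_\w \leq \gamma/2\}$ pointwise on $S$, so $\hat L_0(f_{\w'}) \leq \hat L_{\gamma/2}(f_\w)$ for every $\w'$ in the support of $\tilde\mu$, and integrating gives $\mathbb{E}_{\tilde\mu}[\hat L_0] \leq \hat L_{\gamma/2}(f_\w)$.

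The main obstacle, as I see it, is step (b): relating the conditioned KL to the original one cannot be done by monotonicity in either argument. The plan is to use the chain-rule decomposition
\[
\text{KL}(\tilde\mu\|P) \;=\; \text{KL}(\tilde\mu\|\mu) + \mathbb{E}_{\tilde\mu}\!\left[\log\tfrac{d\mu}{dP}\right] \;=\; \log\tfrac{1}{\mu(E_\w)} + \tfrac{1}{\mu(E_\w)}\int_{E_\w} p\log p\,dP,
\]
where $p := d\mu/dP$, and to bound each summand using $\mu(E_\w)\geq 1/2$. The first summand is at most $\log 2$. For the second, I would write $\int_{E_\w} p\log p\,dP = \text{KL}(\mu\|P) - \int_{E_\w^c} p\log p\,dP$ and invoke the elementary bound $x\log x \geq -1/e$ (for $x\geq 0$) to conclude $\int_{E_\w^c} p\log p\,dP \geq -1/e$; together these yield $\text{KL}(\tilde\mu\|P) \leq 2\,\text{KL}(\mu\|P) + C$ for the absolute constant $C = \log 2 + 2/e$. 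Substituting into the PAC-Bayes inequality displayed above---the factor of $2$ in front of the KL is precisely what turns the denominator $2(m-1)$ into $m-1$, while the additive constant $C$ is absorbed by enlarging $\log(m/\delta)$ to $\log(3m/\delta)$ (valid whenever $m/\delta \geq 1$)---yields the bound claimed in~\eqref{eq:12}.
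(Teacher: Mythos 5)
Your proposal is correct and its overall architecture coincides with the paper's: condition the candidate posterior $\mu$ on the good set $E_\w$ (called $S_\w$ in the paper), observe that every weight in the support of the conditioned distribution $\tilde\mu$ shrinks margins on $S$ by at most $\gamma/2$ so that $\hat L_0(f_{\w'}) \le \hat L_{\gamma/2}(f_\w)$ pointwise on that support, and then bound $\text{KL}(\tilde\mu\|P)$ in terms of $\text{KL}(\mu\|P)$ with constants that can be folded into $\log\frac{3m}{\delta}$. The one place you genuinely deviate is the KL estimate. The paper decomposes $\text{KL}(\mu\|P)$ symmetrically as $Z\,\text{KL}(\tilde\mu\|P) + (1-Z)\,\text{KL}(\tilde\mu^{\,c}\|P) + Z\log Z + (1-Z)\log(1-Z)$ with $Z=\mu(E_\w)$, rearranges for $\text{KL}(\tilde\mu\|P)$, drops the nonnegative $\text{KL}(\tilde\mu^{\,c}\|P)$ term, and bounds $|Z\log Z + (1-Z)\log(1-Z)|\le 1$ to get $\text{KL}(\tilde\mu\|P)\le 2(\text{KL}(\mu\|P)+1)$. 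You instead expand $\text{KL}(\tilde\mu\|P) = -\log\mu(E_\w) + \tfrac{1}{\mu(E_\w)}\int_{E_\w}p\log p\,dP$, pull out $\text{KL}(\mu\|P)$, and control the remainder $\int_{E_\w^c}p\log p\,dP$ directly via the pointwise bound $x\log x\ge -1/e$, getting $\text{KL}(\tilde\mu\|P)\le 2\,\text{KL}(\mu\|P) + \log 2 + 2/e$. Both routes are valid and both clear the $\log\frac{3m}{\delta}$ threshold (requiring only $m\ge\delta$, which always holds); your constant $\log 2 + 2/e \approx 1.43$ is a hair tighter than the paper's $2$, and avoids introducing the complementary conditional $\tilde\mu^{\,c}$ at all. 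In fact, the paper's dropping of $\text{KL}(\tilde\mu^{\,c}\|P)\ge 0$ together with $-(1-Z)\log(1-Z)\le 1/e$ is exactly equivalent to your $x\log x\ge-1/e$ step applied on $E_\w^c$; the paper just uses the cruder constant $1$. One stylistic note: your invocation of Theorem~\ref{pacmain} makes the uniformity over $Q$ (and hence over $\w$ and $\mu$) explicit, which is a useful clarification the paper's write-up leaves implicit.
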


The proof of the above in Appendix \ref{proof:precisepac} gives an explicit expression of the distribution $\tilde{\mu}$ in terms of $\mu$ and $\w$. 
As is usual in practice the PAC-Bayesian bound in Theorem~\ref{precisepac} will typically be used on a predictor whose weights have been obtained after training on a data-set $S$. Now we give a restatement of the risk bound on neural nets that was presented in \cite{neyshabur2017pac} and towards that we need the following definition,

\begin{definition}[\textbf{Multiclass Neural-Network classifier}]\label{Bdef}
Define $f_\A: \R^n \to \R^k$ to be a depth-$d$ neural-network with maximum width $h$, whose $\ell^\text{th}$ layer has weight matrix $\A_\ell$.\footnote{This network does not have any bias weights.} The first $d-1$ layers of $f_\A$ use the \textsc{Relu} non-linear activation.\footnote{{In general any 1-Lipschitz activation will do.}} $\A := [\mathrm{vec}(\A_1); \ldots; \mathrm{vec}(\A_d)]$ is the vector of parameters formed by concatenating vectorized layer matrices and each coordinate of $\A$ is a distinct trainable weight in the net.
Let ${\cal N}_{(\A,\sigma^2)}$ denote the isotropic multivariate Gaussian probability distribution with mean $\A$ and variance $\sigma^2 \mathbf{I}$
Define $\beta(\A) = \big ( \prod\nolimits_{\ell=1}\nolimits^d \norm{A_\ell}_2 \big )^{1/d}$. We will omit the argument $\A$ whenever the neural-network under consideration is clear from the context. Clearly $\beta^d$ upper bounds the Lipschitz-constant for $f_\A$.
\end{definition}

Using the definitions above, we now present Theorem~\ref{NBSTheorem} where we give a reformulation of the ``spectrally-normalized margin bound'' originally given in \cite{neyshabur2017pac}. 

\begin{theorem}[{\bf Spectrally-Normalized Margin bound}]\label{NBSTheorem}
 Let $S$ and $m$ be as defined in Definition \ref{loss} and $B$ be a bound on the norm of the input space $\chi$ from Definition \ref{loss}. Let $f_\w$ be a given neural net with parameters as above and let its layer matrices be $\{W_\ell\}_{\ell=1}^d$.  Construct a grid $\mathcal{B}$, called the ``beta-grid", containing $K =  \frac{d}{2} \times \Big(\frac {\sqrt{m-1}} { 2 \exp(3 - 2 / (d-1)) }\Big)^{1/d}$ uniformly spaced points covering the interval $\Big[ \Big(\frac{\gamma}{2B}\Big)^{1/d},  \Big(\frac{\sqrt{m-1}\gamma} { 4 \exp(3 - 2 / (d-1)) B}\Big)^{1/d} \Big]$. 
 If $\tilde{\beta} = \argmin_{x \in \mathcal{B}} | x - \beta(\w)|$ and 
\begin{align} 
\sigma (\tilde{\beta}) := \frac{1} {d\sqrt{2h\log(4 d h)}} \min &\Big \{ \frac{\gamma}{4  e^2 B  \tilde{\beta} ^{d-1} }, \frac{\tilde{\beta}}{e^{\frac 1 {d-1}}} \Big\}
\end{align} 
Then we have the following guarantee for all $\delta \in (0,\frac 1 K)$, 

\begin{align} 
\mathbb{P}_{S \sim {\cal D}^m} \Bigg [ \exists\, \tilde{\mu}_\w \text{ s.t }
\mathbb{E}_{\w + \tilde{\u} \sim \tilde{\mu}_\w} [ L_{0} (f_{\w + \tilde{\u}}) ] \leq \hat{L}_{\frac{\gamma}{2}}(f_\w) + \sqrt{\frac 1 {m-1}}\sqrt{ \sum\nolimits_{\ell=1}\nolimits^d \frac{\norm{W_\ell}_F^2}{2 \sigma(\tilde{\beta})^2} + \log \frac{3m}{\delta}} \Bigg ] \geq 1 - K\delta \label{eq:nbs}
\end{align} 
\end{theorem}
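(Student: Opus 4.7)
The plan is to instantiate Theorem~\ref{precisepac} with a Gaussian posterior centered at the trained weights $\w$, and a Gaussian prior centered at the origin whose variance is chosen from a finite grid so as to remain data-independent. Specifically, take $\mu = \mathcal{N}_{(\w,\sigma^2)}$ and, for each grid point $\tilde\beta \in \mathcal B$, let $P_{\tilde\beta} = \mathcal{N}_{(\mathbf{0},\sigma(\tilde\beta)^2)}$. The grid is chosen so that for every possible $\beta(\w)$ in the admissible range there is some $\tilde\beta \in \mathcal B$ within a multiplicative factor of $e^{1/d}$; consequently $\tilde\beta^{d-1}$ and $\beta^{d-1}$ differ by at most a constant factor, which is what will let us substitute $\tilde\beta$ for the (data-dependent) $\beta$ inside $\sigma$ without losing too much.

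The heart of the argument is the noise-resilience step needed to verify the hypothesis of Theorem~\ref{precisepac}, namely that $\Pr_{\w' \sim \mu}[\max_{(\x,y)\in S} \|f_{\w'}(\x)-f_\w(\x)\|_\infty < \gamma/4] \ge 1/2$. First I would prove the standard layer-wise perturbation lemma: if $\w' = \w + \u$ with $\|\u_\ell\|_2 \le \tfrac{1}{d}\|W_\ell\|_2$ for every layer $\ell$, then for any input $\x$ with $\|\x\|\le B$, $\|f_{\w+\u}(\x)-f_\w(\x)\|_2 \le e B\, \beta^{d}\sum_{\ell=1}^{d} \|\u_\ell\|_2/\|W_\ell\|_2$. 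This is obtained by unrolling the network, using $1$-Lipschitzness of ReLU, the identity $\prod(1+x_\ell)\le \exp(\sum x_\ell)$ on the amplification of perturbations, and the bound $\|W_\ell\|_2 \le \beta$ absorbed into $\beta^{d}$. Second, for $\u \sim \mathcal N(0,\sigma^2 I)$ I would apply a standard Gaussian-norm concentration bound to each layer separately, giving $\Pr[\|\u_\ell\|_2 \le \sigma\sqrt{2h\log(4dh)}] \ge 1-1/(2d)$, and then union-bound over $d$ layers to conclude that all layers satisfy this simultaneously with probability at least $1/2$. Plugging this per-layer bound into the perturbation lemma and demanding the result be less than $\gamma/4$ yields $\sigma \le \gamma/(4e d B \beta^{d-1}\sqrt{2h\log(4dh)})$; meanwhile the hypothesis $\|\u_\ell\|_2 \le \|W_\ell\|_2/d$ translates into $\sigma \le \beta/(d\sqrt{2h\log(4dh)})$ after accounting for the $e^{1/(d-1)}$ slack between $\beta$ and $\tilde\beta$. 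These are precisely the two terms in the definition of $\sigma(\tilde\beta)$, with the outer minimum ensuring both constraints hold simultaneously.

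Next I would compute the KL term. Because both $\mu$ and $P_{\tilde\beta}$ are isotropic Gaussians with the same variance $\sigma(\tilde\beta)^2$, one has $\mathrm{KL}(\mu\|P_{\tilde\beta}) = \|\w\|_2^2/(2\sigma(\tilde\beta)^2) = \sum_{\ell=1}^{d}\|W_\ell\|_F^2/(2\sigma(\tilde\beta)^2)$, which is exactly the quantity appearing inside the square root on the right-hand side of~\eqref{eq:nbs}. Applying Theorem~\ref{precisepac} with this $\mu$ and $P_{\tilde\beta}$ at confidence $\delta$ yields the stated inequality for the particular grid point $\tilde\beta$; a union bound across the $K$ grid points boosts the failure probability to at most $K\delta$, matching the claim. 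The range of the grid is calibrated so that if $\beta(\w)$ falls outside $[(\gamma/(2B))^{1/d},\ldots]$ then the margin loss $\hat L_{\gamma/2}(f_\w)$ is trivially controlled, so we may restrict attention to $\beta(\w)$ in the covered interval.

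The main obstacle I anticipate is the bookkeeping required to align the grid construction with the two competing constraints on $\sigma$ (small enough to control the perturbation, but not so small that the KL term blows up) while simultaneously respecting the discretization-induced slack between $\beta$ and $\tilde\beta$. In particular, choosing the spacing of $\mathcal B$ and the endpoints so that (i) the approximation $\tilde\beta \approx \beta$ is tight up to the constant $e^{1/d}$, (ii) the number of grid points $K$ has the clean form stated, and (iii) the final $\sigma(\tilde\beta)$ expression emerges as the stated minimum, is a finicky calculation; but it is essentially the same discretization trick used in the original Neyshabur et al. argument, and the main new task is to propagate constants consistently so that the version stated in Theorem~\ref{NBSTheorem} is obtained verbatim.
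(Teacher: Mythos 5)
Your proposal is correct and follows essentially the same route as the paper's proof in Appendix~\ref{NBSProof}: the same layer-wise perturbation lemma (Theorem~\ref{thmg2}), the same per-layer spectral-norm concentration for Gaussian perturbations with a union bound over the $d$ layers, the same two competing constraints on $\sigma$ giving the minimum in $\sigma(\tilde\beta)$, the identical Gaussian KL computation, and the same discretized grid of data-independent priors combined via Theorem~\ref{thm:b1}. The only minor omission is that you don't explicitly invoke the ReLU homogeneity argument allowing one to assume WLOG that all layer spectral norms equal $\beta$, but your derived constraints implicitly use it, so the proof goes through unchanged.
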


For completeness we have re-derived the above in Appendix \ref{NBSProof}. 

\begin{remark}
 Theorem~\ref{NBSTheorem} above slightly differs from the original statement of Theorem $1$ in \cite{neyshabur2017pac} because of the following adaptations and improvements that we have made, (a) we tracked the various constants more carefully, (b) we have removed some of their assumptions and have chosen to report the bound as being on a stochastic neural risk as is most natural in this context and (c) we used a more refined way to account for the data-dependent priors.
\end{remark}

\section{A noise resilience guarantee for a certain class of neural nets}
\label{sec:pert}

\begin{definition}[\textbf{Mixture Parameters}]\label{mixdef} 
Let $k_1 \ge 2$ denote the number of components in a ``mixture'' distribution. Let $\mathcal{A} = \{ \A_i \in \R^{\dim(\A)} \mid  i=1,..,k_1 \}$ denote a set of neural net weight vectors on the underlying architecture of $f_\A$. Let $\mathcal{P} = \{p_i \mid i=1, \ldots, k_1\}$ be a set of non-zero scalars. The mixture weights satisfy $\sum_i p_i = 1$. Define 
$\beta_i := \beta(\A_i)$ as defined in Definition \ref{Bdef}. Further define $\A_{i,\ell}$ to be the $\ell^\text{th}$ layer of $\A_i$. 
\end{definition}

Using the above setup we can state our main technical result as follows,

\begin{theorem}[{\bf Controlled output perturbation with noisy weights from a mixture of Gaussians}]\label{mainNoise}
Given $S$ and $\chi$ as in Definition \ref{loss}, let $B > 0$ be s.t the input space  $\chi$ is a subset of the ball of radius $B$ around the origin in $\R^n$. Further let $f_{\A}$ and $\beta$ be as in Definition \ref{Bdef} and $ \mathcal{A}, \mathcal{P}$ and $\{ \beta_i\}_{i=1,\ldots,k_1}$ as in Definition \ref{mixdef}, we choose any $\epsilon >0$ s.t the following inequalities hold,
\[ \forall\ i {\in} \{1\ldots k_1\}, \x {\in} S\  \norm{f_{\A_i}(\x) {-} f_{\A}(\x)} \leq \epsilon \norm{f_\A(\x)} \]
Then for every $\gamma > \epsilon \max_{\x \in S} \norm{f_\A(\x)}\text{ and }\delta \in (0,1)$, we have,
\begin{align} 
\nonumber \mathbb{P}_{A' \sim \textrm{MG(posterior)}}
  &{\Big[} \max_{\x \in S} \norm{f_{A'}(\x) {-} f_A(\x)} {>} 2 \gamma  {\Big]}
    \leq \delta
\end{align}
Where $\text{MG(posterior)}(\w) = \sum_i p_i \mathcal{N}_{(\A_i, \sigma^2)}(\w)$ and $\sigma \le \frac{1}{\sqrt{2h \log \left ( \frac{2dhk_1}{\delta} \right )}}  \min_{1 \le i \le k_1} \min \left \{ \frac{ \beta_i }{d} , \frac{\gamma}{k_1 e d B p_i \beta_i ^{d-1}}  \right \}$.
\end{theorem}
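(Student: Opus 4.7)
The plan is to reduce this Gaussian-mixture perturbation question to a per-component analysis by exploiting the fact that a sample $A' \sim \text{MG(posterior)}$ can be realized as $A' = A_I + u$, where $I$ is a discrete random variable with $\mathbb{P}[I=i] = p_i$ and $u \sim \mathcal{N}(0, \sigma^2 \mathbf{I})$ is an independent isotropic Gaussian on $\mathbb{R}^{\dim(\A)}$. Conditional on $I=i$ the triangle inequality gives
\begin{align*}
\|f_{A'}(x) - f_A(x)\| \;\leq\; \|f_{A_i+u}(x) - f_{A_i}(x)\| \,+\, \|f_{A_i}(x) - f_A(x)\|,
\end{align*}
and the hypothesis $\|f_{A_i}(x) - f_A(x)\| \leq \epsilon\|f_A(x)\|$ together with $\gamma > \epsilon\max_{x\in S}\|f_A(x)\|$ makes the second summand strictly smaller than $\gamma$. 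Hence $\max_{x\in S}\|f_{A'}(x) - f_A(x)\| > 2\gamma$ implies, conditional on $I=i$, that $\max_{x\in S}\|f_{A_i+u}(x) - f_{A_i}(x)\| > \gamma$, which is an event purely about a single-Gaussian perturbation of the $i$-th component.

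I would then control this latter single-component event by the standard Neyshabur-style spectral-norm perturbation bound: if $\|u_\ell\|_2 \le \|A_{i,\ell}\|_2/d$ for every layer, then $\|f_{A_i+u}(x) - f_{A_i}(x)\|_2 \leq e\,B\,\bigl(\prod_\ell \|A_{i,\ell}\|_2\bigr)\sum_\ell \|u_\ell\|_2 / \|A_{i,\ell}\|_2$, which for isotropic Gaussian $u$ will be combined with the Gaussian-spectral-norm tail bound together with a union bound across the $d$ layers to produce $\|u_\ell\|_2 \leq \sigma\sqrt{2h\log(2dh/\delta_i')}$ for all $\ell$ with probability at least $1-\delta_i'$. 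Substituting this layerwise bound yields the two conditions on $\sigma$ that appear in the statement: the ``activation-pattern-preservation'' constraint $\sigma\sqrt{2h\log(\cdot)} \leq \beta_i/d$, and the ``output-accuracy'' constraint that forces $e\,d\,B\,\beta_i^{d-1}\,\sigma\sqrt{2h\log(\cdot)}$ below the component-specific noise budget for component $i$.

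To glue the per-component bounds into a statement about the mixture, I will use the law of total probability $\mathbb{P}[\text{bad}] = \sum_i p_i\,\mathbb{P}_u[\text{bad}\mid I=i] \leq \sum_i p_i\,\delta_i'$, and will choose the per-component failure allocations $\delta_i'$ so that this weighted sum equals $\delta$ while simultaneously matching the stated form of $\sigma$; specifically, the choice $\delta_i' = \delta/(k_1 p_i)$ gives $\sum_i p_i \delta_i' = \delta$, and propagating it through the output-accuracy constraint produces the $k_1 p_i$ factor appearing in the denominator of the second expression in the $\min$ for $\sigma$, while the uniform logarithmic factor $\log(2dhk_1/\delta)$ arises from upper-bounding the per-component $\log(1/\delta_i')$ terms by a single quantity that applies simultaneously across all $k_1$ components and all $d$ layers. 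The hard part will be matching the exact constants: carefully allocating failure probabilities so that the Gaussian-tail logarithm collapses to $\log(2dhk_1/\delta)$ on the outside and the $k_1 p_i$ factor sits cleanly outside on the output-accuracy constraint, while ensuring the triangle-inequality split does not leak an extra factor into the $2\gamma$ on the left-hand side.
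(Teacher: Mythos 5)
Your reduction is sound up to and including the conditioning step: conditioning on the discrete index $I = i$ and using the triangle inequality with the matching center $\A_i$ is a valid (and in fact cleaner) reformulation of what the paper does — the paper takes a detour through $Z = \min_i \|f_{\A'}(\x) - f_{\A_i}(\x)\|$ and then immediately discards the $\min$ by bounding $Z \mid Y{=}j$ by $\|f_{\P}(\x) - f_{\A_j}(\x)\|$, which is exactly the term you isolate directly. The layer-wise Gaussian spectral-norm tail bound, the Neyshabur perturbation lemma, and the union bound over the $d$ layers are also the right tools and match the machinery inside the paper's Lemma~\ref{secondmainlemma}. So the skeleton of your argument is correct.

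The gap is in the last paragraph, where you claim the allocation $\delta_i' = \delta/(k_1 p_i)$ makes the $k_1 p_i$ factor "sit cleanly outside on the output-accuracy constraint." It does not. Trace it: the per-component tail bound gives $\mathbb{P}[\exists \ell\, \|U_\ell\| > t] \le 2dh\, e^{-t^2/(2h\sigma^2)}$, and requiring this to be at most $\delta_i' = \delta/(k_1 p_i)$ forces $t \ge \sigma\sqrt{2h\log(2dhk_1 p_i/\delta)}$ — so the $k_1 p_i$ lands \emph{inside} the logarithm, not as a multiplicative factor on $\gamma$. The output-accuracy constraint $edB\beta_i^{d-1} t \le \gamma$ still reads $t \le \gamma/(edB\beta_i^{d-1})$ with a bare $\gamma$; there is no mechanism in your per-component argument that shrinks each component's error budget to $\gamma/(k_1 p_i)$. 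In the paper that factor arises from a genuinely different step: Lemma~\ref{secondmainlemma} controls the $k_1$ Gaussian perturbations $\{U_i\}$ \emph{simultaneously} (union bound over all $d k_1$ matrices), bounds the \emph{weighted sum} $\sum_i p_i \|f_{\A_i + U_i} - f_{\A_i}\| \le edB\sum_i p_i\beta_i^{d-1} t_i$, and then distributes the budget $\gamma$ across the $k_1$ summands via the sufficient condition $t_i \le \gamma/(k_1 e d B p_i \beta_i^{d-1})$, which is where $k_1 p_i$ enters. Your law-of-total-probability route and the paper's weighted-sum route are both valid ways to control the mixture, but they do not produce the same $\sigma$ constraint; in particular when some $p_i < 1/k_1$ the theorem's stated admissible $\sigma$ can exceed what your version of the argument would permit, so as written your proof would not establish the theorem at the stated constant. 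If you instead allocate $\delta_i' = \delta$ (which already satisfies $\sum_i p_i\delta_i' = \delta$), your approach cleanly yields the admissibility condition $\sigma \le \min_i \min\{\beta_i/d,\ \gamma/(edB\beta_i^{d-1})\}/\sqrt{2h\log(2dh/\delta)}$, a different — and arguably tighter — sufficient condition than the one in the theorem, but one you should present as such rather than claim it reproduces the paper's formula.
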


The above theorem has been proven in Section \ref{proof:mainNoise}.

\section{Our PAC-Bayesian risk bound on neural nets}

Now we use Theorem \ref{mainNoise} about controlled perturbation of nets to write the following theorem which is adapted to the setting of the experiments to be described in  Section \ref{sec:pac_exp}. 
Towards that we define the following notion of a ``nice" training data which captures the effect that a set of nets evaluates to almost the same output on some given dataset,

\begin{definition}[{\bf ``Nice'' training dataset}]\label{nice}
Given neural weights $\A$ and $\{\A_i\}_{i=1,\ldots,k_1}$  as in Definition \ref{mixdef}, we call a training dataset $S$ as $(\epsilon,\gamma)-$nice w.r.t. them if it satisfies the following conditions: 

\begin{enumerate}
\item $\max_{\x \in S} \norm{f_{\A_i}(\x) - f_{\A}(\x)} \leq \epsilon \norm{f_{\A}(\x)}, \forall 1 {\le} i {\le} k_1$
\item $\gamma > \epsilon \max_{\x \in S} \norm{f_{\A}(\x)}$
\end{enumerate}
\end{definition}

Next we define a two-indexed set of priors which will be critical to our PAC-Bayesian bounds. 

\begin{definition}[Our $2-$indexed set of priors]\label{ourgrid} 
Let $B > 0$ be s.t the input space $\chi$ in Definition \ref{loss} is a subset of the ball of radius $B$ around the origin in $\R^n$. Given $S,m$ as in Definition \ref{loss} and $d,h$ as in Definition \ref{Bdef}, we choose scalars $d_{\min}, \gamma, \delta >0$ s.t.  the following interval $I$ is non-empty,

\begin{align*} 
I :=  &\Bigg [  \Big ( \frac{\gamma}{2B} \Big )^{\frac 1 d}    , \left ( \frac{\gamma \sqrt{2(m-1)}}{(8Be^3d d_{\min})\sqrt{2h\log \Big ( \frac {2dh}{\delta} \Big )}} \right )^{\frac {1}{(d-1)}} \Bigg ] 
\end{align*}

Let $f_\B$ be a neural network. Consider a finite set of indices $\Lambda = \{1,\ldots,314\}$. For each $\lambda \in \Lambda$ we are given $k_1$ distinct neural net weights $\{ \B_{\lambda,j} \}_{j=1}^{k_1}$ within a conical half-angle of $0.01\lambda$ around $\B$. For each $\lambda$, we construct a grid $\mathcal{B}_\lambda$, called the ``beta-grid", containing at most, 

\[ K_1 = \frac{d}{2} \times \frac{\left  ( \frac{\gamma \sqrt{2(m-1)}}{(8Be^3d d_{\min})\sqrt{2h\log \Big ( \frac {2dh}{\delta} \Big )}} \right  )^{\frac {1}{(d-1)}}}{\Big ( \frac{\gamma}{2B} \Big )^{\frac 1 d}  } \] 

points inside the interval $I$ specified above. Now for each $\lambda \in \Lambda$ and $\tilde{\sigma} \in \mathcal{B}_\lambda$ we consider the following mixture of Gaussians $\frac{1}{k_1}\sum_{j=1}^{k_1}  {\cal N}_{(\B_{\lambda,j},\tilde{\sigma}^2 I)}$.  Thus we have a grid of priors of total size $K := 314K_1$.
\end{definition}

{\remark (a) Note that this set of mixture of Gaussian priors above indexed by $\lambda$ and $\tilde{\sigma}$ corresponds to the set of distributions that we call $\{\pi_i\}$ in the general Theorem \ref{thm:b1} (b) The specific choice of the set $\Lambda$ given above is only for concreteness and to keep the setup identical to the experiments in sections \ref{sec:pac_exp} and \ref{sec:geom} and this choice is {\it not} crucial to the the main theorem to be given next.} 

\paragraph{The choice of the parameters $\Lambda, \gamma, \epsilon$ and $d_{\min}$} The parameter $B$ gets fixed by the assumption of boundedness of the data space $\chi$ and we choose the training data size $m$. The set $\Lambda$ above is a convenient choice that we make motivated by experiments as a way to index a grid on the $\pi-$radians of possible deflection that can happen when the net $f_\B$ is trained to some final net (which we have been denoting as $f_\A$ as in subsection \ref{intro:theory}).  Also note that in practice when presented with the neural nets $f_\A$ and $\{f_{\A_i}\}_{i=1}^{k_1}$ (which in turn fixes the value of depth $d$ and width $h$) and the training data set $S$ we would choose the smallest values of $\gamma$ and $\epsilon$ so that the conditions in Definition \ref{nice} are satisfied. Then we choose  $\delta$ (typically $\delta = 0.05$) which determines our confidence parameter $1-\delta$. At this point except $d_{\min}$ all other parameters are fixed that go into determining the interval $I$ above. Now we can just choose $d_{\min}$ low enough so that the interval $I$ is non-empty. Once $d_{\min}$ is chosen the value of $K_1$ and hence the size of the prior set also gets fixed.

Now we use the above definitions and the notations therein to state our main theorem as follows, 

\begin{figure}[htbp]
\begin{center} 
   \includegraphics[trim=10 8 10 35, clip, scale =0.43]{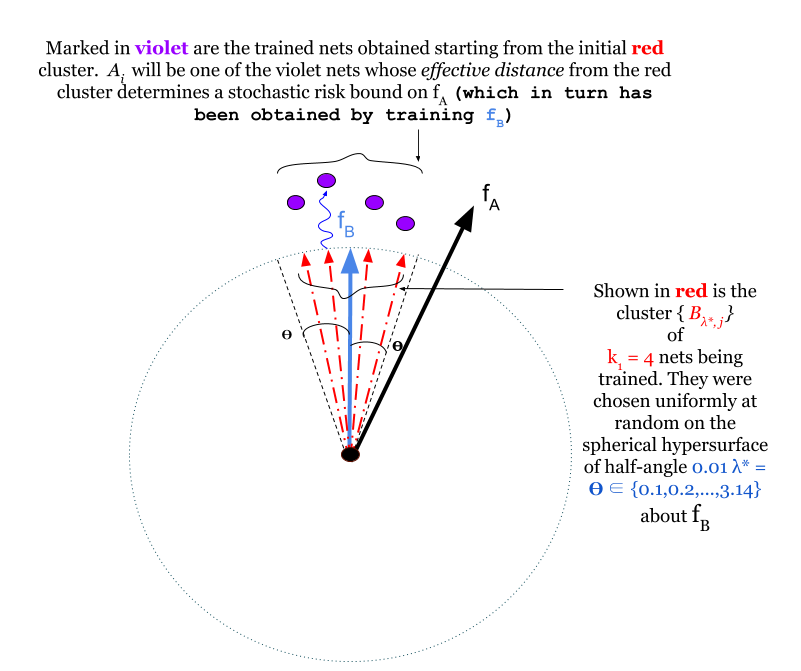}
 \caption{Starting from the weight vector $\B$ we get the trained weight vector $\A$. $\Theta$ is the angle to which the angle of deflection $\measuredangle (\A, \B)$ has been discretized to.} \label{fig:Bound}
 \end{center} 
\end{figure}

\newpage 
\begin{theorem}[{\bf Gaussian-Mixture PAC-Bayesian Bound}]\label{maincor}
As indicated in Figure ~\ref{fig:Bound}, suppose we train using the dataset $S$ to obtain the trained net $f_\A$ from an initial neural net $f_\B$. Let $\alpha = \arccos\frac{\langle \A, \B \rangle}{\norm{\A} \norm{\B}}$. Let $\lambda^* = \argmin_{\lambda \in \Lambda}|0.01\lambda - \alpha|$.  Further, let the neural weight vectors $\{ \A_i\}_{i=1,\ldots,k_1}$ be obtained by training the nets $\{ f_{\B_{\lambda^*,j}} \}_{j=1,\ldots,k_1}$ on $S$. Further for each such $i$ define $d_{i, _*} = \min_{j=1,\ldots,k_1} \norm{\A_i - \B_{\lambda^*,j}}^2$ and $\tilde{\beta}_i = \argmin_{x \in \mathcal{B}_{\lambda^*}} | x - \beta(\A_i)|$.
 
Then it follows that for all $\epsilon >0$ and $\delta \in (0,\frac{1}{K})$, 
\begin{align}
\nonumber &\mathbb{P}_{S} \Bigg [ \exists\, 
\tilde{\mu}_{\A} \text{ s.t. }, \forall i \text{ s.t. } d_{i,_*} \geq d_{\min},  \mathbb{E}_{\A + \tilde{\u} \sim \tilde{\mu}_\A} [ L_{0} (f_{\A + \tilde{\u}}) ] \leq \hat{L}_{\frac \gamma 2} \left (f_{\A} \right ) +\\
&\nonumber \sqrt{\frac 1 {m-1}}\sqrt{{-\log\Big(  \frac{1}{k_1} \sum\nolimits_{j=1}\nolimits^{k_1}    \exp(- {\frac 1 {2\tilde{\sigma}_i^2}}\|\A_i - \B_{\lambda^*,j}\|^2) \Big) + \log \frac{3m}{\delta}}} \\
&\;\;\;\;\Big\vert\;\; S \text{ is }(\epsilon,\gamma)\text{-nice w.r.t }\{ \A, \A_{i=1,\ldots,k_1} \} \Bigg]\geq 1 - K\delta \label{eq:ours}
\end{align}
where $\tilde{\sigma}_i^2 := \frac{1}{2h \log \left ( \frac{2dh}{\delta} \right )}  \left (  \min \left \{  \frac{ \tilde{\beta}_i }{de^{\frac {1}{d-1}}} , \frac{\frac \gamma 8}{e^2dB \tilde{\beta}_i ^{d-1}}  \right \}  \right )^2$ 
\end{theorem}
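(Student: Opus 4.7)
The strategy is to apply the general PAC-Bayesian bound of Theorem~\ref{precisepac} with a Gaussian posterior centered at $\A_i$ and a mixture-of-Gaussian prior drawn from the two-indexed grid of Definition~\ref{ourgrid}, for each index $i$ with $d_{i,*} \ge d_{\min}$. Specifically, fix such an $i$, let $\tilde{\beta}_i$ be the nearest grid point in $\mathcal{B}_{\lambda^*}$ to $\beta(\A_i)$, and consider the posterior $\mu^{(i)} := \mathcal{N}(\A_i, \tilde{\sigma}_i^2 I)$ and the prior $\pi_i := \frac{1}{k_1}\sum_{j=1}^{k_1}\mathcal{N}(\B_{\lambda^*, j}, \tilde{\sigma}_i^2 I)$. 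The prior $\pi_i$ is one of the mixtures enumerated by the grid in Definition~\ref{ourgrid} and is therefore a valid data-independent prior for Theorem~\ref{precisepac}, at the price of a union-bound factor of $K$ on the failure probability. The ``exists $\tilde{\mu}_\A$'' assertion will be witnessed by taking $\tilde{\mu}_\A = \tilde{\mu}_{i^*}$ for the $i^*$ that minimizes the resulting right-hand side over valid $i$.

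The first main technical step is to verify the noise-resilience hypothesis of Theorem~\ref{precisepac}: namely that sampling $\w' \sim \mu^{(i)}$ produces a net $f_{\w'}$ whose outputs on $S$ lie within $\gamma/4$ of $f_\A$ with probability at least $1/2$. I would invoke Theorem~\ref{mainNoise} with the singleton cluster $\mathcal{A}=\{\A_i\}$, $p_1 = 1$, the target taken to be $f_\A$ rather than $f_{\A_i}$, output tolerance $\gamma/4$, and failure probability $1/2$. The niceness of $S$ in the sense of Definition~\ref{nice} (after absorbing universal constants that match the ``$\gamma/8$'' appearing inside the formula for $\tilde{\sigma}_i^2$) supplies exactly the output-closeness hypotheses required by Theorem~\ref{mainNoise}. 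The chosen $\tilde{\sigma}_i$ matches the upper bound on $\sigma$ prescribed by Theorem~\ref{mainNoise} once one replaces $\beta(\A_i)$ by its grid approximation $\tilde{\beta}_i$, and this replacement only costs a constant factor inside $\tilde{\sigma}_i$ because the grid $\mathcal{B}_{\lambda^*}$ is fine enough by construction.

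The second main step is to upper bound the KL divergence of a single isotropic Gaussian against a uniform mixture of isotropic Gaussians with the same covariance. Writing the log-density ratio explicitly and changing variables to $Z = \w - \A_i$ recasts $\text{KL}(\mu^{(i)} \| \pi_i)$ as $\log k_1 - \mathbb{E}_Z[\mathrm{LSE}(\cdot)]$, where LSE is a log-sum-exp whose arguments are affine in $Z$ with zero mean (the $j$th argument equals $-Z^{\top}(\A_i - \B_{\lambda^*,j})/\tilde{\sigma}_i^2 - \|\A_i - \B_{\lambda^*,j}\|^2/(2\tilde{\sigma}_i^2)$). Jensen's inequality applied to the convex LSE function (using $\mathbb{E}[Z]=0$) yields the clean bound $-\log\!\left(\frac{1}{k_1}\sum_j \exp(-\|\A_i - \B_{\lambda^*,j}\|^2/(2\tilde{\sigma}_i^2))\right)$, which is precisely the expression appearing inside the square root in the final bound. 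Substituting into Theorem~\ref{precisepac} and taking a union bound over the $K$ priors in the grid completes the proof. The main obstacle I expect is bookkeeping: handling the discretizations $\beta(\A_i) \mapsto \tilde{\beta}_i$ and $\alpha \mapsto 0.01\lambda^*$ without the corresponding approximations inflating $\tilde{\sigma}_i$ past the tolerance of Theorem~\ref{mainNoise}, and verifying that the constants in the niceness condition line up with the $\gamma/4$ output-perturbation target required by Theorem~\ref{precisepac}.
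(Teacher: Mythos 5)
Your overall strategy matches the paper's proof in its three structural steps: (i) use Theorem~\ref{mainNoise} with a singleton cluster $\{\A_i\}$, $p_1=1$, and a rescaled $\gamma$ to verify the noise-resilience hypothesis of Theorem~\ref{precisepac} for the posterior $\mathcal{N}(\A_i,\tilde{\sigma}_i^2 I)$ relative to the fixed predictor $f_\A$; (ii) apply Theorem~\ref{precisepac} with the mixture prior $\frac{1}{k_1}\sum_j \mathcal{N}(\B_{\lambda^*,j},\tilde{\sigma}_i^2 I)$; and (iii) union-bound over the $K$-element grid of priors via Theorem~\ref{thm:b1}. The one place where you take a genuinely different route is the KL step. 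The paper invokes the general Gaussian--vs--Gaussian-mixture KL upper bound of Durrieu--Thiran--Kelly (Theorem~\ref{ourkl}), then specializes it to the case of equal variances, where the extra $\frac{n}{2}(\sigma_f^2/\sigma_{\text{GM}}^2 - 1) + n\log(\sigma_{\text{GM}}/\sigma_f)$ terms vanish. You instead change variables to $Z=\w-\A_i$, observe that the normalizing constants cancel because posterior and prior share the same isotropic variance, factor out $\exp(-\|Z\|^2/2\sigma^2)$, write $\text{KL} = \log k_1 - \mathbb{E}_Z[\mathrm{LSE}(\cdot)]$, and lower-bound $\mathbb{E}_Z[\mathrm{LSE}]$ by Jensen using the convexity of LSE and $\mathbb{E}[Z]=0$. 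This is an elementary, self-contained derivation that yields exactly the same $-\log\bigl(\frac{1}{k_1}\sum_j e^{-\|\A_i-\B_{\lambda^*,j}\|^2/2\tilde{\sigma}_i^2}\bigr)$ bound as Theorem~\ref{ourkl}; it is arguably cleaner here, but buys nothing beyond what the paper's cited lemma gives (and the cited lemma is more general, handling unequal covariances).

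The one part you acknowledge but do not actually carry out is essential to why the bound is well-posed: the beta-grid $\mathcal{B}_{\lambda^*}$ covers only the finite interval $I$ of Definition~\ref{ourgrid}, so when $\beta(\A_i)$ lies outside $I$ the nearest grid point $\tilde{\beta}_i$ need not satisfy $|\beta(\A_i)-\tilde{\beta}_i|\le\beta(\A_i)/d$, and the noise-resilience step would fail. The paper's proof resolves this by a case analysis: when $\beta(\A_i)\le(\gamma/2B)^{1/d}$ the outputs are too small for a $\gamma$-margin so $\hat{L}_{\gamma/2}=1$ and the claim is vacuous; when $\beta(\A_i)$ exceeds the right endpoint of $I$, the KL term already exceeds $m-1$ (this is where the hypothesis $d_{i,*}\ge d_{\min}$ enters), making the square-root term exceed $1$ and again rendering the claim vacuous. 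This trivial-case analysis is precisely what fixes the endpoints of $I$ and hence the grid size $K_1$; filing it under ``bookkeeping'' underestimates its role in making the argument close.
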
  

The proof of the above Theorem has been given in Section \ref{app:maincor}

\newpage 
{\remark 
We emphasize that the structure of the above theorem is the same as that of the general PAC-Bayes bound as stated in Theorem \ref{pacmain}. The distribution $\tilde{\mu}_\A$ above is a choice of the posterior distribution that is called $Q$ in the general theorem. Hence its w.r.t this $\tilde{\mu}_\A$ that we are bounding with high probability the stochastic risk of the neural net $f_\A$ under the loss function $L_0$

Secondly the distribution $\tilde{\mu}_{\A}$ here is explicitly constructed such that, for the sampled noisy weights $\A +\tilde{\u}$ it is ensured that $\max_{\x \in S} \norm{ f_{\A + \tilde{\u}} (\x) - f_\A (\x)}_\infty < \frac{\gamma}{4}$. Also we emphasize that in the above $\lambda^*$ as defined is s.t $0.01 \times \lambda^*$  is the closest angle in the set  $\{0.01,0.02,\ldots,3.14\}$ to the angle between the initial net's weight vector $\B$ and the trained net's weight vector $\A$
}


We note the following salient points about the above theorem, 

\begin{itemize}
    \item In our experiments the nets $\{ \A_i\}_{i=1,\ldots,k_1}$ will be obtained by training the nets $\{ f_{\B_{\lambda^*,j}} \}_{j=1,\ldots,k_1}$ on the same training data $S$ using the same method by which we obtain $f_\A$ from $f_\B$.
    
    \item We emphasize that the above setup is not tied to any specific method of obtaining the initial and final clusters of nets. For example there are many successful heuristics known for ``compressing" a neural net while approximately preserving the function. One can envisage using the above theorem when such a heuristic is used to get such a cluster $\{B_{\lambda^*,j}\}$ from $\B$ and $\{\A_i\}$ from $\A$. 
    
    
    \item For each of the different choices amongst $\{\A_i\}_{i=1}^{k_1}$ which satisfy the condition $d_{i,*} \geq d_{\min}$ we can get a different upper bound on the risk in the LHS. Hence this theorem gives us the flexibility to choose amongst $\{\A_i\}_{i=1}^{k_1}$ those that give the best bound. 
    
    \item We note that after a training set $S$ has been sampled, we require no niceness condition on the nets which will have to hold over the entire domain of the nets. The upperbound as well as the confidence on the upperbound are all entirely determined by the chosen data-set $S$.
    
    \item Corresponding to the experiments in the next section we will observe in section \ref{sec:geom} that the angular deflection $\alpha$ above is predominantly determined by the data-distribution from which $S$ is being sampled from and at a fixed width it decreases slightly with increasing depth. 
    
    The improvements seen in our approach to PAC-Bayesian risk bounds strongly suggest that the consistent patterns of dilation of the weight matrix norms (also reported in section \ref{sec:geom}) and the angular deflection of the net's weight vector merit further investigation.
\end{itemize}

\section{Experimental comparison of our bounds with \cite{neyshabur2017pac}}\label{sec:pac_exp}

Here we present empirical comparisons between \textsc{Our} PAC-Bayesian risk bound on nets in Theorem \ref{maincor} and the result in Theorem \ref{NBSTheorem} reproduced from \cite{neyshabur2017pac}, which we denote as the \textsc{NBS} bound.


We compute the two bounds for neural nets without bias weights (as needed by these two theorems stated above). We posit that the fair way to do comparison is to only choose the initial net, data-set and the training algorithm (including a stopping criteria) and to compute the different theories on whatever is the net obtained at the end of the training. We test the theories on the following different classification tasks (a) binary classification tasks on parametrically varied kinds of synthetic datasets which have two linearly separable clusters and (b) multi-class classification of the CIFAR$-10$ dataset. In both the cases we study effects of varying the net's width and depth. 

It is to be noted that all the following experiments have been also done over varying sizes of the training data set and the advantage displayed here of \textsc{Our} bound over \textsc{NBS}'s result, is robust to this change.

\newpage 
\subsection{CIFAR-10 Experiments}\label{comp:cifar}

Here the nets we train are of depths $2,3,\ldots,8$ and $16$ and we vary the number of ReLU gates in a hidden layer ($h$) between $100$ and $200$. We train the networks to a test-accuracy of approximately $50\%$ which is close to the best known performance of feed-forward networks on the CIFAR-10 dataset. The neural networks are initialized using the ``Glorot Uniform'' initialization and we use the ADAM weight update rule on the cross-entropy training loss. In each epoch we use mini-batch size $300$ and we set $k_1 = 25$ ($k_1$ as defined in Theorem \ref{maincor}).

\paragraph{Results} We test both the theories, \textsc{Our} bound in equation \eqref{eq:ours} and the \textsc{NBS} bound in equation \eqref{eq:nbs}, at $95\%$ confidence i.e at $K\delta = 0.05$ in the above referenced equations.

Having trained the initial cluster as needed in Theorem~\ref{maincor}, we choose the smallest $\epsilon$ and $\gamma$ that satisfy the ``niceness'' condition in Definition~\ref{nice}. In experiments we see that often (not always) this minimum $\gamma$ needed to satisfy this condition increases with the depth of the net. {\it At any fixed architecture and dataset we evaluate both the theories at the same value of $\gamma$ chosen as said above.} We repeat the experiment with $10$ different random seeds (which changes the data-set, the initial cluster choices and the mini-batch sequence in the training). 

In Figure~\ref{fig:dcif} we see examples of how \textsc{Our} bounds do better than \textsc{NBS}. We plot \textsc{Our} bound for the $i^{\text{th}}$ point in the final cluster (as defined in equation \ref{eq:ours}) that achieves the lowest bound. (We always start from taking a very small value as the choice of $d_{\min}$, as required in Definition \ref{ourgrid}, s.t in experiments the distance between the clusters was always bigger than that.) Note the log-scale in the $y-$axis in this figure and hence the relative advantage of our bound is a significant multiplicative factor which is {\it increasing} with depth. {\it And at large widths our bound seems to essentially flatten out in its depth dependence.}

\begin{figure}[htbp]
\begin{subfigure}[b]{0.5\textwidth}
\includegraphics[trim=0 15 0 38,clip,width=\textwidth]{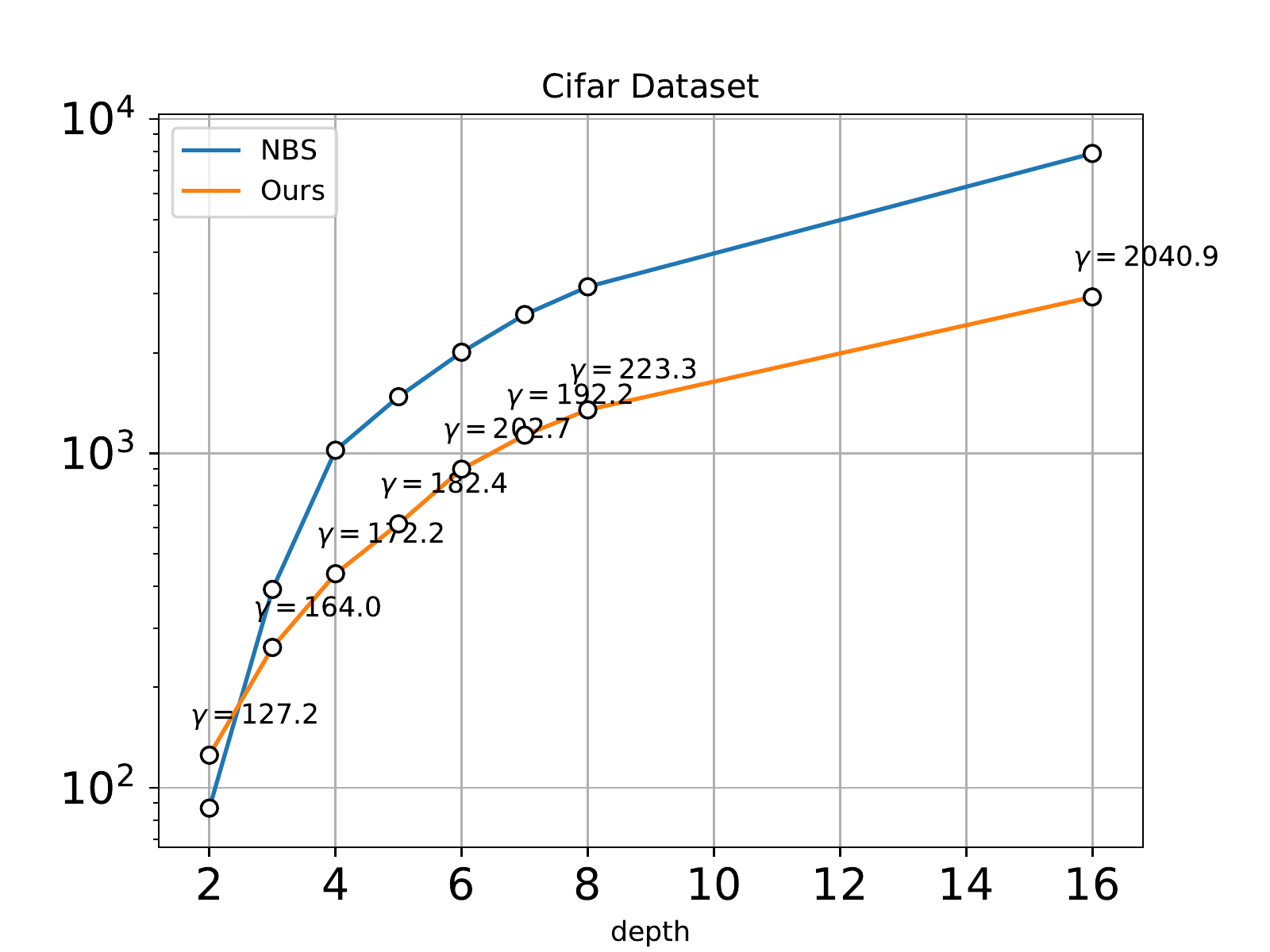}
\end{subfigure}
\begin{subfigure}[b]{0.5\textwidth}
\includegraphics[trim=0 80 0 170,clip,width=\textwidth]{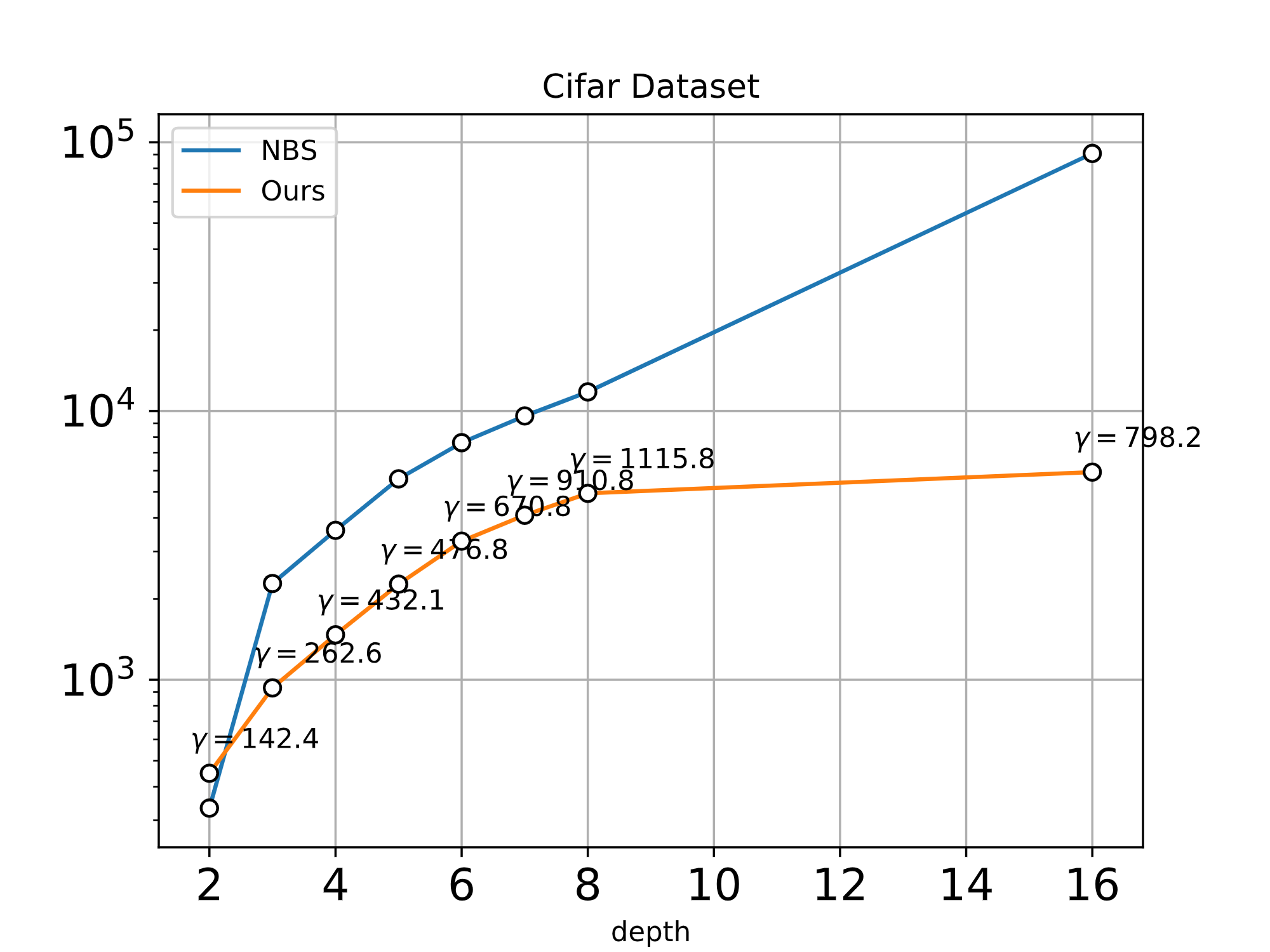}
\end{subfigure}
\caption{In the above figures we plot the risk bounds (in the $y-$axis) predicted by Theorem \ref{maincor} and  Theorem \ref{NBSTheorem} for trained nets at different depths, the $x-$ axis.  We can see the comparative advantage across depths in favour of \textsc{Our} bound over \textsc{NBS} when tested on CIFAR-10 while the width of the net is $100$ for the figure on the left and width is $400$ for the figure on the right.}
\label{fig:dcif}
\end{figure}

\subsection{Synthetic Data Experiments}\label{comp:synth}

In this section we show a comparison between \textsc{Our} and \textsc{NBS} bounds on a synthetic dataset which allows for probing the theories in a very different regime of parameters than CIFAR-10. Here the classification accuracies of the nets are near perfect, the margin parameters and the angular deflection of the net during training are significantly lower. 

\label{sec:exper-synth-data}
\paragraph{Dataset}  We randomly sample $m=1000$ points in $\R^{n}$ from two different isotropic variance $1$ normal distributions centered at $\mathbf{1} = (1,1,\ldots,1)$ and $a\mathbf{1}$ for $n=20$ and $40$ and for $a \in \{2, 4, 6, 8, 10\}$. We reject a sample $\x$ if $\min(||\x - \mathbf{1}||_\infty, ||\x - a \mathbf{1}||_\infty) > 1$. Thus the inter-cluster distance varies with $a$. When $n=20$ then we set $B=50$ and otherwise $B=100$.

\paragraph{Architecture and training} We train fully-connected feed-forward nets of depth $2, 3, \ldots, 8$ on the cross-entropy loss function. Each hidden layer has $800$ ReLU gates. As before we initialize the neural network layers using the ``Glorot Uniform'' initialization and train using ADAM (mini-batch size $100$). Networks with depth $d < 5$ required $5$ epochs and $d \ge 5$ require $8$ epochs for training to $100\%$ train and test accuracy. Our risk bounds are computed using cluster size of $k_1 = 25$ as before.  

\paragraph{Results} The parameters $K$ and $\delta$ are set so that the confidence on the bounds is at $95\%$.  For each network depth we compute our bound $10$ times using $10$ different random seeds. Each trial achieves approximately $100\%$ test accuracy and we plot the bound for the seed which achieves the minimum value of $\gamma$. For each network depth we label the value of $\gamma$ used to compute the bounds. The same value of $\gamma$ is used for computing both \textsc{Our} and \textsc{NBS} bounds. We compute \textsc{Our} bound for the $i^{\text{th}}$ cluster point that achieves the lowest bound. 


In Figure~\ref{fig:gmmcomp} we compare \textsc{Our} bound in Theorem \ref{maincor}
    to the \textsc{NBS} bound in Theorem~\ref{NBSTheorem} at two of the many parameter configurations of the above model where we have tested the theories. Again we find that our bound is consistently lower by a multiplicative factor than the previous PAC-Bayes bound. 

\begin{figure}[htbp]
\begin{subfigure}[b]{0.5\textwidth}
\includegraphics[trim= 0 15 0 0,clip,width=\textwidth]{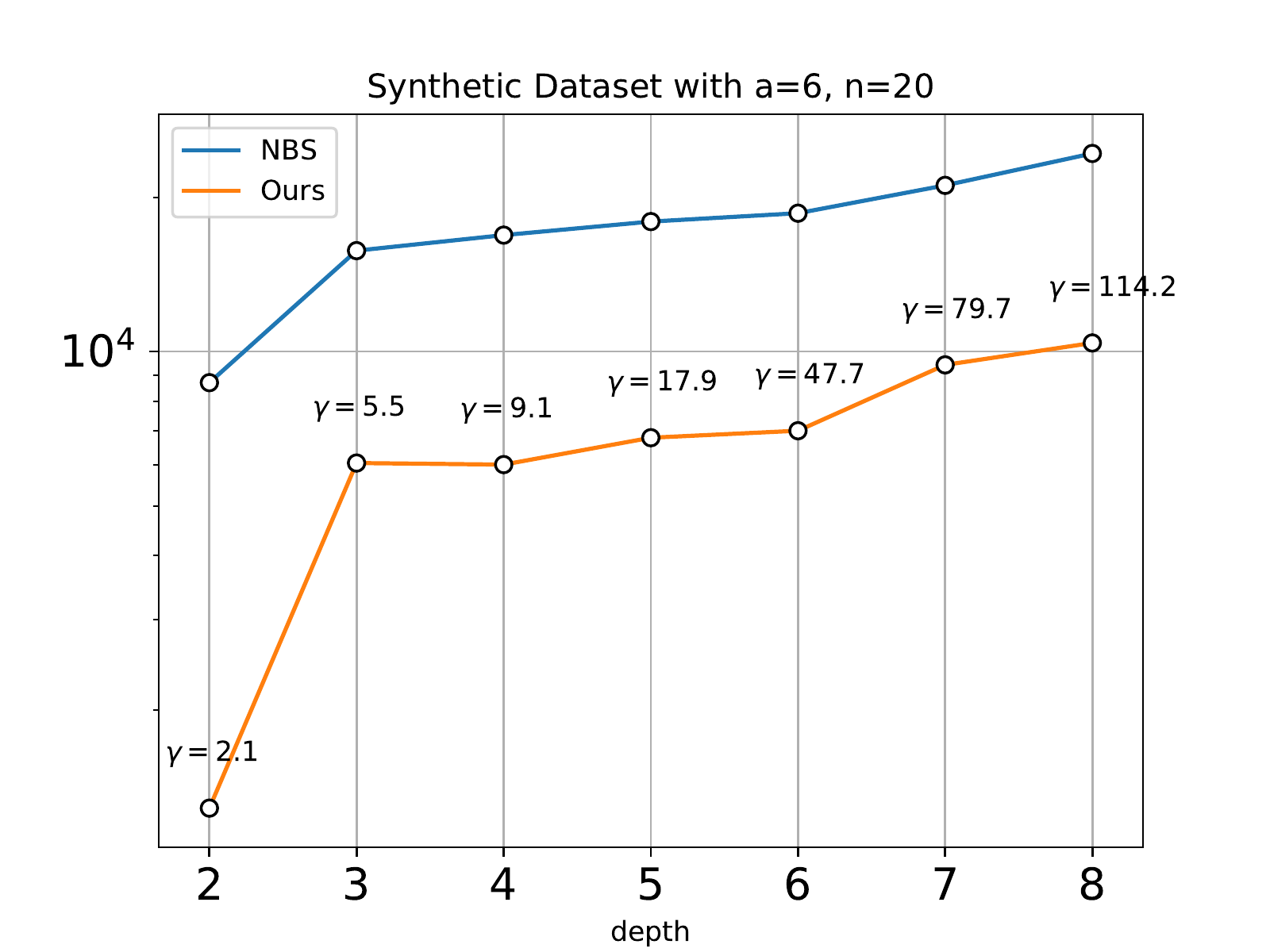}
\end{subfigure}
\begin{subfigure}[b]{0.5\textwidth}
\includegraphics[trim=0 15 0 0,clip,width=\textwidth]{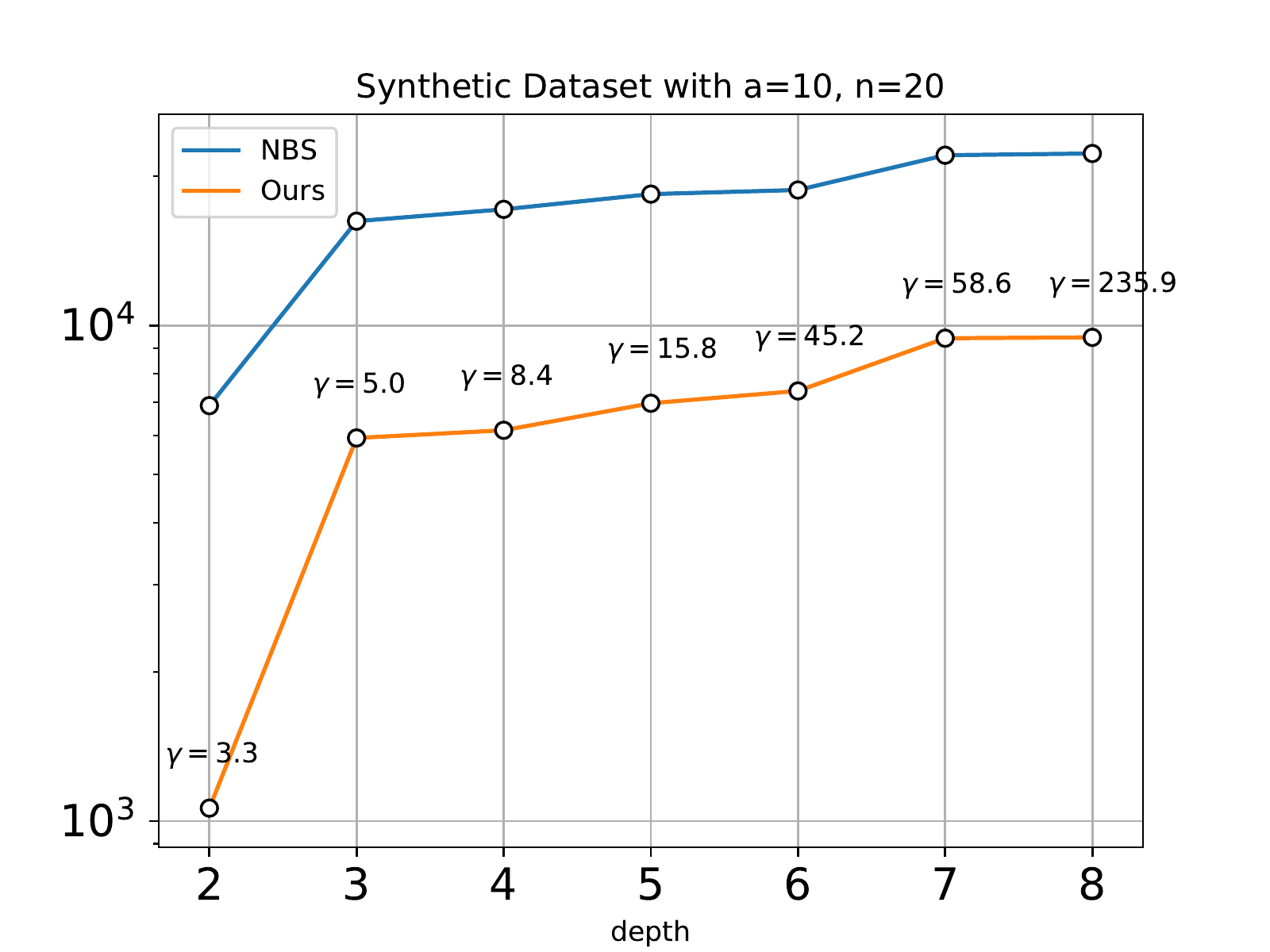}
\end{subfigure}
\caption{In the above figures we plot the risk bounds (in the $y-$axis) predicted by Theorem \ref{maincor} and  Theorem \ref{NBSTheorem} for trained nets at different depths, the $x-$ axis. In particular here we compare the two theories when the synthetic data generation model is sampling in $n=20$ dimensions with the cluster separation parameter $a=6$ in the left figure and $a=10$ in the right figure. 
}\label{fig:gmmcomp}
\end{figure}



\section{Experimental observations about the geometry of neural net's training path in weight space}\label{sec:geom}

Our approach to PAC-Bayesian risk bounds for neural nets motivates us to keep track as to how during training the norm of the neural net's weight vector changes and by how much angle this vector deflects. In here we record our observations about the interesting patterns that these two parameters were observed to have for the two kinds of experiments that were done in the previous section. The structured behaviours as seen here are potentially strong guidelines for directions for future theoretical developments. 

\paragraph{{\bf Data from the experiments on CIFAR-10 in subsection \ref{comp:cifar}}}

\begin{itemize}
\item 
We show in Figure~\ref{fig:dciftheta} the Gaussian kernel density estimation, ~\citep{scott2015multivariate}, of the angular deviation ($\alpha$ in Theorem \ref{maincor}) over the $10$ trials at every depth at a width of $100$. We can see that the angular deflection is fairly stable to architectural changes and is only slightly decreasing with depth. Similar pattern is also observed at higher widths.

\begin{figure}[htbp]
  \centering
  \includegraphics[width=0.5\linewidth]{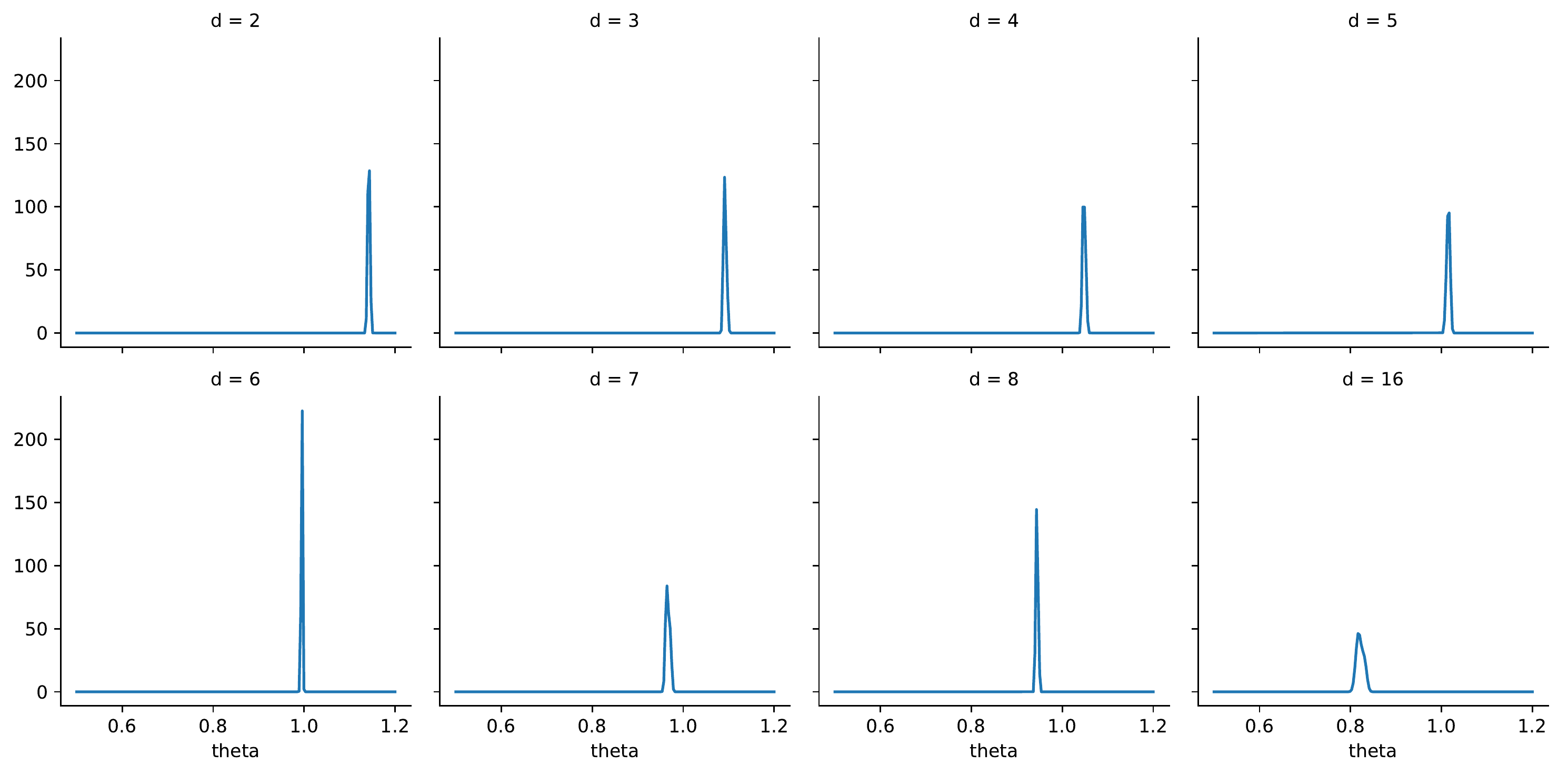}
  \caption{Gaussian kernel density estimate on $10$ trials of the experiment (for every depth $d$ and $100$ width) measuring the angular deviation of the weight vector under training}
  \label{fig:dciftheta}
\end{figure}

\item 
In Figure~\ref{fig:dcifstorel2norm} we show the initial parameter norm vs the final parameter norm for training nets of different depths at width $100$ on the CIFAR dataset. Thus its demonstrated that the multiplicative factor with which the norm increases during training is also fairly stable to architectural changes. 

\begin{figure}[htbp]
  \centering
  \includegraphics[width=0.5\linewidth]{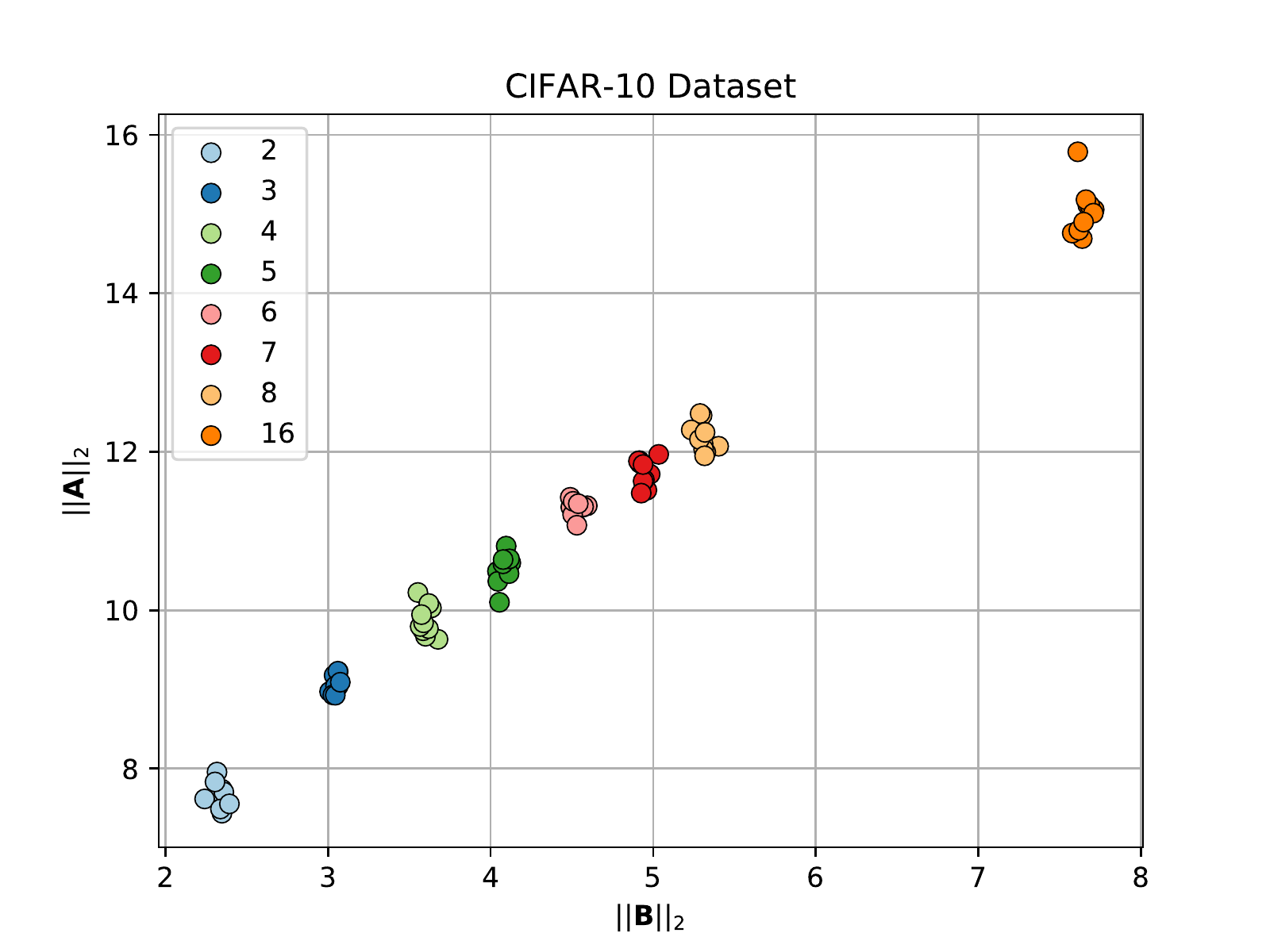}
  \caption{Initial parameter norm $\norm{\B}_2$ vs final parameter norm $\norm{\A}_2$ with increasing depths (at width $100$) on the CIFAR-10 dataset. $10$ trials are displayed for each architecture.}
  \label{fig:dcifstorel2norm}
\end{figure}

\end{itemize}

\paragraph{{\bf Data from the experiments on synthetic data in subsection \ref{comp:synth}}}

\begin{itemize}
    \item In Figure~\ref{fig:dgmm_kdetheta} in Appendix \ref{app:thetasynth} we show the KDE of the angular deviation $\theta$ between the initial and final networks for different depths $d$ and cluster separations $a$. The KDE was obtained from $\theta$ values obtained through $10$ trials. Here we observe that the mean value of the angular deflection due to training is only slightly affected by the architectural choices or the cluster separation parameter. The variance of the distribution of the angle increases with increase in $d$ and $a$ and the mean value tends to slightly decrease with increasing depths.  (Note that the mean angular deflection for CIFAR$-10$ that we saw in the previous experiment was significantly larger than here.)  

   \item Lastly in Figure ~\ref{fig:gmmnorm} we show the variation between the initial and the final norms for the inter-cluster separation of $a=2$. We observe a consistent (and surprising) behaviour that training seems to dilate the sum of Frobenius norms of the net and the dilation factor is close to $1$ at depth $2$ and increases to about $2.5$ for about an order of magnitude of increase in depths. This behaviour is fairly stable across different values of $a$ that we tried and recall that this same phenomenon was also demonstrated on CIFAR-10.  

    \begin{figure}[htbp]
        \centering
        \includegraphics[width=0.5\linewidth]{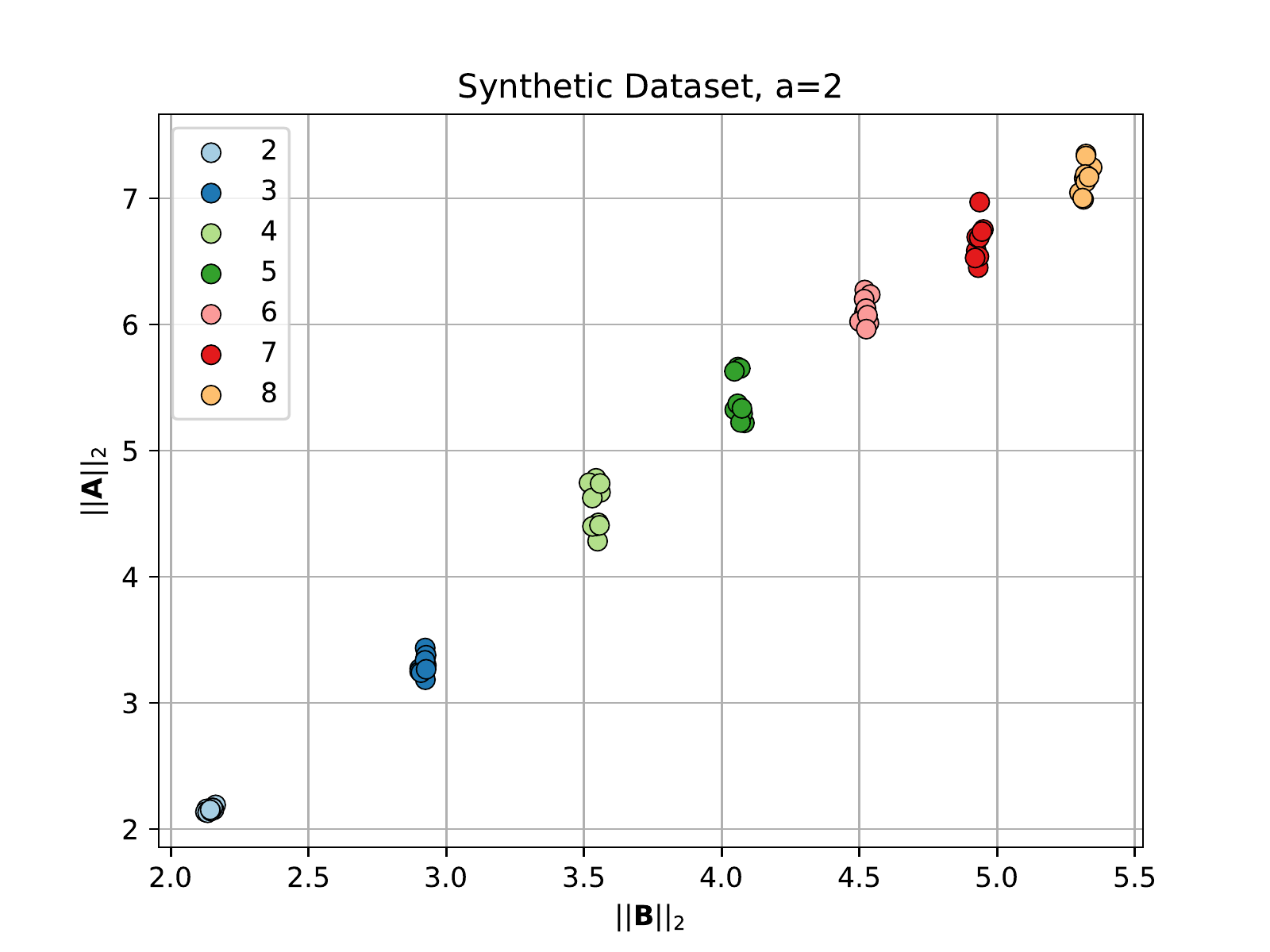}
        \caption{Scatter plot of $||A||_2$ versus $||B||_2$ for neural networks of depths $2,3, \ldots, 8$, for cluster separation $a=2$ in ambient dimension $n=20$. For each depth we run $10$ trials with different random initializations and mini-batch sequence in the SGD.}
        \label{fig:gmmnorm}
    \end{figure}
\end{itemize}

\section{Proof of Theorem \ref{mainNoise}}\label{proof:mainNoise}

Towards proving the main Theorem \ref{mainNoise} we need the following definition and lemma,

\begin{definition}[\textbf{Random Neural Network}]
Let us denote as, $f_{{\cal N}_{\left (\A_{i},\sigma^2 \right )}}$ the random neural net function obtained by sampling its weights from the isotropic Gaussian distribution with p.d.f ${\cal N}_{\left (\A_{i},\sigma^2 \right )}$.
\end{definition}

\begin{lemma}[\textbf{Controlled output perturbation of Gaussian weights}]\label{secondmainlemma} 
Let us be given a set of neural net weight vectors (for a fixed architecture) $\mathcal{A} = \{ \A_i\}_{i=1,..,k_1}$ s.t that $\norm{\A_{i,\ell}} = \beta_i$ for all $i \in \{1,\ldots,k_1\}$ and $\ell \in \{1,\ldots,d\}$. If, 
\begin{align}
\sigma \le \frac{1}{\sqrt{2h \log \frac{2dhk_1}{\delta} }}   \min_{i \in \{1,\ldots,k_1\}} \min \left \{ \frac{\beta_i}{d} , \frac{\gamma}{k_1 edB p_i \beta_i^{d-1}}  \right \}
\end{align} Then,
\begin{equation}\label{eq:secondmainlemma} \mathbb{P} \left(\sum_{i=1}^{k_1} p_i \norm{f_{ {\cal N}_{(\A_i,\sigma^2I)}}-f_{\A_i}}>\gamma \right)< \delta \end{equation}
\end{lemma}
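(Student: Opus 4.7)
}
The high-level strategy is to reduce the statement about function-space perturbation to a statement about layer-wise spectral-norm perturbations, then bound each spectral norm using standard Gaussian matrix concentration, and finally combine everything through a union bound. Concretely, for each fixed $i \in \{1,\ldots,k_1\}$, write a draw from $\mathcal{N}_{(\A_i,\sigma^2 I)}$ in the form $\A_i + U^{(i)}$ where $U^{(i)} = (U^{(i)}_1,\ldots,U^{(i)}_d)$ and each $U^{(i)}_\ell$ is an $h\times h$ random matrix with i.i.d.\ $\mathcal{N}(0,\sigma^2)$ entries. The first step is to invoke a Gaussian spectral-norm tail bound (e.g.\ Tropp / Gordon) to assert that with probability at least $1 - \delta/(d k_1)$ one has $\|U^{(i)}_\ell\|_2 \leq \sigma\sqrt{2h\log(2dhk_1/\delta)}$. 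A union bound over the $d$ layers and $k_1$ mixture components then produces a single good event of probability at least $1-\delta$ on which every $\|U^{(i)}_\ell\|_2$ is simultaneously controlled at this level.

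The second step is to push these spectral bounds through the network. The choice of $\sigma$ in the hypothesis is engineered so that, on the good event above, for every $i$ and every layer $\ell$ one has
\begin{equation*}
\|U^{(i)}_\ell\|_2 \;\leq\; \min\!\Bigl\{\tfrac{\beta_i}{d},\ \tfrac{\gamma}{k_1\, e\, d\, B\, p_i\, \beta_i^{d-1}}\Bigr\}.
\end{equation*}
The condition $\|U^{(i)}_\ell\|_2 \leq \beta_i/d$ places the perturbation in the regime where the standard feed-forward perturbation lemma (the classical telescoping-product argument, used e.g.\ in Neyshabur--Bhojanapalli--Srebro; essentially Lemma~2 there) applies. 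That lemma yields, using $\|\A_{i,\ell}\|_2 = \beta_i$ and $\|\x\|\leq B$,
\begin{equation*}
\|f_{\A_i + U^{(i)}}(\x) - f_{\A_i}(\x)\|
\;\leq\; e\, B\, \beta_i^{d}\sum_{\ell=1}^{d}\frac{\|U^{(i)}_\ell\|_2}{\beta_i}
\;\leq\; e\, d\, B\, \beta_i^{d-1}\,\max_\ell \|U^{(i)}_\ell\|_2.
\end{equation*}

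The third step is to combine with the second bullet in the $\sigma$-choice: substituting $\max_\ell\|U^{(i)}_\ell\|_2 \leq \gamma/(k_1 e d B p_i \beta_i^{d-1})$ gives $\|f_{\A_i+U^{(i)}} - f_{\A_i}\| \leq \gamma/(k_1 p_i)$ simultaneously for every $i$ (sup-norm over the domain the statement quantifies over). Summing with the mixture weights $p_i$ yields
\begin{equation*}
\sum_{i=1}^{k_1} p_i\,\|f_{\mathcal{N}_{(\A_i,\sigma^2 I)}}-f_{\A_i}\|
\;\leq\; \sum_{i=1}^{k_1} p_i\cdot\frac{\gamma}{k_1\, p_i} \;=\; \gamma,
\end{equation*}
on the good event, which has probability at least $1-\delta$. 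This establishes \eqref{eq:secondmainlemma}.

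\paragraph{Anticipated obstacle.} The routine steps (Gaussian spectral concentration, layer-wise union bound) are mechanical; the substantive step is the neural-net perturbation bound that produces the factor $e\, d\, B\, \beta_i^{d-1}\cdot\max_\ell\|U^{(i)}_\ell\|_2$. The $e$ and the exponent $d-1$ on $\beta_i$ matter for the subsequent mixture-of-Gaussians construction in Theorem~\ref{mainNoise}, so one must carry the telescoping product carefully and verify that the hypothesis $\|U^{(i)}_\ell\|_2\leq \beta_i/d$ is exactly what is needed to keep the usual $(1+1/d)^d \leq e$ inflation under control. Handling the mixture requires no extra probabilistic argument beyond the union bound over $i$, because the two pieces of the $\sigma$-budget (the $\beta_i/d$ piece and the $\gamma/(k_1 e d B p_i \beta_i^{d-1})$ piece) are each allocated \emph{per component}, which is precisely why the $\min$ over $i$ appears in the statement.
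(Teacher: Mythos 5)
Your proposal follows essentially the same route as the paper: Gaussian matrix spectral-norm concentration (Tropp), a union bound over the $d\cdot k_1$ layer matrices, the Neyshabur--Bhojanapalli--Srebro telescoping perturbation lemma (requiring $\norm{U^{(i)}_\ell}_2\leq\beta_i/d$), and then combining with the mixture weights so that $\sum_i p_i\cdot\gamma/(k_1 p_i)=\gamma$. The only cosmetic difference is that the paper introduces per-component thresholds $t_i$ and then takes $\min_i$ when selecting $\sigma$, whereas you work directly with the uniform threshold $\sigma\sqrt{2h\log(2dhk_1/\delta)}$ -- these are equivalent since the paper's $\sigma$ is determined by $\min_i t_i$ anyway.
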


\begin{proof}[Proof of Lemma \ref{secondmainlemma}] 
Let  $\bar{U} = \{ U_{i,l} \mid  i=1,\ldots,k_1, \ell = 1,\ldots,d \}$
 be a set of size 
$d \cdot k_1$ containing $h-$dimensional random matrices, such that each 
matrix $U_{i,\ell} \sim {\cal N}(\mathbf{0},\sigma^2)$

We can define matrices $\{ \B_p \in \R^{h \times h} \mid p = 1,\ldots,h^2 \}$ s.t each $\B_p$ has $\sigma$ in an unique entry of it and all other entries are $0$. Then it follows that as random matrices, $U_{i,\ell} = \sum_{p=1}^{h^2} \gamma_p \B_p$ with $\gamma_p \sim {\cal N}(0,1)$. We note that $\norm{\sum_{p=1}^{h^2} \B_p \B_p^\top} = h \cdot \sigma^2$ since $h$ is the largest eigenvalue of an all ones $h-$dimensional square matrix. Now we invoke Corollary $4.2$ of \cite{tropp2012user} here to get for any $t > 0$,

\begin{align}\label{tropp} 
 \mathbb{P}_{U_{i,\ell}} \left [ \norm{U_{i,\ell}}_2 > t \right ] \leq 2he^{-\frac {t^2}{2h\sigma^2} }  
\end{align}


Using union bound we have, $\mathbb{P}_{\bar{U}} [ \exists (i,\ell) \text{ s.t } \norm{U_{i,\ell}} > t_i] \leq 2dh \sum_{i=1}^{k_1}e^{-\frac{t_i^2}{2h\sigma^2}}$. So we have, 
\begin{align}\label{union} 
1 - 2dh\sum_{i=1}^{k_1}e^{-\frac{t_i^2}{2h\sigma^2}} \leq \mathbb{P}_{\bar{U}} [ \forall (i,\ell) \text{ s.t } \norm{U_{i,\ell}} \leq t_i]
\end{align} 

Let $\A_{i,\ell}$ be the induced matrix in the $\ell^{th}-$layer from the neural weight vector $\A_i$. Let $U_{i,l}$ be the perturbation for $\A_{i,l}$ and suppose,  $\norm{U_{i,\ell}} \leq \frac{1}{d}\norm{\A_{i,\ell}}$.\footnote{Since the width of the net is assumed to be uniformly $h$ it follows that $U_{i,\ell}$ is of the same dimensions as $\A_{i,\ell}$.}
We have by Lemma $2$ of \cite{neyshabur2017pac} and our assumption of uniform spectral norms for the layer matrices,
\begin{align}\label{sripert}
\norm{f_{\A_i + (\textrm{vec}(U_{i,\ell}))_{\ell = 1,2,..,d} } -f_{\A_i}}\le eB \beta_i^{d-1}\sum_{\ell =1}^d \|U_{i,\ell}\|_2
\end{align}
From equations~(\ref{sripert}-\ref{union}) it follows that if $\forall i\, t_i \leq \frac{\beta_i}{d}$ then we have, 
\begin{align*}
1 - 2dh\sum_{i=1}^{k_1}e^{-\frac{t_i^2}{2h\sigma^2}} \leq &\mathbb{P}_{\bar{U}} [ \forall (i,\ell) \text{ s.t } \norm{U_{i,\ell}} \leq t_i]\\
\leq &\mathbb{P}_{\bar{U}} [ \forall i, \norm{f_{\A_i + (\textrm{vec}(U_{i,\ell}))_{\ell = 1,2,..,d} } -f_{\A_i}}\le eB \beta_i^{d-1} \big(\sum_{l=1}^d t_i \big) = eB \beta_i^{d-1} dt_i]\\
\leq &\mathbb{P} \left [ \sum_{i=1}^{k_1} p_i \norm{f_{\A_i + (\textrm{vec}(U_{i,\ell}))_{\ell = 1,2,..,d} } -f_{\A_i}}\le edB \sum_{i=1}^{k_1} p_i \beta_i^{d-1} t_i \right ]
\end{align*}
Let $\gamma > 0$ and chose $t_i$ s.t \begin{equation}\label{eq:mycond}edB \sum_{i=1}^{k_1} p_i \beta_i^{d-1} t_i \leq \gamma\end{equation} hence, we get
\begin{align}\label{eq:myfinal}
1 - 2dh\sum_{i=1}^{k_1}e^{-\frac{t_i^2}{2h\sigma^2}} \leq &\mathbb{P} \left [ \sum_{i=1}^{k_1} p_i \norm{f_{\A_i + (\textrm{vec}(U_{i,\ell}))_{\ell = 1,2,..,d} } -f_{\A_i}}\le \gamma \right ]
\end{align}

A sufficient condition for \eqref{eq:mycond} is 
$t_i \leq \frac{\gamma}{k_1 edB p_i \beta_i^{d-1}}$. Combined with the condition that $t_i \leq \frac{\beta_i}{d}$ it follows that \eqref{eq:myfinal} holds if $\forall i \in \{1,\ldots,k_1\}$, 
\begin{align}\label{tbound} 
t_i \leq \min \left \{ \frac{\beta_i}{d} , \frac{\gamma}{k_1 edB p_i \beta_i^{d-1}}  \right \}
\end{align} 
To get \eqref{eq:secondmainlemma} we choose $\sigma$ s.t $\sum_{i=1}^{k_1}e^{-\frac{t_i^2}{2h\sigma^2}} \leq  \frac{\delta}{2dh}$ This is ensured if $\max_{i=1,\ldots,k_1} e^{-\frac{t_i^2}{2h\sigma^2}} \leq \frac{\delta}{2dhk_1} \iff \frac{\min_{i=1,\ldots,k_1} t_i^2}{2h\sigma^2} \geq -\log \left ( \frac{\delta}{2dhk_1} \right ) \iff$
\begin{align}\label{sigt} 
\sigma^2 \leq \frac{\min_{i=1,\ldots,k_1} t_i^2}{2h \log \left ( \frac{2dhk_1}{\delta} \right )}
\end{align}
We can maximize $\sigma^2$ and obey constraints~(\ref{sigt}-\ref{tbound}) by setting
\begin{align}
\sigma^2 =  \frac{1}{2h \log \left ( \frac{2dhk_1}{\delta} \right )}  \left ( \min_{i \in \{1,\ldots,k_1\}} \min \left \{ \frac{\beta_i}{d} , \frac{\gamma}{k_1 edB p_i \beta_i^{d-1}}  \right \}  \right )^2 
\end{align}
\end{proof}

Using the above now we can demonstrate the proof of Theorem \ref{mainNoise}. 

\begin{proof}[Proof of Theorem \ref{mainNoise}]

Given the assumption we have that, $\forall i\, \in \{1,\ldots,k_1\}$ and $\x \in S$, $\norm{f_{\A_i}(\x) - f_{\A}(\x)} \leq \epsilon \norm{f_{\A}(\x)}$, we have the following inequality for all neural weights $\A'$,

\begin{align}\label{postineq}
\norm{f_{\A'}(\x) - f_\A(\x)} &\leq \norm{f_{\A'}(\x) - f_{\A_i}(\x)} + \norm{f_{\A_i}(\x) - f_{\A}(\x)} \leq \norm{f_{\A'}(\x) - f_{\A_i}(\x)} + \epsilon \norm{f_{\A}(\x)}\\
\implies \norm{f_{\A'}(\x) - f_\A(\x)} &\leq \epsilon \norm{f_{\A}(\x)} + \min_{i \in \{1,\ldots,k_1\}} \norm{f_{\A'}(\x) - f_{\A_i}(\x)}\\
&\leq \epsilon \norm{f_{\A}(\x)} + Z
\end{align} 

where in the last line above we have defined, $Z = \min_{i \in \{1,\ldots,k_1\}} \norm{f_{\A'}(\x) - f_{\A_i}(\x)}$. Now for a choice of $\gamma(\x)$ s.t $\gamma(\x) > \epsilon \norm{f_{\A}(\x)}$ and using inequation \ref{postineq} we have the following inequalities being true for the given distribution $\text{MG(posterior)}$, 

\begin{align}\label{postineq2}
\mathbb{P}_{\A' \sim \text{MG(posterior)}} \left ( \norm{f_{\A'}(\x) - f_\A(\x)} > 2\gamma(\x)  \right ) &\leq \mathbb{P} \left ( \epsilon \norm{f_{\A}(\x)} + Z > 2\gamma(\x)  \right )\\
&\leq \mathbb{P} \left (  Z > 2\gamma(\x) - \epsilon \norm{f_{\A}(\x)}  \right )\\
&\leq \mathbb{P} \left (  Z > \gamma(\x) \right )
\end{align}

$\A'$ being randomly sampled from the distribution MG(posterior) can be imagined to be done in two distinct steps, $(1)$ first we select a center among the set $\{ \A_i \}_{i=1,\ldots,k_1}$ by sampling a random variable $Y$ valued in the set, $\{ 1, \ldots, k_1 \}$ with probabilities $\{ p_i \}_{i=1,\ldots,k_1}$ and then $(2)$ sample the weights from, ${\cal N}_{(\A_Y,\sigma^2 I)}$.

We define a collection of $k_1$ mutually independent random variables $\{ Z_j := \min_{i \in \{1,\ldots,k_1\}} \norm{f_{\P}(\x) - f_{\A_i}(\x)} \}_{j=1,\ldots,k_1}$ with $\P \sim {\cal N}_{(\A_j,\sigma^2 I)}$. Clearly $Z_j = Z \mid Y =j$. 
Thus we have the following relationship among the events, 

\vspace{-3mm} 
\begin{align}
\{ Z > \gamma(\x) \} &= \{ \min_{i \in \{1,\ldots,k_1\}} \norm{f_{\A'}(\x) - f_{\A_i}(\x)} > \gamma(\x) \} = \bigcup_{j=1}^{k_1} \left \{ ( Z > \gamma(\x) ) \cap ( Y = j ) \right \}\\
\implies  \mathbb{P} \left ( Z > \gamma(\x)  \right ) &= \sum_{j=1}^{k_1} \mathbb{P} \left ( Z > \gamma(\x) \mid Y = j \right ) \mathbb{P} \left ( Y = j \right ) =  \sum_{j=1}^{k_1} \mathbb{P} \left ( Z_j  > \gamma(\x)  \right ) \mathbb{P} \left ( Y = j \right )
\end{align}

In the above we invoke the definition that, $\mathbb{P}(Y =j) =p_j$ and that by the definition of $Z_j$ it follows that, $Z_j \leq \norm{f_{\P}(\x) - f_{\A_j}(\x)}$ (where $\P \sim {\cal N}_{(\A_j,\sigma^2 I)}$). Then we get, 

\vspace{-3mm} 
\begin{align}\label{invokeuniform} 
\mathbb{P} \left ( Z > \gamma(\x)  \right ) \leq  \sum_{j=1}^{k_1} p_j \mathbb{P}_{\P \sim {\cal N}_{(\A_j,\sigma^2 I)}} \left ( \norm{f_{\P}(\x) - f_{\A_j}(\x)} > \gamma(\x)  \right )    
\end{align}
\vspace{-3mm} 

Now for the kind of  nets we consider i.e ones with no bias vectors in any of the layers it follows from the definition of $\{ \beta_i \}_{i=1,\ldots,k_1}$ that has been made that the function computed by the net remains invariant if the layer weight $\A_{i,\ell}$ is replaced by $\frac{\beta_i}{\norm{\A_{i,\ell}}} \A_{i,\ell}$. And we see that the spectral norm is identically $\beta_i$ for each layer in this net with modified wights. So we can assume without loss of generality that $\A_{i,\ell} = \beta_i$ for all $i$ and $\ell$. Hence it follows that the RHS in equation \ref{invokeuniform} 
is exactly the quantity for which guarantees have been given in Lemma \ref{secondmainlemma}. And by de-homogenizing the definition of $\beta_i$ as given in Lemma \ref{secondmainlemma}, the appropriate value of $\sigma$ can be realized to be the same as given in the theorem statement, 

\begin{align}
\sigma^2 =  \frac{1}{2h \log \left ( \frac{2dhk_1}{\delta} \right )}  \left ( \min_{i \in \{1,\ldots,k_1\}} \min \left \{  \frac{ \left ( \prod_{\ell=1,\ldots,d} \norm{\A_{i,\ell}} \right )^{\frac 1 d} }{d} , \frac{\gamma (\x) }{k_1 edB p_i \left ( \prod_{\ell=1,\ldots,d} \norm{\A_{i,\ell}} \right )^{1-\frac{1}{d}}}  \right \}  \right )^2 
\end{align}

In the above we replace $\gamma(\x)$ with $\gamma := \epsilon \max_{\x \in S} \norm{f_\A (\x)}$ and going back to equation \ref{postineq2} we can get a concurrent guarantee for all $\x \in S$ as required in the theorem as,  
\[ \mathbb{P}_{\A' \sim \text{MG(posterior)}} \left ( \forall \x \in S, \norm{f_{\A'}(\x) - f_\A(\x)} > 2\gamma   \right ) \leq \delta \]
\end{proof}

\section{Proof of Theorem \ref{maincor} }\label{app:maincor} 

\begin{proof} 
Given that $(\epsilon,\gamma)-$nice w.r.t $\{ \A, \A_{i=1,\ldots,k_1} \}$ (as defined in Definition \ref{nice}), we can invoke Theorem \ref{mainNoise} with $k_1 =1$ between the trained nets $f_\A$ and $f_{\A_i}$. We recall the definition of $\beta_i$ that, $\beta_i^d = \prod_{\ell=1,\ldots,d} \norm{\A_{i,\ell}}$ for $\A_{i,\ell}$ being the $\ell^{th}$ layer matrix corresponding to $\A_i$.  

Let the $\beta-$grid be denoted as $\{ \tilde{\beta}_k\}$ s.t there exists a point  $\tilde{\beta}$ in this grid s.t,

\begin{align}\label{choose:betatilde}
\vert \beta_i - \tilde{\beta} \vert \leq \frac{\beta_i}{d} \implies \frac {\beta_i^{d-1}}{e} \leq \tilde{\beta}^{d-1} \leq e \beta_i^{d-1}
\end{align}

Now we recall that by invoking Theorem \ref{mainNoise} on the net $f_{\A_i}$, the $\sigma$ (in terms of $\beta_i$) that would be obtained from there is a maximal choice that the proof could have given us. Hence a smaller value of $\sigma$ will also give the same guarantees and we go for the following value of the variance defined in terms of the $\tilde{\beta}$ defined above,  

\begin{align}\label{choose:sigmatilde}
\tilde{\sigma}^2 := \frac{1}{2h \log \left ( \frac{2dh}{\delta} \right )}  \left (  \min \left \{  \frac{ \tilde{\beta} }{de^{\frac {1}{d-1}}} , \frac{\frac \gamma 8}{e^2dB \tilde{\beta} ^{d-1}}  \right \}  \right )^2
\end{align} 

In the above we have set the $p_i$ parameter of Theorem \ref{mainNoise} to $1$ and have also rescaled the $\gamma$ parameter to $\frac \gamma 8$ so that we have from Theorem \ref{mainNoise} that,

\begin{align}
&\mathbb{P}_{\A' \sim {\cal N}(\A_i,\tilde{\sigma}^2 I)}
  {\Big[} \max_{\x \in S} \norm{f_{\A'}(\x) {-} f_{\A}(\x)} > 2 \times \frac {\gamma}{8}  {\Big]}
    \leq  \delta \\
\end{align}

Since, $\max_{\x \in S} \norm{f_{\A'}(\x) {-} f_{\A}(\x)} < \frac{\gamma}{4} \implies \max_{\x \in S} \norm{f_{A'}(\x) {-} f_A(\x)}_\infty <  \frac{\gamma}{4}$ we have, 

\[  \mathbb{P}_{\A' \sim {\cal N}(\A_i,\tilde{\sigma}^2 I)}
  \Big [ \max_{\x \in S} \norm{f_{\A'}(\x) {-} f_{\A}(\x)}_\infty < \frac{\gamma}{4}  \Big] \geq 1 - \delta \]

Hence we are in the situation whereby Theorem \ref{precisepac} can be invoked with $\A = \w$ and $\mu_\w = {\cal N}(\A_i,\tilde{\sigma}^2 I)$ to guarantee that there exists a distribution $\tilde{\mu}_\w$ s.t the  following inequality holds with probability at least $1-\delta$  over sampling $m$ sized data-sets  

\begin{align}\label{prefinal}  
\mathbb{E}_{\A + \tilde{\u} \sim \tilde{\mu}_\A} [ L_{0} (f_{\A + \tilde{\u}}) ] \leq \hat{L}_{\frac \gamma 2} \left (f_{\A} \right ) +  \sqrt{\frac{\text{KL}({\cal N}(\A_i,\tilde{\sigma}^2 I) \vert \vert P) + \log \frac{3m}{\delta}}{m-1}}
\end{align} 

Because the grid of $\{ \tilde{\beta}_k\}$ was pre-fixed we can choose the prior distribution $P$ in a data-dependent way from the grid of priors specified in the Definition \ref{ourgrid} with their variance being $\tilde{\sigma}$ as given in Definition \ref{choose:sigmatilde}, determined by a chosen element from this set $\{ \tilde{\beta}_k\}$!  Now we recall the definition of the net weights, $\{ \B_{\lambda^*,j} \}_{j=1,\ldots,k_1}$ in the theorem statement and we choose, 

\[ P := \text{ a distribution s.t its p.d.f is } \frac{1}{k_1} \sum_{j=1}^{k_1} {\cal N}_{(\B_{\lambda^*,j},\tilde{\sigma}^2I)} \]

And this means that for the KL-term above we can invoke Theorem \ref{ourkl} with $f = {\cal N}(\A_i,\tilde{\sigma}^2 I)$ and $\text{GM} = P$ to get, 

\vspace{-4mm} 
\begin{align}\label{KLupperbound1} 
\text{KL}({\cal N}(\A_i,\tilde{\sigma}^2 I) \vert \vert P) \leq -\log\left[  \frac{1}{k_1} \sum_{j=1}^{k_1}    e^{- {\frac 1 {2\tilde{\sigma}^2}}\|\A_i - \B_{\lambda^*,j}\|^2 } \right]
\end{align} 
\vspace{-5mm} 

We recall that our angle grid (which determines the value of $\lambda^*$ above as described in the theorem statement) was of size $314$ and $K_1$ was the size of the $\beta-$grid and thus the size of the full grid of priors is $314K_1 =: K$ 

Hence its clear as to how equation \ref{prefinal} holding true for each of the $K$ possible choices of $P$ decided by the mechanism described above, satisfies the required hypothesis for Theorem \ref{thm:b1} to be invoked. The required theorem now follows by further upperbounding the KL-term in equation \ref{prefinal} as given in equation \ref{KLupperbound1}.

Now we are left with having to specify the required grid $\{ \tilde{\beta}_k\}_{k=1,\ldots,K_1}$ so that we are always guaranteed to find a $\tilde{\beta}$ as given in Definition \ref{choose:betatilde}. Towards that we upperbound equation \ref{KLupperbound1} as follows,  

\begin{align}\label{KLupperbound} 
\text{KL}({\cal N}(\A_i,\tilde{\sigma}^2 I) \vert \vert P) \leq -\log\left[  \frac{1}{k_1} \sum_{j=1}^{k_1}    e^{- {\frac 1 {2\tilde{\sigma}^2}}\|\A_i - \B_{\lambda^*,j}\|^2 } \right] \leq -\log\left[  \frac{1}{k_1} \sum_{j=1}^{k_1}    e^{- {\frac 1 {2\sigma^2}}\|\A_i - \B_{\lambda^*,j}\|^2 } \right]
\end{align}

where we have gotten the second inequality by recalling Definitions \ref{choose:sigmatilde}, \ref{choose:betatilde} and  defining, 

\[ \sigma ^2 := \frac{1}{2h \log \left ( {2dh} / {\delta} \right )}  \left (  \min \left \{  \frac{\beta_i }{de^{\frac {2}{d-1}}} , \frac{\gamma /8}{e^3dB \beta_i ^{d-1}}  \right \}  \right )^2   = \Bigg(\frac{\beta_i \exp{-\frac{2}{(d-1)}}  }{d\sqrt{2h \log \left ( \frac{2dh}{\delta} \right )}}\Bigg)^2   \min \left \{\frac{\gamma^2}{(8 B \beta_i^{d} \exp{(3-\frac{2}{d-1})} )^2}, 1 \right \}\]

Now we observe the following, 

\begin{enumerate}
\item When $\beta_i \le \Big(\frac{\gamma}{2B}\Big)^{1/d}$ this implies $\norm{f_{\A_i}(\x)} \le \frac{\gamma}{2}$ which implies $\hat{L}_\gamma = 1$ by definition. Therefore equation \ref{prefinal} holds trivially. 

\item  We can ask when is it that the upperbound on the KL term given in equation \ref{KLupperbound} in terms of this $\sigma$ such that the resultant upperbound on $\sqrt{\frac{2 \text{KL}({\cal N}(\A_i,\tilde{\sigma}^2 I) \vert \vert P)}{m-1}}$ (when substituted into equation \ref{prefinal}) greater than $1$ i.e the range of $\beta_i$ for which the following inequality holds (and thus making the inequality \ref{prefinal} hold trivially by ensuring that the ensuing upperbound on the RHS of it is greater than $1$), 

\begin{align}\label{suff} 
1 \leq \frac {-\log\left[  \frac{1}{k_1} \sum_{j=1}^{k_1}    e^{- {\frac 1 {2\sigma^2}}\|\A_i - \B_{\lambda^*,j}\|^2 } \right] }{m-1} = \frac{1}{m-1} \log \Big [ \frac{1}{\frac{1}{k_1} \sum_{j=1}^{k_1}    e^{- {\frac 1 {2\sigma^2}}\|\A_i - \B_{\lambda^*,j}\|^2 }} \Big ]
\end{align}

\begin{align*}
\log \Big [ \frac{1}{\frac{1}{k_1} \sum_{j=1}^{k_1}    e^{- {\frac 1 {2\sigma^2}}\|\A_i - \B_{\lambda^*,j}\|^2 }} \Big ] &\geq  \log \Big [ \frac{1}{\frac{1}{k_1} \sum_{j=1}^{k_1}    \max_{i,j} \{ e^{- {\frac 1 {2\sigma^2}}\|\A_i - \B_{\lambda^*,j}\|^2 } \} } \Big ]\\
&= \log \Big [ \frac{1}{\frac{1}{k_1} \sum_{j=1}^{k_1}     e^{- {\frac 1 {2\sigma^2}} \min_{i,j} \{ \|\A_i - \B_{\lambda^*,j}\|^2 \}  } } \Big ]
\end{align*}

Now we invoke the definition of $d_{\min}$ to get,  

\[ \log \Big [ \frac{1}{\frac{1}{k_1} \sum_{j=1}^{k_1}     e^{- {\frac 1 {2\sigma^2}} \min \{ \|\A_i - \B_{\lambda^*,j}\|^2 \}  } } \Big ] \geq \frac{d_{\min}^2}{2\sigma^2}  \] 

Thus substituting back into equation \ref{suff} we see that a sufficient condition for \ref{suff} to be satisfied is, 

\[ \Bigg(\frac{\beta_i \exp{-\frac{2}{(d-1)}}  }{d\sqrt{2h \log \left ( \frac{2dh}{\delta} \right )}}\Bigg)^2   \min \left \{\frac{\gamma^2}{(8 B \beta_i^{d} \exp{(3-\frac{2}{d-1})} )^2}, 1 \right \}  = \sigma^2 \leq  \frac{d_{\min}^2}{2(m-1)} \] 


A further sufficient condition for the above to be satisfied is, 

\[  \Bigg(\frac{\beta_i \exp{-\frac{2}{(d-1)}}  }{d\sqrt{2h \log \left ( \frac{2dh}{\delta} \right )}}\Bigg)^2 \cdot \frac{\gamma^2}{(8 B \beta_i^{d} \exp{(3-\frac{2}{d-1})} )^2} \leq  \frac{d_{\min}^2}{2(m-1)} \]

The above leads to the constraint, $\beta_i \geq \Big ( \frac{\gamma \sqrt{2(m-1)}}{(8Be^3d d_{\min})\sqrt{2h\log \Big ( \frac {2dh}{\delta} \Big )}} \Big )^{\frac {1}{(d-1)}}$
\end{enumerate}

Thus we combine the two points above to see that a relevant interval of $\beta_i$ is, 

\[ \Big [  \Big ( \frac{\gamma}{2B} \Big )^{\frac 1 d}   , \left ( \frac{\gamma \sqrt{2(m-1)}}{(8Be^3d d_{\min})\sqrt{2h\log \Big ( \frac {2dh}{\delta} \Big )}} \right )^{\frac {1}{(d-1)}} \Big ] \] 

We recall that the parameters have been chosen so that the above interval is non-empty. 

We note that if we want a grid on the interval $[a,b]$ s.t for every value $x \in [a,b]$ there is a grid-point $g$ s.t $\vert x - g \vert \leq \frac{x}{d}$ then a grid size of $\frac{bd}{2a}$ suffices.  \footnote{If $g$ is the grid point which is the required approximation to $x$ i.e $ \vert x - g \vert \leq \frac{x}{d} \implies x \in \Big (\frac{d}{d+1}g , \frac{d}{d-1}g  \Big )$ Since $a \leq g \implies \frac{2da}{d^2-1}  \leq \Big ( \frac{d}{d-1} - \frac{d}{d+1} \Big ) \tilde{\beta}$. So $\frac{2da}{(d^2-1)}$ is the smallest grid spacing that might be needed and hence the maximum number number of grid points needed is $\frac{(b-a)(d^2-1)}{2ad} < \frac{(b-a)d}{2a} < \frac{bd}{2a}$  } Hence the a grid of the following size $K_1$ suffices for us to capture with needed accuracy all the possible values of $\beta_i$, 

\[ K_1 = \frac{d}{2} \times \frac{\left  ( \frac{\gamma \sqrt{2(m-1)}}{(8Be^3d d_{\min})\sqrt{2h\log \Big ( \frac {2dh}{\delta} \Big )}} \right  )^{\frac {1}{(d-1)}}}{\Big ( \frac{\gamma}{2B} \Big )^{\frac 1 d}  }   \]

And thus we have specified the ``beta-grid" as mentioned in Definition \ref{ourgrid}.   
\end{proof} 

\section{Conclusion} 
We conclude by reporting two other observations that have come to light from the experiments above. {\it Firstly,}  We have also done experiments (not reported here) where we have used the CIFAR data as a binary classification task and there we observed that the angular deflection under training is significantly lower than whats reported above for CIFAR$-10$. In such situations when this deflection is lower the relative advantage of our bound over \cite{neyshabur2017pac}'s bound is even greater. {\it Secondly,} We have additionally also observed that the maximum angle between $\A$ and any of the $\A_i$s is typically $40-60\%$ larger than the corresponding angular spread of the intial cluster i.e the maximum angle between $\B$ and any of the $\B_i$s. Note that all the final cluster nets are approximately of the same accuracy. Thus nets initialized close by to each other often seem to not end up as close post-training even when trained to the same accuracy on the same data and using the same algorithm. We believe that this dispersion behaviour of nets warrants further investigation. 

Given the demonstrated advantages of our PAC-Bayesian bounds on neural nets we believe that these observations deserve further investigation and being able to theoretically explain them and incorporate them into the PAC-Bayesian framework might contribute towards getting even better bounds.
 \begin{subappendices}
 
\chapter*{\vspace{10pt} Appendix To Chapter \ref{chapPAC}}\label{app:PAC} %

\section{The KL upperbounds}
The normal distributions ${\cal N}_{\A,\sigma^2}$ as stated in definition \ref{Bdef} can be made explicit as,
\vspace{-0.5mm} 
\begin{align}\label{normal}
\log {\cal N}_{\left (\A,\sigma^2 \right )} (\w) = - \frac {1}{2} \left [ \frac{\norm{\w - \A}^2}{\sigma^2} + \dim(\A) \log \left (2\pi\sigma^2)\right ) \right ]
\end{align}

\begin{theorem}\label{ourkl}
Assume being given distributions $f$ and $GM$ on $\R^n$. $f$ is  Gaussian and has mean ${\vec \mu} = \A$ and covariance $\Sigma_f = \sigma_f^2 I_{n}$ for some $\sigma_f >0$. While $\text{GM}$ is a Gaussian mixture with $k_1$ Gaussians $\{ f_{\text{GM},r} \}_{r=1,\ldots,k_1}$ with weights, $\{ a_{\text{GM},i} \geq 0 \}_{i=1,..,k_1}$ and $\sum_{i=1}^{k_1}  a_{\text{GM},i} = 1$ s.t the $k_1$ components have means as $\{ {\vec \mu}_{\text{GM},r} = \B_r \}_{r=1,\ldots,k_1}$ and the covariance matrix of the components as $\Sigma_{\text{GM}} := \sigma_{\text{GM}}^2I_{n}$ for some $\sigma_2 >0$. Then we have the following upperbound, 

\[ \text{KL} (f \Vert GM) \leq  \frac{n}{2}\left(\frac{\sigma_f^2}{\sigma_{\text{GM}}^2}-1\right)+n\log(\frac {\sigma_{\text{GM}}}{\sigma_f})-\log\left[  \sum_{r=1}^{k_1} a_{\text{GM},r}   e^{- {\frac 1 {2\sigma_{\text{GM}}^2}}\|\A - \B_r\|^2 } \right] \] 
\end{theorem}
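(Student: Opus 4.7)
\textbf{Proof proposal for Theorem \ref{ourkl}.} The plan is to directly expand the KL-divergence as $\mathrm{KL}(f\|\mathrm{GM}) = \E_{\w\sim f}[\log f(\w)] - \E_{\w\sim f}[\log \mathrm{GM}(\w)]$, compute the first term exactly, and derive a careful lower bound on the second term via Jensen's inequality applied in a tailored way that produces the logsumexp structure appearing in the claimed upper bound.

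First I would handle the easy piece. Using the explicit density in \eqref{normal}, a direct calculation with $\E_f[\|\w-\A\|^2] = n\sigma_f^2$ gives
\begin{equation*}
\E_f[\log f(\w)] \;=\; -\frac{n}{2}\log(2\pi\sigma_f^2) - \frac{n}{2}.
\end{equation*}
For the harder piece, I would use the shift identity $\|\w-\B_r\|^2 = \|\w-\A\|^2 + 2(\w-\A)^\top(\A-\B_r) + \|\A-\B_r\|^2$ inside each component of $\mathrm{GM}$ to factor out the Gaussian $(2\pi\sigma_{\mathrm{GM}}^2)^{-n/2}\exp(-\|\w-\A\|^2/(2\sigma_{\mathrm{GM}}^2))$ from the mixture. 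This rewrites
\begin{equation*}
\log \mathrm{GM}(\w) \;=\; -\frac{n}{2}\log(2\pi\sigma_{\mathrm{GM}}^2) - \frac{\|\w-\A\|^2}{2\sigma_{\mathrm{GM}}^2} + \log Z(\w),
\end{equation*}
where $Z(\w) := \sum_r a_{\mathrm{GM},r}\,e^{-\|\A-\B_r\|^2/(2\sigma_{\mathrm{GM}}^2)}\,e^{-(\w-\A)^\top(\A-\B_r)/\sigma_{\mathrm{GM}}^2}$. Taking $\E_f$ kills the cross term $(\w-\A)^\top(\A-\B_r)$ since $\w-\A$ is mean-zero under $f$, and returns $\tfrac{n\sigma_f^2}{2\sigma_{\mathrm{GM}}^2}$ from the quadratic term; what remains is to lower bound $\E_f[\log Z(\w)]$.

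The key (and this is the step I expect to be the main conceptual obstacle) is to apply Jensen's inequality to $\log Z(\w)$ in the \emph{right} normalization: set $S := \sum_r a_{\mathrm{GM},r} e^{-\|\A-\B_r\|^2/(2\sigma_{\mathrm{GM}}^2)}$ and $p_r := \tfrac{1}{S}a_{\mathrm{GM},r}e^{-\|\A-\B_r\|^2/(2\sigma_{\mathrm{GM}}^2)}$, so that $\sum_r p_r = 1$ and
\begin{equation*}
\log Z(\w) \;=\; \log S + \log \sum_{r=1}^{k_1} p_r \exp\!\bigl(-(\w-\A)^\top(\A-\B_r)/\sigma_{\mathrm{GM}}^2\bigr).
\end{equation*}
By concavity of $\log$ (Jensen), the second term is bounded below by $-\sum_r p_r (\w-\A)^\top(\A-\B_r)/\sigma_{\mathrm{GM}}^2$, which has zero mean under $f$. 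Consequently $\E_f[\log Z(\w)] \geq \log S$, which is exactly the $\log\bigl[\sum_r a_{\mathrm{GM},r}e^{-\|\A-\B_r\|^2/(2\sigma_{\mathrm{GM}}^2)}\bigr]$ term that must appear.

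Putting the pieces together yields
\begin{equation*}
\E_f[\log \mathrm{GM}(\w)] \;\geq\; -\frac{n}{2}\log(2\pi\sigma_{\mathrm{GM}}^2) - \frac{n\sigma_f^2}{2\sigma_{\mathrm{GM}}^2} + \log S,
\end{equation*}
and subtracting this from the exact expression for $\E_f[\log f]$ produces
\begin{equation*}
\mathrm{KL}(f\|\mathrm{GM}) \;\leq\; \frac{n}{2}\Bigl(\frac{\sigma_f^2}{\sigma_{\mathrm{GM}}^2}-1\Bigr) + n\log\frac{\sigma_{\mathrm{GM}}}{\sigma_f} - \log S,
\end{equation*}
which is the claimed bound. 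The only subtlety worth double-checking is that the naive Jensen bound $\log \mathrm{GM}(\w) \geq \log(a_{\mathrm{GM},r} f_{\mathrm{GM},r}(\w))$ for any single $r$ is strictly weaker than what we need; the reweighting by $p_r$ is precisely what converts the single-component lower bound into the logsumexp form.
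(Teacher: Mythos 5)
Your proof is correct, and it takes a genuinely different route from the paper. The paper's proof is a two-step citation argument: it first invokes a black-box upper bound from Durrieu--Thiran--Kelly (ICASSP 2012), namely $\mathrm{KL}(f\|\mathrm{GM}) \leq -\log\bigl[\sum_r a_{\mathrm{GM},r}\,e^{-\mathrm{KL}(f\|f_{\mathrm{GM},r})}\bigr]$, and then substitutes the closed-form KL between two isotropic Gaussians and factors out the $r$-independent piece. You instead work from first principles: you compute the negative entropy of $f$ exactly, use the shift identity $\|\w-\B_r\|^2 = \|\w-\A\|^2 + 2(\w-\A)^\top(\A-\B_r)+\|\A-\B_r\|^2$ to factor a common Gaussian out of $\mathrm{GM}(\w)$, and then apply Jensen's inequality after the crucial renormalization by $p_r \propto a_{\mathrm{GM},r}e^{-\|\A-\B_r\|^2/(2\sigma_{\mathrm{GM}}^2)}$, which makes the linear cross term vanish in expectation. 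What your argument buys is self-containedness and transparency: it avoids relying on the cited lemma and exposes exactly where the log-sum-exp comes from. In fact, your calculation is essentially a direct re-derivation of the Durrieu--Thiran--Kelly bound specialized to mixture components sharing a common isotropic covariance, so the two arguments are mathematically equivalent at the level of the final inequality, but yours stands on its own. One small point you may wish to make explicit: the step $\E_f[\log Z(\w)] \geq \log S$ uses that the weights $p_r$ do not depend on $\w$, so that the $\w$-expectation of the Jensen lower bound is a sum of zero-mean terms; you state this implicitly and it is correct, but it is worth flagging as the place where the choice of normalization pays off.
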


\begin{proof} 
We recall the following theorem, 
\begin{theorem}[Durrieu-Thiran-Kelly (ICASSP $2012$)]\label{mixkl}
Given the definitions of $f$ and $\text{GM}$ as above but with $\Sigma_f$ and $\Sigma_{GM}$ being generic we have as upperbound for the KL divergence between the two distributions,

\[ \text{KL}(f \Vert \text{GM}) \leq   \left ( H(f) + \log \frac {\exp(L_f(f))}{\sum_{r=1}^{k_2} a_{\text{GM},r} e^{-D_{KL} (f \Vert f_{\text{GM},r})}}\right ),\]
where $H(f) := \mathbb{E} \left [ -\log(f(x)) \right ] =: -L_f(f)$
\qed
\end{theorem}

Now using the definition of $\mu_{\text{GM},r}$ and $\Sigma_{\text{GM}}$ and known expression for the $\text{KL}$ divergence between $2$ Gaussians the upperbound given above simplifies as,

\begin{align*}
\text{KL}(f \Vert \text{GM}) &\le -\log\left[  \sum_{r=1}^{k_1} a_{\text{GM},r} e^{-\text{KL} (f \Vert f_{\text{GM},r})} \right] \\
&=-\log\left[  \sum_{r=1}^{k_1} a_{\text{GM},r}  \sqrt{ \frac{e^n\det(\Sigma_{f})}{\det(\Sigma_{\text{GM}})\det(e^{\Sigma_{\text{GM}}^{-1} \Sigma_{f}})}} e^{- {\frac 1 2}({\vec \mu} - {\vec \mu}_{\text{GM},r})^T\Sigma_{\text{GM}}^{-1}({\vec \mu} - {\vec \mu}_{\text{GM},r}) } \right]
\end{align*}

Now we recall the definitions of $\A, \{\B_r\}_{r=1,\ldots,k_1}, \sigma_f$ and $\sigma_{\text{GM}}$ to further simplify the above to get, 

\begin{align*}
\text{KL}(f \Vert \text{GM}) &\le -\log \sqrt{ \frac{e^n\det(\sigma_f^2I)}{\det(\sigma^2_{\text{GM}} I)\det \left (e^{(\sigma^2_{\text{GM}}I)^{-1} (\sigma_f^2I)}\right )}}
-\log\left[  \sum_{r=1}^{k_2} a_{\text{GM},r}   e^{- {\frac 1 2}(\A - \B_r)^T(\sigma^2_{\text{GM}}I)^{-1}(\A - \B_r)} \right]\\
&=\log \sqrt{ \frac{\det (\sigma^2_{\text{GM}} I)\det \left (e^{(\sigma^2_{\text{GM}}I)^{-1} (\sigma_f^2I)}\right )}{e^n\det(\sigma_f^2I)}}-\log\left[  \sum_{r=1}^{k_1} a_{\text{GM},r}   e^{- {\frac 1 {2\sigma_{\text{GM}}^2}}\|\A - \B_r\|^2 } \right]\\
&\leq \frac{n}{2}\left(\frac{\sigma_f^2}{\sigma_{\text{GM}}^2}-1\right)+n\log(\frac {\sigma_{\text{GM}}}{\sigma_f})-\log\left[  \sum_{r=1}^{k_1} a_{\text{GM},r}   e^{- {\frac 1 {2\sigma_{\text{GM}}^2}}\|\A - \B_r\|^2 } \right]
\end{align*}
And thus the required theorem is proven.  
\end{proof} 

\section{Data-Dependent Priors}
We continue in the same notation as used in Theorem \ref{pacmain} and we prove the following theorem. 
\begin{theorem}\label{thm:b1}
  Suppose we have $K$ prior distributions $\{ \pi_i \}_{i=1,\ldots,K}$ s.t for some $\delta >0$ the following inequality holds for each $\pi_i$ \begin{equation}\label{each}
\mathbb{P}_{S \sim {\cal D}^m(Z)} \Bigg[ \forall Q\, 
\E_{h \sim Q} [ L(h) ] \leq  \E_{h \sim Q} [ \hat{L}(h) ] + \sqrt{\frac{\text{KL} (Q \| \pi_i ) + \log \frac {m}{\delta} }{m-1}} \Bigg] \geq 1- {\delta}.
\end{equation}
Then, 
\begin{equation}\label{eachres}
\mathbb{P}_{S \sim {\cal D}^m(Z)} \Bigg[\forall i \in \{1,\ldots,K \} \, \forall Q\, 
\E_{h \sim Q} [ L(h) ] \leq  \E_{h \sim Q} [ \hat{L}(h) ] + \sqrt{\frac{\text{KL} (Q \| \pi_i ) + \log \frac {m}{\delta} }{m-1}} \Bigg] \geq 1-K\delta.
\end{equation}
\end{theorem}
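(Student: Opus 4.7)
The statement is essentially a union bound across the $K$ data-dependent priors, so the plan is short. For each $i \in \{1, \ldots, K\}$, I would define the ``bad event''
\[
E_i := \Bigl\{ S \in Z^m \;:\; \exists\, Q \text{ with } \E_{h \sim Q}[L(h)] > \E_{h \sim Q}[\hat{L}(h)] + \sqrt{\tfrac{\text{KL}(Q \| \pi_i) + \log(m/\delta)}{m-1}} \Bigr\}.
\]
The hypothesis \eqref{each} is exactly the statement that $\mathbb{P}_{S \sim \mathcal{D}^m}[E_i] \leq \delta$ for every $i \in \{1,\ldots,K\}$.

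Next, I would observe that the event whose probability we want to lower-bound in \eqref{eachres} is precisely the complement of the union $\bigcup_{i=1}^K E_i$: the bound fails for \emph{some} index $i$ and \emph{some} posterior $Q$ if and only if some $E_i$ occurs. Applying the union bound gives
\[
\mathbb{P}_{S \sim \mathcal{D}^m}\!\Bigl[\bigcup_{i=1}^K E_i\Bigr] \;\leq\; \sum_{i=1}^K \mathbb{P}_{S \sim \mathcal{D}^m}[E_i] \;\leq\; K\delta,
\]
so taking complements yields $\mathbb{P}_{S \sim \mathcal{D}^m}\!\bigl[\bigcap_{i=1}^K E_i^c\bigr] \geq 1 - K\delta$, which is exactly \eqref{eachres}.

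There is really no hard step here: the only conceptual point worth emphasizing in the write-up is that the priors $\pi_i$ are fixed \emph{in advance} of seeing $S$ (they form a pre-specified grid, as in Definition~\ref{ourgrid}), so the events $E_i$ are well-defined subsets of the sample space $Z^m$ and the union bound applies without any measurability subtlety. The data-dependence enters only when the learner, after seeing $S$, selects which $\pi_i$ from the grid to plug into the bound; because \eqref{eachres} holds uniformly over $i$ on an event of probability $\geq 1-K\delta$, this post-hoc selection is legitimate. This is the mechanism used in the proof of Theorem~\ref{maincor} to justify choosing the grid element $\tilde\beta$ (and the angle index $\lambda^*$) as functions of the trained net.
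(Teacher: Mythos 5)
Your proposal is correct and is essentially the same union-bound argument the paper gives: you define the per-index bad event $E_i$ (which in the paper's notation is $\bigcup_Q E_{i,Q}^c$), note that the hypothesis gives $\mathbb{P}[E_i] \le \delta$, union-bound over $i$, and take complements. The remark about the priors being fixed in advance so that the post-hoc data-dependent selection of $\pi_i$ is legitimate is exactly the intended reading (this is how the paper applies the theorem in the proofs of Theorems~\ref{NBSTheorem} and~\ref{maincor}).
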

\begin{proof}
Let $f$ be a  real-valued function on the space of distributions and data sets. Let $E_Q = \{ S \mid f(S,Q) > 0\}$. And note that $\cap_{Q} E_Q = \{S \mid \forall Q f(S,Q) > 0\}$. Therefore $\mathbb{P}_S[S \mid \forall Q f(S,Q) > 0] = \mathbb{P}_S[\cap_{Q} E_Q]$ regardless of the distribution on $S$.

Consider $K$ functions $f_1, \ldots, f_K$ s.t for some $\delta \in [0, 1]$ we have, 

\[ \forall i \in \{1,\ldots, K \} \; \mathbb{P}_S[S \mid \forall Q \; f_i(S,Q) > 0] \ge 1-\delta \] 

Now we define the events $E_{i,Q} = \{ S \mid f_i(S,Q) > 0\}$ for each $i \in \{1,\ldots,K\}$. Thus we can deduce the following:
\begin{equation}\label{A_Label}
  \begin{aligned}
    &\forall i \in \{1,\ldots,K\}\; \mathbb{P}_S[S \mid \forall Q \; f_i(S,Q) > 0] &&\ge 1-\delta \\
    \implies& \forall i \in \{1,\ldots,K\}\; \mathbb{P}_S[\cap_{Q} E_{i,Q}] &&\ge 1-\delta & \text{(as discussed in the beginning of the proof)}\\ 
    \implies& \forall i \in \{1,\ldots, K\}\; \mathbb{P}_S[\cup_{Q} E^c_{i,Q}] &&\le \delta &\\
    \implies& \mathbb{P}_S[\cup_{i=1}^K \Big ( \cup_{Q} E^C_{i,Q} \Big )] &&\le K\delta &\\
    \implies& \mathbb{P}_S[\cap_{i=1}^K \cap_{Q} E_{i,Q}]
    &&\ge 1-K\delta\\
    \implies& \mathbb{P}_S[S \mid \forall Q \; \forall i \in \{1,\ldots, K\}\, f_i(S,Q) > 0] &&\ge 1-K\delta
\end{aligned}
\end{equation}

Now we use $f_i(S,Q) {=} \E_{h \sim Q} [ \hat{L}(h) ] {-} \E_{h \sim Q} [ L(h) ] {+} \sqrt{\frac{\text{KL} (Q \| \pi_i ) + \log \frac {m}{\delta} }{m-1}}$ in equation~\eqref{A_Label} and use condition~\eqref{each} to get result \eqref{eachres}.
\end{proof}

The above theorem encapsulates what is conventionally called as using a ``data-dependent prior". This is because if we can construct a list of $K$ priors and work with $\delta < \frac{1}{K}$ then the above theorem lets us choose in a data (i.e the training data $S$) dependent way not just the posterior distribution $Q$ but also the prior $\pi_i$ from the list and still assures us a high probability upperbound on the difference between the true risk and the empirical risk of the stochastic classifier. 


\section{Proof of Theorem~\ref{precisepac}}\label{proof:precisepac} 

\begin{proof}
Given a predictor weight $\w$, for explicitness we will denote by $\mu_\w$ and $\tilde{\mu}_\w$ what were called $\mu$ and $\tilde{\mu}$ in the theorem statement.  

We will first isolate its set of ``good" perturbations i.e we define the following set $S_\w$ as,
\[ S_\w = \left \{ \w + \u \mid  \max_{\x \in S} \norm{ f_{\w +\u} (\x) - f_\w (\x)} _\infty < \frac{\gamma}{4}   \right \} \]

Corresponding to the predictor weight $\w$, let $\mu_\w'$ be a distribution on the weights s.t the required condition holds i.e,

\[ \mathbb{P}_{\u \sim \mu_\w'} \left [ \max_{\x \in S} \norm{f_{\w + \u}(\x) - f_\w (\x) }_\infty < \frac{\gamma}{4} \right ] \geq \frac{1}{2} \]

Let $\w + \u \sim \mu_\w$ when $\u \sim \mu_\w'$.

Now we define the quantity, $Z(\mu_\w) := \mathbb{P}_{\w + \u \sim \mu_\w} [ \w + \u \in S_\w ]$. From the definition it follows that, $Z(\mu_\w) \geq \frac{1}{2}$.
Now let us define another distribution $\tilde{\mu}_\w$ over the set of predictor's weights s.t the p.d.fs are related as follows (in the following we overload the notation of the distribution to also denote the corresponding p.d.f),
 
\[ \tilde{\mu}_\w (x) = \frac{\mu_\w(x)}{Z(\mu_\w)}\delta_{x \in S_\w }  \]

Thus $\tilde{\mu}_\w$ is supported on $S_\w$. From the above definition it follows that if $\w + \tilde{\u} \sim \tilde{\mu}_\w$ then $\max_{\x \in S} \norm{ f_{\w + \tilde{\u}} (\x) - f_\w (\x)}_\infty < \frac{\gamma}{4}$ which is equivalent to $\max_{i \in \{1,2,..,k\}, \x \in S} \norm { f_{\w + \tilde{\u}}(\x)[i] - f_{\w}(\x)[i] } < \frac{\gamma}{4}$. This in turn implies,

\begin{equation}
    \max_{i,j \in \{1,2,..,k\}, \x \in S} \norm { (f_{\w + \tilde{\u}}(\x)[i] - f_{\w + \tilde{\u}} (\x)[j]) - (f_{\w}(\x)[i] - f_{\w}(\x)[j])} < \frac{\gamma}{2}\label{eq:appb1}
\end{equation}

Which in turn implies the following inequality,
\footnote{
We give the proof in this footnote for convenience. Let $j$ be arbitrary, let $i = \arg\max_{i \ne j}f_{\w + \tilde{\u}}(\x)[i]$.
Let $S_j = \{x \mid f_{\w}(\x)[i] - f_{\w}(\x)[j] \geq -\gamma/2\}$,  and $S'_j = \{x \mid f_{\w+\tilde{\u}}(\x)[i] - f_{\w + \tilde{\u}}(\x)[j] \geq  0\}.$  Clearly $S_j' \subseteq S_j$ 
because  $\displaystyle f_{\w+\tilde{\u}}(\x)[i] - f_{\w + \tilde{\u}}(\x)[j] \geq  0 \implies f_{\w}(\x)[i] - f_{\w}(\x)[j] \geq -\gamma/2$ from \eqref{eq:appb1}. 
~\\ \\
Now recall that we can write, $L_{\gamma/2}(f_\w) = \E_{y}\big[ \E_{\x}[\mathbf{1}(f_\w(\x)[y] \leq \gamma/2 + \max_{i \neq y} f_\w(\x) [i]) \mid y ]$ and $L_{0}(f_{\w + \tilde{\u}}) = \E_{y}\big[ \E_{\x}[\mathbf{1}(f_{\w + \tilde{\u}}(\x)[y] \leq  \max_{i \neq y} f_{\w + \tilde{\u}}(\x) [i]) \mid y ]$ Hence equation \eqref{eq:appb3} follows by noting that the support of the indicator function in $L_{0}(f_{\w + \tilde{\u}})$ is contained in the support of the indicator function in $L_{\gamma/2}(f_\w)$ because $S_j' \subseteq S_j$. 
}


\begin{align}
\hat{L}_{0} (f_{\w + \tilde{\u}}) &\leq \hat{L}_{\gamma/2} (f_\w)\label{eq:appb3}
\end{align}

Similar to $\tilde{\mu}$ we define the following,

\[
\tilde{\mu}_{\w}^c(\x)=\frac{1}{1-Z(\mu_\w)}\mu_\w(\x)\delta_{\x\in S_w^c}
\]

Thus substituting the expressions for $\tilde{\mu}_{\w}$ and $\tilde{\mu}_{\w}^c$ for the given prior $P$ we have,
\begin{equation}
\begin{split}
&\text{KL}(\mu_{\w} \mid P) \\
&=\int \mu_{\w} \log \left (\frac {P}{\mu_{\w}} \right ) = \int_{S_w} \mu_{\w}\log \left (\frac {P}{\mu_{\w}} \right ) + \int_{S_w^c}\mu_{\w}\log \left (\frac {P}{\mu_{\w}} \right )\\
&=\int Z(\mu_\w)\tilde{\mu}_{\w}\log \left (\frac {P}{Z(\mu_\w)\tilde{\mu}_{\w}} \right ) +\int (1-Z(\mu_\w))\tilde{\mu}_{\w}^c\log \left (\frac {P}{(1-Z(\mu_\w))\tilde{\mu}_{\w}^c} \right ) \\
&=Z(\mu_\w) \left [-\log Z(\mu_\w) + \int \tilde{\mu}_{\w} \log  \left (\frac {P}{\tilde{\mu}_{\w}} \right )   \right ]\\ 
&\qquad +(1-Z(\mu_\w))\left [-\log (1-Z(\mu_\w)) + \int \tilde{\mu}_{\w}^c \log \left ( \frac{P}{\tilde{\mu}_\w^c} \right) \right ]\\
&=Z(\mu_\w) \text{KL}(\tilde{\mu}_\w \mid  P)+ (1-Z(\mu_\w))\text{KL}(\tilde{\mu}_\w^c \mid P)\\
&\qquad - Z(\mu_\w)\log Z(\mu_\w) - (1-Z(\mu_\w))\log(1-Z(\mu_\w))\\
\implies \text{KL}(\tilde{\mu}_\w \mid P) &= \frac{1}{Z(\mu_\w)}\bigg \{ \text{KL}(\mu_\w \mid P)-(1-Z(\mu_\w))\text{KL}(\tilde{\mu}_\w^c \mid P)\\
&\qquad \qquad +Z(\mu_\w)\log Z(\mu_\w) +(1-Z(\mu_\w))\log (1-Z(\mu_\w)) \bigg \}
\end{split}
\end{equation}

We recall that for any $Z \in [\frac 1 2,1]$ we have, $|Z\log Z+(1-Z)\log(1-Z)|\le 1$ and $\text{KL}(\tilde{\mu}_\w^c\|P)\ge 0$. Thus we have,

\begin{align}\label{klineq}
\text{KL}(\tilde{\mu}_\w\|P) \le \frac{1}{Z(\mu_\w)}\left (\text{KL}(\mu_\w\|P)+1 \right ) \le 2 \left (\text{KL}(\mu_\w\|P)+1 \right )
\end{align}

We remember that above we had sampled the weights of the perturbed net $f_{\w + \tilde{\u}}$ as $\w + \tilde{\u} \sim \tilde{\mu}_\w$. Now we write the highly likely event guaranteed by PAC-Bayesian bounds in Theorem \ref{pacmain}, for the margin loss $L_{\frac \gamma 2}$ evaluated on the predictor $f_\w$ for some prior $P$ and a (data dependent) choice of posterior $\tilde{\mu}_\w$. Further we invoke equation \eqref{eq:appb3} and \eqref{klineq} on that, to get the following,

\begin{align*}
    \mathbb{E}_{\w + \tilde{\u} \sim \tilde{\mu}_\w} [ L_{0} (f_{\w + \tilde{\u}}) ] &\leq \mathbb{E}_{\w + \tilde{\u} \sim \tilde{\mu}_\w}  [ \hat{L}_{0} (f_{\w + \tilde{\u}})] + \sqrt{\frac{\text{KL}(\tilde{\mu}_\w \vert \vert P)  + \log \frac{m}{\delta}}{2(m-1)}} \\
    &\leq \hat{L}_{\gamma/2}(f_\w) + \sqrt{\frac{\text{KL}(\tilde{\mu}_\w \vert \vert P)  + \log \frac{m}{\delta}}{2(m-1)}} \\
    &\leq \hat{L}_{\gamma/2}(f_\w) + \sqrt{\frac{2\text{KL}(\mu_\w \vert \vert P) + 2  + \log \frac{m}{\delta}}{2(m-1)}} = \hat{L}_{\gamma/2}(f_\w) + \sqrt{\frac{\text{KL}(\mu_\w \vert \vert P) + 1 + \frac{1}{2}\log \frac{m}{\delta}}{m-1}} \\
    &\leq \hat{L}_{\gamma/2}(f_\w) + \sqrt{\frac{\text{KL}(\mu_\w \vert \vert P) + \log \frac{3m}{\delta}}{m-1}}
\end{align*}

where in the last line we have used,  $1 + \frac{1}{2}\log \frac{m}{\delta} < 1 + \log \frac{m}{\delta} < \log 3 + \log \frac{m}{\delta}$

\end{proof}


\newpage 
\section{Proof of Theorem \ref{NBSTheorem}}\label{NBSProof}

\begin{proof}

Firstly we observe the following Theorem~\ref{thmg} which is a slight variation of a lemma in \cite{neyshabur2017pac}. The proof of Theorem~\ref{thmg} follows from exactly the same arguments as was needed to prove Theorem~\ref{precisepac}.

\begin{theorem}\label{thmg}
Let $f_\w : \chi \rightarrow \R^k$ be any predictor with parameters $\w$ and lets use the margin loss as defined in equation \ref{loss}. Let $P$ be any distribution (the ``data-independent prior") on the space of parameters of $f$ and ${\cal D}$ be a distribution on $\chi$. 
Let it be true that for some $\gamma >0$, we know of distributions $\mu_\w'$ and $\mu_\w$ (depending on the weight $\w$ of the given predictor) on the space of parameters of the predictor s.t, 
\begin{align}\label{app-ucond}
\mathbb{P}_{\u \sim \mu_\w'} \left [ \sup_{\x \in \chi} \norm{f_{\w + \u}(\x) - f_\w (\x) }_\infty < \frac{\gamma}{4} \right ] \geq \frac{1}{2} \text{ and } \w + \u \sim \mu_\w
\end{align}
Then for any $\delta \in [0,1]$ the following guarantee holds,
\begin{align}
\nonumber \mathbb{P}_{\chi \sim {\cal D}^m(\chi)} \Bigg [ &\forall \w \text{ and corresponding } \mu_\w \text{ s.t condition \ref{app-ucond} holds,} \exists\, \tilde{\mu}_\w \text{ s.t }\\
&\mathbb{E}_{\w + \tilde{\u} \sim \tilde{\mu}_\w} [ L_{0} (f_{\w + \tilde{\u}}) ] \leq \hat{L}_{\frac \gamma 2} \left (f_{\w} \right ) + \sqrt{\frac{\text{KL}(\mu_\w \vert \vert P) + \log \frac{3m}{\delta}}{m-1}} \Bigg ]\geq 1 - \delta \label{app-eq:12}
\end{align}
\qed
\end{theorem}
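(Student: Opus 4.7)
The plan is to build the required posterior $\tilde{\mu}$ by conditioning $\mu$ on its ``good perturbation'' event and then to apply the baseline PAC-Bayesian bound of Theorem~\ref{pacmain} to this conditioned distribution. Specifically, given $\w$ and $\mu \in D_{\gamma,\w,S}$, I would define the event $S_\w := \{\w' \in \text{W} : \max_{(\x,y)\in S}\|f_{\w'}(\x) - f_\w(\x)\|_\infty < \gamma/4\}$ and set $Z := \mu(S_\w)$, which is at least $1/2$ by hypothesis on $D_{\gamma,\w,S}$. The posterior is then $\tilde{\mu}(\cdot) := \mu(\cdot \mid S_\w)$, i.e.\ the restriction of $\mu$ to $S_\w$ renormalized by $Z$. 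This is a construction that depends only on $\w$ and $\mu$, so the existential quantifier over $\tilde{\mu}$ in the statement is handled.

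Next I would establish the deterministic inequality $\hat{L}_0(f_{\w'}) \le \hat{L}_{\gamma/2}(f_\w)$ for every $\w' \in S_\w$. The argument is that for any $\w' \in S_\w$, any $(\x,y) \in S$, and any pair of labels $i,j$, the perturbation bound $\|f_{\w'}(\x) - f_\w(\x)\|_\infty < \gamma/4$ implies $|(f_{\w'}(\x)[i] - f_{\w'}(\x)[j]) - (f_\w(\x)[i] - f_\w(\x)[j])| < \gamma/2$, so that a $\gamma/2$-margin violation by $f_{\w'}$ at $(\x,y)$ forces a margin-$0$ violation by $f_\w$ at $(\x,y)$ (strictly speaking the other direction, so that the set of empirical errors of $f_{\w'}$ at margin $0$ is contained in the set of empirical errors of $f_\w$ at margin $\gamma/2$). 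Averaging over $\tilde{\mu}$ gives $\mathbb{E}_{\tilde{\w}\sim\tilde{\mu}}[\hat{L}_0(f_{\tilde{\w}})] \le \hat{L}_{\gamma/2}(f_\w)$.

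The key technical step is controlling $\mathrm{KL}(\tilde{\mu}\|P)$ in terms of $\mathrm{KL}(\mu\|P)$. I would decompose $\mu = Z\tilde{\mu} + (1-Z)\tilde{\mu}^c$, where $\tilde{\mu}^c$ is $\mu$ conditioned on $S_\w^c$. Writing out $\mathrm{KL}(\mu\|P)$ as an integral and splitting across $S_\w$ and $S_\w^c$ yields the identity
\[
\mathrm{KL}(\mu\|P) = Z\,\mathrm{KL}(\tilde{\mu}\|P) + (1-Z)\,\mathrm{KL}(\tilde{\mu}^c\|P) - H_2(Z),
\]
where $H_2(Z) = -Z\log Z - (1-Z)\log(1-Z) \in [0,1]$. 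Dropping the nonnegative $(1-Z)\mathrm{KL}(\tilde{\mu}^c\|P)$ term and using $Z \ge 1/2$ gives the clean bound $\mathrm{KL}(\tilde{\mu}\|P) \le 2(\mathrm{KL}(\mu\|P) + 1)$.

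Finally, I would invoke Theorem~\ref{pacmain} applied with the $[0,1]$-valued loss $\ell(h,(\x,y)) = \mathbf{1}[f_h(\x)[y] \le \max_{i\ne y} f_h(\x)[i]]$, so that $L(h) = L_0(f_h)$ and $\hat L(h) = \hat L_0(f_h)$; this gives, with $S$-probability at least $1-\delta$, that simultaneously for every posterior $Q$ (in particular for my data-dependent $\tilde{\mu}$),
\[
\mathbb{E}_{\tilde{\w}\sim\tilde{\mu}}[L_0(f_{\tilde{\w}})] \le \mathbb{E}_{\tilde{\w}\sim\tilde{\mu}}[\hat L_0(f_{\tilde{\w}})] + \sqrt{\tfrac{\mathrm{KL}(\tilde{\mu}\|P) + \log(m/\delta)}{2(m-1)}}.
\]
Chaining the empirical-loss inequality from the previous paragraph and the KL bound, and absorbing the ``$+1$'' inside the logarithm using $1 + \tfrac12\log(m/\delta) \le \log(3m/\delta)$, yields exactly~\eqref{eq:12}. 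The main obstacle I anticipate is the KL decomposition step: one has to be careful that $\tilde{\mu}$ is absolutely continuous with respect to $P$ (which follows because $\mu$ is, since $\mathrm{KL}(\mu\|P)$ is finite whenever the bound is non-vacuous) and that the uniform-in-$\w$ quantifier is preserved through the construction, which works because the PAC-Bayes bound of Theorem~\ref{pacmain} holds uniformly over all posteriors simultaneously once $P$ is fixed.
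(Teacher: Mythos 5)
Your proposal is correct and follows essentially the same route as the paper's proof of Theorem~\ref{precisepac} (which the paper states carries over verbatim to Theorem~\ref{thmg}): restrict $\mu_\w$ to the good-perturbation event to build $\tilde{\mu}_\w$, use the $\ell_\infty$ perturbation bound to get $\hat{L}_0(f_{\w'}) \le \hat{L}_{\gamma/2}(f_\w)$ on that event, derive $\mathrm{KL}(\tilde{\mu}_\w\|P) \le 2(\mathrm{KL}(\mu_\w\|P)+1)$ from the same mixture decomposition, and plug into Theorem~\ref{pacmain} with the constants absorbed via $1+\tfrac12\log(m/\delta)\le\log(3m/\delta)$. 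No gaps.
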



The following Theorem~\ref{thmg2} from \cite{neyshabur2017pac} is a bound for neural net functions under controlled perturbations and we state it without proof.

\begin{theorem}[{\bf Neural net perturbation bound of \cite{neyshabur2017pac}}]\label{thmg2}
Let us be given a depth $d$ neural net, $f_\w$, with width $h$ and weight vector $\w$ which is mapping, $B_n(B) \rightarrow \R^k$ where $B_n(B)$ is the radius $B$ ball around the origin in $\R^n$. Now cosider a perturbation on the weights given by, $\u = \textrm{vec}(\{ U_\ell \}_{\ell = 1}^d)$ s.t $\norm{U_\ell}_2 \leq \frac{1}{d} \norm{W_\ell}_2$. Then we have for all $\x \in B_n(B)$,
\begin{equation}
    \norm{f_{\w + \u} (\x) - f_\w (\x) } _2 \leq eB ( \prod_{\ell = 1}^d \norm{W_\ell}_2) \sum_{\ell = 1}^d \frac{\norm{U_\ell}_2}{\norm{W_\ell}_2} \label{eq:myl1}
\end{equation}
\qed
\end{theorem}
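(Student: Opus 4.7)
The plan is to prove the bound by induction on the layer index $\ell$, tracking the layerwise perturbation $\Delta_\ell := \norm{\x'_\ell - \x_\ell}_2$ where $\x_\ell$ denotes the activation at layer $\ell$ under weights $\w$ and $\x'_\ell$ the corresponding activation under $\w + \u$. The key ingredients will be (i) 1-Lipschitzness of $\mathrm{ReLU}$ (with $\mathrm{ReLU}(0)=0$), (ii) the affine structure $\x'_\ell = \phi((W_\ell + U_\ell)\x'_{\ell-1})$ versus $\x_\ell = \phi(W_\ell \x_{\ell-1})$, and (iii) the hypothesis $\norm{U_\ell}_2 \le \frac{1}{d}\norm{W_\ell}_2$ which lets me control the blow-up of the perturbed activation norm by a factor bounded by $e$.

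First I would establish two a priori norm bounds that will be used repeatedly. Using 1-Lipschitzness of the activation and submultiplicativity of spectral norm, an easy induction yields
\begin{equation*}
\norm{\x_\ell}_2 \le B \prod_{j=1}^{\ell} \norm{W_j}_2,
\qquad
\norm{\x'_\ell}_2 \le B \prod_{j=1}^{\ell}\bigl(\norm{W_j}_2 + \norm{U_j}_2\bigr) \le eB \prod_{j=1}^{\ell}\norm{W_j}_2,
\end{equation*}
where the last inequality uses $\norm{U_j}_2 \le \norm{W_j}_2/d$ together with $(1+1/d)^d \le e$. The first bound shows the unperturbed activations are controlled; the second is where the factor $e$ (and the restriction on $\norm{U_\ell}_2$) will enter the final answer.

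Next, I would set up the recursion for $\Delta_\ell$. By 1-Lipschitzness,
\begin{equation*}
\Delta_\ell \le \norm{(W_\ell + U_\ell)\x'_{\ell-1} - W_\ell \x_{\ell-1}}_2
\le \norm{W_\ell}_2\, \Delta_{\ell-1} + \norm{U_\ell}_2\, \norm{\x'_{\ell-1}}_2.
\end{equation*}
Substituting the bound $\norm{\x'_{\ell-1}}_2 \le eB\prod_{j=1}^{\ell-1}\norm{W_j}_2$ and then dividing through by $\prod_{j=1}^{\ell}\norm{W_j}_2$ gives the clean telescoping recursion
\begin{equation*}
\frac{\Delta_\ell}{\prod_{j=1}^{\ell}\norm{W_j}_2} \le \frac{\Delta_{\ell-1}}{\prod_{j=1}^{\ell-1}\norm{W_j}_2} + \frac{eB\, \norm{U_\ell}_2}{\norm{W_\ell}_2}.
\end{equation*}
Since $\Delta_0 = 0$, unrolling from $\ell=1$ to $\ell=d$ and multiplying back by $\prod_{j=1}^{d}\norm{W_j}_2$ delivers exactly the claimed inequality~\eqref{eq:myl1}.

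The only genuinely delicate point—really the heart of the argument—is the step where the perturbed activation norm is bounded by $e$ times the unperturbed one: this is precisely where the per-layer spectral constraint $\norm{U_\ell}_2 \le \norm{W_\ell}_2/d$ gets leveraged, and dropping it would make the product $\prod_{j}(1+\norm{U_j}_2/\norm{W_j}_2)$ unbounded in $d$. Everything else (1-Lipschitzness, triangle inequality, and the telescoping) is routine; I expect no obstacle beyond keeping the linear-vs-perturbed activation accounting clean.
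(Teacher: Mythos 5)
Your proof is correct. The paper states this result without proof (deferring to Lemma~2 of \cite{neyshabur2017pac}), so there is no internal proof to compare against, but your argument is a clean and valid derivation. It differs slightly from the standard Neyshabur--Bhojanapalli--Srebro presentation in where the blow-up factor enters: they decompose $(W_\ell+U_\ell)\x'_{\ell-1} - W_\ell\x_{\ell-1}$ as $(W_\ell+U_\ell)(\x'_{\ell-1}-\x_{\ell-1}) + U_\ell\x_{\ell-1}$, which pairs the \emph{perturbed} spectral norm with $\Delta_{\ell-1}$ and the \emph{unperturbed} activation with $U_\ell$, so the $(1+1/d)$ factors accumulate through the induction and $(1+1/d)^d\le e$ is invoked at the very end; you instead decompose as $W_\ell(\x'_{\ell-1}-\x_{\ell-1}) + U_\ell\x'_{\ell-1}$, front-loading the $e$ into the a priori bound on the perturbed activation norm, after which the normalized recursion $\Delta_\ell/\prod_{j\le\ell}\norm{W_j}_2 \le \Delta_{\ell-1}/\prod_{j<\ell}\norm{W_j}_2 + eB\norm{U_\ell}_2/\norm{W_\ell}_2$ telescopes immediately from $\Delta_0=0$. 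Both routes deliver the identical bound; yours arguably makes the telescoping cleaner at the cost of slightly over-bounding $\norm{\x'_{\ell-1}}_2$ by $e$ rather than $(1+1/d)^{\ell-1}$ at intermediate layers, which is harmless. The handling of the final (linear) layer also goes through since applying no activation is trivially $1$-Lipschitz. No gaps.
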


Now, for some $\sigma >0$ consider the random variable $\u \sim {\cal N}(0,\sigma^2 I)$ where $\u$ is imagined as the vector of the weights of the neural net $f$ in the above theorem. Let $\{ U_\ell \in \R^{h \times h} \}_{\ell = 1}^d$ be the matrices of the neural net corresponding to $\u$. 

We can define matrices $\{ \B_p \in \R^{h \times h} \mid p = 1,\ldots,h^2 \}$ s.t each $\B_p$ has $\sigma$ in an unique entry of it and all other entries are $0$. Then it follows that as random matrices, $U_{\ell} = \sum_{p=1}^{h^2} \gamma_p \B_p$ with $\gamma_p \sim {\cal N}(0,1)$. We note that $\norm{\sum_{p=1}^{h^2} \B_p \B_p^\top} = h \cdot \sigma^2$ since $h$ is the largest eigenvalue of an all ones $h-$dimensional square matrix. Now we invoke Corollary $4.2$ of \cite{tropp2012user} here to get for any $t > 0$,

\begin{align}\label{tropp2} 
 \mathbb{P}_{U_{\ell}} \left [ \norm{U_{i,\ell}}_2 > t \right ] \leq 2he^{-\frac {t^2}{2h\sigma^2} }  
\end{align}

Using union bound for the $d$ layer matrices of $\u$ we get,
\[ \mathbb{P}_{\{U_\ell \sim {\cal N}(0,\sigma^2 I_{h \times h})\}_{\ell = 1,\ldots,d}} [ \exists i \text{ s.t } \norm{U_\ell} > t] \leq 2dhe^{-\frac{t^2}{2h\sigma^2}}\]

This is equivalent to,

\begin{align}
\mathbb{P}_{\{U_\ell \sim {\cal N}(0,\sigma^2 I_{h \times h})\}_{\ell = 1,\ldots,d}} [\forall i \norm{U_\ell} \leq t ] \geq 1 - 2dhe^{-\frac{t^2}{2h\sigma^2}}
\end{align}

If $t = \sigma \sqrt{2h\log(4dh)}$ then $1 - 2dhe^{-\frac{t^2}{2h\sigma^2}} = \frac{1}{2} $ and so we have,

\begin{align}
\mathbb{P}_{\{U_\ell \sim {\cal N}(0,\sigma^{2} I_{h \times h})\}_{\ell = 1,\ldots,d}} [\forall i \norm{U_\ell} \leq \sigma \sqrt{2h\log(4dh)} ] \geq \frac{1}{2}
\end{align}

Now corresponding to the given predictor weight $\w$, let $\beta^d = \prod_{\ell = 1}^d \norm{W_\ell}$.  For the kind of nets we consider i.e the ones with no bias vectors in any of the layers it follows from the definition of $\beta$ that the function computed by the net remains invariant if the layer matrices $W_{i}$ are replaced by $\frac{\beta}{\norm{W_{i}}} W_{i}$. And we see that the spectral norm is identically $\beta$ for each layer in this net with modified wights. So we can assume without loss of generality that $\forall i \, \norm{W_{i}} = \beta $. By using this uniform norm assumption along with the assumption that 


If \begin{equation}\label{first} \sigma \sqrt{2h\log(4dh)} \leq \frac \beta d \end{equation} then we have,

\begin{align}\label{second}
\nonumber \frac{1}{2} &\leq \mathbb{P}_{\{U_\ell \sim {\cal N}(0,\sigma^{2} I_{h \times h})\}_{\ell = 1,\ldots,d}} [\forall i \norm{U_\ell} \leq \sigma \sqrt{2h\log(4dh)} ]\\ &\leq \mathbb{P} \left [ \norm{f_{\w+\u}(\x)-f_\w(\x)} \le e B \beta^{d-1} \sum_{\ell = 1}^{d}\norm{U_\ell} \right]\\
&\leq \mathbb{P} \left [ \norm{f_{\w+\u}(\x)-f_\w(\x)} \le e B d \beta ^{d-1} \sigma \sqrt{2h\log(4dh)} \right ]
\end{align}

Note that the assumption \eqref{first} is required even in the proof by \cite{neyshabur2017pac} even though it is omitted there.

We will choose the prior -- used in the PAC-Bayes bound -- from a finite set of distributions, $\{\pi_i = \mathcal{N}_{\mathbf{0}, \sigma^2(\tilde{\beta_i})}\}_{i=1}^K$, in a data dependent manner. Given $\beta$ corresponding to the trained net $f_{\w}$, suppose $\exists \tilde{\beta} \in \{\tilde{\beta_i}\}_{i=1}^K$ such that $|\beta - \tilde{\beta}| \le \frac \beta d$. $\vert \beta - \tilde{\beta} \vert \leq \frac{\beta}{d}$ also implies that $\frac {\beta^{d-1}}{e} \leq \tilde{\beta}^{d-1} \leq e \beta^{d-1}$. Furthermore, if $\sigma$ satisfies the inequalities \ref{third-a} then the condition \ref{first} will hold. 
\begin{subequations}\label{third}
\begin{align}
\sigma \sqrt{2h\log(4dh)} \leq \frac {\tilde{\beta}} {de^{\frac {1}{d-1}}}\label{third-a} \\
e^2 B d \tilde{\beta}^{d-1} \sigma \sqrt{2h\log(4dh)} \leq \frac \gamma 4 \label{third-b}
\end{align}
\end{subequations}
And from equations~(\ref{second},\ref{third-b}) we get that 
\[\frac 1 2 \leq \mathbb{P} \left [ \norm{f_{\w+\u}(\x)-f_\w(\x)} \le \frac \gamma 4 \right ].\]
Therefore the condition~\ref{app-ucond} in Theorem~\ref{thmg}  is satisfied. Finally we deduce from \eqref{third} that the largest value of $\sigma$ in terms of $\tilde{\beta}$  is,
\begin{align}\label{eq:beta-tilde}
\sigma (\tilde{\beta}) := \min \left \{ \frac{\gamma}{4  e^2 B d \tilde{\beta} ^{d-1} \sqrt{2h\log(4dh)}}, \frac{\tilde{\beta}}{d e^{\frac 1 {d-1}}\sqrt{2h\log(4dh)}} \right \}    
\end{align}

Note that for a given neural net weight $\w$ (and hence the value $\beta$) the inequality event in \ref{app-eq:12} holds trivially in two conditions:

\begin{enumerate}
\item When $\beta \le \Big(\frac{\gamma}{2B}\Big)^{1/d}$ this implies $\norm{f_\w(\x)} \le \frac{\gamma}{2}$ which implies $\hat{L}_\gamma = 1$ by definition. Therefore \eqref{app-eq:12} holds trivially. 

\item From the local sensitivity analysis done above it follows that we can invoke the above theorem with $P = {\cal N}(0,\sigma(\tilde{\beta})^2I)$ and $\mu_\w =  {\cal N}(\w,\sigma(\tilde{\beta})^2I)$. Which gives us the bound \begin{equation}\label{eq:mytwo}\text{KL}(\mu_{\w} \| P) \le \frac{\norm{\w}^2}{2\sigma(\tilde{\beta})^2} = \frac{\sum_{\ell = 1}^d \norm{W_\ell}_F^2}{2 \sigma(\tilde{\beta})^2}.\end{equation} 
Note that in terms of $\beta$ (which can be directly read-off from the given net $f_\w$ in Theorem \ref{NBSTheorem}) we have $\sigma(\tilde{\beta}) \ge \min \left \{ \frac{\gamma}{4  e^3 B d {\beta} ^{d-1} \sqrt{2h\log(4dh)}}, \frac{{\beta}}{d e^{\frac 2 {d-1}}\sqrt{2h\log(4dh)}} \right \} = \frac{\beta\exp(-2/(d-1))}{d\sqrt{2h\log(4dh)}}\min \left \{ \frac{\gamma}{4  e^{3-{\frac 2 {d-1}}} B {\beta} ^{d}}, 1 \right \}$. 
Therefore  $\text{KL}(\mu_{\w} \| P) \le \frac 1 2 \frac {\sum_{\ell = 1}^d \norm{W_\ell}_F^2}{\beta^2} \frac {2d^2h\log(4dh)}{\exp(-\frac{4}{(d-1)})} \frac 1 {\min \left \{ \frac{\gamma^2}{4^2  e^{6-{\frac 4 {d-1}}} B^2 {\beta} ^{2d}}, 1 \right \}} $. 
This upper bound on KL leads to the following upperbound on the square-root term in equation ~\ref{app-eq:12}, 


\[  \sqrt{ \frac {\sum_{\ell = 1}^d \norm{W_\ell}_F^2}{(m-1)\beta^2} \frac {d^2h\log(4dh)}{\exp(-\frac{4}{(d-1)})} \frac {1} {\min \left \{ \frac{\gamma^2}{4^2  e^{6-{\frac 4 {d-1}}} B^2 {\beta} ^{2d}}, 1 \right \}} + \frac{1}{m-1}\log \frac{3m}{\delta}} \]  

We note that for any $d, h \ge 1$ we have (a)  by A.M-G.M inequality $\frac {\sum_{\ell = 1}^d \norm{W_\ell}_F^2}{\beta^2} \ge d \ge 1$ and (b) $\exp(\frac{4}{(d-1)}) d^2h\log(4dh) > 1$ (which is obvious on taking logarithm of the LHS).  Therefore a sufficient condition for quantity above to be greater than $1$ is that we have, $\min \left \{ 1, \left ( \frac{\gamma}{4\cdot B \cdot \beta^d e^{3 - \frac{2}{d-1}}} \right )^2 \right \} \leq \frac{1}{m-1}$. And a sufficient condition for this to be true is that, $\beta \ge \Big(\frac{\sqrt{m-1}\gamma} { 4 \exp(3 - 2 / (d-1)) B}\Big)^{1/d}$. 

\end{enumerate}

From the above two points it follows that it suffices to prove \eqref{eq:nbs} for, 
\[ \beta \in \Bigg[ \Big(\frac{\gamma}{2B}\Big)^{1/d}, \Big(\frac{\sqrt{m-1}\gamma} { 4 \exp(3 - 2 / (d-1)) B}\Big)^{1/d}  \Bigg] \]. 

 We note that if we want a grid on the interval $[a,b]$ s.t for every value $x \in [a,b]$ there is a grid-point $g$ s.t $\vert x - g \vert \leq \frac{x}{d}$ then a grid size of $\frac{bd}{2a}$ suffices.  \footnote{If $g$ is the grid point which is the required approximation to $x$ i.e $ \vert x - g \vert \leq \frac{x}{d} \implies x \in \Big (\frac{d}{d+1}g , \frac{d}{d-1}g  \Big )$ Since $a \leq g \implies \frac{2da}{d^2-1}  \leq \Big ( \frac{d}{d-1} - \frac{d}{d+1} \Big ) \tilde{\beta}$. So $\frac{2da}{(d^2-1)}$ is the smallest grid spacing that might be needed and hence the maximum number number of grid points needed is $\frac{(b-a)(d^2-1)}{2ad} < \frac{(b-a)d}{2a} < \frac{bd}{2a}$  } Hence a grid of the following size $K$ suffices for us, 
 
\[K = \frac{d}{2} \times \Big(\frac {\sqrt{m-1}} { 2 \exp(3 - 2 / (d-1)) }\Big)^{1/d}\]

Thus the theorem we set out to prove follows by invoking Theorem \ref{thm:b1} with the $K$ computed above and recognizing that the set $\{ \pi_{i} \}$ indexed by $i$ there is our set $\{{\cal N}_{(0,\sigma(\tilde{\beta})^2I)}\}$ indexed by the grid point $\tilde{\beta}$ here, $Q$ there is our $\mu_{\w}$ here and the equation \ref{eq:mytwo}  above is a bound on the term $\text{KL}(Q \Vert \pi_i)$ there. 
\end{proof}

\newpage 
\section{The $\epsilon-\gamma$ lowerbound scatter plots from the experiments}\label{epsgam}  

\begin{figure}[htbp]
  \centering
  \includegraphics[width=\linewidth]{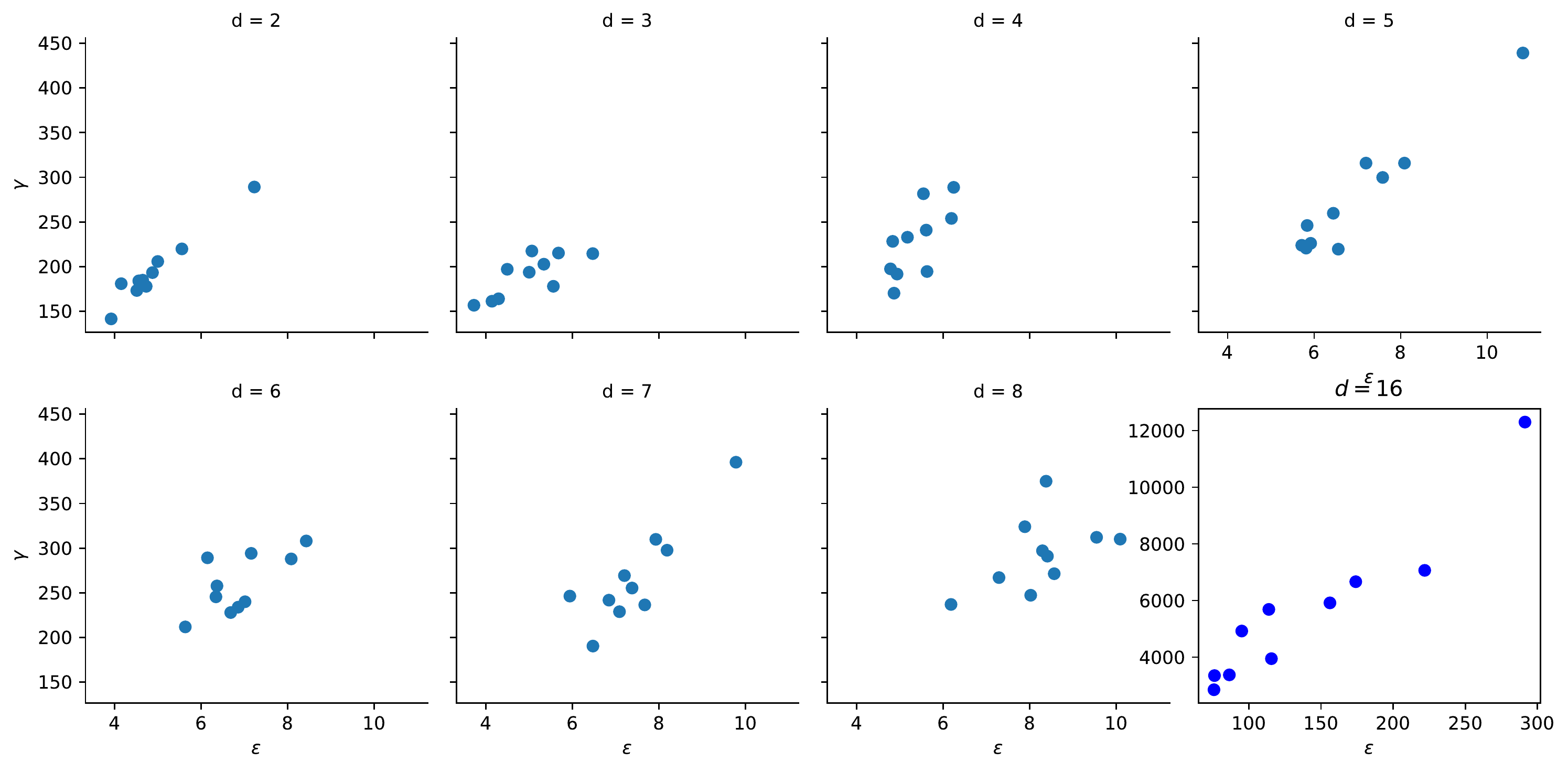}
  \caption{Scatter plots of the lowerbounds on $\epsilon$ and $\gamma$ (as given in definition \ref{nice}) while varying the depth  of the net being trained on the CIFAR-10($10$ trials/seeds for each)}
  \label{fig:dcif-dh-epsgam}
\end{figure}

\begin{figure}[htbp]
  \centering
  \includegraphics[width=0.8\linewidth]{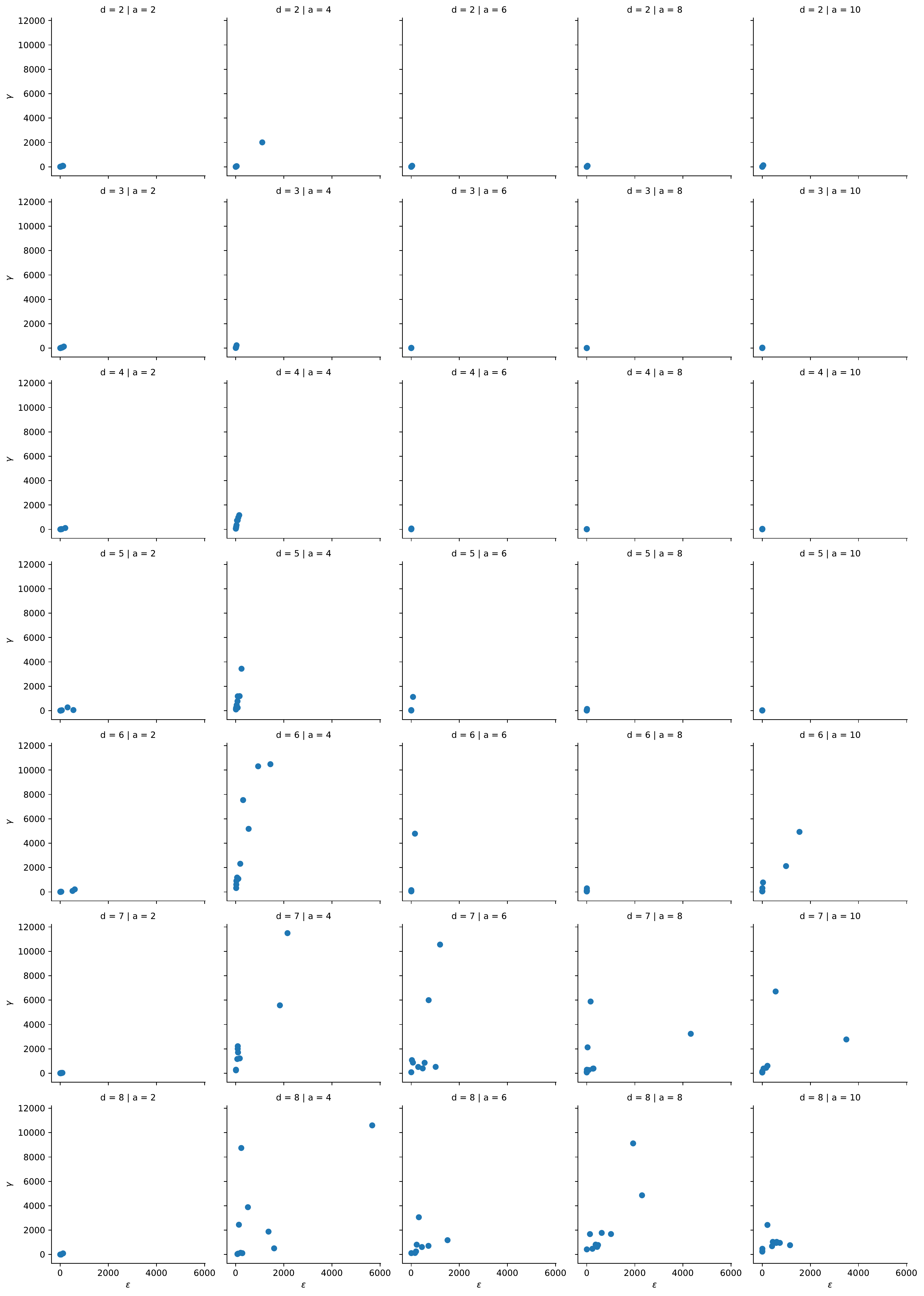}
  \caption{Scatter plots of the lowerbounds on $\epsilon$ and $\gamma$ (as given in definition \ref{nice}) for varying depth $d$ nets trained on the the synthetic dataset for different cluster separation parameter parameter $a$ ($10$ trials/seeds for each)}
  \label{fig:dgmm_depsep_epsgam}
\end{figure}

\newpage 
\section{KDE of the angular deviation during training on the synthetic dataset}\label{app:thetasynth}

\begin{figure}[htbp]
  \centering
  \includegraphics[scale=0.28]{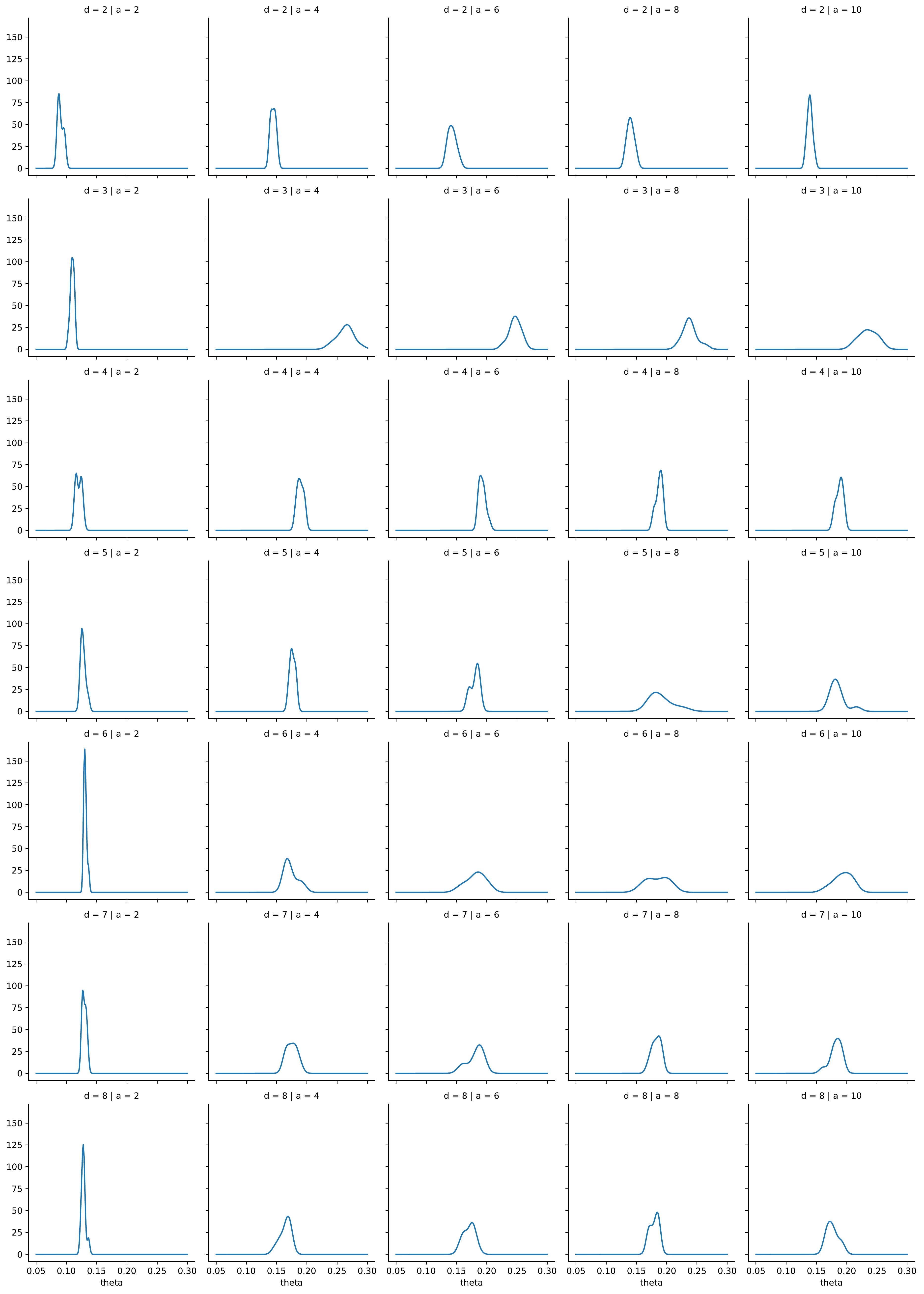}
  \caption{Kernel Density Estimates of the Angular Deviations $\theta$ between the initial and final networks for the differnt net depths $d$ and cluster separation prameters $a$. The $Y$-axis shows the probability density function of the gaussian kernel density estimate.}
  \label{fig:dgmm_kdetheta}
\end{figure}

\end{subappendices}






\printbibliography[heading=bibintoc]

\end{document}